\def\eqref#1{equation~\ref{#1}}
\def\1{\bm{1}}
\def\eps{{\epsilon}}
\def\diag{{\textnormal{diag}}}
\def\sspan{{\textnormal{span}}}
\def\rt{{\textnormal{t}}}
\def\rw{{\textnormal{w}}}
\def\rvx{{\mathbf{x}}}
\def\rvy{{\mathbf{y}}}
\def\rmL{{\mathbf{L}}}
\def\vzero{{\bm{0}}}
\def\vtheta{{\bm{\theta}}}
\def\valpha{{\bm{\alpha}}}
\def\va{{\bm{a}}}
\def\vb{{\bm{b}}}
\def\ve{{\bm{e}}}
\def\vh{{\bm{h}}}
\def\vq{{\bm{q}}}
\def\vu{{\bm{u}}}
\def\vv{{\bm{v}}}
\def\vw{{\bm{w}}}
\def\vx{{\bm{x}}}
\def\vy{{\bm{y}}}
\def\mA{{\bm{A}}}
\def\mB{{\bm{B}}}
\def\mC{{\bm{C}}}
\def\mD{{\bm{D}}}
\def\mE{{\bm{E}}}
\def\mF{{\bm{F}}}
\def\mG{{\bm{G}}}
\def\mH{{\bm{H}}}
\def\mI{{\bm{I}}}
\def\mK{{\bm{K}}}
\def\mM{{\bm{M}}}
\def\mQ{{\bm{Q}}}
\def\mR{{\bm{R}}}
\def\mS{{\bm{S}}}
\def\mT{{\bm{T}}}
\def\mU{{\bm{U}}}
\def\mV{{\bm{V}}}
\def\mW{{\bm{W}}}
\def\mX{{\bm{X}}}
\def\mY{{\bm{Y}}}
\def\mPhi{{\bm{\Phi}}}
\DeclareMathAlphabet{\mathsfit}{\encodingdefault}{\sfdefault}{m}{sl}
\SetMathAlphabet{\mathsfit}{bold}{\encodingdefault}{\sfdefault}{bx}{n}
\def\gA{{\mathcal{A}}}
\def\gD{{\mathcal{D}}}
\def\gE{{\mathcal{E}}}
\def\gF{{\mathcal{F}}}
\def\gG{{\mathcal{G}}}
\def\gH{{\mathcal{H}}}
\def\gJ{{\mathcal{J}}}
\def\gL{{\mathcal{L}}}
\def\gN{{\mathcal{N}}}
\def\gR{{\mathcal{R}}}
\def\gV{{\mathcal{V}}}
\def\gX{{\mathcal{X}}}
\def\gY{{\mathcal{Y}}}
\def\gZ{{\mathcal{Z}}}
\def\sI{{\mathbb{I}}}
\def\sR{{\mathbb{R}}}
\newcommand{\ptrain}{P_{\rm{train}}}
\newcommand{\E}{\mathbb{E}}
\newcommand{\R}{\mathbb{R}}
\newcommand{\reg}{\lambda}
\newcommand{\Var}{\mathrm{Var}}
\newcommand{\Cov}{\mathrm{Cov}}
\DeclareMathOperator*{\argmax}{arg\,max}
\DeclareMathOperator*{\argmin}{arg\,min}
\DeclareMathOperator*{\arginf}{arg\,inf}
\DeclareMathOperator{\Tr}{Tr}
\newcommand{\paren}[1] {{\left ( #1 \right )}}
\newcommand{\brac}[1] {{\left [ #1 \right ]}}
\newcommand{\sbrac}[2]{{\left [ #1 \; \middle \vert \; #2 \right ]}}
\newcommand{\oset}[1] {{\left \{ #1 \right \}}}
\newcommand{\sset}[2]{{\left \{ #1 \; \middle \vert \; #2 \right \}}}
\newcommand{\dotp}[1]{{\left \langle #1 \right \rangle}}
\newcommand{\res}[2]{{$#1_{#2}$}}
\newcommand{\bres}[2]{{\boldmath\res{#1}{#2}}}
\newcommand{\norm}[1] {{\left \| #1 \right \|}}
\newcommand{\abs}[1] {{\left | #1 \right |}}
\def \ie {\textit{i.e.}}
\def \eg {\textit{e.g.}}
\def \aew {\textit{a.e.}}
\def \iid {\textit{i.i.d.}}
\def \wrt {\textit{w.r.t.}}
\def \psd {\textit{p.s.d.}}
\def \err {\textnormal{err}}
\def \px {{P_\gX}}
\def \pa {{P_\gA}}
\def \lxp {{L^2(P_\gX)}}
\def \lap {{L^2(P_\gA)}}
\def \da {d_{\gA}}
\def \tax {\Gamma}
\def \rad {\mathfrak{R}}
\def \hrad {\hat{\rad}}
\def \bt {\kappa}
\def \eph {{\hat{\gH}_{\tax}}}
\def \laph {{\gH_{\tax}}}
\def \ephd {{\hat{\gH}_d}}
\def \hPhi {{\hat{\Phi}}}
\def \ka {{k_{A}^+}}
\def \kx {{k_{X}^+}}
\def \hk {{\gH_k}}
\def \ks {{k_s}}
\def \hatks {{\hat{k}_s}}
\def \hatkp {{\hat{k}^p}}
\def \hks {{\gH_{\ks}}}
\def \kssquare {{k_{s^2}}}
\def \gk {\mG_k}
\def \hkp {\gH_{k^p}}
\def \ghatks {\mG_{\hatks}}
\def \gkn {\mG_{k,n}}
\def \vkn {\vv_{k,n}}
\def \gksn {\mG_{\ks,n}}
\def \vksn {\vv_{\ks,n}}
\def \ghatksn {\mG_{\hatks,n}}
\def \vhatksn {\vv_{\hatks,n}}
\def \regu {\textrm{reg}}
\def \bPsi {\bar{\Psi}}
\def \tPsi {\tilde{\Psi}} 
\def \bPhi {\bar{\Phi}}
\def \tPhi {\tilde{\Phi}}
\def \tphi {\tilde{\phi}}
\def \tpsi {\tilde{\psi}}
\def \dw {d_{\omega}}
\def \tp {T_{P^+}}
\def \tpstar {T_{P^+}^{*}}
\def \tkx {T_{\kx}}
\def \tka {T_{\ka}}
\def \fp {\gF(P^+)}
\def \fep {\gF_{\epsilon}(P^+)}
\def \pw {P_{\omega}}
\def \kp {k_{\Lambda}}
\def \py {{P_Y}}
\def \pyw {P_{Y|\omega}}
\def \tpyw {T_{\Omega}}
\def \tpywstar {T_{\Omega}^{*}}
\def \lwp {{L^2(\pw)}}
\def \kw {k_{\Omega}}
\def \tkw {T_{\kw}}
\def \dx {d_{\gX}}
\def \tkw {T_{k_{\Omega}}}
\def \rt {\textnormal{RT}}
\def \tg {\textnormal{TG}}
\def \Sum {\textnormal{sum}}
\def \fPhi {f_{\Phi}}
\def \hpp {\gH_{P^+}}
\def \fhpp {f_{\hpp}}
\def \hPhi {\gH_{\Phi}}
\def \fhkp {\gF_{\hk}(P^+)}
\def \Phiref {\Phi_{\textrm{ref}}}
\def \dl {\tilde{d}}
\def \erm {\textnormal{ERM}}
\def \lin {\textnormal{lin}}
\def \reg {\textnormal{reg}}
\def \mm {\textnormal{MM}}
\def \cvar {\textnormal{CVaR}}
\def \iw {\textnormal{IW}}
\def \gdro {\textnormal{GDRO}}
\def \gen {\textnormal{GEN}}
\newcommand{\tv}{\textnormal{TV}}
\pgfplotsset{compat=1.11,
legend image code/.code={
\draw[mark repeat=2,mark phase=2]
plot coordinates {
(0cm,0cm)
(0.15cm,0cm)        
(0.3cm,0cm)         
};%
}
}
\crefname{equation}{Eqn.}{Eqns.}
\newcommand{\BlackBox}{\rule{1.5ex}{1.5ex}}  
    \renewenvironment{proof}{\par\noindent{\bf Proof\ }}{\hfill\BlackBox\\[2mm]}
    \newenvironment{proof}{\par\noindent{\bf Proof\ }}{\hfill\BlackBox\\[2mm]}
\newtheorem{theorem}{Theorem}[chapter]
\newtheorem{lemma}[theorem]{Lemma} 
\newtheorem{example}[theorem]{Example}
\newtheorem{proposition}[theorem]{Proposition} 
\newtheorem{remark}[theorem]{Remark}
\newtheorem{corollary}[theorem]{Corollary}
\newtheorem{definition}[theorem]{Definition}
\newtheorem{assumption}[theorem]{Assumption}
\newtheorem*{problem*}{Problem}
\def\thm@space@setup{%
  \thm@preskip=.3cm
  \thm@postskip=.1cm
}
\algrenewcommand\algorithmicindent{0.5em}%
\begin{document} 

{\fontfamily{qpl}

\frontmatter

\pagestyle{empty}

\title{ 
{\bf Contextures: The Mechanism of Representation Learning}}
\author{\textbf{Runtian Zhai}}
\date{April 2025}
\Year{2025}
\trnumber{CMU-CS-25-104}

\committee{
Pradeep Ravikumar, Co-chair \\
Zico Kolter, Co-chair \\
Andrej Risteski \\
Yuandong Tian (Meta)
}

\support{This research was sponsored by the Air Force Research Laboratory under award number FA8750-17-2-0152 and FA8750-23-2-1015; Robert Bosch GMBH under award number 0087016732PCRPO0087023984; Robert Bosch LLC under award number OSP00009188; the Defense Advanced Research Projects Agency under award number HR00112020006; the Office of Naval Research via N00014-23-1-2368; and the National Science Foundation under award number IIS-2211907.}
\disclaimer{The views and conclusions contained in this document are those of the author and should not be interpreted as representing the official policies, either expressed or implied, of any sponsoring institution, the U.S. government or any other entity.}


\keywords{machine learning, representation learning, learning theory, foundation models, artificial intelligence}

\maketitle


\pagestyle{plain} 


\begin{abstract}
This dissertation establishes the contexture theory to mathematically characterize the mechanism of representation learning, also known as pretraining. Despite the remarkable empirical success of foundation models, it is not very clear what representations they learn, and why these representations are useful for various disparate downstream tasks.
A scientific understanding of representation learning is critical, especially at this point when scaling up the model size is producing diminishing returns, and designing new pretraining methods is imperative for further progress.

Prior work treated different representation learning methods quite differently, whereas the contexture theory provides a unified framework for delineating the representations these methods learn.
The central argument is that a representation is learned from the association between the input $X$ and a context variable $A$.
We prove that if an encoder captures the maximum information of this association, in which case we say that the encoder \textit{learns the contexture}, then it will be optimal on the class of tasks that are compatible with the context.
We also show that a context is the most useful when the association between $X$ and $A$ is neither too strong nor too weak.
The important implication of the contexture theory is that increasing the model size alone will achieve diminishing returns, and further advancements require better contexts.

We demonstrate that lots of existing pretraining objectives can learn the contexture, including supervised learning, self-supervised learning, generative models, etc.
Based on that, we introduce two general objectives---SVME and KISE, for learning the contexture.
We also show how to mix multiple contexts together, which is an effortless way to create better contexts from existing ones.
Then, we prove statistical learning bounds for representation learning, and extend the framework to spectrally transformed kernel regression for semi-supervised learning.
Finally, we discuss the effect of the data distribution shift from pretraining to the downstream task.

\end{abstract}

\tableofcontents

\mainmatter


%
%
%
%
%

\chapter{Introduction to the Contexture Theory}

Since around 2018, the field of machine learning (ML) has been shifting from mainly end-to-end deep learning to a new paradigm driven by \textit{foundation models} \cite{bommasani2021opportunities}, which are very large models trained on huge datasets.
Foundation models achieve great success on a variety of domains, including computer vision (CV) \cite{chen2020simple,he2022masked,oquab2023dinov2}, natural language processing (NLP) \cite{devlin2018bert,Liu2019RoBERTaAR,radford2019language}, and more recently tabular data \cite{hollmann2025accurate,pmlr-v235-van-breugel24a}.
It has already become common practice to apply foundation models to any new learning task by supervised fine-tuning (SFT) or alignment via reinforcement learning (RL) \cite{ouyang2022training}.
Another popular area in ML right now is generative modeling, thanks to the remarkable success of diffusion models \cite{JMLR:v6:hyvarinen05a,ho2020denoising,song2020denoising} and large language models (LLMs) \cite{Achiam2023GPT4TR,2023arXiv231211805G,Dubey2024TheL3}. 

Foundation models are trained by \textit{representation learning}, which aims to train an encoder for the inputs that ``makes it easier to extract useful information when building classifiers or other predictors'' \cite{bengio2013representation}.
In the context of foundation models, representation learning is also known as ``pretraining'', after which a predictor is fitted on top of the representation for a specific downstream task.
\Cref{fig:teaser} illustrates the modern ML paradigm using LLMs as an example.
An LLM is pretrained on a huge dataset such as Wikipedia, with a self-supervised learning (SSL) task such as masked token prediction or next token prediction.
Then, it is applied to a variety of downstream tasks such as sentiment analysis, summarizing, translation, question answering, etc.

Despite the large body of work on representation learning, we do not have a systematic characterization of the mechanism of representation learning.
A critical question that has remained unanswered to a satisfactory extent is the following:

~

\begin{center}
\textit{What representations do foundation models learn, and why are these representations useful for a variety of downstream tasks?}
\end{center}

~

In classical statistical learning theory, there is no mystery regarding what is being learned---a mapping from input $X$ to target $Y$ is being learned.
However, in representation learning, the very target itself is unclear.
For example, what representations does masked token prediction learn, and why are they useful in understanding the sentiment of user reviews on Netflix?
For a long time, this has been attributed to the \textit{transferability} of deep learning, but (a) the essence of such transferability is vague, and (b) the assumption that transferability is a property of ``deep learning'' is questionable.

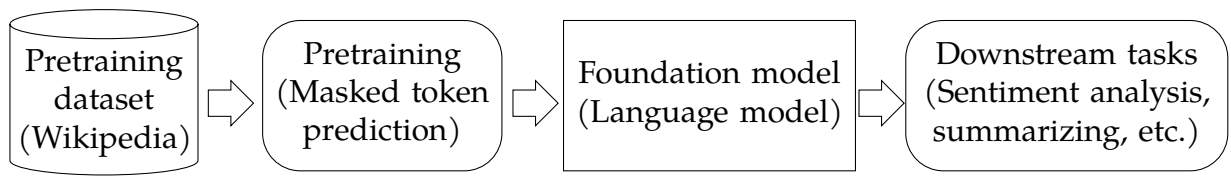
\begin{figure}[t]
    \centering
    \begin{tikzpicture}

\node(A) [cylinder, shape border rotate=90, draw,minimum height=2.2cm,minimum width=2.5cm] at (0,0)
{ ~ };

\node[text width=2.3cm, align=center] at (0,0) {Pretraining dataset (Wikipedia)};

\node(B) [draw=black, minimum width=3cm, minimum height=2cm, rounded corners=.5cm, text width=2.9cm, align=center] at (3.7,0.1) {Pretraining (Masked token prediction)};

\node[single arrow, draw=black,
      minimum width = 6mm, single arrow head extend=3pt,
      minimum height=6mm] at (1.6,0) {}; 

\node(C) [draw=black, minimum width=3.7cm, minimum height=2cm, text width=3.6cm, align=center] at (8.05,0.1) {Foundation model (Language model)};

\node[single arrow, draw=black,
      minimum width = 6mm, single arrow head extend=3pt,
      minimum height=6mm] at (5.65,0) {}; 

\node(D) [draw=black, minimum width=4.2cm, minimum height=2cm, rounded corners=.5cm, text width=4cm, align=center] at (12.8,0.1) {Downstream tasks (Sentiment analysis, summarizing, etc.)};

\node[single arrow, draw=black,
      minimum width = 6mm, single arrow head extend=3pt,
      minimum height=6mm] at (10.25,0) {}; 

\end{tikzpicture}
    \vskip -.2cm
    \caption{Illustration of the modern ML paradigm driven by foundation models and representation learning, using language models as an example in the parentheses.}
    \label{fig:teaser}
\end{figure}

The lack of understanding in the mechanism of representation learning also leads to other mysteries.
For example, supervised learning has been widely used for learning representations---neural networks trained on ImageNet \cite{ILSVRC15} were the most popular representations in the early days of the deep learning boom \cite{huh2016makes}.
One uses the output of an intermediate layer, typically the layer before the last linear layer, as the representation of the input.
However, \cite{papyan2020prevalence} found that these representations tend to collapse to a few clusters, a phenomenon known as \textit{neural collapse}. In this case, why are these representations still useful?
Another mystery is \textit{representational convergence}---\cite{pmlr-v235-huh24a} empirically showed that large neural networks of different architectures trained by optimizing different objectives all align with a common representation independent of the architecture and the objective, under the measurement of \textit{representational alignment} \cite{pmlr-v97-kornblith19a}.
Is this universally true? And how to characterize this common representation?

The above questions are naturally interesting to learning theorists, but why should the broader ML community care about understanding the mechanism of representation learning, if empirical success seems to be always achievable with existing approaches by scaling up the model size, an observation known as \textit{scaling laws} \cite{kaplan2020scaling}?
This is because sustainable success or progress is not always guaranteed.
Although some argue that scaling up the size of the model can allow some abilities to ``emerge'' \cite{wei2022emergent}, substantial evidence suggests that many abilities cannot be obtained solely from scaling, which is why additional training signals such as alignment \cite{ouyang2022training} are necessary.
Meanwhile, it is widely observed that the current pretraining paradigm is producing diminishing returns, which is why Ilya Sutskever, the scientist behind AlexNet \cite{krizhevsky2012imagenet} and GPT \cite{radford2019language}, remarked that ``pretraining as we know it will end'' recently at NeurIPS 2024 \cite{ilya2024talk}.
To make further progress, we need a better understanding of the mechanism of pretraining, which is crucial for designing future generations of pretraining methods, and this is how this field can make scientific progress.

Another important reason why understanding the mechanism of representation learning and foundation models is imperative is the safety concerns of AI.
There has long been a debate on whether AI poses an existential threat to human beings, and neither camp can convince the opposing camp.
Such a debate has become more and more heated since the advent of ChatGPT, and reached its peak at the recent Paris AI summit held on February 12, 2025.
We saw world leaders arguing about how to regulate AI, but reaching an agreement is extremely difficult, though most leaders agreed that some extent of regulation on AI is necessary, and one reason is that these leaders received quite different opinions from their scientists.
Advancing learning theory and the science of foundation models is necessary for us to understand the potential risk of AI, so that we can develop a universal and scientific protocol for AI regulation.

\section{Central Theme: Representations from Association}

The purpose of this dissertation is to establish a new theoretical framework dubbed \textbf{the contexture theory} in order to characterize the mechanism of representation learning.
The central argument of this theory is that \textbf{\boldmath representations are learned from the association between the input $X$ and a context variable $A$.}
We refer to such an association as a \textbf{contexture}.
This thesis will prove this argument mathematically and rigorously.
In addition, this idea is related to a key concept in psychology---the two systems of thinking.

Psychologist Daniel Kahneman categorizes human thinking into two systems \cite{kahneman2011thinking}.
System 1 thinking refers to fast, automatic, and associative thinking, such as associating a photo of a cat with the animal cat that can meow, associating $2+2$ with $4$, associating an English word with its Chinese equivalent, etc. 
System 2 thinking refers to slow, effortful, and logical thinking, such as looking for a golden retriever in an image of 30 dogs, calculating $177 \times 284$, and following the proof in a math paper.
Decades of research in psychology has shown that the human brain works differently for these two systems of thinking.
For example, one piece of evidence is that the pupils dilate when a person is doing system 2 thinking, but not system 1 thinking.

The contexture theory implies that representation learning is capable of doing any type of system 1 thinking, such as image recognition, sentence completion, simple translation, etc.
As long as one can specify $X$ and $A$, their association can be learned by a large model with a sufficient amount of data.
This result substantiates the famous \textbf{deep learning hypothesis} by Ilya Sutskever \cite{ilya2024talk}, stating that ``a large neural network can do anything a human can do in a fraction of a second''.
Hence, system 1 thinking is generally easy, but system 2 thinking is still very hard.
For example, teaching an LLM how to reason usually requires complicated methods such as chain of thought \cite{wei2022chain} and test-time scaling \cite{jaech2024openai,guo2025deepseek}, which are beyond the scope of this thesis.

The contexture theory resolves lots of mysteries about deep learning and foundation models, and can lead to better pretraining algorithms.
Specifically, in this thesis we will address the following questions:
\begin{itemize}
    \item What representations do foundation models learn, and why are they useful for a wide range of downstream tasks?
    \item What variational objectives can be used to learn such representations?
    \item What does the mechanism of representation learning imply about scaling laws?
    \item How can we further improve foundation models beyond scaling?
    \item Are there statistical guarantees for representation learning in the finite data regime?
\end{itemize}

\paragraph{Takeaways.} The key takeaways from this thesis are summarized as follows:
\begin{itemize}
    \item Representation learning can be understood as recovering the space spanned by the \textbf{top singular functions} of the expectation operator jointly induced by the input $X$ and a \textbf{context variable} $A$. We call this process \textbf{learning the contexture}.
    \item These top singular functions can be learned by training a very expressive model to optimize certain variational objectives.
    \item Scaling up the model size alone inevitably leads to a diminishing return. Further improvement requires better contexts.
    \item A context is the most useful when the association between $X$ and $A$ is neither too strong nor too weak, in which case the singular values of the expectation operator decay neither too fast nor too slowly.
    \item If we have multiple contexts whose associations are either too strong or too weak, then we can obtain a better context by mixing them together.
    \item The representation dimension controls the trade-off between approximation error and estimation error, both of which are influenced by the \textbf{context complexity}.
    \item Achieving good generalization under data distribution shift is extremely hard, and heuristic methods such as reweighting samples usually do not work as expected.
\end{itemize}

This introductory chapter is primarily devoted to establishing the foundations of the contexture theory.
After this chapter, the rest of this thesis is organized as follows.

\paragraph{Chapter 2.}
This chapter demonstrates that the contexture can be learned using a variety of variational objectives, because these objectives are optimized if and only if the encoder learns the contexture. These objectives include supervised learning, self-supervised learning, generative models, knowledge distillation, etc.
Moreover, two general objectives for learning the contexture are introduced: SVME and KISE.
The key implication is that scaling brings the representation more aligned to the top singular functions, and when the alignment is high enough, further scaling will achieve a diminishing return.

\paragraph{Chapter 3.}
This chapter studies how to evaluate an encoder or a context.
For encoders, we focus on intrinsic evaluation, which does not depend on any task.
Intrinsic evaluation is carried out on a class of tasks that are \textbf{compatible} with the context, and we prove that if the task is known to be compatible with the context a priori, then learning the contexture is the optimal thing to do.
For contexts, we evaluate them only with their spectra.
The key result is that a good context should have a moderate association between $X$ and $A$, so that the decay rate of its singular values is neither too fast nor too slow.
Then, we propose a quantitative evaluation metric, and show that it correlates with the actual downstream performance on real datasets.

\paragraph{Chapter 4.}
This chapter studies how to learn representations from a \textit{mixture} of multiple contexts.
The general approach consists of three base operations: convolution, convex combination and concatenation.
Mixing multiple contexts allows us to obtain contexts with moderate associations from strong or weak ones.
Detailed algorithms for learning the contexture of these mixtures are provided.
We test these algorithms on real tabular datasets, and find that they can achieve higher performance than state-of-the-art methods such as XGBoost \cite{chen2016xgboost}.

\paragraph{Chapter 5.}
This theory-intense chapter establishes statistical guarantees for contexture learning.
A key object is the context complexity, which characterizes the smoothness of the top singular functions.
Then, we extend these results to the more general spectrally transformed kernel regression (STKR) for semi-supervised learning.

\paragraph{Chapter 6.}
The theory developed so far has assumed that the data distribution is fixed, but in practice there is always a distribution shift from the pretrain to the downstream data.
This chapter discusses some challenges in studying such distribution shifts, including the sensitivity to outliers, and the hardness of distributionally robust generalization.

\section{Contexts: Definition and Examples}
\label{sec:def-exp}
This thesis studies the following learning setting: the number of unlabeled samples is much larger than the number of labeled samples.
Learning methods in this situation can be categorized as either semi-supervised learning or representation learning.
Semi-supervised learning directly learns a predictor on both labeled and unlabeled samples.
Representation learning first learns an encoder with the unlabeled samples, and then fits a predictor on the encoder with the labeled samples, as illustrated in \Cref{fig:teaser}.
This thesis mainly studies representation learning, but the theory can also be generalized to semi-supervised learning, which will be discussed in \Cref{sec:stkr} when introducing spectrally transformed kernel regression.
For now, let us focus on representation learning.

Let $\gX$ be the \textbf{input space}, and let $\px$ be the data distribution.
$\px$ is always assumed to be fixed until Chapter 6.
The $L^2$ functional space \wrt{} $\px$ is a Hilbert space denoted by $\lxp$, whose inner product is given by $\dotp{f_1, f_2}_\px = \E_{X \sim \px}[ f_1(X) f_2(X) ]$, and norm is given by $\norm{f}_\px = \sqrt{\dotp{f,f}_\px}$.

Representation learning aims to learn an encoder $\Phi : \gX \rightarrow \R^d$.
$\Phi(x)$ is called the \textbf{embedding} of $x$, and $d$ is the embedding dimension.
We denote $\Phi = [\phi_1,\cdots,\phi_d]$, and assume that $\phi_i \in \lxp$ for all $i$.
The encoder $\Phi$ can be either deterministic or randomized.
A randomized $\Phi$ is a random variable that takes value in $\gE_d$, which is the space of all deterministic $d$-dimensional encoders.
Recall that a random variable is formally a measurable function $\Phi: \Omega \rightarrow \gE_d$ for a sample space $\Omega$.

There are various ways to use a pretrained encoder $\Phi$ in a downstream task.
This thesis exclusively uses the simplest yet a very common way called a \textbf{linear probe}, which fits a linear predictor on top of $\Phi$ such that the final predictor is $\mW \Phi(x) + \vb$.
If $\Phi$ is randomized, then we first draw a deterministic encoder from the distribution of $\Phi$ (that is, fix a sample in $\Omega$), and then fit a linear probe on top of this encoder.

A context is provided by a \textbf{context variable} $A \in \gA$, 
and $\gA$ is called the \textbf{context space}.
The contexture is the relationship between $X$ and $A$, given by their joint distribution $P^+(x,a)$.
Let $\px$ and $\pa$ be the marginal distributions of $P^+$.
Let $\lap$ be the $L^2$ functional space \wrt{} $\pa$, with inner product $\dotp{\cdot, \cdot}_{\pa}$ and norm $\norm{\cdot}_{\pa}$.
For simplicity, we assume that the probability spaces of both $\px$ and $\pa$ are compact Hausdorff spaces.

The definition of contexts covers a wide range of machine learning methods. \Cref{tab:context-examples} lists some examples of $X$ and $A$.
Here are some concrete examples.

\begin{table}[t]
    \centering
    \begin{tabular}{lll}
    \toprule
    \textbf{Method} &  \textbf{Input $X$}  & \textbf{Context Variable $A$}  \\
     \midrule
      Supervised learning on ImageNet  & Image & Label of the object in the image \\  
      Rotation prediction \cite{gidaris2018unsupervised} & Image & Rotated image \\ 
      BERT \cite{devlin2018bert}: masked token prediction & Text & Masked text \\ 
      Vision-language model CLIP \cite{CLIP} & Image & Text caption describing the image \\ 
      K-nearest neighbors (KNN) & Sample & A nearest neighbor of $X$ \\ 
      Diffusion models for images & Image & Image plus additive noise \\ 
      GPT \cite{radford2019language}: next token prediction & Text & First $k$ tokens of the text \\ 
    \bottomrule
    \end{tabular}
    \caption{Examples of inputs and context variables.}
    \label{tab:context-examples}
\end{table}

\paragraph{Labels.}
$A$ is the label of $X$.
Labels can take different forms, such as discrete categories in classification, continuous values in regression, or structured outputs like text captions of images in vision-language models.
Labels may be obtained from human annotators, or in pseudo-forms such as clusters.
Typically, labels are provided as compatible pairs of $(x,a)$ sampled from the joint distribution $P^+$.

\paragraph{Random transformations.}
These are perturbations (augmentations) on the inputs that presumably do not change the semantics of the inputs by too much.
In this case, $A$ is the corrupted version of $X$.
For example, transformations for images include translation, rotation, flipping, masking, Gaussian noise, Cutout \cite{devries2017improved} and Mixup \cite{zhang2018mixup}.
$P^+$ is provided by the transformation such that one can sample $A \sim P^+(\cdot|x)$ for arbitrarily many times.

\paragraph{Graphs.}
Graph data is very common in industry, such as social networks, drug discovery, cybersecurity, etc.
Graphs also appear in domains that do not involve with graphs explicitly.
For example, K-nearest neighbors (KNN) gives a graph where every sample is connected to its nearest neighbors.
Graphs can also be continuous: for example, manifold learning \cite{belkin2003laplacian,coifman2006diffusion} approximates a differential operator such as the Laplace-Beltrami operator on a manifold with the continuous limit of a family of graphs.
For graphs, we have $\gA = \gX$, and $P^+(a|x)$ is proportional to the weight of the edge between $x$ and $a$.

\paragraph{Feature maps and teacher models.}
Both are functions that map $x \in \gX$ to a feature encoding $z \in \gZ$, where $\gZ$ is typically a Hilbert space.
Feature maps are usually defined by humans, while teacher models are learned from data.
For example, feature maps for images include PNG, JPEG, etc.
Feature maps can also be implicitly defined by kernels \cite{scholkopf2002learning}.
A popular example of teacher models is pretrained language models released by tech companies.
It is important to note that $\gZ$ is not $\gA$, and it is possible that $\gA$ is unknown.
For example, for these LLMs, if they are close-sourced, then we do not know how they are exactly pretrained.

\section{Spectral Properties of a Context}
\label{sec:spectral-properties}
The joint distribution $P^+$ of $X$ and $A$ induces an operator from $\lap$ to $\lxp$, which we call the expectation operator. Basically, it computes the conditional expectation of a function $g \in \lap$ given $x \in \gX$.
This operator is very intuitive: Suppose we want to predict for an input $x$, but we only have a predictor $g$ on space $\gA$; the most reasonable prediction in this scenario is $\E[g(A)|x]$.
The adjoint operator of the expectation operator is also an expectation operator, but in the reverse direction.

\begin{definition}
  The \textbf{expectation operator} $\tp: \lap \rightarrow \lxp$ is defined as
\begin{equation*}
        \paren{\tp g} (x) = \int g(a) P^+(a|x) da = \E[g(A) | x] \qquad \text{for all } g \in \lap .
\end{equation*}
Its \textbf{adjoint operator} $\tpstar : \lxp \rightarrow \lap$, which satisfies $\langle f, \tp g \rangle_{\px} = \langle \tpstar f, g \rangle_{\pa}$ for all $f \in \lxp$ and $g \in \lap$, is given by $\paren{\tpstar f}(a) = \int f(x) \frac{P^+(a|x) \px(x)}{\pa(a)} dx = \E[f(X) | a]$.
\end{definition}

\begin{remark}
In general, the operator $\tp$ is independent of the data distribution $\px$,
because the stochastic mapping $x \mapsto A$ does not depend on the distribution of $X$.
For example, the mapping from an image to its label is independent of the data distribution on the image space.
On the other hand, the adjoint operator $\tpstar$ depends on $\px$ due to Bayes' rule.
For example, given the same label ``dog'', a class-conditional generative model trained on CIFAR-10 \cite{krizhevsky2009learning} and another one trained on ImageNet \cite{ILSVRC15} will generate very different images.
\end{remark}

An easier way to understand these operators is to use the following shorthand notations.
Since computing $\tp$ requires drawing $A$ from $P^+(\cdot|X)$ given $X$, we write $\tp: X \rightarrow A$.
Similarly, we can write $\tpstar: A \rightarrow X$.
We can also compose the two operators as $\tpstar \tp: X \rightarrow A \rightarrow X'$, meaning that we first sample $A \sim P^+(\cdot|X)$, and then sample $X' \sim P^+(\cdot | A)$.
Similarly, we have $\tp \tpstar: A \rightarrow X \rightarrow A'$.
Both $\tpstar \tp: \lap \rightarrow \lap$ and $\tp \tpstar: \lxp \rightarrow \lxp$ are self-adjoint operators, and they are the integral operators of the following \psd{} kernels.
\begin{definition}
\label{def:two-kernels}
    The \textbf{positive-pair kernel} \cite{johnson2022contrastive} $\ka$ and its \textbf{dual kernel} $\kx$ are defined as
    \begin{align*}
        \ka(a,a') & = \frac{P^+(a,a')}{\pa(a) \pa(a')} 
        = \frac{\int P^+(a|x) P^+(a'|x) d \px (x)}{\pa(a) \pa(a')}; \\ 
     \kx(x,x') & = \frac{P^+(x,x')}{\px(x) \px(x')}  = \frac{\int P^+(x|a) P^+(x'|a) d \pa(a)}{\px(x) \px(x')} = \int \frac{P^+(a|x) P^+(a|x')}{\pa(a)} da . 
    \end{align*}
Their integral operators are given by 
\begin{align*}
    \tka = \tpstar \tp : \lap \rightarrow \lap \qquad & \paren{\tka g} (a) = \int g(a') \ka(a,a') d \pa (a') ; \\ 
    \tkx = \tp \tpstar: \lxp \rightarrow \lxp \qquad & \paren{\tkx f} (x) = \int f(x') \kx(x,x') d \px(x') . 
\end{align*}
\end{definition}

We use $T_k: f \mapsto \int f(x') k(\cdot,x') d \px(x')$ to denote the integral operator of any kernel $k$.
Throughout this work, we assume that $\tka$ and $\tkx$ are both Hilbert-Schmidt operators.
We say that $\lambda \in \R$ is an \textbf{eigenvalue} of $\tka$ with \textbf{eigenfunction} $\nu \in \lap$, if $\tka \nu = \lambda \nu$. 
Under the above assumption, by Hilbert-Schmidt theorem, we can order the eigenvalues by $1 = \lambda_0 \ge \lambda_1 \ge \cdots \ge 0$, and the corresponding eigenfunctions $\nu_0,\nu_1,\cdots$ form an orthonormal basis (ONB) of $\lap$.
Here $\lambda_i \le 1$ because of Jensen's inequality,
and it is easy to see that $\nu_0 \equiv 1$ is always an eigenfunction of $\tka$ with $\lambda_0 = 1$.

Similarly, denote the eigenfunctions and eigenvalues of $\tkx$ by $\mu_i$ and $\kappa_i$. Then, we can order the eigenvalues by $1 = \kappa_0 \ge \kappa_1 \ge \cdots \ge 0$, where $\mu_0 \equiv 1$, and $\mu_0,\mu_1,\cdots$ form an ONB of $\lxp$.
The two sets of eigenfunctions have the following connection.

\begin{lemma}[Duality property]
\label{lem:duality}
    For all $i$, we have $\lambda_i = \kappa_i \in [0,1]$. And if $\lambda_i > 0$, then we have $\mu_i = \lambda_i^{-\frac{1}{2}} \tp \nu_i$, and $\nu_i = \lambda_i^{-\frac{1}{2}} \tpstar \mu_i$.
\end{lemma}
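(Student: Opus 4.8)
The plan is to establish the statement through the standard singular-value correspondence between the self-adjoint, positive semidefinite operators $\tka = \tpstar \tp$ on $\lap$ and $\tkx = \tp \tpstar$ on $\lxp$. First I would dispatch the range claim: every eigenvalue of $\tka$ is nonnegative, since for an eigenfunction $\nu$ one has $\dotp{\tka \nu, \nu}_{\pa} = \dotp{\tp \nu, \tp \nu}_{\px} = \norm{\tp \nu}_{\px}^2 \ge 0$, and it is at most $1$ by the Jensen argument already recalled above; applying the same reasoning to $\tkx = \tp \tpstar$ gives $\kappa_i \in [0,1]$ as well. So $\lambda_i, \kappa_i \in [0,1]$ is immediate, and the real content is the equality $\lambda_i = \kappa_i$ together with the explicit eigenfunction correspondence.

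Next, for each index $i$ with $\lambda_i > 0$ I would \emph{define} $\mu_i := \lambda_i^{-1/2} \tp \nu_i \in \lxp$ and verify three facts by direct computation. (i) $\mu_i$ is an eigenfunction of $\tkx$ with eigenvalue $\lambda_i$: since $\tkx \tp = \tp \tpstar \tp = \tp \tka$, we get $\tkx \mu_i = \lambda_i^{-1/2} \tp \tka \nu_i = \lambda_i^{-1/2} \tp (\lambda_i \nu_i) = \lambda_i \mu_i$. (ii) $\norm{\mu_i}_{\px} = 1$, because $\norm{\tp \nu_i}_{\px}^2 = \dotp{\tpstar \tp \nu_i, \nu_i}_{\pa} = \dotp{\lambda_i \nu_i, \nu_i}_{\pa} = \lambda_i$. (iii) For $i \ne j$ with $\lambda_j > 0$, $\dotp{\mu_i, \mu_j}_{\px} = (\lambda_i \lambda_j)^{-1/2} \dotp{\tpstar \tp \nu_i, \nu_j}_{\pa} = (\lambda_i/\lambda_j)^{1/2} \dotp{\nu_i, \nu_j}_{\pa} = 0$. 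Equivalently, $\lambda^{-1/2} \tp$ restricts to an isometric isomorphism from the $\lambda$-eigenspace of $\tka$ onto the $\lambda$-eigenspace of $\tkx$ for each $\lambda > 0$. Thus $\{\mu_i : \lambda_i > 0\}$ is an orthonormal family of eigenfunctions of $\tkx$ with the matching eigenvalues. (As a sanity check, $\nu_0 \equiv 1$ maps to $\mu_0 \equiv 1$ with $\lambda_0 = 1$, matching the normalization stated above.)

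Then I would argue that this family exhausts the strictly positive part of the spectrum of $\tkx$, so the nonzero eigenvalues of the two operators agree with multiplicity. Fixing $\lambda > 0$, the map $\tp$ sends the $\lambda$-eigenspace of $\tka$ into that of $\tkx$, and on the latter $\lambda^{-1} \tpstar$ is a two-sided inverse of this restriction; hence these eigenspaces are isomorphic and, by the Hilbert--Schmidt assumption, finite-dimensional of equal dimension, so $\{\mu_i : \lambda_i = \lambda\}$ is actually an orthonormal basis of the $\lambda$-eigenspace of $\tkx$. Moreover $\ker \tkx = \ker \tpstar$, which is the orthogonal complement of the closed linear span of $\{\tp \nu_i : \lambda_i > 0\}$, because $\tkx f = 0 \iff \norm{\tpstar f}_{\pa}^2 = 0$. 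Completing $\{\mu_i : \lambda_i > 0\}$ by any orthonormal basis of $\ker \tkx$ (whose members are $\tkx$-eigenfunctions with eigenvalue $0$) therefore yields an ONB of $\lxp$ made of eigenfunctions of $\tkx$ and shows the nonzero spectra coincide with multiplicities; since both eigenvalue sequences are listed in non-increasing order and padded with zeros, this forces $\kappa_i = \lambda_i$ for every $i$. The remaining identity is then one line: $\tpstar \mu_i = \lambda_i^{-1/2} \tpstar \tp \nu_i = \lambda_i^{-1/2} \tka \nu_i = \lambda_i^{1/2} \nu_i$, i.e.\ $\nu_i = \lambda_i^{-1/2} \tpstar \mu_i$.

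The step I expect to require the most care is the multiplicity bookkeeping. The identity $\mu_i = \lambda_i^{-1/2} \tp \nu_i$ cannot hold for an arbitrary pre-chosen eigenbasis of $\tkx$ once an eigenvalue is repeated, so the argument must \emph{construct} the $\mu_i$ from the $\nu_i$ and then independently certify that the resulting family is a complete orthonormal system of $\lxp$ --- which is exactly where finite-dimensionality of each positive eigenspace (hence compactness, i.e.\ the Hilbert--Schmidt hypothesis) and the identification of $\ker \tkx$ with the $0$-eigenspace of $\tkx$ get used. Everything else is routine adjoint manipulation.
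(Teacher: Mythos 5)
Your proof is correct and rests on the same core ideas as the paper's: the operator identity $\tp \tka = \tkx \tp$ to transport eigenfunctions, the adjoint computation $\dotp{\tp \nu_i, \tp \nu_j}_{\px} = \dotp{\tka \nu_i, \nu_j}_{\pa}$ for orthonormality, and the SVD-style correspondence. Where you differ is in the finishing step that matches multiplicities. The paper proves that $\{\lambda_i^{-1/2}\tp\nu_i\}_{\lambda_i>0}$ is an orthonormal set of $\tkx$-eigenfunctions \emph{and}, by the symmetric argument, that $\{\kappa_i^{-1/2}\tpstar\mu_i\}_{\kappa_i>0}$ is an orthonormal set of $\tka$-eigenfunctions, then closes with ``this implies the result''---in effect a two-sided counting argument for the positive spectrum. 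You instead go one direction only and certify completeness directly: $\lambda^{-1}\tpstar$ is an explicit two-sided inverse of $\tp$ between the $\lambda$-eigenspaces (so they are isomorphic and, by compactness, of equal finite dimension), and $\ker\tkx = \ker\tpstar$ is exactly the orthogonal complement of $\overline{\mathrm{span}}\{\tp\nu_i : \lambda_i>0\}$. Your version makes the multiplicity bookkeeping fully explicit where the paper leaves it implicit, at the cost of a slightly longer argument; your closing remark correctly flags that the lemma's identity $\mu_i = \lambda_i^{-1/2}\tp\nu_i$ only holds for the $\{\mu_i\}$ \emph{constructed} from the $\{\nu_i\}$ when eigenvalues repeat, a point the paper's terse statement glosses over.
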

\begin{proof}
    For any $i$ such that $\lambda_i > 0$, we have $\tpstar \tp \nu_i = \lambda_i \nu_i$.
    Thus, $\tp \tpstar \tp \nu_i = \lambda_i \tp \nu_i$, which shows that $\tp \nu_i$ is an eigenfunction of $\tp \tpstar$ with eigenvalue $\lambda_i$. For any $i, j$ such that $\lambda_i > 0$ and $\lambda_j > 0$, we have
    \begin{align*}
        \dotp{\lambda_i^{-\frac{1}{2}} \tp \nu_i, \lambda_j^{-\frac{1}{2}} \tp \nu_j }_{\px} & = \lambda_i^{-\frac{1}{2}} \lambda_j^{-\frac{1}{2}} \dotp{\tp \nu_i, \tp \nu_j}_{\px} = \lambda_i^{-\frac{1}{2}} \lambda_j^{-\frac{1}{2}} \dotp{\tpstar \tp \nu_i, \nu_j}_{\pa} \\ 
        & = \lambda_i^{-\frac{1}{2}} \lambda_j^{-\frac{1}{2}} \dotp{\lambda_i \nu_i, \nu_j}_{\pa} = \sI[i = j],
    \end{align*}
    which implies that $\oset{\lambda_i^{-\frac{1}{2}} \tp \nu_i }_{i:\lambda_i > 0}$ is orthonormal.
    Similarly, all $\kappa_i > 0$ are eigenvalues of $\tpstar \tp$, and $\oset{\kappa_i^{-\frac{1}{2}} \tpstar \mu_i }_{i: \kappa_i > 0}$ is orthonormal. This implies the result.
\end{proof}

This result leads to the singular value decomposition (SVD) of $\tp$.
We say that $s_i = \lambda_i^{\frac{1}{2}}$ is a \textbf{singular value} of $\tp$, associated with the left \textbf{singular function} $\mu_i \in \lxp$ and the right singular function $\nu_i \in \lap$.
Since we choose $\mu_0 \equiv 1$ and $\nu_0 \equiv 1$. all other $\mu_i$ (and $\nu_i$) must have zero mean because they are orthogonal to $\mu_0$ (and $\nu_0$).
Moreover, we have the following spectral decomposition of $P^+$.
\begin{lemma}[Spectral decomposition]
\label{lem:spectral-decomposition}
    We have $P^+(x,a) = \sum_i s_i \mu_i(x) \nu_i(a) \px(x) \pa (a)$.
\end{lemma}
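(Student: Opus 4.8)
The plan is to read the claimed expansion directly off the singular value decomposition (SVD) of $\tp$ obtained from \Cref{lem:duality}. Recall that $\{\nu_i\}_{i\ge 0}$ is an ONB of $\lap$ and $\{\mu_i\}_{i\ge 0}$ is an ONB of $\lxp$, and that by \Cref{lem:duality}, $\tp \nu_i = s_i \mu_i$ for every $i$, where $s_i = \lambda_i^{\frac12}$ (for $\lambda_i > 0$ this is the identity $\mu_i = \lambda_i^{-1/2}\tp\nu_i$; when $\lambda_i = 0$ it holds trivially since $\norm{\tp\nu_i}_{\px}^2 = \dotp{\tka\nu_i,\nu_i}_{\pa} = \lambda_i = 0$). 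Since $\norm{\tka}_{\op}\le 1$, the operator $\tp$ is bounded, so expanding an arbitrary $g \in \lap$ in the ONB $\{\nu_i\}$ and applying $\tp$ term by term yields the SVD
\begin{equation*}
\tp g = \sum_i s_i\, \dotp{g, \nu_i}_{\pa}\, \mu_i \qquad \text{in } \lxp, \quad\text{for all } g \in \lap .
\end{equation*}

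Next I would test against an arbitrary $f \in \lxp$ and compare two evaluations of $\dotp{f, \tp g}_{\px}$. Directly from the definition of $\tp$ and the tower property,
\begin{equation*}
\dotp{f, \tp g}_{\px} = \E_{X\sim\px}\!\big[f(X)\,\E[g(A)\mid X]\big] = \E_{(X,A)\sim P^+}[f(X)g(A)] = \int\!\!\int f(x)g(a)\, P^+(x,a)\, dx\, da,
\end{equation*}
whereas the SVD and continuity of the inner product give $\dotp{f, \tp g}_{\px} = \sum_i s_i\, \dotp{f,\mu_i}_{\px}\,\dotp{g,\nu_i}_{\pa}$. Because $0 \le s_i \le 1$ and, by Parseval, $\{\dotp{f,\mu_i}_{\px}\}_i$ and $\{\dotp{g,\nu_i}_{\pa}\}_i$ lie in $\ell^2$, this series converges absolutely, with $\sum_i \abs{s_i \dotp{f,\mu_i}_{\px}\dotp{g,\nu_i}_{\pa}} \le \norm{f}_{\px}\norm{g}_{\pa}$ by Cauchy--Schwarz; hence Fubini--Tonelli permits interchanging summation with integration, and
\begin{equation*}
\sum_i s_i\, \dotp{f,\mu_i}_{\px}\,\dotp{g,\nu_i}_{\pa} = \int\!\!\int f(x)g(a)\Big(\sum_i s_i\,\mu_i(x)\,\nu_i(a)\Big)\px(x)\pa(a)\,dx\,da .
\end{equation*}
Equating the two expressions for $\dotp{f,\tp g}_{\px}$ shows that $\int\!\int f(x)g(a)\big[P^+(x,a) - \sum_i s_i\mu_i(x)\nu_i(a)\px(x)\pa(a)\big]\,dx\,da = 0$ for all $f \in \lxp$ and $g \in \lap$; taking $f = \mu_j$ and $g = \nu_k$ (whose products span a dense subspace of $L^2(\px\otimes\pa)$) forces the bracketed function to vanish $\px\otimes\pa$-almost everywhere, which is the assertion.

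The only real obstacle is the analytic bookkeeping rather than any conceptual difficulty: verifying that $\tp$ is bounded so that the term-by-term SVD is legitimate (immediate from $\norm{\tka}_{\op}\le 1$), and justifying the interchange of the sum with the double integral — for which the uniform bound $s_i \le 1$ together with square-summability of the Fourier coefficients of $f$ and $g$ does all the work. If one further assumes $\tp$ is Hilbert--Schmidt, i.e.\ $\sum_i \lambda_i < \infty$, then $\sum_i s_i\,\mu_i\otimes\nu_i$ in fact converges in $L^2(\px\otimes\pa)$ and the identity holds in that stronger norm; otherwise it is to be read as the $L^1$/almost-everywhere identity furnished by the weak formulation above.
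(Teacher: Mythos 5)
Your proof is correct, but it takes a genuinely different route from the paper's. The paper fixes $x$, treats $a\mapsto P^+(x,a)/(\px(x)\pa(a))$ as an element of $\lap$, and computes its Fourier coefficients in the ONB $\{\nu_i\}$ directly via $\langle P^+(x,\cdot)/(\px(x)\pa(\cdot)),\nu_i\rangle_{\pa}=(\tp\nu_i)(x)=s_i\mu_i(x)$; the claimed expansion is then the Fourier series of that kernel slice. You instead work with a weak formulation: pair against arbitrary test functions $f\in\lxp$, $g\in\lap$, compute $\langle f,\tp g\rangle_{\px}$ once directly (giving $\iint fgP^+$) and once via the SVD of $\tp$ (giving $\sum_i s_i\langle f,\mu_i\rangle\langle g,\nu_i\rangle$), and then use that the tensor products $\mu_j\otimes\nu_k$ form an ONB of $L^2(\px\otimes\pa)$ to identify the two kernels a.e. The paper's route is shorter and more concrete; yours avoids having to verify that the kernel slice $P^+(x,\cdot)/(\px(x)\pa(\cdot))$ actually lies in $\lap$ for $\px$-a.e.\ $x$, a point the paper elides, and it makes explicit the sense (weak/a.e.) in which the identity holds.

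One caveat worth flagging in your argument: the ``Fubini--Tonelli'' justification for moving the sum past the double integral is not quite what you have established. What you bounded is $\sum_i |s_i\,\langle f,\mu_i\rangle_{\px}\langle g,\nu_i\rangle_{\pa}|\le\|f\|_{\px}\|g\|_{\pa}$, i.e.\ absolute convergence of the $\ell^2$-pairing after the two integrations have been performed. Fubini for interchanging $\sum_i$ with $\iint$ would instead require $\sum_i\iint|s_i f(x)g(a)\mu_i(x)\nu_i(a)|\,\px\pa\,dx\,da<\infty$, which after Cauchy--Schwarz is controlled by $\sum_i s_i$ — a nuclearity condition, not implied by the paper's Hilbert--Schmidt hypothesis on $\tka$. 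The clean fix is the one you gesture at in your closing remark: when $\tp$ is Hilbert--Schmidt (equivalently $\sum_i s_i^2<\infty$, i.e.\ $\tka$ is trace-class), the partial sums $\sum_{i\le N}s_i\,\mu_i\otimes\nu_i$ converge in $L^2(\px\otimes\pa)$ to the kernel of $\tp$, and continuity of the inner product against $f\otimes g$ gives the interchange without invoking Fubini. This is the same implicit strengthening the paper's proof needs (it needs the kernel slices to be in $\lap$ a.e., which likewise amounts to $\sum_i s_i^2<\infty$), so it is not a defect specific to your argument, but I would replace the Fubini appeal with the $L^2$-convergence argument to make the step airtight.
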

\begin{proof}
   $\forall i, \;   
   \big\langle\frac{P^+(x,a)}{\px(x) \pa(a)}, \nu_i \big\rangle_{\pa} = \int P^+(a|x) \nu_i(a) da = (\tp \nu_i) (x) = 
   \big(\lambda_i^{\frac{1}{2}} \mu_i \big) (x) = s_i \mu_i(x)$.
    Since $\oset{\nu_i}_{i \ge 0}$ is an ONB, we have $\frac{P^+(x,a)}{\px(x) \pa(a)} = \sum_{i=0}^{\infty} s_i \mu_i(x) \nu_i(a)$.
\end{proof}

This result immediately leads to the following spectral decomposition of the two kernels.
The proof is left as an exercise to the reader.
\begin{corollary}
\label{cor:spec-decompose}
$\kx(x,x') = \sum s_i^2 \mu_i(x) \mu_i(x')$, and $\ka(a,a') = \sum s_i^2 \nu_i(a) \nu_i(a')$.
\end{corollary}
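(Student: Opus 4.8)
The plan is to substitute the spectral decomposition of $P^+$ from Lemma~\ref{lem:spectral-decomposition} into the integral formulas for $\kx$ and $\ka$ in Definition~\ref{def:two-kernels}, expand the product of the two resulting series, and collapse the double sum using orthonormality of $\oset{\nu_i}$ in $\lap$ (for $\kx$) and of $\oset{\mu_i}$ in $\lxp$ (for $\ka$). First I would restate Lemma~\ref{lem:spectral-decomposition} in pointwise-conditional form: dividing $P^+(x,a) = \sum_i s_i \mu_i(x)\nu_i(a)\px(x)\pa(a)$ by $\px(x)$ gives $\frac{P^+(a|x)}{\pa(a)} = \sum_i s_i \mu_i(x)\nu_i(a)$, and by Bayes' rule the same series also equals $\frac{P^+(x|a)}{\px(x)}$.

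For the dual kernel, I would plug this into $\kx(x,x') = \int \frac{P^+(a|x)P^+(a|x')}{\pa(a)}da$, obtaining $\kx(x,x') = \int \pa(a)\bigl(\sum_i s_i\mu_i(x)\nu_i(a)\bigr)\bigl(\sum_j s_j\mu_j(x')\nu_j(a)\bigr)da = \sum_{i,j} s_i s_j \mu_i(x)\mu_j(x')\dotp{\nu_i,\nu_j}_{\pa}$, and since $\dotp{\nu_i,\nu_j}_{\pa} = \sI[i=j]$ only the diagonal terms survive, giving $\kx(x,x') = \sum_i s_i^2\mu_i(x)\mu_i(x')$. The computation for $\ka$ is symmetric: from $\ka(a,a') = \int \frac{P^+(a|x)}{\pa(a)}\frac{P^+(a'|x)}{\pa(a')}d\px(x)$, substitute the same series and use $\dotp{\mu_i,\mu_j}_{\px} = \sI[i=j]$ instead.

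The only nonroutine point — and the one I would be careful about — is justifying the interchange of the integral with the double summation and the convergence of the kernel series. Since $\oset{\mu_i(x)\nu_i(a)}_i$ is orthonormal in $L^2(\px \otimes \pa)$ and, by Lemma~\ref{lem:spectral-decomposition}, the density ratio $P^+/(\px\pa)$ has expansion coefficients $s_i$, Parseval gives $\sum_i s_i^2 = \norm{P^+/(\px\pa)}^2_{L^2(\px \otimes \pa)}$, which is finite under the Hilbert--Schmidt assumption; this summability together with Cauchy--Schwarz and dominated convergence legitimizes Fubini and yields convergence of the kernel expansions (in $L^2$, and pointwise a.e.). A cleaner one-line alternative I would also mention: by Lemma~\ref{lem:duality}, $\tkx = \tp\tpstar$ and $\tka = \tpstar\tp$ are positive self-adjoint Hilbert--Schmidt integral operators with eigenpairs $(s_i^2,\mu_i)$ and $(s_i^2,\nu_i)$ respectively, so Mercer's theorem gives the stated spectral expansions of their kernels $\kx$ and $\ka$ directly.
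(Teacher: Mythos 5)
Your main proof is correct and is exactly the route the paper intends (the paper leaves this as an exercise immediately after Lemma~\ref{lem:spectral-decomposition}, signaling that the corollary follows by substituting that expansion into the integral formulas of Definition~\ref{def:two-kernels} and using orthonormality). One small caveat on your ``one-line alternative'': Mercer's theorem requires continuity of the kernel, which the paper's Hilbert--Schmidt assumption alone does not give (compactness of the underlying space is not enough), so that route only yields the expansion in $L^2(\px \otimes \px)$ rather than pointwise; your direct computation is the cleaner and fully justified argument here.
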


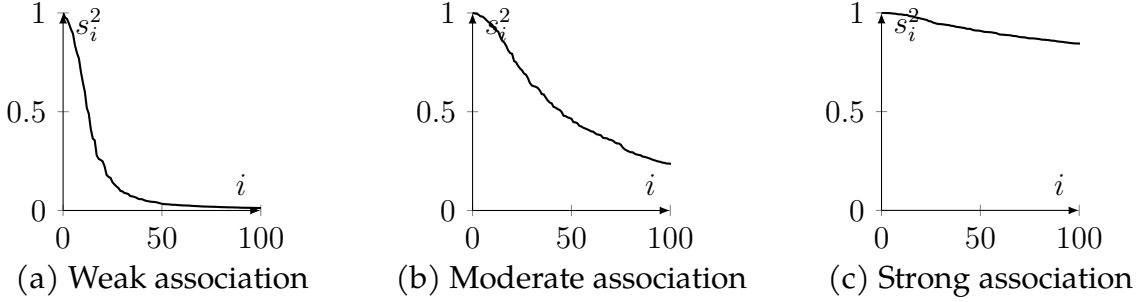
\begin{figure}[t]
    \centering
    \begin{tikzpicture}
\begin{axis}[
name=plot1,
 height = .26\linewidth,
    width = .26\linewidth,
    axis lines=left,  
    axis line style={-latex},  
    xlabel={$i$},
    ylabel={$s_i^2$},
    ylabel style={
        at={(0.13,0.8)}, 
        anchor=south,  
        rotate=270  
    },
    xlabel style = {
        at={(0.9,0.02)},
        anchor=south,
    },
title style={at={(0.5,-0.3)}, anchor=north},
title={(a) Weak association},
    xmin=0, xmax=100,
    ymin=0, ymax=1,
    xtick distance = 50,
    ytick distance = 0.5,
]
    \addplot[thick] table [
        x expr=\coordindex,  
        y=value,            
        col sep=comma       
    ] {data/ag1.csv};
\end{axis}

\begin{axis}[
name=plot2,
at={($(plot1.south east)+(80,0)$)},
 height = .26\linewidth,
    width = .26\linewidth,
    axis lines=left,  
    axis line style={-latex},  
    xlabel={$i$},
    ylabel={$s_i^2$},
    ylabel style={
        at={(0.13,0.8)}, 
        anchor=south,  
        rotate=270  
    },
    xlabel style = {
        at={(0.9,0.02)},
        anchor=south,
    },
title style={at={(0.5,-0.3)}, anchor=north},
title={(b) Moderate association},
    xmin=0, xmax=100,
    ymin=0, ymax=1,
    xtick distance = 50,
    ytick distance = 0.5,
]
    \addplot[thick] table [
        x expr=\coordindex,  
        y=value,            
        col sep=comma       
    ] {data/ag3.csv};
\end{axis}

\begin{axis}[
name=plot3,
at={($(plot2.south east)+(80,0)$)},
 height = .26\linewidth,
    width = .26\linewidth,
    axis lines=left,  
    axis line style={-latex},  
    xlabel={$i$},
    ylabel={$s_i^2$},
    ylabel style={
        at={(0.13,0.8)}, 
        anchor=south,  
        rotate=270  
    },
    xlabel style = {
        at={(0.9,0.02)},
        anchor=south,
    },
title style={at={(0.5,-0.3)}, anchor=north},
title={(c) Strong association},
    xmin=0, xmax=100,
    ymin=0, ymax=1,
    xtick distance = 50,
    ytick distance = 0.5,
]
    \addplot[thick] table [
        x expr=\coordindex,  
        y=value,            
        col sep=comma       
    ] {data/ag5.csv};
\end{axis}
\end{tikzpicture}
    \vskip -.1cm
    \caption{The association between $X$ and $A$ determines the shape of the spectrum.}
    \label{fig:spectrum-demo}
\end{figure}

The set of eigenvalues $\oset{s_0^2,s_1^2,\cdots}$ is called the \textbf{spectrum} of the context.
The shape of the spectrum, or more precisely the decay rate of the eigenvalues, is determined by the strength of association between $X$ and $A$, which we also call the \textbf{association of the context}.
In general, the stronger the association, the slower the decay.
Consider two extreme cases: (i) When $A$ and $X$ are independent, the association is the weakest; (ii) When $A = X$, the association is the strongest.
In case (i), only $s_0^2=1$ is positive, and all other eigenvalues are $0$, so the eigenvalues decay the fastest.
In case (ii), all eigenvalues are $1$, so there is no decay at all (in fact, in this case $\tkx$ is not Hilbert-Schmidt if $\gX$ and $\gA$ are infinite sets).
\Cref{fig:spectrum-demo} illustrates the spectrum on different association levels.

There are two key results in the contexture theory.
First, a useful context should have a moderate association, and its eigenvalues should decay neither too fast nor too slowly.
Obviously, the context in either extreme case above is useless, since $A$ provide no additional information.
Second, given a context, among all $d$-dimensional encoders, the ``optimal'' one (``optimal'' to be formally defined later) should recover the linear space spanned by $\mu_1,\cdots,\mu_d$, for which we say that the encoder learns the contexture of $P^+$.

\begin{definition}
\label{def:learn-contexture}
    A deterministic $d$-dimensional encoder $\Phi = [\phi_1,\cdots,\phi_d]$ \textbf{learns the contexture} of $P^+$, if there exists a set of top-$d$ singular functions $\oset{\mu_1,\cdots,\mu_d}$ of $\tp$ (excluding $\mu_0 \equiv 1$), such that $\sspan \oset{\phi_1,\cdots,\phi_d} =  \sspan \oset{\mu_1,\cdots,\mu_d}$.
    If the multiplicity of $s_d > 1$, then any set of top-$d$ singular functions suffices. We also say that such a $\Phi$ \textbf{\boldmath extracts the top-$d$ eigenspace of $\tkx$}.
\end{definition}
\begin{definition}
\label{def:learn-contexture-randomized}
    A randomized $d$-dimensional encoder $\Phi = [\phi_1,\cdots,\phi_d]$ \textbf{learns the contexture} of $P^+$ (or extracts the top-$d$ eigenspace of $\tkx$), if it learns the contexture almost surely.
\end{definition}
\begin{remark}
In this definition, $\mu_0 \equiv 1$ is excluded, because the bias term $\vb$ in the downstream linear probe implicitly includes $\mu_0$, so there is no reason to waste one dimension to encode $\mu_0$.
Note that extracting the top-$d$ eigenspace only requires recovering the linear span, so any invertible linear transformation on $\Phi$ makes no difference.
A harder task is extracting the \textit{exact} top-$d$ eigenfunctions, which requires estimating every function $\mu_i$ for $i \in [d]$.
Although $\tp$ is independent of $\px$, the contexture of $P^+$ depends on $\px$, since $\mu_1,\cdots,\mu_d$ are defined \wrt{} $\px$. Thus, the same $\tp$ leads to different contextures when there is a distribution shift in $\px$, in which case we say that \textbf{the contexture is skewed}.
\end{remark}

The intuition why learning the contexture is ideal is that such a representation keeps the most information (variance) of the context, which is analogous to principal component analysis (PCA) in the finite-dimensional case.
Consider the case where $\gX$ and $\gA$ are both finite sets.
Let $N = |\gX|$ and $M = |\gA|$.
Then, a function $f \in \lxp$ is a vector in $\R^N$, $g \in \lap$ is a vector in $\R^M$, and $\tp$ is essentially a matrix $\mT \in \R^{N \times M}$.
Suppose we want to learn a $d$-dimensional embedding $\mE \in \R^{N \times d}$ for the $N$ samples in $\gX$, and it should preserve the information of $\mT$ as much as possible, then what should we do?
PCA states that we should use the top-$d$ left singular vectors of $\mT$ as $\mE$, which are equivalent to the top-$d$ eigenvectors of $\mT \mT^{\top}$, because they maximize the explained variance.
Similarly, functional spaces are essentially infinite-dimensional vector spaces, so the $d$-dimensional embedding of $X$ that preserves the most information of $\tp$ consists of the top-$d$ left singular functions of $\tp$, or equivalently the top-$d$ eigenfunctions of $\tp \tpstar$.

\section{Three Types of Access and Example Contexts}
\label{sec:three-access}

In reality, contexts can be provided in a variety of ways. 
Let us analyze the examples in \Cref{sec:def-exp}.
Labels are usually provided one alongside each sample, that is the training set is $\oset{(x_i,a_i)}_{i=1}^m$.
Random transformations are provided as subroutines, which can be called infinitely many times for the same input.
A graph is provided as either an adjacency list or an adjacency matrix; the former allows one to sample a neighbor of $x$, and the latter can be viewed as a kernel.
Finally, when given a teacher model, we might not even know what space $\gA$ it was trained on.

Generally speaking, there are three types of access we can have to a context.
For the context of labels, we have \textbf{pair access}.
For random transformations, we have \textbf{transformation access}.
For a graph, if it is given by an adjacency matrix, then we have \textbf{kernel access}; if it is given by an adjacency list, then we have transformation access.
As for the teacher model, we will show that we have kernel access to its context in \Cref{sec:teacher-models}.

\begin{definition}
\label{def:three-access}
\begin{enumerate}[label=(\roman*)]
    \item We say that a context has \textbf{pair access}, if we have access to a dataset of $\oset{(x_i, a_i)}_{i=1}^m$ that is \iid{} sampled from $P^+$.
    \item We say that a context has \textbf{kernel access} (\textbf{\boldmath $k$-access}), if we have access to a kernel $k: \gX \times \gX \rightarrow \R$ that approximates the dual kernel of the context. We do not need to know the space $\gA$ to have $k$-access.
    \item We say that a context has \textbf{transformation access} (\textbf{\boldmath $T$-access}), if for any $x \in \gX$, one can sample $a \sim P^+(\cdot|x)$ for arbitrarily many times.
\end{enumerate}
\end{definition}

In practice, for a context with $T$-access, after randomly transforming an input $X$ to $A$, we usually want to map it back to the input space $\gX$.
For example, after we apply random cropping to an image, we usually map it back to the original dimension by either stretching the crop or padding it with white pixels;
after masking a sentence, we fill in the masked position with a special token \texttt{[MASK]}.
Such a mapping is called the heuristic inverse of $P^+$, which we denote by $Q^+$.
Its expectation operator $T_{Q^+}$ is normally different from $\tpstar$, because $\tpstar$ depends on $\px$ while $T_{Q^+}$ in most cases does not depend on $\px$.

\begin{definition}
\label{def:heuristic-inverse}
A user-defined conditional distribution $Q^+(x|a)$ that maps $a \in \gA$ back to the input space $\gX$ is called the \textbf{heuristic inverse} of $P^+$.
\end{definition}

$T$-access is stronger than pair access. For example, in supervised learning, the context has pair access but not $T$-access, because the label of any $x$ outside the training set is not given, so we cannot sample $A \sim P^+(\cdot|x)$ for such $x$.
As an exercise, the reader can think about what type of access each context in \Cref{tab:context-examples} has.

Contexts with different types of access require different representation learning methods.
For example, contrastive learning \cite{chen2020simple} needs to sample two views $A,A^+ \sim P^+(\cdot|x)$ for each training sample $x$, and this requires $T$-access. If we only have pair access, then we might only have one $a$ for each $x$.
In the next chapter, we will show how to learn the contexture for each type of access.

In what follows, we analyze the three example contexts above in greater detail, and calculate their dual kernels.

\begin{example}[Classification tasks]
\label{exp:classification-task}
    Let $\gA = \oset{1,2,\cdots,C}$ be a finite set of labels. The label is deterministic, meaning that each $x \in \gX$ is mapped to one label with probability $1$ by $P^+$. Denote the label of $x$ by $a_x$.
    The training set is $\oset{(x_i,a_{x_i})}_{i=1}^m$, so this context has pair access.
    For this context, $\kx(x,x') = \sI[a_x = a_{x'}] \pa(A = a_x)^{-1}$, $s_0 = \cdots = s_{C-1} = 1$, and all other singular values are $0$.
    The span of $\mu_0,\cdots,\mu_{C-1}$ is the same as $\sspan \oset{f_1,\cdots,f_C}$, where $f_i(x) = \sI[a_x = i]$.
\end{example}

\begin{example}[Graphs]
\label{exp:graphs}
Let $\gG = (\gV, \gE)$ be an undirected graph, where each edge $(u,v)$ has a non-negative weight $w(u,v)$ such that $w(u,v) = w(v,u)$.
Let $w(u,v) = 0$ if $u$ and $v$ are not connected.
Let the degree of node $u$ be $D(u) = \sum_{v \in \gV} w(u,v)$, and let $D_{\Sum} = \sum_{u \in \gV} D(u)$.
Let $\px(u) = \frac{D(u)}{D_{\Sum}}$, and $P^+(v|u) = \frac{w(u,v)}{D(u)}$, where $\gA = \gX = \gV$.
Then, it can be shown that $\pa = \px$, and $\kx(x,x') = \frac{D_{\Sum}}{D(x) D(x')} \sum_{u \in \gV} \frac{w(x,u) w(x',u)}{D(u)} $. 
This graph is given by an adjacency list, and this context has $T$-access.
\end{example}

\begin{example}[Random transformations on a finite input space]
    Let $\gX$ be a finite set, and let the context be given by a random transformation.
    For example, if $\gX$ is the set of all text of up to $512$ tokens on a vocabulary of size $30,000$, then $\gX$ is a finite set.
    Let $N$ be the size of $\gX$.
    Without loss of generality, assume that $\px(x) > 0$ for every $x \in \gX$.
    This context has $T$-access.
    
    The \textbf{transformation graph} $\gG = (\gV, \gE)$ is defined as $\gV = \gX$, and $(x_1,x_2) \in \gE$ if they can be transformed to the same $a$.
    \Cref{fig:aug-graph} gives an illustration of a transformation graph, which was originally introduced as the augmentation graph by \cite{haochen2021provable} in the context of self-supervised learning based on data augmentation.

    Define matrix $\mW \in \R^{N \times N}$ as $\mW[i,j] = P^+(x_i, x_j) = \int P^+(x_i|a) P^+(x_j|a) d \pa(a)$, which is the weight of edge $(x_i,x_j)$.
    Then, the degree of node $x_i$ is $\sum_{x} P^+(x_i, x) = \px(x_i)$.
    Define matrix $\mD \in \R^{N \times N}$ as $\mD = \diag \oset{\px (x_1),\cdots,\px(x_N)}$. The singular function $\mu_i$ is now an $N$-dimensional vector.
    It is easy to see that $\mW \mu_i = s_i^2 \mD \mu_i$, that is $(s_i^2,\mu_i)$ is a pair of generalized eigenvalue and eigenfunction.
    We can also rewrite it as $(\mD - \mW) \mu_i = (1-s_i^2) \mD \mu_i$, where $\mD - \mW$ is the (unnormalized) Laplacian matrix of the transformation graph \cite{chung1997spectral}.
    Thus, $\mu_1,\cdots,\mu_d$ are the bottom eigenvectors of the graph Laplacian.
\end{example}

\begin{figure}[t]
    \centering
    \begin{tikzpicture}

\node(x) [diamond, draw, thick] at (14,0) {$a$};
\node(a01) [circle, draw, thick] at (12,0.7) {$x_1$};
\node(a02) [circle, draw, thick] at (12,-0.7) {$x_2$};

\draw[->, very thick] (a01.east) -- (x.west);
\draw[->, very thick] (a02.east) -- (x.west);

\node(a1) [circle, draw, thick] at (5,0.8) {$x_1$};
\node(a2) [circle, draw, thick] at (5,-0.8) {$x_2$};
\node(a3) [circle, draw, thick] at (7,0.8) {$x_3$};
\node(a4) [circle, draw, thick] at (7,-0.8) {$x_4$};
\node(a5) [circle, draw, thick] at (9,0.8) {$x_5$};
\node(a6) [circle, draw, thick] at (9,-0.8) {$x_6$};

\draw[very thick] (a1) -- (a2);
\draw[very thick] (a1) -- (a3);
\draw[very thick] (a1) -- (a4);
\draw[very thick] (a1) -- (a6);
\draw[very thick] (a2) -- (a4);
\draw[very thick] (a3) -- (a6);
\draw[very thick] (a5) -- (a6);
\draw[very thick] (a3) -- (a4);
\draw[very thick] (a2) -- (a3);

\end{tikzpicture}
    \caption{Illustration of a transformation graph.}
    \label{fig:aug-graph}
\end{figure}
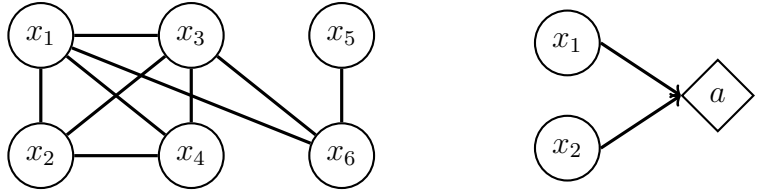

\section{Prior Work}

In this century, machine learning has gone through three paradigms: kernel methods, end-to-end deep learning, and foundation models.
Representation learning is a key part in all three paradigms.
This section provides a short review of their history.

Kernel methods were quite popular when machine learning was dominated by support vector machines (SVMs) \cite{SVM}.
An SVM is a linear predictor, but when the relationship between the input $X$ and the target $Y$ is not linear, we need a non-linear feature map $\Phi$ such that there is a linear relationship between $\Phi(X)$ and $Y$.
The \textbf{kernel trick} says that we do not need to explicitly specify $\Phi$, but only need to specify a kernel $k$ such that $k(x,x') = \dotp{\Phi(x), \Phi(x')}$, and then we can train an SVM on top of this kernel.
The reason why a kernel is more ideal in some applications is that $k$ is easier to define than $\Phi$.
For example, if we assume that samples close in the Euclidean space are similar, then $k$ can be defined as a kernel that is larger when the distance is smaller, such as the RBF kernel or the KNN kernel. However, to compute $\Phi$, one needs to learn the eigenmap that consists of the top eigenfunctions of $T_k$ \cite{belkin2003laplacian}, and this is hard in practice.
For this reason, at that time kernel methods were more popular than representation learning in semi-supervised learning \cite{scholkopf2002learning,bengio2004learning}.

However, kernel methods and SVMs began to fall short when people started to apply machine learning to harder tasks, such as ImageNet classification \cite{krizhevsky2012imagenet}.
The relationship between $Y$ and $X$ in these tasks are too complex, and no human-designed kernel can work well on these tasks.
Deep learning became extremely popular when people found out that a deep neural network trained to approximate the target function can automatically learn the complex relationship between $Y$ and $X$, so there is no need to design a kernel.
In other words, kernel methods define $\Phi$ by specifying a kernel, whereas deep learning obtains $\Phi$ by optimizing a variational objective using a large function approximator.
End-to-end deep learning has been very successful in supervised learning.

For semi-supervised learning, a variety of deep learning methods have been proposed \cite{van2020survey}.
To use the unlabeled samples, many methods require the model to ``behave well'' on the unlabeled samples, which is commonly known as \textit{consistency regularization}:
For each unlabeled sample $x$, we augment it into $x'$ and $x''$ in two different ways, and ask the model to give similar outputs to $x'$ and $x''$.
Some methods define $x'$ and $x''$ as the outputs of models at different epochs, such as temporal ensembling \cite{laine2017temporal} and mean teachers \cite{tarvainen2017mean}.
Some methods use a strong augmentation to obtain $x'$ and a weak one to obtain $x''$, such as FixMatch \cite{sohn2020fixmatch} and noisy student \cite{xie2020self}.
Some methods use adversarial attack, such as virtual adversarial training \cite{miyato2018virtual}.
Some methods use the interpolation between two input/target pairs, such as MixUp \cite{zhang2018mixup} and MixMatch \cite{berthelot2019mixmatch}.
These methods use context variables $x'$ and $x''$ to learn a predictor for semi-supervised learning.

In the modern ML paradigm, a variety of downstream tasks share the same gigantic unlabeled dataset.
In this scenario, end-to-end semi-supervised learning is not ideal, because for every different task we need to learn a predictor on all labeled and unlabeled samples, which is very inefficient.
To solve this problem, people proposed to use transfer learning, which is the basis of foundation models.
The assumption is that a large model trained on task 1 can be transferred to tasks 2, 3 and so on, which is a widely observed phenomenon in deep learning \cite{huh2016makes}, though why such transfer learning works has not been fully explained.
Under this assumption, one can pretrain a large encoder on the huge unlabeled dataset using a general task, so that when learning a predictor for another task, one does not need to use the gigantic unlabeled dataset again.

Foundation models are usually trained by self-supervised learning, and there is a large body of work on the theoretical analysis of self-supervised learning.
One line of research studies the effectiveness of contrastive learning by showing its features are optimal when used to fit a linear predictor on certain downstream tasks~\cite{saunshi2019theoretical,tosh2021contrastive,tosh2021multiview}, 
robust to class imbalance~\cite{liu2021selfsupervised}, 
and suitable for unsupervised domain adaptation~\cite{shen2022connect,haochen2022beyond}.
Masked prediction tasks have been shown to be useful for reducing the downstream sample complexity~\cite{lee2021predicting} and for parameter identifiability~\citep{liu2022masked}.
In terms of language applications, \cite{saunshi2019theoretical} explained why next-word prediction can benefit sentiment classification, 
and \cite{wei2021pretrained} studied the effect of prompt tuning through the lens of implicit Bayesian inference.
Regarding the optimization in representation learning,
there have been prior works on the training dynamics and loss landscapes of contrastive learning~\cite{WenLi21,jing2022understanding,tian2022deep}, non-contrastive learning~\cite{TianCG21,PokleTLR22,wen2022mechanism}, and masked prediction \cite{xiong2020layer,pmlr-v119-huang20f}.
There is also a line of theoretical work that connects self-supervised learning to information theory \cite{achille2018emergence,balestriero2022contrastive,shwartz2023information}.

More related to this thesis is a line of work that formulates contrastive learning as a Laplacian operator over the augmentation graph.
The idea of studying data augmentation from a kernel perspective was first explored in \cite{mroueh2015learning,raj2017local,dao2019kernel}.
\cite{haochen2021provable} defined the augmentation graph and then proved a generalization bound for the spectral contrastive loss.
Then, \cite{saunshi2022understanding} pointed out that this model-class-free bound could be vacuous with a hypercube construction.
As a response to this argument, \cite{haochen2022theoretical} included the effect of the encoder's inductive bias into their new generalization bounds.
Then, \cite{johnson2022contrastive} defined the positive-pair kernel for the augmentation graph, \cite{wang2023a} connected contrastive learning to message passing on the augmentation graph, and \cite{cabannes2023ssl} proved generalization bounds that do not depend on the function class of $\Phi$ (but they still need to assume that the target function belongs to the RKHS of a known kernel).
My own work \cite{zhai2023understanding,zhai2024spectrally} extended these results to any augmentation-based self-supervised learning (not only contrastive learning), and then the more general spectrally transformed kernel regression, which builds the link between representation learning and semi-supervised learning.

The common weakness of the papers mentioned above is that they treated different representation learning methods quite differently.
For example, contrastive learning, non-contrastive learning and masked autoencoders have been regarded as inherently distinct methods, and for each method there is a line of theoretical work.
Consequently, despite the large body of theoretical work, our understanding of representation learning is still quite muddled.
The contexture theory established in this thesis provides a universal and lucid characterization of the mechanism of a wide range of representation learning methods.
The key takeaway is that the various methods are all learning the contexture of a context, so using which method is less important than the context itself.

Also closely related to this thesis is a line of work on representation alignment \citep{pmlr-v97-kornblith19a,pmlr-v235-huh24a,fumero2024latent,insulla2025towards}.
Representation similarity has also been studied in neuroscience \citep{kriegeskorte2008representational}.
These papers aims to compare between two representations, while this thesis mainly focuses on evaluating a single representation, or the context on which it is trained.

\chapter{Learning the Contexture with Variational Objectives}
\label{chap:objectives}

Learning the contexture requires extracting the top-$d$ eigenspace of a kernel integral operator $\tkx$.
Conventionally, this is done by \textbf{kernel PCA} \cite[Chapter~14]{scholkopf2002learning}.
Let $k$ be a \psd{} kernel, and let $T_k: \lxp \rightarrow \lxp$ be its integral operator.
Suppose we can compute $k(x,x')$ for all $x,x' \in \gX$.
Then, given a training set $\oset{x_i}_{i=1}^m$, kernel PCA estimates the top-$d$ eigenfunctions of $T_k$ as follows:
\begin{enumerate}
    \item Build the Gram matrix $\mG \in \R^{m \times m}$ of $k$, such that $\mG[i,j] = k(x_i, x_j)$.
    \item Compute the eigenvalues and eigenvectors $\oset{(\lambda_i, \vv_i)}_{i=1}^m$ of $\mG$, where $\lambda_1 \ge \cdots \ge \lambda_m$, and $\vv_1,\cdots,\vv_d$ form an ONB of $\R^m$.
    Assume that $\lambda_d > 0$.
    \item $\hat{\mu}_i(x) = \lambda_i^{-1} \sum_{j=1}^m k(x,x_j) \vv_i[j]$ is an estimation of the $i$-th eigenfunction.
\end{enumerate}

Kernel PCA has two issues.
First, it requires that $k(x,x')$ can be efficiently estimated for all $x,x'$ (which is $k$-access), but this is not always possible; for example, $k(x,x')$ is hard to estimate when it is the dual kernel of a random transformation context, since estimating $\pa(a)^{-1}$ with high precision requires lots of samples from $P^+$.
Second, it is not scalable for huge datasets.
In general, the time complexity of eigen-decomposition can be regarded as $O(m^3)$.
As of today, the fastest algorithm for eigen-decomposition in theory has $O(m^\omega)$ complexity with $\omega \approx 2.38$ \cite{demmel2007fast}, which is still not very scalable.

However, it is possible to have a more efficient algorithm than kernel PCA, because kernel PCA can extract the exact top-$d$ eigenfunctions, while our goal is to only extract the top-$d$ eigenspace of $\tkx$, that is we do not need to estimate the exact function $\mu_i$.
Moreover, we can also make the algorithm faster if we are willing to sacrifice precision.

This chapter shows how to learn the contexture with a variational objective $\gR(\Phi)$, meaning that this objective is optimized if and only if $\Phi$ extracts the top-$d$ eigenspace of $\tkx = \tp \tpstar$.
Recall that this excludes $\mu_0 \equiv 1$.
Provided with such an objective, one can learn the contexture by optimizing an expressive deep neural network.
We will also show that some existing objectives, such as generative models and RLHF, can learn the contexture of a \textit{mixture} of multiple contexts.

Some of the objectives we are going to discuss extract the top-$d$ eigenspace of $\tp \Lambda \tpstar$ instead, where $\Lambda: \lap \rightarrow \lap$ is the integral operator of a kernel $\kp$ called the \textbf{loss kernel}.
The loss kernel depends on the loss function.
In this case, since the constant function is not necessarily the top-$1$ eigenfunction of $\tp \Lambda \tpstar$, we do not exclude any eigenfunction.

This chapter does not discuss the numerical aspect of obtaining the optima of these objectives.
Doing so in a generalizable way requires an expressive model architecture and a good optimizer, which this thesis will not discuss.
In fact, \cite{cohen2021gradient} showed that neural networks trained with popular gradient methods such as Adam \cite{DBLP:journals/corr/KingmaB14} will not converge to any point, but will oscillate around what they termed the \textit{edge of stability}.
How to extend the contexture theory to that situation is an open problem.
Moreover, all objectives to be discussed are spectral (that is $L^2$) rather than information theoretic, because the contexture theory is based on the spectral properties of $\tp$.
For example, the mean squared error (MSE) is used for classification tasks instead of the cross entropy loss, and the spectral contrastive loss \cite{haochen2021provable} is used for contrastive learning instead of the NT-Xent loss \cite{chen2016xgboost}.
How to extend the contexture theory to information theoretic loss is posed as an open problem.

The important implication of the analysis in this chapter is that one key role of scaling up the model size is to bring the learned representation space more aligned to the one spanned by the top-$d$ eigenfunctions of $\tkx$.
This will be empirically demonstrated in \Cref{sec:scaling-laws}.
Consequently, when the two spaces have already become close enough, additional scaling will be less helpful.
This is a major reason why scaling has been achieving a diminishing return recently.
Further improvement requires the creation of new contexts.
We envision that the next major breakthrough in pretraining will be a result of \textit{context scaling}, where very powerful and complicated contexts are obtained from an enormous amount of data, rather than human heuristics.

\textbf{Notation:} For any $f \in \lxp$, denote its mean by $\bar{f} = \E_{\px}[f(X)]$, and its centered version by $\tilde{f} = f - \bar{f}$. The same notation is used for multi-dimensional functions and random variables, as long as the distribution is clear from context.

\begin{definition}
The \textbf{covariance matrix} of any $\Phi: \gX \rightarrow \R^d$, denoted by $\Cov_{\px}[\Phi]$, is a $d \times d$ matrix $\mC$ where $\mC[i,j] = \dotp{\tphi_i, \tphi_j}_{\px}$.
\end{definition}

\section{Three Illustrative Examples}
\label{sec:illustrative-examples}
Let us revisit the three examples in \Cref{sec:three-access}, namely supervised learning, learning with a graph and learning with a random transformation.
Through these examples, the reader can get a sense of how to learn the contexture via variational objectives.

\paragraph{Classification tasks.}
Let there be $C$ classes, and let $A$ be a $C$-dimensional one-hot vector.
Let the predictor be a linear predictor on top of $\Phi$, defined as $\mW \Phi(x) + \vb$. For a neural network, $\Phi(x)$ is the output of the layer before the last linear layer.
If $\vb$ is an arbitrary vector, then the linear predictor is biased;
if $\vb = \vzero$ is fixed, then it is unbiased.
Consider training an unbiased linear predictor with the mean squared error:
\begin{equation}
\label{eqn:obj-classification}
    \gR(\Phi) = \min_{\mW \in \R^{C \times d}} \;  \underset{(X,A) \sim P^+}{\E}  \brac{ \norm{ \mW \Phi(X) - A }_2^2 } .
\end{equation}
\begin{theorem}[Proof in \Cref{app:proof-thm-obj-classification}]
\label{thm:obj-classification}
Suppose $A$ is a one-hot vector.
Then, $\Phi^*$ minimizes \Cref{eqn:obj-classification} if and only if $\Phi^*$ extracts the top-$d$ eigenspace of $\tp \Lambda \tpstar$, where $\Lambda$ is the integral operator of $\kp(a,a') = \sI[a=a']$, and $(\Lambda g)(a) = g(a) \pa(a)$.
If all classes have the same size, then $\tp \Lambda \tpstar$ and $\tp \tpstar$ share the same top-$d$ eigenfunctions.
\end{theorem}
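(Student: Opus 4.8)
The plan is to solve the inner minimization over $\mW$ in closed form, reduce $\gR(\Phi)$ to a Rayleigh--Ritz (Ky Fan) maximization over the subspace spanned by the encoder components, and then identify the relevant operator as $\tp\Lambda\tpstar$. Throughout, write $A=(A_1,\dots,A_C)$ with $A_j=\sI[A=e_j]$, let $\pi_j\in\lap$ be the coordinate function $\pi_j(a)=a_j$, and note $\E[A_j\mid X=x]=(\tp\pi_j)(x)$. Fix an encoder $\Phi$ and let $V=\sspan\oset{\phi_1,\dots,\phi_d}\subseteq\lxp$, with $\lxp$-orthogonal projection $P_V$ onto the closed (finite-dimensional) subspace $V$.

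First I would handle the least-squares step. Since $x\mapsto\mW_j\cdot\Phi(x)$ ranges over all of $V$ as the row $\mW_j$ ranges over $\R^d$, the bias--variance decomposition gives, for each $j$ and each $h\in V$, $\E[(h(X)-A_j)^2]=\norm{h-\tp\pi_j}_{\px}^2+\E[\Var(A_j\mid X)]$ (the cross term vanishes because $\E[A_j\mid X]=(\tp\pi_j)(X)$). Minimizing over $h\in V$ sends $h$ to $P_V\tp\pi_j$, so summing over $j$,
\[
\gR(\Phi)=\underbrace{\sum_{j=1}^C\paren{\norm{\tp\pi_j}_{\px}^2+\E[\Var(A_j\mid X)]}}_{\text{independent of }\Phi}-\sum_{j=1}^C\norm{P_V\tp\pi_j}_{\px}^2 .
\]
Hence minimizing $\gR(\Phi)$ is equivalent to maximizing $F(V):=\sum_{j=1}^C\norm{P_V\tp\pi_j}_{\px}^2$ over subspaces $V$ with $\dim V\le d$.

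Next I would identify the operator. Picking an ONB $\oset{e_1,\dots,e_r}$ of $V$ ($r=\dim V$), $\norm{P_V\tp\pi_j}_{\px}^2=\sum_{k=1}^r\dotp{\pi_j,\tpstar e_k}_{\pa}^2$, so after swapping sums $F(V)=\sum_{k=1}^r\paren{\sum_{j=1}^C\dotp{\pi_j,\tpstar e_k}_{\pa}^2}$. The key computation is that for any $g\in\lap$, since $\gA$ is finite and $\dotp{\pi_j,g}_{\pa}=\pa(e_j)\,g(e_j)$, one gets $\sum_{j=1}^C\dotp{\pi_j,g}_{\pa}^2=\sum_j\pa(e_j)^2 g(e_j)^2=\dotp{g,\Lambda g}_{\pa}$, where $\Lambda$ is exactly the integral operator of $\kp(a,a')=\sI[a=a']$, namely $(\Lambda g)(a)=\int\sI[a=a']g(a')\,d\pa(a')=\pa(a)g(a)$. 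Therefore $F(V)=\sum_{k=1}^r\dotp{\tpstar e_k,\Lambda\tpstar e_k}_{\pa}=\sum_{k=1}^r\dotp{e_k,\,\tp\Lambda\tpstar e_k}_{\px}$, the Rayleigh--Ritz sum of $M:=\tp\Lambda\tpstar$ over $V$. The operator $M$ is self-adjoint ($\Lambda$ is), positive semidefinite ($\dotp{f,Mf}_{\px}=\dotp{\tpstar f,\Lambda\tpstar f}_{\pa}\ge 0$), and Hilbert--Schmidt ($\tp$ is and $\Lambda$ is bounded). By Ky Fan's maximum principle, with eigenvalues $\sigma_1\ge\sigma_2\ge\cdots\ge 0$ of $M$, $F(V)\le\sigma_1+\cdots+\sigma_d$ for every $V$ with $\dim V\le d$, and (when $\sigma_d>0$, which forces $\dim V=d$) equality holds iff $V$ is spanned by a set of top-$d$ eigenfunctions of $M$. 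This gives: $\Phi^*$ minimizes \Cref{eqn:obj-classification} iff $\sspan\oset{\phi_1^*,\dots,\phi_d^*}$ equals the span of a set of top-$d$ eigenfunctions of $\tp\Lambda\tpstar$, i.e.\ iff $\Phi^*$ extracts the top-$d$ eigenspace of $\tp\Lambda\tpstar$. For the last sentence: if all classes have the same size then $\pa(e_j)=1/C$ for all $j$, so $\Lambda=\tfrac1C\,\mathrm{Id}$ and $\tp\Lambda\tpstar=\tfrac1C\,\tp\tpstar$; scaling by a positive constant preserves eigenfunctions, so $\tp\Lambda\tpstar$ and $\tp\tpstar$ have the same top-$d$ eigenfunctions.

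I expect the main obstacle to be two bookkeeping points rather than anything deep: (i) pinning down precisely which loss kernel the MSE objective induces, i.e.\ the identity $\sum_{j=1}^C\dotp{\pi_j,g}_{\pa}^2=\dotp{g,\Lambda g}_{\pa}$ with $\Lambda$ the operator of $\sI[a=a']$ (equivalently $\sum_j\pi_j\otimes\pi_j=\Lambda$ as operators on $\lap$); and (ii) the equality case of Ky Fan's principle in the presence of multiplicity at the $d$-th eigenvalue, and the degenerate case $\sigma_d=0$ (where a $V$ of dimension $<d$ already attains the maximum, so "extracts the top-$d$ eigenspace" must be read as: $\sspan\oset{\phi_i^*}$ contains every eigenfunction with eigenvalue $>\sigma_d$ and lies in the sum of those eigenspaces with the $\sigma_d$-eigenspace). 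I would state the clean equivalence under the standing assumption that the top-$d$ eigenspace of $\tp\Lambda\tpstar$ is well defined, and flag the degenerate case in a remark.
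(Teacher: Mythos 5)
Your proof is correct and arrives at the same place as the paper's — the objective reduces to a trace quadratic form $\sum_k\dotp{e_k,\tp\Lambda\tpstar e_k}_{\px}$ to be maximized over $d$-dimensional subspaces — but the route differs in two worthwhile ways. First, where the paper fixes an $\lxp$-orthonormal $\Phi$ up front (Gram--Schmidt), differentiates in $w_{ij}$ to get $w_{ij}^*=\E[A_i\Phi_j(X)]$, and then expands $\sum_{i,j}(w_{ij}^*)^2$ into a double integral against a kernel $\hat k(x_1,x_2)=\iint\dotp{a_1,a_2}P^+(a_1|x_1)P^+(a_2|x_2)\,da_1da_2$ which it matches against $\tp\Lambda\tpstar$ via a separate kernel lemma, you instead use the bias--variance decomposition $\E[(h(X)-A_j)^2]=\norm{h-\tp\pi_j}_\px^2+\E[\Var(A_j\mid X)]$ to peel off the $\Phi$-independent part and reduce to maximizing $\sum_j\norm{P_V\tp\pi_j}_\px^2$, and then identify the operator abstractly through the rank-one-sum identity $\sum_j\pi_j\otimes\pi_j=\Lambda$ on $\lap$. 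This buys a cleaner separation of the least-squares step from the spectral step and avoids the density-level kernel manipulations entirely. Second, you make the equivalence a genuine \emph{iff} by invoking Ky Fan's variational principle and explicitly discussing multiplicity at $\sigma_d$ and the degenerate case $\sigma_d=0$; the paper's proof states only the ``if'' direction and does not address these edge cases, so on this point your argument is the more careful one (and it matches the caveat already present in Definition~\ref{def:learn-contexture}, which stipulates that any set of top-$d$ eigenfunctions suffices when the $d$-th eigenvalue has multiplicity). The equal-class-size argument ($\Lambda=\tfrac1C\mathrm{Id}$ so scaling preserves eigenfunctions) is likewise a slightly more direct form of what the paper does.
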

\begin{remark}
Note that the constant function is not necessarily the top eigenfunction of $\tp \Lambda \tpstar$, so in this result no eigenfunction is excluded, which is different from \Cref{def:learn-contexture} where $\mu_0 \equiv 1$ is excluded. 
\end{remark}

It turns out that $\Lambda$ is a consequence of \textbf{class imbalance}.
In \Cref{sec:three-access} we showed that the top eigenfunctions of $\tp \tpstar$ are indicator functions for the $C$ classes, and these functions are independent of the class sizes.
However, in practice we know that when there is a class imbalance, the smaller classes are harder to learn.
The operator $\Lambda$ gives more weights to larger classes, and as a result the larger classes have more impact on the top-$d$ eigenfunctions.
With this insight, in order to get rid of $\Lambda$, we can use the following balanced loss, also known as \textbf{importance weighting} \cite{shimodaira2000improving}:
\begin{equation}
\label{eqn:obj-classification-balanced}
    \gR(\Phi) = \min_{\mW \in \R^{C \times d}} \; \underset{(X,A) \sim P^+}{\E} \brac{  \frac{1}{\sqrt{\pa(A)}} \norm{ \mW \Phi(X) + \vb - A }_2^2 } .
\end{equation}
\begin{theorem}[Proof in \Cref{app:proof-thm-obj-classification-balanced}]
\label{thm:obj-classification-balanced}
$\Phi^*$ minimizes \Cref{eqn:obj-classification-balanced} if and only if $\Phi^*$ learns the contexture of $P^+$.
\end{theorem}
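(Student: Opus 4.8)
The plan is to carry out the inner minimization over $(\mW,\vb)$ in closed form, which turns $\gR(\Phi)$ into a trace that is maximized over the subspace $\sspan\oset{\phi_1,\dots,\phi_d} \subseteq \lxp$, and then to apply the Ky Fan / Courant--Fischer characterization of top eigenspaces. The guiding picture is that, relative to \Cref{eqn:obj-classification}, the importance weight $\pa(A)^{-1/2}$ is precisely what cancels the operator $\Lambda$ appearing in \Cref{thm:obj-classification}, while the free bias $\vb$ peels off the constant $\mu_0 \equiv 1$; together these make the effective operator equal to $\tkx = \tp\tpstar$, whose top-$d$ eigenspace modulo $\mu_0$ is by definition the contexture of $P^+$.

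Concretely, with $A$ one-hot, write $a_x$ for the class of $x$ and $f_c = \sI[a_\cdot = c] \in \lxp$, so that the $c$-th coordinate of $\mW\Phi(X)+\vb-A$ is $(\mW\Phi(X)+\vb)_c - f_c(X)$ and the $c$-th coordinate of the rescaled target $\pa(A)^{-1/2}A$ is $\pa(c)^{-1/2} f_c(X)$. The functions $f_1,\dots,f_C$ are mutually $\px$-orthogonal with $\norm{f_c}_\px^2 = \pa(c)$, they sum to $\mu_0 \equiv 1$, and (by \Cref{exp:classification-task}, where $\kx(x,x') = \sI[a_x=a_{x'}]\pa(a_x)^{-1}$) their span is exactly the eigenvalue-$1$ eigenspace of $\tkx$. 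Since $\gR(\Phi)$ is unchanged under invertible linear reparametrizations of $\Phi$, it depends only on $\tilde V := \sspan\oset{1,\phi_1,\dots,\phi_d}$, and minimizing over $(\mW,\vb)$ orthogonally projects each $\pa(c)^{-1/2} f_c$ onto $\tilde V$ in $\lxp$; collecting terms,
\begin{equation*}
\gR(\Phi) \;=\; C \;-\; \Tr\!\big( P_{\tilde V}\, S \big), \qquad S \;:=\; \sum_{c=1}^C \pa(c)^{-1}\, f_c \otimes_{\px} f_c .
\end{equation*}
Because the $f_c$ are $\px$-orthogonal with the stated norms, $S f_c = f_c$, so $S$ is the $\px$-orthogonal projection onto $\sspan\oset{f_1,\dots,f_C}$; its kernel is $\sum_c \pa(c)^{-1} f_c(x) f_c(x') = \sI[a_x=a_{x'}]\pa(a_x)^{-1} = \kx(x,x')$, hence $S = \tkx$.

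To optimize the subspace, note that $\mu_0 \equiv 1 \in \tilde V$ is a top eigenfunction of $\tkx$ with $\kappa_0 = 1$; writing $\tilde V = \sspan\oset{\mu_0} \oplus V'$ with $V' \perp_{\px} \mu_0$ and $\dim V' = d$ (the degenerate case $\mu_0 \in \sspan\oset{\phi_i}$, as well as rank-deficient $\Cov_{\px}[\Phi]$ or $d \ge C$, only increase $\gR$ and are handled separately), we get $\Tr(P_{\tilde V}\tkx) = \kappa_0 + \Tr(P_{V'}\tkx)$, so minimizing $\gR(\Phi)$ is the same as maximizing $\Tr(P_{V'}\tkx)$ over $d$-dimensional $V' \subseteq \oset{\mu_0}^{\perp_{\px}}$. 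By Ky Fan this maximum equals $\kappa_1 + \dots + \kappa_d$ and, given the equality case, is attained exactly when $V'$ is spanned by a valid choice of top-$d$ singular functions $\oset{\mu_1,\dots,\mu_d}$ of $\tp$ other than $\mu_0$, i.e. exactly when $\sspan\oset{\phi_1,\dots,\phi_d} = \sspan\oset{\mu_1,\dots,\mu_d}$ in the sense of \Cref{def:learn-contexture}. Both directions of the biconditional then follow.

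I expect the main obstacle to be the bookkeeping in the reduction to $\Tr(P_{\tilde V}\tkx)$: one must expand the reweighted squared loss, perform the joint minimization over $\mW$ and $\vb$ cleanly, and check that the $\pa(A)^{-1/2}$ factor produces exactly the $\pa(c)^{-1}$ weights that make $S$ equal $\tkx$ rather than the $\tp\Lambda\tpstar$ of \Cref{thm:obj-classification}, with all leftover terms genuinely $\Phi$-independent. The second delicate point is the equality characterization in Ky Fan under eigenvalue multiplicity: since here $\kappa_1 = \dots = \kappa_{C-1} = 1$, every $d$-dimensional subspace of the eigenvalue-$1$ eigenspace avoiding $\mu_0$ is optimal, and the argument must be phrased so that this flexibility coincides with the clause ``any set of top-$d$ singular functions suffices'' in \Cref{def:learn-contexture}.
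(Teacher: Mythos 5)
Your high-level plan (carry out the inner regression in closed form, reduce $\gR$ to a trace over $\tilde V$, then apply Ky Fan) is the same as the paper's: the appendix proof likewise substitutes the optimal $\mW$, normalizes $\Phi$, and reduces to maximizing a quadratic form. The gap is in your reduction step. The weight $\pa(A)^{-1/2}$ multiplies the \emph{entire} squared Euclidean norm, so with $w(x) := \pa(a_x)^{-1/2}$ the $c$-th coordinate contributes $\E_X\brac{w(X)\paren{g_c(X)-f_c(X)}^2}$; this is weighted least squares in $L^2(w\px)$ with \emph{unrescaled} target $f_c$, not $\lxp$ least squares with the rescaled target $\pa(c)^{-1/2}f_c$, and the two coincide only when all $\pa(c)$ are equal. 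A quick sanity check exposes the discrepancy: for the trivial subspace $\tilde V=\sspan\oset{1}$, your formula gives $\gR = C - \Tr\paren{P_{\tilde V}\tkx} = C-1$, while a direct computation of $\min_{\vb}\E\brac{\pa(A)^{-1/2}\norm{\vb - A}_2^2}$ gives $\sum_c\sqrt{\pa(c)} - \paren{\sum_c\sqrt{\pa(c)}}^{-1}$, which is $(C-1)/\sqrt{C}$ in the balanced case --- not $C-1$.

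The paper keeps the weight attached to the constraint: it normalizes $\Phi$ by $\E_{P^+}\brac{\pa(A)^{-1/2}\Phi_i\Phi_j}=\sI[i=j]$, i.e.\ orthonormality in $L^2(w\px)$, and reduces minimizing $\gR$ to maximizing $\sum_j\dotp{\Phi_j,\tkx\Phi_j}_\px$ under this weighted orthonormality. That is a \emph{generalized} Rayleigh/Ky Fan problem $\tkx\mu=\lambda\, w\mu$, not the plain one in $\lxp$. After the change of variables $\nu=\sqrt{w}\,\mu$, the effective self-adjoint operator has kernel $\kx(x,x')/\sqrt{w(x)w(x')} = \sI[a_x=a_{x'}]\pa(a_x)^{-1/2}$, whose nonzero eigenvalues are $\sqrt{\pa(c)}$, not all $1$. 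This undermines the final step you (and, implicitly, the paper's appendix) appeal to: when the $\pa(c)$ are distinct, the generalized top-$d$ eigenspace is a \emph{specific} $d$-plane inside $\sspan\oset{f_1,\dots,f_C}$, not an arbitrary one, so matching the ``any set of top-$d$ singular functions suffices'' clause of \Cref{def:learn-contexture} does not follow from Ky Fan alone. Your own worry about the multiplicity bookkeeping is well placed, but the issue is deeper than phrasing --- the correct reduction does not produce $\tkx$ with its flat top spectrum, it produces the $w$-conjugated operator with eigenvalues $\sqrt{\pa(c)}$, and that is what has to be reconciled with the statement.
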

\begin{remark}
    Compared to the original objective in \cite{shimodaira2000improving}, the denominator in \Cref{eqn:obj-classification-balanced} is $\sqrt{\pa(A)}$ instead of the original $\pa(A)$.
\end{remark}

The above results can partially explain the phenomenon of \textbf{neural collapse} \cite{papyan2020prevalence}:
When the label $A$ is deterministic, and there are $d$ classes of the same size, neural collapse is the phenomenon that a sufficiently trained deep representation collapses to an \textbf{equiangular tight frame (ETF)} $\phi_1,\cdots,\phi_d$, where $\phi_i(x) = c \paren{ \sI[x \text{ belongs to class }i] - d^{-1} }$ for some non-zero constant $c$.
Note that the span of $\phi_1,\cdots,\phi_d$ is the same as the span of $\mu_0,\cdots,\mu_{d-1}$, computed in \Cref{sec:three-access}.
However, the above results cannot explain why the representation exactly converges to these $d$ functions.
To explain this, one needs to analyze the training dynamics, which depends on the specific optimizer such as gradient methods, whereas all results proved in this chapter are independent of the optimizer.

When the classes have different sizes, it is easy to see that the dual kernel of $\tp \Lambda \tpstar$ is $\kx(x,x') = \sI[x \text{ and } x' \text{ have the same label}]$. This is equivalent to the simplex-encoded labels interpolation (SELI) defined by \cite{thrampoulidis2022imbalance}, which generalizes neural collapse.
When $\gX$ is a finite set, the SEL matrix defined in their Definition 2 is the centered kernel of $\kx$.

\paragraph{Regression tasks.}
Let $A \in \R^{\da}$ be a real-valued random variable, and consider training $\Phi$ using the mean squared error:
\begin{equation}
\label{eqn:obj-regression}
    \gR(\Phi) = \min_{\mW \in \R^{\da \times d}, \vb \in \R^{\da}} \;  \underset{(X,A) \sim P^+}{\E}  \brac{ \norm{ \mW \Phi(X) + \vb - A }_2^2 } .
\end{equation}

\begin{theorem}[Proof in \Cref{app:proof-thm-obj-regression}]
\label{thm:obj-regression}
$\Phi^*$ minimizes \Cref{eqn:obj-regression} if and only if $\Phi^*$ extracts the top-$d$ eigenspace of $\tp \Lambda \tpstar$.
If the linear predictor is unbiased ($\vb = \vzero$), then $\kp(a,a') = \dotp{a, a'}$;
if it is biased ($\vb$ can be arbitrary), then $\kp(a,a') = \dotp{\tilde{a}, \tilde{a'}}$.
\end{theorem}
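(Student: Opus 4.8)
The plan is to peel off the inner least-squares minimization first, which turns $\gR(\Phi)$ (up to an additive constant) into an $\lxp$-approximation error, and then to recognize the resulting maximization over $\Phi$ as a Ky Fan variational problem for the operator $\tp\Lambda\tpstar$.

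\medskip
\noindent\textbf{Step 1 (reduce the inner minimization).} Write $A=(A_1,\dots,A_{\da})$ and let $\pi_j\in\lap$ be the coordinate function $a\mapsto a_j$, so that $\E[A_j\mid X=x]=(\tp\pi_j)(x)$ by the definition of $\tp$. For any predictor, coordinate $j$ of $\mW\Phi(X)+\vb$ is some $h_j\in\lxp$, and the bias--variance decomposition gives $\E[(h_j(X)-A_j)^2]=\|h_j-\tp\pi_j\|_\px^2+\E[(A_j-\E[A_j\mid X])^2]$, the second term being independent of $\Phi,\mW,\vb$. As $\mW,\vb$ range over all values, $(h_1,\dots,h_{\da})$ ranges over all tuples with each $h_j$ in $\sspan\{\phi_1,\dots,\phi_d\}$ (unbiased case) or in $\sspan\{1,\phi_1,\dots,\phi_d\}$ (biased case). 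Hence, up to an additive constant, $\gR(\Phi)=\sum_{j=1}^{\da}\big\|\tp\pi_j-P_V\,\tp\pi_j\big\|_\px^2$ where $P_V$ is the orthogonal projection onto the relevant subspace $V$, and by Pythagoras minimizing this is the same as maximizing $\sum_{j}\|P_V\,\tp\pi_j\|_\px^2$.

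\medskip
\noindent\textbf{Step 2 (identify the operator).} In the unbiased case, for $V=\sspan\{\phi_1,\dots,\phi_d\}$ one has $\sum_j\|P_V\,\tp\pi_j\|_\px^2=\Tr(P_V M)$, where $M$ is the self-adjoint p.s.d. operator $Mf=\sum_j(\tp\pi_j)\langle\tp\pi_j,f\rangle_\px$. Using $\langle\tp\pi_j,f\rangle_\px=\langle\pi_j,\tpstar f\rangle_\pa$ one computes $Mf=\tp\big(\sum_j\pi_j\langle\pi_j,\tpstar f\rangle_\pa\big)=\tp\Lambda\tpstar f$, where $\Lambda h=\sum_j\pi_j\langle\pi_j,h\rangle_\pa$ is precisely the integral operator of $\kp(a,a')=\sum_j\pi_j(a)\pi_j(a')=\langle a,a'\rangle$; note $\Lambda$ has rank $\le\da$, so $M$ is trace-class. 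In the biased case, replace $V$ by $\tilde V:=\{v-\bar v\cdot 1:v\in V\}$ so that $\sspan\{1\}\cup V=\sspan\{1\}\oplus\tilde V$ with $\tilde V\perp 1$; since $\tp 1=1$ we get, for the component orthogonal to constants, $P_{1^\perp}\tp\pi_j=\tp\tilde\pi_j$ with $\tilde\pi_j=\pi_j-\bar\pi_j$, and repeating the computation gives the operator $\tp\tilde\Lambda\tpstar$ where $\tilde\Lambda$ is the integral operator of $\langle\tilde a,\tilde{a'}\rangle$. Because $\tilde\Lambda 1=0$, the constant function lies in the kernel of $\tp\tilde\Lambda\tpstar$, so the restriction $\tilde V\perp 1$ costs nothing.

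\medskip
\noindent\textbf{Step 3 (variational principle).} Apply Ky Fan's maximum principle to the compact self-adjoint p.s.d. operator $M$: over all $d$-dimensional subspaces $V$, $\Tr(P_V M)$ equals the sum of the top $d$ eigenvalues of $M$ exactly when $V$ is spanned by a set of top-$d$ eigenfunctions of $M$ (any such set when the $d$-th eigenvalue is degenerate). Unwinding Steps 1--2, $\Phi^*$ minimizes \Cref{eqn:obj-regression} iff $\sspan\{\phi_1^*,\dots,\phi_d^*\}$ is a top-$d$ eigenspace of $\tp\Lambda\tpstar$, with $\kp(a,a')=\langle a,a'\rangle$ when $\vb=\vzero$ and $\kp(a,a')=\langle\tilde a,\tilde{a'}\rangle$ when $\vb$ is free. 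The main obstacle I anticipate is the bookkeeping in the biased case---correctly folding the constant direction into the projection and verifying $\tp 1=1$ and $\tilde\Lambda 1=0$ so the eigenspace characterization transfers back cleanly---together with stating the infinite-dimensional variational principle rigorously, including trace-classness of $M$ (which follows from $\Lambda$ having finite rank) and the precise equality/multiplicity condition matching \Cref{def:learn-contexture}.
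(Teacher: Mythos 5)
Your proof is correct, and it takes a genuinely cleaner route than the paper's. The paper's proof (Appendix~B.3, reducing to the argument of \Cref{app:proof-thm-obj-classification}) orthonormalizes $\tilde\Phi$ via Gram--Schmidt, differentiates the objective in the entries of $\mW$ to find the optimal weight matrix, substitutes back to reduce to maximizing $\sum_{i,j}\big(\E[\tilde A_i\tilde\Phi_j(X)]\big)^2$, and then expands this as a double integral against a kernel $\hat k$ on $\gX\times\gX$, invoking their Lemma~A.1 to identify $\hat k$ with $\tp\Lambda\tpstar$. You instead peel off the inner least-squares problem via the conditional-expectation (bias--variance) decomposition, observe that what remains is a subspace-projection error, rewrite $\sum_j\|P_V\tp\pi_j\|_\px^2$ as $\Tr(P_V M)$, and compute $M=\tp\Lambda\tpstar$ directly at the operator level rather than through the kernel on $\gX$. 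Your handling of the biased case via $P_{1^\perp}\tp\pi_j=\tp\tilde\pi_j$ (using $\tp 1=1$ and $\tpstar 1=1$) and the observation $\tilde\Lambda 1=0$ is exactly the clean way to fold in the bias term, and invoking Ky~Fan's maximum principle makes explicit a variational step that the paper leaves implicit (or defers to the Abel-transformation argument in the proof of Theorem~2.6(ii)). Your note about trace-classness of $M$ following from $\mathrm{rank}(\Lambda)\le d_\gA$ is a legitimate point the paper does not address.

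One thing worth flagging, which both your proof and the paper gloss over: in the biased case, the objective is invariant under adding constants to the $\phi_i$, so the minimizer set is characterized only by $P_{1^\perp}\sspan\{\phi_1^*,\dots,\phi_d^*\}$ being the top-$d$ eigenspace of $\tp\tilde\Lambda\tpstar$ (together with $1\notin\sspan\{\phi_1^*,\dots,\phi_d^*\}$), rather than the span itself literally equalling that eigenspace. The theorem's ``if and only if'' should be read modulo this constant-shift ambiguity, consistent with the ``$\tPhi^*$ learns the contexture'' phrasing used elsewhere (e.g., Theorems~2.5, 2.6, 2.7). This is a wording issue in the theorem statement, not a gap in your argument.
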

\begin{remark}
    Kernel $k(a,a') = \dotp{a,a'}$ is called the \textbf{linear kernel}, and $k(a,a') = \dotp{\tilde{a}, \tilde{a'}}$ is called the \textbf{centered linear kernel} \wrt{} distribution $\px$. \Cref{thm:obj-classification} is a special case of \Cref{thm:obj-regression}.
\end{remark}

\paragraph{Graphs.}
Let $\gG = (\gV, \gE)$ be an undirected graph.
Let the weight of each edge $w(u,v)$ be non-negative.
Let $D(u) = \sum_{v \in \gV} w(u,v)$ and $D_{\Sum} = \sum_{u \in \gV} D(u)$.
Define a node distribution $\px(u) = \frac{D(u)}{D_{\Sum}}$.
Define the context as $\gA = \gV$, and $P^+(u,v) = \frac{w(u,v)}{D_{\Sum}}$.
Then, the following constrained optimization problem learns the contexture:
\begin{equation}
\label{eqn:node-repre-obj}
    \underset{\Phi: \gX \rightarrow \R^d}{\text{minimize}} \quad  \frac{1}{2} \E_{(u,v) \sim P^+} \brac{ \norm{\Phi(u) - \Phi(v)}_2^2 } \qquad \text{s.t.} \qquad  \Cov_{\px}[\Phi] = \mI  .
\end{equation}
The constraint $\Cov_{\px}[\Phi] = \mI$ is called the \textbf{orthonormality constraint}.
This constraint is necessary because without it, a degenerate solution could be $\Phi$ being a constant function, which is called the \textbf{feature collapse} problem.
Implementing this constraint in practice is not easy.
The most straightforward approach is to minimize the Lagrangian of this optimization problem, in which the constraint is implemented as a penalty term,
such as in VICReg \cite{bardes2021vicreg}. More details will be discussed later, and for now let us assume that this constraint can be enforced.

\begin{theorem}[Proof in \Cref{app:proof-thm-node-repre}]
\label{thm:node-repre}
    Let $\Phi^*$ be any solution to \Cref{eqn:node-repre-obj} (so that for any constant $c$, $\Phi^* + c$ is also a solution).
    Then, $\tPhi^*$ learns the contexture of $P^+$.
\end{theorem}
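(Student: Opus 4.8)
The plan is to rewrite the objective \Cref{eqn:node-repre-obj} in terms of the quadratic form of the graph Laplacian, and then invoke the variational (Courant--Fischer / Ky Fan) characterization of the bottom eigenspace under the orthonormality constraint. First I would expand the per-edge term: for $(u,v) \sim P^+$,
\begin{equation*}
\tfrac{1}{2}\E_{(u,v)\sim P^+}\brac{\norm{\Phi(u)-\Phi(v)}_2^2} = \sum_{j=1}^{d} \tfrac{1}{2}\E_{(u,v)\sim P^+}\brac{(\phi_j(u)-\phi_j(v))^2},
\end{equation*}
so the objective decouples across coordinates. Using $P^+(u,v) = w(u,v)/D_\Sum$ and $\px(u) = D(u)/D_\Sum$, a standard manipulation gives
\begin{equation*}
\tfrac{1}{2}\E_{(u,v)\sim P^+}\brac{(\phi_j(u)-\phi_j(v))^2} = \E_{\px}[\phi_j(X)^2] - \E_{(u,v)\sim P^+}[\phi_j(u)\phi_j(v)] = \dotp{\phi_j, (I - \tp)\phi_j}_{\px},
\end{equation*}
since $\E_{(u,v)\sim P^+}[\phi_j(u)\phi_j(v)] = \dotp{\phi_j, \tp\phi_j}_{\px}$ by definition of the expectation operator on the graph context (here $\tp = \tpstar$ because the graph is undirected and $\pa = \px$, and $\tkx = \tp\tpstar = \tp^2$ has eigenfunctions $\mu_i$ with eigenvalues $s_i^2$).

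Next I would bring in the constraint $\Cov_{\px}[\Phi] = \mI$. Note this constrains the \emph{centered} functions $\tphi_j$ to be orthonormal in $\lxp$, so writing $\Phi = \tPhi + \bar\Phi$ and using $\tp \vone = \vone$ (as $\mu_0 \equiv 1$ is the top eigenfunction of $\tkx$ with $s_0 = 1$), the constant part contributes nothing to $I - \tp$ and the objective equals $\sum_{j=1}^d \dotp{\tphi_j, (I-\tp)\tphi_j}_{\px}$. Since $\tp$ is self-adjoint with spectrum $\{s_i\}$ in $[0,1]$ on $\lxp$, and $I - \tp$ has eigenvalues $1 - s_i \ge 0$ with the \emph{same} eigenfunctions $\mu_i$, the problem becomes: minimize $\Tr\paren{\mathsf{Q}}$ where $\mathsf{Q}[j,k] = \dotp{\tphi_j, (I-\tp)\tphi_k}_{\px}$ over orthonormal systems $\{\tphi_1,\dots,\tphi_d\}$ in the subspace $\vone^\perp \subset \lxp$. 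By the Ky Fan minimum principle, the minimum is $\sum_{i=1}^{d}(1 - s_i^2)$ — wait, I should be careful whether the relevant operator is $\tp$ or $\tkx = \tp^2$; on the graph context $\tp$ itself is self-adjoint p.s.d.\ so the natural Laplacian here is $I - \tp$ with eigenvalues $1 - s_i$, and the minimizing $d$-dimensional subspace is $\sspan\oset{\mu_1,\dots,\mu_d}$ (excluding $\mu_0$ since we restrict to $\vone^\perp$). This is exactly $\sspan\oset{\tphi_1^*,\dots,\tphi_d^*} = \sspan\oset{\mu_1,\dots,\mu_d}$, i.e., $\tPhi^*$ learns the contexture per \Cref{def:learn-contexture}. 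The "only if" direction also follows from Ky Fan: any minimizer's span must coincide with the bottom-$d$ eigenspace of $I - \tp$ restricted to $\vone^\perp$ (with the usual caveat about multiplicity at $s_d$, handled as in \Cref{def:learn-contexture}).

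The main obstacle I anticipate is purely bookkeeping around the constant function: the constraint is on $\Cov_{\px}[\Phi]$ (centered), not on the raw Gram matrix $\dotp{\phi_j,\phi_k}_{\px}$, so I must verify carefully that (a) the objective is invariant under adding a constant to $\Phi$ (true since $I - \tp$ annihilates constants), and (b) the feasible set for $\tPhi$ is exactly "orthonormal $d$-frames in $\vone^\perp$" with no further restriction — so that the unconstrained Ky Fan bound is actually attained within the feasible set. A second, more technical point is justifying the infinite-dimensional Ky Fan / Courant--Fischer statement: since $\tkx$ is assumed Hilbert--Schmidt, $\tp$ is compact self-adjoint, its spectrum is discrete, and the finite-dimensional trace-minimization argument applies verbatim on the closure of $\sspan\oset{\mu_i : i \ge 1}$; I would cite the standard spectral-theorem form of this. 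Everything else is the routine edge-sum computation sketched above.
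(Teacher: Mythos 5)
Your argument follows the same skeleton as the paper's: observe that $\tp = \tpstar$ is self-adjoint for an undirected graph, rewrite the objective under the orthonormality constraint as $d - \sum_i \dotp{\tphi_i, \tp \tphi_i}_{\px}$ (equivalently $\sum_i \dotp{\tphi_i, (I-\tp)\tphi_i}_{\px}$), and invoke the variational characterization of the extremal eigenspace. The $I-\tp$/Ky Fan framing is just a repackaging of the paper's trace-maximization step.

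However, the point you flagged mid-proof and then dismissed is the genuine weak spot, and your dismissal rests on a false claim. You assert that ``on the graph context $\tp$ itself is self-adjoint p.s.d.'' This is not true in general: $\dotp{f,\tp f}_{\px} = \frac{1}{D_\Sum}\sum_{u,v} f(u)f(v)\,w(u,v)$, which is a quadratic form in the \emph{adjacency} matrix $W$, not the Laplacian. $W$ is indefinite for any bipartite graph (a single edge already gives eigenvalues $\pm 1$), so $\tp$ can have negative eigenvalues. Once that happens, the eigenfunctions of $\tp$ coincide \emph{as a set} with those of $\tkx = \tp^2$, but the orderings differ: $\tkx$ orders by $s_i^2 = \alpha_i^2$ while the variational problem you solve (maximize $\sum_i \dotp{\tphi_i,\tp\tphi_i}$) picks the eigenspace corresponding to the largest \emph{signed} eigenvalues $\alpha_i$. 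For a 4-cycle with $d=1$, $\tp$ has eigenvalues $\{1,0,0,-1\}$; the minimizer of your objective on $\vone^\perp$ is a zero-eigenvalue eigenfunction ($s^2 = 0$), not the alternating eigenfunction with $\alpha = -1$ and $s^2 = 1$, which is what ``learning the contexture'' (top eigenspace of $\tkx$ per \Cref{def:learn-contexture}) actually demands. So Ky Fan gives you the bottom-$d$ eigenspace of $I-\tp$, which is not the top-$d$ eigenspace of $\tkx$ unless $\tp$ is PSD. The paper's own proof has exactly this gap (it states the top-$d$ eigenfunctions of $\tp$ and $\tpstar\tp$ ``are the same'' without justification), so you identified the right worry; you just need to either add a PSD/lazy-walk hypothesis or change the objective, rather than assert PSD-ness that the general graph setting does not give you.
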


\paragraph{Self-supervised learning (SSL) with random transformations.}

SSL is usually based on data augmentation, a random transformation that does not alter the semantics of the input by too much.
There are two popular methods: multi-view learning and reconstruction, as illustrated in \Cref{fig:aug-tasks}.
In multi-view learning, one draws two \textbf{views} that are transformed from the same input, and enforces the encoder to give similar embeddings to these two views.
In reconstruction, one trains a neural network with the goal of mapping each view $A$ to its original input $X$.
Then, the penultimate layer of this neural networks is taken as the representation.

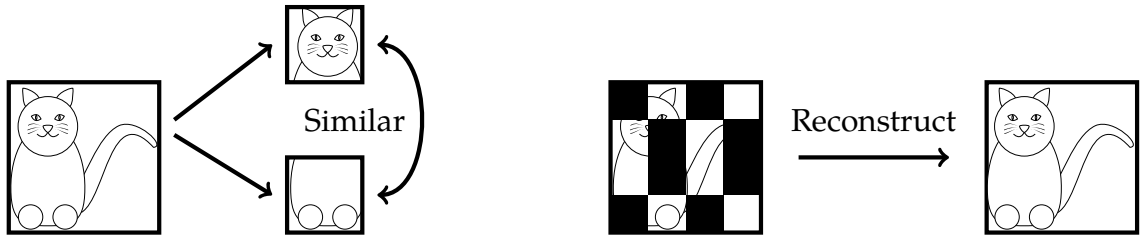
\begin{figure}[t]
    \centering
    \begin{tikzpicture}

\draw[black, ultra thick] (-2,-1) rectangle (0,1);

\cat[contour=black, scale=0.9, xshift=-3.9em, yshift=-2.8em]
\cat[contour=black, scale=0.9, xshift=5.9em, yshift=0em]
\cat[contour=black, scale=0.9, xshift=5.9em, yshift=-2.8em]

\fill [white] (1.5,0) rectangle (4,1);
\fill [white] (2.7,0.9) rectangle (4,2);
\fill [white] (2.7,-1) rectangle (3.5,0);

\draw[black, ultra thick] (1.7,1) rectangle (2.7,2);
\draw[black, ultra thick] (1.7,-1) rectangle (2.7,0);

\draw[->, black, ultra thick] (0.2,0.5) -- (1.5, 1.5);
\draw[->, black, ultra thick] (0.2,0.3) -- (1.5, -0.5);

\draw [<->,ultra thick] (2.9,-0.5) to [out=0,in=0] (2.9,1.5);
\node[] at (2.6,0.5) {Similar};

\draw[black, ultra thick] (6,-1) rectangle (8,1);
\cat[contour=black, scale=0.9, xshift=17.2em, yshift=-2.8em]

\fill [black] (6,-1) rectangle (6.5,-0.5);
\fill [black] (7,-1) rectangle (7.5,-0.5);
\fill [black] (6.5,-0.5) rectangle (7,0.5);
\fill [black] (7.5,-0.5) rectangle (8,0.5);
\fill [black] (6,0.5) rectangle (6.5,1);
\fill [black] (7,0.5) rectangle (7.5,1);

\draw[black, ultra thick] (11,-1) rectangle (13,1);
\cat[contour=black, scale=0.9, xshift=30.3em, yshift=-2.8em]

\draw[->, black, ultra thick] (8.5, 0) -- (10.5, 0);
\node[] at (9.5, 0.5) {Reconstruct};

\end{tikzpicture}
    \caption{Two widely used self-supervised learning algorithms with random transformations. \textbf{Left:} Multi-view learning. \textbf{Right:} Reconstruction.}
    \label{fig:aug-tasks}
\end{figure}

Let us start with multi-view learning.
If $A, A^+$ are two views independently drawn from $P^+(\cdot|X)$ for the same $X$, then $(A,A^+)$ is called a \textbf{positive pair}.
If $A,A^-$ are independently drawn from $\pa$, meaning that they can be views of different inputs, then $(A,A^-)$ is called a \textbf{negative pair}.
Multi-view learning trains an encoder $\Psi: \gA \rightarrow \R^d$ by enforcing $\Psi(A) \approx \Psi(A^+)$.
Note that $\Psi$ is an encoder on $\gA$, while the required $\Phi$ should be an encoder on $\gX$.
Although it has been common practice to directly apply $\Psi$ to $\gX$ at downstream whenever possible, such a practice is not theoretically correct.
For example, a BERT \cite{devlin2018bert} is trained on masked sentences and it never sees a complete sentence at pretrain time; as such, one cannot assume that the embeddings it gives to complete sentences retain 100\% quality, even though it works well on many real tasks.
The theoretically correct way of using $\Psi$ is converting it to $\Phi$ by means of the \textbf{average encoder}:
\begin{equation*}
    \Phi = \tp \Psi; \qquad \Phi(x) = \E_{A \sim P^+(\cdot|x)} \brac{\Psi(A)} .
\end{equation*}
For any input $x$, $\Phi(x)$ can be estimated via Monte Carlo: first sample a number of $A$ from $P^+(\cdot|x)$, and then take the mean of $\Psi(A)$.
This Monte Carlo estimation requires $T$-access to the context, as defined in \Cref{def:three-access}.

Multi-view learning also has the feature collapse problem.
If only $\Psi(A) \approx \Psi(A^+)$ is enforced, then one degenerate solution is to give all $A$ the same embedding.
There are two popular solutions to this problem: contrastive learning and non-contrastive learning.
Contrastive learning brings the embeddings of negative pairs far apart, that is increasing the distance between $\Psi(A)$ and $\Psi(A^-)$.
Non-contrastive learning has two types: asymmetry-based and constraint-based.
The asymmetry-based approach applies two encoders with slight (training or architectural) differences to the same $x$.
For example, BYOL \cite{NEURIPS2020_f3ada80d} updates one encoder with exponential moving average (EMA) and the other in the standard way to create asymmetry, and SimSiam \cite{Chen_2021_CVPR} puts stop gradient on one of the encoders but not the other.
The constraint-based approach, such as Barlow Twins \cite{zbontar2021barlow} and VICReg \cite{bardes2021vicreg}, uses the same orthonormality constraint as \Cref{eqn:node-repre-obj}.

Let us demonstrate that multi-view learning can learn the contexture with two example objectives.
The spectral contrastive loss \cite{haochen2021provable} is given by
\begin{equation}
\label{eqn:obj-spectral-contrastive}
\gR(\Psi) = \underset{X \sim \px}{\E} \; \underset{A,A^+ \sim P^+(\cdot|X)}{\E} \; \underset{A^- \sim \pa}{\E} \brac{ - \dotp{ \tPsi(A), \tPsi(A^+)} + \frac{1}{2} \dotp{\tPsi(A), \tPsi(A^-) } ^2 } ;
\end{equation}
and constraint-based non-contrastive learning solves the following problem:
\begin{equation}
\label{eqn:obj-non-contrastive}
\underset{\Psi: \gA \rightarrow \R^d}{\textrm{minimize}} \; \underset{X \sim \px}{\E} \; \underset{A,A^+ \sim P^+(\cdot|X)}{\E} \brac{\norm{\Psi(A) - \Psi(A^+)}_2^2} \qquad \text{s.t.} \qquad \Cov_{\pa}\brac{\Psi} = \mI .
\end{equation}
\begin{theorem}[Proof in \Cref{app:proof-thm-obj-multi-view}]
\label{thm:obj-multi-view}
Let $\Psi^*$ be any minimizer of $\Cref{eqn:obj-spectral-contrastive}$, or any optimal solution to \Cref{eqn:obj-non-contrastive}. Then, $\tPsi^*$ extracts the top-$d$ eigenspace of $\tpstar \tp$, and $\tPhi^* = \tp \tPsi^*$ learns the contexture of $P^+$.
\end{theorem}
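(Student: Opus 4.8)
The plan is to handle the spectral contrastive objective \Cref{eqn:obj-spectral-contrastive} and the non-contrastive objective \Cref{eqn:obj-non-contrastive} uniformly: first reduce each to a variational problem on $\gA$ that involves only the \emph{centered} encoder $\tPsi$ and whose optimizers are pinned down by the spectral theorem, and then transport the conclusion to $\gX$ through the average encoder $\tp$ using the duality property (\Cref{lem:duality}).

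\emph{Step 1 (rewrite the two objectives).} Writing $\Psi = [\psi_1,\dots,\psi_d]$, I would use three facts: (i) $A$ and $A^+$ are conditionally independent given $X$, so $\E_X\E_{A,A^+}[\tpsi_j(A)\tpsi_j(A^+)] = \norm{\tp\tpsi_j}_\px^2 = \dotp{\tka\tpsi_j,\tpsi_j}_{\pa}$; (ii) $A\sim\pa$ and $A^-\sim\pa$ are independent, so $\E[\dotp{\tPsi(A),\tPsi(A^-)}^2] = \norm{\Cov_{\pa}[\Psi]}_F^2$; (iii) $\dotp{\tp\tpsi_j,\1}_\px = \dotp{\tpsi_j,\tpstar\1}_{\pa} = \dotp{\tpsi_j,\1}_{\pa} = 0$, since $\tpstar\1=\1$ and $\tpsi_j$ is centered. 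These give that \Cref{eqn:obj-spectral-contrastive} equals $-\sum_{j=1}^d\dotp{\tka\tpsi_j,\tpsi_j}_{\pa} + \frac12\norm{\Cov_{\pa}[\Psi]}_F^2$, and that the objective of \Cref{eqn:obj-non-contrastive}, evaluated at any feasible $\Psi$ (so $\Cov_{\pa}[\Psi]=\mI$, i.e.\ $\{\tpsi_j\}$ orthonormal in $\lap$), equals $2d - 2\sum_{j=1}^d\dotp{\tka\tpsi_j,\tpsi_j}_{\pa}$. In both cases the quantity depends on $\Psi$ only through the centered $\tPsi$.

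\emph{Step 2 (spectral characterization on $\gA$, then transport).} For the non-contrastive problem, minimizing is equivalent to maximizing $\sum_{j=1}^d\dotp{\tka\tpsi_j,\tpsi_j}_{\pa}$ over $\lap$-orthonormal tuples; since each $\tpsi_j$ is centered, hence orthogonal to the top eigenfunction $\nu_0\equiv 1$ of $\tka = \tpstar\tp$, the Courant--Fischer (Ky Fan) variational principle gives maximum $\lambda_1+\dots+\lambda_d$, attained exactly when $\sspan\{\tpsi_1^*,\dots,\tpsi_d^*\}$ is a span of top-$d$ eigenfunctions $\nu_1,\dots,\nu_d$ (any choice if $s_d$ has higher multiplicity; I assume $s_d>0$ throughout). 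For the contrastive problem, completing the square gives $2\gR(\Psi) + \norm{\tka}_\hs^2 = \norm{\tka - \mathcal{G}_{\tPsi}}_\hs^2$, where $\mathcal{G}_{\tPsi}$ is the self-adjoint positive semidefinite operator $g\mapsto\sum_{j=1}^d\dotp{g,\tpsi_j}_{\pa}\tpsi_j$, of rank at most $d$ with range in $\{\1\}^\perp$; as $\Psi$ varies, $\mathcal{G}_{\tPsi}$ realizes every such operator, so the Schmidt--Eckart--Young--Mirsky low-rank approximation theorem (restricted to the Hilbert space $\{\1\}^\perp$) forces the minimizer to be the top-$d$ truncation of $\tka$, whence again $\sspan\{\tpsi_1^*,\dots,\tpsi_d^*\} = \sspan\{\nu_1,\dots,\nu_d\}$. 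In either case $\tPsi^*$ extracts the top-$d$ eigenspace of $\tpstar\tp$. To transport: $\E_{\px}[\Phi^*(X)] = \E_{\px}\E_{A\mid X}[\Psi^*(A)] = \E_{\pa}[\Psi^*(A)]$ and $\tp$ fixes constants, so the centered average encoder is $\tPhi^* = \tp\tPsi^*$; its coordinates therefore span $\tp\big(\sspan\{\nu_1,\dots,\nu_d\}\big)$, which by \Cref{lem:duality} ($\tp\nu_i = s_i\mu_i$ with $s_i>0$ for $i\le d$) is exactly the $d$-dimensional space $\sspan\{\mu_1,\dots,\mu_d\}$. Hence $\tPhi^*$ learns the contexture of $P^+$ (\Cref{def:learn-contexture}).

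\emph{Main obstacle.} The delicate part is upgrading the optimal-value computations of Step 2 into statements about which \emph{eigenspace} a minimizer selects, in an infinite-dimensional $L^2$ space: this requires the Hilbert--Schmidt forms of the Eckart--Young--Mirsky and Ky Fan theorems together with an analysis of their equality cases restricted to the codimension-one subspace $\{\1\}^\perp$, plus the nondegeneracy hypothesis $s_d>0$ (which makes the minimizer have rank exactly $d$, so that $\{\tpsi_j^*\}$ genuinely span a top-$d$ eigenspace, with the multiplicity clause of \Cref{def:learn-contexture} absorbing the ambiguity when $s_d = s_{d+1}$). By contrast, the Step 1 expansions and the transport step are routine once the identities $\norm{\tp f}_\px^2 = \dotp{\tpstar\tp f, f}_{\pa}$ and $\tpstar\1 = \1$ are in hand.
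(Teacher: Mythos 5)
Your proof is correct and follows essentially the same route as the paper's: both expand the objectives in the eigenbasis of $\tka$, reduce the spectral contrastive loss to a low-rank Eckart--Young--Mirsky approximation of $\tka$ restricted to $\{\1\}^\perp$ and the non-contrastive loss to a trace-maximization (Ky Fan) under an orthonormality constraint, and then transport through $\tp$ via \Cref{lem:duality}. The only difference is cosmetic: the paper works with the explicit coefficient matrices $\mC$ and $\mB=\mC^\top\mC$ in the $\{\nu_j\}$ basis, whereas you package the same object coordinate-free as the rank-$\le d$ PSD operator $\mathcal{G}_{\tPsi}$, which is arguably the cleaner presentation of the same argument.
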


For reconstruction, suppose $\gX \subseteq \R^{d_X}$, and let the predictor be $\mW \Psi(a) + \vb$, where $\vb$ can be $\vzero$ if the predictor is unbiased.
The pretraining objective is given by
\begin{equation}
\label{eqn:obj-reconstruction}
\gR(\Psi) = \min_{\mW \in \R^{d_X \times d} , \; \vb \in \R^{d_X}} \; \underset{(X,A) \sim P^+}{\E} \; \brac{ \norm{ \mW \Psi(A) + \vb - X }_2^2 }  .
\end{equation}

\begin{theorem}
\label{thm:obj-reconstruction}
Let $\Psi^*$ be any minimizer of \Cref{eqn:obj-reconstruction}. Then, $\tPsi^*$ extracts the top-$d$ eigenspace of $\tp \Lambda \tpstar$, where $\Lambda$ is the integral operator of $\kp(x,x') = \dotp{\tilde{x}, \tilde{x}'}$ if the predictor is biased, or $\kp(x,x') = \dotp{x,x'}$ if the predictor is unbiased.
\end{theorem}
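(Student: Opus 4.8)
The plan is to treat \Cref{thm:obj-reconstruction} as the mirror image of \Cref{thm:obj-regression}: reconstruction predicts $X$ from $A$, which is exactly a regression problem in the \emph{reversed} context. Concretely, I would set $\bar{P}^+(a,x) := P^+(x,a)$, the joint law with the two variables swapped, so that for $\bar{P}^+$ the ``input'' is $A \sim \pa$, the real-valued ``label'' is $X \in \gX \subseteq \R^{d_X}$, the encoder is $\Psi : \gA \to \R^d$, the expectation operator is $T_{\bar{P}^+} = \tpstar$, and its adjoint is $T_{\bar{P}^+}^{*} = \tp$. Since $\gX$ is compact and sits inside $\R^{d_X}$ it is bounded, so each coordinate map $x \mapsto x_j$ belongs to $\lxp$ and $X$ is an admissible real-valued label for \Cref{thm:obj-regression}. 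Under this dictionary the objective \Cref{eqn:obj-reconstruction} is literally \Cref{eqn:obj-regression} written for $\bar{P}^+$, so \Cref{thm:obj-regression} delivers at once that $\tPsi^*$ extracts the top-$d$ eigenspace of $\tpstar \Lambda \tp$, where $\Lambda$ is the integral operator of $\kp(x,x') = \dotp{x,x'}$ in the unbiased case ($\vb = \vzero$) and of the centered kernel $\kp(x,x') = \dotp{\tilde{x}, \tilde{x}'}$ in the biased case.

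For a self-contained proof I would instead replay the two-step argument behind \Cref{thm:obj-regression}. Step one: freeze $\Psi$ and minimize over $(\mW,\vb)$. Writing $\iota_j(x) = x_j$ and noting $\E[X_j \mid A = a] = (\tpstar \iota_j)(a)$, the squared error in coordinate $j$ splits as $\E_{\pa}[\Var(X_j \mid A)] + \norm{\tpstar\iota_j - P_V(\tpstar\iota_j)}_{\pa}^2$, where $P_V$ is the orthogonal projection onto $V = \sspan\{\psi_1,\dots,\psi_d\}$ (augmented with the constant function $\vone$, which only adds a $\Psi$-independent term, in the biased case). Summing over $j$, minimizing $\gR(\Psi)$ is equivalent to maximizing $\sum_j \norm{P_V(\tpstar\iota_j)}_{\pa}^2$ over admissible $d$-dimensional subspaces $V$. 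Step two: for an orthonormal basis $u_1,\dots,u_d$ of $V$, using $\dotp{x,x'} = \sum_{l} \iota_l(x) \iota_l(x')$ and the definition of $\Lambda$ one finds $\sum_j \dotp{\tpstar\iota_j, u_i}_{\pa}^2 = \dotp{u_i,\, \tpstar\Lambda\tp\, u_i}_{\pa}$ for each $i$, so the quantity to be maximized is $\sum_{i=1}^d \dotp{u_i,\, \tpstar\Lambda\tp\, u_i}_{\pa}$. The operator $\tpstar\Lambda\tp$ is self-adjoint, positive semidefinite, and of rank at most $d_X$, so the Ky Fan / Courant--Fischer principle shows the maximum is attained exactly when $V$ equals its top-$d$ eigenspace; this pins down $\tPsi^*$ (the centered encoder, since $\gR$ is invariant under constant shifts of $\Psi$) to span that eigenspace. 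In the biased case the leftover constant direction contributes the fixed value $\norm{\E_{\pa}[X]}^2$ and, after checking that $\tpstar\iota_j \perp \vone$, the effective loss kernel becomes the centered linear kernel $\dotp{\tilde{x}, \tilde{x}'}$; equivalently, $\vone$ is a $0$-eigenfunction of the resulting $\tpstar\Lambda\tp$, so its top-$d$ eigenspace is automatically orthogonal to constants.

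I do not expect a substantive obstacle here, since the whole statement reduces to \Cref{thm:obj-regression}; the only genuine idea is that reduction (equivalently, the passage from ``minimize the reconstruction loss'' to ``maximize an explained-variance quadratic form'' followed by Ky Fan). The fiddly points are (i) verifying that the coordinate functions of $X$ are square-integrable, which is precisely where compactness of $\gX$ is used; (ii) the bias bookkeeping that converts the linear kernel into the centered linear kernel when $\vb$ is free --- this is the most error-prone step and exactly parallels the corresponding step in the proof of \Cref{thm:obj-regression}; and (iii) the usual convention that ``extracts the top-$d$ eigenspace'' is to be read as in \Cref{def:learn-contexture} when $s_d$ is degenerate.
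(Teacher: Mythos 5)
Your proof is correct and follows exactly the route the paper intends: the paper's entire proof of \Cref{thm:obj-reconstruction} is the single line ``the proof is the same as \Cref{thm:obj-regression},'' and you have unpacked that by swapping the roles of $X$ and $A$, treating reconstruction as regression for the reversed context $\bar{P}^+(a,x) = P^+(x,a)$, with the expectation operator and its adjoint trading places; the projection-then-Ky-Fan argument in your self-contained version is also the argument behind \Cref{thm:obj-regression}. One thing worth flagging: you correctly obtain that $\tPsi^*$ spans the top-$d$ eigenspace of $\tpstar\Lambda\tp : \lap \to \lap$, whereas the theorem as printed writes $\tp\Lambda\tpstar$ --- but since $\Lambda$ here is the integral operator of a kernel on $\gX$ (so $\Lambda : \lxp \to \lxp$) and $\Psi$ lives on $\gA$, the composition $\tp\Lambda\tpstar$ does not even type-check, so the printed statement is a typo and your version $\tpstar\Lambda\tp$ is the correct one.
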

\begin{proof}
    The proof is the same as \Cref{thm:obj-regression}, which is left as an exercise.
\end{proof}

The three illustrate examples demonstrate that many existing popular variational objectives can already learn the contexture.
The next step is to make them more general so that they can be adapted to a wider variety of contexts.

\section{General Objectives: SVME and KISE}
\label{sec:svme-spin}

\Cref{sec:three-access} introduced three types of access to a context: pair access, kernel access ($k$-access), and transformation access ($T$-access).
This section presents two general objectives: SVME for pair access, and KISE for $k$-access.
SVME can also be used for $T$-access since it is stronger than pair access.

To motivate SVME, think about the weaknesses of contrastive and non-contrastive learning objectives in \Cref{eqn:obj-spectral-contrastive,eqn:obj-non-contrastive}.
First, they require the stronger $T$-access, since for pair access, it is not always possible to draw two views of the same $x$.
Second, using the average encoder is not ideal since it slows down inference.

\textbf{Single-view multi-encoder (SVME)} learning addresses both weaknesses.
It produces the encoder $\Phi$ directly, and needs only one view for each $x$; thus, it only requires pair access.
It does so at the cost of training more than one encoders.
SVME with two encoders is formulated as the following optimization problem.
\begin{equation}
\label{eqn:svme}
\textbf{SVME:} \quad \underset{\Phi : \gX \rightarrow \R^d; \; \Psi: \gA \rightarrow \R^d}{\textrm{minimize}} \; \;  \underset{(X,A) \sim P^+}{\E} \brac{\norm{\Phi(X) - \Psi(A)}_2^2} \quad \textrm{s.t.} \quad \Cov_{\px}\brac{\Phi} = \mI  .
\end{equation}
SVME can be viewed as a combination of asymmetry-based and constraint-based non-contrastive learning.
It uses a similar double-encoder architecture as asymmetry-based non-contrastive learning, and the asymmetry here is that the constraint is only imposed on $\Phi$ but not $\Psi$.
Meanwhile, it uses the same orthonormality constraint as mentioned earlier.
If $\gA = \gX$, then similar to BYOL and SimSiam, one can implement $\Phi$ and $\Psi$ as two heads on top of the same neural network backbone.
SVME can have more than two encoders, which we shall see later in this chapter.

SVME is inspired by multi-modal learning.
For example, in vision-language models like CLIP \cite{CLIP}, $\Phi$ can be the image encoder, and $\Psi$ can be the text encoder.

For $k$-access, usually $k$ is an approximation of the dual kernel.
For simplicity, let us assume that we have access to $\kx(x,x')$ for all $x,x'$.
Then, for any $f \in \lxp$, we can estimate $\tkx f$ with Monte Carlo given a set of inputs.
The objective of \textbf{kernel-integral single-encoder (KISE)} learning is formulated as
\begin{equation}
\label{eqn:spin}
    \textbf{KISE:} \quad \underset{\Phi: \gX \rightarrow \R^d}{\text{minimize}} \; \; \underset{X \sim \px}{\E} \; \brac{ \norm{\tPhi(X)}_2^2 - \dotp{\tPhi(X), \tkx \tPhi(X)}  } \quad \text{s.t.} \quad \Cov_{\px}[\Phi] = \mI .
\end{equation}
Different from SVME, KISE only trains one encoder $\Phi$.
KISE is similar to the spectral inference network (SpIN) method proposed by \cite{pfau2018spectral}.
SpIN maximizes $\dotp{\tPhi(X), \tkx \tPhi(X)}$, while KISE is more numerically stable when using gradient methods because the objective is lower bounded by zero, provided that all eigenvalues of $\tkx$ are in $[0,1]$.
Maximizing SpIN might cause the model weights to explode, but minimizing KISE will not.

The following result shows that both SVME and KISE can learn the contexture of $P^+$, and they are in fact equivalent.
\begin{theorem}[Proof in \Cref{app:proof-thm-svme-spin}]
\label{thm:svme-spin}
In \Cref{eqn:svme}, if $\Psi$ is substituted with the optimal $\Psi$ when $\Phi$ is fixed, then this problem becomes equivalent to \Cref{eqn:spin}.
Let $\Phi^*$ be any optimal solution to \Cref{eqn:spin}, then $\tPhi^*$ learns the contexture of $P^+$.
\end{theorem}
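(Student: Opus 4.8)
The plan is to dispatch the two assertions in sequence: first the reduction of the profiled SVME problem to KISE, then the claim that every KISE optimizer learns the contexture.

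For the first assertion I would fix $\Phi$ and minimize the SVME objective over $\Psi$. The objective $\E_{(X,A)\sim P^+}[\norm{\Phi(X)-\Psi(A)}_2^2]$ decouples across the $d$ coordinates, and since $\E_{(X,A)\sim P^+}[\phi_j(X)\psi_j(A)] = \dotp{\psi_j,\tpstar\phi_j}_{\pa}$ by definition of the adjoint, each coordinate reduces to the unconstrained quadratic $\min_{\psi_j\in\lap}\norm{\psi_j}_{\pa}^2 - 2\dotp{\psi_j,\tpstar\phi_j}_{\pa}$, attained at $\psi_j=\tpstar\phi_j$ with value $-\norm{\tpstar\phi_j}_{\pa}^2 = -\dotp{\phi_j,\tkx\phi_j}_{\px}$ (using $\tkx=\tp\tpstar$). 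Substituting back, the profiled SVME objective is $\E_{\px}[\norm{\Phi(X)}_2^2] - \sum_j\dotp{\phi_j,\tkx\phi_j}_{\px}$. I would then use that $\tkx$ fixes the constant function — this follows from \Cref{cor:spec-decompose} together with $s_0=1$ and $\mu_0\equiv 1$, so $\tkx 1 = 1$ — to split $\phi_j=\tilde{\phi}_j + \bar{\phi}_j$ and note that the contributions $\bar{\phi}_j^2$ appearing in $\E_{\px}[\phi_j^2]=\norm{\tilde{\phi}_j}_{\px}^2+\bar{\phi}_j^2$ and in $\dotp{\phi_j,\tkx\phi_j}_{\px}=\dotp{\tilde{\phi}_j,\tkx\tilde{\phi}_j}_{\px}+\bar{\phi}_j^2$ (the cross terms vanish since $\tilde{\phi}_j\perp 1 = \tkx 1$) cancel. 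What survives is exactly $\sum_j\paren{\norm{\tilde{\phi}_j}_{\px}^2-\dotp{\tilde{\phi}_j,\tkx\tilde{\phi}_j}_{\px}}$, which is the objective of \Cref{eqn:spin}; the constraint $\Cov_{\px}[\Phi]=\mI$ is identical in both problems. (In fact the objectives coincide, not merely their minimizers.)

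For the second assertion, the constraint $\Cov_{\px}[\Phi]=\mI$ says exactly that $\tilde{\phi}_1,\dots,\tilde{\phi}_d$ is an orthonormal system in $\lxp$; in particular $\sum_j\norm{\tilde{\phi}_j}_{\px}^2=d$ is fixed, so minimizing KISE is equivalent to maximizing $\sum_{j=1}^d\dotp{\tilde{\phi}_j,\tkx\tilde{\phi}_j}_{\px}$ over orthonormal $d$-tuples lying in the zero-mean subspace $W=\oset{f\in\lxp:\bar f=0}$ (any such tuple is feasible by taking $\phi_j=\tilde{\phi}_j$). Because $\tkx 1=1$ and $\tkx$ is self-adjoint, $W$ is $\tkx$-invariant, and on $W$ the operator $\tkx$ has orthonormal eigenbasis $\mu_1,\mu_2,\dots$ with eigenvalues $\kappa_1\ge\kappa_2\ge\cdots$, where $\kappa_i=s_i^2$ by \Cref{lem:duality}. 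I would then invoke the Ky Fan maximum principle for the compact self-adjoint operator $\tkx|_W$ (compactness from the Hilbert–Schmidt assumption): the maximum is $\sum_{i=1}^d\kappa_i$, and it is attained precisely when $\sspan\oset{\tilde{\phi}_1,\dots,\tilde{\phi}_d}$ is a top-$d$ eigenspace of $\tkx|_W$, i.e. $\sspan\oset{\tilde{\phi}_1,\dots,\tilde{\phi}_d}=\sspan\oset{\mu_1,\dots,\mu_d}$ for some choice of top-$d$ singular functions (any choice when $s_d$ has multiplicity greater than one). By \Cref{def:learn-contexture} this is exactly the statement that $\tPhi^*$ learns the contexture of $P^+$, with the exclusion of $\mu_0\equiv 1$ automatic since $\tilde{\phi}_j\in W$.

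The step I expect to be the main obstacle is the equality case of the Ky Fan principle in the infinite-dimensional setting: one must show not only that the supremum equals $\sum_{i=1}^d\kappa_i$ but that every maximizing tuple spans a genuine top-$d$ eigenspace, handling possible eigenvalue degeneracy at level $d$ and confirming attainment (this is where compactness of $\tkx$ is essential). A clean argument would diagonalize $\tkx|_W$, reduce $\sum_j\dotp{\tilde{\phi}_j,\tkx\tilde{\phi}_j}$ to a sum $\sum_i \kappa_i w_i$ with weights $w_i\in[0,1]$, $\sum_i w_i = d$ (a doubly substochastic constraint coming from orthonormality), and then analyze when the linear optimum is achieved. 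The remaining pieces — the coordinatewise minimization over $\Psi$, the cancellation of the centering terms, and translating the span condition into \Cref{def:learn-contexture} — are routine.
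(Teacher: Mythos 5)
Your proposal is correct and follows the same two-stage plan as the paper's own proof: (i) profile out $\Psi$ by setting $\psi_i = \tpstar\phi_i$, and (ii) reduce the residual problem to a trace-maximization over orthonormal $d$-tuples and invoke a Ky-Fan-type equality case for the compact self-adjoint operator $\tkx$. The paper realizes stage (ii) by pointing to the proof of \Cref{thm:obj-multi-view}(ii), which expands $\tilde\phi_j$ in the eigenbasis, writes $\sum_j\dotp{\tilde\phi_j,\tkx\tilde\phi_j}_{\px}=\sum_i s_i^2 b_{ii}$ with $b_{ii}\in[0,1]$ and $\sum_i b_{ii}=d$, and compares term by term against $s_d^2$; this is exactly the doubly-substochastic-weight reduction you describe, so there is no material difference between your route and the paper's. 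The one place your write-up is actually more careful than the paper's: after substituting the optimal $\Psi$, the paper declares that $\sum_i\dotp{\phi_i,\phi_i-\tp\tpstar\phi_i}_{\px}$ ``is exactly the objective of KISE,'' even though \Cref{eqn:spin} is phrased in terms of $\tilde\Phi$ and the constraint $\Cov_{\px}[\Phi]=\mI$ does not force $\Phi$ to be centered. Your observation that $\tkx 1=1$, that the constant and zero-mean parts are $\tkx$-orthogonal, and that the constant contributions therefore cancel identically in $\norm{\phi_j}_{\px}^2-\dotp{\phi_j,\tkx\phi_j}_{\px}$ is precisely the missing justification for that equality, and also cleanly explains why one may work on the zero-mean invariant subspace $W$ in stage (ii).
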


\paragraph{Converting $k$-access to $T$-access.}
There is an additional way of using $k$-access, which requires the kernel $k$ to be always non-negative.
The idea is similar to the objective for graphs that we saw earlier.
Define the degree $D$ as $D(x) = \int k(x,x') d \px(x')$, and define a new context $P^+$ as $P^+(a|x) = k(x,a) \px(a) / D(x)$, where $\gA = \gX$.
One has $T$-access to this context if one can access $k(x,x')$ for all $x,x'$.

If $k$ is the dual kernel of the original context, then we have $D(x) \equiv 1$.
For any eigenfunction $\mu_i$ of $T_k$ with eigenvalue $s_i^2$, we have $\tp \mu_i = s_i^2 \mu_i$.
Since obviously $\tpstar = \tp$, $\mu_i$ is a singular function of $\tp$ with singular value $s_i^2$.
Hence, the context of $P^+$ has the same singular functions as the original context, with all singular values squared.

\paragraph{Implementing the orthonormality constraint.}
The orthonormality constraint in SVME and KISE can be implemented by VICReg \cite{bardes2021vicreg}.
Let $\oset{(x_1,a_1),\cdots,(x_m,a_m)}$ be a batch of training samples.
The VICReg objective is the sum of an invariance loss, a variance loss and the covariance loss, defined as
\begin{equation*}
    \gL(\Phi, \Psi) = \frac{1}{m} \sum_{k=1}^m \norm{\Phi(x_k) - \Psi(a_k)}_2^2 + \frac{\alpha}{d} \sum_{i=1}^d \paren{ 1 - \sqrt{\mC[i,i] + \epsilon}}_+ + \frac{\beta}{d(d-1)} \sum_{i \neq j} \mC[i,j]^2 ,
\end{equation*}
where $(x)_+ = \max \oset{0,x}$ is the ReLU function, $\epsilon$ is a small positive constant for numerical stability, and $\mC$ is the empirical covariance matrix: $\mC[i,j] = \frac{1}{m-1} \sum_{k=1}^m \tphi_i(x_k) \tphi_j(x_k)$.
$\alpha,\beta$ are two positive hyperparameters.
The first term is the invariance loss that aligns the two encoders.
The second term is the variance loss that pulls $\norm{\tphi}_{\px}$ closer to $1$.
The third term is the covariance loss that makes the $d$ dimensions orthogonal to each other.

Although VICReg is used a lot in our experiments, we observe that the two regularization terms cannot enforce the orthonormality constraint perfectly.
In fact, both the variance loss and the covariance loss cannot converge to zero.
Alternatively, we tried projecting $\Phi$ to become orthonormal after each epoch, but doing so does not improve the performance of the encoder.
Finding better ways than VICReg to enforce the orthonormality constraint is posed as an open problem.

\section{Distilling Knowledge from Teacher Models}
\label{sec:teacher-models}

This part discusses how to learn from the context given by a teacher model.
Not only does this have lots of applications in practice, but also it gives us a way to convert a context with any access to one with $k$-access, which will be very useful in later chapters.
Teacher models are very common in practice nowadays. Many big tech companies release their large language models or generative models, all of which can be viewed as teacher models.
However, using these models usually incurs a cost, and these models are often too large to be deployed locally (especially in academic labs).
As a result, knowledge distillation \cite{hinton2015distilling} becomes very attractive---people would like to distill their knowledge to smaller and more affordable models of their own.

Let $\Phi_t: \gX \rightarrow \R^{d_t}$ be a teacher model, which is presumably learned from some context variable $A$. If the model is close-sourced, then we cannot see $A$ at all, and we may not even know what space $\gA$ is.
Even if the model is open-sourced and we know the space $\gA$, very few companies release $A$ as a part of their pretraining data.
Indeed, it appears that the quality of $\Phi_t$ largely depends on the quality of $A$ (the quality of the context), so most companies keep it as their business secret.

Even though we might not know $A$, it is still possible to distill the knowledge of $\Phi_t$ if we can query $\Phi_t$ for a sufficient number of times.
We can construct its centered linear kernel $k_t(x,x') = \dotp{\tPhi_t(x), \tPhi_t(x')}$, and use KISE to extract its top eigenspace.
It is easy to see that $\mu_0 \equiv 1$ is an eigenfunction of $k_t$ with eigenvalue $0$.

Two remarks on this method.
First, we still cannot access $\px$ if the data is not released.
If we use our own data from another distribution during distillation, then the contexture will be skewed.
Second, the linear kernel $k_t$ of the teacher model is not equal to $\kx$. In fact, it contains the information of at most the top-$d_t$ eigenfunctions of $\kx$, which nonetheless is already sufficient for knowledge distillation purpose.

In addition to KISE, we can use the following objective for distillation:
\begin{equation}
\label{eqn:distillation}
\gR(\Phi) = \min_{\mW \in \R^{d_t \times d}, \; \vb \in \R^{d_t}} \; \E_{X \sim \px} \brac{ \norm{ \mW \Phi(X) + \vb - \Phi_t(X) }_2^2 } .
\end{equation}
This objective extracts the top-$d$ eigenspace of $T_{k_t}$, with a (centered) linear kernel as the loss kernel. This can be proved in the same way as \Cref{thm:obj-regression}.
We leave this proof as an exercise.
Let $\tkx$ be the dual kernel of the original context $\Phi_t$ was trained on.
If $\Phi_t$ extracts the top-$d_t$ eigenspace of $\tkx$ and preserves their order as well, and $d \le d_t$, then apparently the $\Phi$ we learn will extract the top-$d$ eigenspace of $\tkx$.

However, one issue is that $\Phi_t$ might not preserve the original order of the eigenfunctions.
Among all the objectives we have discussed so far, only the spectral contrastive loss \Cref{eqn:obj-spectral-contrastive} can preserve the original eigenvalues; all the other objectives can only recover the space spanned by the top eigenfunctions.
Therefore, given a teacher model, it is unsafe to assume that it preserves the original order of the eigenfunctions.
One way to completely erase the eigenfunctions of the teacher model is to whiten $\Phi_t$ before constructing $k_t$, that is multiplying $\tPhi_t$ by $[\Cov(\Phi_t)]^{-1/2}$.
The resulting $k_t$ will have the same eigenfunctions as the original $k_t$, but its eigenvalues become all either $0$ or $1$.
This is called a spectrally transformed kernel (STK), which will be discussed in more detail in \Cref{chap:bounds}.
Note that whitening is not useful for knowledge distillation from a single teacher model, but will be quite useful later when there are multiple contexts.

\paragraph{Converting any access to $k$-access.}
Any pretrained encoder can be a teacher model.
Thus, for any context, we can first pretrain an encoder with it, view the encoder as a teacher model, and then obtain a context with $k$-access using the above approach.
This also makes storing contexts very simple.
Suppose a context is only available for a limited amount of time, for example due to copyright limitations.
All we need to do is to pretrain an encoder using the context and store it.
Then, we can use the context whenever we want later on.

\paragraph{Social impact.}
Whether knowledge distillation constitutes a copyright infringement is an important problem.
Recently, DeepSeek \cite{guo2025deepseek} showed that it is possible to use a fairly small budget to distill the knowledge of an OpenAI model that cost billions of dollars to pretrain.
Moreover, the analysis above shows that such knowledge distillation cannot be prevented by making the model close-sourced.
As such, big tech companies might be less and less incentivized to grant public access to their models at a low price, fearing that it would be too easy for other companies to copy their work.
This might slow down the development of AI.

\section{Learning from a Mixture of Contexts}
\label{sec:learn-mixture}

The objectives discussed so far can only learn from one context.
In practice, it is often the case that there are multiple training signals that we can leverage.
For example, there might be multiple random transformations, such as translation, flipping, cropping and color distortion for images.
There could also be multiple labels for each sample, or labels and a graph.
Here we demonstrate that one can learn representations from multiple training signals by mixing multiple contexts,
and in fact some existing learning algorithms are implicitly doing this.
Specifically, we discuss two base operations: convolution and convex combination.
\Cref{chap:mixture} will provide a more general framework for mixing multiple contexts.

Suppose there are $r$ contexts given by $P^+_1,\cdots,P^+_r$, and $\px$ is the marginal distribution of every $P^+_j$.
Let $\kx_1,\cdots,\kx_r$ be their dual kernels.
Then, the \textbf{convolution} of these contexts is defined to have the top eigenfunctions of $T_{\kx_1} T_{\kx_2} \cdots T_{\kx_r} \cdots T_{\kx_2} T_{\kx_1}$ as its contexture;
and the \textbf{convex combination} of these contexts is defined to have the top eigenfunctions of $\sum_{j=1}^r w_j T_{\kx_j}$ as its contexture, for some fixed non-negative $w_1,\cdots,w_r$.
The convolution usually appears when one composes multiple random transformations; for example, when one applies translation, flipping and cropping to the same image.
The convex combination appears when the objective is the weighted sum of multiple individual objectives, each of which is designed for one context, as illustrated in \Cref{fig:mixture-demo}.

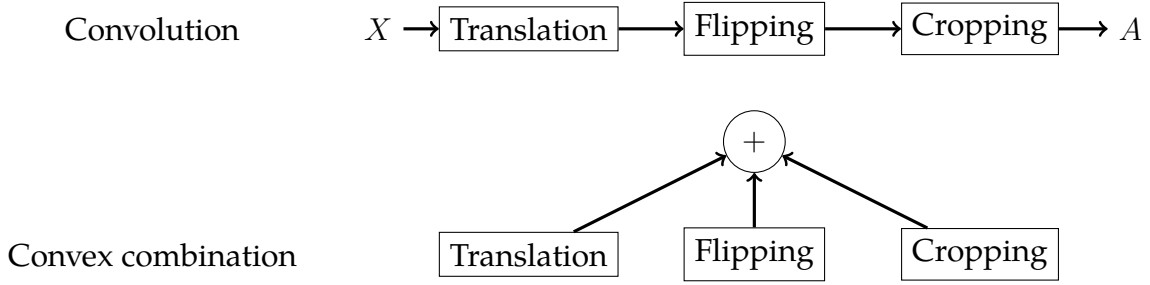
\begin{figure}[t]
    \centering
    \begin{tikzpicture}

\node[rectangle, draw] (b1) at (0,0) {Translation};
\node[rectangle, draw] (b2) at (3,0) {Flipping};
\node[rectangle, draw] (b3) at (6,0) {Cropping};
\node[circle, draw] (p) at (3,1.5) {$+$};
\node[] at (-5,0) {Convex combination};
\draw[->, very thick] (b1) -- (p);
\draw[->, very thick] (b2) -- (p);
\draw[->, very thick] (b3) -- (p);

\node[] (x) at (-2,3) {$X$};
\node[rectangle, draw] (a1) at (0,3) {Translation};
\node[rectangle, draw] (a2) at (3,3) {Flipping};
\node[rectangle, draw] (a3) at (6,3) {Cropping};
\node[] (a) at (8,3) {$A$};
\draw[->,very thick] (x) -- (a1);
\draw[->,very thick] (a1) -- (a2);
\draw[->,very thick] (a2) -- (a3);
\draw[->,very thick] (a3) -- (a);

\node[] at (-5,3) {Convolution};

\end{tikzpicture}
    \caption{Convolution and convex combination of multiple transformations on images.}
    \label{fig:mixture-demo}
\end{figure}

Now let us look at three concrete examples: supervised learning with a feature map, multi-step generative models, and RLHF.

\paragraph{Supervised learning with a feature map.}
When $\gX$ is a space of real-world objects, the inputs need to be mapped to numerical vectors via a feature map $\Omega : \gX \rightarrow \R^{\dw}$ so that they can be stored in a computer.
For example, if $\gX$ is the space of images, then $\Omega$ can be PNG or JPEG.
The quality of $\Omega$ affects the quality of the context.
Generally speaking, PNG is better than JPEG because it is lossless, while JPEG loses information.

Assume that $\Omega$ is a deterministic mapping. Define operator $\tpyw: \lwp \rightarrow \lxp$ and its adjoint operator $\tpywstar: \lxp \rightarrow \lwp$ as
\begin{equation*}
    (\tpyw h)(x) = h(\Omega(x)); \qquad (\tpywstar f) (\omega) = \int f(x) dP(x | \omega) .
\end{equation*}

In this scenario, an encoder $\Phi$ can be trained as follows.
First, train $\Gamma: \R^{\dw} \rightarrow \R^d$ via supervised learning.
Then, define $\Phi = \tpyw \Gamma$.
For classification tasks, similar to \Cref{sec:illustrative-examples},
$\Gamma$ is trained using an unbiased predictor and the mean squared error:
\begin{equation}
\label{eqn:obj-classification-omega}
    \gR(\Gamma) = \min_{\mW \in \R^{d_{\gA} \times d}, \vb \in \R^{d_{\gA}}} \; \underset{(X,A) \sim P^+, \; \omega = \Omega(X)}{\E} \; \brac{ \norm{A - \mW \Gamma(\omega) - b}_2^2 }  .
\end{equation}

\begin{theorem}[Proof in \Cref{app:proof-thm-obj-classification-omega}]
\label{thm:obj-classification-omega}
$\Phi^*$ minimizes \Cref{eqn:obj-classification-omega} if and only if $\Gamma^*$ extracts the top-$d$ eigenspace of $\tpywstar \tp \Lambda \tpstar \tpyw$ (including $\mu_0 \equiv 1$), where $\Lambda$ is the integral operator of $\kp(a,a') = \sI[a=a']$.
In this case, $\Phi^* = \tpyw \Gamma^*$ extracts the top-$d$ eigenspace of $\tkw \tp \Lambda \tpstar \tkw$, where $\kw$ is the dual kernel associated with $\Omega$ such that $\tkw = \tpyw \tpywstar$.
\end{theorem}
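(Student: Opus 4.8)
The plan is to reduce the statement to the regression/classification objective results already proven (\Cref{thm:obj-regression}, \Cref{thm:obj-classification}), applied not over $\gX$ but over the feature space $\R^{\dw}$ equipped with the pushforward $\pw = \Omega_\#\px$, and then transport the conclusion back to $\gX$ through $\tpyw$.

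First I would observe that, since $\E_{(X,A)\sim P^+,\ \omega=\Omega(X)}[\cdot]=\E_{(\omega,A)}[\cdot]$, the objective $\gR(\Gamma)$ in \Cref{eqn:obj-classification-omega} is \emph{verbatim} the regression objective of \Cref{thm:obj-regression} for the context on $(\omega,A)$ obtained by pushing $X$ forward to $\omega=\Omega(X)$, with $\Gamma$ in the role of the encoder on $\R^{\dw}$. The only nontrivial check is the identity of this reduced context's expectation operator: because $\omega$ is a deterministic function of $X$, $A$ is conditionally independent of $\omega$ given $X$, so for $g\in\lap$,
\[
  \E[g(A)\mid\omega] = \E_X\!\big[\,\E[g(A)\mid X]\mid\omega\big] = \big(\tpywstar\tp g\big)(\omega),
\]
i.e. the reduced expectation operator is $\tpywstar\tp:\lap\to\lwp$, with adjoint $\tpstar\tpyw$. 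Applying \Cref{thm:obj-classification} (one-hot $A$, unbiased predictor) then immediately yields that $\Gamma^*$ minimizes $\gR(\Gamma)$ iff $\Gamma^*$ extracts the top-$d$ eigenspace of $(\tpywstar\tp)\,\Lambda\,(\tpstar\tpyw)=\tpywstar\tp\Lambda\tpstar\tpyw$, with $\Lambda$ the integral operator of $\kp(a,a')=\sI[a=a']$, and with no eigenfunction excluded (the constant function on $\R^{\dw}$ lies in $\lwp$, hence is freely representable).

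For the second claim I would exploit that $\tpyw$ is an isometry: because $\omega=\Omega(X)$ is deterministic, $\tpywstar\tpyw=I_{\lwp}$ (conditioning on $X$ and then on $\Omega(X)=\omega$ recovers $\omega$), so $\tkw=\tpyw\tpywstar$ is the orthogonal projection of $\lxp$ onto the closed subspace $\operatorname{range}(\tpyw)$, and in particular $\tkw^2=\tkw$. Writing $B=\tpyw$ and $C=\tp\Lambda\tpstar$, we have $M:=\tpywstar\tp\Lambda\tpstar\tpyw=B^*CB$ on $\lwp$ and $N:=\tkw\tp\Lambda\tpstar\tkw=BB^*CBB^*=BMB^*$ on $\lxp$. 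Using $B^*B=I$, a direct computation shows $B$ maps the positive-eigenvalue eigenfunctions of $M$ to those of $N$, preserving eigenvalues and norms: if $M\psi=\tau\psi$ with $\tau>0$ then $N(B\psi)=BMB^*B\psi=BM\psi=\tau B\psi$; conversely any $f$ with $Nf=\tau f$, $\tau>0$, lies in $\operatorname{range}(\tkw)=\operatorname{range}(B)$, hence $f=B\psi$ with $\psi=B^*f$ and $M\psi=B^*BM\psi=B^*Nf=\tau\psi$. Thus an ordered orthonormal basis of the top-$d$ eigenspace of $M$ is carried by $B$ to one of the top-$d$ eigenspace of $N$, so $\Phi^*=\tpyw\Gamma^*=[\gamma^*_1\circ\Omega,\dots,\gamma^*_d\circ\Omega]$ spans exactly the top-$d$ eigenspace of $N=\tkw\tp\Lambda\tpstar\tkw$.

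I expect the genuine work to be in the two structural identities $\tpywstar\tp$ (reduced expectation operator) and $\tkw^2=\tkw$, both of which come from $\Omega$ being deterministic; the main obstacle is bookkeeping rather than conceptual — matching the loss kernel and the include-$\mu_0$ convention when invoking \Cref{thm:obj-regression} (confirming the unbiased predictor makes $\kp(a,a')=\sI[a=a']$ correct and that no dimension is discarded), and handling eigenvalue multiplicities, which forces one to argue at the level of spans; the isometry argument above is precisely what makes that step go through.
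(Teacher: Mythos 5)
Your proof is correct, and while it lands on the same structural facts as the paper's argument, the route and the scope differ enough to be worth noting. The paper's proof in \Cref{app:proof-thm-obj-classification-omega} works bottom-up: starting from the defining integrals and using $A\perp\omega\mid X$ and Bayes' rule, it shows directly that $\tpywstar\tp\Lambda\tpstar\tpyw$ is the integral operator of the explicit kernel $k(\omega,\omega')=\iint \kp(a,a')\,\pyw(a\mid\omega)\,\pyw(a'\mid\omega')\,da\,da'$, then invokes the proof of \Cref{thm:obj-classification} with this kernel. You instead recognize the objective as the one already analyzed, applied verbatim to the pushforward context on $(\omega,A)$, and identify the reduced expectation operator as $\tpywstar\tp$ via the tower property and the same conditional independence; this lets you cite \Cref{thm:obj-classification} as a black box rather than re-running its proof. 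The two arguments use the same probabilistic fact in disguise, so the first half is a cleaner packaging rather than a genuinely new idea. What is genuinely new is your treatment of the second sentence of the theorem: the paper's appendix proof does not address why $\Phi^*=\tpyw\Gamma^*$ extracts the top-$d$ eigenspace of $\tkw\tp\Lambda\tpstar\tkw$. Your isometry argument ($\tpywstar\tpyw=I_{\lwp}$ because $\Omega$ is deterministic, hence $\tkw=\tpyw\tpywstar$ is an orthogonal projection, and $B\psi\mapsto B\psi$ sets up an eigenvalue-preserving bijection between the positive spectra of $M=B^*CB$ and $N=BMB^*$) is exactly the missing step, and it is correct; the only caveat is the usual one that this identifies top-$d$ eigenspaces only when the $d$-th eigenvalue of $M$ is strictly positive, which is consistent with the multiplicity convention of \Cref{def:learn-contexture}. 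You are also right to flag the bias-term bookkeeping: \Cref{eqn:obj-classification-omega} as written includes $\vb$, but the loss kernel $\kp(a,a')=\sI[a=a']$ and the ``including $\mu_0$'' clause correspond to the unbiased case (cf.\ \Cref{thm:obj-regression}), and the paper's own proof silently drops $\vb$; this is a discrepancy in the paper, not in your argument.
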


Similar to \Cref{thm:obj-classification-balanced}, one can get rid of $\Lambda$ using the balanced loss.
We can see that this $\Phi^*$ learns the convolution of two contexts, one given by $\Omega$ and the other given by $P^+$.
One important application of supervised learning with a feature map is node representation learning on graphs.
Two contexts are available in this application: the graph, and each node has a node feature.
Popular methods such as graph neural networks (GNNs) \cite{hamilton2020graph} train an encoder $\Gamma(\omega)$, where $\omega$ is the node feature.

\paragraph{Multi-step generative models.}

There are two extremely popular generative models nowadays---large language models and denoising diffusion models \cite{ho2020denoising,song2020denoising}.
Both models can generate input $X$ from a starting point $A$ in a number of steps.
The starting point $A$ may or may not contain information about $X$.
In large language models, $A$ is a prompt, which contains partial information about $X$, and generation is done token by token;
in diffusion models, $A$ is white noise that has zero information about $X$, and generation is done by a number of denoising steps.

Although generative models have achieved remarkable success in many applications, what representations these models are learning is quite unclear, and whether these representations can be applied to tasks other than generation is an active research topic.
For example, recently \cite{xiang2023denoising,chen2025deconstructing} studied whether the representations of diffusion models can be used for recognition tasks, and they found that these models ``have strong recognition power for understanding the visual content''.
The common perception is that generative modeling and representation learning are two completely different paradigms in machine learning, but in fact they can be connected via the contexture theory.

A multi-step generative model, such as a diffusion model or a GPT, generates an input $x$ by $a_1 \rightarrow a_2 \rightarrow \cdots \rightarrow a_r \rightarrow x$, where $a_{j+1}$ contains more information about $x$ than $a_j$.
The starting point $a_1$ may or may not contain information about $x$.
For diffusion models, $a_j$ is $x$ plus Gaussian noise; for language models, $a_{j+1}$ is $a_j$ plus one more token at the end.
The following is a general formulation of multi-step generative modeling, which does not need to assume the specific form of $x$ or $a_j$.

Let $\gA_j$ be the space of $a_j$. Let $P_j^+$ be the joint distribution of $X$ and $A_j$. 
We use SVME to train $(r+1)$ encoders.
Specifically, we train an encoder $\Psi_j : \gA_j \rightarrow \R^d$ for every $j$, along with $\Phi: \gX \rightarrow \R^d$.
Each $\Psi_j$ is trained with the goal of generating $x$ in one shot: Given $a_j \in \gA_j$, we find the $\hat{x}$ such that $\Phi(\hat{X})$ is the closest to $\Psi_j(a_j)$, and this $\hat{x}$ should be close to the original $x$.
The training objective for this goal is given by
\begin{equation*}
    \gR_j(\Phi; \Psi_j) = \E_{(X,A_j) \sim P^+_j} \; \brac{\norm{ \Psi_j(A_j) - \Phi(X) }_2^2} .
\end{equation*}
Let $\vw = [w_1,\cdots,w_L]$ be a weight vector where $w_j \ge 0$. The overall objective is
\begin{equation}
\label{eqn:generative-models}
 \underset{\Phi, \Psi_1,\cdots,\Psi_r}{\textrm{minimize}} \; \;   \gR_{\gen}(\Phi;\Psi_1,\cdots,\Psi_r) = \sum_{j=1}^r w_j \gR_j (\Phi; \Psi_j) \quad \text{s.t.} \quad \Cov_{\px}[\Phi] = \mI .
\end{equation}
Although each $\Psi_j$ is pretrained with the goal of generating $x$ in one shot, actual generation is still performed in multiple steps.
Specifically, given $a_j \in \gA_j$, we find $a_{j+1} \in \gA_{j+1}$ such that $\Psi_{j+1}(a_{j+1})$ is the closest to $\Psi_j(a_j)$.
Such a procedure resembles a denoising diffusion model, which trains a noise approximator $\epsilon_\theta(\cdot, j)$ for $j \in [r]$ by closing the gap between $x$ and $a_j - \epsilon_\theta(a_j, j)$ (this objective aims to denoise $a_j$ in one shot),
while the actual generation is done in multiple steps.

One great thing about SVME is that even though there are lots of encoders, the constraint is only imposed on $\Phi$.
The following result shows that this objective learns the contexture of a convex combination of the $r$ contexts.
It is a corollary of the general result in \Cref{sec:convex-combination}, so the proof will be deferred until then.
\begin{corollary}
    Let $\kx_j$ be the dual kernel of $P_j^+$.
    Let $\Phi^*$ be any optimal solution to \Cref{eqn:generative-models}.
    Then, $\tPhi^*$ extracts the top-$d$ eigenspace of $\sum_j w_j \kx_j$.
\end{corollary}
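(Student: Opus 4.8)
The plan is to profile out the auxiliary encoders $\Psi_1,\dots,\Psi_r$ and then recognize what remains as the KISE problem of \Cref{thm:svme-spin} for the kernel $\sum_{j=1}^r w_j \kx_j$. Since the summands $w_j \gR_j(\Phi;\Psi_j)$ in \Cref{eqn:generative-models} involve disjoint blocks of variables, for a fixed feasible $\Phi$ each $\Psi_j$ can be optimized on its own; coordinatewise, the minimizer of $\E_{(X,A_j)\sim P_j^+}\brac{(\psi(A_j)-\phi_l(X))^2}$ is the conditional expectation $a \mapsto \E[\phi_l(X)\mid A_j = a]$, so the optimal $\Psi_j$ is $T_{P_j^+}^{*}\Phi$ applied componentwise. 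By the orthogonal (bias--variance) decomposition in $L^2$ together with the identity $T_{P_j^+}T_{P_j^+}^{*} = T_{\kx_j}$ (\Cref{def:two-kernels}), the optimal value of the $j$-th block is $\sum_{l=1}^d\paren{\norm{\phi_l}_{\px}^2 - \dotp{\phi_l, T_{\kx_j}\phi_l}_{\px}}$. Writing $w := \sum_{j=1}^r w_j$ and $M := \sum_{j=1}^r w_j T_{\kx_j}$, which is the integral operator of $\sum_j w_j \kx_j$, the objective after eliminating all $\Psi_j$ collapses to $\gR_{\gen} = \sum_{l=1}^d\dotp{\phi_l, (wI - M)\phi_l}_{\px}$.

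The next step handles the centering. Each $T_{\kx_j}$ fixes the constant function, $T_{\kx_j}\mu_0 = \mu_0$ (immediate from $\kx_j(x,x') = P_j^+(x,x')/(\px(x)\px(x'))$, or from \Cref{cor:spec-decompose}), so $wI - M$ is self-adjoint and annihilates $\mu_0 \equiv 1$. Substituting $\phi_l = \tphi_l + \bar\phi_l$ then makes all cross terms vanish, and imposing the constraint $\Cov_{\px}[\Phi] = \mI$ (hence $\norm{\tphi_l}_{\px} = 1$) reduces the profiled objective to $\gR_{\gen} = dw - \sum_{l=1}^d \dotp{\tphi_l, M\tphi_l}_{\px}$. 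Consequently, minimizing $\gR_{\gen}$ over all feasible $(\Phi, \Psi_1,\dots,\Psi_r)$ is equivalent to maximizing $\sum_{l=1}^d \dotp{\tphi_l, M\tphi_l}_{\px}$ over $d$-tuples $(\tphi_1,\dots,\tphi_d)$ that are orthonormal in $\lxp$ and orthogonal to $\mu_0$; up to an additive constant this is exactly the KISE objective \Cref{eqn:spin} with the single dual kernel $\tkx$ replaced by $\sum_j w_j \kx_j$, so \Cref{thm:svme-spin} applies and shows that $\tPhi^*$ extracts the top-$d$ eigenspace of $\sum_j w_j \kx_j$.

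For completeness, the final step without invoking \Cref{thm:svme-spin} is Ky Fan's maximum principle. Under the standing Hilbert--Schmidt hypothesis each $T_{\kx_j}$ is compact, self-adjoint and \psd{} with operator norm at most $1$, so $M$ is compact, self-adjoint and \psd{} with top eigenvalue $w$, attained at $\mu_0$; ordering its eigenbasis as $\mu_0, \mu_1, \mu_2, \dots$, the functions $\mu_1,\dots,\mu_d$ are by definition the top-$d$ eigenfunctions of $\sum_j w_j \kx_j$. Because the competing $\tphi_l$ are confined to the orthogonal complement of $\mu_0$, the maximum of $\sum_l \dotp{\tphi_l, M\tphi_l}_{\px}$ equals $\sum_{i=1}^d \lambda_i(M)$ and is attained precisely when $\sspan\oset{\tphi_1,\dots,\tphi_d} = \sspan\oset{\mu_1,\dots,\mu_d}$ (any valid choice of top-$d$ eigenfunctions when the $d$-th largest eigenvalue of $M$ is degenerate), i.e.\ when $\tPhi^*$ extracts the top-$d$ eigenspace of $\sum_j w_j \kx_j$, as claimed. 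I expect the main bookkeeping hurdle to be the centering step: one has to verify carefully that the dependence on $\bar\Phi$ cancels so the conclusion can be stated purely for $\tPhi^*$, which is to be expected because the profiled objective no longer sees $\bar\Phi$ and $\Phi^*$ is determined only up to an additive constant.
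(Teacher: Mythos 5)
Your proof is correct and follows essentially the same route as the paper's proof of \Cref{thm:kernel-convex-combination} (to which the corollary is deferred): profile out each $\Psi_j$ via conditional expectation, collapse the objective to $\mathrm{const} - \sum_{l}\dotp{\tphi_l, \sum_j w_j T_{\kx_j}\tphi_l}_{\px}$, and conclude by a Ky Fan / Rayleigh-quotient argument as in \Cref{thm:svme-spin}. The only differences are cosmetic: you spell out the centering cancellation and the constant $w = \sum_j w_j$ explicitly and include a self-contained Ky Fan argument, whereas the paper simply cites \Cref{thm:svme-spin}.
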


It should be clarified that both diffusion models and large language models use a single-model architecture, commonly known as a decoder-only architecture, but can also be called an encoder-only architecture if the model is used as a representation.
The above analysis does not apply to encoder-decoder architectures such as VAE \cite{VAE} and GAN \cite{goodfellow2014generative}, because the representations of these encoders are meaningless without the corresponding decoders.
The output of VAE and GAN is a Gaussian random vector, whose association with $X$ is indecipherable without the decoder.
Therefore, VAE and GAN cannot be analyzed in the same way as above.

\paragraph{Alignment in RLHF.}
Reinforcement learning with human feedback (RLHF) is a common technique in fine-tuning LLMs.
Let $\Phiref$ be a reference model, such as a model trained by supervised fine-tuning (SFT).
Let $A_1, A_2$ be two random variables on $\gA$ with possibly different distributions, both of which are associated with $X$.
For instance, $X$ can be an English sentence, and $A_1,A_2$ can be two Chinese translations of the same sentence.
There is also a critic function $C: \gX \rightarrow \oset{1, 2}$, such that for any input $x$, if $C(x) = 1$, then $A_1$ is preferred; otherwise, $A_2$ is preferred.
The goal of RLHF is to fine-tune the model to learn such preference (which is also known as alignment), while still keeping the model close to $\Phiref$.
In practice, it is observed that if $\Phi$ is too far away from $\Phiref$, then its performance will be very poor, a phenomenon known as over-optimization \citep{ouyang2022training,gao2023scaling}.
Thus, we need to make sure that $\Phi$ does not go too far away from $\Phiref$.

Let $\gR_{\textrm{align}}(\Phi, \Psi) = \sum_{i = 1,2} \; \underset{X \sim \px}{\Pr} \brac{C(X) = i} \underset{X \sim \px}{\E} \sbrac{\norm{\Phi(X) - \Psi(A_i)}_2^2}{C(X) = i}$ be the alignment loss based on SVME.
To implement this loss, for each sample $(x, a_1, a_2, c)$ where $c$ is the critic output, one simply updates the model to minimize $\norm{\Phi(x) - \Psi(a_1)}_2^2$ if $c = 1$, and $\norm{\Phi(x) - \Psi(a_2)}_2^2$ if $c = 2$.
Then, define a loss that reflects the gap between $\Phi$ and $\Phiref$ as $\gR_{\textrm{ref}}(\Phi) = \underset{\mW, \vb}{\min} \; \E \brac{ \norm{\mW \Phi(X) + \vb - \Phiref(X)}_2^2 }$.
This loss function is invariant under invertible linear transformations on $\Phi$, because such transformations have no impact on the downstream performance when $\Phi$ is used with a linear probe.
The overall objective is given by
\begin{equation*}
     \underset{\Phi, \Psi}{\textrm{minimize}} \quad \gR_{\textrm{align}}(\Phi, \Psi) + \beta \gR_{\textrm{ref}}(\Phi)  \quad \text{s.t.} \qquad \Cov_{\px}[\Phi] = \mI 
\end{equation*}
for some $\beta > 0$.
Similar to generative models, this objective also learns the contexture of a convex combination of two contexts.
The first context is given by a the random variable $A$ on $\gA$, such that $P^+(x,a) = \Pr[C(x) = 1] P_{A_1}^+(x,a) + \Pr[C(x)=2] P_{A_2}^+(x,a)$.
The second context is provided by the teacher model $\Phiref$, similar to \Cref{eqn:distillation}.

\section{Extracting Exact Eigenfunctions and Eigenvalues}
\label{sec:extract-exact}
So far we have seen that a variety of variational objectives can extract the top-$d$ eigenspace of $\tkx$, meaning that they can recover the linear space spanned by $\mu_1,\cdots,\mu_d$.
One might ask if it is possible to extract the exact top-$d$ eigenfunctions using a variational objective instead of kernel PCA.
This is indeed possible.
\cite{pmlr-v162-deng22b} proposed \textbf{neural eigenfunctions} to extract the exact top-$d$ eigenfunctions of $\tkx$. It solves the following problem:
\begin{equation}
\label{eqn:nef}
\begin{aligned}
 \underset{\Phi: \gX \rightarrow \R^d}{\textrm{minimize}} \qquad & \E \brac{ \norm{\tPhi(X)}_2^2 - \dotp{\tPhi(X), \tkx \tPhi(X)}  }  \\ 
 \text{ s.t.} \qquad  & \norm{\tphi_i}_\px^2 = 1, \quad \forall i \neq j: \;  \dotp{\tphi_i, \tkx \tphi_j}_\px = 0.
\end{aligned}
\end{equation}
Its difference from KISE is that it changes the constraint $\dotp{\tphi_i, \tphi_j}_\px = 0$, which enforces the different dimensions to be orthogonal, to $\dotp{\tphi_i, \tkx \tphi_j}_\px = 0$. Interestingly, this small change allows the extraction of the exact eigenfunctions.
It should be emphasized that the $d$ eigenfunctions are obtained simultaneously, not sequentially.
\begin{theorem}[Proof in \Cref{app:proof-thm-nef}]
\label{thm:nef}
    Let the optimal $\Phi^*$ of \Cref{eqn:nef} be $[\phi_1^*,\cdots,\phi_d^*]$, then we can choose the eigenfunctions $\mu_1,\mu_2,\cdots$ of $\tkx$ that have non-increasing eigenvalues and form an ONB of $\lxp$, such that $\tphi_1^*,\cdots,\tphi_d^*$ is a permutation of $\mu_1,\cdots,\mu_d$.
\end{theorem}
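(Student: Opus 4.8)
The plan is to diagonalize everything in the eigenbasis of $\tkx$ and then run a short two-step inequality — a per-coordinate Cauchy--Schwarz estimate followed by the Ky Fan maximum principle — extracting both the optimal value and, crucially, the equality conditions. First I would reduce to coordinates. Since each $\tphi_i$ is centered it lies in $\sspan\oset{\mu_1,\mu_2,\dots}$, the orthogonal complement of $\mu_0\equiv 1$ in $\lxp$, so write $\tphi_i=\sum_{l\ge 1}c_{il}\mu_l$ with $c_i=(c_{il})_{l\ge 1}\in\ell^2$. Because $\tkx\mu_l=\kappa_l\mu_l$ with $\kappa_1\ge\kappa_2\ge\cdots\ge 0$, the constraint $\norm{\tphi_i}_\px^2=1$ becomes $\norm{c_i}_{\ell^2}=1$, the constraint $\dotp{\tphi_i,\tkx\tphi_j}_\px=0$ becomes $c_i^\top K c_j=0$ for $i\neq j$ where $K=\diag(\kappa_1,\kappa_2,\dots)$, and since the first constraint forces $\E\norm{\tPhi(X)}_2^2=d$, minimizing the objective of \Cref{eqn:nef} is equivalent to maximizing $\sum_{i=1}^d c_i^\top K c_i$.

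For the upper bound, put $v_i=K^{1/2}c_i$; the constraint $c_i^\top K c_j=0$ says the $v_i$ are orthogonal, so write $v_i=\sqrt{\delta_i}\,r_i$ with $\delta_i=\norm{v_i}^2=c_i^\top K c_i$ and $\oset{r_i}$ orthonormal. Working in $\overline{\mathrm{range}}(\tkx)$ (mass in $\ker\tkx$ only wastes unit norm and is discarded at the optimum), the constraint $\norm{c_i}=1$ reads $\delta_i\dotp{r_i,K^{-1}r_i}=1$, hence $\delta_i=1/\dotp{r_i,K^{-1}r_i}$, and Cauchy--Schwarz $\dotp{r_i,Kr_i}\dotp{r_i,K^{-1}r_i}\ge\dotp{r_i,r_i}^2=1$ gives $\delta_i\le\dotp{r_i,Kr_i}$. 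Summing and applying Ky Fan to the orthonormal system $\oset{r_1,\dots,r_d}$,
\[
\sum_{i=1}^{d}c_i^\top K c_i=\sum_{i=1}^{d}\delta_i\le\sum_{i=1}^{d}\dotp{r_i,Kr_i}\le\sum_{l=1}^{d}\kappa_l,
\]
and the choice $\tphi_i=\mu_i$ attains this (it is feasible and gives objective value $d-\sum_{l\le d}\kappa_l$), so it is the optimum.

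The characterization then comes from tightness. For an optimal $\Phi^*$ both inequalities above are equalities. Equality in Cauchy--Schwarz forces each $r_i$, hence each $c_i$, to be supported on a \emph{single} eigenvalue of $K$ — that is, each $\tphi_i^*$ is itself an eigenfunction of $\tkx$; equality in Ky Fan forces $\sspan\oset{r_1,\dots,r_d}$ to be a top-$d$ eigenspace of $\tkx$. Combining the two, and re-choosing the orthonormal eigenbasis inside any repeated eigenvalue block, $\tphi_1^*,\dots,\tphi_d^*$ is a permutation of $\mu_1,\dots,\mu_d$; if $d$ exceeds the number of positive eigenvalues, the surplus $\tphi_i^*$ land in $\ker\tkx$ and are matched with the corresponding zero-eigenvalue $\mu_i$'s.

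The main obstacle is the equality analysis, not the inequality chain, and this is exactly where the precise form of \Cref{eqn:nef} is used. The Cauchy--Schwarz equality condition ``$c_i$ concentrated on one eigenvalue'' only bites because the constraint is $\dotp{\tphi_i,\tkx\tphi_j}_\px=0$ rather than the plain orthogonality $\dotp{\tphi_i,\tphi_j}_\px=0$; with plain orthogonality only the Ky Fan step survives and one recovers merely the span — which is precisely the weaker guarantee of KISE in \Cref{eqn:spin}. The remaining care is routine bookkeeping: spectral multiplicities (so ``permutation'' must allow re-choosing the eigenbasis within each eigenspace), the degenerate case of zero eigenvalues, and the easy observation that an optimizer carries no mass in $\ker\tkx$ beyond what the constraints force, which is what legitimizes the $K^{1/2}/K^{-1/2}$ manipulation.
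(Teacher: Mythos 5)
Your coordinate reduction and change of variable are the same as the paper's: expand $\tphi_i^* = \sum_l c_{il}\mu_l$, absorb $K^{1/2}$ into the coordinates, and maximize $\sum_i c_i^\top K c_i$ under the two constraints. What differs is how you organize the extremal inequality and its equality analysis, and here your route is genuinely cleaner. The paper proves a standalone linear-algebra lemma (for pairwise-orthogonal $\vu_i$ with $\vu_i^\top \mA \vu_i = 1$, $\sum_i\norm{\vu_i}_2^2 \le \sum_i \lambda_i^{-1}$) via von Neumann's trace inequality combined with a somewhat indirect Cauchy--Schwarz relating $\sum d_i$ to $\sum \lambda_i d_i^2$; it then needs an additional step (only sketched, and requiring the distinct-eigenvalue assumption) to pass from ``$\norm{\vu_i}_2^2=\lambda_i$'' to ``each $\tphi_i^*$ is exactly $\pm\mu_i$.'' Your two-step argument --- per-$i$ Cauchy--Schwarz $\delta_i \le \dotp{r_i, K r_i}$ followed by Ky Fan on the orthonormal system $\oset{r_i}$ --- arrives at the same bound, but reads the two halves of the conclusion directly off the two equality conditions: CS tightness concentrates each $c_i$ on a single eigenspace (so each $\tphi_i^*$ is an eigenfunction), and Ky Fan tightness pins the span to a top-$d$ eigenspace. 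This also isolates precisely why the modified constraint $\dotp{\tphi_i,\tkx\tphi_j}_\px=0$ (rather than $\dotp{\tphi_i,\tphi_j}_\px=0$) yields exact eigenfunctions: with plain orthogonality only the Ky Fan step remains, which is exactly the KISE/span guarantee. Both proofs hand-wave the degenerate cases; the paper explicitly restricts to at least $d$ positive eigenvalues with strict inequalities, while your treatment is informal. One thing to tighten: your remark that surplus $\tphi_i^*$ ``land in $\ker\tkx$ and are matched with zero-eigenvalue $\mu_i$'s'' is a bit too quick, because the constraints of \Cref{eqn:nef} do not force two columns lying in $\ker\tkx$ to be mutually orthogonal, so in that regime the conclusion can actually fail --- the paper's opening ``WLOG at least $d+1$ positive eigenvalues'' is there to exclude exactly this.
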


With $T$-access, we can use multi-view learning to extract the exact eigenfunctions.
\begin{equation*}
 \underset{\Psi: \gA \rightarrow \R^d}{\textrm{minimize}} \; \E \brac{ \norm{\tPsi(A) - \tPsi(A^+)}_2^2   } \quad \text{ s.t. } \; \norm{\tpsi_i}_\pa^2 = 1, \; \;  \forall i \neq j: \; \E[\tpsi_i(A) \tpsi_j(A^+)] = 0 ,
\end{equation*}
where $A,A^+$ are positive samples drawn from $P^+(\cdot|X)$ of the same $X$.
We can prove that $\tPsi$ extracts the exact top-$d$ eigenfunctions of $\tka$ in the same way as \Cref{thm:nef}. And by \Cref{lem:duality}, the average encoder $\tPhi = \tp \tPsi$ also extracts the exact top-$d$ eigenfunctions of $\tkx$ multiplied by some constants.

\paragraph{Post-hoc approach.}
Suppose we have a pretrained $\Phi$ that learns the contexture, can we obtain the exact eigenvalues and eigenfunctions of $\tkx$?
The answer is yes.
Since $\Phi$ does not necessarily contain the information of the exact eigenvalues and eigenfunctions, we still need to use the context, via either kernel access or pair access.

First, suppose we have kernel access to the context.
Since $\tPhi$ spans the same space as $\mu_1,\cdots,\mu_d$, it suffices to learn a matrix $\mQ \in \R^{d \times d}$ such that $\tPhi \mQ = [\alpha_1 \mu_1,\cdots,\alpha_d \mu_d]$ for some $\alpha_1,\cdots,\alpha_d \neq 0$.
Then, these $\alpha_i$ can be eliminated by normalizing $\tPhi \mQ$ to have unit variance in each dimension.
The matrix $\mQ$ and the eigenvalues can be estimated as follows:
\begin{enumerate}
    \item Estimate the covariance matrix $\mC_\Phi \in \R^{d \times d} = \Cov_{\px}[\Phi]$ with Monte Carlo.
    \item Estimate $\mB_{\Phi} \in \R^{d \times d}$, where $\mB_{\Phi}[i,j] = \dotp{\tphi_i, \tkx \tphi_j}_{\px}$, with Monte Carlo.
    \item Solve the generalized eigenvalue problem $\mB_\Phi \vv = \lambda \mC_\Phi \vv$. Let the eigenvalues be $\lambda_1 \ge \cdots \ge \lambda_d \ge 0$, and the orthonormal eigenvectors be $\vv_1,\cdots,\vv_d$. Then, $\mQ = [\vv_1,\cdots,\vv_d]$, and $\lambda_i$ is an estimation of the $i$-th eigenvalue of $\tkx$, which is $s_i^2$. Since $d$ is not very large, this eigen-decomposition is efficient.
\end{enumerate}

Let us elaborate on why this method works.
For simplicity, assume that the top-$d$ eigenvalues of $\tkx$ are distinct; without this assumption, the result can still be proved with a more verbose proof.
Let $U = [\mu_1,\cdots,\mu_d]$, and suppose $\tPhi = U \mR$ for some invertible $\mR \in \R^{d \times d}$.
Since $\Cov[U] = \mI$ and $\dotp{U, \tkx U}_{\px} = \diag \oset{s_1^2,\cdots, s_d^2}$, we have $\mC_\Phi = \mR^{\top} \mR$ and $\mB_\Phi = \mR^{\top} \diag \oset{s_1^2,\cdots, s_d^2} \mR$.
Thus, the generalized eigenvalues are equal to $s_1^2,\cdots,s_d^2$, and $\mR \vv_i = \alpha_i \ve_i$ for some $\alpha_i \neq 0$, where $\ve_i = [0,\cdots,0,1,0,\cdots,0]$.
This implies that $\mR \mQ = \diag \oset{\alpha_1,\cdots,\alpha_d}$.
Hence, $\tPhi \mQ = [\alpha_1 \mu_1,\cdots,\alpha_d \mu_d]$.

Second, if we have pair access instead of $k$-access, then the eigenvalues and $\mQ$ can be obtained as follows:
Let $\Phi$ and $\Psi$ be trained via SVME.
We can estimate $\mC_\Phi = \Cov_{\px}[\Phi]$ and $\mB_{\Psi} = \Cov_{\pa}[\Psi]$ via Monte Carlo.
Then similarly, we solve the generalized eigenvalue problem $\mB_\Phi \vv = \lambda \mC_\Phi \vv$ to obtain the eigenvalues and $\mQ$.

Finally, if our goal is only to estimate the eigenvalues but not the eigenfunctions, then $\Phi$ need not to be trained with the entire dataset.
In fact, \cite{shawe2005eigenspectrum} showed that for any fixed $d$, the sum $s_1^2+\cdots+s_d^2$ can be estimated with low error using $\Theta(d)$ \iid{} samples.
By union bound, all $s_1^2,\cdots,s_d^2$ can be estimated with low error using $\Theta (d \log d)$ \iid{} samples.

\paragraph{Estimating eigenvalues with the post-hoc approach.}
Let us demonstrate the post-hoc method on 3 real datasets from OpenML \cite{OpenML2013}: \texttt{abalone}, \texttt{fifa}, and \texttt{kings\_county}.
We only extract the eigenvalues here, and later in \Cref{sec:scaling-laws} we will investigate the eigenfunctions.
We use KNN with $K = 60$ as context, where $\gA = \gX$, and $P^+(x'|x) = K^{-1}$ if $x'$ is a $K$-nearest neighbor of $x$ and $0$ otherwise.
For this context, we can exactly compute $\kx$, and thus we can obtain the exact eigenvalues (ground truth) using kernel PCA.
Meanwhile, we pretrain $\Phi$ with one of the variational objectives using a random subset of $m$ samples, and estimate the eigenvalues using the post-hoc approach.
Then, we compare the estimation with the ground truth.

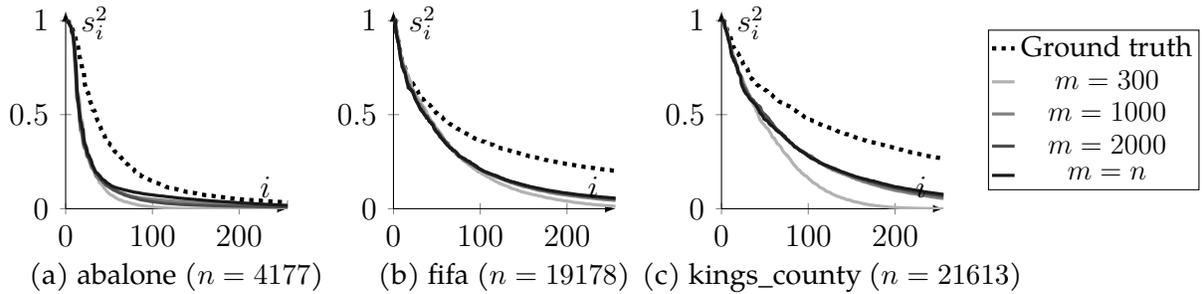
\begin{figure}[t]
    \centering
    \begin{tikzpicture}

\begin{axis}[
name=plot1,
 height = .26\linewidth,
    width = .28\linewidth,
    axis lines=left,  
    axis line style={-latex},  
    xlabel={$i$},
    ylabel={$s_i^2$},
    ylabel style={
        at={(0.13,0.8)}, 
        anchor=south,  
        rotate=270  
    },
    xlabel style = {
        at={(0.9,0)},
        anchor=south,
    },
title style={at={(0.5,-0.3)}, anchor=north,font=\small},
title={(a) abalone ($n=4177$)},
    xmin=0, xmax=256,
    ymin=0, ymax=1.05,
    xtick distance = 100,
    ytick distance = 0.5,
]
    \addplot[ultra thick,black,dotted] table [
        x=id,  
        y=exact,            
        col sep=comma       
    ] {data/spectrum_abalone.csv};
    \addplot[very thick,black!30] table [
        x=id,  
        y=m300,            
        col sep=comma       
    ] {data/spectrum_abalone.csv};
    \addplot[very thick,black!50] table [
        x=id,  
        y=m1000,            
        col sep=comma       
    ] {data/spectrum_abalone.csv};
    \addplot[very thick,black!70] table [
        x=id,  
        y=m2000,            
        col sep=comma       
    ] {data/spectrum_abalone.csv};
    \addplot[very thick,black!90] table [
        x=id,  
        y=mNone,            
        col sep=comma       
    ] {data/spectrum_abalone.csv};
\end{axis}

\begin{axis}[
name=plot2,
at={($(plot1.south east)+(40,0)$)},
 height = .26\linewidth,
    width = .28\linewidth,
    axis lines=left,  
    axis line style={-latex},  
    xlabel={$i$},
    ylabel={$s_i^2$},
    ylabel style={
        at={(0.13,0.8)}, 
        anchor=south,  
        rotate=270  
    },
    xlabel style = {
        at={(0.9,0)},
        anchor=south,
    },
title style={at={(0.5,-0.3)}, anchor=north,font=\small},
title={(b) fifa ($n=19178$)},
    xmin=0, xmax=256,
    ymin=0, ymax=1.05,
    xtick distance = 100,
    ytick distance = 0.5,
legend style = {nodes={scale=0.9, transform shape},at={(1.2,0.5)},anchor=west,inner sep=.5pt},
]
    \addplot[ultra thick,black,dotted] table [
        x=id,  
        y=exact,            
        col sep=comma       
    ] {data/spectrum_fifa.csv};
    \addplot[very thick,black!30] table [
        x=id,  
        y=m300,            
        col sep=comma       
    ] {data/spectrum_fifa.csv};
    \addplot[very thick,black!50] table [
        x=id,  
        y=m1000,            
        col sep=comma       
    ] {data/spectrum_fifa.csv};
    \addplot[very thick,black!70] table [
        x=id,  
        y=m2000,            
        col sep=comma       
    ] {data/spectrum_fifa.csv};
    \addplot[very thick,black!90] table [
        x=id,  
        y=mNone,            
        col sep=comma       
    ] {data/spectrum_fifa.csv};
\end{axis}

\begin{axis}[
name=plot3,
at={($(plot2.south east)+(40,0)$)},
 height = .26\linewidth,
    width = .28\linewidth,
    axis lines=left,  
    axis line style={-latex},  
    xlabel={$i$},
    ylabel={$s_i^2$},
    ylabel style={
        at={(0.13,0.8)}, 
        anchor=south,  
        rotate=270  
    },
    xlabel style = {
        at={(0.9,0)},
        anchor=south,
    },
title style={at={(0.5,-0.3)}, anchor=north,font=\small},
title={(c) kings\_county ($n=21613$)},
    xmin=0, xmax=256,
    ymin=0, ymax=1.05,
    xtick distance = 100,
    ytick distance = 0.5,
legend style = {nodes={scale=0.9, transform shape},at={(1.2,0.5)},anchor=west,inner sep=.5pt},
]
    \addplot[ultra thick,black,dotted] table [
        x=id,  
        y=exact,            
        col sep=comma       
    ] {data/spectrum_kings_county.csv};
    \addplot[very thick,black!30] table [
        x=id,  
        y=m300,            
        col sep=comma       
    ] {data/spectrum_kings_county.csv};
    \addplot[very thick,black!50] table [
        x=id,  
        y=m1000,            
        col sep=comma       
    ] {data/spectrum_kings_county.csv};
    \addplot[very thick,black!70] table [
        x=id,  
        y=m2000,            
        col sep=comma       
    ] {data/spectrum_kings_county.csv};
    \addplot[very thick,black!90] table [
        x=id,  
        y=mNone,            
        col sep=comma       
    ] {data/spectrum_kings_county.csv};
    \addlegendentry{Ground truth}
    \addlegendentry{$m=300$}
    \addlegendentry{$m=1000$}
    \addlegendentry{$m=2000$}
    \addlegendentry{$m=n$}
\end{axis}
\end{tikzpicture}
    \caption{Estimating the eigenvalues using the post-hoc approach with $m$ samples.}
    \label{fig:spectrum-estimation}
\end{figure}

We use a 2-layer wide Tanh-activated neural network with embedding dimension $d=512$ and hidden dimension 20,000 as $\Phi$.
We train the model through non-contrastive learning \Cref{eqn:obj-non-contrastive}, with the orthonormality constraint implemented by VICReg, and AdamW \cite{DBLP:journals/corr/KingmaB14,loshchilov2018decoupled} as the optimizer.
We vary $m$ and compare the estimated top-$d_0$ eigenvalues with the ground truth, where $d_0 = 256$.
The estimated eigenvalues and the ground truth are plotted in \Cref{fig:spectrum-estimation}.
From the plots, we observe that the eigenvalues estimated by our estimation method decay faster than the ground truth, even if the full dataset is used.
We hypothesize that the main reason is that even though we use a very wide neural network, its function class is still a subset of $\lxp$.
Consequently, the inductive bias of the model architecture has an impact on the encoder, and therefore the learned contexture can be viewed as a mixture of the inductive bias and the original KNN context.
This mixture causes the eigenvalues to decay faster, which explains the observation in \Cref{fig:spectrum-estimation}.
Another reason is related to optimization. Since the model is non-convex, gradient methods cannot find the minima of the objective.

The average estimation error of the top-$256$ eigenvalues is reported in \Cref{tab:spectrum-est}.
The error is defined as $\frac{1}{d_0} \sum_{i=1}^{d_0} |\hat{s}_i^2 - s_i^2|$, where $\hat{s}_i^2$ is the estimated eigenvalue.
The table shows that when $m \in [600,1000] \approx [0.5 d_0 \log d_0, 0.7 d_0 \log d_0]$, the performance is comparable to using the full dataset, which verifies the theoretical result of \cite{shawe2005eigenspectrum}.
The estimation error is not zero even if the full dataset is used due to the aforementioned reasons.
In summary, the post-hoc method can estimate the eigenvalues using a small subset of samples, but the estimated eigenvalues decay faster than the ground truth.

\begin{table}[t]
    \centering
    \begin{tabular}{l|llllll}
    \toprule
    \textbf{Dataset} & $m=100$ & $m=300$ & $m=600$ & $m=1000$ & $m=2000$ & Full dataset \\ 
    \midrule 
    abalone & 0.157 & 0.124 & 0.088 & 0.104 & 0.110 & 0.088 \\ 
    fifa & 0.218 & 0.151 & 0.137 & 0.134 & 0.133 & 0.131 \\ 
    kings\_county & 0.278 & 0.264 & 0.190 & 0.183 & 0.177 & 0.177 \\ 
    \bottomrule
    \end{tabular}
    \caption{Average estimation error of the top-$256$ eigenvalues.}
    \label{tab:spectrum-est}
\end{table}

\section{Implications on the Scaling Law}
\label{sec:scaling-laws}

It has been widely observed that the performance of deep neural networks on many real tasks increases with the model size, which is known as the \textbf{scaling law} \cite{kaplan2020scaling}.
Furthermore, it has been observed that models of different architectures, such as ResNets \cite{He_2016_CVPR} and ViTs \cite{dosovitskiy2021an}, learn highly aligned representations under the metrics in \cite{pmlr-v97-kornblith19a} when the models are sufficiently large, even if they are trained with different objectives.

To explain this phenomenon, \cite{pmlr-v235-huh24a} proposed the \textbf{platonic representation hypothesis}, which states that ``neural networks, trained with different objectives on different data and modalities, are converging to a shared statistical model of reality in the representation spaces''.
The assumptions of this hypothesis are that the neural networks are large enough, and there is a sufficient amount of data.

The contexture theory provides a new perspective on the role of scaling.
It implies that the so-called ``reality'' is in fact the top eigenfunctions of $\tkx$.
When using a neural network as $\Phi$, the function class of $\Phi$ is a subset of $\lxp$;
and when scaling up the model size, this subset gets closer to the entire space $\lxp$, and thus the learned representation becomes closer to the top-$d$ eigenfunctions of $\tkx$, which are independent of the model architecture (so this is not a special property of neural networks).
This also explains why recently it has been observed that increasing the model size is producing a diminishing return.
When the model is large enough so that the learned representation is highly aligned with the top-$d$ eigenfunctions, then further increasing the model size will be less useful.
\cite{pmlr-v235-huh24a} also observed that models trained in different modalities all align with the same shared representation.
This suggests that commonly used contexts in different modalities have similar top eigenfunctions.
Note that this cannot be true for all contexts.
For a very weird context such as randomly shuffling the pixels of an image, its top eigenfunctions are surely not aligned with this shared representation.

In this section, we use an experiment to show that the representation learned by a neural network is indeed aligned with the top-$d$ eigenfunctions,
which provides empirical evidence to support the above arguments.

\paragraph{Experiment overview.}
The purpose of this experiment is to examine whether a large neural network can learn the contexture well, and whether scaling up the model size makes the learned representation more aligned to the top-$d$ eigenfunctions.
We compare between two encoders.
The first encoder is obtained via kernel PCA on the dual kernel, so it consists of the exact top-$d$ eigenfunctions.
The second encoder is obtained via training a large neural network to optimize an objective that can learn the contexture.
Then, we compute the representational alignment of these two encoders.
The most classical metric is the canonical-correlation analysis (CCA) metric $R_{\textrm{CCA}}^2$, which is invariant under invertible linear transformations to the encoders.
\cite{pmlr-v97-kornblith19a} proposed a variant called linear CKA, which is only invariant under orthogonal transformations.
In our setting, since we only care about the span of $\phi_1,\cdots,\phi_d$, we would like the metric to be invariant under all invertible transformations, which is why we use CCA.
In addition, we also use the mutual KNN metric with 10 neighbors proposed by \cite{pmlr-v235-huh24a}, which measures the intersection over union (IoU) of nearest neighbors between the two representations.
This metric is not invariant under invertible linear transformations,
so we whiten the two representations such that their covariance matrices are both identities.

\paragraph{Setup.}
We use the \texttt{abalone} dataset from OpenML, and split the dataset into a pretrain set, a downstream train set and a downstream test set by 70\%-15\%-15\%.
Like what we used earlier, we choose K-nearest neighbors (KNN) with $K = 30$ to be the context.
The embedding dimension is set to be $d = 128$.
For the second encoder, we train a fully-connected neural network with Tanh activation and skip connections for a sufficient number of steps with full-batch AdamW, and vary the depth and width of the network so that we can study their effect on the alignment.
Here, ``depth'' refers to the number of hidden layers---for example, a 2-layer neural network has depth 1.
For each width and depth, we run the experiments 15 times with different random initializations, and report the average alignment.

In our experiments, we observe the \textbf{dimension collapse} problem \cite{jing2022understanding}---if we set the output dimension of the neural network to be $d$, then the rank of the learned representation will usually be less than $d$, meaning that it can only extract the top-$d'$ eigenspace for some $d' < d$.
\cite{jing2022understanding} proved that this problem can be caused by the training dynamics of self-supervised learning, that is a large neural network trained with a gradient method cannot find the minima, but will find a low-rank solution instead.

To fix this issue, we set the output dimension of the neural network to be $d_1 = 512 > d$.
After we obtain the $d_1$-dimensional encoder, similar to \Cref{sec:extract-exact} we estimate the matrices $\mC_\Phi$ and $\mB_\Phi$, and solve the generalized eigenvalue problem $\mB_\Phi \vv = \lambda \mC_\Phi \vv$.
Let $\mV = [\vv_1,\cdots,\vv_d] \in \R^{d_1 \times d}$ be the top-$d$ eigenvectors;
then, we use $\tPhi \mV$ as the $d$-dimensional representation.
In other words, we use the 128 principal components of the 512-dimensional embedding.

\begin{figure}[t]
    \centering
    \begin{tikzpicture}
\begin{axis}[
name=plot1,
 height = .35\linewidth,
    width = .7\linewidth,
    xlabel={Width},
    ylabel={Alignment},
    xlabel style = {
        at={(1.08,-0.05)},
        anchor=south,
    },
        legend style={
        at={(1.1,0.8)},
        anchor=north west,
        cells={anchor=west},
    },
title style={at={(0.5,-0.3)}, anchor=north},
    xmin=0, xmax=12,
    ymin=0.6, ymax=0.92,
    xtick={0,1,2,3,4,5,6,7,8,9,10,11,12},
    xticklabels={64,128,256,512,1024,2048,4000,8000,12000,16000,20000,25000,30000},
    x tick label style={rotate=45,font=\small},
    ytick distance = 0.1,
]
    \addplot[very thick,black] table [
        x expr=\coordindex, 
        y=CCA,            
        col sep=comma       
    ] {data/topd.csv};
    \addplot[very thick,black!50] table [
        x expr=\coordindex, 
        y=CCA,            
        col sep=comma       
    ] {data/topd2.csv};
    \addplot[very thick,black!20] table [
        x expr=\coordindex, 
        y=CCA,            
        col sep=comma       
    ] {data/topd3.csv};
    \addplot[very thick,black,dashed] table [
        x expr=\coordindex, 
        y=MKNN,            
        col sep=comma       
    ] {data/topd.csv};
    \addplot[very thick,black!50,dashed] table [
        x expr=\coordindex, 
        y=MKNN,            
        col sep=comma       
    ] {data/topd2.csv};
    \addplot[very thick,black!20,dashed] table [
        x expr=\coordindex, 
        y=MKNN,            
        col sep=comma       
    ] {data/topd3.csv};
    \legend{1,2,3};
\end{axis}
\node[] at (11.2,3.7) {Depth};

\begin{axis}[
name=plot2,
at={($(plot1.south west)+(0,-160)$)},
 height = .35\linewidth,
    width = .7\linewidth,
    xlabel={Depth},
    ylabel={Alignment},
    xlabel style = {
        at={(1.08,-0.05)},
        anchor=south,
    },
    x tick label style={rotate=45,font=\small},
title style={at={(0.5,-0.3)}, anchor=north},
    xmin=0, xmax=9,
    ymin=0.55, ymax=0.9,
    xtick={0,1,2,3,4,5,6,7,8,9},
    xticklabels={1,2,4,8,16,32,64,100,150,200},
    ytick distance = 0.1,
        legend style={
        at={(1.1,0.8)},
        anchor=north west,
        cells={anchor=west},
    },
]
    \addplot[very thick,black] table [
        x expr=\coordindex,  
        y=CCA,            
        col sep=comma       
    ] {data/topd-512.csv};
    \addplot[very thick,black!50] table [
        x expr=\coordindex,  
        y=CCA,            
        col sep=comma       
    ] {data/topd-1024.csv};
    \addplot[very thick,black!20] table [
        x expr=\coordindex,  
        y=CCA,            
        col sep=comma       
    ] {data/topd-2048.csv};
    \legend{512,1024,2048};
    \addplot[very thick,black,dashed] table [
        x expr=\coordindex,  
        y=MKNN,            
        col sep=comma       
    ] {data/topd-512.csv};
    \addplot[very thick,black!50,dashed] table [
        x expr=\coordindex,  
        y=MKNN,            
        col sep=comma       
    ] {data/topd-1024.csv};
    \addplot[very thick,black!20,dashed] table [
        x expr=\coordindex,  
        y=MKNN,            
        col sep=comma       
    ] {data/topd-2048.csv};
\end{axis}
\node[] at (11.5,-2) {Width};
\end{tikzpicture}
    \caption{Alignment between the learned representation and the top-$d$ eigenfunctions of $\tkx$ on the \texttt{abalone} dataset. Solid curves: CCA. Dashed curves: mutual KNN. Depth here means the number of hidden layers.}
    \label{fig:scaling-law}
\end{figure}
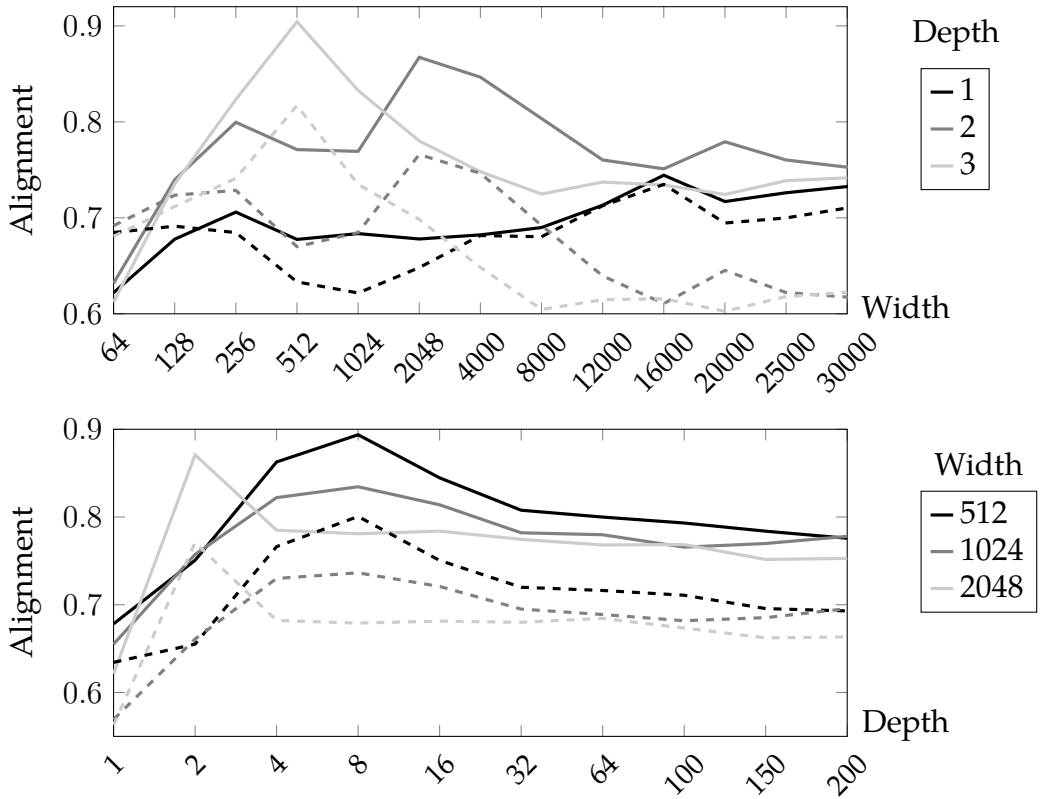

\paragraph{Results.}

\Cref{fig:scaling-law} plots the alignment between the two encoders while varying the depth and width of the neural network.
we can see that when the depth and width are chosen correctly, the CCA can be as high as 0.9, and the mutual KNN can be over 0.8.
Note that these alignment metric values are very high.
For example, in \cite{pmlr-v235-huh24a}, the mutual KNN metric value is usually below 0.2.
Hence, the representation learned by the neural network is highly aligned with the top-$d$ eigenfunctions.

The top plot studies neural networks with increasing widths.
We can see that when the neural network is not so wide, increasing the width will make the alignment higher.
However, once the neural network is wide enough, further increasing the width might have a negative effect.
For example, when the depth is 3, the alignment is the highest when the width is 512, and the alignment becomes lower when the network is wider than 512.
Since increasing the width can only make the function class of $\Phi$ larger, this phenomenon is not due to the expressivity of the neural network.
We hypothesize that this is because optimizing a larger model is harder.
Consequently, with the same number of pretraining steps, a larger model will be farther away from the minima, and the alignment decreases.

The bottom plot studies neural networks with increasing depths, and the observation is similar.
When the network is shallow, increasing the depth makes the alignment higher.
However, once the network is deep enough, further increasing the depth might have a negative effect.
We also observe from the bottom plot that a width-512 network has higher alignment than widths 1024 and 2048.
In addition, the alignment cannot reach 1.
This is because the model is non-convex, so the real optima (the precise top-$d$ eigenspace) cannot be found by gradient methods.

In summary, we draw two conclusions from this experiment:
(i) the representation learned by a large neural network is highly aligned with the top-$d$ eigenfunctions;
(ii) once the neural network is wide and deep enough, further increasing its size will not make the alignment higher, and might even have a negative effect.
Hence, we put forward the following argument about the scaling law:
Once the model is large enough such that $\Phi$ is already highly aligned with the top-$d$ eigenfunctions, further increasing the model size inevitably yields diminishing returns.

When the model is already large enough, a better context is necessary for further improvement.
The next part of this thesis studies how to obtain a better context.
There are two questions that need to be addressed.
First, what context is good and what context is ``better''?
In particular, if we want the representation to be transferable to a wide variety of downstream tasks, then how should we evaluate the representation without testing it on a specific task?
Second, how to obtain better contexts?
Creating new contexts from scratch is obviously difficult, so are there easier options?
These questions will be addressed in the following two chapters.

\paragraph{Code and data availability.}
The code for the experiment in this section can be found at {\scriptsize \url{https://1drv.ms/u/c/ea9fe908498c8b82/EWb_t4e-27VKsoSTWV6J_yQBUFsuWSAXbRHJK4GoUEynjw?e=r4ROdU}}.
The data can be downloaded from OpenML.

\chapter{Intrinsic Evaluation: The Optimality of Learning the Contexture}
\label{chap:optimality}

This chapter focuses on the evaluation of encoders and contexts.
How to tell if an encoder is good or not?
There are two methods as summarized in \cite[Sec.~4]{bommasani2021opportunities}.
The first method is extrinsic evaluation, where we choose a specific downstream task, fit a predictor on top of the encoder, and use the performance of that predictor as the evaluation of the encoder.
In practice, this performance is ultimately what we care about.
The second method is intrinsic evaluation, which is independent of any downstream task.
Intrinsic evaluation is more useful in the analysis of pretraining for two reasons:
(i) We would like the encoder to be transferable to a wide variety of downstream tasks, so only evaluating it on one task is usually insufficient;
(ii) Foundation models are often used on tasks they are not designed for, so we cannot assume that we know all the tasks at pretrain time.

Obviously, no encoder is good for all tasks.
For any encoder, one can adversarially construct a task for which the encoder is poor.
However, such tasks are not what we care about.
For example, in NLP, downstream tasks that we care about include sentiment analysis, sentence completion, translation, question answer, etc.
Counting how many words in a paragraph contain the letter "e" is a valid task, and large language models are not really good at it (see \Cref{fig:llm-bad-demo}), but this task is not one that we would usually care about.
Therefore, we need to define a class of relevant tasks, and the intrinsic evaluation of the encoder is conducted on this class.

\begin{figure}[t]
\begin{verbatim}
User: Count how many words in the following paragraph 
contains the letter "e":
Overall, although there does not ... the proposed metric 
is useful for choosing hyperparameters and comparing 
contexts in practice.

Claude 3.7 Sonnet: I'll count the words containing the 
letter "e" in the paragraph you provided. Let me go 
through it word by word:
1. "Overall" - contains "e"
...
58. "and" - no "e"
59. "comparing" - contains "e"
60. "contexts" - contains "e"
61. "in" - no "e"
62. "practice" - contains "e"
Counting the words with the letter "e": 42 words out 
of 62 total words contain the letter "e".
\end{verbatim}
\caption{An example where the LLM Claude 3.7 Sonnet (as of April 14, 2025) makes a mistake on a task that is not quite relevant to NLP. Line 59 is wrong because ``comparing'' does not contain the letter ``e''. The final answer is also wrong.}
\label{fig:llm-bad-demo}
\end{figure}

How to define such a class of relevant tasks at pretrain time?
The only information we have at pretrain time is the context, and the prior knowledge that the context should be useful for learning an encoder for any task of interest.
Note that a downstream task need not to ``look similar'' to the context; for example, a BERT trained with masked token prediction can be used for sentiment analysis, though these two tasks look quite different.
Thus, we need a quantitative description of the usefulness of a context for a task, for which we define the \textbf{compatibility} between a context and a task.

Once we define the class of compatible tasks, we can prove that the encoder that learns the contexture is optimal for this class of tasks, in the sense that it minimizes the worst-case approximation error.
On the other hand, if the task is not compatible, then it is likely that the encoder will have a bad performance.
This explains why LLMs are so powerful that they can perform well on various tasks, but they struggle with easy tasks such as answering factual questions, where they often produce seemingly correct but completely wrong answers, a phenomenon known as hallucination \cite{huang2025survey}.
In short, answering factual questions is not compatible with the context of next token prediction.

The significance of this result is that it reduces the problem of pretraining to the problem of finding a good context.
Once we have the context, we can use any variational objective in the last chapter, and the resulting foundation model is guaranteed to have the optimal performance if it can learn the contexture well.
We will also discuss how to intrinsically evaluate an arbitrary encoder, which depends on two key concepts---the induced RKHS and the ratio trace.
Thus, even if the foundation model does not learn the contexture, we are able to predict its performance on compatible tasks.

Finally, we study what contexts are good contexts.
The key finding is that a good context should have a moderate association between $X$ and $A$, so that the decay rate of the singular values is neither too fast nor too slow.
Qualitatively, if the association is too weak, then there will be very few tasks that are compatible with the context, so the resulting encoder will not be transferable to a wide range of tasks.
On the other hand, if the association is too strong, then the sample complexity of learning the contexture and the downstream predictor will be very high.
Quantitatively, we propose a metric that measures the usefulness of a context.
This metric only depends on the singular values, so it can be efficiently estimated using the post-hoc approach described in \Cref{sec:extract-exact}.
Experiments show that the metric correlates well with the actual performance of the encoder on real datasets.

\section{Compatibility, Optimality of Contexture}

A downstream task is represented by a \textbf{target function}.
Most downstream tasks, such as prediction, clustering, and segmentation, can be associated with a target function $f^* \in \lxp$.
For example, multi-class classification can be associated with multiple one-vs-all labeling functions.
After $\Phi$ is pretrained, and the training samples of the downstream task are revealed, there are a number of ways to use $\Phi$, such as fitting a small neural network on top, using a kernel method, supervised fine-tuning (SFT), etc.
This thesis focuses on the simplest way called a \textbf{linear probe}, where one fits the downstream data with a linear predictor on top of $\Phi$.
Specifically, the downstream predictor is given by $\hat{f}(x) = \mW \Phi(x) + \vb$.
Since $\mW$ and $\vb$ can be arbitrarily chosen, the mean and variance of $f^*$ have no impact on the difficulty of learning $f^*$. Hence, our definition of compatibility should be independent of them.

Given a context $P^+$, we say that a task is compatible with it, if the information the context provides makes it easier to learn a predictor for the task.
Formally, consider a training set for supervised learning $\oset{(a_i, y_i)}_{i=1}^n$, where $y_i = f^*(x_i) + \xi_i \in \R$ for some small random noise $\xi_i$.
In this dataset, the original inputs $x_i$ are unknown, and only one $a_i \sim P^+(\cdot | x_i)$ is provided for each $i$.
The most straightforward method of learning a predictor for this task is as follows:
we first fit a predictor $\hat{g} : \gA \rightarrow \R$ on this dataset, and then convert it to $\hat{f}: \gX \rightarrow \R$ with $\hat{f} = \tp \hat{g}$, assuming that we have $T$-access to the context.
$f^*$ is said to be compatible with the context if this method works well, for which two conditions are necessary:
\begin{enumerate}[label=(\roman*)]
    \item There exists a $g^* \in \lap$ such that $f^* = \tp g^*$.
    \item The variance of $g^*$ conditioned on $x$, denoted by $\Var[g^*(A) | x]$, is low on average.
\end{enumerate}

Condition (i) says that the labels $y_i$ can be approximated by a function in the range of $\tp$, up to the small noise $\xi_i$.
Condition (ii) ensures that $g^*(a_i) \approx f^*(x_i) \approx y_i$; without this condition, the $\hat{g}$ fit on $\oset{(a_i,y_i)}$ cannot approximate $g^*$.

The compatibility defined below is based on the above insights.
\begin{definition}
\label{def:compatibility}
    The \textbf{compatibility} with $P^+$ of $f \in \lxp$ such that $\tilde{f} \neq \vzero$ is defined as
    \begin{equation}
    \label{eqn:def-compatibility}
        \rho(f, P^+) = \max_{g \in \lap, g \neq \vzero} \frac{\dotp{\tilde{f}, \tp g}_{\px}}{\norm{\tilde{f}}_{\px} \norm{g}_{\pa}} \in [0,1]  .
    \end{equation}
\end{definition}
The compatibility is defined this way so that it is independent of the mean and variance of $f$.
Here is a formula for the compatibility of any $f^*$ with $P^+$.
For simplicity, let $f^*$ has mean zero.
Let $f^* = \underset{i \ge 1}{\sum} u_i \mu_i$ and $g^* = \underset{i \ge 1}{\sum} v_i \nu_i$. 
Then, $\rho(f^*,P^+) = \underset{v_i}{\max} \frac{\sum_i s_i u_i v_i}{\sqrt{\sum_i u_i^2} \sqrt{\sum_i v_i^2}} = \sqrt{\frac{\sum s_i^2 u_i^2}{\sum u_i^2}}$ by Cauchy-Schwarz inequality (the maximum is attained when $v_i \propto s_i u_i$). 
To estimate $\rho(f^*,P^+)$ for any $f^*$, the most straightforward way is to find the $g^*$ ths maximizes the fraction on the right-hand side. 
This is much more efficient than training a $d$-dimensional encoder.

Under this definition, the class of $(1-\epsilon)$-compatible tasks is defined as
\begin{equation}
\label{eqn:def-fep}
    \fep = \oset{f \in \lxp: \rho(f, P^+) \ge 1 - \epsilon},
\end{equation}
for any $\epsilon > 0$.
When $\epsilon$ is small, this is a class of compatible downstream tasks, and it can be shown that this class satisfies the two conditions aforementioned:
\begin{theorem}[Proof in \Cref{app:proof-two-conditions-fep}]
\label{thm:two-conditions-fep}
For any $f^* \in \fep$, there exists a $g^* \in \lap$ such that $f^*(x) = \E[g^*(A)|x]$, and $g^*$ satisfies
\begin{equation}
\label{eqn:thm-two-conditions-fep}
\underset{X \sim \px}{\E} \; \underset{A,A' \sim P^+(\cdot|X)}{\E} \brac{ \paren{g^*(A) - g^*(A')}^2 } \le 4\epsilon \norm{g^*}_{\pa}^2  .
\end{equation}
\end{theorem}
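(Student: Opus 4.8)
The plan is to work in the singular basis of $\tp$ and to take $g^*$ to be the minimal-norm pre-image of $f^*$. First I would reduce to the case $\bar{f^*}=0$: since $\rho(f,P^+)$ depends on $f$ only through $\tilde f$, since $\tp$ fixes constants (so replacing a candidate $g$ by $g+\bar{f^*}$ converts a pre-image of $\tilde{f^*}$ into one of $f^*$), and since the left-hand side of \Cref{eqn:thm-two-conditions-fep} is invariant under adding a constant to $g$ while its right-hand side only grows, it suffices to treat mean-zero $f^*$. Writing $f^*=\sum_{i\ge1}u_i\mu_i$ in the ONB $\oset{\mu_i}_{i\ge0}$ of $\lxp$ (so $u_0=\bar{f^*}=0$), the identity $\rho(f^*,P^+)^2=\sum_i s_i^2u_i^2/\sum_iu_i^2$ from the discussion following \Cref{def:compatibility}, combined with $(1-\epsilon)^2\ge 1-2\epsilon$, yields the key inequality
\begin{equation*}
\sum_{i\ge1}(1-s_i^2)u_i^2 \;\le\; 2\epsilon\sum_{i\ge1}u_i^2 \;=\; 2\epsilon\norm{f^*}_{\px}^2 .
\end{equation*}

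Next I would set $g^*=\sum_{i\ge1,\,s_i>0}\tfrac{u_i}{s_i}\nu_i$. Since $\tp\nu_i=s_i\mu_i$ by \Cref{lem:duality}, \emph{once $g^*$ is known to lie in $\lap$} we get $\tp g^*=\sum_iu_i\mu_i=f^*$, i.e.\ $f^*(x)=\E[g^*(A)\mid x]$ exactly. Granting moreover the quantitative estimate $\norm{g^*}_{\pa}^2=\sum_iu_i^2/s_i^2\le(1-2\epsilon)^{-1}\norm{f^*}_{\px}^2$ (which for $\epsilon<1/2$ also forces $u_i=0$ whenever $s_i=0$, so that $g^*$ is indeed a well-defined element of $\lap$), the bound \Cref{eqn:thm-two-conditions-fep} follows by a direct conditional-variance computation: because $A$ and $A'$ are i.i.d.\ given $X$, the cross term equals $\norm{\tp g^*}_{\px}^2=\norm{f^*}_{\px}^2$, so
\begin{equation*}
\underset{X\sim\px}{\E}\;\underset{A,A'\sim P^+(\cdot|X)}{\E}\brac{\paren{g^*(A)-g^*(A')}^2}=2\norm{g^*}_{\pa}^2-2\norm{f^*}_{\px}^2\le 4\epsilon\norm{g^*}_{\pa}^2 ,
\end{equation*}
where the last step is exactly $\norm{f^*}_{\px}^2\ge(1-2\epsilon)\norm{g^*}_{\pa}^2$. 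Finally, undoing the mean-zero reduction by passing from $g^*$ to $g^*+\bar{f^*}$ recovers the general case as noted above.

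The main obstacle is the estimate $\norm{g^*}_{\pa}^2\le(1-2\epsilon)^{-1}\norm{f^*}_{\px}^2$, equivalently the assertion that $f^*$ genuinely lies in the range of $\tp$ with a pre-image of controlled norm. The key inequality from the first paragraph controls $\sum_i(1-s_i^2)u_i^2$ but not $\sum_iu_i^2/s_i^2$, so this estimate cannot be read off it alone; the plan is to exploit that $\rho$ is a \emph{maximum}, not the value of a single quotient. Evaluating the quotient in \Cref{eqn:def-compatibility} at the candidate $g^*$ itself gives $\rho(f^*,P^+)\ge\dotp{f^*,\tp g^*}_{\px}/(\norm{f^*}_{\px}\norm{g^*}_{\pa})=\norm{f^*}_{\px}/\norm{g^*}_{\pa}$, while Cauchy--Schwarz on $\norm{f^*}_{\px}^2=\sum_i(s_iu_i)(u_i/s_i)$ gives $\norm{f^*}_{\px}^4\le\paren{\sum_is_i^2u_i^2}\norm{g^*}_{\pa}^2=\rho(f^*,P^+)^2\norm{f^*}_{\px}^2\norm{g^*}_{\pa}^2$; these two facts already pin $\norm{g^*}_{\pa}$ from below by $\norm{f^*}_{\px}/\rho(f^*,P^+)$. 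What remains — and this is the delicate heart of the argument — is to show that the same maximality also forces the \emph{upper} bound $\norm{g^*}_{\pa}\le\norm{f^*}_{\px}/\sqrt{1-2\epsilon}$, in particular ruling out an infinite pre-image norm; this step is where the care (and the bulk of the work) must go.
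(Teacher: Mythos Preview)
Your choice of $g^*=\sum_{i\ge1,\,s_i>0}(u_i/s_i)\nu_i$ is \emph{not} the one the paper uses: the paper takes $g^*=\tpstar f^*=\sum_{i\ge1}s_iu_i\nu_i$, i.e.\ the maximizer of the compatibility quotient rather than the minimal-norm pre-image. With that choice $\norm{g^*}_{\pa}^2=\sum_i s_i^2u_i^2$ is automatically finite, and the variance bound reduces to $\sum_i s_i^4u_i^2\ge(1-2\epsilon)\sum_i s_i^2u_i^2$, which follows from Cauchy--Schwarz on the sequences $(s_i^2u_i)$ and $(u_i)$ together with the compatibility hypothesis $\sum_i s_i^2u_i^2\ge(1-2\epsilon)\sum_i u_i^2$. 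No delicate step is needed.

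The obstacle you isolate is not merely delicate but fatal: the upper bound $\norm{g^*}_{\pa}^2\le(1-2\epsilon)^{-1}\norm{f^*}_{\px}^2$ for your $g^*$ is \emph{false} in general, so your $g^*$ need not lie in $\lap$ at all. For a concrete counterexample take $s_1=1$, $s_i^2=1/i$ for $i\ge2$, and $u_1=C$, $u_i=1/i$ for $i\ge2$. Then $\sum_i u_i^2$ and $\sum_i s_i^2u_i^2$ are both finite and their ratio tends to $1$ as $C\to\infty$, so $\rho(f^*,P^+)$ can be made arbitrarily close to $1$ (hence $\epsilon$ arbitrarily small), yet $\sum_i u_i^2/s_i^2=C^2+\sum_{i\ge2}1/i=\infty$. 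Maximality of $\rho$ cannot rescue this: as you yourself observe, evaluating the quotient at your $g^*$ only yields the \emph{lower} bound $\norm{g^*}_{\pa}\ge\norm{f^*}_{\px}/\rho$, and there is no mechanism to extract an upper bound from it.

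One honest remark: the paper's $g^*=\tpstar f^*$ satisfies $\tp g^*=\tkx f^*=\sum_i s_i^2u_i\mu_i$, which is not $f^*$ exactly, so the first clause of the theorem statement (``$f^*(x)=\E[g^*(A)\mid x]$'') is in fact not verified in the paper's proof either, and the counterexample above shows it cannot hold for \emph{any} $g^*\in\lap$ in general. The substantive content of the theorem is the variance inequality \Cref{eqn:thm-two-conditions-fep}, and for that you should follow the paper's choice of $g^*$.
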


Next, we show why an encoder that learns the contexture is optimal.
First, we need to define ``optimal'', which involves the evaluation of an encoder.
The ultimate evaluation of $\Phi$ is the performance of the downstream predictor.
Let $f(x) = \mW \Phi(x) + \vb$ be the downstream linear probe.
The performance of $f$ can be measured by the mean squared error (MSE):
\begin{equation*}
    \err(f, f^*) = \norm{f - f^*}_{\px}^2 = \E_{X \sim \px}[(f(X) - f^*(X))^2] .
\end{equation*}
Let $\fPhi$ be the projection of $f^*$ onto the linear space spanned by $\phi_1,\cdots,\phi_d$. Then, when using a linear probe, the above error can be decomposed as
\begin{equation*}
    \err(f, f^*) = \underbrace{\norm{\fPhi - f^*}_{\px}^2}_{\textrm{Approximation error}} + \underbrace{\norm{f - \fPhi}_{\px}^2}_{\textrm{Estimation error}}  .
\end{equation*}

The embedding dimension $d$ controls the trade-off between the two errors.
If $d$ is larger, then the span of $\Phi$ will become larger, so the approximation error will be lower;
meanwhile, the downstream sample complexity will be higher, so the estimation error will increase.
There are two ways in which $\Phi$ affects the final prediction error.
First, the distance from $f^*$ to the span of $\Phi$  decides the approximation error.
Second, the smoothness of $\phi_1,\cdots,\phi_d$ affects the sample complexity of both pretraining and downstream.
The second part will be studied in \Cref{chap:bounds},
and this chapter evaluates $\Phi$ by the approximation error.
More specifically, $\Phi$ is evaluated on the class of compatible tasks $\fep$ by its worst-case approximation error.

\begin{definition}
\label{def:worst-case-app}
Let $\gF \subset \lxp$ be a function class where $f \in \gF \Rightarrow \alpha f \in \gF$ for all $\alpha \in \R$.
    The \textbf{worst-case approximation error} of $\Phi: \gX \rightarrow \R^d$ on $\gF$ is defined as
    \begin{align*}
        \err \paren{\Phi; \gF} & = \max_{f \in \fp, \; \norm{f}_{\px} = 1} \; \err \paren{\Phi, f} ; \\ 
        \text{where} \quad \err \paren{ \Phi, f} & = \E_{\Phi} \brac{\min_{\vw \in \R^d, \; b \in \R} \; \norm{\vw^{\top} \Phi + b - f}_\px^2}   .
    \end{align*}
\end{definition}

Here, $\E_{\Phi}$ is taken over the randomness of $\Phi$. When $\Phi$ is randomized, one first samples a deterministic $\Phi$ from the distribution, and then fits $\vw$ and $b$ accordingly.

The following result is one of the main results of the contexture theory.
It says two things.
First, if we know a priori the downstream task is compatible, then learning the contexture is the optimal thing to do, because it minimizes the worst-case approximation error.
Second, what if the task is incompatible?
We cannot argue that no encoder works for an incompatible task.
In a hypothetical scenario, if an oracle tells us the target function $f^*$ in advance, then we can set $\phi_1 = f^*$ to achieve perfect performance.
What we can argue is that for any low compatibility level and any encoder $\Phi$, there exists an $f^*$ on that level such that $\Phi$ is poor for $f^*$.

\begin{theorem}[Proof in \Cref{app:proof-top-d-optimal}]
\label{thm:top-d-optimal}
Suppose $1 - \epsilon \leq s_1$. 
For any $d$, among all $\Phi = [\phi_1,\cdots,\phi_d]$ where $\phi_i \in \lxp$ , $\Phi$ minimizes $\err(\Phi; \fep)$ if and only if it learns the contexture of $\tp$. The error is given by
\[
\min_{\Phi: \gX \rightarrow \R^d, \; \phi_i \in \lxp} \;  \err \paren{ \Phi; \gF_\epsilon(P^+) } = \frac{s_1^2 - (1 - \eps)^2}{s_1^2 - s_{d+1}^2} .
\]
Conversely, for any $d$-dimensional encoder $\Phi$ and any $\epsilon>0$, there exists $f \in \lxp$ such that $\rho(f,P^+) = 1-\epsilon$, and $\err(\Phi, f) \ge \frac{s_1^2 - (1 - \eps)^2}{s_1^2 - s_{d+1}^2}$.
\end{theorem}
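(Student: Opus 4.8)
The plan is to strip the statement down to an extremal problem about orthogonal projections in $\lxp$, compute the optimal value directly for a contexture‑learner, and then prove the matching lower bound ("any $\Phi$ is at least this bad") by an explicit two‑dimensional construction. Throughout I take the encoder deterministic (the theorem quantifies over $\Phi=[\phi_1,\dots,\phi_d]$ with $\phi_i\in\lxp$), and I assume $s_{d+1}\le 1-\eps\le s_1$ with $s_{d+1}<s_1$; the remaining edge cases make the stated fraction cap at $1$ and are immediate. First I would reduce to mean‑zero test functions: since the linear probe has a free bias $b$ and $\mu_0\equiv 1$, one has $\min_{\vw,b}\norm{\vw^\top\Phi+b-f}_{\px}^2=\norm{\tilde f}_{\px}^2-\norm{P_V\tilde f}_{\px}^2$, where $V:=\sspan\{\tilde\phi_1,\dots,\tilde\phi_d\}$ lies in the orthogonal complement of $\mu_0$ and $\dim V\le d$; rescaling $\tilde f$ to unit norm only increases the error and leaves $\rho$ unchanged, so it suffices to range over mean‑zero unit $f=\sum_{i\ge1}u_i\mu_i$. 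For such $f$ the text's formula gives $\rho(f,P^+)^2=\sum_i s_i^2u_i^2$, so $\fep$ (on the mean‑zero unit sphere) is exactly $\{f:\sum_i s_i^2u_i^2\ge(1-\eps)^2\}$ and $\err(\Phi,f)=1-\norm{P_Vf}_{\px}^2$. Thus $\err(\Phi;\fep)=1-\min_{f\in\fep,\,\norm f_{\px}=1}\norm{P_Vf}_{\px}^2$, and the whole theorem is now a statement about the $d$‑dimensional subspace $V$.

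Next I would settle the easy inequality. If $\sspan\{\phi_i\}=\sspan\{\mu_1,\dots,\mu_d\}$ then $\norm{P_Vf}_{\px}^2=\sum_{i\le d}u_i^2=1-\sum_{i>d}u_i^2$, and from $(1-\eps)^2\le\sum_i s_i^2u_i^2\le s_1^2\sum_{i\le d}u_i^2+s_{d+1}^2\sum_{i>d}u_i^2$ together with $\sum_i u_i^2=1$ one gets $\sum_{i>d}u_i^2\le\frac{s_1^2-(1-\eps)^2}{s_1^2-s_{d+1}^2}$, with equality for the two‑atom function $\sqrt{\beta}\,\mu_1+\sqrt{1-\beta}\,\mu_{d+1}$ where $\beta:=\frac{(1-\eps)^2-s_{d+1}^2}{s_1^2-s_{d+1}^2}\in[0,1]$ (this function has $\rho=1-\eps$ exactly, hence lies in $\fep$). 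So a contexture‑learner attains $\err(\Phi;\fep)=1-\beta=\frac{s_1^2-(1-\eps)^2}{s_1^2-s_{d+1}^2}$; the same value results for any legal choice of top‑$d$ singular functions, covering \Cref{def:learn-contexture} under multiplicity of $s_d$.

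The heart of the proof is the converse. Fix an arbitrary $d$‑dimensional $V$, set $W:=\sspan\{\mu_1,\dots,\mu_{d+1}\}$ and $t_0:=\arccos\sqrt{\beta}$. Because $P_V|_W:W\to V$ has a nontrivial kernel ($\dim W=d+1>d$), pick a unit $h\in W$ with $P_Vh=0$ and write $h=\cos\psi\,\mu_1+\sin\psi\,w$ with $w$ a unit vector in $\sspan\{\mu_2,\dots,\mu_{d+1}\}$ and $\psi\in[0,\pi/2]$. If $\cos^2\psi\ge\beta$, then $\rho(h,P^+)^2=\cos^2\psi\,s_1^2+\sin^2\psi\sum_{i=2}^{d+1}s_i^2w_i^2\ge\cos^2\psi\,s_1^2+\sin^2\psi\,s_{d+1}^2\ge(1-\eps)^2$ (the last bound because $t\mapsto t\,s_1^2+(1-t)s_{d+1}^2$ is increasing), so $h\in\fep$ and $\err(\Phi,h)=1\ge 1-\beta$. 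Otherwise $\psi\in(t_0,\pi/2]$, and I take $f_{t_0}:=\cos t_0\,\mu_1+\sin t_0\,w$, which is in $\fep$ by the same computation since $\cos^2 t_0=\beta$. Now observe that $t\mapsto\norm{P_V(\cos t\,\mu_1+\sin t\,w)}_{\px}^2$ is a positive‑semidefinite quadratic form on the plane $\sspan\{\mu_1,w\}$ that vanishes at $t=\psi$; a PSD binary form vanishing at a nonzero vector has rank at most one, so this form equals $\lambda\sin^2(\psi-t)$ with $\lambda:=\norm{P_V(\sin\psi\,\mu_1-\cos\psi\,w)}_{\px}^2\le 1$. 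Since $0<\psi-t_0\le\pi/2-t_0\le\pi/2$ and $\sin^2$ is increasing on $[0,\pi/2]$, we get $\norm{P_Vf_{t_0}}_{\px}^2\le\sin^2(\pi/2-t_0)=\cos^2 t_0=\beta$, hence $\err(\Phi,f_{t_0})=1-\norm{P_Vf_{t_0}}_{\px}^2\ge 1-\beta$. A routine perturbation (sliding along the arc $\{f_t\}$, or mixing in a singular function with singular value $<1-\eps$) adjusts $\rho$ to equal $1-\eps$ exactly while keeping the error above the bound, which gives the last sentence of the theorem verbatim. Combining this with the previous paragraph, $\min_\Phi\err(\Phi;\fep)$ equals the stated value and is attained; for the "only if" direction I would track tightness of the above inequalities—testing a candidate $V$ against the two‑atom functions $\sqrt{\beta_{1j}}\,\mu_1+\sqrt{1-\beta_{1j}}\,\mu_j$ for $j\le d$ and against $\mu_{d+1}$, plus a dimension count, forces $\err(\Phi;\fep)>1-\beta$ whenever $V$ is not spanned by a legal set of top‑$d$ singular functions (uniqueness of the minimizer up to this eigenspace ambiguity is the expected statement).

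I expect the converse to be the only real obstacle: one must exhibit a single $f$ that is simultaneously far from $\sspan\Phi$ and highly compatible, two demands that pull against each other, and getting the constant exactly right (so that the bound matches the easy direction) is what forces a careful argument rather than a crude one. The trick that makes it work—and the step I would want to get precisely right—is recognizing that $\norm{P_V(\cdot)}_{\px}^2$ restricted to the plane through $\mu_1$ and the uncaptured direction $w$ is a rank‑one form, so its value along the arc $\{f_t\}$ is pinned to $\lambda\sin^2(\psi-t)$, and then $\psi\le\pi/2$ bounds it by $\beta$.
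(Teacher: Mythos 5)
Your argument is correct in its quantitative core and takes a genuinely different route from the paper's on the key lower bound. Both proofs start by fixing a unit $h$ (the paper's $f_1$) in $W:=\sspan\{\mu_1,\dots,\mu_{d+1}\}$ annihilated by $P_V$, and both test $\Phi$ against a function $f_{t_0}=\cos t_0\,\mu_1+\sin t_0\,w$ with $w$ a unit vector in $\sspan\{\mu_2,\dots,\mu_{d+1}\}$; the two proofs diverge in how the surviving projection is controlled. The paper goes on to posit a unit $f_2\in W\cap\sspan(\Phi)$ with $\mu_1\in\sspan\{f_1,f_2\}$ --- a vector that need not exist when $W\cap\sspan(\Phi)=\{0\}$ --- and then, for the fixed test function, only establishes that the error is at least $\min\{\beta_1^2,\beta_2^2\}$, which matches the claimed value only when $\frac{s_1^2-(1-\eps)^2}{s_1^2-s_{d+1}^2}\le\frac{1}{2}$, a restriction the paper imposes explicitly. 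Your case split --- use $h$ itself when $\cos^2\psi\ge\beta$ so the error is $1$, use $f_{t_0}$ otherwise --- combined with the orthogonal decomposition $f_{t_0}=\cos(\psi-t_0)\,h+\sin(\psi-t_0)\,\xi$ with $\xi\perp h$ in the plane $\sspan\{\mu_1,w\}$, gives $\norm{P_Vf_{t_0}}_{\px}^2=\sin^2(\psi-t_0)\norm{P_V\xi}_{\px}^2\le\sin^2(\pi/2-t_0)=\beta$ directly, handling arbitrary $V$ and every $\beta\in[0,1]$ without restriction. (The rank-one PSD-form framing is a heavier way of saying this two-line orthogonal decomposition, but it is the same content.) What you leave as sketches --- the ``only if'' uniqueness direction, and the routine perturbation that pins $\rho(f)$ to exactly $1-\eps$ (there one should also handle the degenerate subcase where sliding along the arc cannot reach $1-\eps$ before hitting $\psi$, e.g.\ by mixing in a small-singular-value component orthogonal to $V$) --- would need to be written out to fully match the statement, but the extremal calculation that is the heart of the theorem is correct and, if anything, more complete than the argument in the appendix.
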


\section{Intrinsic Evaluation of an Arbitrary Encoder}
\label{sec:induced-rkhs}

Given a context that is compatible with the task, the encoder that learns the contexture is optimal. 
Now what about an arbitrary encoder $\Phi$? Is it possible to bound its worst-case approximation error on the class of compatible tasks?
To derive such a bound, two key objects are necessary: the induced RKHS and the ratio trace.

Denote the range of $\tpstar$ by $R(\tpstar) = \sset{\tpstar f}{f \in \lxp}$.

\begin{definition}
\label{def:induced-rkhs}
The \textbf{induced RKHS} of $P^+$, denoted by $\hpp$, is the Hilbert space $R(\tpstar)$ with the inner product given by $\dotp{\tpstar f_1, \tpstar f_2}_{\hpp} = \dotp{f_1, f_2}_{\px}$.
\end{definition}

An alternative formula is that for any $h_1,h_2 \in \hpp$ where $h_1 = \sum u_i \nu_i$ and $h_2 = \sum v_i \nu_i$, there is $\dotp{h_1, h_2}_{\hpp} = \sum \frac{u_i v_i}{s_i^2}$.

\begin{proposition}
\label{prop:induced-rkhs}
The induced RKHS $\hpp$ has the following properties:
\begin{enumerate}[label=(\roman*)]
    \item $\ka$ is the reproducing kernel, such that $h(a) = \dotp{h, \ka(a, \cdot)}_{\hpp}$ for all $h \in \hpp$.
    \item $\hpp$ is isometric to $\sspan \oset{\mu_i: s_i > 0}$, which is a subspace of $\lxp$.
    \item $f^* \in \fep$ is equivalent to $h^* = \tpstar f^*$ satisfying the following \textbf{isometry property}:
    \begin{equation}
    \label{eqn:isometry-property}
    (1-\epsilon) \norm{\tilde{h}^*}_{\hpp} \le \norm{\tilde{h}^*}_{\pa} \le \norm{\tilde{h}^*}_{\hpp}  .
    \end{equation}
\end{enumerate}
\end{proposition}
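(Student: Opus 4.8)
The plan is to dispatch the three claims in turn, reducing everything to the $\{\nu_i\}$--expansion together with the two available expressions for the $\hpp$ inner product: its definition $\dotp{\tpstar f_1,\tpstar f_2}_{\hpp}=\dotp{f_1,f_2}_{\px}$ and the equivalent coordinate formula $\dotp{\sum_i u_i\nu_i,\sum_i v_i\nu_i}_{\hpp}=\sum_i u_iv_i/s_i^2$. For (i), I would expand $\ka(a,\cdot)$ in the ONB $\{\nu_i\}$ via \Cref{cor:spec-decompose}, getting $\ka(a,\cdot)=\sum_{i:s_i>0}s_i^2\nu_i(a)\,\nu_i$. The coordinate formula then yields $\norm{\ka(a,\cdot)}_{\hpp}^2=\sum_i s_i^2\nu_i(a)^2=\ka(a,a)<\infty$, so $\ka(a,\cdot)\in\hpp$; and for $h=\sum_i c_i\nu_i\in\hpp$ it gives $\dotp{h,\ka(a,\cdot)}_{\hpp}=\sum_i c_i s_i^2\nu_i(a)/s_i^2=\sum_i c_i\nu_i(a)=h(a)$, which is the reproducing property.

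For (ii), I would invoke \Cref{lem:duality}, which gives $\tpstar\mu_i=s_i\nu_i$ when $s_i>0$ (and $\tpstar\mu_i=0$ when $s_i=0$, since $\norm{\tpstar\mu_i}_{\pa}^2=\dotp{\tp\tpstar\mu_i,\mu_i}_{\px}=0$). Hence $\tpstar$ restricted to $M:=\overline{\sspan\{\mu_i:s_i>0\}}\subseteq\lxp$ is injective, because $\{\nu_i\}$ is an ONB, and its image is $R(\tpstar)=\hpp$; so it is a bijection $M\to\hpp$, norm-preserving on $M$ by the very definition of $\dotp{\cdot,\cdot}_{\hpp}$. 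Its inverse $\iota:\hpp\to M$, $\iota(\sum_i c_i\nu_i)=\sum_{i:s_i>0}(c_i/s_i)\mu_i$, is then the claimed isometry of $\hpp$ onto $M$.

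For (iii), since $\rho(\cdot,P^+)$ depends only on the centered part, I would assume $\overline{f^*}=0$ without loss of generality and write $f^*=\sum_{i\ge1}u_i\mu_i$. Then $\overline{h^*}=\E_{A\sim\pa}[\E[f^*(X)\mid A]]=\overline{f^*}=0$, so $\tilde h^*=h^*=\tpstar f^*=\sum_{i:s_i>0}u_is_i\nu_i$, whence $\norm{\tilde h^*}_{\pa}^2=\sum_i s_i^2u_i^2$ and $\norm{\tilde h^*}_{\hpp}^2=\sum_{i:s_i>0}u_i^2$. The right inequality $\norm{\tilde h^*}_{\pa}\le\norm{\tilde h^*}_{\hpp}$ is immediate from $s_i\le 1$. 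For the left one, the formula $\rho(f^*,P^+)^2=\sum_i s_i^2u_i^2/\sum_i u_i^2$ quoted in the text gives $\rho(f^*,P^+)^2\le \sum_i s_i^2u_i^2/\sum_{i:s_i>0}u_i^2=\norm{\tilde h^*}_{\pa}^2/\norm{\tilde h^*}_{\hpp}^2$, with equality exactly when $f^*$ has no component in $\ker\tpstar=\overline{\sspan\{\mu_i:s_i=0\}}$. Thus $\rho(f^*,P^+)\ge1-\epsilon$ always implies the isometry property, and when $f^*\in R(\tp)$ — compatibility condition (i), hence the relevant regime — equality holds, the implication reverses, and we get the stated equivalence.

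The only genuinely delicate point, and the step I would be most careful about, is the bookkeeping around $\ker\tpstar$: the $\hpp$ inner product is well-defined only on the canonical representatives lying in $M=(\ker\tpstar)^{\perp}$, and the two-sided equivalence in (iii) is exact only after restricting to $f^*\in R(\tp)$ (for a general $f^*$ only the forward implication survives). I would also flag that (i) tacitly uses a pointwise, Mercer-type reading of \Cref{cor:spec-decompose} and finiteness of $\ka(a,a)$, both of which I would absorb into the standing assumption that $\tka$ is Hilbert--Schmidt with a sufficiently regular kernel.
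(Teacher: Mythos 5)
Your proof is correct and follows the same computational skeleton as the paper's: for (i), expand $\ka(a,\cdot)$ via \Cref{cor:spec-decompose} and read off the reproducing property in the $\{\nu_i\}$ coordinates; for (iii), translate $\rho(f^*,P^+)\ge1-\epsilon$ into $\sum s_i^2 u_i^2\ge(1-\epsilon)^2\sum u_i^2$ and match that against the ratio $\norm{\tilde h^*}_{\pa}^2/\norm{\tilde h^*}_{\hpp}^2$. The paper treats (ii) as obvious and makes the same coordinate calculations for (i) and (iii).

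What you add, and what the paper quietly elides, is the bookkeeping around $\ker\tpstar$. The defining formula $\dotp{\tpstar f_1,\tpstar f_2}_{\hpp}=\dotp{f_1,f_2}_{\px}$ is only well-posed once one selects the canonical representative in $(\ker\tpstar)^\perp=\overline{\sspan\{\mu_i:s_i>0\}}$, and in consequence the ``equivalence'' in (iii) is, for a completely general $f^*\in\lxp$, only a forward implication $f^*\in\fep\Rightarrow$ isometry; the converse requires $f^*$ to have no component along $\ker\tpstar$. The paper's proof writes $\norm{\tilde h^*}_{\hpp}^2=\sum_{i\ge1}u_i^2$ without comment, which is exactly the step that silently assumes $u_i=0$ whenever $s_i=0$; you make this hypothesis explicit and correctly observe that it is harmless in context. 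Your part (ii) similarly replaces the paper's ``obvious'' by exhibiting $\tpstar$ as an isometric bijection from that subspace onto $\hpp$, which is a clean way to phrase it.

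One small imprecision in your writeup: the sufficient condition for reversing (iii) is membership in the closed subspace $\overline{\sspan\{\mu_i:s_i>0\}}$, which strictly contains $R(\tp)$ unless $\tp$ has closed range, so saying ``$f^*\in R(\tp)$'' is more restrictive than necessary; since the restriction goes the right way, though, it does not weaken your argument. None of this changes the verdict: your proof is sound, and it is the more careful of the two.
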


\begin{proof}
For any $h \in \hpp$ where $h = \tpstar f$ and $f = \sum u_i \mu_i$, by \Cref{cor:spec-decompose} we have
\begin{equation*}
    \dotp{h, \ka(a,\cdot)}_{\hpp} = \dotp{\sum s_i u_i \nu_i, \sum s_i^2 \nu_i(a) \nu_i}_{\hpp} = \sum s_i u_i \nu_i(a) = h(a),
\end{equation*}
which proves (i). (ii) is obvious. Regarding (iii), recall that $f^* = \sum u_i \mu_i \in \fep$ is equivalent to $\sum_{i \ge 1} s_i^2 u_i^2 \ge (1-\epsilon)^2 \sum_{i \ge 1} u_i^2$, and this is $\norm{\tilde{h}^*}_{\pa} \ge (1-\epsilon) \norm{\tilde{h}^*}_{\hpp}$.
Meanwhile, it is obvious that $\norm{\tilde{h}^*}_{\pa} \le \norm{\tilde{h}^*}_{\hpp}$ always holds.
\end{proof}

\begin{definition}
\label{def:ratio-trace}
    Define covariance matrices $\mC_{\Phi} = \Cov_{\px}[\Phi]$, and $\mB_{\Phi} = \Cov_{\pa}[\tpstar \Phi]$. If $\mC_{\Phi}$ is invertible, then the \textbf{ratio trace} of $\Phi$ \wrt{} $P^+$ is defined as $\rt(\Phi; P^+) = \rt(\phi_1,\cdots,\phi_d; P^+) := \Tr(\mC_{\Phi}^{-1} \mB_{\Phi})$;
    otherwise, let $\Phi' = \brac{\phi_{i_1},\cdots,\phi_{i_t}}$ be the maximal linearly independent subset of $\brac{\phi_1,\cdots,\phi_d}$, and define the ratio trace of $\Phi$ the same as the ratio trace of $\Phi'$.
\end{definition}

The ratio trace of any $\Phi$ essentially measures how well $\Phi$ is aligned with the contexture of $P^+$.
Multiplying $\Phi$ by any invertible matrix does not change its ratio trace.
The matrices $\mC_\Phi$ and $\mB_\Phi$ here are the same as in \Cref{sec:extract-exact}.
If $\Phi$ learns the contexture, then its ratio trace is $s_1^2 + \cdots + s_d^2$, which can be easily shown by setting $\phi_i = \mu_i$.
In fact, this is the maximum ratio trace of any $d$-dimensional encoder.

\begin{lemma}
\label{lem:ratio-trace}
Suppose $\phi_1,\cdots,\phi_d$ are orthonormal and all have zero mean. Then, we have
\begin{equation*}
    \norm{\tpstar \phi_1}_{\pa}^2 + \cdots + \norm{\tpstar \phi_d}_{\pa}^2 \le s_1^2 + \cdots + s_d^2 .
\end{equation*}
\end{lemma}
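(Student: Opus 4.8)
The plan is to work in the orthonormal eigenbasis $\oset{\mu_j}_{j \ge 0}$ of $\tkx = \tp\tpstar$ guaranteed by \Cref{sec:spectral-properties}, with $\tkx \mu_j = s_j^2 \mu_j$ and $\mu_0 \equiv 1$. Since each $\phi_i$ has zero mean it is orthogonal to $\mu_0$, so I would write $\phi_i = \sum_{j \ge 1} c_{ij}\, \mu_j$. Orthonormality of $\phi_1,\dots,\phi_d$ then says exactly that the (infinite) matrix $\mM = (c_{ij})_{i \in [d],\, j \ge 1}$ has orthonormal rows, i.e. $\mM \mM^\top = \mI$, and in particular $\sum_{j \ge 1} c_{ij}^2 = \norm{\phi_i}_{\px}^2 = 1$ for each $i$.

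Next I would rewrite each summand via the adjoint identity $\dotp{f, \tp g}_{\px} = \dotp{\tpstar f, g}_{\pa}$: taking $f = \phi_i$ and $g = \tpstar \phi_i$ gives $\norm{\tpstar \phi_i}_{\pa}^2 = \dotp{\phi_i, \tp\tpstar\phi_i}_{\px} = \dotp{\phi_i, \tkx \phi_i}_{\px} = \sum_{j \ge 1} s_j^2 c_{ij}^2$, using $\tkx \phi_i = \sum_j s_j^2 c_{ij} \mu_j$. Summing over $i = 1,\dots,d$ and interchanging the nonnegative sums yields $\sum_{i=1}^d \norm{\tpstar \phi_i}_{\pa}^2 = \sum_{j \ge 1} s_j^2 p_j$, where $p_j := \sum_{i=1}^d c_{ij}^2$.

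The rest is an elementary linear-optimization argument on the numbers $p_j$. Because $\mM$ has orthonormal rows, $\mM^\top \mM$ is the orthogonal projection onto the $d$-dimensional row space of $\mM$, so $p_j = (\mM^\top\mM)_{jj} \in [0,1]$ and $\sum_{j \ge 1} p_j = \Tr(\mM^\top\mM) = \Tr(\mM\mM^\top) = d$. Since $s_1^2 \ge s_2^2 \ge \cdots \ge 0$, maximizing the linear functional $\vp \mapsto \sum_{j \ge 1} s_j^2 p_j$ over the polytope $\oset{(p_j): 0 \le p_j \le 1,\ \sum_j p_j = d}$ is attained at the vertex $p_1 = \cdots = p_d = 1$, $p_j = 0$ for $j > d$ (any feasible point is improved by shifting mass from a higher index to a lower one, since the coefficients are sorted). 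Hence $\sum_{i=1}^d \norm{\tpstar \phi_i}_{\pa}^2 = \sum_{j \ge 1} s_j^2 p_j \le \sum_{j=1}^d s_j^2$, which is the claim.

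I do not expect a serious obstacle; the only point worth stating carefully is that the diagonal of a projection matrix lies in $[0,1]$ and that this, together with the trace constraint, reduces the bound to the rearrangement fact above, while the analytic technicalities (existence of the eigenbasis, convergence of the expansions) are already covered by the standing assumption that $\tkx$ is Hilbert-Schmidt. As a byproduct, equality forces $p_j = 1$ for all $j \le d$ with $s_j > 0$, and when additionally $s_d > s_{d+1}$ this means $\sspan\oset{\phi_1,\dots,\phi_d} = \sspan\oset{\mu_1,\dots,\mu_d}$, i.e. $\Phi$ learns the contexture of $\tp$ — so the bound is tight exactly in the anticipated case.
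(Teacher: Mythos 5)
Your proof is correct and takes essentially the same approach as the paper: expand $\phi_i$ in the $\mu_j$ basis, reduce the claim to bounding $\sum_j s_j^2 p_j$ where $p_j = \sum_i c_{ij}^2$ is the $j$-th diagonal entry of the projection $\mM^\top\mM$, and exploit $p_j \in [0,1]$, $\sum_j p_j = d$. The only cosmetic difference is the final extremal step: the paper bounds the partial sums $\sum_{i\le j} p_i \le \min(j,d)$ and closes via Abel summation, while you argue directly from the pointwise bound $p_j \le 1$ and the trace constraint by a rearrangement argument — two standard realizations of the same majorization fact.
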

\begin{proof}
    Let $\phi_i = \sum_{j \ge 1} q_{ij} \mu_j$ for $i \in [d]$.
    Then, $\mQ = (q_{ij})$ is a matrix with $d$ orthonormal rows and infinitely many columns.
    It is easy to see that the left-hand side is equal to $\Tr(\mQ \mD \mQ^{\top})$, where $\mD = \diag \oset{s_1^2, s_2^2, \cdots}$.
    Let $\vq_j$ be the $j$-th column of $\mQ$. For all $j \in [d]$, there is $\sum_{i=1}^j \vq_i^{\top} \vq_i \le j$; and for any $j > d$, there is $\sum_{i=1}^j \vq_i^{\top} \vq_i \le d$.
    Thus, using Abel transformation, we have
    \begin{equation*}
        \Tr(\mQ \mD \mQ^{\top}) = \Tr(\mD \mQ^{\top} \mQ) = \sum_{j=1}^{\infty} s_j^2 \vq_j^{\top} \vq_j = \sum_{j=1}^{\infty} \paren{\sum_{i=1}^j \vq_i^{\top} \vq_i } \paren{s_j^2 - s_{j+1}^2} \le \sum_{j=1}^d s_j^2 ,
    \end{equation*}
    as desired.
\end{proof}

The ratio trace induces a key quantity in the approximation error bound called the trace gap, which reflects the gap between $\Phi$ and the top-$d$ singular functions.
The larger the trace gap is, the larger the approximation error will be.
A simple definition is $s_1^2 + \cdots + s_{d+1}^2 - \rt(\Phi; P^+)$, whose lower bound $s_{d+1}^2$ can be achieved by the top-$d$ singular functions, the optimal encoder.
However, there is an issue with this definition.
For example, consider an encoder with $d = 1000$.
It learns the top-$10$ singular functions, but the other $990$ dimensions are complete noise that has zero contribution to $\rt(\Phi; P^+)$.
The approximation error of this encoder should be no higher than that of the top-$10$ singular functions, because adding more dimensions will never make the approximation error higher.
However, if $d$ becomes larger and $\rt(\Phi; P^+)$ stays the same, then $s_1^2+\cdots+s_{d+1}^2 - \rt(\Phi; P^+)$ will become larger, so this quantity does not correlate with the approximation error in this scenario.
The following definition fixes this issue.
\begin{definition}
\label{def:trace-gap}
For any linearly independent $f_1,\cdots,f_{d'} \in \lxp$, denote $F = [f_1,\cdots,f_{d'}]$, $\mC_F = \Cov_{\px}[F]$, and $\mB_F = \Cov_{\pa}[F]$.
The \textbf{trace gap} of $\Phi$ \wrt{} $P^+$ is defined as
\begin{equation*}
    \tg(\Phi; P^+) := \inf_{d' \le d} \; \inf_{f_1,\cdots,f_{d'}} \; \oset{s_1^2 + \cdots + s_{d'+1}^2 - \Tr(\mC_F^{-1} \mB_F)} .
\end{equation*}
\end{definition}

Obviously, this definition of trace gap is upper bounded by $s_1^2 + \cdots + s_{d+1}^2 - \rt(\Phi; P^+)$.
It solves the issue in the previous example, because having completely noisy dimensions does not affect the trace gap.
The following result bounds the approximation error.

\begin{theorem}
\label{thm:approx-error-bound}
Suppose $\tg(\Phi; P^+) < s_1^2$, and $\epsilon > 1 - s_1$. Then,
\begin{equation*}
    \err(\Phi; \fep) \le \frac{s_1^2 - (1-\epsilon)^2 + s_1 \tg(\Phi; P^+)}{s_1^2 - \tg(\Phi; P^+)^2}  .
\end{equation*}
\end{theorem}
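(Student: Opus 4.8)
The plan is to reduce the theorem to a one–variable inequality in which the trace gap controls how much $\tkx$‑energy a compatible target can hide outside $\sspan\{\phi_1,\dots,\phi_d\}$. First I would make routine reductions. Since $\rho(\cdot,P^+)$ depends only on the centered part of its argument and $\err(\Phi,\cdot)$ is unchanged by adding a constant (the downstream bias absorbs it), the worst case in $\err(\Phi;\fep)$ is attained at a zero‑mean $f^*$ with $\norm{f^*}_\px=1$, and by conditioning on a realization it suffices to treat $\Phi$ as deterministic. Let $W=\sspan\{\tilde\phi_1,\dots,\tilde\phi_d\}$, a subspace of the zero‑mean functions in $\lxp$, with orthogonal projection $P_W$; then $\err(\Phi,f^*)=\norm{f^*-P_Wf^*}_\px^2$. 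Write $f^*=g+e$ with $g=P_Wf^*\in W$ and $e=f^*-g\perp W$, so $\err(\Phi,f^*)=\norm{e}_\px^2=:x$ and $\norm{g}_\px^2=1-x$. Finally, since $\tkx=\tp\tpstar$ and $\rho(f^*,P^+)=\norm{\tpstar f^*}_\pa/\norm{f^*}_\px$ (Cauchy–Schwarz together with \Cref{lem:duality}), membership $f^*\in\fep$ is exactly the statement $\dotp{f^*,\tkx f^*}_\px=\norm{\tpstar f^*}_\pa^2\ge(1-\epsilon)^2$.

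The technical heart is an eigenvalue inequality that generalizes \Cref{lem:ratio-trace} and explains why the trace gap is the right quantity:
\[
\lambda^{\perp}(\Phi):=\sup\big\{\dotp{h,\tkx h}_\px:\ \bar h=0,\ h\perp W,\ \norm{h}_\px=1\big\}\ \le\ \tg(\Phi;P^+).
\]
To prove it, take any linearly independent $f_1,\dots,f_{d'}\in W$, set $V=\sspan\{f_i\}$, and note $\Tr(\mC_F^{-1}\mB_F)=\Tr(\Pi_V\tkx\Pi_V)$ after orthonormalizing. Pick $h^*$ (nearly) attaining the supremum; since $h^*\perp W\supseteq V$, the space $V'=\sspan(V\cup\{h^*\})$ has dimension $d'+1$ and $\Tr(\Pi_{V'}\tkx\Pi_{V'})=\Tr(\Pi_V\tkx\Pi_V)+\dotp{h^*,\tkx h^*}_\px$. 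By the Ky Fan maximum principle for the Hilbert–Schmidt operator $\tkx$ restricted to the zero‑mean subspace (whose ordered eigenvalues are $s_1^2\ge s_2^2\ge\cdots$), the left side is at most $s_1^2+\cdots+s_{d'+1}^2$; rearranging and taking the infimum over $d'$ and over $\{f_i\}$ gives the claim, and the degenerate case $d'=0$ recovers $\lambda^\perp(\Phi)\le s_1^2$.

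Writing $\gamma=\tg(\Phi;P^+)$, I would then expand $\dotp{f^*,\tkx f^*}_\px=\dotp{g,\tkx g}+2\dotp{g,\tkx e}+\dotp{e,\tkx e}$ and bound the pieces: $\dotp{g,\tkx g}\le s_1^2(1-x)$ because $g$ is zero‑mean, $\dotp{e,\tkx e}\le\lambda^\perp(\Phi)\norm{e}^2\le\gamma x$ because $e$ is zero‑mean and orthogonal to $W$, and the cross term via Cauchy–Schwarz for the p.s.d.\ form $\dotp{\cdot,\tkx\cdot}$ combined with the two previous estimates. Substituting into $(1-\epsilon)^2\le\dotp{f^*,\tkx f^*}_\px$ yields a scalar inequality in $x$; solving it — using $\gamma<s_1^2$ and $\epsilon>1-s_1$ to fix the sign of the relevant quadratic and select the correct root — produces the bound $\frac{s_1^2-(1-\epsilon)^2+s_1\gamma}{s_1^2-\gamma^2}$. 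For a randomized $\Phi$ one applies this realization by realization and takes expectations.

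I expect the eigenvalue inequality $\lambda^\perp(\Phi)\le\tg(\Phi;P^+)$ to be the conceptual crux, since it is what makes the gap‑type definition of the trace gap do any work, and it must be set up carefully so that the relevant spectrum is $\{s_i^2\}_{i\ge1}$ rather than $\{s_i^2\}_{i\ge0}$ (this is why centering is essential) and so that the Ky Fan / interlacing argument is legitimate for a Hilbert–Schmidt operator in infinite dimensions. The second delicate point is controlling the cross term $\dotp{g,\tkx e}$ tightly enough that solving the quadratic returns exactly the stated closed form: the crude Cauchy–Schwarz estimate has to be paired with a suitably calibrated Young‑type split so that one lands on the $s_1\gamma$ in the numerator and the $s_1^2-\gamma^2$ in the denominator rather than a messier expression, and checking that the hypotheses $\gamma<s_1^2$, $\epsilon>1-s_1$ are exactly what is needed for this to go through.
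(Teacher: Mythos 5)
Your plan is the paper's argument in essentially the same form: you expand the same quadratic form (the paper writes $\norm{\tpstar f}_\pa^2$, you write $\dotp{f^*,\tkx f^*}_\px$; these are equal), split $f^*$ into a piece in the encoder's span plus an orthogonal residual, bound the within-span piece by $s_1^2$, the residual piece by the trace gap, and the cross piece by Cauchy--Schwarz in the $\tkx$ quadratic form, and then substitute into the compatibility constraint and solve. Your eigenvalue inequality $\lambda^\perp(\Phi)\le\tg(\Phi;P^+)$ is the same content as the paper's invocation of Lemma~\ref{lem:ratio-trace}: you phrase it as Ky Fan, the paper proves it by hand via the Abel-summation argument, but it is the same interlacing fact. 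The only genuine cosmetic difference is that you project onto all of $W=\sspan\oset{\tilde\phi_1,\dots,\tilde\phi_d}$ while the paper projects onto the minimizing subspace $\gF\subseteq W$; both suffice for an upper bound.

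The place to press harder is exactly the cross-term calibration you flagged at the end, and it is not a Young-inequality bookkeeping issue but a real discrepancy. With Definition~\ref{def:trace-gap} as written, $\tg$ is an \emph{eigenvalue-scale} quantity (for the ideal encoder it equals $s_{d+1}^2$, not $s_{d+1}$), so your step $\dotp{e,\tkx e}\le\tg\,\norm{e}_\px^2$ is of the right order, but Cauchy--Schwarz then gives $\abs{\dotp{g,\tkx e}}\le s_1\sqrt{\tg}\,\norm{g}_\px\norm{e}_\px$, with a square root. Running the paper's own algebra (AM--GM on the cross term, isolate $\beta^2=\norm{e}_\px^2$) from that point yields
\[
\err(\Phi;\fep)\ \le\ \frac{s_1^2-(1-\epsilon)^2+s_1\sqrt{\tg(\Phi;P^+)}}{\,s_1^2-\tg(\Phi;P^+)\,},
\]
not the stated $\frac{s_1^2-(1-\epsilon)^2+s_1\tg}{s_1^2-\tg^2}$. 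The paper's proof reaches the stated form by writing $\norm{\tpstar f_0}_\pa\le\tau$ where it has only established $\norm{\tpstar f_0}_\pa^2\le\tau$, a replacement that is valid only when $\tau\ge 1$ --- the opposite of the regime of interest. The remark after the theorem, which reads off $\tg=s_{d+1}$ (not $s_{d+1}^2$) for the optimal encoder, is consistent with the theorem's displayed bound but not with Definition~\ref{def:trace-gap}; the likely fix is that the definition of the trace gap is missing a square root. So your hesitation is well founded: if you carry your plan out literally, you will not reproduce the stated expression, and the version you do get is the one that is actually correct under the definition as written.
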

\begin{remark}
    This bound is fairly tight. If $\Phi$ learns the contexture, then by \Cref{thm:top-d-optimal} we have $\err(\Phi; \fep) = \frac{s_1^2 - (1-\epsilon)^2}{s_1^2 - s_{d+1}^2}$, and $\tg(\Phi; P^+) = s_{d+1}$.
    Compared to this exact formula, the above upper bound only has an extra $s_1 \tg(\Phi; P^+)$ term in the numerator.
\end{remark}
\begin{proof}
Let $f_1,\cdots,f_{d'}$ be the functions that minimize $s_1^2 + \cdots + s_{d'+1}^2 - \Tr(\mC_F^{-1} \mB_F)$. Without loss of generality, assume that $f_1,\cdots,f_{d'}$ have zero mean and are orthonormal. Let $\gF = \sspan \oset{f_1,\cdots,f_{d'}}$, and $\gH = \sspan \oset{\tpstar f_1,\cdots,\tpstar f_{d'}}$.
For any $f \in \fep$ with $\norm{f}_{\px} = 1$, let $h = \tpstar f \in \hpp$, and let $f_{\gF}$ be the projection of $f$ onto $\gF$.
Since $\err(\Phi; \fep)$ is upper bounded by $\norm{f - f_{\gF}}_{\px}^2$, it suffices to show that $\norm{f - f_{\gF}}_{\px}^2$ is upper bounded by the right-hand side.

Let $\alpha^2 = \norm{f_{\gF}}_{\px}^2$, and $\beta^2 = \norm{f - f_{\gF}}_{\px}^2$, where $\alpha$ and $\beta$ are non-negative.
Then, $\alpha^2 + \beta^2 = \norm{f}_{\px}^2 = 1 = \norm{h}_{\hpp}^2$.
The isometry property says that $(1-\epsilon)^2(\alpha^2+\beta^2) \le \norm{h}_{\pa}^2$.
Let $f - f_{\gF} = \beta f_0$ where $\norm{f_0}_{\px} = 1$.
Let $h_{\gF} = \tpstar h_{\gF}$ and $h_0 = \tpstar f_0$.
Then, we have $\norm{h_{\gF}}_{\pa}^2 \le s_1^2 \norm{f_{\gF}}_{\px}^2 = s_1^2 \alpha^2$.
Meanwhile, since $f_0$ is orthogonal to $f_1,\cdots,f_{d'}$, by \Cref{lem:ratio-trace} we have $\norm{\tpstar f_0}_{\pa}^2 + \norm{\tpstar f_1}_{\pa}^2 + \cdots + \norm{\tpstar f_{d'}}_{\pa}^2 \le s_1^2 + \cdots + s_{d'+1}^2$, which implies that $\norm{\tpstar f_0}_{\pa}^2 \le s_1^2 + \cdots + s_{d'+1}^2 - \Tr(\mC_{F}^{-1} \mB_{F}^{-1})$.
Let $\tau = \tg(\Phi; P^+)$. Then, we have
\begin{equation*}
    \norm{h}_{\pa}^2 = \norm{h_{\gF} + \beta h_0}_{\pa}^2 \le \norm{h_{\gF}}_{\pa}^2 + \beta^2 \norm{h_0}_{\pa}^2 + 2 \beta \norm{h_{\gF}}_{\pa} \norm{h_0}_{\pa} \le s_1^2 \alpha^2 + \tau^2 \beta^2 + 2 s_1 \tau \alpha \beta .
\end{equation*}
Thus, we have $(1-\epsilon)^2(\alpha^2+\beta^2) \le s_1^2\alpha^2 +\tau^2 \beta^2 + 2 s_1 \tau \alpha \beta$, which implies that $(s_1^2 - \tau^2) \beta ^2 \le [s_1^2 - (1-\epsilon)^2] (\alpha^2 + \beta^2) + 2 s_1 \tau \alpha \beta \le [s_1^2 - (1-\epsilon)^2 + s_1 \tau] (\alpha^2 + \beta^2) $, as desired.
\end{proof}

\paragraph{Connection to Fisher discriminant analysis.}
Fisher discriminant analysis \cite{mika1999fisher,baudat2000generalized,liu2004improving}, or more generally linear discriminant analysis (LDA), is a classical method of learning linear classifiers in statistics.
Here we show that Fisher discriminant analysis has a strong connection to the contexture theory.
Suppose $\gX \subseteq \R^{\dx}$.
Fisher discriminant analysis defines the following \textbf{between-class covariance matrix} $\mS_B \in \R^{\dx \times \dx}$ and \textbf{within-class covariance matrix} $\mS_W \in \R^{\dx \times \dx}$:
\begin{align*}
    \mS_B & = \iint \oset{ \paren{ \E \sbrac{X}{A=a_1} - \E \sbrac{X}{A=a_2} } \paren{  \E \sbrac{X}{A=a_1} - \E \sbrac{X}{A=a_2}}^{\top}} ; \\ 
    \mS_W & = \int \E_{P^+} \sbrac{\paren{X - \E \sbrac{X}{A=a}} \paren{X - \E \sbrac{X}{A=a}}^{\top}}{A=a} d\pa (a).
\end{align*}
In the original formulation of Fisher discriminant analysis, $A$ is the label of $X$.
Here we extend it to a general context variable.
Consider a linear encoder $\Phi(x) = \mW x$, where $\mW \in \R^{d \times \dx}$.
Then, one solves the following optimization problem to find $\mW$:
\begin{equation*}
    \underset{\mW \in \R^{d \times \dx}}{\text{maximize}} \; J(\mW) = \Tr \brac{ \paren{\mW \mS_B \mW^{\top}} \paren{\mW \mS_W \mW^{\top}}^{-1} } \quad \text{s.t.} \quad \mW \mS_W \mW^{\top} \text{ is invertible} .
\end{equation*}
Here, $J(\mW)$ is called the \textbf{Fisher discriminant}.
Define $\Psi(a) = \E_{P^+}[\mW X |A=a]$. Then, we can see that
\begin{align*}
    \mW \mS_B \mW^{\top} & = \iint \paren{ \Psi(a_1) - \Psi(a_2) } \paren{ \Psi(a_1) - \Psi(a_2) }^{\top} d \pa (a_1) d \pa (a_2) ; \\ 
    \mW \mS_W \mW^{\top} & = \int \E_{P^+} \sbrac{\paren{\Phi(X) - \Psi(a)}\paren{\Phi(X) - \Psi(a)}^{\top}}{A = a} d \pa (a) .
\end{align*}
Let $\mC_{\Phi} = \E[\tPhi(X)\tPhi(X)^{\top}]$ and $\mB_{\Phi} = \E[\tPsi(A) \tPsi(A)^{\top}]$.
Then, we have
\begin{align*}
    \mW \mS_B \mW^{\top} & = 2 \oset{ \E \brac{\Psi(A) \Psi(A)^{\top}} - \bPsi \bPsi^{\top} } = 2 \E \brac{\tPsi(A) \tPsi(A)^{\top}} = 2 \mB_{\Phi} ; \\ 
    \mW \mS_W \mW^{\top} & = \int \E_{P^+} \sbrac{\Phi(X) \Phi(X)^{\top} - \Psi(a) \Psi(a)^{\top}}{A = a} d \pa (a)  \\ 
    & = \E\brac{ \Phi(X) \Phi(X)^{\top} } - \E \brac{\Psi(A) \Psi(A)^{\top}} \\ 
    & = \E\brac{ \tPhi(X) \tPhi(X)^{\top} } - \E \brac{\tPsi(A) \tPsi(A)^{\top}} = \mC_{\Phi} - \mB_{\Phi}  .
\end{align*}
Therefore, $J(\mW) = 2 \Tr[(\mC_{\Phi} - \mB_{\Phi})^{-1} \mB_{\Phi}]$, which is very similar to the ratio trace defined in \Cref{def:ratio-trace}.
Recall that an encoder that learns the contexture maximizes the ratio trace.
A well-known result is that $J(\mW)$ is maximized when $\mW$ consists of the top-$d$ eigenvectors of $\mS_W^{-1} \mS_B$.
Hence, Fisher discriminant analysis is almost equivalent to contexture learning under the constraint that the encoder must be linear.

\section{Evaluating Context Usefulness}
\label{sec:context-usefulness}
The previous chapter argued that better contexts are essential to further improve foundation models, but how to create better contexts is a challenging open problem.
This section studies an easier problem---how to evaluate a context before pretraining.
We say that a context is \textbf{useful} for a downstream task, if it can lead to an encoder with good performance on this task.
However, since we might not know the task at pretrain time, we can only predict if the context is useful \textit{in general}.
Solving this problem is a prerequisite, because if we cannot even decide whether a context is good or not, then there will be no way for us to create better contexts.

Evaluating a context is more difficult than evaluating an encoder.
When evaluating an encoder, we can assume that the task is known to be compatible with the context.
However, when evaluating a context, we cannot make this assumption.
Instead, a context is better if it is compatible with more tasks, because such a context can lead to more transferable encoders, and our evaluation should take this into consideration.

Given a context and a dataset, the only things we can use are the singular values and singular functions of the context.
However, estimating the singular functions is as hard as pretraining an encoder, whereas the singular values can be efficiently estimated using the post-hoc approach in \Cref{sec:extract-exact} with a small subset of samples.
Therefore, a metric will be more ideal if it only uses the singular values (the spectrum).

In this section, we show that it is possible to evaluate the usefulness of a context with only its singular values.
This might seem counter-intuitive, because suppose the encoder learns the contexture, then only the singular functions will affect the performance of the encoder.
When $\mu_1,\cdots,\mu_d$ are fixed, the space spanned by $\phi_1,\cdots,\phi_d$ is also fixed and is independent of $s_1,\cdots,s_d$.
The reason why this is possible is that the singular values and the singular functions are intrinsically connected.
Recall that $\kx(x,x') = \sum s_i^2 \mu_i(x) \mu_i(x')$, and that it must satisfy $\kx(x,x') \ge 0$ for all $x,x'$, which is quite a strict constraint.
Therefore, when $\mu_1,\mu_2,\cdots$ are fixed, one cannot choose $s_1^2,\cdots,s_d^2$ arbitrarily.
Conversely, the singular values limit the possible choices of the singular functions.

In this section, we propose a metric for evaluating context usefulness, and the metric only depends on the singular values.
Note that there does not exist a universal metric, because whether the context is useful on a task or not depends on their compatibility, which cannot be estimated without the knowledge of the task.
However, our experiments show that the proposed metric generally works well on real datasets.

\paragraph{Qualitative analysis.}
In \Cref{sec:spectral-properties} we showed that the shape of the spectrum depends on the association strength between $X$ and $A$, also called the association of the context.
The singular values decay slower if the association is stronger.
Here we show that the decay rate of the singular values has a great impact on the usefulness of the context.

The central argument is: \textbf{A useful context should have a moderate association}.
To get an intuition, consider the two extreme cases in \Cref{sec:spectral-properties}:
(i) $A$ is independent of $X$, then there is only one positive singular value;
(ii) $A = X$, then all singular values are $1$. 
Both contexts are clearly useless because they provide no additional information.
In what follows, we qualitatively explain why a context is not very useful if its association is too strong or too weak.

Recall that any compatible task $f^* = \sum u_i \mu_i \in \fep$ as defined in \Cref{eqn:def-fep} needs to satisfy $\sum_{i \ge 1} s_i^2 u_i^2 \ge (1-\epsilon)^2 \sum_{i \ge 1} u_i^2$.
This is easier to satisfy if $s_i$ are large.
Thus, if the association is too weak and the singular values decay too fast, then $\fep$ will be a very small set.
Consequently, very few tasks will be compatible with the context, so an encoder trained on the context will not be transferable to various tasks.

On the other hand, when the association is too strong, there are two consequences.
First, more singular functions have large singular values, and to learn all of them, one needs to use a larger $d$, which makes the sample complexity of learning the downstream linear predictor higher.
Second, the singular functions with large singular values become less smooth, so learning them requires more pretraining samples.
In \Cref{chap:bounds} we will define the context complexity that quantitatively measures the smoothness of the top singular functions, and we will show that the context complexity is higher when the context has a stronger association.

\paragraph{Quantitative metric.}

We now propose a metric that quantitatively measures the general usefulness of a context.
The metric only depends on the singular values of the context.
The metric assumes that the pretraining method aims to learn the contexture of the context, and it does not work for an arbitrary encoder.

Our metric is defined as
\begin{equation}
\label{eqn:usefulness-metric}
    \tau_d = \frac{1}{1 - s_{d+1}^2} + \beta \frac{\sum_{i=1}^{d} s_i^2}{\sum_{i=1}^{d_0} s_i^2} ,
\end{equation}
where $\beta > 0$ is a parameter, and $d_0$ is the maximum $d$ we consider. Typically $d_0$ ranges from $512$ to $8192$.
We choose $\beta = 1$ and $d_0 = 512$ in our experiments.
$\tau_d$ is a proxy of the prediction error when the embedding dimension is $d$.
Thus, the $d$ that minimizes $\tau_d$ can be viewed as the optimal embedding dimension predicted by the metric, and $\tau$ evaluates the context when $d$ is chosen optimally.
Since this metric only depends on the singular values, it can be efficiently estimated using the post-hoc approach in \Cref{sec:extract-exact}, with which we can estimate the spectrum using a subset of $\Theta(d_0 \log d_0)$ samples.

This metric is derived in the following way. Let the target function be $f^* = f_0 + f_1$, where $\dotp{f_0,f_1}_{\px} = 0$, $f_0$ is not compatible with the context, and $f_1$ is compatible with the context.
Then, the prediction error can be decomposed into three components:
\begin{enumerate}[label=(\roman*)]
    \item The approximation error of $f_1$
    \item The approximation error of $f_0$
    \item The estimation error
\end{enumerate}

By \Cref{thm:top-d-optimal}, component (i) can be bounded by $\frac{s_1^2 - (1-\epsilon)^2}{s_1^2 - s_{d+1}^2}$, and we simplify this to the first term because $s_1$ is very close to 1 in most real cases, and the numerator is a constant.
Component (ii) is smaller if the context has a stronger association; thus, it should be negatively correlated with $\sum_{i=1}^{d_0} s_i^2$.
Component (iii) is larger if the context has a stronger association, or if $d$ is large.
Based on the result in \Cref{chap:bounds}, it is positively correlated with $\sum_{i=1}^{d} s_i^2$.
The second term of the metric combines these two components, and it is designed to be bounded by $1$.

\paragraph{Compared to previous metrics.}
Some previously proposed metrics are also based on the decay rate of the spectrum. \cite{NEURIPS2022_70596d70} proposed a metric based on the eigenvalues of $\Phi \Phi^{\top}$ for a particular pretrained encoder $\Phi$; that is, they use the $\lambda$ that satisfies $\langle \Phi, \Phi\rangle_{\px} f = \lambda f$ for some $f \neq \vzero$.
Here, $\dotp{\Phi, \Phi}_{\px}$ is the covariance matrix, assuming that $\Phi$ is centered.
In contrast, our metric is based on the general eigenvalues satisfying $\dotp{\Phi, \tp \Phi}_{\px} f = s^2 \dotp{\Phi, \Phi}_{\px} f$ for some $f \neq \vzero$, as discussed in \Cref{sec:extract-exact}.
These two sets of eigenvalues are fundamentally different.
The eigenvalues $s_i^2$ we use are invariant under invertible linear transformations on $\Phi$, while $\lambda_i$ are not.
This suggests that our metric is more desirable since invertible linear transformations on $\Phi$ do not affect the performance of the downstream linear probe.

\begin{figure}[t]
    \centering
    \begin{tikzpicture}
\begin{axis}[
name=plot1,
 height = .26\linewidth,
    width = .26\linewidth,
    axis lines=left,  
    axis line style={-latex},  
    xlabel={$i$},
    ylabel={$s_i^2$},
    ylabel style={
        at={(0.13,0.8)}, 
        anchor=south,  
        rotate=270  
    },
    xlabel style = {
        at={(0.9,0.02)},
        anchor=south,
    },
title style={at={(0.5,-0.3)}, anchor=north},
    xmin=0, xmax=250,
    ymin=0, ymax=1,
    xtick distance = 100,
    ytick distance = 0.5,
]
    \addplot[thick] table [
        x expr=\coordindex,  
        y=value,            
        col sep=comma       
    ] {data/ag1.csv};
\end{axis}

\begin{axis}[
name=plot2,
at={($(plot1.south east)+(70,0)$)},
 height = .26\linewidth,
    width = .26\linewidth,
    axis lines=left,  
    axis line style={-latex},  
    xlabel={$i$},
    ylabel={$s_i^2$},
    ylabel style={
        at={(0.13,0.8)}, 
        anchor=south,  
        rotate=270  
    },
    xlabel style = {
        at={(0.9,0.02)},
        anchor=south,
    },
title style={at={(0.5,-0.3)}, anchor=north},
    xmin=0, xmax=250,
    ymin=0, ymax=1,
    xtick distance = 100,
    ytick distance = 0.5,
]
    \addplot[thick] table [
        x expr=\coordindex,  
        y=value,            
        col sep=comma       
    ] {data/ag3.csv};
\end{axis}

\begin{axis}[
name=plot3,
at={($(plot2.south east)+(70,0)$)},
 height = .26\linewidth,
    width = .26\linewidth,
    axis lines=left,  
    axis line style={-latex},  
    xlabel={$i$},
    ylabel={$s_i^2$},
    ylabel style={
        at={(0.13,0.8)}, 
        anchor=south,  
        rotate=270  
    },
    xlabel style = {
        at={(0.9,0.02)},
        anchor=south,
    },
title style={at={(0.5,-0.3)}, anchor=north},
    xmin=0, xmax=250,
    ymin=0, ymax=1,
    xtick distance = 100,
    ytick distance = 0.5,
]
    \addplot[thick] table [
        x expr=\coordindex,  
        y=value,            
        col sep=comma       
    ] {data/ag5.csv};
\end{axis}

\begin{axis}[
name=plot4,
at={($(plot1.south west)+(0,-100)$)},
 height = .26\linewidth,
    width = .26\linewidth,
    axis lines=left,  
    axis line style={-latex},  
    xlabel={$d$},
    xlabel style = {
        at={(0.95,0)},
        anchor=south,
    },
title style={at={(0.5,-0.3)}, anchor=north},
title={(a) Weak association},
    xmin=0, xmax=250,
    ymin=0.4, ymax=0.8,
    xtick distance = 100,
    ytick distance = 0.1,
]
    \addplot[thick] table [
        x expr=\coordindex,  
        y expr={min(\thisrow{tau} / 4, 1.4)},            
        col sep=comma       
    ] {data/ag2.csv};
    \addplot[thick, dashed] table [
        x expr=\coordindex,  
        y=mse,            
        col sep=comma       
    ] {data/ag2.csv};
\end{axis}

\begin{axis}[
name=plot5,
at={($(plot4.south east)+(70,0)$)},
 height = .26\linewidth,
    width = .26\linewidth,
    axis lines=left,  
    axis line style={-latex},  
    xlabel={$d$},
    xlabel style = {
        at={(0.95,0)},
        anchor=south,
    },
title style={at={(0.5,-0.3)}, anchor=north},
title={(b) Moderate association},
    xmin=0, xmax=250,
    ymin=0.4, ymax=0.8,
    xtick distance = 100,
    ytick distance = 0.1,
]
    \addplot[thick] table [
        x expr=\coordindex,  
        y expr={min(\thisrow{tau} / 4, 1.4)},     
        col sep=comma       
    ] {data/ag4.csv};
    \addplot[thick, dashed] table [
        x expr=\coordindex,  
        y=mse,    
        col sep=comma       
    ] {data/ag4.csv};
\end{axis}

\begin{axis}[
name=plot6,
at={($(plot5.south east)+(70,0)$)},
 height = .26\linewidth,
    width = .26\linewidth,
    axis lines=left,  
    axis line style={-latex},  
    xlabel={$d$},
    xlabel style = {
        at={(0.95,0)},
        anchor=south,
    },
title style={at={(0.5,-0.3)}, anchor=north},
title={(c) Strong association},
    xmin=0, xmax=250,
    ymin=0.4, ymax=0.8,
    xtick distance = 100,
    ytick distance = 0.1,
]
    \addplot[thick] table [
        x expr=\coordindex,  
        y expr={min(\thisrow{tau} / 6, 1.4)},            
        col sep=comma       
    ] {data/ag6.csv};
    \addplot[thick, dashed] table [
        x expr=\coordindex,  
        y=mse,            
        col sep=comma       
    ] {data/ag6.csv};
\end{axis}
\end{tikzpicture}
    \caption{Metric illustration on \texttt{abalone}. \textbf{Top row:} context spectra. \textbf{Bottom row:} solid curves are $\tau_d$ divided by $6$; dashed curves are the actual downstream prediction error. We divide $\tau_d$ by $6$ to fit it in the same plot.}
    \label{fig:taud}
\end{figure}
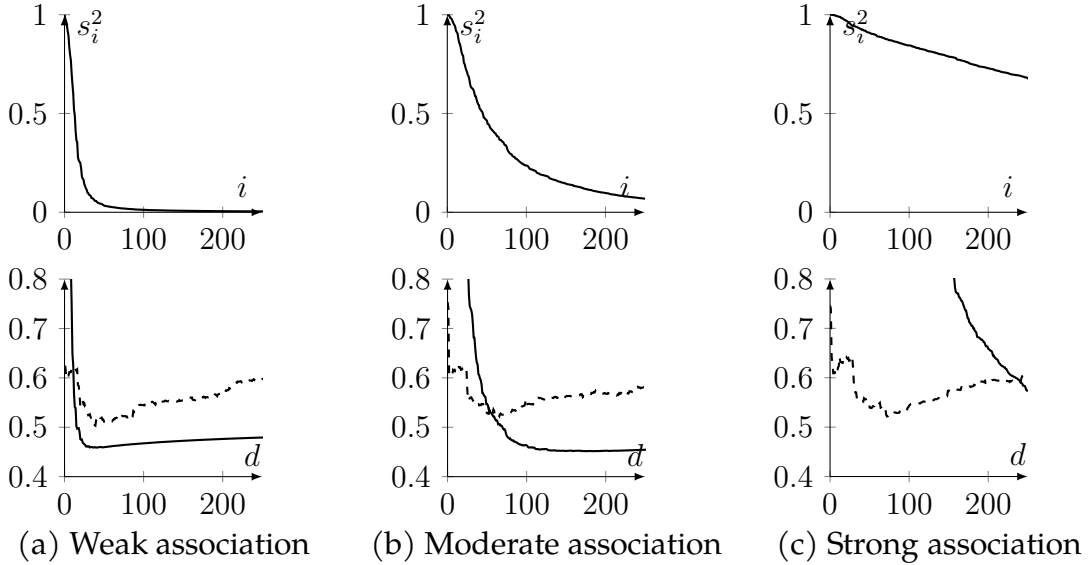

\begin{figure}[t]
    \centering
    \begin{tikzpicture}
\begin{axis}[
name=plot1,
 height = .26\linewidth,
    width = .26\linewidth,
    axis lines=left,  
    axis line style={-latex},  
    xlabel={$i$},
    ylabel={$s_i^2$},
    ylabel style={
        at={(0.13,0.8)}, 
        anchor=south,  
        rotate=270  
    },
    xlabel style = {
        at={(0.9,0.02)},
        anchor=south,
    },
title style={at={(0.5,-0.3)}, anchor=north},
    xmin=0, xmax=250,
    ymin=0, ymax=1,
    xtick distance = 100,
    ytick distance = 0.5,
]
    \addplot[thick] table [
        x expr=\coordindex,  
        y=eig,            
        col sep=comma       
    ] {data/mnist-05.csv};
\end{axis}

\begin{axis}[
name=plot2,
at={($(plot1.south east)+(70,0)$)},
 height = .26\linewidth,
    width = .26\linewidth,
    axis lines=left,  
    axis line style={-latex},  
    xlabel={$i$},
    ylabel={$s_i^2$},
    ylabel style={
        at={(0.13,0.8)}, 
        anchor=south,  
        rotate=270  
    },
    xlabel style = {
        at={(0.9,0.02)},
        anchor=south,
    },
title style={at={(0.5,-0.3)}, anchor=north},
    xmin=0, xmax=250,
    ymin=0, ymax=1,
    xtick distance = 100,
    ytick distance = 0.5,
]
    \addplot[thick] table [
        x expr=\coordindex,  
        y=eig,            
        col sep=comma       
    ] {data/mnist-02.csv};
\end{axis}

\begin{axis}[
name=plot3,
at={($(plot2.south east)+(70,0)$)},
 height = .26\linewidth,
    width = .26\linewidth,
    axis lines=left,  
    axis line style={-latex},  
    xlabel={$i$},
    ylabel={$s_i^2$},
    ylabel style={
        at={(0.13,0.8)}, 
        anchor=south,  
        rotate=270  
    },
    xlabel style = {
        at={(0.9,0.02)},
        anchor=south,
    },
title style={at={(0.5,-0.3)}, anchor=north},
    xmin=0, xmax=250,
    ymin=0, ymax=1,
    xtick distance = 100,
    ytick distance = 0.5,
]
    \addplot[thick] table [
        x expr=\coordindex,  
        y=eig,            
        col sep=comma       
    ] {data/mnist-005.csv};
\end{axis}

\begin{axis}[
name=plot4,
at={($(plot1.south west)+(0,-100)$)},
 height = .26\linewidth,
    width = .26\linewidth,
    axis lines=left,  
    axis line style={-latex},  
    xlabel={$d$},
    xlabel style = {
        at={(0.95,0)},
        anchor=south,
    },
title style={at={(0.5,-0.3)}, anchor=north},
title={(a) Weak association},
    xmin=0, xmax=250,
    ymin=0.2, ymax=0.8,
    xtick distance = 100,
    ytick distance = 0.1,
]
    \addplot[thick] table [
        x expr=\coordindex,  
        y expr={min(\thisrow{taud} / 6, 1.4)},            
        col sep=comma       
    ] {data/mnist-05.csv};
    \addplot[thick, dashed] table [
        x expr=\coordindex,  
        y=mse,            
        col sep=comma       
    ] {data/mnist-05.csv};
\end{axis}

\begin{axis}[
name=plot5,
at={($(plot4.south east)+(70,0)$)},
 height = .26\linewidth,
    width = .26\linewidth,
    axis lines=left,  
    axis line style={-latex},  
    xlabel={$d$},
    xlabel style = {
        at={(0.95,0)},
        anchor=south,
    },
title style={at={(0.5,-0.3)}, anchor=north},
title={(b) Moderate association},
    xmin=0, xmax=250,
    ymin=0.2, ymax=0.8,
    xtick distance = 100,
    ytick distance = 0.1,
]
    \addplot[thick] table [
        x expr=\coordindex,  
        y expr={min(\thisrow{taud} / 6, 1.4)},     
        col sep=comma       
    ] {data/mnist-02.csv};
    \addplot[thick, dashed] table [
        x expr=\coordindex,  
        y=mse,    
        col sep=comma       
    ] {data/mnist-02.csv};
\end{axis}

\begin{axis}[
name=plot6,
at={($(plot5.south east)+(70,0)$)},
 height = .26\linewidth,
    width = .26\linewidth,
    axis lines=left,  
    axis line style={-latex},  
    xlabel={$d$},
    xlabel style = {
        at={(0.95,0)},
        anchor=south,
    },
title style={at={(0.5,-0.3)}, anchor=north},
title={(c) Strong association},
    xmin=0, xmax=250,
    ymin=0.2, ymax=0.8,
    xtick distance = 100,
    ytick distance = 0.1,
]
    \addplot[thick] table [
        x expr=\coordindex,  
        y expr={min(\thisrow{taud} / 6, 1.4)},            
        col sep=comma       
    ] {data/mnist-005.csv};
    \addplot[thick, dashed] table [
        x expr=\coordindex,  
        y=mse,            
        col sep=comma       
    ] {data/mnist-005.csv};
\end{axis}
\end{tikzpicture}
\caption{Metric illustration on \texttt{MNIST}, similar to \Cref{fig:taud}.}
    \label{fig:taud-mnist}
\end{figure}

Now we empirically examine $\tau_d$ on the two datasets.
First, we apply the metric to the \texttt{abalone} dataset and use KNN as the context, similar to \Cref{sec:scaling-laws}.
We adjust the association of the context by changing $K$.
In particular, we choose $K=150$ (weak), $K = 30$ (moderate) and $K = 5$ (strong).
We obtain the exact eigenvalues and eigenfunctions of $\tkx$ using kernel PCA.
In \Cref{fig:taud}, we plot the spectra of the three contexts in the top row.
Then, in the bottom row, we compare $\tau_d$ against the prediction error of the linear probe under different $d$.
We can see that when the association is weak or moderate, $\tau_d$ first decreases and then increases, which tracks the actual error.
However, when the association is too strong, $\tau_d$ monotonically decreases with $d$, and it cannot track the actual error.

Second, we apply the metric to the \texttt{MNIST} dataset.
The context is random cropping with crop ratio $\alpha$.
We adjust the association of the context by changing $\alpha$.
In particular, we choose $\alpha=0.5$ (weak), $\alpha=0.2$ (moderate) and $\alpha=0.05$ (strong).
Since kernel PCA is not scalable to datasets as large as MNIST, we instead train a neural network.
Specifically, we train a LeNet \cite{lecun1998gradient} using the non-contrastive learning objective formulated earlier, and the AdamW optimizer.
Then, we estimate the top eigenvalues using the post-hoc approach in \Cref{sec:extract-exact}.
The downstream task is a binary classification task---whether the digit is greater than 4.
After pretraining, a linear probe is fit on top of $\Phi$ using ridge regression.
The result is plotted in \Cref{fig:taud-mnist}.

\begin{figure}[!t]
    \centering
\begin{tikzpicture}
\begin{axis}[
name=plot1,
title={(a) \texttt{abalone}},
 height = .3\linewidth,
    width = .38\linewidth,
    axis lines=left,  
    axis line style={-latex},  
    xlabel={$i$},
    ylabel={Cosine similarity},
    xlabel style = {
        at={(0.95,0.08)},
        anchor=south,
    },
title style={at={(0.5,-0.3)}, anchor=north},
    xmin=-5, xmax=150,
    ymin=0, ymax=0.5,
    xtick distance = 50,
    ytick distance = 0.1,
]
    \addplot[thick] table [
        x expr=\coordindex,  
        y=ui,            
        col sep=comma       
    ] {data/abalone-ui.csv};
\end{axis}

\begin{axis}[
name=plot2,
title={(b) \texttt{MNIST}},
at={($(plot1.south east)+(80,0)$)},
 height = .3\linewidth,
    width = .38\linewidth,
    axis lines=left,  
    axis line style={-latex},  
    xlabel={$i$},
    xlabel style = {
        at={(0.95,0.08)},
        anchor=south,
    },
title style={at={(0.5,-0.3)}, anchor=north},
    xmin=-5, xmax=150,
    ymin=0, ymax=0.5,
    xtick distance = 50,
    ytick distance = 0.1,
]
    \addplot[thick] table [
        x expr=\coordindex,  
        y=ui,            
        col sep=comma       
    ] {data/mnist-05.csv};
\end{axis}

\end{tikzpicture}
    \caption{Comparison of the downstream task between \texttt{abalone} and \texttt{MNIST}.}
    \label{fig:mnist-compare}
\end{figure}
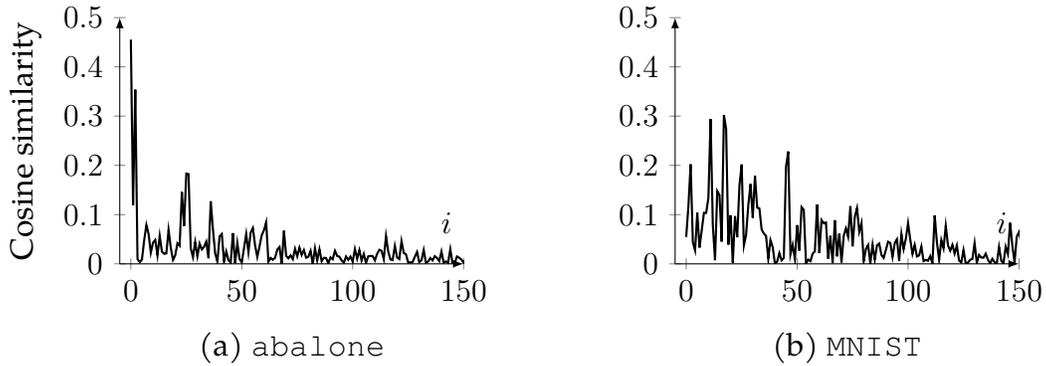

From \Cref{fig:taud-mnist}, we can see that when the association is not too strong, $\tau_d$ first decreases and then increases, similar to \Cref{fig:taud}.
However, on \texttt{MNIST}, the downstream error monotonically decreases with $d$, unlike \texttt{abalone}.
This disparity is due to the difference between the two downstream tasks.
To demonstrate this, in \Cref{fig:mnist-compare} we plot the cosine similarity between the target function $f^*$ and the estimated $i$-th eigenfunction on the two datasets.
We can see that the variance of $f^*$ on \texttt{abalone} is mostly concentrated on the top-$5$ eigenfunctions, with the first cosine similarity being almost $0.5$.
In contrast, the variance of $f^*$ on \texttt{MNIST} is more scattered, and the cosine similarity is still close to $0.1$ for the $150$-th eigenfunction.
Consequently, having a large $d$ on \texttt{abalone} will have a little impact on the approximation error but will increase the estimation error significantly.
On the other hand, having a larger $d$ on \texttt{MNIST} will decrease the approximation error more than it increases the estimation error, which is why the total error monotonically decreases with $d$.

The takeaway from this experiment is that although in general a context with a moderate association is good, in reality it still depends on the actual downstream task.
For example, on \texttt{abalone} the weakest context actually leads to the lowest error, because the variance of $f^*$ is concentrated on the top-$5$ eigenfunctions.
On the other hand, on \texttt{MNIST} the strongest context leads to the lowest error, because the variance of $f^*$ is scattered among a lot of features, and a stronger association allows more features to be discovered.
Hence, no evaluation metric would universally work for all contexts and downstream tasks, but a metric would still be useful if it correlates well with the actual error in most scenarios, and thus can provide insights into choosing the right context and the right hyperparameters, such as the mask or crop ratio.

\paragraph{Empirical verification of the proposed metric.}
Although our metric is derived from the decomposition of the prediction error, it is still heuristic and is not an upper bound of the actual error. Here we show that the metric correlates well with the actual error on many real datasets. 
Therefore, our metric is useful as it can help practitioners to select among various pretraining methods or choose the hyperparameters efficiently.

We use 28 real classification and regression datasets from OpenML that are widely used in machine learning research.
Each dataset is randomly split into a pretrain set, a labeled downstream training set, and a downstream test set by 70\%-15\%-15\%.
For each $d$, we obtain the top-$d$ eigenfunctions of $\tkx$ via kernel PCA, and then fit a linear probe on top of it using ridge regression.
Then, we select the best $d^*$ that achieves the lowest test mean squared error, which is denoted by $\err_{d^*}$.
The correlation between $\tau$ and $\err_{d^*}$ is reported.
The following four types of contexts are used in the experiment.
\begin{itemize}
    \item RBF kernels: $k(x,a) = \exp(-\gamma \norm{x-a}^2)$. Define $P^+(a|x) \propto k(x,a)$ for each $x$.
    \item KNN: $P^+(a|x) = K^{-1}$ if $a$ is a KNN of $x$; otherwise, it is $0$.
    \item RBF $\star$ Masking: First randomly mask 20\% of the features, and then apply RBF kernels to the other features. The $\tkx$ of this context can be estimated as follows: first randomly draw $50$ masks, and then compute their average $\tkx$.
    \item KNN $\star$ Masking: 20\% random masking and then apply KNN.
\end{itemize}

For each of these contexts, $\gA = \gX$.
For each type, 35 contexts are obtained by adjusting the $\gamma$ for RBF kernels, and $K$ for KNN. The association between $X$ and $A$ for these 35 contexts are different, and the experiment makes sure that the contexts in every type range from very weak to very strong association.
``$\star$ Masking'' here means the convolution with masking, as mentioned in \Cref{sec:learn-mixture}.
We do not use masking alone because the dual kernel of masking is hard to estimate.

\begin{table}[t]
    \centering
\begin{tabular}{llll|ll}
\toprule
Dataset & Size ($\uparrow$) & \#Feature & Type & Pearson  & Distribution  \\
\midrule
credit-approval    & 690   & 15   & Cls & 0.583  & 0.683  \\ 
breast-w           & 699   & 9    & Cls & 0.072  & 0.255  \\ 
diabetes           & 768   & 8    & Cls & 0.737  & 0.740  \\ 
solar\_flare       & 1066  & 10   & Reg & 0.019  & 0.262  \\ 
Moneyball          & 1232  & 14   & Reg & 0.680  & 0.650  \\ 
yeast              & 1269  & 8    & Cls & 0.221  & 0.256  \\ 
cmc                & 1473  & 9    & Cls & 0.867  & 0.860  \\ 
Wine               & 1599  & 11   & Reg & -0.084 & 0.212  \\ 
scene              & 2407  & 299  & Cls & 0.608  & 0.685  \\ 
dna                & 3186  & 180  & Cls & 0.881  & 0.843  \\ 
splice             & 3190  & 60   & Cls & 0.831  & 0.801  \\ 
kr-vs-kp           & 3196  & 36   & Cls & 0.543  & 0.512  \\ 
abalone            & 4177  & 8    & Reg & 0.028  & 0.470  \\  
spambase           & 4601  & 57   & Cls & 0.775  & 0.858  \\  
colleges           & 7603  & 44   & Reg & 0.155  & 0.387  \\  
mushroom           & 8124  & 22   & Cls & 0.185  & 0.340  \\  
kin8nm             & 8192  & 8    & Reg & 0.805  & 0.760  \\ 
pumadyn32nh        & 8192  & 32   & Reg & 0.938  & 0.961  \\ 
cpu\_activity      & 8192  & 21   & Reg & 0.709  & 0.825  \\ 
SpeedDating        & 8378  & 120  & Cls & 0.590  & 0.656  \\ 
grid\_stability    & 10000 & 12   & Reg & 0.925  & 0.911  \\ 
sulfur             & 10081 & 6    & Reg & -0.180 & 0.487  \\ 
brazilian\_houses  & 10692 & 9    & Reg & -0.290 & 0.563  \\ 
fifa               & 19178 & 28   & Reg & -0.349 & 0.663  \\ 
superconductivity  & 21263 & 81   & Reg & 0.141  & 0.367  \\ 
kings\_county      & 21613 & 21   & Reg & 0.842  & 0.882  \\ 
health\_insurance  & 22272 & 11   & Reg & 0.601  & 0.749  \\  
cps88wages         & 28155 & 6    & Reg & 0.250  & 0.479  \\ 
\midrule
\multicolumn{4}{r|}{\textbf{Mean}} & 0.431 &  0.611 \\ 
\multicolumn{4}{r|}{\textbf{Median}} & 0.587 & 0.659 \\ 
\bottomrule
\end{tabular}
    \caption{Correlation between $\tau$ and the actual error $\err_{d^*}$ on all 4 types of contexts.}
    \label{tab:metric-verification}
\end{table}

\Cref{tab:metric-verification} reports the correlation between $\tau$ and $\err_{d^*}$ over all the 140 contexts.
The most common metric is the Pearson correlation, but it can only detect linear correlations, while the correlation between $\tau$ and $\err_{d^*}$ is not necessarily linear.
Thus, we also report the distance correlation \cite{648ae489-f5d9-3200-b1e9-ab0b264416e5}, which is another common metric that can detect non-linear correlations but cannot tell if the correlation is positive or negative because this metric is always non-negative.

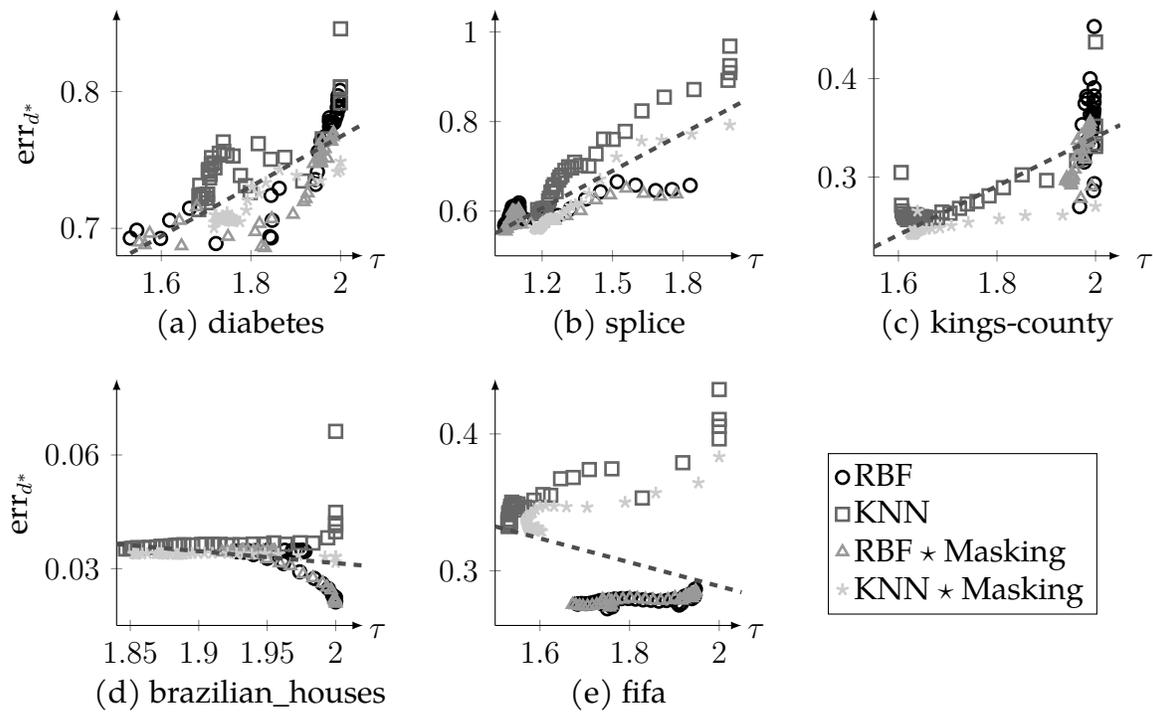
\begin{figure}[t]
    \centering
    \begin{tikzpicture}
\begin{axis}[
name=plot1,
clip marker paths=true,
 height = .30\linewidth,
    width = .30\linewidth,
    axis lines=left,  
    axis line style={-latex},  
    xlabel style = {
        at={(1.06,0.05)},
        anchor=north,
    },
title style={at={(0.5,-0.24)}, anchor=north},
title={(a) diabetes},
    xmin=1.5, xmax=2.05,
    ymin=0.68, ymax=0.86,
    xtick distance = 0.2,
    ytick distance = 0.1,
    xlabel={$\tau$},
    ylabel={$\err_{d^*}$},
    legend pos=north west,
    scatter/classes={
        1={mark=o,black,line width=1pt,mark size=2.4pt},   
        2={mark=square,black!60,line width=1pt,mark size=2.4pt}, 
        3={mark=triangle,black!40,line width=1pt,mark size=2.4pt}, 
        4={mark=star,black!20,line width=1pt,mark size=2.4pt}
    }
]

\addplot[scatter, only marks, scatter src=explicit symbolic] 
    table[col sep=comma, x=tau, y=mse, meta=type] {data/diabetes.csv};

    \addplot[black!70, ultra thick, dashed, domain=1.5:2.05] {0.18192012*x + 0.40324192};
\end{axis}

\begin{axis}[
name=plot2,
at={($(plot1.south east)+(50,0)$)},
clip marker paths=true,
 height = .30\linewidth,
    width = .30\linewidth,
    axis lines=left,  
    axis line style={-latex},  
    xlabel style = {
        at={(1.06,0.05)},
        anchor=north,
    },
title style={at={(0.5,-0.24)}, anchor=north},
title={(b) splice},
    xmin=1.0, xmax=2.05,
    ymin=0.5, ymax=1.05,
    xtick distance = 0.3,
    ytick distance = 0.2,
    xlabel={$\tau$},
    legend pos=north west,
    scatter/classes={
        1={mark=o,black,line width=1pt,mark size=2.4pt},   
        2={mark=square,black!60,line width=1pt,mark size=2.4pt}, 
        3={mark=triangle,black!40,line width=1pt,mark size=2.4pt}, 
        4={mark=star,black!20,line width=1pt,mark size=2.4pt}
    }
]

\addplot[scatter, only marks, scatter src=explicit symbolic] 
    table[col sep=comma, x=tau, y=mse, meta=type] {data/splice.csv};

    \addplot[black!70, ultra thick, dashed, domain=1.0:2.05] {0.27877287*x + 0.27161478};
\end{axis}

\begin{axis}[
name=plot3,
at={($(plot2.south east)+(50,0)$)},
clip marker paths=true,
 height = .30\linewidth,
    width = .30\linewidth,
    axis lines=left,  
    axis line style={-latex},  
    xlabel style = {
        at={(1.1,0.05)},
        anchor=north,
    },
title style={at={(0.5,-0.24)}, anchor=north},
title={(c) kings-county},
    xmin=1.55, xmax=2.05,
    ymin=0.22, ymax=0.47,
    xtick distance = 0.2,
    ytick distance = 0.1,
    xlabel={$\tau$},
    legend pos=north west,
    scatter/classes={
        1={mark=o,black,line width=1pt,mark size=2.4pt},   
        2={mark=square,black!60,line width=1pt,mark size=2.4pt}, 
        3={mark=triangle,black!40,line width=1pt,mark size=2.4pt}, 
        4={mark=star,black!20,line width=1pt,mark size=2.4pt}
    }
]

\addplot[scatter, only marks, scatter src=explicit symbolic] 
    table[col sep=comma, x=tau, y=mse, meta=type] {data/kings-county.csv};

    \addplot[black!70, ultra thick, dashed, domain=1.55:2.05] {0.24798084*x -0.15551522};
\end{axis}

\begin{axis}[
name=plot4,
at={($(plot1.south west)+(0,-140)$)},
clip marker paths=true,
 height = .30\linewidth,
    width = .30\linewidth,
    axis lines=left,  
    axis line style={-latex},  
    ylabel style={
        at={(-0.3,0.5)}
    },
    xlabel style = {
        at={(1.06,0.05)},
        anchor=north,
    },
scaled ticks=false, 
/pgf/number format/fixed   ,
title style={at={(0.5,-0.24)}, anchor=north},
title={(d) brazilian\_houses},
    xmin=1.84, xmax=2.02,
    ymin=0.015, ymax=0.08,
    xtick distance = 0.05,
    ytick distance = 0.03,
    xlabel={$\tau$},
    ylabel={$\err_{d^*}$},
    legend pos=north west,
    scatter/classes={
        1={mark=o,black,line width=1pt,mark size=2.4pt},   
        2={mark=square,black!60,line width=1pt,mark size=2.4pt}, 
        3={mark=triangle,black!40,line width=1pt,mark size=2.4pt}, 
        4={mark=star,black!20,line width=1pt,mark size=2.4pt}
    }
]

\addplot[scatter, only marks, scatter src=explicit symbolic] 
    table[col sep=comma, x=tau, y=mse, meta=type] {data/brazilian.csv};

    \addplot[black!70, ultra thick, dashed, domain=1.84:2.02] {-0.031479845*x + 0.09441593};
\end{axis}

\begin{axis}[
name=plot5,
at={($(plot4.south east)+(50,0)$)},
clip marker paths=true,
 height = .30\linewidth,
    width = .30\linewidth,
    axis lines=left,  
    axis line style={-latex},  
    xlabel style = {
        at={(1.06,0.05)},
        anchor=north,
    },
title style={at={(0.5,-0.24)}, anchor=north},
title={(e) fifa},
    xmin=1.5, xmax=2.05,
    ymin=0.26, ymax=0.44,
    xtick distance = 0.2,
    ytick distance = 0.1,
    xlabel={$\tau$},
    legend style={
        at={(1.35,0.7)},
        anchor=north west,
        cells={anchor=west},
    },
    scatter/classes={
        1={mark=o,black,line width=1pt,mark size=2.4pt},   
        2={mark=square,black!60,line width=1pt,mark size=2.4pt}, 
        3={mark=triangle,black!40,line width=1pt,mark size=2.4pt}, 
        4={mark=star,black!20,line width=1pt,mark size=2.4pt}
    }
]

\addplot[scatter, only marks, scatter src=explicit symbolic] 
    table[col sep=comma, x=tau, y=mse, meta=type] {data/fifa.csv};
\legend{RBF,KNN,RBF $\star$ Masking,KNN $\star$ Masking};
    \addplot[black!70, ultra thick, dashed, domain=1.5:2.05] {-0.08744848*x + 0.46356109};
\end{axis}

\end{tikzpicture}
    \caption{Scatter plots of $\tau$ versus $\err_{d^*}$. Dashed line: Linear fit.}
    \label{fig:usefulness-metric}
\end{figure}

The median reported in the table shows that on more than half of the datasets, the Pearson correlation is over $0.5$, 
which is generally considered a strong correlation.
The distance correlation is even higher.
As expected, the metric does not work on all datasets.
For example, the Pearson correlation is very negative on \texttt{brazilian\_houses} and \texttt{fifa}.

To understand when our metric might fail, we further visualize the results by plotting $\tau$ against $\err_{d^*}$ on five of the datasets in \Cref{fig:usefulness-metric}.
In this figure, plots (a), (b) and (c) are three success cases where a clear positive correlation can be observed, and plots (d) and (e) display two failure cases.
Plot (d) shows a common failure case: if $\tau$ is very close to $2 = \beta + 1$, meaning that the metric believes that the association is extremely weak or extremely strong, then the metric will predict that the context is bad.
However, a generally bad context can still be good on some tasks.
For example, a very weak context still works well on a task that only uses the top-$3$ singular functions of the context.
Therefore, it is advisable to abstain from using the metric when it is too close to $\beta + 1$.

Plot (e) shows a case where the metric is generally good for every single context type but has poor cross-type behavior. Specifically, it fails to predict that KNN is worse than RBF on this dataset.
This suggests that our metric might not be able to compare different types of contexts. For example, if two contexts of completely different types have similar spectra, then our metric will indicate that they are similarly useful. This is because our metric only depends on the spectrum.
However, it could be possible that for a particular task, one context is good and the other is bad, and our metric cannot reflect this disparity.

Overall, although there does not exist a universal metric that works for all contexts and tasks, and our metric does have failure cases, the experiment results here provide empirical evidence that more often than not, the proposed metric correlates well with the actual prediction error of the downstream linear probe.
Hence, the proposed metric is useful for choosing hyperparameters and comparing contexts in practice.

In summary, this chapter first defined the compatibility between a task and a context, and then discussed the intrinsic evaluation of encoders and contexts.
The key takeaway is that when we know a priori that the downstream task is compatible with the context we are given, then the optimal thing to do is learning the contexture.
The intrinsic evaluation of an arbitrary encoder depends on two key concepts---the induced RKHS and the ratio trace.
For the intrinsic evaluation of contexts, the key takeaway is that a good context should have a moderate association.
Moreover, we proposed a metric that only depends on the spectrum of the context.
The metric correlates with the actual prediction error on many real datasets.

\paragraph{Code and data availability.}
The code for \Cref{fig:taud} can be found at {\scriptsize \url{https://1drv.ms/u/c/ea9fe908498c8b82/EV11dVAUVCdCnkw31CHrZC0BOEmsudZ7swpTKvzcfod5uA?e=axm6XG}}.
The code for \Cref{tab:metric-verification} is at {\scriptsize \url{https://1drv.ms/u/c/ea9fe908498c8b82/EcqvS70ynvdCsma6MoSDRcwBZPRw5uThHUXQcz9P7vNQZQ?e=7ULQey}}.
All datasets can be downloaded from OpenML.

\chapter{Mixing Multiple Contexts}
\label{chap:mixture}

We have shown that creating better contexts is imperative for further improving the performance of foundation models.
However, creating new contexts from scratch is extremely challenging.
Normally, it requires new domain knowledge obtained from scientific research, or collecting new data through experiments or human labeling such as in RLHF, both of which require considerable effort.
As a result, the contexts used for pretraining rarely change.
For example, in computer vision, although various objectives have been proposed such as contrastive and non-contrastive learning and masked autoencoders, the context is always based on image corruption, such as random cropping, masking and color distortion.
In NLP, base language models have always been pretrained on the context of masking tokens like BERT \cite{devlin2018bert}.
There are many variants of BERT, such as determining whether a sentence completion is correct instead of actually completing a masked sentence like Electra \cite{Clark2020ELECTRA},
predicting the next token rather than tokens in the middle of a sentence like GPT \cite{radford2019language}, and predicting the next group of tokens like Medusa \cite{cai2024medusa}.
In all these variants, the context variable $A$ is a masked version of $X$.

This chapter introduces a much easier way to obtain better contexts.
We know that a good context should have a moderate association.
Consider the following scenario: we have a number of contexts, but none of them is useful enough because the association is either too strong or too weak.
Then, how can we get a better context?
The idea is to mix these contexts together, so that we can obtain a context with a moderate association.

In fact, mixing multiple contexts is quite common in practice, though it has never been fully formalized as a unified framework like this chapter.
One very widely used method is composing different data augmentation techniques together.
For example, common data augmentations for images include translation, rotation, random cropping, color distortion, etc.
Each augmentation defines one context, and sequentially applying them to the same image leads to the \textbf{convolution} of their contexts.

Another common practice when there are multiple contexts is optimizing a weighted sum of different objectives.
Suppose there are $r$ desiderata we want our model to satisfy, and desideratum $j$ can be achieved by minimizing objective $\gR_j$.
A natural idea is to minimize $\sum w_j \gR_j$ for some $w_1,\cdots,w_r > 0$, so that we minimize all $\gR_j$ simultaneously.
In machine learning, if $\gR_1$ is the main objective, then other $\gR_j$ are also called penalty terms.
Common penalty terms include the $L^2$ or RKHS norm of the model weight, the local smoothness of the model \wrt{} a certain manifold, the distance to a reference model, etc.
The weighted sum of the objectives learns the contexture of what we call the \textbf{convex combination} of the $r$ contexts.

To get an intuition of convolution and convex combination, consider the example in \Cref{fig:mix-graph}.
In this example, $|\gX| = 4$.
We know that $P^+$ induces a joint distribution on $\gX \times \gX: P^+(x,x') = \int P^+(x|a) P^+(x'|a) d \pa(a)$.
We can then obtain $P^+(x'|x)$, which are labeled as edge weights in \Cref{fig:mix-graph}.
In other words, each context induces a random walk on the graph.
There are two contexts in \Cref{fig:mix-graph}. $P^+_1$ corresponds to the solid edges, and $P^+_2$ corresponds to the dashed edges.

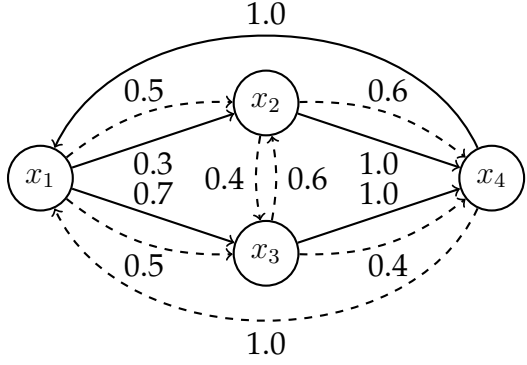
\begin{figure}[t]
    \centering
    \begin{tikzpicture}

\node(x1) [circle, draw, thick] at (0,0) {$x_1$};
\node(x2) [circle, draw, thick] at (3,1) {$x_2$};
\node(x3) [circle, draw, thick] at (3,-1) {$x_3$};
\node(x4) [circle, draw, thick] at (6,0) {$x_4$};

\draw[thick,->] (x1) to node[below] {0.3} (x2);
\draw[thick,->, dashed] (x1) to[bend left=20] node[above] {0.5} (x2);
\draw[thick,->] (x1) to node[above] {0.7} (x3);
\draw[thick,->, dashed] (x1) to[bend right=20] node[below] {0.5} (x3);

\draw[thick,->] (x2) to node[below] {1.0} (x4);
\draw[thick,->, dashed] (x2) to[bend left=20] node[above] {0.6} (x4);
\draw[thick,->, dashed] (x2) to[bend right=10] node[left] {0.4} (x3);

\draw[thick,->] (x3) to node[above] {1.0} (x4);
\draw[thick,->,dashed] (x3) to[bend right=10] node[right] {0.6} (x2);
\draw[thick,->, dashed] (x3) to[bend right=20] node[below] {0.4} (x4);

\draw[thick,->] (x4) to[bend right=65] node[above] {1.0} (x1);
\draw[thick, ->, dashed] (x4) to[bend left=65] node[below] {1.0} (x1);

\end{tikzpicture}
    \caption{An example of two contexts (solid and dashed edges) where $|\gX| = 4$.}
    \label{fig:mix-graph}
\end{figure}

Convolution, denoted by $\star$, is equivalent to a multi-step random walk.
$P^+_2 \star P^+_1$ means that we first walk one step on the solid edges ($P^+_1$), and then walk one step on the dashed edges ($P^+_2$).
For example, $(P^+_2 \star P^+_1) (x_4|x_1) = \sum_x P^+_2(x_4|x) P^+_1(x|x_1) = 0.6 \times 0.3 + 0.4 \times 0.7 = 0.46$.
Similarly, $P^+_1 \star P^+_2 \star P^+_1$ is equivalent to a solid-dashed-solid random walk.
In general, convolution does not have the commutative property.

Convex combination, denoted by $+$, is equivalent to a stochastic one-step random walk.
$0.6P^+_1 + 0.4 P^+_2$ means that we walk one step; with probability $0.6$ the step is on the solid edges, and with probability $0.4$ the step is on the dashed edges.
For example, if we are standing at $x_1$, then under $0.6P^+_1 + 0.4 P^+_2$, with probability $0.6 \times 0.3 + 0.4 \times 0.5 = 0.38$ we will walk to $x_2$, and with probability $0.62$ we will walk to $x_3$.
Convex combination can be combined with convolution.
For example, $P^+_2 \star (0.6P^+_1 + 0.4 P^+_2)$ means that we first take one step using $0.6P^+_1 + 0.4 P^+_2$, and then take another step using $P^+_2$.

We can show that the set of contexts forms a near-ring with scalar multiplication (also called a module), where convolution is multiplication and convex combination is addition with scalar multiplication.
``Near''-ring means that it only has the right distributive property but not the left one; that is, $(0.6P^+_1 + 0.4 P^+_2) \star P^+_2 = 0.6 P^+_1 \star P^+_2 + 0.4 P^+_2 \star P^+_2$, but $P^+_2 \star (0.6P^+_1 + 0.4 P^+_2)$ is not distributive.

There is a third operation for mixing multiple contexts called concatenation.
Concatenation is completely different from the other two operations, because convolution and convex combination act on the input space $\gX$, whereas concatenation acts on the output space of the encoder.
Specifically, given $r$ contexts, one trains an individual encoder $\Phi_j$ for each context, and then concatenate them as $\Phi(x) = [\Phi_1(x), \cdots, \Phi_r(x)]$.
Concatenation is a classical and very popular method, and it is connected to a whole field in data science known as feature engineering.
In feature engineering, people create different features based on different signals (formulated as contexts in this thesis), and then train a model on their concatenation.

Mixture of experts (MOE) \cite{jacobs1991adaptive} is a popular method that is a stronger version of concatenation.
In MOE, we first train individual encoders $\Phi_1,\cdots,\Phi_r$ for the $r$ contexts, and then at the downstream stage, we train a gating function $g: \gX \rightarrow \R^r$ that assigns weights to the $r$ encoders for each $x$.
The overall encoder is $\Phi(x) = \sum_{j=1}^r g(x)_j \Phi_j(x)$, and finally a linear probe is fit on top of $\Phi$.
The common practice is to implement $g$ as a small neural network, and its output space is usually restricted to $\Delta^{r-1}$, the $(r-1)$-dimensional unit simplex, so that $g(x)$ is a probability distribution over the $r$ encoders.
This can be easily done by adding a softmax layer to the end of the neural network.
Concatenation is a special case of MOE, where $g$ is restricted to be a constant function.

In what follows, we conduct a deep analysis on the three base operations for mixing multiple contexts: convolution, convex combination and concatenation.
Importantly, we discuss in what situations each operation should be used.
After that, we apply the three operations to real-world tabular datasets, and we find that they can improve the performance of state-of-the-art methods such as XGBoost \cite{chen2016xgboost}.

\section{Convolution}
Suppose we have $r$ contexts given by $P^+_1,\cdots,P^+_r$.
Let $\gA_j$ be the context space of $P^+_j$,
and let $Q^+_j$ be the heuristic inverse of $P^+_j$ (\Cref{def:heuristic-inverse}).
As mentioned earlier, convolution is similar to composing multiple data augmentation techniques.
Assuming that we have $T$-access to every context, we can transform $X$ by $X \xrightarrow{P_1^+} A_1 \xrightarrow{Q_1^+} X_1 \xrightarrow{P_2^+} A_2 \xrightarrow{Q_2^+} X_2 \xrightarrow{P_3^+} \cdots \xrightarrow{Q_{r-1}^+} X_{r-1} \xrightarrow{P_r^+} A_r$.
This $A_r$ is defined as the context variable of the convolution $P_r^+ \star \cdots \star P_1^+$.
The heuristic inverse of $P_r^+ \star \cdots \star P_1^+$ is defined as $Q_r^+$.
Then, it is easy to see that convolution has the associative property.

In the general case, we might not have $T$-access to all contexts, but we can assume that each context has either $k$-access or $T$-access.
This is a weak assumption since it has been shown previously that any access can be converted to $k$-access.

For every $j \in [r]$, define a kernel $k_j$ as follows:
\begin{itemize}
    \item If it has $k$-access, let $k_j$ be the kernel we have access to.
    \item If it has $T$-access, let $k_j(x,x') = \int Q_j^+(x'|a) dP_j^+(a|x) / \px(x')$ if $j < r$.
    If $j = r$, then define $k_r = \kx_r$ as the exact dual kernel of context $r$.
\end{itemize}

\begin{proposition}
The integral operator of the dual kernel of the convolution of the $r$ contexts is equal to $T_{k_1} T_{k_2} \cdots T_{k_r} \cdots T_{k_2} T_{k_1}$.
\end{proposition}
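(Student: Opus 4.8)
The plan is to pass everything through expectation operators, write the convolution's operator as a composition of these, and telescope the product that defines its dual kernel. To set up the operator picture: for each $j\in[r]$ let $T_{P_j^+}$ be the expectation operator of $P_j^+$ and $T_{P_j^+}^{*}$ its adjoint, so that $T_{\kx_j}=T_{P_j^+}T_{P_j^+}^{*}$ by \Cref{def:two-kernels}. If context $j$ has $k$-access, $T_{k_j}$ is simply the integral operator of the supplied kernel $k_j$ (a symmetric positive operator on $\lxp$ approximating $T_{\kx_j}$). If context $j=r$ has $T$-access then $k_r=\kx_r$, so $T_{k_r}=T_{P_r^+}T_{P_r^+}^{*}$. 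If an intermediate context $j<r$ has $T$-access, let $T_{Q_j^+}$ be the expectation operator of its heuristic inverse $Q_j^+$ (\Cref{def:heuristic-inverse}), $(T_{Q_j^+}h)(a)=\int h(x)\,Q_j^+(x\mid a)\,dx$. A Fubini computation then identifies $T_{k_j}$ with a round trip:
\[
(T_{k_j}f)(x)=\int f(x')\,k_j(x,x')\,d\px(x')=\int\!\!\int f(x')\,Q_j^+(x'\mid a)\,dP_j^+(a\mid x)\,dx'=\bigl(T_{P_j^+}T_{Q_j^+}f\bigr)(x),
\]
i.e. $T_{k_j}=T_{P_j^+}T_{Q_j^+}$, corresponding to sampling $a\sim P_j^+(\cdot\mid x)$ and then $x'\sim Q_j^+(\cdot\mid a)$.

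Next, write the convolution as a composition. By construction its context variable $A_r$ is the endpoint of the chain $X=X_0\xrightarrow{P_1^+}A_1\xrightarrow{Q_1^+}X_1\xrightarrow{P_2^+}\cdots\xrightarrow{Q_{r-1}^+}X_{r-1}\xrightarrow{P_r^+}A_r$ (for a $k$-access intermediate context one replaces the round-trip step $X_{j-1}\to A_j\to X_j$ by the operator $T_{k_j}$ directly). The tower property of conditional expectation gives, for its expectation operator $T_{P_\star^+}$,
\[
T_{P_\star^+}=T_{P_1^+}T_{Q_1^+}\,T_{P_2^+}T_{Q_2^+}\cdots T_{P_{r-1}^+}T_{Q_{r-1}^+}\,T_{P_r^+}=T_{k_1}T_{k_2}\cdots T_{k_{r-1}}\,T_{P_r^+},
\]
the last equality by the round-trip identity above. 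Since the dual kernel of any context has integral operator $T_{P^+}T_{P^+}^{*}$ (\Cref{def:two-kernels}), the dual kernel of the convolution has operator
\[
T_{P_\star^+}T_{P_\star^+}^{*}=\bigl(T_{k_1}\cdots T_{k_{r-1}}\,T_{P_r^+}\bigr)\bigl(T_{k_1}\cdots T_{k_{r-1}}\,T_{P_r^+}\bigr)^{*}=T_{k_1}\cdots T_{k_{r-1}}\,\bigl(T_{P_r^+}T_{P_r^+}^{*}\bigr)\,T_{k_{r-1}}^{*}\cdots T_{k_1}^{*},
\]
using $(AB)^{*}=B^{*}A^{*}$, and $T_{P_r^+}T_{P_r^+}^{*}=T_{k_r}$ turns the center into $T_{k_r}$.

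It remains to drop the adjoints on the return leg, and this is the hard part. Each intermediate factor is self-adjoint on $\lxp$ in the cases at hand — for a $k$-access context because $k_j$ is a symmetric positive kernel, and for an intermediate $T$-access context provided the round trip $T_{P_j^+}T_{Q_j^+}$ is reversible with respect to $\px$ (equivalently, after symmetrizing $Q_j^+$, which does not change the convolution). Under this, $T_{k_j}^{*}=T_{k_j}$ for all $j<r$ and the product collapses to $T_{k_1}T_{k_2}\cdots T_{k_r}\cdots T_{k_2}T_{k_1}$, as claimed. A careful proof should therefore state reversibility — and the non-drift of the intermediate marginals off $\px$, so that every $T_{k_j}$ really acts on $\lxp$ — as an explicit hypothesis on the heuristic inverses; everything else, the Fubini identity and the associativity/tower bookkeeping, is routine.
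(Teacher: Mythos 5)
Your argument is correct and takes a genuinely different route from the paper's. The paper proceeds by induction on $r$ at the \emph{kernel} level: it asserts, for the dual kernel $k_r^+$ of the $r$-fold convolution, the recursion $k_r^+(x,x')=\iint k_1(x,z)\,k_{r-1}^+(z,z')\,k_1(z',x')\,d\px(z)\,d\px(z')$, reads this off as $T_{k_r^+}=T_{k_1}T_{k_{r-1}^+}T_{k_1}$, and closes the induction. You instead work directly at the \emph{operator} level: factor the expectation operator of the convolution through the Markov chain via the tower property, $T_{P_\star^+}=T_{k_1}\cdots T_{k_{r-1}}T_{P_r^+}$, then form $T_{P_\star^+}T_{P_\star^+}^*$ and telescope with $(AB)^{*}=B^{*}A^{*}$. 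Unrolling the paper's induction recovers your factorization, so the underlying mathematics is the same, but your version makes explicit two points the paper passes over. First, the raw adjoint expansion yields $T_{k_1}\cdots T_{k_{r-1}}\,T_{k_r}\,T_{k_{r-1}}^{*}\cdots T_{k_1}^{*}$, and collapsing this to the stated $T_{k_1}\cdots T_{k_r}\cdots T_{k_1}$ requires each intermediate $T_{k_j}$ to be self-adjoint. The paper's inductive kernel identity quietly builds in the same assumption: deriving the recursion from the chain actually produces $k_1(x',z')$ in the last slot, not the paper's $k_1(z',x')$, and these agree only when $k_1$ is symmetric — which the $T$-access formula $k_j(x,x')=\int Q_j^+(x'\mid a)\,dP_j^+(a\mid x)/\px(x')$ does not guarantee for a generic heuristic inverse $Q_j^+$. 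Second, you correctly flag that the intermediate marginals of $X_j$ along the chain need not remain $\px$, so the $T_{k_j}$ and $T_{k_j^+}$ appearing in the recursion do not automatically act on $\lxp$ as written. Both are real hypotheses absent from the proposition, and naming them — as you do — makes your proof more careful than the one in the paper, at the cost of having to argue for (or impose) reversibility of the heuristic inverses.
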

\begin{remark}
When each $T_{k_j}$ is a bounded compact self-adjoint operator, this operator is also bounded compact self-adjoint, so the Hilbert-Schmidt theorem still applies.
Note that this operator is not equal to $T_{k_r} \cdots T_{k_1}$, which is not necessarily a self-adjoint operator.
\end{remark}
\begin{proof}
We prove by induction on $r$. When $r=1$ this is obvious. Suppose the result holds for $r-1$. Let $k_{j}^+$ be the dual kernel between $X_1$ and $A_j$. Then, we have
\begin{equation*}
    k_r^+(x,x') = \iint k_1(x,z) k_{r-1}^+(z,z') k_1(z',x') d \px(z) d \px(z') ,
\end{equation*}
so it is easy to see that $T_{k_r^+} = T_{k_1} T_{k_{r-1}^+} T_{k_1}$, which shows that the result holds for $r$.
\end{proof}

We now discuss how to learn the contexture of a convolution.
If we have $T$-access to every context, then we can simply transform $X \rightarrow A_r$, and use SVME on $X$ and $A_r$.
If we have $T$-access to $P^+_1,\cdots,P^+_{r-1}$ but $k$-access to $P^+_r$, then it is also very simple.
We can transform $X \rightarrow X_{r-1}$, and use KISE on $X$ and $X_{r-1}$.

The more difficult scenario is when we have $k$-access to $P_j^+$ for some $j < r$.
Let $j_1 < j_2 < \cdots < j_l$ be all such $j$.
We now present an algorithm that learns the contexture.
First, initialize $\Phi^0 = \Phi$, and $X_0 = X$.
Second, transform $X_0 \rightarrow X_{j_1-1}$ using the $T$-access to $P^+_1,\cdots,P^+_{j_1-1}$.
Third, let $\Phi^1(X) = (T_{k_{j_1}} \Phi^0)(X_{j_1-1})$, which can be estimated with Monte Carlo using the $k$-access to $k_{j_1}$.
Fourth, let $X_{j_1} = X$, and transform $X_{j_1} \rightarrow X_{j_2-1}$ using the $T$-access to $P^+_{j_1+1}, \cdots, P^+_{j_2-1}$. 
Fifth, let $\Phi^2(X) = (T_{k_{j_2}} \Phi^1)(X_{j_2-1})$.
Repeat these two steps until we get $\Phi^l$, and transform $X_{j_l} \rightarrow X_{r-1}$.
Finally, the learning objective is SVME if we have $T$-access to $P_r^+$, or KISE if we have $k$-access to $P_r^+$.

The detailed algorithm is listed in \Cref{alg:convolution}.
One can prove that for any $t \in [0,l]$ and any $j \in [j_t, j_{t+1}-1]$, $\Phi^t(X_j)$ has the same distribution as $(T_{k_j} T_{k_{j-1}} \cdots T_{k_1} \Phi)(X)$.
Hence, $\Phi^l(X_{r-1})$ has the same distribution as $(T_{k_{r-1}} T_{k_{j-1}} \cdots T_{k_1} \Phi)(X)$.
By \Cref{thm:svme-spin}, this algorithm extracts the top-$d$ eigenspace of the operator.

\begin{algorithm}[t]
\caption{Extracting top-$d$ eigenspace of a convolution of contexts}
\label{alg:convolution}
\begin{algorithmic}[1]
\State Initialize encoder $\Phi: \gX \rightarrow \R^d$; If have $T$-access to context $r$, initialize $\Psi: \gA_r \rightarrow \R^d$
\For{each training step}
\State Sample a batch of samples $\oset{x_1,\cdots,x_m}$
\State Center $\Phi$: $\Phi \gets \Phi - \frac{1}{m} \sum_{i=1}^m \Phi(x_i)$
\State \textbf{for} $i \in [m]$ \textbf{do} $x_i^0 \gets x_i$; set $\mB \gets \vzero \in \R^{d \times m}$,  $k \gets$ \textbf{null}
\For{$j = 1,\cdots,r-1$}
    \If{have $k$-access to context $j$}
        \If{$k =$ \textbf{null}}
            \State $\mB \gets \brac{ \Phi(x_1^{j-1}), \cdots, \Phi(x_m^{j-1}) } \in \R^{d \times m}$
        \Else
            \State Set Gram matrix $\mG \in \R^{m \times m}$ as $\mG[p,q] = k(x_p, x_q^{j-1})$; $\mB \gets \frac{1}{m} \mB \mG$
        \EndIf
        \State $k \gets \kx_j$; \textbf{for} $i \in [m]$ \textbf{do} $x_i^j \gets x_i$ \Comment{reset $x_i^j$ to the original input} 
    \Else \Comment{$T$-access to context $j$}
        \State Sample $a_i^j \sim P_j^+(\cdot | x_i^{j-1}), \; x_i^j \sim Q_j^+(\cdot | a_i^j)$
    \EndIf
\EndFor

\If{$k =$ \textbf{null}} \Comment{for every $j \le r-1$, context $j$ has $T$-access}
    \State $\mB \gets [\Phi(x_1^{r-1}),\cdots,\Phi(x_m^{r-1})] \in \R^{d \times m}$
\Else
    \State Set Gram matrix $\mG \in \R^{m \times m}$ as $\mG[p,q] = k(x_p, x_q^{r-1})$; $\mB \gets \frac{1}{m} \mB \mG$
\EndIf

\If{has $k$-access to $r$}
    \State Set Gram matrix $\mG_r \in \R^{m \times m}$ as $\mG_r[p,q] = \kx_r(x_p, x_q)$; $\mC \gets \frac{1}{m} \mB \mG_r \in \R^{d \times m}$
    \State Define $\gL = \frac{1}{m} \brac{\sum_{i=1}^m  \norm{\Phi(x_i)}_2^2 - \dotp{\mB, \mC} }$ \Comment{$\dotp{\mB, \mC} = \Tr \paren{ \mB \mC^{\top} }$}
\Else \Comment{$T$-access to context $r$}
    \State Sample $a_i^r \sim P_r^+(\cdot | x_i)$; Center $\Psi$: $\Psi \gets \Psi - \frac{1}{m} \sum_{i=1}^m \Psi(a_i)$
    \State $\mC \gets \brac{\Psi(a_1^r), \cdots, \Psi(a_m^r)} \in \R^{d \times m}$ 
    \State Define $\gL = \frac{1}{m}  \brac{ \sum_{i=1}^m \norm{\Phi(x_i)}_2^2 + \norm{\mC}_F^2 - 2 \dotp{\mB, \mC} }$  \Comment{$\norm{\mC}_F^2 = \sum_{i,j} \mC[i,j]^2$}
\EndIf

\State Update $\Phi$ to minimize loss $\gL$, subject to $\Cov_{\px}[\Phi] = \mI$
\EndFor
\end{algorithmic}
\end{algorithm}

\paragraph{Standardizing kernels.}
If we only have $k$-access to some contexts, then it is important to standardize these kernels.
We say that a \psd{} kernel $k_j$ is \textbf{standardized}, if $\mu_0 \equiv 1$ is an eigenfunction of $T_{k_j}$ with eigenvalue $1$, and all eigenvalues of $T_{k_j}$ belong to $[0,1]$.
To see why standardizing $k_1,\cdots,k_r$ is necessary, consider a case where $k_1$ has much larger eigenvalues than the other kernels; then, when mixing these contexts, context $1$ could dominate over the other contexts.
We can standardize any \psd{} kernel in three steps:
\begin{enumerate}[label=(\roman*)]
    \item Center the kernel: $k(x,x') \gets \tilde{k}(x,x')$, which makes $\mu_0$ an eigenfunction with eigenvalue $0$. The centered kernel is defined as follows.
\begin{definition}
\label{def:centered-kernel}
For any \psd{} kernel $k: \gX \times \gX \rightarrow \R$, its \textbf{centered kernel} is given by
\begin{equation*}
    \tilde{k}(x,x') = k(x,x') - \int k(z,x') d \px(z) - \int k(x, z') d \px (z') + \iint k(z, z') d \px(z) d \px(z') ,
\end{equation*}
which is a \psd{} kernel that satisfies $\int k(x,z) d \px(z) = \int k(z,x) d \px(z) = 0$ for any $x$.
\end{definition}
    \item Divide $k$ by its largest eigenvalue, which makes all its eigenvalues at most $1$.
    \item $k(x,x') \gets k(x,x') + 1$, which makes $\mu_0$ an eigenfunction with eigenvalue $1$.
\end{enumerate}

The only ``hyperparameter'' we need to tune for a convolution is the order of the contexts, because convolution is not commutative.
In practice today, the order usually does not matter too much.
For example, it matters very little whether we first translate an image and then crop it, or we first crop it and then perform the translation.
However, as we obtain more complex contexts especially through context scaling, this order could become very important.

Apart from supervised learning with feature maps discussed in \Cref{sec:learn-mixture}, another example of convolution is supervised contrastive learning \cite{khosla2020supervised}.
Given a sample $X$, supervised contrastive learning first randomly samples $X_1$ that has the same class as $X$, and then augments $X_1 \rightarrow A_2$ via cropping, flipping, etc.
Supervised contrastive learning learns the convolution of the class context $P^+_1$ and the augmentation context $P^+_2$.

Finally, let us discuss when we should use convolution.
In practice, convolution is used to create ``harder'' pretraining tasks.
For example, in self-supervised learning, multiple weak data augmentations are composed together to create a stronger augmentation.
One great example is SimCLR \cite{chen2020simple}, whose success is largely due to its aggressive crop ratio and color distortion, both of which make the augmentation stronger.
When an augmentation is stronger, the association between $X$ and $A$ becomes weaker.
Hence, convolution should be used when all contexts have strong associations, because it always weakens the association.

\section{Convex Combination}
\label{sec:convex-combination}
A convex combination of $P_1^+, \cdots, P_r^+$ is written as $w_1 P_1^+ + \cdots + w_r P_r^+$.
Usually we require that $\vw = [w_1,\cdots,w_r] \in \Delta^{r-1}$, so that it is a probability distribution over the $r$ contexts.
In this case, the convex combination can be understood as follows:
given an input $X$, one first samples one $P_j^+$ from the probability distribution $\vw$ over the $r$ contexts, and then samples $A \sim P_j^+(\cdot | X)$.
This $A$ is the context variable of the convex combination.
Theoretically speaking, $w_1,\cdots,w_r$ can be any real values, but we assume that $\vw \in \Delta^{r-1}$ in this chapter unless stated otherwise.

Learning the contexture of a convex combination is simple: we only need to use a weighted sum of individual learning objectives.
Assume that we have either pair access or $k$-access to every context.
For each context $j$, we can learn its contexture using either SVME or KISE. Define its individual objective as
\begin{equation*}
    \gR_j = \left \{
\begin{aligned}
    \E_{X \sim \px} \E_{A_j \sim P_j^+(\cdot | X)} \brac{ \norm{ \Phi(X) - \Psi_j(A_j) }_2^2 }, \quad & \text{pair access to } P_j^+ ; \\ 
    \E_{X \sim \px} \brac{\norm{\tPhi(X)}_2^2 - \dotp{\tPhi(X), T_{k_j} \tPhi(X)}}, \quad & k\text{-access to } P^+_j .
\end{aligned}
    \right .
\end{equation*}
Then, we can learn the contexture of their convex combination by minimizing $\sum_j w_j \gR_j$ subject to $\Cov_{\px}[\Phi] = \mI$.
This approach might need more than two encoders, since it requires one $\Psi_j$ for each context $j$ with pair access.
An alternative approach is to convert every pair access to $k$-access first, but this leads to a huge overhead.

If we have pair access to context $j$, let $k_j = \kx_j$; if we have $k$-access to context $j$, let $k_j$ be the kernel we have access to, and assume that it has been standardized.
The following result shows that the dual kernel of the convex combination is $w_1 k_1 + \cdots + w_r k_r$, the linearly combined kernel.
If every $k_j$ is standardized and $\vw \in \Delta^{r-1}$, then the linearly combined kernel is also standardized.
Linearly combining multiple kernels is a classical technique in multiple kernel learning \cite{JMLR:v12:gonen11a}.

\begin{theorem}[Proof in \Cref{app:proof-thm-kernel-convex-combination}]
\label{thm:kernel-convex-combination}
Let $\Phi^*$ be a minimizer of the weighted sum of objectives $\sum_j w_j \gR_j$ subject to $\Cov_{\px}[\Phi] = \mI$.
Then, $\tPhi^*$ extracts the top-$d$ eigenspace of $\sum_j w_j k_j$.
\end{theorem}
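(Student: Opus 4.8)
The plan is to eliminate the auxiliary encoders $\Psi_j$ one at a time, collapse the weighted sum into a single KISE objective, and then appeal to \Cref{thm:svme-spin}. First I would fix $\Phi$ and minimize each pair-access summand $\gR_j = \E_X \E_{A_j \sim P_j^+(\cdot\mid X)}[\norm{\Phi(X) - \Psi_j(A_j)}_2^2]$ over its own $\Psi_j$. Since each $\Psi_j$ occurs in only one summand and is otherwise unconstrained, these minimizations decouple, and the pointwise optimum is the conditional expectation $\Psi_j^*(a) = \E[\Phi(X)\mid A_j = a]$ --- precisely the $\Psi$-reduction of \Cref{eqn:svme} described in \Cref{thm:svme-spin}. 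Substituting it and using that composing the expectation operator of $P_j^+$ with its adjoint yields $T_{\kx_j}$ (\Cref{def:two-kernels}), one obtains $\min_{\Psi_j}\gR_j = \E[\norm{\tPhi(X)}_2^2] - \dotp{\tPhi, T_{\kx_j}\tPhi}_{\px}$, where the mean $\bar\Phi$ drops out because $\mu_0 \equiv 1$ is a fixed point of $T_{\kx_j}$ and $\px$ is the marginal of $P_j^+$; so the centered $\tPhi$ can be used throughout. For the $k$-access summands there is no $\Psi_j$ to optimize, and $\gR_j$ already has exactly this form with the (standardized) kernel $k_j$ in place of $\kx_j$. Writing $k_j := \kx_j$ in the pair-access case, the objective after eliminating all $\Psi_j$ becomes
\[
    \sum_{j=1}^r w_j \gR_j \;=\; \Big( \sum_{j=1}^r w_j \Big)\, \E\big[\norm{\tPhi(X)}_2^2\big] \;-\; \dotp{\, \tPhi,\; \Big(\sum_{j=1}^r w_j T_{k_j}\Big)\tPhi \,}_{\px}.
\]

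Next I would invoke $\vw \in \Delta^{r-1}$, so $\sum_j w_j = 1$, together with linearity of the integral operator in its kernel, $\sum_j w_j T_{k_j} = T_K$ with $K := \sum_j w_j k_j$. The reduced problem is then exactly the KISE problem of \Cref{eqn:spin} for the kernel $K$, under the same constraint $\Cov_{\px}[\Phi] = \mI$; and a minimizer $\Phi^*$ of the original joint problem is the same as a minimizer of this reduced problem (any joint minimizer restricts to one, and conversely it extends to one by appending the optimal $\Psi_j^*$'s). It remains to check that $K$ is a legitimate standardized kernel: $T_K \mu_0 = \mu_0$, and $0 \le \dotp{f, T_K f}_{\px} \le \norm{f}_{\px}^2$ since each $k_j$ is \psd{} with spectrum in $[0,1]$ and $\vw \in \Delta^{r-1}$, so $\mu_0 \equiv 1$ is a top eigenfunction of $T_K$. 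The proof of \Cref{thm:svme-spin} uses nothing about $\kx$ beyond these properties: under the orthonormality constraint $\E[\norm{\tPhi}_2^2]$ is the constant $d$, so minimizing KISE amounts to maximizing $\sum_i \dotp{\tphi_i, T_K\tphi_i}_{\px}$ over orthonormal $d$-tuples, which automatically lie in $\mu_0^\perp$; this is a Ky Fan-type problem whose optima are exactly the tuples spanning $\sspan\{\mu_1,\ldots,\mu_d\}$, the top-$d$ eigenfunctions of $T_K$ other than $\mu_0$. Hence $\tPhi^*$ extracts the top-$d$ eigenspace of $\sum_j w_j k_j$.

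The substantive content is thus already carried by \Cref{thm:svme-spin}; what remains is routine. The points needing a little care are bookkeeping ones: verifying the cancellation of $\bar\Phi$ from every $\gR_j$ so that the centered $\tPhi$ is legitimate throughout; confirming that standardization of the $k$-access kernels is exactly what keeps $\mu_0 \equiv 1$ the top eigenfunction of $K$, and hence keeps it correctly excluded from the learned span; and checking that the decoupled per-$j$ minimizations and the passage between the joint and reduced problems preserve the set of minimizers. I do not anticipate a real obstacle, only the need to state these reductions cleanly.
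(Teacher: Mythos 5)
Your proposal is correct and takes essentially the same route as the paper's proof: both eliminate the $\Psi_j$'s via the $\Psi$-reduction of \Cref{thm:svme-spin}, use linearity of the integral operator in its kernel to collapse the weighted sum into a single KISE objective for $K = \sum_j w_j k_j$, and then invoke \Cref{thm:svme-spin} to conclude. Your invocation of $\sum_j w_j = 1$ is not actually needed (any positive total weight gives an affine reparametrization of the same maximization), but it does no harm, and the extra bookkeeping you spell out is exactly what the paper elides.
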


Convolution and convex combination have the right distributive property.

\begin{proposition}
$ (w_1 P_1^+ + \cdots + w_r P_r^+) \star P_0^+ = w_1 P_1^+ \star P_0^+ + \cdots + w_r  P_r^+ \star P_0^+$.
\end{proposition}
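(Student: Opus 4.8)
The plan is to pass to dual-kernel integral operators and then read the identity off from the bilinearity of operator composition. Two ingredients are already in hand. By Theorem~\ref{thm:kernel-convex-combination}, the convex combination $w_1 P_1^+ + \cdots + w_r P_r^+$ has dual-kernel integral operator $\sum_{j=1}^r w_j T_{\kx_j}$, where $T_{\kx_j}$ is the dual-kernel operator of $P_j^+$. By the convolution proposition of this section, specialized to two contexts, the convolution of an outer context with the fixed inner context $P_0^+$ has dual-kernel operator $T_{k_0}\, S\, T_{k_0}$, where $S$ is the dual-kernel operator of the outer context and $T_{k_0}$ is the operator attached to $P_0^+$: it equals $T_{\kx_0}$ when the heuristic inverse $Q_0^+$ is the Bayesian one, and in every case it depends only on $P_0^+$ and $Q_0^+$, never on the outer context.

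First I would expand the left-hand side: taking the outer context to be $w_1 P_1^+ + \cdots + w_r P_r^+$, whose dual-kernel operator is $\sum_j w_j T_{\kx_j}$, the dual-kernel operator of $\bigl(w_1 P_1^+ + \cdots + w_r P_r^+\bigr) \star P_0^+$ is $T_{k_0}\bigl(\sum_j w_j T_{\kx_j}\bigr) T_{k_0}$, which by linearity of composition in the middle argument equals $\sum_j w_j\, T_{k_0}\, T_{\kx_j}\, T_{k_0}$. Then I would expand the right-hand side: for each $j$ the convolution $P_j^+ \star P_0^+$ has dual-kernel operator $T_{k_0}\, T_{\kx_j}\, T_{k_0}$, with the same $T_{k_0}$ since $P_0^+$ and $Q_0^+$ are unchanged, so by Theorem~\ref{thm:kernel-convex-combination} the convex combination $\sum_j w_j (P_j^+ \star P_0^+)$ has dual-kernel operator $\sum_j w_j\, T_{k_0}\, T_{\kx_j}\, T_{k_0}$. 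The two operators coincide, and since all contexts involved share the marginal $\px$, they determine the same contexture; this gives the claimed equality of contexts.

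A self-contained variant that avoids the operator proposition works with joint distributions. From the defining transformation $X \xrightarrow{P_0^+} A_0 \xrightarrow{Q_0^+} X_0 \xrightarrow{P^+} A$, the conditional of $P^+ \star P_0^+$ is $\int\!\int P^+(a \mid x_0)\, Q_0^+(x_0 \mid a_0)\, P_0^+(a_0 \mid x)\, dx_0\, da_0$, which is linear in the conditional of the outer context $P^+$; and because $P_1^+,\dots,P_r^+$ share the marginal $\px$, the conditional of $w_1 P_1^+ + \cdots + w_r P_r^+$ is exactly $\sum_j w_j P_j^+(\cdot \mid \cdot)$. Substituting and pulling the finite sum out of the integral yields the conditional of $\sum_j w_j (P_j^+ \star P_0^+)$, and since the input marginal is $\px$ on both sides the joint distributions agree.

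The argument is essentially bookkeeping, and the only points where a careless proof could slip are definitional: one must treat contexts up to equality of dual-kernel operators (equivalently, of joint distributions), and one must check that every conditional used is well defined, which is precisely what the common marginal $\px$ ensures. The same structural picture also explains why only right distributivity holds: in $P_0^+ \star \bigl(w_1 P_1^+ + \cdots + w_r P_r^+\bigr)$ the convex combination occupies the two \emph{outer} slots of the sandwich, giving $\bigl(\sum_j w_j T_{\kx_j}\bigr)\, T_{k_0}\, \bigl(\sum_\ell w_\ell T_{\kx_\ell}\bigr)$, which is quadratic in the weights and produces cross terms $w_j w_\ell$ that cannot be written as a single convex combination.
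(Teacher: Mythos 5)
Your proof is correct and takes essentially the same route as the paper's: pass to dual-kernel integral operators, express the left side as $T_{k_0}\bigl(\sum_j w_j T_{\kx_j}\bigr)T_{k_0}$ via the convolution and convex-combination operator formulas, and distribute through the middle slot by bilinearity to obtain $\sum_j w_j\,T_{k_0}T_{\kx_j}T_{k_0}$, which is the operator of the right side. The extra material you supply (the direct joint-distribution variant and the observation that the quadratic outer slots block left distributivity) is consistent with the paper but goes beyond its very compact proof.
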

\begin{proof}
    Let $k_L$ be the dual kernel of the left, and $k_R$ be the dual kernel of the right.
    It suffices to show that $T_{k_L} = T_{k_R}$.
    Using what was proved earlier, we have
    \begin{equation*}
        T_{k_L} = T_{k_0} (w_1 T_{k_1} + \cdots + w_r T_{k_r}) T_{k_0} = w_1 T_{k_0} T_{k_1} T_{k_0} + \cdots + w_r T_{k_0} T_{k_r} T_{k_0} = T_{k_R},
    \end{equation*}
    as desired.
\end{proof}
\begin{remark}
    The left distributive property does not hold, that is, $P_0^+ \star (w_1 P_1^+ + \cdots + w_r P_r^+)$ and $w_1 P_0^+ \star P_1^+ + \cdots + w_r P_0^+ \star P_r^+$ are not necessarily equal.
\end{remark}

The hyperparameters we need to tune for the convex combination are the weights $w_1,\cdots,w_r$, which should be chosen based on the  associations of the contexts.
Usually, if a context is very weak or strong, then we would give it a small weight to limit its impact.
For example, in RLHF we have two contexts: the alignment context and the reference model context.
The alignment context has a strong association---for a given prompt, there are multiple possible valid completions $A$, but alignment selects a small number of preferred completions from all valid ones.
Consequently, alignment reduces the conditional entropy $H(A|X)$, so $A$ has a stronger association with $X$.
The more selective the critic, the stronger the association.
On the other hand, the reference model context has a moderate association, provided that the reference model is well trained.
Therefore, in practice, people usually give a much larger weight to the reference model context than to the alignment context, so that the model will not be too different from the reference model.

Convex combination is usually used to balance strong and weak associations.
It can also be used when all contexts have very strong associations.
In this case, we want to select the $w_1,\cdots,w_r$ that weaken the association as much as possible.
We show that this can be achieved by playing a zero-sum game between a $\Phi$-player who learns the encoders, and a $\vw$-player who picks $\vw$ to maximize the loss of the $\Phi$-player.
The game has the following minimax form.
\begin{equation}
\label{eqn:convex-combination-game}
    \underset{\Phi: \gX \rightarrow \R^d; \; \Psi_j: \gA_j \rightarrow \R^d}{\textrm{minimize}} \; \; \max_{\vw \in \Delta^{r+1}} \; \gL(\Phi, \Psi_1,\cdots,\Psi_r; \vw) := \sum\nolimits_{j=1}^r w_j \gR_j \quad \text{s.t. }   \Cov_{\px} \brac{\Phi} = \mI  .
\end{equation}
$\Phi$ is allowed to be randomized, that is the $\Phi$-player can use a mixed strategy.
Obviously, the $\vw$-player has an optimal pure strategy, so $\vw$ need not to be randomized.
Let us analyze the Nash equilibrium of this game.
First, notice that the $\Phi$-player only needs to pick $\Phi$.
Once $\Phi$ is picked, the optimal $\Psi_j$'s can be determined as follows.
\begin{proposition}
    Suppose context $j$ has pair access. Then, when $\Phi$ is fixed, the optimal $\Psi_j$ that minimizes $\gR_j$ is $\Psi_j^* = T_{P_j^+}^* \Phi$.
\end{proposition}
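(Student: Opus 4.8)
The plan is to prove this by a direct pointwise minimization argument: for a fixed $\Phi$, the objective $\gR_j = \E_{X \sim \px} \E_{A_j \sim P_j^+(\cdot|X)} [\norm{\Phi(X) - \Psi_j(A_j)}_2^2]$ decomposes into a sum of $d$ independent scalar problems (one per output coordinate), and each such problem is a classical $L^2$ regression whose minimizer is a conditional expectation. First I would rewrite the expectation over the joint distribution $P_j^+(x,a_j)$ so that $\gR_j = \E_{(X,A_j) \sim P_j^+}[\norm{\Phi(X) - \Psi_j(A_j)}_2^2]$, and then condition on $A_j = a$. For each coordinate $\ell \in [d]$, the contribution is $\E_{A_j}\big[\E_{X}[(\phi_\ell(X) - (\psi_j)_\ell(A_j))^2 \mid A_j]\big]$, and the inner conditional expectation is minimized pointwise (for $\pa[j]$-almost every $a$) by $(\psi_j)_\ell^*(a) = \E[\phi_\ell(X) \mid A_j = a]$, since among all constants $c$ the quantity $\E[(\phi_\ell(X) - c)^2 \mid A_j = a]$ is uniquely minimized at $c = \E[\phi_\ell(X) \mid A_j = a]$.

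Next I would identify this coordinatewise conditional expectation with the adjoint operator. By the definition of $\tpstar$ (the adjoint $T_{P_j^+}^*$ for context $j$), we have $(T_{P_j^+}^* f)(a) = \E[f(X) \mid A_j = a]$ for $f \in \lxp$. Applying this to each $\phi_\ell$ gives $(\psi_j)_\ell^* = T_{P_j^+}^* \phi_\ell$, i.e. $\Psi_j^* = T_{P_j^+}^* \Phi$ in the vector sense. To complete the argument I would verify that this minimizer actually lies in $\lap[j]$ (so that it is admissible): this follows because $T_{P_j^+}^*$ is a bounded operator from $\lxp$ to $\lap[j]$ under the standing Hilbert-Schmidt assumption, so $\norm{T_{P_j^+}^* \phi_\ell}_{\pa[j]} \le \norm{\phi_\ell}_{\px} < \infty$ (indeed $\|T_{P_j^+}^*\|_{\op} \le 1$ since $s_0 = 1$). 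One small subtlety to address is that $\gR_j$ as written uses $\Psi_j$, whereas some convex-combination objectives use the centered $\tilde\Psi_j$; but centering commutes with the pointwise minimization (it only shifts the constant-function component, which $T_{P_j^+}^*$ handles correctly since $T_{P_j^+}^* \1 = \1$), so the stated form is the right one.

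The main obstacle — though it is more of a bookkeeping point than a genuine difficulty — is justifying the interchange of the pointwise ($\pa[j]$-a.e.) minimization with the outer expectation over $A_j$, and arguing that no measurable non-conditional-expectation choice of $\Psi_j$ can do better. The clean way is the standard Hilbert-space projection argument: $\Psi_j \mapsto \Psi_j \circ A_j$ embeds $\lap[j]$ into $L^2(P_j^+)$ as the closed subspace of functions depending only on the $A_j$-coordinate, and $\E[\Phi(X) \mid A_j]$ is by definition the orthogonal projection of $\Phi(X)$ onto that subspace; minimizing $\norm{\Phi(X) - \Psi_j(A_j)}_2^2$ is exactly minimizing the squared distance from $\Phi(X)$ to that subspace, which is achieved uniquely (in $L^2$) by the projection. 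Since this is a textbook fact, I would state it briefly and cite the $L^2$ projection / conditional-expectation characterization rather than re-deriving it, keeping the proof to a few lines.
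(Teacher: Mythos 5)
Your proof is correct, but it takes a genuinely different route from the paper's. The paper proves this by pointing to the proof of Theorem~\ref{thm:svme-spin}, which is a variational calculation: expand the quadratic as $\gL = \sum_i \{\norm{\phi_i}_{\px}^2 + \dotp{\psi_i, \psi_i - 2\tpstar\phi_i}_{\pa}\}$, take the functional derivative in $\psi_i$, and set it to zero to read off $\psi_i = \tpstar\phi_i$. You instead argue probabilistically: condition on $A_j = a$, recognize the inner minimization as a classical $L^2$ regression whose unique minimizer is the conditional mean $\E[\Phi(X)\mid A_j = a]$, and then identify this with $(T_{P_j^+}^*\Phi)(a)$ directly from the paper's definition of the adjoint. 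The Hilbert-space projection framing in your final paragraph is exactly the clean justification that the conditional expectation is optimal among \emph{all} measurable $\Psi_j \in \lap$, not just smooth ones, so there is no gap. Your argument has the advantage of exposing the probabilistic meaning of the optimal $\Psi_j$ (it is literally the conditional expectation of the embedding given the context), while the paper's variational argument is more mechanical and slots directly into the inner-product bookkeeping already in place. Both are short and complete; the aside about centering is not needed here (the stated $\gR_j$ for pair access is the uncentered one, and centering only enters in the KISE / $k$-access branch), but it does no harm.
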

\begin{proof}
    This is the same as the proof of \Cref{thm:svme-spin}.
\end{proof}

When both players play optimally and the game reaches Nash equilibrium,
the value of $\E_{\Phi}[\gL(\Phi, \Psi_1,\cdots,\Psi_r; \vw)]$ is called the value of this game, denoted by $\gL^*$. The expectation is taken over the randomness of $\Phi$.
The following result gives the optimal strategy of the $\vw$-player, and the formula for $\gL^*$.
\begin{theorem}
\label{thm:minimax-top-d}
Let $1 = \lambda_0(\vw) \ge \lambda_1(\vw) \ge \cdots$ be the eigenvalues of $T_{w_1 k_{1} + \cdots + w_r k_{r}}$.
Let $\vw^*$ be the optimal strategy of the $\vw$-player.
Then, the game value of \Cref{eqn:convex-combination-game} is
\begin{equation*}
    \gL^* = d - \sum\nolimits_{i=1}^d \lambda_i(\vw^*), \quad \text{ and } \vw^* \text{ minimizes } \sum\nolimits_{i=1}^d \lambda_i(\vw) \text{ over all } \vw \in \Delta^{r+1} .
\end{equation*}
\end{theorem}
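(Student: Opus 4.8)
The plan is to peel off the three players in stages. First I would eliminate the $\Psi_j$'s: by the Proposition immediately preceding the theorem, for a fixed $\Phi$ the optimal $\Psi_j$ is $T_{P_j^+}^{*}\Phi$, and substituting it back — exactly the computation underlying \Cref{thm:svme-spin} — turns each $\gR_j$ into $d - \sum_{i=1}^d \dotp{\tphi_i, T_{k_j}\tphi_i}_{\px}$, where one uses that the constraint $\Cov_{\px}[\Phi]=\mI$ forces $\E[\norm{\tPhi(X)}_2^2]=d$; for a $k$-access context the same identity is immediate from the KISE objective. Since $\vw$ lies on the probability simplex, $\sum_j w_j=1$, so $\min_{\Psi_1,\dots,\Psi_r}\gL(\Phi,\Psi_1,\dots,\Psi_r;\vw) = d - \sum_{i=1}^d \dotp{\tphi_i, T_{\bar k_{\vw}}\tphi_i}_{\px}$ with $\bar k_{\vw} := \sum_j w_j k_j$. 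Hence the game value is $\gL^* = d - \max_{\Phi}\min_{\vw}\, \sum_{i=1}^d \dotp{\tphi_i, T_{\bar k_{\vw}}\tphi_i}_{\px}$, where $\Phi$ ranges over (possibly randomized) $d$-dimensional encoders satisfying $\Cov_{\px}[\Phi]=\mI$ almost surely and the expectation over the randomness of $\Phi$ is understood.

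Second, I would swap $\max_\Phi$ and $\min_\vw$. The payoff is concave (indeed affine) in the $\Phi$-player's mixing distribution and affine in $\vw$, because $T_{\bar k_{\vw}}$ depends linearly on $\vw$, and the $\vw$-side is the compact convex simplex; so Sion's minimax theorem yields $\max_\Phi\min_\vw = \min_\vw\max_\Phi$ of the same quantity, and in particular an optimal pure strategy $\vw^*$ for the $\vw$-player. Allowing a randomized $\Phi$ is exactly what makes this legitimate: passing to mixing measures convexifies the otherwise non-convex constraint set $\{\Phi : \Cov_{\px}[\Phi]=\mI\}$, and since the objective is linear in the measure, the supremum over mixtures coincides with the supremum over deterministic encoders.

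Third, I would evaluate the inner maximum at a fixed $\vw$. The constraint says precisely that $\tphi_1,\dots,\tphi_d$ is an orthonormal family of zero-mean functions in $\lxp$. Since each $k_j$ is standardized, so is $\bar k_{\vw}$: $\mu_0\equiv 1$ is a top eigenfunction of $T_{\bar k_{\vw}}$ with eigenvalue $\lambda_0(\vw)=1$, the subspace of zero-mean functions is invariant under the compact self-adjoint operator $T_{\bar k_{\vw}}$, and the eigenvalues of its restriction there are exactly $\lambda_1(\vw)\ge\lambda_2(\vw)\ge\cdots$. Ky Fan's maximum principle on that subspace then gives $\max \sum_{i=1}^d \dotp{\tphi_i, T_{\bar k_{\vw}}\tphi_i}_{\px} = \sum_{i=1}^d \lambda_i(\vw)$, attained by the corresponding eigenfunctions. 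Combining the three stages, $\gL^* = d - \min_{\vw}\sum_{i=1}^d \lambda_i(\vw) = d - \sum_{i=1}^d \lambda_i(\vw^*)$, and $\vw^*$ minimizes $\sum_{i=1}^d \lambda_i(\vw)$ — a convex function of $\vw$, being a sum of top eigenvalues of a kernel operator depending linearly on $\vw$ — so a minimizer on the simplex exists.

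I expect the middle stage to be the main obstacle: one must verify the hypotheses of a minimax theorem on a strategy space of encoders that is not itself convex, which forces the detour through mixed strategies, and one must ensure that the optimization over $\Phi$ and the attainment of the top eigenvalues are genuine maxima. This relies on the standing assumption that the relevant integral operators are compact (Hilbert–Schmidt), so that their spectra are discrete and the spectral/Ky-Fan arguments apply; the remaining steps are the routine adjoint substitution and a standard Rayleigh–Ritz computation.
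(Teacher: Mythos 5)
Your proof is correct and takes the paper's route: compute the $\Phi$-player's best-response value $\min_\Phi \gL(\Phi,\vw) = d - \sum_{i=1}^d \lambda_i(\vw)$ for fixed $\vw$ and conclude that $\vw^*$ must minimize $\sum_i \lambda_i(\vw)$. What you add is exactly the step the paper's two-sentence proof silently assumes: that the game has a value, i.e.\ that $\min_\Phi\max_\vw = \max_\vw\min_\Phi$, so that the best-response computation may be applied at equilibrium. You correctly identify this as the main obstacle and close it by invoking Sion's theorem on the mixed extension of the $\Phi$-player (which is precisely why the theorem allows a randomized $\Phi$). Your remaining stages — eliminating the $\Psi_j$'s via the preceding proposition, the Rayleigh--Ritz/Ky-Fan step on the zero-mean subspace using the standardization of the $k_j$'s, and the convexity of $\vw \mapsto \sum_{i=1}^d \lambda_i(\vw)$ — are the computations the paper delegates to \Cref{thm:svme-spin} and \Cref{thm:kernel-convex-combination} without spelling them out. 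So this is the same argument, but yours is the rigorous version.
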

\begin{proof}
    For a fixed $\vw$, when $\tPhi$ learns the contexture of $\sum_j w_j k_j$, the loss is $\gL = d - \sum_{i=1}^d \lambda_i(\vw)$. 
    Thus, the optimal $\vw^*$ that maximizes $\gL$ must minimize $\sum_{i=1}^d \lambda_i(\vw)$.
\end{proof}

\begin{algorithm}[!t]
\caption{Convex combination: Solving the minimax game}
\label{alg:convex-combination}
\begin{algorithmic}[1]
\Require Step size $\eta > 0$
\State Initialize encoder $\Phi: \gX \rightarrow \R^d$, and $\Psi_j: \gA_j \rightarrow \R^d$ if needed; $\vw \gets [1/r,\cdots,1/r]$
\For{training step $t=1,2,\cdots,T$ }
    \State Fix $\vw$, and find the optimal $\Phi, \Psi_j$ to \Cref{eqn:convex-combination-game}. Denote the optimal $\Phi$ at step $t$ by $\Phi^t$
    \State Compute $\gR_1,\cdots,\gR_r$; Update $w_j \gets w_j \cdot \exp(\eta \gR_j)$, then normalize $w_j \gets \frac{w_j}{\sum_{i=1}^r w_i}$
\EndFor
\State The $\Phi$-player picks the uniform distribution over $\Phi^1,\Phi^2,\cdots,\Phi^T$ (a randomized $\Phi$)
\end{algorithmic}
\end{algorithm}

To solve this game, we can use an algorithm similar to the Hedge algorithm in online learning \cite{freund1997decision}.
The algorithm is listed in \Cref{alg:convex-combination}.
A standard result in online learning shows that this algorithm can find the value of this game.

\begin{theorem}[Proof in \Cref{app:proof-thm-convergence-nash}]
\label{thm:convergence-nash}
Let $\gR_j^t$ be the loss $\gR_j$ in step $t$.
Suppose there exists a constant $C > 0$ such that $\gR_j^t \le C$ holds for all $t, j$.
Denote $\gL(\vw) = \frac{1}{T} \sum_{t=1}^T \paren{\sum_{j=1}^r w_j \gR_j^t} $.
If $\eta = \frac{\sqrt{\log r}}{C\sqrt{T}}$ where $T > \log r$, then 
    \begin{equation*}
        \sup_{\vw \in \Delta^{r+1}} \gL(\vw) \leq  \max_{\vw \in \Delta^{r+1}} \; \min_{\Phi,\Psi_1,\cdots,\Psi_r} \; \sum_{j=1}^r w_j \gR_j  + \frac{2C\sqrt{\log r}}{\sqrt{T}} = \gL^* + \frac{2C\sqrt{\log r}}{\sqrt{T}} ,
    \end{equation*}
which implies that $\gL(\vw) \rightarrow \gL^*$ as $T \rightarrow \infty$.
\end{theorem}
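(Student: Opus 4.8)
The plan is to recognize the $\vw$-update in \Cref{alg:convex-combination} as the exponential-weights (Hedge) algorithm run by the $\vw$-player on the reward vectors $\gR^t := (\gR_1^t,\cdots,\gR_r^t) \in [0,C]^r$, and to combine the textbook Hedge regret bound with the fact that the $\Phi$-player exactly best-responds at every round. First I would fix notation: let $\vw^t$ be the weight vector at the start of round $t$ (so $\vw^1 = [1/r,\cdots,1/r]$), and let $(\Phi^t,\Psi_1^t,\cdots,\Psi_r^t)$ be the minimizer of $\gL(\,\cdot\,;\vw^t)$ computed in line~3, so that $\gR_j^t = \gR_j(\Phi^t,\Psi_j^t)$ and
\[
\dotp{\vw^t,\gR^t} \;=\; \sum\nolimits_j w_j^t \gR_j^t \;=\; \min_{\Phi,\Psi_1,\cdots,\Psi_r}\; \gL(\Phi,\Psi_1,\cdots,\Psi_r;\vw^t).
\]
The two ingredients are then: (I) each round's best-response value lies below the game value, and (II) the $\vw$-player has small regret.

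For (I), since $\vw^t$ is one particular point of $\Delta^{r+1}$, weak duality is immediate: $\dotp{\vw^t,\gR^t} = \min_{\Phi,\Psi_1,\cdots,\Psi_r}\gL(\,\cdot\,;\vw^t) \le \max_{\vw}\min_{\Phi,\Psi_1,\cdots,\Psi_r}\sum_j w_j \gR_j$, and by \Cref{thm:minimax-top-d} the right-hand side equals $\gL^*$. Averaging over $t=1,\cdots,T$ gives $\frac1T\sum_{t=1}^T \dotp{\vw^t,\gR^t} \le \gL^*$.

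For (II), I would run the standard potential-function analysis of Hedge with potential $\frac1\eta\log\sum_j \prod_{s=1}^{t-1} e^{\eta \gR_j^s}$: Hoeffding's lemma applied to the random reward $\gR_J^t$ with $J\sim\vw^t$ bounds the per-round potential increase by $\dotp{\vw^t,\gR^t}+\frac{\eta C^2}{8}$, and comparing against the best single expert yields $\max_{\vw\in\Delta^{r+1}}\sum_t \dotp{\vw,\gR^t} \le \sum_t \dotp{\vw^t,\gR^t} + \frac{\log r}{\eta} + \frac{T\eta C^2}{8}$. Substituting $\eta = \frac{\sqrt{\log r}}{C\sqrt T}$ makes the two error terms $C\sqrt{T\log r}$ and $\tfrac18 C\sqrt{T\log r}$, whose sum is below $2C\sqrt{T\log r}$; dividing by $T$ and using that $\sup_{\vw}\gL(\vw) = \frac1T\max_{\vw}\sum_t\dotp{\vw,\gR^t}$ gives $\sup_{\vw}\gL(\vw)\le \frac1T\sum_t\dotp{\vw^t,\gR^t}+\frac{2C\sqrt{\log r}}{\sqrt T}$. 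Chaining with (I) yields the claimed inequality $\sup_{\vw}\gL(\vw)\le \gL^* + \frac{2C\sqrt{\log r}}{\sqrt T}$. For the convergence conclusion I would add the matching lower bound: the $\Phi$-player's realized mixed strategy (uniform over $\Phi^1,\cdots,\Phi^T$) cannot beat the value of the game, i.e. $\sup_{\vw}\gL(\vw)\ge\gL^*$, so $\sup_{\vw}\gL(\vw)\to\gL^*$ as $T\to\infty$.

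I do not expect a genuine obstacle: the argument is bookkeeping. The only points needing care are (i) being explicit that line~3 of \Cref{alg:convex-combination} returns an \emph{exact} per-round minimizer, so that optimization error — deliberately outside the scope of this chapter — does not enter the regret accounting; and (ii) tracking the constant in Hoeffding's lemma so that the specific $\eta$ in the statement produces exactly the $\frac{2C\sqrt{\log r}}{\sqrt T}$ bound, where the hypothesis $T>\log r$ is what keeps $\eta$ small enough for the Hoeffding step to dominate.
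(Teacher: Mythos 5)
Your proof is correct, and it has the same overall architecture as the paper's (a Hedge-style potential argument for the $\vw$-player's regret, combined with the observation that the $\Phi$-player's per-round best-response values cannot exceed the game value), but two of your subroutines are genuinely different and both are arguably cleaner. First, the paper controls the potential increment with the crude inequality $e^x \le 1 + x + x^2$ for $x \le 1$, which forces the constraint $\eta C < 1$ (this is exactly where the hypothesis $T > \log r$ is used), and yields $C^2 T\eta + \frac{\log r}{\eta} = 2C\sqrt{T\log r}$ at the chosen $\eta$. Your Hoeffding-lemma bound gives $\frac{\eta C^2 T}{8} + \frac{\log r}{\eta} = \frac{9}{8}C\sqrt{T\log r}$, which is tighter, is certainly below the claimed $2C\sqrt{T\log r}$, and requires no smallness condition on $\eta$ at all — so your parenthetical remark that ``$T > \log r$ is what keeps $\eta$ small enough for the Hoeffding step'' is misplaced: Hoeffding's lemma holds for every $\eta$, and your route would actually dispense with that hypothesis entirely; it is the paper's $e^x \le 1+x+x^2$ step that needs it. Second, for bounding $\frac{1}{T}\sum_t l_t$: the paper fixes an arbitrary $\hat\Phi,\hat\Psi$, uses per-round optimality, averages the weights, and obtains $\frac{1}{T}\sum_t l_t \le \min_{\Phi,\Psi}\max_{\vw}\sum_j w_j\gR_j$, then tacitly identifies this with $\gL^* = \max_{\vw}\min_{\Phi,\Psi}\sum_j w_j\gR_j$ (i.e.\ it implicitly invokes the minimax theorem for this game). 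Your observation that $l_t = \min_{\Phi,\Psi}\gL(\cdot;\vw^t) \le \max_{\vw}\min_{\Phi,\Psi}\sum_j w_j\gR_j = \gL^*$ directly — since $\vw^t$ is one point of the simplex — reaches the max-min quantity in one step and avoids any appeal to a saddle-point. Finally, you supply the matching lower bound $\sup_{\vw}\gL(\vw) \ge \gL^*$ (the uniform mixture over $\Phi^1,\ldots,\Phi^T$ cannot beat the value), which is what actually turns the one-sided inequality into the convergence statement; the paper's proof leaves that direction implicit.
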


Convex combination cannot be used when all contexts have weak associations, because the decay rate of its eigenfunctions is upper bounded by that of the strongest context. That is, convex combination cannot make the association stronger.

\section{Concatenation}

Concatenation is easy to implement: For each context $j$, one trains an encoder $\Phi_j$, and then concatenates them into a single encoder by $\Phi(x) = [\Phi_1(x), \cdots, \Phi_r(x)]$.
Concatenation is used when all contexts have weak associations.
We cannot learn rich features from a context with a weak association, but if we have several such contexts and concatenate them, then the features will be richer.
Since concatenation could lead to redundant features that increase the sample complexity, it should not be used for contexts with very strong associations.

To further elaborate on concatenation, let us suppose $\gX$ is a finite set, and $|\gX| = N$.
Then, the dual kernel of each context $j$ is an $N \times N$ matrix denoted by $\mK_j$.
The dual kernel of the concatenation can be understood as $\paren{\begin{matrix} \mK_1 & & & 0 \\ & \mK_2 & & \\ & & \ddots & \\ 0 & & & \mK_r \end{matrix}}$, whose eigenvalues are the union of the eigenvalues of every individual $\mK_j$.
Therefore, the eigenvalues of the concatenation decay more slowly than those of any individual context.

The hyperparameters of concatenation are the dimensions of $\Phi_1,\cdots,\Phi_r$, denoted by $d_1,\cdots,d_r$.
One good way to select them is the following: first, estimate the singular values $s_1^{(j)},s_2^{(j)},\cdots$ of each context $j$ using the post-hoc approach in \Cref{sec:extract-exact};
then, select $d_1,\cdots,d_r$ such that $s_{d_1+1}^{(1)},s_{d_2+1}^{(2)},\cdots,s_{d_r+1}^{(r)}$ are all close and small.

\Cref{tab:mix-when} summarizes when to use each of the three base operations.
We can see that these three operations cover all possible scenarios.

\begin{table}[t]
    \centering
    \begin{tabular}{c|ccc}
    \toprule
       Base operation  & Concatenation & Convolution & Convex combination \\
       When to use?  & All weak associations & All strong associations & Mixed weak/strong \\ 
       \bottomrule
    \end{tabular}
    \caption{When to use each base operation of mixing contexts.}
    \label{tab:mix-when}
\end{table}

\section{Application to Tabular Data}

The methods of mixing contexts provide us with an opportunity to create better contexts almost effortlessly.
Here we test these methods on real-world tabular datasets. We focus on tabular data for two reasons:
\begin{enumerate}[label=(\roman*)]
    \item It is important: Tabular data is the most common type of data in industry.
    \item It is challenging: So far deep learning has not been as successful on tabular data as it is on other modalities. In particular, XGBoost \cite{chen2016xgboost} has long been the state of the art on tabular data.
\end{enumerate}

There are several complications of tabular data that make it more difficult for deep learning than image and text:
\begin{enumerate}[label=(\roman*)]
    \item Heterogeneous features, such as categorical and numerical columns.
    \item Features have different meanings, unlike \eg{} images where all features are pixels.
    \item Much lower signal-to-noise ratio than other modalities, and missing values (NaNs).
\end{enumerate}

\paragraph{Setup.}
We use 118 datasets and run a grid search on tens of thousands of hyperparameter combinations for each method and dataset.
Using a large number of datasets and hyperparameters ensures that the bias of dataset and hyperparameter selection is reduced as much as possible.
We focus on prediction tasks, and use 98 classification datasets and 20 regression datasets (labeled \texttt{reg-20}). 
The 98 classification datasets further consist of 56 smaller ones (labeled \texttt{cls-56}) with fewer than 1500 samples, and 42 larger ones (labeled \texttt{cls-42}).
All datasets are real-world datasets from OpenML \cite{OpenML2013}, and they cover a wide range of domains.
We use 10 train-val-test splits (which we call 10 folds) for each dataset and report the standard deviation of performance.
For evaluation, our metrics are the accuracy for classification, and the $R^2$-score for regression.
We consider two performances close if their difference is less than one standard deviation.

\paragraph{Baselines.}
We start by evaluating the baseline methods and two recent methods that are widely compared to in the literature---FT-Transformer \cite{gorishniy2021revisiting} and TabPFN \cite{hollmann2023tabpfn}.
\Cref{tab:baselines} reports the average performances and rankings of eight baseline methods.
For the rankings, we rank the methods from $1$ to $8$ on each dataset (ties get the same ranking) and take the average for every method.
From the table we can see that:
\begin{itemize}
    \item Compared to the prior work \cite{mcelfresh2023neural}, most methods (except CatBoost and TabPFN) get much higher performance in our experiments. In their paper, ResNet was reported to be much better than MLP, which is not very reasonable. On the other hand, in our experiments, MLP and ResNet have almost the same performance. This shows that careful hyperparameter tuning gives us more accurate baselines.
    \item To our surprise, MLP and ResNet perform the best on the 56 small classification data sets, though conventional wisdom suggests that deep learning is bad on small data sets. However, XGBoost has a much higher performance than MLP on the 42 larger classification data sets and the regression data sets. Based on this observation, we use MLP and XGBoost as our main baselines.
\end{itemize}

\begin{table}[t]
    \centering
    \resizebox{\columnwidth}{!}{%
    \begin{tabular}{l|llllllll}
    \toprule
         & Linear & Rand-Forest & CatBoost & XGBoost & MLP & ResNet & FT-Transformer & TabPFN \\
    \midrule
        \texttt{cls-56} Perf  & \res{79.03}{0.28} & \res{82.01}{0.31} & \res{82.79}{0.29} & \res{82.03}{0.30} &  \res{83.59}{0.29} & \bres{83.63}{0.29} & \res{82.84}{0.30} & \res{82.96}{0.30} \\ 
        \texttt{cls-56} Rank  & 5.38 & 4.93 & 4.11 & 5.00 & 3.14 & \textbf{2.95} & 3.84 & 3.75 \\ 
        Prior work$^*$  & 61 & 76 & 85 & 74 & 57 & 77 & 75 & 84 \\ 
    \midrule
        \texttt{cls-42} Perf  & \res{85.05}{0.09} & \res{89.03}{0.08} & \res{89.71}{0.09} & \bres{90.43}{0.08} & \res{90.04}{0.10} & \res{90.03}{0.09} & \res{90.42}{0.09} & \res{87.98}{0.08} \\ 
        \texttt{cls-42} Rank  & 6.26 & 4.36 & 4.07 & 2.83 & 3.17 & 3.05 & \textbf{2.81} & 5.10 \\ 
    \midrule
        \texttt{reg-20} Perf  & \res{56.60}{1.20} & \res{78.76}{0.43} & \bres{80.35}{0.37} & \res{80.26}{0.41} & \res{78.22}{0.41} & \res{78.37}{0.44} & \res{57.41}{0.36} & N/A \\ 
        \texttt{reg-20} Rank  & 6.40 & 4.10 & 3.20 & \textbf{2.55} & 3.30 & 3.65 & 3.95 & N/A \\ 
    \bottomrule
    \end{tabular}
    }
    \caption{Baseline average performances (accuracy or $R^2$-score) (\%) and rankings. Standard deviations reported in the subscripts. $^*$For \texttt{cls-56}, we compare with the numbers reported in the prior work by \cite[Table~2]{mcelfresh2023neural}, whose caption says ``57 data sets'' but one data set is actually duplicated.}
    \label{tab:baselines}
\end{table}

\paragraph{Methods.}
We consider the following four types of contexts:
\begin{enumerate}[label=(\roman*)]
    \item Y-Linear kernel (Y-Lin): We use the centered linear kernel on $Y$ defined after \Cref{thm:obj-regression}, that is $k(y,y') = \dotp{\tilde{y}, \tilde{y}'}$. More specifically, we use an STK of the kernel, which transforms all eigenvalues above threshold $c = 0.1$ to $1$ and the rest to $0$.
    \item XGBoost (XGB): An XGBoost model consists of $d_t$ trees. We construct a teacher model $\Phi_t: \gX \rightarrow \R^{d_t}$, where each dimension is the output of one tree.
    \item SCARF \cite{bahri2022scarf} (SF): Randomly masks some columns and replaces them with random values sampled from $\text{Unif}[c_{\min}, c_{\max}]$, where $c_{\min}$ and $c_{\max}$ are the smallest and largest values of this column in the training set. 
    \item Cutmix (CM): Randomly masks some columns and replace them with values from the same column but other random rows.
\end{enumerate}

We use $\oplus$ to denote concatenation. For example, ``$\oplus$ XGB'' refers to concatenation with the XGBoost teacher model.
We use ``$+$'' to denote convex combination, where the weights $\vw$ are the minimax weights in \Cref{eqn:convex-combination-game}.

\begin{table}[t]
    \centering
    \resizebox{\columnwidth}{!}{%
    \begin{tabular}{l|lllllll}
    \toprule
         & XGBoost & MLP & Y-Lin & SCARF & Cutmix & SCARF $+$ Y-Lin & Cutmix $+$ Y-Lin \\
    \midrule
        \texttt{cls-56} Perf & \res{82.03}{0.30} & \res{83.59}{0.29} & \res{83.58}{0.31} & \res{82.08}{0.30} & \res{81.93}{0.30} & \res{83.49}{0.32} & \bres{83.64}{0.32} \\
        \texttt{cls-56} Rank & 4.89 & \textbf{2.96} & 3.25 & 4.00 & 4.18 & 3.36 & 3.05 \\ 
    \midrule
        \texttt{cls-42} Perf & \res{90.43}{0.08} & \res{90.04}{0.10} & \bres{90.60}{0.07} & \res{88.15}{0.09} & \res{88.06}{0.09} & \res{90.55}{0.08} & \res{90.56}{0.08} \\ 
        \texttt{cls-42} Rank & 3.19 & 3.12 & \textbf{2.45} & 4.81 & 4.88 & \textbf{2.45} & 2.50 \\ 
    \bottomrule
    \end{tabular}
    }
    \caption{Results on the 98 classification datasets.}
    \label{tab:cls-results}
\end{table}

\begin{table}[t]
    \centering
    \resizebox{\columnwidth}{!}{%
    \begin{tabular}{l|lllllll}
    \toprule
         & XGBoost & MLP & Y-Lin & SF $+$ Y-Lin & CM $+$ Y-Lin & (SF $+$ Y-Lin) $\oplus$ XGB & (CM $+$ Y-Lin) $\oplus$ XGB \\
    \midrule
        \texttt{reg-20} Perf & \res{80.26}{0.41} & \res{78.22}{0.41} & \res{78.80}{0.61} & \res{79.17}{0.45} & \res{78.99}{0.53} & \bres{81.07}{0.47} & \res{81.04}{0.52} \\ 
        \texttt{reg-20} Rank & 3.55 & 3.95 & 3.90 & 3.60 & 4.10 & \textbf{2.50} & 2.65 \\ 
    \bottomrule
    \end{tabular}
    }
    \caption{Results on the 20 regression datasets. We omit SCARF and Cutmix because their performances are low.}
    \label{tab:reg-results}
\end{table}

\paragraph{Results.}
\Cref{tab:cls-results} reports the performance of seven methods on the classification datasets.
We can see that only using SCARF or Cutmix leads to pretty bad performance, largely because their association is too strong, as shown in \Cref{sec:context-usefulness}.
However, when they are mixed with Y-Lin, the mixture has a weaker association, and thus the performance becomes much better.
The performance of Y-Lin is very close to MLP on the 56 small datasets, but much higher than MLP on the 42 large datasets.
This suggests that representation learning usually works better with larger datasets.

\Cref{tab:reg-results} reports the performance of seven methods on the regression datasets.
We can see that Y-Lin is better than MLP but worse than XGBoost.
Mixing Y-Lin with SCARF or Cutmix slightly improves the performance, and concatenating with XGBoost further significantly improves the performance. The mixture of SCARF, Y-Lin and XGBoost achieves a much higher average performance than XGBoost.

The above results show that if we mix the right set of contexts, then we can achieve a higher performance than a single context.
Moreover, on all three benchmarks our methods significantly improve over XGBoost, and on all but \texttt{cls-56} our methods significantly improve over MLP.
These experiments showcase the practical value of the contexture theory.
Note that these experiments are only a start, and we expect there to be a large room of improvement if we can find better contexts for tabular data.

In summary, when we have multiple contexts with either strong or weak associations, we can mix them to obtain a better context with a moderate association.
This chapter introduced three base operations: convolution, convex combination and concatenation.
While mixing multiple contexts is a useful method, to achieve a revolutionary breakthrough, we still need to create new contexts that are completely different from existing ones.
In other words, we are not suggesting that context scaling can be achieved by solely mixing existing contexts.

\chapter{Statistical Learning Bounds for Representation Learning}
\label{chap:bounds}

This chapter studies representation learning in the finite sample regime, that is when there are only finite pretraining and downstream samples, how well one can learn the encoder and the downstream predictor.
Recall that the prediction error can be decomposed as the sum of the approximation error and the estimation error.
The key takeaways of this chapter are summarized as follows.

\begin{enumerate}[label=(\roman*)]
    \item When the embedding dimension $d$ increases, the approximation error decreases, but the estimation error increases.
    \item The approximation error consists of two parts: (a) the distance from the target function to the function class; (b) how well the function class can be approximated.
    \item Contexture learning (extracting the top-$d$ eigenspace) can be viewed as a spectrally transformed kernel (STK). It transforms all eigenvalues other than the top-$d$ to be zero. It loses some information, but achieves the fastest eigenvalue decay.
    \item Other STKs such as the inverse Laplacian are popular in semi-supervised learning, because they are more efficient than extracting the top-$d$ eigenspace.
\end{enumerate}

This chapter first defines the context complexity, and then proves the generalization bounds for contexture learning.
After that, we generalize contexture learning to spectrally transformed kernel regression (STKR), and prove generalization bounds for STKR.

\section{Context Complexity}

So far, our intrinsic evaluation of a context has only relied on the singular values of the context.
However, the singular functions of the context, particularly their smoothness, also have a great impact on the generalization performance.
Consider two contexts with similar spectra, but the first one has smoother singular functions than the second one.
Then, given the same number of pretrain samples, it is easier to approximate the top-$d$ singular functions of the first context.
The context complexity is defined to mathematically characterize such smoothness.

\begin{definition}
\label{def:context-complexity}
The \textbf{context complexity} of $P^+$ is defined as $\bt := \norm{\kx}_{\infty}^{1/2}$, such that
\begin{equation*}
    \kx(x,x) = \sum s_i^2 \mu_i(x)^2  = \int \frac{P^+(x|a) P^+(a|x)}{\px(x)} da = D_{\chi^2}\paren{ P^+(\cdot | x) \parallel \pa } + 1 \le \bt^2
\end{equation*}
holds for $\px$-almost all $x$, where $D_{\chi^2}(P \parallel Q) = \int (\frac{dP}{dQ} - 1)^2 dQ$ is the $\chi^2$-divergence.
\end{definition}

This $\bt$ was initially introduced as the \textit{augmentation complexity} by \cite{zhai2023understanding} in the context of self-supervised learning.
If $\bt$ is finite, then $\bt^2 \ge \int \kx(x,x) d \px(x) = \sum s_i^2$, which means that $\tkx$ is a trace-class operator.
If $k$ is the centered kernel of $\kx$, then $k(x,x) = \sum_{i \ge 1} s_i^2 \mu_i(x)^2 \le \bt^2 - 1$. Usually $\bt \gg 1$, so we use $k(x,x) \le \bt^2$ for simplicity.

Now let us see some examples of masking, and estimate their context complexity.
Intuitively, the context of a data augmentation is more complex if the augmentation is stronger, that is the association between $X$ and $A$ is weaker.
For masking, the mask ratio clearly controls the context complexity.
In addition, the complexity also depends on the type of masking.
For example, consider a checkerboard-style masking, where for any two adjacent pixels, exactly one of them is masked.
This masking has a mask ratio of 50\%, but clearly it is much weaker than the one that puts all 50\% masking on the center of the image where the object is located.
The context complexity provides a quantitative way to measure the strength of a data augmentations of different types.

\paragraph{Hypercube data model.}
Consider the hypercube data model introduced by \cite{saunshi2022understanding}: $\gX = \oset{-1,1}^{d_{\gX}}$, and $\px$ is the uniform distribution over $\gX$.
Consider three random masking methods similar to those studied in \cite{cabannes2023ssl}:
(i) Independent random masking;
(ii) Cutout-like block masking \cite{devries2017improved};
(iii) BERT-like masking.
See \Cref{fig:demo-aug} (left) for an illustration.
Denote the mask ratio by $\alpha$. Let us compute the $\bt$ for these three masking methods, which are denoted by $\bt_r, \bt_c, \bt_b$, respectively.

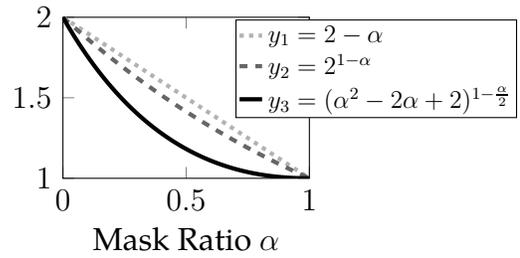
\begin{figure}[t]
    \centering
    \begin{tikzpicture}
  \node[anchor=west] at (-1.5,0) {3. Block Mask + Flip};
  \node[anchor=west] at (-1.5,0.7) {2. Block Mask};
  \node[anchor=west] at (-1.5,1.4) {1. Random Mask};
  \node[anchor=west] at (-1.5,2.1) {Original Sample};

  \node[] at (3.2,2.1) {1};
  \node[] at (3.6,2.1) {1};
  \node[] at (4.0,2.1) {-1};
  \node[] at (4.4,2.1) {-1};
  \node[] at (4.8,2.1) {1};
  \node[] at (5.2,2.1) {-1};
  \node[] at (5.6,2.1) {1};
  \node[] at (6.0,2.1) {1};

  \node[draw,inner sep=2pt] at (3.2,1.4) {0};
  \node[draw,inner sep=2pt] at (3.6,1.4) {0};
  \node[] at (4.0,1.4) {-1};
  \node[draw,inner sep=2pt] at (4.4,1.4) {0};
  \node[] at (4.8,1.4) {1};
  \node[] at (5.2,1.4) {-1};
  \node[] at (5.6,1.4) {1};
  \node[draw,inner sep=2pt] at (6.0,1.4) {0};

  \node[] at (3.2,0.7) {1};
  \node[] at (3.6,0.7) {0};
  \node[] at (4.0,0.7) {0};
  \node[] at (4.4,0.7) {0};
  \node[] at (4.8,0.7) {0};
  \node[] at (5.2,0.7) {-1};
  \node[] at (5.6,0.7) {1};
  \node[] at (6.0,0.7) {1};

  \draw (3.45,0.9) -- (4.95,0.9) -- (4.95,0.5) -- (3.45,0.5) -- (3.45,0.9);

  \node[] at (3.2,0.0) {1};
  \node[] at (3.6,0.0) {0};
  \node[] at (4.0,0.0) {0};
  \node[] at (4.4,0.0) {-1};
  \node[] at (4.8,0.0) {1};
  \node[draw, ultra thick,rounded corners=1mm,inner sep=2pt] at (5.2,0.0) {1};
  \node[] at (5.6,0.0) {1};
  \node[draw, ultra thick,rounded corners=1mm,inner sep=2pt] at (6.0,0.0) {-1};

  \draw (3.45,0.2) -- (4.15,0.2) -- (4.15,-0.2) -- (3.45,-0.2) -- (3.45,0.2);

  \node[draw, inner sep=2pt] at (1.5, -0.8) {Masked};
  \node[draw,ultra thick, rounded corners=1mm,inner sep=2pt] at (4,-0.8) {Flipped};

\begin{axis}[
    xmin = 0, xmax = 1,
    ymin = 1, ymax = 2,
    xtick distance = 0.5,
    ytick distance = 0.5,
    xlabel={Mask Ratio $\alpha$},
    x label style={at={(axis description cs:0.5,-0.3)},anchor=north,inner sep=0},
    major grid style = {lightgray},
    minor grid style = {lightgray!25},
    width = .3\linewidth,
    height = .23\linewidth,
    legend style = {nodes={scale=0.8, transform shape},at={(0.7,0.68)},anchor=west,inner sep=.5pt},
    legend cell align=left,
    at={(8.5cm, 0)},
    ]
    \addplot[domain = 0:1,
        samples = 200,
        smooth,
        ultra thick,
        dotted,
        black!30
    ] {2-x};
    \addlegendentry{$y_1=2-\alpha$};
    \addplot[domain = 0:1,
        samples = 200,
        smooth,
        ultra thick,
        dashed,
        black!60
    ] {2^(1-x)};   
    \addlegendentry{$y_2=2^{1 - \alpha}$};
    \addplot[domain = 0:1,
        samples = 200,
        smooth,
        ultra thick,
    ] {(x^2-2*x+2)^(1-x/2)};
    \addlegendentry{$y_3=(\alpha^2 - 2\alpha + 2)^{1 - \frac{\alpha}{2}}$};
\end{axis}

\end{tikzpicture}
    \caption{\textbf{Left:} Three mask-type data augmentations on the hypercube data model. \textbf{Right:} Their theoretical $\bt^{2/d_\gX}$ with different mask ratio $\alpha$.}
    \label{fig:demo-aug}
\end{figure}

\begin{example}
Consider a \emph{random masking} augmentation, \ie{} for any $x \in \gX$, each coordinate $x^{(i)}$ is randomly and independently masked to be $0$ (\ie{} $0$ denotes the \texttt{[MASK]} token)
with probability $\alpha \in (0,1)$ .
Then, its context complexity is given by $\bt_r^2 = (2 - \alpha)^{d_\gX}$.
\end{example}

\begin{example}
Consider \emph{random block masking},
\ie{} masking $x^{(i)},x^{(i+1)},\cdots,x^{(i+r-1)}$ for $r = \lceil \alpha d_{\gX} \rceil$ and a uniformly random $i \in [d_{\gX}-r]$, for any $x \in \gX$.
Then, $\bt_c^2 \le [2^{(1-\alpha) }]^{d_{\gX}}$.
\end{example}

\begin{example}
Consider \emph{random block masking with flipping}, where for any $x \in \gX$,
first mask $x^{(i)},\cdots,x^{(i+r-1)}$ to be $0$ 
for $r = \lceil \alpha d_{\gX} \rceil$ and a uniformly random $i \in [d_{\gX}-r]$,
then randomly flip the sign of each remaining coordinate independently with probability $\frac{\alpha}{2}$.
Then, its context complexity is bounded by $\bt_b^2 \le \left [(\alpha^2-2\alpha+2)^{(1-\alpha/2)} \right ]^{d_{\gX}}$.
\end{example}

See \Cref{app:mask-kappa} for the derivation of the above $\bt$.
\Cref{fig:demo-aug} (right) plots the $\bt^{2 / d_{\gX}}$ for all three examples.
We can see that $\bt$ becomes lower as the mask ratio $\alpha$ increases.
Moreover, when $\alpha \in (0,1)$ is fixed, Cutout-like masking has a lower $\bt$ than independent random masking, and BERT-like masking has a lower $\bt$ than Cutout-like masking.
Cutout has a weaker association than random masking, and BERT has an even weaker association.
Thus, these examples show that $\bt$ is lower when the association is weaker.

Another observation is that all three $\bt$ have an exponential dependency on $d_{\gX}$.
This is a manifestation of the typical curse of dimensionality in high-dimensional statistics.
One way to make $\bt$ polynomial in $d_{\gX}$ is to use a context with very weak association, such as a very strong data augmentation.
For example, if $\gA$ is a finite set with a small size, then $\bt$ will be polynomial.
However, contexts with such weak association usually lead to substantially worse performance in practice.
The bounds to be proved in this chapter depend on $\bt$ polynomially, meaning that they are not really useful in the high-dimensional scenario.
In practice, however, representation learning can still achieve good performances when the data dimension is high.
How to address this discrepancy is posed as an open problem.

\paragraph{Real language models.}

Now let us estimate the $\bt$ of some real language models.
We use the NLP dataset \texttt{wikipedia-simple}, and consider the context of masking tokens, where $x$ is a complete text whereas $a$ is a masked version of $x$.
Recall that $\bt^2$ is an upper bound of $\kx(x,x) = \int \frac{P^+(x|a) P^+(a|x)}{\px(x)} da$.
For a fixed $x$, this integration can be estimated with Monte Carlo (by sampling a set of $a \sim P^+(\cdot|x)$), and then $\bt^2$ can be estimated by its maximum over $x \in \gX$.
For $x = [x^{(1)}, \cdots, x^{(l)}]$ where $x^{(i)}$ is the $i^{th}$ token, we have
\begin{equation*}
    \log P^+ \left (x \middle \vert a \right ) = \log P^+ \left (x^{(1)} \middle \vert a \right ) + \log P^+ \left (x^{(2)} \middle \vert a,x^{(1)} \right ) + \cdots + \log P^+ \left (x^{(l)} \middle \vert a, x^{(1)}, \cdots, x^{(l-1)} \right ) .
\end{equation*}
We can leverage a bi-directional masked language model such as a BERT, and then compute $P^+(x^{(i)} | a, x^{(<i)})$ auto-regressively:
For each $i \in [l]$, use the BERT to output $P^+(x^{(i)}|a,x^{(<i)})$, and then replace $a^{(i)}$ with $x^{(i)}$ for $i+1$.
As such, we can estimate $P^+(x|a)$, and $\px(x)$ can be estimated by $P^+(x|a_0)$ where $a_0$ is a fully masked text.

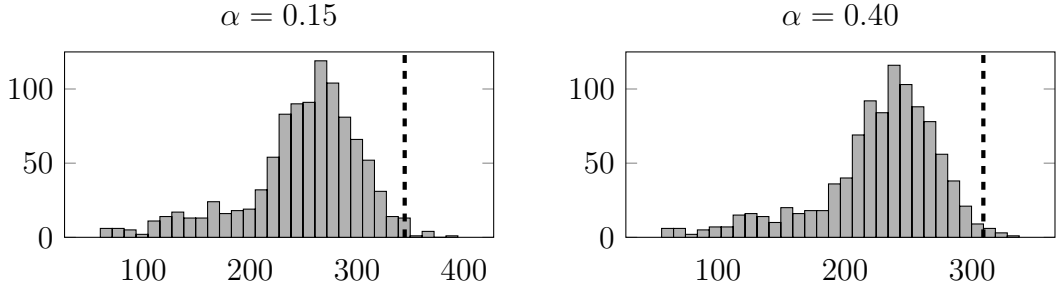
\begin{figure}[t]
    \centering
    \begin{tikzpicture}
\begin{axis}[
    name=plot1,
    ybar,
    ymin=0,
    ymax=125,
    width=.45\textwidth,
    height=.25\textwidth,
    xtick style={draw=none},
    title = {$\alpha=0.15$}
]
\addplot [hist={bins=30}, fill=black!30] table [y index=0] {data/s_arr_015.csv};
\draw [ultra thick, dashed] (axis cs:344.595,0) -- (axis cs:344.595,125);
\end{axis}

\begin{axis}[
    name=plot2,
    at={($(plot1.south east)+(50,0)$)},
    ybar,
    ymin=0,
    ymax=125,
    width=.45\textwidth,
    height=.25\textwidth,
    xtick style={draw=none},
    title = {$\alpha=0.40$}
]
\addplot [hist={bins=30}, fill=black!30] table [y index=0] {data/s_arr_040.csv};
\draw [ultra thick, dashed] (axis cs:308.805,0) -- (axis cs:308.805,125);
\end{axis}
\end{tikzpicture}
    \caption{Histograms of $\log \kx(x,x)^2$ for random masking on \texttt{wikipedia-simple} with mask ratio $\alpha$. The dashed vertical line in each plot indicates the $99^{\text{th}}$ percentile.}
    \label{fig:kappa-hist}
\end{figure}

A natural idea is to estimate $\kx(x,x)$ for a random subset of samples, and output their maximum as an estimate of $\bt^2$.
However, this approach has two issues.
First, $\sup_x \kx(x,x)$ is statistically impossible to estimate from a subset of data without any extra assumptions on the distribution of $\kx(x,x)$.
Second, almost all real datasets contain outliers, which are very different from most samples.
These outliers have very large $\kx(x,x)$, but given that these outliers have little impact on the actual pretraining, we do not want to take these large $\kx(x,x)$ into account.
To fix these two issues, we can use the $99^{\text{th}}$ percentile of $\kx(x,x)$.
First, the percentile can be estimated with a finite confidence interval via sampling regardless of the distribution of $\kx(x,x)$ \cite[Section~5.2]{hahn2011statistical}.
Second, we get rid of the outliers if they are fewer than 1 percent.
\Cref{fig:kappa-hist} plots the histograms of $\log \kx(x,x)^2$ for random masking on \texttt{wikipedia-simple}.
The dashed line in each plot indicates the $99^{\text{th}}$ percentile.
We can see from the plots that the $99^{\text{th}}$ percentile is a good choice, as it picks out the outliers where $\kx(x,x)$ is too large.

\Cref{fig:aug-comp-nlp} (left) plots the $99^{\text{th}}$ percentile of $\log \bt^2$ of four contexts: random masking, random masking with flipping, block masking, and block masking with flipping.
Masking randomly masks $\alpha$ of the tokens.
Masking with flipping masks $\alpha/2$ of the tokens and replaces another $\alpha/2$ of the tokens with random tokens.
Note that this replace rate is higher than the common 80-10-10 strategy in NLP, because we want to magnify the effect of flipping.
From the plot, we can see that the complexity drops as $\alpha$ increases as expected.
One observation is that the ``Random + Flip'' curve intersects with ``Block'' and ``Block + Flip'',
suggesting that block masking has a stronger effect when $\alpha$ is small, whereas flipping has a stronger effect when $\alpha$ is large.

\begin{figure}[t]
    \centering
    \begin{tikzpicture}
\begin{axis}[
name=plot0,
    xmin = 0, xmax = 0.8,
    ymin = 80, ymax = 360,
    xtick distance = 0.2,
    ytick distance = 100,
    yticklabel shift = -2.5,
    xlabel={Mask Ratio $\alpha$},
    x label style={at={(axis description cs:0.5,-0.28)},anchor=north,inner sep=0},
    x tick label style={font=\footnotesize},
    ylabel={$\log \bt^2$},
    y label style={at={(axis description cs:-0.41,0.2)},anchor=west,inner sep=0},
    y tick label style={font=\footnotesize},
    width = .24\linewidth,
    height = .22\linewidth,
    legend style = {nodes={scale=0.8, transform shape},at={(1.05,0.5)},anchor=west,inner sep=.5pt},
    legend cell align=left,
    ]
    \addplot[
        ultra thick,
        solid,
        opacity=0.3
    ] 
    coordinates {
    (0.05,353.9439)(0.10,348.45206)(0.15,346.39916)(0.20,339.02706)(0.30,325.07554)(0.40,307.0025)(0.50,278.59496)(0.60,250.01232)(0.70,214.14242)(0.80,160.24052)
    };
    \addlegendentry{Random};
    \addplot[
        solid,
        semithick
    ] 
    coordinates {
    (0.05,350.98668)(0.10,349.13778)(0.15,342.77486)(0.20,335.01478)(0.30,312.20536)(0.40,280.7551)(0.50,241.58934)(0.60,198.1497)(0.70,151.57564)(0.80,84.51188)
    };
    \addlegendentry{Random + Flip};
    \addplot[
        ultra thick,
        dashed,
        opacity=0.7
    ] 
    coordinates {
    (0.05,346.77186)(0.10,339.8619)(0.15,327.13244)(0.20,313.89546)(0.30,280.6536)(0.40,254.40494)(0.50,217.35124)(0.60,187.59812)(0.70,152.70328)(0.80,113.85798)
    };
    \addlegendentry{Block};
    \addplot[
        ultra thick,
        dotted,
        opacity=0.6
    ] 
    coordinates {
    (0.05,350.4642)(0.10,343.733)(0.15,336.8131)(0.20,322.86658)(0.30,302.60346)(0.40,279.2389)(0.50,240.72884)(0.60,205.4515)(0.70,164.48324)(0.80,108.44108)
    };
    \addlegendentry{Block + Flip};
\end{axis}

\begin{axis}[
name=plot1,
at={($(plot0.south east)+(140,0)$)},
    xmin = 0, xmax = 0.8,
    ymin = 70, ymax = 92,
    xtick distance = 0.2,
    ytick distance = 10,
    xtick pos=left,
    yticklabel shift = -2.5,
    axis y line*=left,
    xlabel={Mask Ratio $\alpha$},
    x label style={at={(axis description cs:0.5,-0.26)},anchor=north,inner sep=0},
    x tick label style={font=\footnotesize},
    ylabel={Test Accuracy},
    ylabel style = {font=\footnotesize},
    y label style={at={(axis description cs:-0.29,-0.1)},anchor=west,inner sep=0},
    width = .25\linewidth,
    height = .22\linewidth,
    ]
    \addplot[
        very thick
    ] 
    coordinates {
    (0.05,85.5208)(0.10,85.4109)(0.15,86.8021)(0.20,86.4543)(0.30,86.3262)(0.40,86.2713)(0.50,84.7703)(0.60,82.354)(0.70,78.3452)(0.80,73.2748)
    };
    \node[circle,fill=black,inner sep=0pt,minimum size=6pt]  at (0.15,86.8021) {};
    \node[font=\footnotesize] at (0.47,89.5) {$(0.15,86.8)$};
\end{axis}
\begin{axis}[
at={($(plot0.south east)+(140,0)$)},
    xmin = 0, xmax = 0.8,
    ymin = -0.4, ymax = 4.4,
    ytick distance = 2,
    hide x axis,
    axis y line*=right,
    width = .25\linewidth,
    height = .22\linewidth,
    ]
    \addplot[
        ultra thick,
        dashed
    ] 
    coordinates {
    (0.05,4.2055)(0.10,3.8285)(0.15,3.4483)(0.20,3.2481)(0.30,2.9752)(0.40,2.0678)(0.50,0.9194)(0.60,0.9785)(0.70,0.8668)(0.80,-0.386)
    };
\end{axis}

\begin{axis}[
name=plot2,
at={($(plot1.east)+(30,0)$)},
anchor=west,
    xmin = 0, xmax = 0.8,
    ymin = 80, ymax = 97,
    xtick distance = 0.2,
    ytick distance = 5,
    xtick pos=left,
    yticklabel shift = -2.5,
    axis y line*=left,
    xlabel={Mask Ratio $\alpha$},
    x label style={at={(axis description cs:0.5,-0.26)},anchor=north,inner sep=0},
    x tick label style={font=\footnotesize},
    width = .25\linewidth,
    height = .22\linewidth,
    ]
    \addplot[
        very thick
    ] 
    coordinates {
    (0.05,88.5321)(0.10,89.6789)(0.15,91.8578)(0.20,91.7431)(0.30,91.3991)(0.40,92.4312)(0.50,89.2202)(0.60,86.6972)(0.70,84.289)(0.80,80.1606)
    };
    \node[circle,fill=black,inner sep=0pt,minimum size=6pt]  at (0.40,92.4312) {};
    \node[font=\footnotesize] at (0.375,95) {$(0.40,92.4)$};
\end{axis}
\begin{axis}[
at={($(plot1.east)+(30,0)$)},
anchor=west,
    xmin = 0, xmax = 0.8,
    ymin = 5.5, ymax = 12.5,
    xtick distance = 0.2,
    ytick distance = 3,
    yticklabel shift = -2.5,
    axis y line*=right,
    hide x axis,
    ylabel={Train-Test Gap},
    ylabel style = {font=\footnotesize},
    y label style={at={(axis description cs:1.3,-0.15)},anchor=west,inner sep=0},
    width = .25\linewidth,
    height = .22\linewidth,
    ]
    \addplot[
        ultra thick,
        dashed
    ] 
    coordinates {
    (0.05,10.6408)(0.10,9.6411)(0.15,7.3939)(0.20,7.4581)(0.30,7.2631)(0.40,5.7573)(0.50,8.0032)(0.60,9.1022)(0.70,9.8549)(0.80,11.6106)
    };
\end{axis}

\node[font=\footnotesize] at (7.7,0.2) {QNLI};
\node[font=\footnotesize] at (11.25,0.2) {SST-2};

\end{tikzpicture}
    \caption{\textbf{Left:} Estimated $\log \bt^2$ ($99^{\text{th}}$ percentile) on \texttt{wikipedia-simple}, which is the average of five runs with different random seeds. \textbf{Right:} Downstream performance on QNLI and SST-2. The solid line is the test accuracy and the dashed line is the train-test gap. The highest test accuracy is labeled on each plot.}
    \label{fig:aug-comp-nlp}
\end{figure}
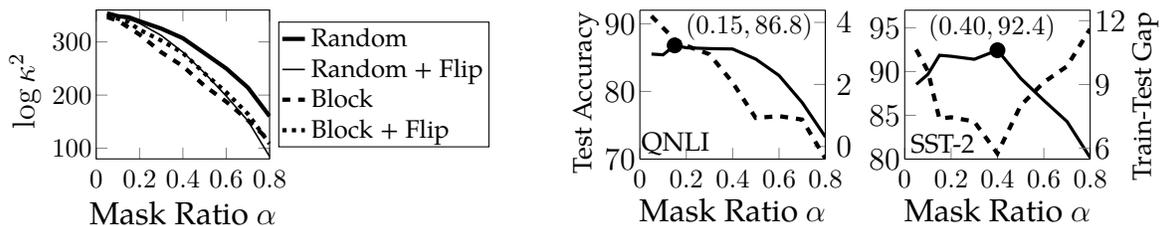

\Cref{fig:aug-comp-nlp} (right) plots the real downstream performance of BERT with different mask ratios on QNLI~\cite{wang2018glue} and SST-2~\cite{SST2}.
The models are \texttt{roberta-large} trained with the fast pretraining recipe in \cite{wettig2022should}. The context is random masking without the 80-10-10 strategy.
At downstream, the encoder is fine-tuned along with the linear head following common practice.
From the plot, we can see that the highest test accuracy (solid line) is achieved at $\alpha = 0.15$ on QNLI and at $\alpha = 0.40$ on SST-2. This is because the association between $X$ and $A$ is moderate when $\alpha$ is neither too big nor too small.
The dashed line is the gap between the train accuracy and test accuracy.
On QNLI, the gap monotonically decreases with $\alpha$;
on SST-2, it is U-shaped, with the lowest at $\alpha = 0.40$.
We will come back to explain this observation after we prove the generalization bounds.

\section{Generalization Bounds for Contexture Learning}

Suppose there are $m$ pretrain samples $x_1,\cdots,x_m$ and $n$ downstream samples $\tilde{x}_1,\cdots,\tilde{x}_n$, jointly \iid{} sampled from $\px$.
Usually $m \gg n$.
This section proves an error bound when an encoder is learned with $x_1,\cdots,x_m$, and then a predictor is fit with $\tilde{x}_1,\cdots,\tilde{x}_n$.
Assume that the context has kernel access, and assume that the kernel $k$ that we have access to is the centered kernel of $\kx$.
In practice there could be a difference between $\kx$ and $k$, but since this difference depends on what $k$ we have, this is not something we can analyze in our study of generalization.
Therefore, for simplicity we ignore this difference. 
The scenario where the context has pair or transformation access will be discussed later.

Let $s_i^2, \mu_i$ be the actual eigenvalues and eigenfunctions of $T_k$.
The \textbf{\boldmath empirical top-$d$ eigenfunctions} of $T_k$ can be estimated with kernel PCA as introduced at the beginning of \Cref{chap:objectives}, which is reiterated as follows.
\begin{enumerate}[label=(\roman*)]
    \item Compute the Gram matrix $\mG \in \R^{m \times m}: \mG[i,j] = k(x_i,x_j)$.
    \item Let $\lambda_1 \ge \lambda_2 \ge \cdots \ge \lambda_m \ge 0$ be the eigenvalues of $\mG$, with orthonormal eigenvectors $\vv_1,\cdots,\vv_m$. Note that $\lambda_1$ has the same order as $m$, so it can be much greater than $1$. 
    \item Assume that $\lambda_d > 0$. Let $\phi_i(x) = \frac{1}{\sqrt{\lambda_i}} \sum_{j=1}^m \vv_i[j] k(x,x_j) $, for $i \in [d]$.
\end{enumerate}

Although ideally we would like a bound for any target function in $\fep$, we cannot prove a universal bound for the entire $\fep$.
To see why, consider the following example.
Suppose $f^*$ is a $(1-\epsilon/2)$-compatible function that is easy to learn, and $\norm{f^*} = 1$.
Suppose there is a $\mu_D$ with an extremely large $D$, such that the functiion is very noisy and it has large values on all the samples we have.
Let $f' = f^* + \frac{\epsilon}{4} \mu_D$. It is easy to show that $f' \in \fep$, but learning $f'$ with the samples we have is impossible.
In order to make learning possible, we have to get rid of the extremely noisy components.

To this end, we define a variant of the induced RKHS in \Cref{sec:induced-rkhs}.

\begin{definition}
\label{def:hk}
Define the \textbf{induced RKHS} of $k$ as $\hk = \sset{f = \sum s_i u_i \mu_i}{\sum u_i^2 < \infty}$.
For $f_1 = \sum u_i \mu_i, f_2 = \sum v_i \mu_i \in \hk$, define their inner product as $\dotp{f_1, f_2}_{\hk} = \sum \frac{u_i v_i}{s_i^2}$.
\end{definition}

Note that $\hk$ is a subset of $\lxp$, rather than $\lap$.
Obviously $\hk$ is the RKHS of $k$, that is for any $f \in \hk$, we have $\dotp{f, k(\cdot,x)}_{\hk} = f(x)$.
In particular, we have $\dotp{k(\cdot,x), k(\cdot,x')}_{\hk} = k(x,x')$.
This implies that for all $i, j \in [d]$, we have
\begin{equation*}
    \dotp{\phi_i, \phi_j}_{\hk} = \frac{1}{\sqrt{\lambda_i \lambda_j}} \sum_{q,r=1}^m \vv_i[q] \vv_j[r] \dotp{k(\cdot,x_q), k(\cdot,x_r)}_{\hk} = \frac{1}{\sqrt{\lambda_i \lambda_j}} \vv_i^{\top} \mG \vv_j = \sI[i=j] .
\end{equation*}
That is, $\phi_1,\cdots,\phi_d$ are orthonormal in the Hilbert space $\hk$.
Note that when there are finite samples, it is impossible to make $\phi_1,\cdots,\phi_d$ orthonormal in $\lxp$ since we have no access to $\px$.
Given $\hk$, define the following set of compatible functions.

\begin{definition}
\label{def:compatible-hk}
The set of functions \textbf{\boldmath $(1-\epsilon)$-compatible with $\hk$} is defined as
\begin{equation*}
    \fhkp = \sset{f \in \hk}{\norm{\tilde{f}}_{\hk} \ge \norm{\tilde{f}}_{\px} \ge (1-\epsilon) \norm{\tilde{f}}_{\hk}}  .
\end{equation*}
\end{definition}

\begin{proposition}
$\fhkp \subseteq \fep$.
\end{proposition}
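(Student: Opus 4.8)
The plan is to reduce the membership conditions for $\fhkp$ and $\fep$ to comparisons among three quadratic forms in the spectral coefficients of $f$, and then bridge them with a single application of Cauchy--Schwarz.

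First I would note that every $f\in\hk$ has mean zero: since $k$ is the centered kernel of $\kx$, the functions in $\hk$ lie in the closed span of $\{\mu_i:i\ge1,\ s_i>0\}$, each of which is orthogonal to $\mu_0\equiv1$ in $\lxp$, so $\bar f=0$ and $\tilde f=f$. Writing a generic element as $f=\sum_{i\ge1}c_i\mu_i$ (with $c_i=0$ whenever $s_i=0$, as forced by the definition of $\hk$), I would record the identities $\|f\|_{\px}^2=\sum_i c_i^2$, $\|f\|_{\hk}^2=\sum_i c_i^2/s_i^2$, and $\sum_i s_i^2 c_i^2=\langle f,T_k f\rangle_{\px}$. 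With these in hand, $\fhkp$-membership of a nonzero $f$ is exactly $\|f\|_{\px}\ge(1-\epsilon)\|f\|_{\hk}$ (the companion inequality $\|f\|_{\hk}\ge\|f\|_{\px}$ being automatic from $s_i\le1$ and not needed), whereas, by the formula for $\rho$ derived just after \Cref{def:compatibility}, $\fep$-membership amounts to $\rho(f,P^+)^2=\big(\sum_i s_i^2 c_i^2\big)\big/\big(\sum_i c_i^2\big)\ge(1-\epsilon)^2$.

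The crux is the Cauchy--Schwarz inequality applied to the factorization $c_i^2=(s_i c_i)(c_i/s_i)$, which gives $\sum_i c_i^2\le\big(\sum_i s_i^2 c_i^2\big)^{1/2}\big(\sum_i c_i^2/s_i^2\big)^{1/2}$, i.e. $\|f\|_{\px}^4\le\big(\sum_i s_i^2 c_i^2\big)\,\|f\|_{\hk}^2$. Combining this with the compatibility bound $\|f\|_{\hk}^2\le(1-\epsilon)^{-2}\|f\|_{\px}^2$ coming from $f\in\fhkp$ yields $\sum_i s_i^2 c_i^2\ge\|f\|_{\px}^4/\|f\|_{\hk}^2\ge(1-\epsilon)^2\|f\|_{\px}^2$, hence $\rho(f,P^+)\ge1-\epsilon$ and $f\in\fep$; the degenerate case $f=0$ is handled by convention. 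I do not anticipate a genuine obstacle here --- the only points needing care are the mean-zero reduction and verifying that elements of $\hk$ place no mass on eigenfunctions with vanishing singular value, so that the displayed series and the Cauchy--Schwarz estimate are legitimate.
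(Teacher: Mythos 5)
Your proposal is correct and uses exactly the same idea as the paper's proof: expand $f$ in the eigenbasis $\{\mu_i\}$ and apply Cauchy--Schwarz to the factorization $c_i^2=(s_i c_i)(c_i/s_i)$, which together with the lower bound $\|f\|_{\px}\ge(1-\epsilon)\|f\|_{\hk}$ yields $\sum s_i^2 c_i^2\ge(1-\epsilon)^2\sum c_i^2$, i.e.\ $\rho(f,P^+)\ge1-\epsilon$. If anything, your version is the more carefully written one — the paper's displayed proof drops a square on $(1-\epsilon)$ in both the hypothesis and the conclusion, whereas you track the exponents correctly — and your preliminary remarks (that $f\in\hk$ forces $\bar f=0$ and $c_i=0$ whenever $s_i=0$) make the manipulations with $\|\cdot\|_{\hk}$ legitimate rather than leaving them implicit.
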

\begin{proof}
    Let $f = \sum_i u_i \mu_i \in \fhkp$.
    Then, $\sum_i u_i^2 \ge (1-\epsilon) \sum_i \frac{u_i^2}{s_i^2}$.
    By Cauchy-Schwarz inequality, we have $\paren{\sum_i s_i^2 u_i^2} \paren{\sum_i \frac{u_i^2}{s_i^2}} \ge \paren{\sum_i u_i^2}^2$.
    Thus, $\sum_i s_i^2 u_i^2 \ge \sum_i u_i^2$.
\end{proof}

The following lemma can be proved in the same way as \Cref{lem:ratio-trace}, which is left as an exercise to the reader.
\begin{lemma}
\label{lem:hk}
For any $f_1,\cdots,f_d \in \hk$ such that $\dotp{f_i, f_j}_{\hk} = \sI[i=j]$, we have
\begin{equation*}
    \norm{f_1}_{\px}^2 + \cdots + \norm{f_d}_{\px}^2 \le s_1^2 + \cdots + s_d^2 .
\end{equation*}
\end{lemma}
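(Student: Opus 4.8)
\textbf{Proof plan for \Cref{lem:hk}.}
The plan is to reduce the statement to the matrix trace inequality that already appears inside the proof of \Cref{lem:ratio-trace}, via a change of variables that turns orthonormality in $\hk$ into orthonormality of the rows of a matrix. Once that reduction is in place, the remaining computation is the Abel-summation argument carried out verbatim there.

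Concretely, first I would expand each $f_i$ in the $\lxp$-orthonormal basis of singular functions, $f_i = \sum_j c_{ij} \mu_j$. Membership $f_i \in \hk \subseteq \lxp$ — the inclusion holding because $s_j \le 1$, so $\norm{f}_{\px} \le \norm{f}_{\hk}$ — gives both $\sum_j c_{ij}^2 < \infty$ and $\sum_j c_{ij}^2 / s_j^2 < \infty$, the latter forcing $c_{ij} = 0$ whenever $s_j = 0$; hence the substitution $q_{ij} := c_{ij}/s_j$ is well defined, with all sums understood to range over the indices $j$ with $s_j > 0$. The hypothesis $\dotp{f_i, f_k}_{\hk} = \sI[i=k]$ then reads $\sum_j q_{ij} q_{kj} = \sI[i=k]$, i.e.\ the matrix $\mQ = (q_{ij})$ (with $d$ rows and countably many columns) has orthonormal rows, while $\norm{f_i}_{\px}^2 = \sum_j c_{ij}^2 = \sum_j s_j^2 q_{ij}^2$. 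Therefore $\sum_{i=1}^d \norm{f_i}_{\px}^2 = \Tr(\mQ \mD \mQ^\top)$ with $\mD = \diag\oset{s_1^2, s_2^2, \cdots}$, which is exactly the quantity \Cref{lem:ratio-trace} bounds by $s_1^2 + \cdots + s_d^2$: writing $\vq_j$ for the $j$-th column of $\mQ$, the fact that $\mQ^\top \mQ$ is an orthogonal projection of rank $d$ gives $\sum_{i=1}^j \vq_i^\top \vq_i \le \min\paren{j,d}$ for every $j$, and then Abel summation together with $s_j^2 \downarrow 0$ (which follows from $T_k$ being Hilbert--Schmidt) yields the claim.

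The points that need care are bookkeeping rather than substantive: confirming $\hk \subseteq \lxp$, handling the indices with $s_j = 0$ so that the division $c_{ij}/s_j$ is legitimate, and checking convergence of the rearranged series in the Abel step (the boundary term is $\le d\, s_{N+1}^2 \to 0$). All of these are already covered by the hypotheses and were treated identically in the proof of \Cref{lem:ratio-trace}, so I do not anticipate a genuine obstacle; the only idea required is spotting the correct substitution $q_{ij} = c_{ij}/s_j$ that converts $\hk$-orthonormality into an ordinary row-orthonormality condition.
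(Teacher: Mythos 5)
Your proof is correct and is exactly what the paper intends: the paper defers this lemma as an exercise, stating it is "proved in the same way as Lemma \ref{lem:ratio-trace}," and your substitution $q_{ij} = c_{ij}/s_j$ (with the zero-singular-value indices handled as you note) converts the $\hk$-orthonormality hypothesis into precisely the row-orthonormality of $\mQ$ used there, after which the $\Tr(\mQ\mD\mQ^\top)$ computation and Abel summation are identical. No gap.
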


\paragraph{Approximation error bound.}
Our goal is to prove a universal bound for $\fhkp$.
First, let us bound the approximation error.
Previously, the approximation error was defined as the distance from $f^*$ to the span of $\Phi$ in space $\lxp$.
However, since we cannot compute the distance in space $\lxp$ with only finite samples, here we define it with the distance in space $\hk$.
Specifically, let $\fPhi$ be the projection of $f^*$ onto the span of $\Phi$ in space $\hk$, that is $\dotp{f^* - \fPhi, \fPhi}_{\hk} = 0$.
Let $f^* - \fPhi = \beta f_0$, where $\beta \ge 0$ and $\norm{f_0}_{\hk} = 1$.
The above lemma implies that $\norm{\phi_1}_{\px}^2 + \cdots + \norm{\phi_d}_{\px}^2 + \norm{f_0}_{\px}^2 \le s_1^2 + \cdots + s_{d+1}^2$.

Hence, to bound $\norm{f_0^2}$, it suffices to prove a lower bound for $\norm{\phi_1}_{\px}^2 + \cdots + \norm{\phi_d}_{\px}^2$.
Using the definition of eigenvectors, it is not hard to show that $\frac{1}{m}\sum_{j=1}^m \phi_i(x_j)^2 = \frac{\lambda_i}{m}$ for all $i \in [d]$, which is the empirical $L^2$ norm of $\phi_i$.
Therefore, we need to bound two things: (i) the gap between $\frac{\lambda_1 + \cdots + \lambda_d}{m}$ and $s_1^2 + \cdots + s_d^2$; (ii) the gap between the empirical $L^2$ norm and the actual $L^2$ norm.

Using \cite[Theorem~3.2]{blanchard2007statistical}, for any $\delta \in (0,1)$, gap (i) can be bounded by
\begin{equation}
\label{eqn:approx-1}
\frac{\lambda_1 + \cdots + \lambda_d}{m} \ge s_1^2 + \cdots + s_d^2 - \frac{\bt^2}{\sqrt{m}} \sqrt{\frac{1}{2} \log \frac{6}{\delta}} \qquad \text{with probability at least } 1 - \frac{\delta}{2}  .
\end{equation}

Gap (ii) can be bounded using classical generalization bounds with the Rademacher complexity.

\begin{definition}
\label{def:rademacher}
Let $\gF$ be a function class.
Let $S = \oset{x_1,\cdots,x_m}$ be \iid{} drawn from $\px$.
The \textbf{empirical Rademacher complexity} of $\gF$ on $S$ is defined as
\begin{equation*}
    \hrad_S(\gF)  = \E_{\sigma_1,\cdots,\sigma_m} \brac{ \sup_{f \in \gF} \frac{1}{m} \sum_{i=1}^m \sigma_i f(x_i) } ,
\end{equation*}
where $\sigma_1,\cdots,\sigma_m$ are Rademacher variables, which are \iid{} uniform random variables taking values in $\oset{-1, +1}$.
The \textbf{Rademacher complexity} of $\gF$ is defined as
\begin{equation*}
    \rad_m(\gF) = \E_S[\hrad_S(\gF)]  .
\end{equation*}
\end{definition}

Define the following function class:
\begin{equation*}
    \gF_d := \sset{F = f_1^2 + \cdots + f_d^2}{f_i \in \hk; \; \forall i, j \in [d], \; \dotp{f_i, f_j}_{\hk} = \sI[i=j]}  .
\end{equation*}
Since $\phi_1^2 + \cdots + \phi_d^2 \in \gF_d$, it suffices to bound $\abs{\frac{1}{m} \sum_{i=1}^m F(x_i) - \E[F(X)] }$ for all $F \in \gF_d$, which requires $\rad_m(\gF_d)$ and a bound for $|F(x)|$.
Let $\phi_i = \sum_j u_{ij} s_j \mu_j$, and $\mU = (u_{ij}) = [\vu_1,\cdots,\vu_d]$, which is a matrix with $d$ columns and infinitely many rows. Then, $\mU^{\top} \mU = \mI_d$.
Let $\mM(x) = [s_1 \mu_1(x), s_2 \mu_2(x), \cdots]$.
Then, by \Cref{def:context-complexity}, we have $\norm{\mM(x)}_2^2 = \sum s_i^2 \mu_i(x)^2 \le \bt^2$ for $\px$-almost all $x$.
Thus, for $\px$-almost all $x$, we have
\begin{equation*}
    \abs{F(x)} = \abs{\mM(x)^{\top} \mU \mU^{\top} \mM(x) } \le \norm{\mM(x)}_2^2 \norm{\mU \mU^{\top}}_2 \le \bt^2 \quad \text{for all } F \in \gF_d  .
\end{equation*}

Regarding $\rad_m(\gF_d)$, we have the following result.
\begin{lemma}[Proof in \Cref{app:proof-lem-rad-gfd}]
\label{lem:rad-gfd}
$\rad_m(\gF_d) \le \frac{\sqrt{d}}{\sqrt{m}} \bt^2$.
\end{lemma}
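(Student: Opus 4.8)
\textbf{Proof plan for Lemma \ref{lem:rad-gfd}.}

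The plan is to bound $\rad_m(\gF_d)$ by first rewriting elements of $\gF_d$ in terms of the coefficient matrix $\mU$, and then applying a contraction/linearization step to reduce to the Rademacher complexity of a linear function class in $\hk$. Recall that any $F \in \gF_d$ has the form $F = \sum_{i=1}^d f_i^2$ with $\dotp{f_i,f_j}_{\hk} = \sI[i=j]$. Writing $f_i = \sum_j u_{ij} s_j \mu_j$ and $\mM(x) = [s_1 \mu_1(x), s_2 \mu_2(x), \cdots]$ as in the excerpt, we have $F(x) = \mM(x)^\top \mU \mU^\top \mM(x) = \norm{\mU^\top \mM(x)}_2^2$, where $\mU^\top \mU = \mI_d$. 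So $\gF_d$ is exactly the class of squared norms of $d$-dimensional projections of the feature vector $\mM(x)$, and the orthonormality constraint is the constraint $\mU^\top\mU = \mI_d$.

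First I would set $\mP = \mU \mU^\top$, which ranges over rank-$d$ orthogonal projections, and write $F(x) = \langle \mP, \mM(x)\mM(x)^\top\rangle$ (Hilbert--Schmidt inner product). Then
\begin{equation*}
\hrad_S(\gF_d) = \E_\sigma \sup_{\mP} \frac{1}{m}\sum_{i=1}^m \sigma_i \langle \mP, \mM(x_i)\mM(x_i)^\top\rangle = \E_\sigma \sup_{\mP} \Big\langle \mP, \; \frac{1}{m}\sum_{i=1}^m \sigma_i \mM(x_i)\mM(x_i)^\top\Big\rangle .
\end{equation*}
For a fixed symmetric matrix $\mA$, $\sup_{\mP} \langle \mP, \mA\rangle$ over rank-$d$ projections equals the sum of the top-$d$ eigenvalues of $\mA$, which is at most $\sqrt{d}\,\norm{\mA}_{\hs}$ (Cauchy--Schwarz on the eigenvalue vector, keeping only $d$ terms). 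Applying this with $\mA = \frac{1}{m}\sum_i \sigma_i \mM(x_i)\mM(x_i)^\top$ and then Jensen's inequality gives
\begin{equation*}
\hrad_S(\gF_d) \le \sqrt{d}\;\E_\sigma \Big\| \frac{1}{m}\sum_{i=1}^m \sigma_i \mM(x_i)\mM(x_i)^\top\Big\|_{\hs} \le \frac{\sqrt d}{m}\Big(\E_\sigma \big\| \textstyle\sum_i \sigma_i \mM(x_i)\mM(x_i)^\top\big\|_{\hs}^2\Big)^{1/2}.
\end{equation*}
Expanding the square and using $\E[\sigma_i\sigma_j] = \sI[i=j]$, the cross terms vanish and the sum becomes $\sum_i \norm{\mM(x_i)\mM(x_i)^\top}_{\hs}^2 = \sum_i \norm{\mM(x_i)}_2^4 \le m\,\bt^4$, using $\norm{\mM(x)}_2^2 \le \bt^2$ for $\px$-a.a.\ $x$ from \Cref{def:context-complexity}. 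This yields $\hrad_S(\gF_d) \le \frac{\sqrt d}{m}\sqrt{m\bt^4} = \frac{\sqrt d}{\sqrt m}\bt^2$, and taking expectation over $S$ gives $\rad_m(\gF_d) \le \frac{\sqrt d}{\sqrt m}\bt^2$ as claimed.

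The step I expect to be the main obstacle is justifying the manipulations rigorously in the infinite-dimensional setting: $\mM(x)$ lives in $\ell^2$ (its norm is bounded by $\bt$ a.s.), and $\mU^\top\mU = \mI_d$ must be interpreted for a matrix with infinitely many rows, so the interchange of $\sup_\mP$ with the inner product and the eigenvalue characterization of $\sup_{\mP}\langle\mP,\mA\rangle$ need the operator $\mA = \frac1m\sum_i\sigma_i \mM(x_i)\mM(x_i)^\top$ to be a finite-rank (hence trace-class, self-adjoint) operator on $\ell^2$ — which it is, being a sum of $m$ rank-one operators. With that observation the spectral argument goes through verbatim. An alternative route that sidesteps the projection geometry is to bound $\gF_d$ by a sum over coordinates: $F = \sum_{i=1}^d f_i^2$ with each $\norm{f_i}_{\hk}=1$, so $\rad_m(\gF_d) \le \sum_{i=1}^d \rad_m(\{f^2 : \norm{f}_{\hk}=1\})$, then use the contraction lemma for the map $t \mapsto t^2$ (Lipschitz constant $2\sup|f(x)| \le 2\bt$ on the relevant range) together with the standard bound $\rad_m(\{f : \norm{f}_{\hk}\le 1\}) \le \bt/\sqrt m$; this gives $O(d\bt^2/\sqrt m)$, which is weaker by a factor $\sqrt d$, so the spectral argument above is the one to carry out.
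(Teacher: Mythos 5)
Your proof is correct and takes essentially the same route as the paper: both reduce $F(x)$ to $\langle \mU\mU^\top, \mM(x)\mM(x)^\top\rangle$, bound the supremum over rank-$d$ projections by $\sqrt{d}$ times the Hilbert–Schmidt/Frobenius norm (you via Cauchy–Schwarz on the top-$d$ eigenvalues, the paper via von Neumann's trace inequality — the same bound), and then apply Jensen and expand the square using $\E[\sigma_i\sigma_j]=\sI[i=j]$ and $\norm{\mM(x)}_2^2\le\bt^2$. The alternative contraction route you mention at the end is indeed weaker by a factor of $\sqrt{d}$, as you note, and is not what the paper uses.
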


Hence, by \cite[Theorem~4.10]{Wainwright_2019}, with probability at least $1 - \frac{\delta}{2}$ we have
\begin{equation*}
    \abs{\frac{1}{m} \sum_{i=1}^m F(x_i) - \E_{\px}[F(X)] } \le \frac{\bt^2}{\sqrt{m}} \paren{ 2 \sqrt{d} + \sqrt{2 \log \frac{2}{\delta}} } \qquad \text{for all } F \in \gF_d ,
\end{equation*}
which implies that
\begin{equation}
\label{eqn:approx-2}
    \norm{\phi_1}_{\px}^2 + \cdots + \norm{\phi_d}_{\px}^2 \ge \frac{\lambda_1 + \cdots + \lambda_d}{m} - \frac{\bt^2}{\sqrt{m}} \paren{ 2 \sqrt{d} + \sqrt{2 \log \frac{2}{\delta}} }  .
\end{equation}

Combining \Cref{eqn:approx-1,eqn:approx-2}, with probability at least $1 - \delta$ we have
\begin{equation*}
    \norm{\phi_1}_{\px}^2 + \cdots + \norm{\phi_d}_{\px}^2 \ge s_1^2 + \cdots + s_d^2 - \frac{\bt^2}{\sqrt{m}} \paren{ 2 \sqrt{d} +3\sqrt{ \log \frac{6}{\delta}} }  .
\end{equation*}
Thus, by \Cref{lem:hk}, we get
\begin{equation*}
\norm{f_0}_{\px}^2 \le s_{d+1}^2 + \frac{\bt^2}{\sqrt{m}} \paren{ 2 \sqrt{d} +3\sqrt{ \log \frac{6}{\delta}} }  .
\end{equation*}

Let $\alpha = \norm{\fPhi}_{\hk}^2 \ge s_1^{-2} \norm{\fPhi}_{\px}^2$. Then, we have
\begin{equation*}
    \alpha^2 + \beta^2 = \norm{f^*}_{\hk}^2 \le \frac{\norm{f^*}_{\px}^2}{(1-\epsilon)^2} \le \frac{\paren{ \norm{\fPhi}_{\px} + \beta \norm{f_0}_{\px}  }^2}{(1-\epsilon)^2} \le \frac{ s_1^2 \alpha^2 + \beta^2 \norm{f_0}_{\px}^2 + (\alpha^2 + \beta^2) s_1 \norm{f_0}_{\px} }{(1-\epsilon)^2}  .
\end{equation*}
With some simple algebra, we get
\begin{equation*}
  (s_1^2 - \norm{f_0}_{\px}^2) \beta^2 \le \brac{s_1^2 - (1-\epsilon)^2 + s_1 \norm{f_0}_{\px}} (\alpha^2 + \beta^2) \le \brac{s_1^2 - (1-\epsilon)^2 + s_1 \norm{f_0}_{\px}} \frac{\norm{f^*}_{\px}^2}{(1-\epsilon)^2}  .
\end{equation*}
Note that $\norm{f^* - \fPhi}_{\hk} = \beta$, and $\norm{f^* - \fPhi}_{\px} = \beta \norm{f_0}_{\px}$.
Hence, we have proved the following bound for the approximation error.
\begin{theorem}
\label{thm:approx-error}
Let $\fPhi$ be the projection of any $f^* \in \fhkp$ onto the span of $\Phi$, such that $\dotp{f^* - \fPhi, \fPhi}_{\hk} = 0$.
Suppose $(1-\epsilon)^2 < s_1^2$.
Then, for any $\delta \in (0,1)$, with probability at least $1 - \delta$ we have
\begin{align*}
    \norm{f^* - \fPhi}_{\hk}^2 & \le \frac{s_1^2 - (1-\epsilon)^2 + s_1 \sqrt{s_{d+1}^2 + \frac{\bt^2}{\sqrt{m}} \paren{ 2 \sqrt{d} +3\sqrt{ \log \frac{6}{\delta}} }} }{s_1^2 - s_{d+1}^2 - \frac{\bt^2}{\sqrt{m}} \paren{ 2 \sqrt{d} +3\sqrt{ \log \frac{6}{\delta}} } } \cdot \frac{\norm{f^*}_{\px}^2}{(1-\epsilon)^2} ; \\ 
    \norm{f^* - \fPhi}_{\px}^2 & \le  \frac{s_1^2 - (1-\epsilon)^2 + s_1 \sqrt{s_{d+1}^2 + \frac{\bt^2}{\sqrt{m}} \paren{ 2 \sqrt{d} +3\sqrt{ \log \frac{6}{\delta}} }}}{s_1^2 - s_{d+1}^2 - \frac{\bt^2}{\sqrt{m}} \paren{ 2 \sqrt{d} +3\sqrt{ \log \frac{6}{\delta}} } } \cdot \frac{s_1^2 \norm{f^*}_{\px}^2}{(1-\epsilon)^2} ,
\end{align*}
assuming that the denominator is positive.
\end{theorem}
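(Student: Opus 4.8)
The plan is to bound the approximation error by the distance, measured in the induced RKHS $\hk$, from $f^*$ to the span of the empirical eigenfunctions $\phi_1,\dots,\phi_d$ produced by kernel PCA, and then convert this into a population $L^2$ bound using the hypothesis $f^* \in \fhkp$. Write $f^* - \fPhi = \beta f_0$ with $\norm{f_0}_{\hk} = 1$ and $\dotp{f^* - \fPhi, \fPhi}_{\hk} = 0$. Since $\phi_1,\dots,\phi_d,f_0$ are orthonormal in $\hk$, \Cref{lem:hk} gives $\norm{\phi_1}_{\px}^2 + \cdots + \norm{\phi_d}_{\px}^2 + \norm{f_0}_{\px}^2 \le s_1^2 + \cdots + s_{d+1}^2$, so it suffices to prove a high-probability \emph{lower} bound on $\sum_{i=1}^d \norm{\phi_i}_{\px}^2$. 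The empirical counterpart of this quantity is exactly $\frac{1}{m}\sum_{j=1}^m\sum_{i=1}^d \phi_i(x_j)^2 = \frac{\lambda_1+\cdots+\lambda_d}{m}$, so the argument splits into two deviation estimates: (a) the gap between $\frac{1}{m}(\lambda_1+\cdots+\lambda_d)$ and $s_1^2+\cdots+s_d^2$, which I will control via the empirical-spectrum trace concentration of \cite{blanchard2007statistical} (their Theorem~3.2), incurring an additive error of order $\bt^2 m^{-1/2}\sqrt{\log(1/\delta)}$ (this uses that $\tkx$ is trace-class, which is exactly what finiteness of $\bt$ guarantees); and (b) a uniform bound on the gap between empirical and population $L^2$ norms over the class $\gF_d = \{f_1^2+\cdots+f_d^2 : f_i\in\hk,\ \dotp{f_i,f_j}_{\hk}=\sI[i=j]\}$, which contains $\phi_1^2+\cdots+\phi_d^2$.

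For step (b) I will apply a standard symmetrization/Rademacher bound such as \cite[Theorem~4.10]{Wainwright_2019}, whose two inputs are a uniform envelope $|F(x)|\le\bt^2$ for all $F\in\gF_d$ and $\px$-a.e.\ $x$, and a bound on $\rad_m(\gF_d)$. The envelope follows by writing $F(x)=\mM(x)^\top\mU\mU^\top\mM(x)$ with $\mU$ having $\hk$-orthonormal columns and $\norm{\mM(x)}_2^2=\sum s_i^2\mu_i(x)^2\le\bt^2$ by \Cref{def:context-complexity}; the Rademacher term is \Cref{lem:rad-gfd}, giving $\rad_m(\gF_d)\le\sqrt d\,\bt^2/\sqrt m$. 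A union bound over the two events of probability $\delta/2$ then yields $\sum_{i=1}^d\norm{\phi_i}_{\px}^2 \ge s_1^2+\cdots+s_d^2 - \bt^2 m^{-1/2}\paren{2\sqrt d + 3\sqrt{\log(6/\delta)}}$, hence, by \Cref{lem:hk}, $\norm{f_0}_{\px}^2 \le s_{d+1}^2 + \bt^2 m^{-1/2}\paren{2\sqrt d + 3\sqrt{\log(6/\delta)}}$.

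The final step is pure algebra, mirroring the proof of \Cref{thm:approx-error-bound}. Set $\alpha=\norm{\fPhi}_{\hk}$, use $\norm{\fPhi}_{\px}^2\le s_1^2\alpha^2$ and the compatibility inequality $\norm{f^*}_{\hk}\le\norm{f^*}_{\px}/(1-\epsilon)$, expand $\alpha^2+\beta^2=\norm{f^*}_{\hk}^2$ and $\norm{f^*}_{\px}^2=\norm{\fPhi+\beta f_0}_{\px}^2\le\paren{\norm{\fPhi}_{\px}+\beta\norm{f_0}_{\px}}^2$, and collect terms to reach $(s_1^2-\norm{f_0}_{\px}^2)\beta^2 \le [s_1^2-(1-\epsilon)^2+s_1\norm{f_0}_{\px}]\norm{f^*}_{\px}^2/(1-\epsilon)^2$; substituting the high-probability bound on $\norm{f_0}_{\px}^2$ and using $\norm{f^*-\fPhi}_{\hk}^2=\beta^2$ and $\norm{f^*-\fPhi}_{\px}^2=\beta^2\norm{f_0}_{\px}^2\le s_1^2\beta^2$ produces the two claimed inequalities, valid whenever the denominator $s_1^2-s_{d+1}^2-\bt^2 m^{-1/2}(2\sqrt d+3\sqrt{\log(6/\delta)})$ is positive. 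I expect the main obstacle to be step (b): obtaining a control of $\rad_m(\gF_d)$ with the sharp $\sqrt d$ dependence and correct $\bt^2$ scaling, while being careful that $\gF_d$ is defined by orthonormality in $\hk$ (not in $\lxp$), since with finitely many samples the kernel-PCA functions $\phi_i$ are only $\hk$-orthonormal — this mismatch is precisely why the whole bound is phrased in terms of $\hk$ and $\fhkp$ rather than $\lxp$ and $\fep$.
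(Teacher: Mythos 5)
Your proposal matches the paper's proof essentially line-for-line: the same decomposition $f^* - \fPhi = \beta f_0$, the same appeal to \Cref{lem:hk}, the same two-part deviation argument (Blanchard et al.\ for the empirical trace, \Cref{lem:rad-gfd} with \cite[Theorem~4.10]{Wainwright_2019} for the empirical-vs-population $L^2$ gap over $\gF_d$), the same union bound giving the $2\sqrt d + 3\sqrt{\log(6/\delta)}$ term, and the same closing algebra via $\norm{\fPhi}_{\px}^2\le s_1^2\norm{\fPhi}_{\hk}^2$, $\norm{f^*}_{\hk}\le\norm{f^*}_{\px}/(1-\epsilon)$, and $\norm{f_0}_{\px}^2\le s_1^2$. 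The proof is correct as proposed.
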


Comparing to \Cref{thm:top-d-optimal}, we can see that this bound is fairly tight when $m$ is sufficiently large. The only differences are that the numerator has an extra term that is close to $s_1 s_{d+1}$ ($s_{d+1}$ should be pretty small), and $\norm{f^*}_{\px}^2$ is multiplied by $\frac{s_1^2}{(1-\epsilon)^2}$.

\paragraph{Estimation error bound.}
Next, we bound the estimation error of the predictor fit on top of $\Phi$ using the $n$ labeled downstream samples. The bound consists of two parts:
\begin{enumerate}[label=(\roman*)]
    \item Assume that the labels are generated from $\fPhi(\tilde{x}_i)$. Let $\hat{f}$ be the learned predictor. Then, the gap $\norm{\hat{f} - \fPhi}_{\px}$ can be bounded using standard generalization bounds.
    \item However, the real labels are generated from $f^*(\tilde{x}_i)$ instead of $\fPhi(\tilde{x}_i)$. Thus, we need to bound the gap between $f^*(\tilde{x}_i)$ and $\fPhi(\tilde{x}_i)$.
\end{enumerate}

Let the downstream training set be $\oset{(\tilde{x}_i, y_i)}_{i=1}^n$ \iid{} drawn from a distribution $P_{\gX,\gY}$ with marginal distribution $\px$, and we assume the following \textbf{moment condition} \cite{fischer2020sobolev}:
\begin{equation}
\label{eqn:moment-condition}
    \E_{P_{\gX,\gY}} \brac{ \abs{ Y - f^*(X) }^r } \le \frac{1}{2} r! \sigma^2 L^{r-2} \quad \text{for all } r \ge 2 \text{ and } \px\text{-almost all } x  ,
\end{equation}
for some $\sigma, L > 0$. For example, if $Y \sim \gN(f^*(X), \sigma^2)$, then this condition holds with $L = \sigma$.
Denote $\vy = [y_1,\cdots,y_n]$, $\vy^* = [f^*(\tilde{x}_1), \cdots, f^*(\tilde{x}_n)]$, and $\vy^*_\Phi = [\fPhi(\tilde{x}_1), \cdots, \fPhi(\tilde{x}_n)]$.
Then, $\vy_\Phi = \vy - \vy^* + \vy_\Phi^*$ consists of the original labels shifted by $\fPhi - f^*$.
Part (i) bounds the estimation error assuming that the labels are $\vy_\Phi$.
Then, part (ii) bounds the gap between  $\vy_{\Phi}$ and $\vy$.

\paragraph{Part (i) Estimation error bound for kernel ridge regression.}
Let the downstream predictor be $f(x) = \vw^{\top} \Phi(x)$.
Here we consider an unbiased linear predictor for simplicity.
The proof when the predictor is biased is the same but a bit more verbose.
Consider fitting $\vw$ with ridge regression given by
\begin{equation*}
    \tilde{\vw} = \argmin_{\vw} \oset{ \frac{1}{n} \paren{ \vy_{\Phi}[i] - \vw^{\top}\Phi(\tilde{x}_i) }^2 + \beta_n \norm{\vw}_2^2 } , 
\end{equation*} 
where the regularization parameter $\beta_n$ can change with $n$.
When $n$ is larger, typically we would like $\beta_n$ to be smaller, because we have more samples so can use a smaller regularization to achieve the same level of generalization.
Let $\tilde{f}(x) = \tilde{\vw}^{\top} \Phi(x)$.
Note that $\tilde{f} \in \hk$, and $\norm{\tilde{f}}_{\hk}^2 = \sum \tilde{w}_i^2 \norm{\phi_i}_{\hk}^2 = \norm{\tilde{\vw}}_2^2$.
Thus, the above regression is equivalent to a kernel ridge regression on $\hk$, where the regularization term is $\beta_n \norm{\tilde{f}}_{\hk}^2$.

Denote $\mPhi = [\Phi(\tilde{x}_1),\cdots,\Phi(\tilde{x}_n)] \in \R^{d \times n}$.
A standard result in statistics shows that
\begin{equation}
\label{eqn:kernel-ridge-regression}
    \tilde{\vw} = \paren{ \mPhi \mPhi^{\top} + n \beta_n \mI_d }^{-1} \mPhi \vy_\Phi  .
\end{equation}

$\norm{\tilde{f} - \fPhi}_{\px}^2$ can be bounded using the results in \cite{fischer2020sobolev}.
To use their results, apart from the moment condition \Cref{eqn:moment-condition}, three additional conditions are required:
\begin{itemize}
    \item \textbf{Eigenvalue decay (EVD):} Let $\zeta_1 \ge \zeta_2 \ge \cdots$ be the eigenvalues of the RKHS. Then, $\zeta_i \le c_1 i^{-\frac{1}{p}}$ for some constant $c_1 > 0$ and $p \in (0, 1]$.
    \item \textbf{Embedding condition (EMB):} There exists a constant $c_2 > 0$ such that for any $f \in \hk$, $\norm{f}_{\infty} \le c_2 \norm{f}_{\hk}$. Here $\norm{f}_{\infty}$ is the smallest $B$ such that $|f(x)| \le B$ for $\px$-almost all $x$.
    \item \textbf{Source condition (SRC):} There exists a constant $c_3 > 0$ such that $\norm{f}_{\hk} \le c_3$.
\end{itemize}

Space $\hk$ obviously satisfies (EVD) because its rank is $d$, so the condition holds for any $p \in (0,1]$.
For $f = \fPhi$, it satisfies (EMB) with $c_2 = \bt$, because for any $x_0 \in \gX$, we have $f(x_0) = \dotp{f, k(x_0, \cdot)}_{\hk} \le \norm{f}_{\hk} \norm{k(x_0, \cdot)}_{\hk}$,
and $\norm{k(x_0, \cdot)}_{\hk}^2 = \norm{\sum s_i^2 \mu_i(x_0) \mu_i(\cdot)}_{\hk}^2 = \sum s_i^2 \mu_i(x_0)^2 \le \bt^2$ for $\px$-almost all $x_0$.
Moreover, assuming that $f^*$ is fixed, $f = \fPhi$ also satisfies (SRC) with $c_3 = \norm{f^*}_{\px} / (1-\epsilon)$ because $\norm{\fPhi}_{\hk} \le \norm{f^*}_{\hk}$.

With all these conditions, invoking Theorem 3.1 in \cite{fischer2020sobolev}, we get the following.
\begin{theorem}
\label{thm:fischer-1}
For any $p \in (0,1]$ and sufficiently large $n \ge 1$ and $\eta > 0$, suppose we choose $\beta_n = \Theta\paren{n^{-\frac{1}{1+p}}}$. Then, there exists a constant $A > 0$ independent of $n$ and $\eta$, such that with probability at least $1 - 4 e^{\eta}$, we have $\norm{\tilde{f} - \fPhi}_{\px} \le s_1 \norm{\tilde{f} - \fPhi}_{\hk}$, and
\begin{equation*}
\norm{\tilde{f} - \fPhi }_{\hk}^2 \le 2\sqrt{\beta_n} \frac{\norm{f^*}_{\px}^2}{(1-\epsilon)^2} + A \eta^2 \brac{ \frac{\bt^2 \paren{ \sigma^2 + \frac{\norm{f^*}_{\px}^2}{(1-\epsilon)^2} }}{n \beta_n^{\frac{1}{2} + p}} +  \frac{\bt^2 \max \oset{ L^2, 4 \bt^2 \frac{\norm{f^*}_{\px^2}}{(1-\epsilon)^2}  }}{ n^2 \sqrt{\beta_n} }  }   .
\end{equation*}
\end{theorem}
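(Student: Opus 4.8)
The plan is to recognize the downstream ridge regression as a kernel ridge regression in a finite-rank RKHS and then invoke Theorem~3.1 of \cite{fischer2020sobolev}. I would first condition on the pretrain sample $x_1,\cdots,x_m$, so that $\Phi = [\phi_1,\cdots,\phi_d]$ and hence the projection $\fPhi$ are fixed; every constant produced below will be uniform over this conditioning, so the bound transfers to the unconditional setting with the same failure probability. Next I would make the equivalence precise: let $\gH = \sspan\oset{\phi_1,\cdots,\phi_d}$ carry the inner product inherited from $\hk$. Since $\dotp{\phi_i,\phi_j}_{\hk} = \sI[i=j]$, this $\gH$ is a $d$-dimensional RKHS with reproducing kernel $\hat k(x,x') = \sum_{i=1}^d \phi_i(x)\phi_i(x')$, and for $f = \vw^{\top}\Phi \in \gH$ one has $\norm{f}_{\gH}^2 = \norm{\vw}_2^2$. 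Therefore the problem defining $\tilde\vw$ is exactly kernel ridge regression on $\gH$ with regularization $\beta_n\norm{f}_{\gH}^2$, run on the labels $\vy_\Phi$, i.e.\ with clean target $\fPhi \in \gH$ and additive noise $\xi_i = y_i - f^*(\tilde x_i)$.

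Then I would verify the four hypotheses of \cite[Thm.~3.1]{fischer2020sobolev} for this regression, with target $\fPhi$. The \textbf{moment condition} on $\xi$ is exactly \Cref{eqn:moment-condition}, with parameters $\sigma, L$. For \textbf{(EVD)}: $\gH$ has rank $d$, so its eigenvalues satisfy $\zeta_i = 0$ for $i>d$, and $\zeta_i \le c_1 i^{-1/p}$ holds for every $p\in(0,1]$ with $c_1=\zeta_1$; moreover $\sum_i \zeta_i = \sum_{i=1}^d \norm{\phi_i}_{\px}^2 \le s_1^2+\cdots+s_d^2 \le \bt^2$ by \Cref{lem:hk} and \Cref{def:context-complexity}, which is what lets the effective-dimension factor in the Fischer--Steinwart bound be replaced by $\bt^2$ rather than $d$. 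For \textbf{(EMB)}: every $f\in\gH\subseteq\hk$ satisfies, by the reproducing property of $\hk$, $f(x_0) = \dotp{f, k(x_0,\cdot)}_{\hk} \le \norm{f}_{\hk}\norm{k(x_0,\cdot)}_{\hk}$, and $\norm{k(x_0,\cdot)}_{\hk}^2 = \sum s_i^2\mu_i(x_0)^2 \le \bt^2$ for $\px$-almost all $x_0$, so (EMB) holds with $c_2=\bt$ (note $\norm{\cdot}_{\gH}=\norm{\cdot}_{\hk}$ on $\gH$). For \textbf{(SRC)}: since $\fPhi$ is the $\hk$-orthogonal projection of $f^*$ onto $\gH$, $\norm{\fPhi}_{\hk}\le\norm{f^*}_{\hk}$, and $f^*\in\fhkp$ gives $\norm{f^*}_{\hk}\le\norm{f^*}_{\px}/(1-\epsilon)$ (taking $f^*$ to have zero mean, as elsewhere in the chapter), so (SRC) holds with $c_3 = \norm{f^*}_{\px}/(1-\epsilon)$; and because $\fPhi\in\gH$ exactly, the associated source exponent sits at its maximal value.

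Finally I would feed these constants into \cite[Thm.~3.1]{fischer2020sobolev}: specializing the source exponent to its maximal value and choosing $\beta_n = \Theta(n^{-1/(1+p)})$ so as to balance the bias term $2\sqrt{\beta_n}\,c_3^2$ against the leading variance term yields, with probability at least $1-4e^{\eta}$, precisely the stated two-term bound on $\norm{\tilde f-\fPhi}_{\hk}^2$, with the universal constant $A$ absorbing the remaining ($c_1$-dependent and numerical) factors. The comparison $\norm{\tilde f-\fPhi}_{\px}\le s_1\norm{\tilde f-\fPhi}_{\hk}$ then follows immediately, since for $g = \tilde f-\fPhi = \sum u_i\mu_i \in \hk$ we have $\norm{g}_{\px}^2 = \sum u_i^2$ while $\norm{g}_{\hk}^2 = \sum u_i^2/s_i^2 \ge s_1^{-2}\sum u_i^2$.

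The verifications above are routine; the main obstacle is instantiating Fischer--Steinwart's general theorem correctly, as it carries several interpolation exponents (eigenvalue decay $p$, an embedding index, and a source index), and one must check that the finite rank of $\gH$, its $\bt$-uniform boundedness, and the exact membership $\fPhi\in\gH$ jointly pin these exponents down so that the general rate collapses to the displayed expression—and, in particular, that the trace bound $\sum_i\zeta_i\le\bt^2$ genuinely removes every explicit dependence on $d$.
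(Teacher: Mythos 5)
Your proof takes essentially the same approach as the paper: identify the downstream ridge regression as kernel ridge regression in the RKHS spanned by $\phi_1,\cdots,\phi_d$ with the $\hk$-inherited inner product, verify the moment condition, (EVD), (EMB), and (SRC) with the same constants ($c_2=\bt$, $c_3=\norm{f^*}_{\px}/(1-\epsilon)$), and then invoke \cite[Thm.~3.1]{fischer2020sobolev}. Your write-up is in fact slightly more careful than the paper's on two points — you explicitly name the $d$-dimensional RKHS $\gH=\sspan\{\phi_1,\cdots,\phi_d\}$ where the paper loosely says ``$\hk$ \ldots{} has rank $d$'', and you make explicit the conditioning on the pretrain sample with the observation that all Fischer--Steinwart constants are uniform in $\Phi$ (in particular via $\zeta_1\le\sum_i\zeta_i\le\bt^2$) — but these are clarifications rather than a different route.
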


We can see that as $n \rightarrow \infty$, there is $\norm{\tilde{f} - \fPhi }_{\px} \rightarrow 0$.
when $p$ is very small, we can think of this bound as $O(\frac{\bt^2}{\sqrt{n}})$.
In the ideal case, all singular values except the top-$d$ are $0$; then, $\bt^2$ can be viewed as an approximation of $\sum_{i=1}^{d} s_i^2$.
This is the bound that we used to derive the metric in \Cref{sec:context-usefulness}.

\paragraph{Part (ii) Shift in the labels.}
The actual downstream predictor is given by
\begin{equation}
   \hat{\vw} = \paren{ \mPhi \mPhi^{\top} + n \beta_n \mI_d }^{-1} \mPhi \vy  ,
\end{equation}
similar to \Cref{eqn:kernel-ridge-regression}. Let $\hat{f}(x) = \hat{\vw}^{\top} \Phi(x)$.
Then, we have
\begin{equation*}
    \norm{\hat{f} - \tilde{f}}_{\hk}^2 = \norm{ \brac{ \paren{ \mPhi \mPhi^{\top} + n \beta_n \mI_d }^{-1} \mPhi (\vy - \vy_{\Phi}) }^{\top} \Phi }_{\hk}^2 = \norm{\paren{ \mPhi \mPhi^{\top} + n \beta_n \mI_d }^{-1} \mPhi (\vy - \vy_{\Phi})}_2^2  .
\end{equation*}

Let $\mQ = \mPhi \mPhi^{\top} + n \beta_n \mI_d$.
It suffices to bound $\norm{\mQ^{-1} \mPhi}_2^2$ and $\norm{\vy - \vy_\Phi}_2^2$.
To bound these, we need the following lemma, which can be proved using the Rademacher complexity.
\begin{lemma}[Proof in \Cref{app:proof-lem-unit-rad}]
\label{lem:unit-rad}
Let $\gF = \sset{f_1 f_2}{f_1,f_2 \in \hk; \norm{f_1}_{\hk}, \norm{f_2}_{\hk} \le 1}$.
Then, $\rad_n(\gF) \le \frac{\bt^2}{\sqrt{n}}$.
If $m \ge n$, then for any $\delta \in (0,1)$, with probability at least $1 - \delta$,
\begin{equation*}
    \norm{\mPhi^{\top} \vu}_2^2 \ge \frac{n}{m} \lambda_d - \bt^2 \sqrt{n} \paren{4 + 2 \sqrt{2 \log \frac{2}{\delta}}} \quad \text{for any unit vector } \vu \in \R^d  .
\end{equation*}
\end{lemma}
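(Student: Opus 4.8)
The plan is to prove the two assertions in sequence, using the Rademacher bound as the workhorse for the high-probability inequality.

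For $\rad_n(\gF)$, the key step is to realize a product $f_1 f_2$ of unit-norm functions in $\hk$ as a bounded linear functional of a tensor-squared feature map. By the reproducing property, $(f_1 f_2)(x) = \dotp{f_1, k(\cdot,x)}_{\hk}\dotp{f_2, k(\cdot,x)}_{\hk} = \dotp{f_1 \otimes f_2,\; k(\cdot,x)\otimes k(\cdot,x)}_{\hk \otimes \hk}$, and here $\norm{f_1 \otimes f_2}_{\hk\otimes\hk} = \norm{f_1}_{\hk}\norm{f_2}_{\hk} \le 1$ while $\norm{k(\cdot,x)\otimes k(\cdot,x)}_{\hk\otimes\hk} = k(x,x) \le \bt^2$. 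Thus $\gF$ sits inside the unit ball of the RKHS $\mathcal{H}_{k^2}$ associated with the kernel $\tilde K(x,x') = k(x,x')^2$, which satisfies $\tilde K(x,x) \le \bt^4$ for $\px$-almost all $x$. I would then invoke the classical estimate $\hrad_S(\{g : \norm{g}_{\mathcal{H}_{k^2}} \le 1\}) \le \frac1n\sqrt{\sum_{i=1}^n \tilde K(x_i,x_i)}$, which is $\le \frac1n\sqrt{n\bt^4} = \frac{\bt^2}{\sqrt n}$, and take the expectation over $S$ to conclude $\rad_n(\gF) \le \bt^2/\sqrt n$. Along the way I record that every $g\in\gF$ obeys $|g(x)| = |f_1(x)||f_2(x)| \le \norm{k(\cdot,x)}_{\hk}^2 \le \bt^2$.

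For the second claim, fix a unit vector $\vu=(u_1,\dots,u_d)\in\R^d$ and set $f_\vu = \vu^\top\Phi = \sum_i u_i\phi_i$. Since $\phi_1,\dots,\phi_d$ are orthonormal in $\hk$, $\norm{f_\vu}_{\hk}=1$, so $f_\vu^2 \in \gF$ and $\norm{f_\vu^2}_\infty \le \bt^2$. A short computation using $\phi_i(x_l) = \sqrt{\lambda_i}\,\vv_i[l]$ and the orthonormality of the $\vv_i$ gives $\frac1m\sum_{j=1}^m f_\vu(x_j)^2 = \frac1m\sum_{i=1}^d u_i^2 \lambda_i \ge \frac{\lambda_d}{m}$, using $\lambda_1 \ge \cdots \ge \lambda_d \ge 0$ and $\sum_i u_i^2 = 1$. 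On the downstream side, $\norm{\mPhi^\top\vu}_2^2 = \sum_{l=1}^n f_\vu(\tilde x_l)^2 = n\cdot\frac1n\sum_{l=1}^n f_\vu^2(\tilde x_l)$. I would then chain both empirical averages through the population value: $\frac1n\norm{\mPhi^\top\vu}_2^2 \ge \E_{\px}[f_\vu^2] - \Delta_n \ge \frac1m\sum_j f_\vu^2(x_j) - \Delta_m - \Delta_n \ge \frac{\lambda_d}{m} - \Delta_m - \Delta_n$, where $\Delta_n = \sup_{g\in\gF}(\E_{\px}[g] - \frac1n\sum_l g(\tilde x_l))$ and $\Delta_m = \sup_{g\in\gF}(\frac1m\sum_j g(x_j) - \E_{\px}[g])$. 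Taking the supremum over the whole class $\gF$ is essential, because $f_\vu^2$ depends on the pretrain sample through the $\phi_i$, so no pointwise concentration bound applies. By symmetrization and McDiarmid's inequality, with probability at least $1-\delta/2$ each, $\Delta_n \le 2\rad_n(\gF) + \bt^2\sqrt{2\log(2/\delta)/n}$ and $\Delta_m \le 2\rad_m(\gF) + \bt^2\sqrt{2\log(2/\delta)/m}$; a union bound together with $m\ge n$ and $\rad_m(\gF),\rad_n(\gF)\le\bt^2/\sqrt n$ gives $\Delta_m + \Delta_n \le \frac{\bt^2}{\sqrt n}(4 + 2\sqrt{2\log(2/\delta)})$, and multiplying through by $n$ yields the stated inequality.

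The main obstacle is obtaining the Rademacher complexity of the product class with the sharp constant: generic product-of-classes lemmas only deliver $O(\bt^2/\sqrt n)$ with a worse leading factor, whereas the tensor-square embedding exhibits $\gF$ as a subset of the unit ball of the squared kernel and produces exactly $\bt^2/\sqrt n$. Past that, the only care required is to keep the concentration uniform over $\gF$ on the pretrain side (to absorb the data-dependence of $f_\vu$) and to track the $m \ge n$ comparison so that both deviation terms collapse onto the single $\bt^2/\sqrt n$ scale.
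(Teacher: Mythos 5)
Your proof is correct and tracks the paper's argument almost exactly in the concentration step: both reduce to showing $f_\vu^2 \in \gF$, computing $\sum_j f_\vu(x_j)^2 = \sum_i u_i^2\lambda_i \ge \lambda_d$ from the kernel-PCA construction, and chaining a two-sided uniform deviation bound (Wainwright Theorem 4.10 in the paper, symmetrization + McDiarmid in yours — same thing) through the population risk, using $m\ge n$ to collapse both deviation terms onto the $\bt^2/\sqrt n$ scale.

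Where you genuinely diverge is in the Rademacher complexity bound. The paper works directly with the expansion $f_1 = \sum u_i s_i\mu_i$, $f_2 = \sum v_i s_i\mu_i$, writes $\hrad_S(\gF)$ as a supremum of $|\vu^\top(\frac1n\sum_j\sigma_j \mM_j\mM_j^\top)\vv|$ over $\ell^2$-unit vectors $\vu,\vv$, bounds the operator norm by the Frobenius norm, and applies Jensen and the independence of the $\sigma_j$ to land on $\bt^2/\sqrt n$. You instead invoke Aronszajn's theorem on products of reproducing kernels to show $\gF$ sits inside the unit ball of $\mathcal{H}_{\tilde K}$ for the Hadamard-square kernel $\tilde K(x,x') = k(x,x')^2$, then apply the classical $\hrad_S(B_{\mathcal{H}_{\tilde K}}) \le \frac1n\sqrt{\sum_i\tilde K(x_i,x_i)}$. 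Both give the same constant; the paper's is self-contained and elementary (no RKHS-of-product-kernels theory, only linear algebra and Jensen), while yours is shorter and more conceptual but implicitly borrows the nontrivial fact that $\norm{f_1 f_2}_{\mathcal{H}_{\tilde K}} \le \norm{f_1}_\hk\norm{f_2}_\hk$, which deserves a citation (Aronszajn 1950) if written up. At the computational level the two bounds are equivalent — the paper's Frobenius-norm-plus-Jensen calculation is precisely the Hilbert--Schmidt computation your unit-ball bound hides — so the divergence is presentational rather than substantive.
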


$\norm{\mQ^{-1} \mPhi}_2^2$ is equal to the largest eigenvalue of $\mPhi^{\top} \mQ^{-2} \mPhi$,
which is also the largest eigenvalue of $\mQ^{-2}\mPhi \mPhi^{\top}$ by Sylvester's theorem.
Let $\xi_1 \ge \cdots \ge \xi_d \ge 0$ and $\valpha_1,\cdots,\valpha_d$ be the eigenvalues and orthonormal eigenvectors of $\mPhi \mPhi^{\top} \in \R^{d \times d}$.
If $\xi_i = 0$, then $\valpha_i$ is also an eigenvector of $\mQ^{-2} \mPhi \mPhi^{\top}$ with eigenvalue $0$.
If $\xi > 0$, then we have
\begin{equation*}
    \mQ \valpha_i = \paren{\mPhi \mPhi^{\top} + n \beta_n \mI_d} \valpha_i = \paren{ \xi_i + n \beta_n } \valpha_i ,
\end{equation*}
which implies that $\mQ^2 \valpha_i = \paren{ \xi_i + n \beta_n }^2 \valpha_i = \frac{\paren{ \xi_i + n \beta_n }^2}{\xi_i} \mPhi \mPhi^{\top} \valpha_i$.
Thus, $\valpha_i$ is an eigenvector of $\mQ^{-2}\mPhi \mPhi^{\top}$ with eigenvalue $\frac{\xi_i}{\paren{ \xi_i + n \beta_n }^2}$.
We can therefore conclude that $\paren{\frac{\xi_i}{\paren{ \xi_i + n \beta_n }^2}}$ are all the eigenvalues of $\mQ^{-2}\mPhi \mPhi^{\top}$.
Meanwhile, by \Cref{lem:unit-rad}, we have
\begin{equation*}
    \xi_d = \valpha_d^{\top} \mPhi \mPhi^{\top} \valpha_d = \norm{\mPhi^{\top} \valpha_d}_2^2 \ge \frac{n}{m} \lambda_d - \bt^2 \sqrt{n} \paren{4 + 2 \sqrt{2 \log \frac{2}{\delta}}} .
\end{equation*}
When $n \ge \frac{4 \bt^4 m^2}{\lambda_d^2} \paren{ 4 + 2 \sqrt{2 \log \frac{2}{\delta}} }^2$, we have $\xi_d \ge \frac{n \lambda_d}{2m}$,
which implies that $\norm{\mQ^{-1} \mPhi}_2^2 \le \frac{2m}{n \lambda_d}$.
Remember that $\lambda_1$ has the same order as $m$, so we should view $\lambda_d$ as also having roughly the same order as $m$.

Regarding $\norm{\vy - \vy_{\Phi}}_2^2$, it is equal to $\sum_{i=1}^n F(\tilde{x}_i)$ where $F(x) = (f^*(x) - \fPhi(x))^2$,
and we have already bounded $\norm{f^* - \fPhi}_{\hk}^2$ in \Cref{thm:approx-error}.
We can bound $F$ using \Cref{lem:unit-rad}.

\begin{corollary}[Proof in \Cref{app:proof-lem-est-1}]
\label{cor:est-1}
With probability at least $1 - \delta$, we have
\begin{equation*}
    \frac{\norm{\vy - \vy_\Phi}_2^2}{n} \le \norm{f^* - \fPhi}_{\px}^2 + \norm{f^* - \fPhi}_{\hk}^2 \frac{\bt^2}{\sqrt{n}} \paren{ 2 + \sqrt{2 \log \frac{2}{\delta}}}  .
\end{equation*}
\end{corollary}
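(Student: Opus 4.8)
Since $\vy - \vy_\Phi = \vy^* - \vy_\Phi^*$, the quantity $\norm{\vy - \vy_\Phi}_2^2$ equals $\sum_{i=1}^n F(\tilde x_i)$ with $F(x) = (f^*(x) - \fPhi(x))^2$ (as already noted), so the plan is to bound the empirical average $\frac1n\sum_{i=1}^n F(\tilde x_i)$ by its population mean $\E_{\px}[F(X)] = \norm{f^* - \fPhi}_{\px}^2$ plus a uniform-convergence deviation. The key observation is that $\fPhi$ is built from the pretraining sample $x_1,\dots,x_m$ and is therefore data-dependent, so a pointwise concentration bound cannot be applied directly; instead I would route the argument through the \emph{fixed} function class $\gF = \sset{f_1 f_2}{f_1,f_2\in\hk; \norm{f_1}_{\hk}, \norm{f_2}_{\hk} \le 1}$ of \Cref{lem:unit-rad}, which already comes equipped with the bound $\rad_n(\gF)\le\bt^2/\sqrt n$.

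First I would write $f^* - \fPhi = \beta f_0$ with $\beta = \norm{f^* - \fPhi}_{\hk}\ge 0$; if $\beta = 0$ the statement is trivial, so assume $\beta>0$ and $\norm{f_0}_{\hk}=1$, whence $F = \beta^2 f_0^2$ and $f_0^2\in\gF$. For the uniform boundedness ingredient, the reproducing property of $\hk$ (\Cref{def:hk}) gives $|f_0(x)| = |\dotp{f_0,k(\cdot,x)}_{\hk}| \le \norm{f_0}_{\hk}\sqrt{k(x,x)} \le \bt$ for $\px$-almost all $x$ by \Cref{def:context-complexity}, so every $G\in\gF$ satisfies $|G(x)|\le\bt^2$ a.e.

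Next, exactly as in the derivation of the approximation-error bound earlier in this section, I would invoke the standard Rademacher uniform-convergence inequality (\cite[Theorem~4.10]{Wainwright_2019}): since $\gF$ is a fixed class and $\tilde x_1,\dots,\tilde x_n$ are \iid{} from $\px$, with probability at least $1-\delta$ (over the downstream sample, hence jointly with the pretraining randomness on which $\gF$ does not depend), every $G\in\gF$ obeys
\[
\abs{\tfrac1n\sum_{i=1}^n G(\tilde x_i) - \E_{\px}[G(X)]} \le 2\rad_n(\gF) + \bt^2\sqrt{\tfrac{2}{n}\log\tfrac{2}{\delta}} \le \tfrac{\bt^2}{\sqrt n}\paren{2 + \sqrt{2\log\tfrac2\delta}}.
\]
Applying this to $G = f_0^2$—which is legitimate precisely because the inequality is uniform over $\gF$, so the data-dependence of $f_0$ is harmless—then multiplying through by $\beta^2$ and using $\beta^2\E_{\px}[f_0^2] = \norm{f^* - \fPhi}_{\px}^2$ together with $\beta^2 = \norm{f^* - \fPhi}_{\hk}^2$ yields the claimed bound.

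The only genuinely delicate point is the one highlighted above: arranging the argument so that the data-dependent error $f^* - \fPhi$ enters only as a single element of the fixed class $\gF$, so that \Cref{lem:unit-rad} applies; beyond that, the proof is bookkeeping, and one should just keep the constants consistent with those in the companion bounds. If one later wishes to combine this corollary with \Cref{thm:approx-error} (to make $\norm{f^*-\fPhi}_{\px}^2$ and $\norm{f^*-\fPhi}_{\hk}^2$ explicit) and with the $\xi_d$ estimate coming from \Cref{lem:unit-rad}, a union bound over those events changes $\delta$ only by a constant factor.
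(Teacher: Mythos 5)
Your proof is correct and follows essentially the same route as the paper's: both reduce $\frac1n\norm{\vy-\vy_\Phi}_2^2$ to an empirical average of the normalized square $(f^*-\fPhi)^2/\norm{f^*-\fPhi}_{\hk}^2 \in \gF$ and then apply the uniform Rademacher deviation bound over the fixed class $\gF$ established in the proof of \Cref{lem:unit-rad}, reattaching the factor $\norm{f^*-\fPhi}_{\hk}^2$ at the end. Your explicit remark that the argument must be routed through the fixed class $\gF$ to neutralize the data-dependence of $\fPhi$ on the pretraining sample is exactly the point the paper leaves implicit here (and makes explicit only later, in the STKR discussion), so it is a welcome clarification rather than a deviation.
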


Next, note that $G(x) = \frac{(\hat{f}(x) - \tilde{f}(x))^2}{\norm{\hat{f} - \tilde{f}}_{\hk}^2} \in \gF$ defined in \Cref{lem:unit-rad}, and
\begin{equation*}
    \sum_{i=1}^n \paren{\hat{f}(\tilde{x}_i) - \tilde{f}(\tilde{x}_i)}^2 =  \norm{\mPhi^{\top} \paren{ \mPhi \mPhi^{\top} + n \beta_n \mI_d }^{-1} \mPhi (\vy - \vy_{\Phi})}_2^2 \le \norm{\vy - \vy_\Phi}_2^2 
\end{equation*}
because $\norm{\mPhi^{\top} \paren{ \mPhi \mPhi^{\top} + n \beta_n \mI_d }^{-1} \mPhi}_2 \le 1$.
Thus, using \Cref{cor:est-1}, we get
\begin{align*}
    \norm{\hat{f} - \tilde{f}}_{\px}^2 & \le \frac{1}{n} \sum_{i=1}^n \paren{\hat{f}(\tilde{x}_i) - \tilde{f}(\tilde{x}_i)}^2 + \norm{\hat{f} - \tilde{f}}_{\hk}^2 \frac{\bt^2}{\sqrt{n}} \paren{2 + \sqrt{2 \log \frac{2}{\delta}}} \\ 
    & \le \frac{\norm{\vy - \vy_\Phi}_2^2 }{n} \brac{1 + \frac{2 \bt^2}{\lambda_d \sqrt{n}} \paren{2 + \sqrt{2 \log \frac{2}{\delta}}} } \\ 
    & \le \frac{3}{2} \paren{\norm{f^* - \fPhi}_{\px}^2 + \frac{\lambda_d}{4}\norm{f^* - \fPhi}_{\hk}^2}  ,
\end{align*}
where the last step uses $n \ge \frac{4 \bt^4}{\lambda_d^2} \paren{ 4 + 2 \sqrt{2 \log \frac{2}{\delta}} }^2$.
Combining the above inequality with \Cref{thm:fischer-1}, we obtain a bound for $\norm{\hat{f} - \fPhi}_{\px}^2$.

\paragraph{Dependency on the context complexity.}
Our prediction error bound decreases with $\bt$, meaning that the weaker the association of the context, the better the generalization will be.
Let us revisit the experimental results in \Cref{fig:aug-comp-nlp} (right).
On QNLI, the train-test gap decreases with the mask ratio $\alpha$, which matches our bound.
However, on SST-2, the train-test gap first decreases and then increases.
The reason is that our bound assumes that $f^* \in \fhkp$, but as the association becomes weaker, $\fhkp$ becomes smaller and smaller.
When the association becomes too weak, $f^*$ might no longer belong to $\fhkp$, and our bound will no longer hold.
In this case, further weakening the association might make the train-test gap larger.

\paragraph{Pair/transformation access.}
The above analysis assumed that we have access to the centered dual kernel of the context.
What if we have pair or transformation access instead? 
Recall that the dual kernel is $\kx(x,x') = \int \frac{P^+(a|x) P^+(a|x')}{\pa(a)} da$.
If we have transformation access, then we can assume that we have full access to $P^+(a|x)$ for any given $x$.
The main problem appears in the denominator, because we can only estimate $\pa(a)$ by $\frac{1}{m}\sum_{i=1}^m P^+(a|x_i)$.
Although their difference can be shown to be small if $P^+(a|x)$ is assumed to be smooth in $x$, if $\pa(a)$ itself is very small, then a tiny estimation error of $\pa(a)$ will have a huge impact on $\kx$ because it is in the denominator.
For this reason, we also need to assume that $\pa(a)$ is bounded away from zero for all $a$, in order to get a reasonable generalization bound.
Such a bound was proved in \cite{zhai2023understanding}.

The case with pair access to the context is even more challenging, because in this case we have to estimate $P^+(a|x)$, so we need to consider the sample complexity of $a$ as well.
Deriving a bound for this case is posed as an open problem.

\section{Spectrally Transformed Kernel Regression}
\label{sec:stkr}
The previous section showed that fitting the linear probe with ridge regression is equivalent to kernel ridge regression on a subspace of the induced RKHS, spanned by $\phi_1,\cdots,\phi_d$.
In fact, this is a special case of spectrally transformed kernel regression (STKR) \cite{zhai2024spectrally}.
\begin{definition}
    Let $k: \gX \times \gX \rightarrow \R$ be a \psd{} kernel such that $k(x,x') = \sum_{i=1}^{\infty} \lambda_i \mu_i(x) \mu_i(x')$, where $\lambda_1 \ge \lambda_2 \ge \cdots$ and $\mu_1,\mu_2,\cdots$ are the eigenvalues and eigenfunctions of $T_k$.
    Then, a \textbf{spectrally transformed kernel (STK)} of $k$ is defined as $k_s(x,x') = \sum_{i=1}^{\infty}s(\lambda_i) \mu_i(x) \mu_i(x')$, for some transformation function $s: [0,+\infty) \rightarrow [0,+\infty)$ such that $s(0) = 0$.
\end{definition}

An STK has the same eigenfunctions as the original kernel.
In general one would like $s$ to be a monotonically non-decreasing function, so that the spectral transformation does not change the order of the eigenfunctions,

\begin{example}
    If $\phi_1,\cdots,\phi_d$ are obtained by kernel PCA described in the previous section, and the downstream linear probe is fit using ridge regression, then this process is equivalent to doing kernel ridge regression on an STK $k_s$, where $s(\lambda_i) = \lambda_i \sI[i \le d]$ is called the truncation function.
\end{example}

\begin{figure}[t]
    \centering
    \begin{tikzpicture}

\node[circle, draw, fill=black!40, thick, align=center] (a) at (0,0) {$x_1$};
\node[circle, draw, thick, align=center] (b) at (0.6,1) {$x_4$};
\node[circle, draw, fill=black!40, thick, align=center] (c) at (1.2,2) {$x_2$};
\node[circle, draw, thick, align=center] (d) at (1.2,0) {$x_5$};
\node[circle, draw, thick, align=center] (e) at (1.8,1) {$x_6$};
\node[circle, draw, fill=black!40, thick, align=center] (f) at (2.4,0) {$x_3$};

\draw[thick] (a) -- (b);
\draw[thick] (c) -- (b);
\draw[thick] (d) -- (b);
\draw[thick] (c) -- (e);
\draw[thick] (d) -- (e);
\draw[thick] (e) -- (f);

\node[circle, draw, fill=black!40, thick, align=center] at (5,0) {$x_1$};
\node[circle, draw, fill=black!40, thick, align=center] at (6.2,2) {$x_2$};
\node[circle, draw, fill=black!40, thick, align=center] at (7.4,0) {$x_3$};

\node[circle, draw, fill=black!40, thick, align=center] (x1) at (10,0) {$x_1$};
\node[circle, draw, fill=black!40, thick, align=center] (x2) at (11.2,2) {$x_2$};
\node[circle, draw, fill=black!40, thick, align=center] (x3) at (12.4,0) {$x_3$};
\node[circle, draw, dashed, thick, align=center] (x4) at (10.6,1) {$x_4$};
\node[circle, draw, dashed, thick, align=center] (x6) at (11.8,1) {$x_6$};
\draw[thick] (x1) -- (x2);
\draw[thick] (x2) -- (x3);

\draw [<->,ultra thick] (x4) to [out=180,in=120] (x1);
\draw [<->,ultra thick] (x2) to [out=180,in=120] (x4);

\draw [<->,ultra thick] (x6) to [out=0,in=60] (x3);
\draw [<->,ultra thick] (x2) to [out=0,in=60] (x6);

\node[] at (1.2,-0.8) {(a)};
\node[] at (6.2,-0.8) {(b)};
\node[] at (11.2, -0.8) {(c)};

\end{tikzpicture}
    \caption{(a) A graph example where the kernel is the adjacency matrix. Shaded nodes are labeled and white nodes are unlabeled. (b) In KRR, the unlabeled nodes are useless and can be removed, so the graph becomes three isolated nodes. (c) With a two-step random walk $k^2$, $x_1$ and $x_2$ are connected, and $x_2$ and $x_3$ are connected.}
    \label{fig:stkr-motivation}
\end{figure}
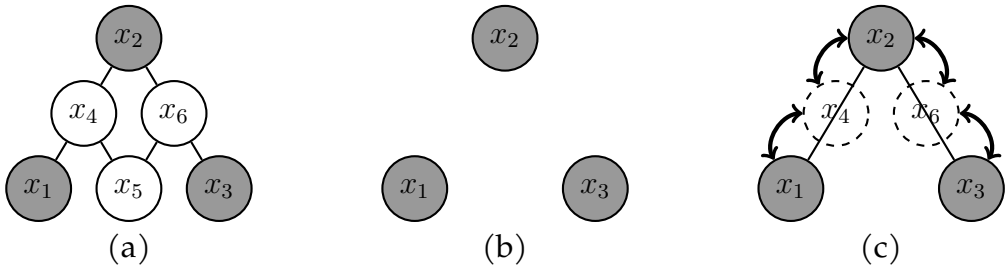

STKR is extremely useful in a semi-supervised learning setting where there are much more unlabeled samples than labeled samples.
One might ask what is the point of using an STK or extracting the top-$d$ eigenspace, instead of directly performing kernel ridge regression with $k$.
The answer is that in many cases, regression with $k$ would fail, but with $k_s$ it would not.
This can be demonstrated with a concrete example.
Consider the graph in \Cref{fig:stkr-motivation} (a), where the shaded nodes are labeled but the white nodes are not, and the context is that connected nodes are similar. Then, $k$ can be the adjacency matrix of this graph.
Let $\hk$ be the RKHS of $k$.
Recall that given a labeled dataset $\oset{(\tilde{x}_i, y_i)}_{i=1}^n$,
kernel ridge regression (KRR) with $k$ is given by
\begin{equation*}
    \hat{f} \in \argmin_{f \in \hk} \; \oset{\frac{1}{n} \sum_{i=1}^n \paren{f(\tilde{x}_i) - y_i}^2 + \beta_n \norm{f}_{\hk}^2 }  ,
\end{equation*}
for which there is a classical \textbf{Representer Theorem} \cite[Theorem~4.2]{scholkopf2002learning}:
\begin{theorem}
\label{thm:representer-theorem}
    All minimizers of KRR admit the form $\hat{f}^*(x) = \sum_{j=1}^n \alpha_i^* k(x,\tilde{x}_j)$, where
    \begin{equation*}
        \valpha^* \in \arginf_{\valpha \in \R^n} \; \oset{ \frac{1}{n} \sum_{i=1}^n \brac{ \sum_{j=1}^n \alpha_j k(\tilde{x}_i,\tilde{x}_j) - y_i}^2 + \beta_n \sum_{i,j=1}^n \alpha_i \alpha_j k(\tilde{x}_i,\tilde{x}_j) }  .
    \end{equation*}
\end{theorem}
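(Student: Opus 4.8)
The plan is to prove this by the standard orthogonal-decomposition argument in the RKHS $\hk$. Fix the finite-dimensional subspace $\gV = \sspan\oset{k(\cdot,\tilde{x}_1),\cdots,k(\cdot,\tilde{x}_n)} \subseteq \hk$, and write an arbitrary $f \in \hk$ as $f = f_{\parallel} + f_{\perp}$, where $f_{\parallel}$ is the orthogonal projection of $f$ onto $\gV$ and $f_{\perp} \in \gV^{\perp}$. The key observation is that the empirical loss term $\frac{1}{n}\sum_{i=1}^n (f(\tilde{x}_i) - y_i)^2$ depends only on $f_{\parallel}$: by the reproducing property of $\hk$, $f(\tilde{x}_i) = \dotp{f, k(\cdot,\tilde{x}_i)}_{\hk}$, and since $k(\cdot,\tilde{x}_i) \in \gV$ while $f_{\perp}\perp\gV$, this equals $\dotp{f_{\parallel}, k(\cdot,\tilde{x}_i)}_{\hk} = f_{\parallel}(\tilde{x}_i)$ for every $i \in [n]$.

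Next I would handle the regularizer using Pythagoras: $\norm{f}_{\hk}^2 = \norm{f_{\parallel}}_{\hk}^2 + \norm{f_{\perp}}_{\hk}^2 \ge \norm{f_{\parallel}}_{\hk}^2$, with equality if and only if $f_{\perp} = 0$. Combining the two facts, the KRR objective at $f$ is always at least its value at $f_{\parallel}$, and when $\beta_n > 0$ this inequality is strict unless $f_{\perp} = 0$; hence every minimizer lies in $\gV$, and therefore admits the form $\hat{f}^*(x) = \sum_{j=1}^n \alpha_j^* k(x,\tilde{x}_j)$. Existence of a minimizer is not an obstacle: substituting this parametric form collapses the objective to the continuous quadratic in $\valpha\in\R^n$ displayed in the theorem statement, which is bounded below (the Gram matrix $\mG$ with $\mG[i,j] = k(\tilde{x}_i,\tilde{x}_j)$ is \psd{}) and coercive once restricted to $(\ker\mG)^{\perp}$ (adding a $\ker\mG$ component changes neither $\mG\valpha$ nor $\valpha^{\top}\mG\valpha$), so a minimizer $\valpha^*$ exists.

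The final step is bookkeeping: for $f(x) = \sum_j \alpha_j k(x,\tilde{x}_j)$ one has $f(\tilde{x}_i) = \sum_j \alpha_j k(\tilde{x}_i,\tilde{x}_j)$ and, again by the reproducing property, $\norm{f}_{\hk}^2 = \dotp{\sum_i \alpha_i k(\cdot,\tilde{x}_i), \sum_j \alpha_j k(\cdot,\tilde{x}_j)}_{\hk} = \sum_{i,j=1}^n \alpha_i\alpha_j k(\tilde{x}_i,\tilde{x}_j)$, which identifies the problem over $\hk$ with the finite quadratic program over $\valpha$ stated in the theorem, the minimizers corresponding bijectively modulo $\ker\mG$. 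I do not expect any genuinely hard step here — the argument is entirely elementary once the reproducing property is invoked. The only subtlety worth flagging is that if $\beta_n = 0$ the representer form is only guaranteed to be attained by \emph{some} minimizer rather than all of them; but the regularized case $\beta_n > 0$, which is what STKR uses, yields the full statement as written.
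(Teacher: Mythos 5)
Your proof is correct, and it is the standard orthogonal-projection argument for the Representer Theorem. The paper does not actually supply a proof of Theorem~\ref{thm:representer-theorem} at all; it cites the result directly as ``classical'' from Sch\"olkopf and Smola's book, so there is no paper-internal argument to compare against. The argument you give --- decompose $f = f_{\parallel} + f_{\perp}$ with $f_{\parallel}$ in $\sspan\oset{k(\cdot,\tilde{x}_1),\ldots,k(\cdot,\tilde{x}_n)}$, use the reproducing property to show the data-fit term sees only $f_{\parallel}$, apply Pythagoras to the regularizer, and conclude $f_{\perp}=0$ at any minimizer --- is exactly the textbook proof from that reference, and the finite-dimensional bookkeeping correctly recovers the displayed quadratic program. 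Your observation that the ``\emph{all} minimizers'' phrasing requires $\beta_n>0$ is also correct and worth noting, since for $\beta_n=0$ the strictness in the Pythagorean step is lost and one only gets existence of a representer-form minimizer, not uniqueness of the form. The one genuinely small point worth tightening in your existence argument: you do not actually need the coercivity-on-$(\ker\mG)^{\perp}$ detour, since a convex quadratic with \psd{} Hessian that is bounded below on $\R^n$ always attains its infimum (its gradient system is consistent); but your route also works.
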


What this theorem implies is that KRR only uses the labeled samples, and the large number of unlabeled samples are not used at all.
This means that in \Cref{fig:stkr-motivation} (a), the white nodes are not used by KRR.
When all the white nodes are removed, the graph becomes (b)---a graph with three isolated nodes, which is a useless context.

So what goes wrong here?
The key is that the graph only says that $x_1$ and $x_4$ are similar, and $x_4$ and $x_2$ are similar. However, it does not say that $x_1$ and $x_2$ are similar, that is it cannot imply the transitivity of similarity on its own.
Therefore, in the eyes of this graph, $x_1,x_2,x_3$ are three completely independent nodes, which is why the graph is useless in KRR.
However, STKR assumes that similarity is transitive, and this allows us to start at any point $x$ and do a random walk up to some number of steps, and the node we land on is still similar to $x$.
For example, if we allow random walks up to two steps, then the graph becomes \Cref{fig:stkr-motivation} (c), which no longer consists of three isolated nodes.

To formalize this idea of random walk, define the following power spaces.
\begin{definition}
    A \textbf{power space} is an RKHS associated with $k^p$ for any $p \ge 1$, where
    \begin{align*}
        k^p(x,x') & = \sum_{i=1}^{\infty} \lambda_i^p \mu_i(x) \mu_i(x') ; \\ 
        \hkp & = \sset{f = \sum u_i \mu_i}{\sum \frac{u_i^2}{\lambda_i^p} < \infty}, \; \dotp{\sum u_i \mu_i, \sum v_i \mu_i}_{\hkp} = \sum \frac{u_i v_i}{\lambda_i^p}  .
    \end{align*}
\end{definition}

The proof of the following proposition is left as an exercise.
\begin{proposition}
    For any $p \ge 1$, there is $k^{p+1}(x,x') = \int k^p(x,z) k(z,x') d \px (z)$.
\end{proposition}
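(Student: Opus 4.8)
The plan is to verify the identity directly from the spectral decompositions, exploiting that the eigenfunctions $\oset{\mu_i}$ are orthonormal in $\lxp$. First I would substitute $k^p(x,z) = \sum_{i} \lambda_i^p \mu_i(x) \mu_i(z)$ and $k(z,x') = \sum_{j} \lambda_j \mu_j(z) \mu_j(x')$ into $\int k^p(x,z)\, k(z,x') \, d\px(z)$ and, after justifying the exchange of the sums with the integral, use $\int \mu_i(z) \mu_j(z) \, d\px(z) = \dotp{\mu_i, \mu_j}_{\px} = \sI[i=j]$ to collapse the double sum to $\sum_i \lambda_i^{p} \lambda_i \, \mu_i(x) \mu_i(x') = \sum_i \lambda_i^{p+1} \mu_i(x) \mu_i(x')$, which is $k^{p+1}(x,x')$ by \Cref{def:hk}-style definition of the power kernel. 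The algebra here is immediate once the interchange is licensed.

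An equivalent and slightly cleaner route is operator-theoretic: the right-hand side is precisely the kernel of the composed integral operator $T_{k^p} \circ T_k$ on $\lxp$, where $T_{k^p} f = \int k^p(\cdot,z) f(z)\, d\px(z)$. Since $T_{k^p}$ and $T_k$ are simultaneously diagonalized by $\oset{\mu_i}$ with eigenvalues $\lambda_i^p$ and $\lambda_i$ respectively, their composition is diagonalized by $\oset{\mu_i}$ with eigenvalues $\lambda_i^{p+1}$, so it coincides with $T_{k^{p+1}}$; reading off the kernel then gives the claim. One also recovers, as a byproduct, the associativity $k^{p+q}(x,x') = \int k^p(x,z)\,k^q(z,x')\,d\px(z)$ for $p,q \ge 1$, which is convenient for STKR.

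The only genuine obstacle is licensing the interchange of the infinite sum with the integral (equivalently, the claim that the kernel of a composition of two such integral operators is the convolution integral of their kernels). I would handle this via the standing Hilbert--Schmidt assumption on $T_k$, which gives $\sum_i \lambda_i^2 < \infty$, together with the pointwise bound $\sum_i \lambda_i \mu_i(x)^2 = k(x,x) < \infty$ for $\px$-almost all $x$ (\cf{} \Cref{def:context-complexity} in the case where $k$ is the centered dual kernel). For $p \ge 1$ one has $\lambda_i^p \le \lambda_1^{p-1}\lambda_i$, so the tails of $\sum_i \lambda_i^p \mu_i(x)\mu_i(z)$ are controlled by those of $k$ itself; combined with Cauchy--Schwarz in $\ell^2$ and dominated convergence (or Tonelli applied to the nonnegative terms), this makes the exchange rigorous, and no new convergence issue appears at higher powers. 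Hence the bulk of a full write-up is routine bookkeeping rather than substance, which is why the statement is safely left as an exercise.
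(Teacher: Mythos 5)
Your argument is correct, and it is exactly the proof the paper intends (the paper explicitly leaves this as an exercise, so there is no competing official proof). Either route works: plugging in the Mercer expansions and using orthonormality of $\oset{\mu_i}$ in $\lxp$, or observing $T_{k^p} T_k = T_{k^{p+1}}$ since both operators are diagonalized by $\oset{\mu_i}$. A slightly cleaner way to handle the interchange than dominated convergence is to note that, under the standing assumption $\bt^2 = \norm{k}_\infty < \infty$, one has $k^p(x,\cdot) \in \lxp$ and $k(\cdot,x') \in \lxp$ for $\px$-a.a.\ $x,x'$ (since $\sum_i \lambda_i^{2p}\mu_i(x)^2 \le \lambda_1^{2p-1} k(x,x)$), so the integral is just the $\lxp$ inner product $\dotp{k^p(x,\cdot), k(\cdot,x')}_\px$, and Parseval gives $\sum_i \lambda_i^{p+1}\mu_i(x)\mu_i(x')$ directly with no limiting argument needed.
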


This formula shows that when $p$ is an integer, $k^p$ can be viewed as a $p$-step random walk.
When $p$ is a real number, $k^p$ is essentially a continuous random walk, which is called a \textbf{diffusion process}.
This definition requires $p \ge 1$, because when $p < 1$, $\hkp$ is not necessarily an RKHS, due to the following classical result \cite[p.~36]{scholkopf2002learning}.
\begin{proposition}
\label{prop:rkhs-operator}
Let $\gH$ be a Hilbert space of real-valued functions on $\gX$. For any $x \in \gX$, define an evaluation functional $L_x: \gH \rightarrow \R$ as $(L_x f) = f(x)$.
Then, $\gH$ is an RKHS if and only if for all $x \in \gX$, $L_x$ is a continuous operator, that is there exists a constant $M_x > 0$ such that $|f(x)| \le M_x \norm{f}_{\gH}$ for all $f \in \gH$. (Note: $\sup_x M_x = \infty$ is allowed.)
\end{proposition}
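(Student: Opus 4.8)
The statement to be proved is the classical Moore--Aronszajn-type characterization: a Hilbert space $\gH$ of real-valued functions on $\gX$ is an RKHS if and only if every evaluation functional $L_x$ is continuous. The plan is to prove the two directions separately, using the Riesz representation theorem as the central tool.

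First I would prove the ``only if'' direction, which is the easy half. Suppose $\gH$ is an RKHS with reproducing kernel $k$. Then for every $x \in \gX$ the function $k(x, \cdot)$ lies in $\gH$, and the reproducing property gives $(L_x f) = f(x) = \dotp{f, k(x,\cdot)}_{\gH}$ for all $f \in \gH$. By Cauchy--Schwarz, $|f(x)| \le \norm{f}_{\gH} \norm{k(x,\cdot)}_{\gH}$, so $L_x$ is continuous with $M_x = \norm{k(x,\cdot)}_{\gH} = \sqrt{k(x,x)}$. This also records that $\sup_x M_x$ may be infinite, consistent with the parenthetical in the statement.

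Next I would prove the ``if'' direction. Assume each $L_x: \gH \to \R$ is a continuous linear functional. By the Riesz representation theorem, for each $x$ there exists a unique element $k_x \in \gH$ such that $(L_x f) = \dotp{f, k_x}_{\gH}$ for all $f \in \gH$; in particular $\abs{f(x)} = \abs{\dotp{f, k_x}_\gH} \le \norm{k_x}_\gH \norm{f}_\gH$, so the bound $M_x = \norm{k_x}_\gH$ works. Define $k(x,x') := k_{x'}(x) = \dotp{k_{x'}, k_x}_{\gH}$. Then $k(\cdot, x') = k_{x'} \in \gH$ for every $x'$, and $\dotp{f, k(\cdot,x')}_{\gH} = \dotp{f, k_{x'}}_{\gH} = f(x')$, which is exactly the reproducing property; hence $\gH$ is an RKHS with reproducing kernel $k$. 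One should also note in passing that $k$ so defined is symmetric and positive semi-definite, since $\sum_{i,j} c_i c_j k(x_i,x_j) = \norm{\sum_i c_i k_{x_i}}_{\gH}^2 \ge 0$, though this is not strictly needed for the stated equivalence.

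I do not anticipate a genuine obstacle here, since this is a textbook result; the main thing to be careful about is bookkeeping. The one subtlety worth flagging is the direction of the argument: continuity of $L_x$ is what licenses the Riesz representation, and it is essential that $\gH$ be complete (a Hilbert space) for Riesz to apply --- if one only had an inner-product space this would fail. So the plan is simply to apply Riesz cleanly in the nontrivial direction and Cauchy--Schwarz in the trivial direction, and to remark that the element $k_x$ furnished by Riesz is precisely the ``representer of evaluation at $x$'' whose inner products assemble into the reproducing kernel.
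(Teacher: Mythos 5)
Your proof is correct and is the standard textbook argument; for the record, the paper itself supplies no proof of this proposition, merely citing it to Sch\"olkopf and Smola's book, so there is no in-paper argument to compare against. Both directions are handled appropriately: Cauchy--Schwarz for the forward implication (with $M_x = \sqrt{k(x,x)}$), and the Riesz representation theorem for the converse, with the representers $k_x$ assembled into the kernel $k(x,x') := \dotp{k_{x'}, k_x}_{\gH}$. You correctly flag that completeness of $\gH$ is what licenses the appeal to Riesz, which is the one place where a gap could otherwise creep in. The only microscopic nit is that the statement insists $M_x > 0$, and in the degenerate case $k(x,x) = 0$ your candidate constant $\sqrt{k(x,x)}$ vanishes; but then $f(x) = 0$ for every $f \in \gH$, so any positive constant trivially serves. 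This is cosmetic and does not affect the correctness of the argument.
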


In the last section, it has been shown that $M_x = \bt$ when $p = 1$.
Obviously, for any $p > 1$ and $f \in \hkp$, we have $\norm{f}_{\hkp}^2 \ge \lambda_1^{1-p} \norm{f}_{\hk}^2$. Thus, $\hkp$ is still an RKHS when $p > 1$.
However, when $p < 1$, it is easy to construct an example where $\hkp$ is not an RKHS.

\begin{figure}[t]
    \centering
    \begin{tikzpicture}

\draw[rounded corners, semithick] (0, 0) rectangle (8, 2.4) {};
\draw[rounded corners, semithick] (1.5, 0.2) rectangle (7.8, 2.2) {};
\draw[rounded corners, semithick] (3, 0.4) rectangle (7.6, 2) {};
\draw[rounded corners, semithick] (6, 0.8) rectangle (7.2, 1.6) {};

\draw[rounded corners, ultra thick] (4.5, 0.6) rectangle (7.4, 1.8) {};

\node[anchor=west] at (0.05,1.2) {$\hk$};
\node[anchor=west] at (1.55,1.2) {$\gH_{k^{1.5}}$};
\node[anchor=west] at (3.05,1.2) {$\gH_{k^2}$};
\node[anchor=west] at (6.05,1.2) {$\gH_{k^3}$};

\node[anchor=west] at (4.55,1.2) {{\boldmath $\hks$}};

\end{tikzpicture}
    \caption{Illustration of the multiscale smoothness induced by diffusion, producing the chain $\hk \supseteq \gH_{k^{1.5}} \supseteq \gH_{k^2} \supseteq \gH_{k^3} \supseteq \cdots$. The RKHS of the spectrally transformed kernel $\ks$ is $\hks$ marked in bold, which is in this chain but not necessarily equal to any $\hkp$.}
    \label{fig:multiscale-smoothness}
\end{figure}
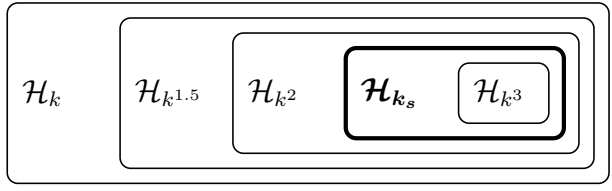

$k^p$ is an example of an STK, and $\oset{k^p}_{p \ge 1}$ forms a chain of function classes: $\lxp \supset \gH_{k^1} \supset \gH_{k^{1.5}} \supset \gH_{k^2} \supset \cdots$, as illustrated in \Cref{fig:multiscale-smoothness}.
We say that any function $f \in \hks$ is \textbf{smooth} \wrt{} the kernel $\ks$, and the diffusion process induces \textbf{multiscale smoothness}.
The kernel metric of $k^p$ is $d_{k^p}(x,x') = \norm{k^p(x,\cdot) - k^p(x',\cdot)}_{\hkp} = \sum \lambda_i^p (\mu_i(x) - \mu_i(x'))^2$, which is equivalent to the diffusion distance defined in \cite{coifman2006diffusion}.

The key result of this section is the generality of STK. Suppose the target function is smooth \wrt{} a certain measure of smoothness.
Then, under some mild conditions, the class of all smooth functions under this measure must be the RKHS of some STK.
This is a quite insightful result because it essentially says that if we know how to deal with $\hks$, then we can deal with almost any kind of downstream task.

Specifically speaking, each $k^p$ defines a measure of smoothness.
We are interested in a certain measure of smoothness called the \textbf{target smoothness},
which satisfies the following condition:
For any two functions $f_1$ and $f_2$, if all $k^p$ say that $f_1$ is smoother than $f_2$, then the target smoothness must also say that $f_1$ is smoother than $f_2$.
This is called the condition of \textbf{preserving relative smoothness}.
This is a weak condition, because usually $f_1$ is smoother at some scale $p$ while $f_2$ is smoother at some other scale $q$.
However, if $f_1$ is smoother at all scales, then it must also be smoother under the target smoothness.
In addition, there are two more assumptions: (i) All smooth functions under the target smoothness form a Hilbert space $\gH_t$;
(ii) $\gH_t \subseteq \hk$.

To prove the result, we need to formally define the term ``smoothness''.
\begin{definition}
For any Hilbert space $\gH \subset \lxp$, the \textbf{smoothness} of $f \in \gH$ \wrt{} $\gH$ is defined as $r_{\gH}(f) = \frac{\norm{\tilde{f}}_\px^2}{\norm{\tilde{f}}_\hks^2}$.
\end{definition}

Smoothness is an alias of compatibility, because $r_{\gH}(f) \ge (1 - \epsilon)^2$ is equivalent to $f$ being $(1-\epsilon)$-compatible with $\gH$ as per \Cref{def:compatible-hk}.
The key result is as follows.

\begin{theorem}[Proof in \Cref{app:proof-thm-stkr-generality}]
\label{thm:stkr-generality}
Suppose $\gH_t \subseteq \hk$ preserves relative smoothness: for any $f_1, f_2 \in \lxp$, if $r_{\hkp}(f_1) \ge r_{\hkp}(f_2)$ for all $p \ge 1$, then $r_{\gH_t}(f_1) \ge r_{\gH_t}(f_2)$.
Then, $\gH_t$ is an RKHS, whose reproducing kernel is an STK for a transformation function $s$ such that:
\begin{enumerate}[label=(\roman*)]
    \item $s$ is monotonically non-decreasing;
    \item $s(\lambda) \le M \lambda$ for some constant $M > 0$;
    \item $s$ is continuous on $[0, +\infty)$;
    \item $s$ is $C^{\infty}$ on $(0, +\infty)$.
\end{enumerate}
\end{theorem}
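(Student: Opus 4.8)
The plan is to extract the transformation function $s$ from the abstract smoothness-preservation hypothesis, show it is well-defined on the set of eigenvalues $\{\lambda_i\}$, and then verify that it extends to a function on $[0,+\infty)$ with properties (i)--(iv). The starting point is the observation that $\gH_t \subseteq \hk$ is isometrically embedded in $\lxp$ via the eigenbasis $\mu_1,\mu_2,\cdots$, so any $f \in \gH_t$ can be written $f = \sum_i u_i \mu_i$ and its $\gH_t$-norm is determined by some sequence of weights: we want to show $\norm{\tilde f}_{\gH_t}^2 = \sum_{i\ge 1} u_i^2 / s(\lambda_i)$ for a single function $s$ depending only on the eigenvalue, not on the index. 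First I would test simple functions: for indices $i,j$ with $\lambda_i = \lambda_j$, apply the relative-smoothness hypothesis to $f_1 = \mu_i$ and $f_2 = \mu_j$ (centered versions, since $\mu_0\equiv 1$ is excluded). Since $r_{\hkp}(\mu_i) = \lambda_i^p / \lambda_i^p \cdot (\text{const})$... more carefully, $r_{\hkp}(\mu_i) = \norm{\mu_i}_\px^2 / \norm{\mu_i}_{\hkp}^2 = 1 / \lambda_i^{-p} = \lambda_i^p$, so $r_{\hkp}(\mu_i) = \lambda_i^p$ for all $p$; hence $r_{\hkp}(\mu_i) \ge r_{\hkp}(\mu_j)$ for all $p\ge 1$ iff $\lambda_i \ge \lambda_j$. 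By the hypothesis, $\lambda_i = \lambda_j$ forces $r_{\gH_t}(\mu_i) = r_{\gH_t}(\mu_j)$, i.e.\ the $\gH_t$-weight of $\mu_i$ depends only on $\lambda_i$. This defines $s$ on $\{\lambda_i : s_i > 0\}$ via $s(\lambda_i) := r_{\gH_t}(\mu_i)$, and monotonicity (property (i)) follows from the same hypothesis applied to pairs with $\lambda_i > \lambda_j$.

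Next I would establish that $\norm{\cdot}_{\gH_t}^2$ is genuinely diagonal in the $\mu_i$ basis with these weights --- i.e.\ that no cross terms appear. This should follow by testing $f_1 = a\mu_i + b\mu_j$ against $f_2 = a\mu_i - b\mu_j$: both have identical $\hkp$-smoothness for every $p$ (the smoothness of a two-term combination depends only on $|a|^2\lambda_i^{\cdot}$, $|b|^2\lambda_j^{\cdot}$ quantities symmetric under $b\mapsto -b$), so relative smoothness is an equality in both directions, forcing $r_{\gH_t}(f_1) = r_{\gH_t}(f_2)$ and hence vanishing of the $\mu_i\mu_j$ cross term in the $\gH_t$ inner product. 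Iterating, $\dotp{\sum u_i\mu_i, \sum v_i\mu_i}_{\gH_t} = \sum_i u_i v_i / s(\lambda_i)$, which exhibits $\gH_t$ as the RKHS of the kernel $\sum_i s(\lambda_i)\mu_i(x)\mu_i(x')$. Property (ii), $s(\lambda)\le M\lambda$, comes from the assumption $\gH_t \subseteq \hk$: containment of Hilbert spaces with the embedding continuous (which holds here since both sit inside $\lxp$) means $\norm{f}_{\hk} \le C\norm{f}_{\gH_t}$ for some $C$, i.e.\ $\lambda_i^{-1} \le C / s(\lambda_i)$, giving $s(\lambda_i) \le C\lambda_i$; take $M = C$. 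Also $s(0) = 0$ is needed for it to be a valid STK per the definition --- this is essentially forced since the $\mu_i$ with $\lambda_i = 0$ are not in $\hk \supseteq \gH_t$.

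The remaining work is extending $s$ from the discrete set $\{\lambda_i\}$ to a function on all of $[0,+\infty)$ satisfying (iii) continuity and (iv) $C^\infty$ on $(0,+\infty)$. Here the natural move is to first note that $s$ is already monotone non-decreasing and bounded by $M\lambda$ on $\{\lambda_i\}$, hence extends to a monotone function on the closure; at accumulation points of $\{\lambda_i\}$ (in particular at $0$, since $\lambda_i\to 0$) one uses monotonicity plus the bound $s(\lambda_i)\le M\lambda_i\to 0$ to get continuity at $0$. On the gaps between consecutive distinct eigenvalues, and on $(\lambda_1,\infty)$, the function is not yet constrained, so one is free to interpolate. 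A clean way to obtain a $C^\infty$ monotone interpolant is to convolve a piecewise-linear monotone extension with a smooth mollifier, or to invoke a standard lemma on $C^\infty$ monotone extension of a monotone function given on a closed set; the $M\lambda$ growth bound is preserved by choosing the interpolant to stay under the line $M\lambda$. I expect \textbf{this last extension step to be the main obstacle}, not because it is deep but because one must be careful that the smooth interpolant simultaneously (a) agrees with the prescribed values $s(\lambda_i)$, (b) stays monotone, (c) respects $s(\lambda)\le M\lambda$ including near $0$ where the eigenvalues accumulate, and (d) is globally continuous while $C^\infty$ only off the origin --- reconciling the accumulation of constraint points near $0$ with the smoothness requirement away from $0$ requires the mollification scale to shrink appropriately as $\lambda\to 0$. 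The other subtle point to watch is whether $\gH_t$ might be strictly smaller than the full eigenspan (some $s(\lambda_i)$ could be $0$), in which case the reproducing kernel is still well-defined as an STK with $s$ vanishing there, and the monotonicity hypothesis ensures $s$ vanishes on an initial segment $[0,\lambda^*]$ of eigenvalues, consistent with a continuous monotone $s$.
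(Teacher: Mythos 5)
Your proposal is correct in its main ideas and takes a genuinely different---in places cleaner---route than the paper's, but a couple of analytic points in your opening step need tightening. The paper first establishes that $\gH_t$ is an RKHS by noting that $\mu_1$ maximizes $r_{\hkp}$ for every $p \ge 1$, hence (by the hypothesis) maximizes $r_{\gH_t}$; this gives $\norm{f}_\px \le \sqrt{r_{\gH_t}(\mu_1)}\,\norm{f}_{\gH_t}$, i.e.\ continuity of the inclusion $\gH_t \hookrightarrow \lxp$, and only then does the closed graph theorem apply to yield $\norm{f}_{\hk} \le C\norm{f}_{\gH_t}$ and, via \Cref{prop:rkhs-operator}, RKHS-ness. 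Your parenthetical ``(which holds here since both sit inside $\lxp$)'' is not sufficient on its own: mere set inclusion in $\lxp$ does not make the embedding $\gH_t \hookrightarrow \lxp$ continuous, and without that the closed-graph argument does not start. You can repair this inside your own framework---once you know $s$ is monotone and $s(\lambda_1) = r_{\gH_t}(\mu_1) < \infty$, the diagonal form of the norm yields $\norm{f}_\px \le \sqrt{s(\lambda_1)}\norm{f}_{\gH_t}$---but the justification needs to be made explicit, and ``isometrically embedded'' should read ``continuously embedded.''

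Beyond that, your route genuinely diverges from the paper's at two points, and both of your alternatives are sound. Where the paper identifies $\mu_1, \mu_2, \dots$ inductively as the eigenfunctions of $T_{\ks}$ by restricting to successive orthogonal complements, you deduce diagonality of the $\gH_t$-inner product in one stroke: $a\mu_i + b\mu_j$ and $a\mu_i - b\mu_j$ have identical $r_{\hkp}$ for every $p$, so the hypothesis (applied in both directions) forces equal $r_{\gH_t}$, and the cross term cancels. This is a neat shortcut. Similarly, you obtain $s(\lambda_i) \le M\lambda_i$ directly from the embedding constant (with $M = C^2$, not $C$, since $\norm{\mu_i}_{\hk}^2 = \lambda_i^{-1}$ and $\norm{\mu_i}_{\gH_t}^2 = s(\lambda_i)^{-1}$), whereas the paper re-derives it via a separate contradiction argument with a diverging series $\sum_i \sqrt{i^{-1}\lambda_{t_i}}\,\mu_{t_i}$. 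For the final extension step, the paper's concrete device is exactly the kind you sketch: insert dummy interpolation points so that WLOG $\lambda_i < 2\lambda_{i+1}$, then sum $C^\infty$ monotone bump functions supported in the gaps $(\lambda_{i+1},\lambda_i)$; local finiteness of the sum gives $C^\infty$ away from $0$, the WLOG reduction keeps $s(\lambda) = O(\lambda)$ between eigenvalues, and continuity at $0$ follows from monotone decrease plus $s(\lambda_i) \le M\lambda_i \to 0$. You correctly flag the accumulation at $0$ as the point requiring care.
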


STKR is the generalization of contexture learning from representation learning to semi-supervised learning.
In semi-supervised learning, STKR is usually much more efficient than extracting the top-$d$ eigenspace.
The following is a popular example.

\begin{example}[Inverse Laplacian]
     For $\eta \in (0, \lambda_1^{-1})$, define $\ks$ such that $\ks^{-1}(x,x') =  k^{-1}(x,x') - \eta k^0(x,x')$. $k^{-1}$ and $k^0$ are STKs with $s(\lambda) = \lambda^{-1}$ and $s(\lambda) = \lambda^0$.
    Then, the reciprocal of $s$ is given by $s^{-1}(\lambda) = \lambda^{-1} - \eta > 0$ for $\lambda \in (0, \lambda_1]$, 
    which means that $s(\lambda) = \frac{\lambda}{1 - \eta \lambda} = \sum_{p=1}^{\infty} \eta^{p-1} \lambda^p$,
and $\| f \|_\hks^2 = \| f \|_\hk^2 - \eta \| f \|_\px^2$.
\end{example}

The inverse Laplacian used to be very popular for semi-supervised learning, because it can be implemented very efficiently via a method called label propagation \cite{Zhu2002LearningFL,zhou2003learning}.

\section{Implementation and Generalization Analysis of STKR}

Now we develop the algorithms for STKR for a wide variety of STKs, including the inverse Laplacian.
After that, we derive generalization bounds for the algorithms.
In particular, we consider polynomial STKs with $s(\lambda) = \sum_{p=1}^{\infty} \pi_p \lambda^p$, where $\pi_p \ge 0$ for all $p$.

Since $\ks = \sum_{p=1}^{\infty} \pi_p k^p$, it suffices to show how to estimate $k^p$.
For example, when $p = 2$, we have $k^2(x,x') = \int k(x,z) k(x',z) d \px(z) \approx \frac{1}{m+n} \sum_{i=1}^{m+n} k(x,x_i) k(x',x_i) $, that is we estimate $k^2$ by Monte Carlo.
Here, $x_{m + i} = \tilde{x}_i$.
Recall that KRR does not use the $m$ unlabeled samples at all, but here we can make use of the unlabeled samples when estimating $k^p$.
Similarly, we can estimate $k^p$ for all positive integer $p$ as follows:
\begin{enumerate}
    \item Compute Gram matrix $\gk \in \R^{(m+n) \times (m+n)}$, where $\gk[i,j] = k(x_i, x_j)$.
    \item Let $\hat{k}^1 = k$. Define $\vv_k(x) \in \R^{m+n}$ as $\vv_k(x)[i] = k(x,x_i)$.
    \item Compute $\hatkp(x,x') = \frac{\vv_k(x)^{\top} \gk^{p-2} \vv_k(x') }{(m+n)^{p-1}}$ iteratively for $p=2,3,\cdots$.
\end{enumerate}

Then, an estimate of $\ks$ is given by $\hatks = \sum_{p=1}^{\infty} \pi_p \hatkp$.
One difference is that previously we only use the $m$ unlabeled samples to estimate the top-$d$ eigenfunctions, but here we use all $(m+n)$ samples to estimate $\hatks$.
This difference is small in practice provided that $m \gg n$.
Later we will see why we can use all $(m+n)$ samples in this situation.

We use $\gkn \in \R^{n \times n}$ to denote the Gram matrix on the $n$ labeled samples,
and define $\vkn(x) \in \R^n$ as $\vkn(x)[i] = k(x,\tilde{x}_i)$.
Similarly, define $\gksn, \vksn, \ghatksn$ and $\vhatksn$.
Let $\tilde{f}$ and $\hat{f}$ be the predictor obtained from KRR with $\ks$ and $\hatks$, respectively. The following closed-form formulas can be derived from the Representer Theorem:
\begin{equation}
\label{eqn:stkr-formula}
\left \{ 
\begin{aligned}
    & \tilde{f}(x) = \vksn(x)^{\top} \tilde{\valpha}, \quad \tilde{\valpha} = \paren{\gksn + n \beta_n \mI_n}^{-1} \vy ; \\ 
    & \hat{f}(x) = \vhatksn(x)^{\top} \hat{\valpha}, \quad \hat{\valpha} = \paren{\ghatksn + n \beta_n \mI_n}^{-1} \vy  . 
\end{aligned}
\right . 
\end{equation}

Here, $\vy = [y_1,\cdots,y_n]$.
To obtain $\hat{f}$, it suffices to solve $\mA \hat{\valpha} = \vy$ for $\mA = \ghatksn + n \beta_n \mI_n$.
Let us consider two scenarios: (i) $s$ is simple: For some $q$, $\pi_p = 0$ for all $p > q$;
(ii) $s^{-1}$ is simple: $s^{-1}(\lambda) = \sum_{p=0}^{q-1} \xi_p \lambda^{p-r}$, such as the inverse Laplacian.
Here ``simple'' means that the polynomial contains only a few terms.

For scenario (i), directly computing $\mA$ is slow because it involves lots of matrix-matrix multiplications.
A faster alternative is iterative methods, such as Richardson iteration \cite{richardson1911ix}, which solves a linear system $\mA \vx = \vb$ by iteratively computing $\vx^{(t+1)} = \vx^{(t)} + \gamma(\vb - \mA \vx^{(t)})$ for some $\gamma > 0$.
Richardson iteration is guaranteed to converge to the solution when $\gamma$ is chosen correctly.
While computing $\mA$ is slow, computing $\mA \vx^{(t)}$ is very efficient because it only involves matrix-vector multiplication.
This method is called STKR propagation (STKR-Prop), because it is a generalization of label propagation (Label-Prop) for the inverse Laplacian.
Define $\mF \in \R^{(m+n) \times n}$ as $\mF[i,j] = k(x_i, \tilde{x}_j)$.
The algorithm is listed in \Cref{alg:richardson}.

\begin{figure}[t]
\begin{minipage}[t]{0.5\textwidth}
\begin{algorithm}[H]
\caption{STKR-Prop for simple $s^{\phantom{0}}$ }
\label{alg:richardson}
\begin{algorithmic}[1]
\Require $\gk$, $\gkn$, $\mF$, $s$, $\beta_n$, $\vy$, $\gamma$, $\epsilon$
\State Initialize: $\hat{\valpha} \gets \vzero \in \R^n$
\While{True}
\Statex  \textit{\# Compute $\vu = (\ghatksn + n\beta_n \mI_n)\hat{\valpha}$}
\State $\tilde{\valpha} \gets \frac{1}{m+n} \mF \hat{\valpha}$, $\vv \gets \vzero \in \R^{m+n}$ 
\State \textbf{for} $p = q, \cdots, 2$ \textbf{do} $\vv \gets \frac{\gk \vv}{m+n} + \pi_p \tilde{\valpha}$
\State $\vu \gets \mF^{\top} \vv + \pi_1 \gkn \hat{\valpha} + n \beta_n \hat{\valpha}$ 
\State \textbf{if} $\| \vu - \vy\| _2 < \epsilon \| \vy \|_2 $ \textbf{then return} $\hat{\valpha}$
\State $\hat{\valpha} \gets \hat{\valpha} - \gamma (\vu - \vy)$
\vspace{.085 in}
\EndWhile
\end{algorithmic}
\end{algorithm}
\end{minipage}
\hfill
\begin{minipage}[t]{0.51\textwidth}
\begin{algorithm}[H]
\caption{STKR-Prop for simple $s^{-1}$}
\label{alg:richardson-inverse}
\begin{algorithmic}[1]
\Require $\gk$, $s^{-1}(\lambda)$, $\beta_n$, $\vy$, $\gamma$, $\epsilon$
\State Initialize: $\vtheta \gets \vzero \in \R^{m+n}$, $\tilde{\vy} \gets [\vy, \vzero_m]^{\top}$
\While{True}
\Statex  \textit{\# Compute $\vu = \mM\vtheta$}
\State $\vv \gets \vzero \in \R^{m+n}$  
\State \textbf{for} $p = q-1, \cdots, 0$ \textbf{do} $\vv \gets \frac{\gk \vv}{m+n} + \xi_p \vtheta$
\State $\vu \gets  \brac{ \paren{ \frac{\gk^r}{(m+n)^{r-1}} \vtheta} [1:n] , \vzero_m }^{\top} + n \beta_n \vv $ 
\State $\va \gets \vu - \tilde{\vy}$, $\vtheta \gets \vtheta - \gamma \va$  
\State \textbf{if} $\|\va\|_2 < \epsilon \| \vy \|_2$ \textbf{then return} $\vtheta$
\EndWhile
\end{algorithmic}
\end{algorithm}
\end{minipage}
\end{figure}

The next question is, given $\hat{\valpha}$, how to efficiently compute $\hat{f}(x)$ for a test input $x$?
We do not want to compute $\vhatksn(x)$, which involves another set of matrix-vector multiplications.
An efficient way is that we can store the $\vv$ computed in line 4 of \Cref{alg:richardson} in the memory.
Then, $\hat{f}(x) = \sum_{i=1}^{m+n} k(x_i, x) \vv[i] + \pi_1 \sum_{j=1}^n k(\tilde{x}_j, x) \hat{\valpha}[j]$ for any $x \in \gX$, which only needs $O(m+n)$ time to compute.

For scenario (ii) where $s$ could be complex but $s^{-1}(\lambda) = \sum_{p=0}^{q-1} \xi_p \lambda^{p-r}$ is simple, we can no longer estimate $\ghatksn \valpha$.
However, we can do the following transformation:
Let $\mQ = \sum_{p=0}^{q-1} \xi_p \paren{\frac{\gk}{m+n}}^p$.
Then, we have $\ghatks \mQ = (m+n) \paren{\frac{\gk}{m+n}}^r$, where $\ghatks \in \R^{(m+n) \times (m+n)}$ is the Gram matrix of $\hatks$ on all $(m+n)$ samples.
Therefore, we can efficiently compute $\ghatks \mQ \vx$ for any vector $\vx$, and this motivates us to find a $\vtheta \in \R^{m + n}$ such that $\mQ \vtheta = [\hat{\valpha}, \vzero_m]^{\top}$.
To solve for $\vtheta$, we need $(m+n)$ linear equations.
The last $m$ elements of $\mQ \vtheta$ are all zero, which provides us with $m$ linear equations.
Since $\mA \hat{\valpha} = \vy$, the first $n$ elements of $\mA \mQ \vtheta$ must be $\vy$, which provides us with another $n$ linear equations.
Overall, we can find $\vtheta$ by solving $\mM \theta = \tilde{\vy}$, where
\begin{equation*}
    \mM = (m+n) \tilde{\mI}_n \paren{\frac{\gk}{m+n}}^r + n \beta_n \mQ, \quad \tilde{\vy} = [\vy, \vzero_m]^{\top}  .
\end{equation*}
Here, $\tilde{\mI}_n = \diag \oset{1,\cdots,1,0,\cdots,0}$, with $n$ ones and $m$ zeros.
Once again, we can solve for $\vtheta$ using Richardson iteration.
The algorithm is listed in \Cref{alg:richardson-inverse}.
After running this algorithm, we can store $\paren{\frac{\gk}{m+n}}^{r-1}\vtheta$ in the memory.
Then, for any test input $x$, we can compute $\hat{f}(x)$ with $\hat{f}(x) = \vv_k(x)^{\top} \paren{\frac{\gk}{m+n}}^{r-1}\vtheta$ in $O(m+n)$ time.

Next, let us study the time complexity of these two algorithms.
Assume that computing $k(x,x')$ for any $x,x' \in \gX$ takes $O(1)$ time.
To start with, let us review a classical result about Richardson iteration.

\begin{lemma}
\label{lem:richardson}
Consider solving $\mA \vx = \vb$ with Richardson iteration, where $\mA \in \R^{n \times n}$ is positive definite, and $\vx, \vb \in \R^n$.
Let $\lambda_{\max}$ and $\lambda_{\min}$ be the largest and smallest eigenvalue of $\mA$, and let $\tau = \frac{\lambda_{\max}}{\lambda_{\min}}$ be the condition number of $\mA$.
Then, by choosing $\gamma = \frac{2}{\lambda_{\max} + \lambda_{\min}}$ and setting the stop criterion as $\norm{\vx^{(t+1)} - \vx^{(t)}}_2 < \epsilon \norm{\vb}_2$, the iteration stops in $O\paren{\tau \log \frac{1}{\epsilon}}$ steps.
\end{lemma}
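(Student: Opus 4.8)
The plan is to invoke the standard convergence analysis of the Richardson iteration applied to a positive definite linear system, and then translate the ``stop criterion'' on consecutive iterates into the claimed step count. First I would recall the error-iteration identity: writing $\vx^{\star}$ for the exact solution, the residual of iterate $\vx^{(t)}$ obeys $\vx^{(t+1)} - \vx^{\star} = (\mI_n - \gamma \mA)(\vx^{(t)} - \vx^{\star})$, so $\vx^{(t)} - \vx^{\star} = (\mI_n - \gamma \mA)^t (\vx^{(0)} - \vx^{\star})$. Since $\mA$ is symmetric positive definite with eigenvalues in $[\lambda_{\min}, \lambda_{\max}]$, the eigenvalues of $\mI_n - \gamma \mA$ lie in $[1 - \gamma \lambda_{\max}, 1 - \gamma \lambda_{\min}]$. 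With the stated choice $\gamma = \frac{2}{\lambda_{\max} + \lambda_{\min}}$, a short computation shows the spectral radius of $\mI_n - \gamma \mA$ equals $\frac{\lambda_{\max} - \lambda_{\min}}{\lambda_{\max} + \lambda_{\min}} = \frac{\tau - 1}{\tau + 1}$, which I would denote $\rho$. Hence $\norm{\vx^{(t)} - \vx^{\star}}_2 \le \rho^t \norm{\vx^{(0)} - \vx^{\star}}_2$.

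Next I would relate consecutive iterates to this geometric decay: $\vx^{(t+1)} - \vx^{(t)} = (\vx^{(t+1)} - \vx^{\star}) - (\vx^{(t)} - \vx^{\star})$, and also $\vx^{(t+1)} - \vx^{(t)} = \gamma(\vb - \mA \vx^{(t)}) = \gamma \mA(\vx^{\star} - \vx^{(t)})$, so $\norm{\vx^{(t+1)} - \vx^{(t)}}_2 \le \gamma \lambda_{\max} \norm{\vx^{(t)} - \vx^{\star}}_2 \le 2 \rho^t \norm{\vx^{(0)} - \vx^{\star}}_2$ using $\gamma \lambda_{\max} < 2$. To turn $\norm{\vx^{(0)} - \vx^{\star}}_2$ into something proportional to $\norm{\vb}_2$, note $\vx^{\star} = \mA^{-1}\vb$ so (with $\vx^{(0)} = \vzero$) $\norm{\vx^{(0)} - \vx^{\star}}_2 = \norm{\mA^{-1}\vb}_2 \le \lambda_{\min}^{-1}\norm{\vb}_2$. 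Therefore $\norm{\vx^{(t+1)} - \vx^{(t)}}_2 \le 2 \lambda_{\min}^{-1}\rho^t \norm{\vb}_2$, and this is below $\epsilon \norm{\vb}_2$ as soon as $\rho^t \le \frac{\epsilon \lambda_{\min}}{2}$, i.e. $t \ge \frac{\log(2/(\epsilon \lambda_{\min}))}{\log(1/\rho)}$.

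Finally I would bound $\log(1/\rho)$ from below. Since $\rho = \frac{\tau - 1}{\tau + 1} = 1 - \frac{2}{\tau + 1}$, we have $\log(1/\rho) = -\log(1 - \frac{2}{\tau+1}) \ge \frac{2}{\tau + 1} \ge \frac{1}{\tau}$ for $\tau \ge 1$. Plugging this in gives $t = O\!\left(\tau \log \frac{1}{\epsilon \lambda_{\min}}\right) = O\!\left(\tau \log \frac{1}{\epsilon}\right)$, where the last simplification absorbs the $\lambda_{\min}$-dependent additive constant into the $O(\cdot)$ (or, more carefully, one normalizes $\mA$ so that $\lambda_{\min}$ is bounded below by a constant, which is the convention implicit in the lemma statement). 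The only mildly delicate point — and the place I would be most careful — is the bookkeeping between $\norm{\vx^{(t+1)} - \vx^{(t)}}_2$ (which is what the algorithm actually tests) and the true error $\norm{\vx^{(t)} - \vx^{\star}}_2$: one must make sure the stopping test does not trigger prematurely or fail to trigger, but the two-sided bound $\frac{1}{2}\norm{\vx^{(t+1)} - \vx^{(t)}}_2 \le \gamma\lambda_{\max}\norm{\vx^{(t)}-\vx^\star}$ together with the reverse inequality $\norm{\vx^{(t)} - \vx^{\star}}_2 \le \frac{1}{1-\rho}\norm{\vx^{(t+1)} - \vx^{(t)}}_2$ (a standard consequence of the contraction) handles both directions cleanly. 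Everything else is routine.
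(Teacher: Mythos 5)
Your proof is correct and follows essentially the same route as the paper's: both track the error via the contraction identity $\vx^{(t)}-\vx^\star=(\mI_n-\gamma\mA)^t(\vx^{(0)}-\vx^\star)$, bound the spectral radius by $\frac{\tau-1}{\tau+1}$, translate the decay of $\norm{\vx^{(t)}-\vx^\star}_2$ into a bound on $\norm{\vx^{(t+1)}-\vx^{(t)}}_2$ with a factor of $2$, and extract the $O(\tau\log(1/\epsilon))$ step count. The only cosmetic differences are that you derive the factor-$2$ bound via $\vx^{(t+1)}-\vx^{(t)}=\gamma\mA(\vx^\star-\vx^{(t)})$ rather than the triangle inequality, and you are in fact slightly more explicit than the paper about converting $\norm{\vx^\star}_2$ to $\norm{\vb}_2$ (via $\norm{\vx^\star}_2\le\lambda_{\min}^{-1}\norm{\vb}_2$) before absorbing the $\lambda_{\min}$-dependent term into the big-$O$.
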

\begin{proof}
Let $\vx^*$ be the solution such that $\mA \vx^* = \vb$.
    By $\vx^{(t+1)} = \vx^{(t)} + \gamma (\vb - \mA \vx^{(t)})$, we have $\vx^{(t+1)} - \vx^* = (\mI_n - \gamma \mA)(\vx^{(t)} - \vx^*)$.
    If we start with $\vx^{(0)} = \vzero$, then this implies that $\vx^* - \vx^{(t)} = (\mI_n - \gamma \mA)^t \vx^*$.
    When $\gamma = \frac{2}{\lambda_{\max} + \lambda_{\min}}$, we have
    \begin{equation*}
        \norm{\vx^* - \vx^{(t)}}_2 \le \norm{\mI_n - \gamma \mA}_2^t \norm{\vx^*}_2 = \paren{ 1 - \frac{2 \lambda_{\min}}{\lambda_{\max} + \lambda_{\min}} }^t \norm{\vx^*}_2 \le \exp \paren{ - \frac{2 \lambda_{\min} t}{\lambda_{\max} + \lambda_{\min}} } \norm{\vx^*}_2  .
    \end{equation*}
This implies that $\norm{\vx^{(t+1)} - \vx^{(t)}}_2 \le 2 \exp \paren{ - \frac{2 \lambda_{\min} t}{\lambda_{\max} + \lambda_{\min}} } \norm{\vx^*}_2 \le 2 \exp (-\frac{t}{\tau}) \norm{\vx^*}_2$.
Thus, the iteration stops when $t = O\paren{\tau \log \frac{1}{\epsilon}}$.
Moreover, when the iteration stops, we have $\norm{\mA \vx^{(t)} - b}_2 = \gamma^{-1} \norm{\vx^{(t+1)} - \vx^{(t)}}_2 < \gamma^{-1} \epsilon \norm{\vb}_2$.
\end{proof}

Let $\lambda_1$ be the largest eigenvalue of $T_k$.
When $m$ and $n$ are sufficiently large, the largest eigenvalue of $\ghatksn$ is close to $n s(\lambda_1)$.
Meanwhile, the smallest eigenvalue of $\mA = \ghatksn + n \beta_n \mI_n$ is at least $n \beta_n$.
Therefore, $\tau$ can be upper bounded by $O(\beta_n^{-1} s(\lambda_1))$.
Moreover, each iteration in \Cref{alg:richardson} has a time complexity of $O((m+n)^2 q)$.
Thus, the overall time complexity of \Cref{alg:richardson} is $O((m+n)^2 q \beta_n^{-1} s(\lambda_1) \log \frac{1}{\epsilon})$.

For \Cref{alg:richardson-inverse}, the analysis is similar but much more complex. We have the following result regarding its time complexity.

\begin{theorem}[Proof in \Cref{app:proof-thm-rich-inverse-time}]
\label{thm:rich-inverse-time}
Let $\rho(\lambda) = \frac{\lambda^r}{s(\lambda)} = \sum_{p=0}^{q-1} \xi_p \lambda_p$, where $\rho(0) = \xi_0 > 0$.
Then, $\rho(\lambda)$ is a continuous function on $[0,+\infty)$.
Denote its maximum and minimum on $[0, \lambda_1]$ by $\rho_{\max}$ and $\rho_{\min}$.
Then, with $\gamma = (n \lambda_1^r)^{-1}$, \Cref{alg:richardson-inverse} has a total time complexity of $O\paren{ (m+n)^2 \frac{\max \oset{q,r} \lambda_1^r \beta_n^{-1}}{\rho_{\min}} \log \brac{ \max \oset{ \frac{1}{\epsilon}, \frac{\lambda_1^r \rho_{\max} \norm{\vy}_2 }{ n \beta_n^2 \rho_{\min}^2 \norm{\hat{\valpha}_*}_2 } } } }$, where $\hat{\valpha}_*$ is the exact solution.
\end{theorem}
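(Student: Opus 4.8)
The plan is to follow the template already used for \Cref{alg:richardson}: identify the linear system that \Cref{alg:richardson-inverse} solves by Richardson iteration, bound its effective condition number, invoke \Cref{lem:richardson}, and multiply the iteration count by the per-iteration cost. Here the system is $\mM\vtheta=\tilde{\vy}$ with $\mM=(m+n)\tilde{\mI}_n(\gk/(m+n))^r+n\beta_n\mQ$ and $\mQ=\sum_{p=0}^{q-1}\xi_p(\gk/(m+n))^p$. First I would record the elementary facts about $\rho$: being a polynomial of degree $q-1$ it is $C^\infty$, hence continuous, on $[0,+\infty)$, with $\rho(0)=\xi_0>0$; since $s(\lambda)=\lambda^r/\rho(\lambda)$ is a genuine spectral transformation it is positive on $(0,\lambda_1]$, so $\rho$ is strictly positive on the compact set $[0,\lambda_1]$, $\rho_{\min}$ and $\rho_{\max}$ are attained, and $0<\rho_{\min}\le\rho_{\max}<\infty$. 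I would then observe that $\mQ=\rho(\gk/(m+n))$ and $\ghatks=(m+n)s(\gk/(m+n))$ are both functions of the single symmetric matrix $\gk$, hence commute, that $\ghatks\mQ=(m+n)(\gk/(m+n))^r$, and therefore $\mM=\mathbf{A}\mQ$ with $\mathbf{A}=\tilde{\mI}_n\ghatks+n\beta_n\mI_{m+n}$; in particular the exact solution obeys $\mQ\vtheta_*=[\hat{\valpha}_*,\vzero_m]^{\top}$, which ties $\|\vtheta_*\|_2$ to $\|\hat{\valpha}_*\|_2$ via $\|\vtheta_*\|_2\le\|\hat{\valpha}_*\|_2/\rho_{\min}$.

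The core step is to show that $\mM$, although not symmetric, is similar to a symmetric positive definite matrix whose spectrum I can control. Conjugating by $\ghatks^{1/2}$ (which is invertible in the generic case, and whose possibly bad conditioning will only enter logarithmically) gives $\ghatks^{1/2}\mathbf{A}\ghatks^{-1/2}=\ghatks^{1/2}\tilde{\mI}_n\ghatks^{1/2}+n\beta_n\mI=:\mathbf{S}$, symmetric with $\lambda_{\min}(\mathbf{S})\ge n\beta_n$ and $\lambda_{\max}(\mathbf{S})=n\beta_n+\lambda_{\max}(\tilde{\mI}_n\ghatks)=n\beta_n+\lambda_{\max}(\ghatksn)$, since the nonzero eigenvalues of $\tilde{\mI}_n\ghatks$ are exactly those of the labeled Gram submatrix $\ghatksn$; as in the discussion preceding \Cref{alg:richardson}, for $m,n$ large $\lambda_{\max}(\ghatksn)$ is close to $n\,s(\lambda_1)\le n\lambda_1^r/\rho_{\min}$. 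Because $\ghatks^{1/2}$ and $\mQ^{1/2}$ commute, $\mM=\mathbf{A}\mQ$ is similar, through $\mathbf{T}:=\mQ^{1/2}\ghatks^{1/2}$, to $\widehat{\mM}:=\mQ^{1/2}\mathbf{S}\mQ^{1/2}\succ0$, whose spectrum equals that of $\mM$ and lies in $[\,n\beta_n\rho_{\min},\ \lambda_{\max}(\mQ)\lambda_{\max}(\mathbf{S})\,]$. The delicate point is that the crude bound $\lambda_{\max}(\widehat{\mM})\le\rho_{\max}\cdot ns(\lambda_1)$ is loose: the large eigendirections of both $\mathbf{S}$ and $\mQ$ originate from the top eigenvector of $\gk$ (eigenvalue $\approx\lambda_1$), on which $\mQ$ acts by $\rho(\lambda_1)$ while $\mathbf{S}$'s large eigenvalue is $\approx ns(\lambda_1)=n\lambda_1^r/\rho(\lambda_1)$, so the $\rho(\lambda_1)$ factors largely cancel and $\lambda_{\max}(\widehat{\mM})=O(n\lambda_1^r)$; the technical work is to carry this cancellation through the $\tilde{\mI}_n$ that sits between the two copies of $\ghatks^{1/2}$ and to make the attendant ``$m,n$ sufficiently large'' concentration statements precise. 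This gives a condition number $\tau(\widehat{\mM})=O\!\big(\lambda_1^r/(\beta_n\rho_{\min})\big)$, and one checks that $\gamma=(n\lambda_1^r)^{-1}$ satisfies $\gamma\,\lambda_{\max}(\widehat{\mM})<2$.

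Finally I would apply \Cref{lem:richardson} to $\widehat{\mM}$: started from $\vzero$, the iterates satisfy $\|\vtheta^{(t)}-\vtheta_*\|_2\le\kappa(\mathbf{T})\,e^{-t/\tau}\|\vtheta_*\|_2$, and translating the algorithm's stopping test $\|\mM\vtheta^{(t)}-\tilde{\vy}\|_2<\epsilon\|\vy\|_2$ into a target accuracy—using $\lambda_{\max}(\mM)=O(n\lambda_1^r/\rho_{\min})$, $\|\vtheta_*\|_2\le\|\hat{\valpha}_*\|_2/\rho_{\min}$, $\|\vy\|_2/\|\hat{\valpha}_*\|_2\le\lambda_{\max}(\ghatksn)+n\beta_n=O(n\lambda_1^r/\rho_{\min})$, and folding the conditioning factor $\kappa(\mathbf{T})$ in—collapses all the dimensionless parameters and norm ratios into the single logarithmic factor $\log\!\big[\max\{\tfrac1\epsilon,\ \tfrac{\lambda_1^r\rho_{\max}\|\vy\|_2}{n\beta_n^2\rho_{\min}^2\|\hat{\valpha}_*\|_2}\}\big]$, so the number of iterations is $O\!\big(\tfrac{\lambda_1^r}{\beta_n\rho_{\min}}\log[\cdots]\big)$. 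Each iteration costs $O((m+n)^2\max\{q,r\})$, since line~4 is a loop of $q$ matrix--vector products with $\gk$ and line~5 needs $r$ further ones to form $\gk^r\vtheta$; multiplying yields the stated total $O\!\big((m+n)^2\tfrac{\max\{q,r\}\lambda_1^r\beta_n^{-1}}{\rho_{\min}}\log[\cdots]\big)$. The main obstacle is the second paragraph: getting a spectrum bound for the non-symmetric $\mM$ that is tight enough to produce the clean $\rho_{\min}^{-1}$ dependence out of the near-cancellation in $\lambda_{\max}(\widehat{\mM})$, and making rigorous the large-sample claims under which restricted Gram matrices behave like $n$ times the corresponding integral operators; once those are in hand, the remainder is bookkeeping identical in spirit to the \Cref{alg:richardson} analysis.
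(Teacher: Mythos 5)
Your overall plan coincides with the paper's proof: identify the linear system $\mM\vtheta=\tilde{\vy}$ with $\mM=(m+n)\tilde{\mI}_n\paren{\gk/(m+n)}^{r}+n\beta_n\mQ$, symmetrize, bound the effective condition number by $O\paren{\lambda_1^{r}/(\beta_n\rho_{\min})}$, and multiply by the per-iteration cost $O((m+n)^{2}\max\{q,r\})$. Your symmetric surrogate is in fact identical to the paper's matrix $\mR$: because $\mQ=\rho(\gk/(m+n))$ and $\ghatks=(m+n)\,s(\gk/(m+n))$ are commuting functions of $\gk$ with $\rho(\lambda)s(\lambda)=\lambda^{r}$, you have the exact identity $\mQ^{1/2}\ghatks^{1/2}=\sqrt{m+n}\,\paren{\gk/(m+n)}^{r/2}$, and substituting this into $\widehat{\mM}=\mQ^{1/2}\mathbf{S}\mQ^{1/2}$ gives $(m+n)\paren{\gk/(m+n)}^{r/2}\tilde{\mI}_n\paren{\gk/(m+n)}^{r/2}+n\beta_n\mQ$, which is $\mR$. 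The ``delicate near-cancellation'' you flag as the main obstacle therefore does not exist: the cancellation is exact and elementary, and the bound $\lambda_{\max}(\mR)\le n\lambda^{r}+n\beta_n\rho_{\max}$ follows immediately by factoring $\mR-n\beta_n\mQ$ through $\mG_k^{1/2}\tilde{\mI}_n\mG_k^{1/2}$ (whose nonzero spectrum equals that of $\gkn$) together with the concentration $\hat\lambda_1\to\lambda_1$ from \Cref{lem:shawe-taylor}. No restricted-Gram-matrix asymptotics beyond that single largest-eigenvalue bound are needed.

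The genuine gap is in the convergence accounting. You propose to apply \Cref{lem:richardson} to $\widehat{\mM}$ and then pay a multiplicative $\kappa(\mathbf{T})$ for the change of basis $\mathbf{T}=\mQ^{1/2}\ghatks^{1/2}=\sqrt{m+n}\paren{\gk/(m+n)}^{r/2}$. But $\kappa(\mathbf{T})$ is controlled by the eigenvalue spread of $\gk$, which is not bounded by any of the quantities appearing in the theorem, and $\mathbf{T}$ is outright singular whenever $\gk$ is rank deficient, so that route can be vacuous. The paper's proof never inverts $\mathbf{T}$: it multiplies the Richardson error recursion on one side only, defining $\vu_t=(m+n)\tilde{\mI}_n\paren{\gk/(m+n)}^{r}(\vtheta_*-\vtheta_t)$, rewrites $\paren{\gk/(m+n)}^{r/2}(\vtheta_*-\vtheta_t)=\mR^{-1}(\mI-\gamma\mR)^{t}\paren{\gk/(m+n)}^{r/2}\tilde{\vy}$, and bounds $\|\vu_t\|_2$ directly from the spectrum of $\mR$. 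It then couples this with a separate one-step recursion for $\|\mQ(\vtheta_*-\vtheta_t)\|_2$ (the quantity that controls the error in $\hat{\valpha}$), using $\mQ(\vtheta_*-\vtheta_{t+1})=(\mI-\gamma n\beta_n\mQ)\mQ(\vtheta_*-\vtheta_t)-\gamma\mQ\vu_t$, and solves the coupled recursion. It is this bookkeeping — not a $\kappa(\mathbf{T})$-corrected application of \Cref{lem:richardson} — that produces the specific argument $\max\big\{\tfrac{1}{\epsilon},\ \tfrac{\lambda_1^{r}\rho_{\max}\|\vy\|_2}{n\beta_n^{2}\rho_{\min}^{2}\|\hat{\valpha}_*\|_2}\big\}$ inside the logarithm. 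Your sketch should replace the $\kappa(\mathbf{T})$ step with this one-sided, coupled-recursion argument.
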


We have seen how to implement STKR when $s$ is a polynomial.
Now let us derive the generalization bound for the general STKR.
Define the following function class:
\begin{equation*}
    \gF_{\epsilon}(\hks) = \sset{f \in \hks}{ \norm{f}_{\hks}^2 \le \epsilon \norm{f}_{\px}^2 }  .
\end{equation*}
This class is similar to $\fhkp$, and it contains functions with smoothness at least $\epsilon^{-1}$ \wrt{} $\hks$.
Our result consists of an approximation error bound and an estimation error bound.
Recall the definition of $\tilde{f}$ and $\hat{f}$ in \Cref{eqn:stkr-formula}.
The approximation error is the gap between $\tilde{f}$ and $f^*$, and the estimation error is the gap between $\hat{f}$ and $\tilde{f}$.
The approximation error can be bounded by the result in \cite{fischer2020sobolev}.

\begin{theorem}[Proof in \Cref{app:proof-thm-approx-error-stkr}]
\label{thm:approx-error-stkr}
    Let $\lambda_1 \ge \lambda_2 \ge \cdots$ be the eigenvalues of $T_k$.
    Let $M$ be given by \Cref{thm:stkr-generality}.
    Suppose the moment condition \Cref{eqn:moment-condition} holds, and the eigenvalues decay by order $p^{-1}$ for some $p \in (0,1]$, that is $s(\lambda_i) = O(i^{-\frac{1}{p}})$ for all $i$.
    Choose $\beta_n = \Theta(n^{-\frac{1}{1+p}})$.
    Then, there exists a constant $c_0 > 0$ independent of $n \ge 1$ and $\tau \ge \bt^{-1} M^{-\frac{1}{2}}$ (recall that $\bt^2 = \norm{k}_{\infty}$), such that
    \begin{equation*}
        \norm{\tilde{f} - f^*}_{\px}^2 \le c_0 \tau^2 \bt^2 M \brac{  \paren{\epsilon \norm{f^*}_{\px}^2 + \sigma^2 } n^{-\frac{1}{1+p}} + \max \oset{L^2, \bt^2 M \epsilon \norm{f^*}_{\px}^2 } n^{-\frac{1+2p}{1+p}}  }
    \end{equation*}
    holds for all $f^* \in \gF_{\epsilon}(\hks)$ with probability at least $1 - 4 e^{-\tau}$, given that $n$ is sufficiently large.
\end{theorem}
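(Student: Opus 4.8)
The plan is to recognize $\tilde f$ as nothing but a kernel ridge regression estimator and then reduce the statement to the general least–squares analysis of \cite{fischer2020sobolev}, exactly as was done for \Cref{thm:fischer-1}. By \Cref{thm:stkr-generality}, $\ks$ is a well-defined \psd{} kernel whose RKHS is $\hks$, and by \Cref{eqn:stkr-formula} the predictor $\tilde f(x)=\vksn(x)^{\top}\tilde{\valpha}$ with $\tilde{\valpha}=(\gksn+n\beta_n\mI_n)^{-1}\vy$ is precisely the kernel ridge regression solution on $\hks$ with regularization $\beta_n$ and training labels $\vy$. Hence $\norm{\tilde f-f^*}_{\px}$ is governed by the standard bias–variance bound for KRR, and the whole task is to instantiate \cite[Theorem~3.1]{fischer2020sobolev} in the $L^2$-error regime with the correct constants.

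To apply that theorem I would verify its structural hypotheses. The eigenvalue-decay condition holds by assumption, $s(\lambda_i)=O(i^{-1/p})$. For the embedding condition I would use part (ii) of \Cref{thm:stkr-generality}, $s(\lambda)\le M\lambda$, to bound $\ks(x,x)=\sum_i s(\lambda_i)\mu_i(x)^2\le M\sum_i \lambda_i\mu_i(x)^2 = M\,k(x,x)\le M\bt^2$ for $\px$-almost all $x$ (recall $\bt^2=\norm{k}_\infty$); since $\norm{\ks(x,\cdot)}_{\hks}^2=\ks(x,x)$, the reproducing property gives $\norm{f}_\infty\le \sqrt{M}\,\bt\,\norm{f}_{\hks}$, i.e.\ the trivial embedding with constant $c_2=\sqrt{M}\,\bt$. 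For the source condition, any $f^*\in\gF_{\epsilon}(\hks)$ satisfies $\norm{f^*}_{\hks}^2\le\epsilon\norm{f^*}_{\px}^2$, so $c_3^2=\epsilon\norm{f^*}_{\px}^2$. The moment condition \Cref{eqn:moment-condition} is assumed.

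With these constants, plugging the rate-optimal choice $\beta_n=\Theta(n^{-1/(1+p)})$ into the Fischer–Steinwart bound and simplifying (as in the derivation of \Cref{thm:fischer-1}) produces a leading variance-plus-bias term proportional to $c_2^2(\sigma^2+c_3^2)\,n^{-1/(1+p)} = M\bt^2(\sigma^2+\epsilon\norm{f^*}_{\px}^2)\,n^{-1/(1+p)}$ and a higher-order term proportional to $c_2^2\max\{L^2,4c_2^2c_3^2\}\,n^{-(1+2p)/(1+p)} = M\bt^2\max\{L^2,\bt^2M\epsilon\norm{f^*}_{\px}^2\}\,n^{-(1+2p)/(1+p)}$, up to absolute constants. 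Their probabilistic factor $A\eta^2$ with failure probability $4e^{\eta}$ becomes $c_0\tau^2$ with probability at least $1-4e^{-\tau}$ after substituting $\eta=-\tau$, and the requirement $\tau\ge\bt^{-1}M^{-1/2}=c_2^{-1}$ is exactly the validity regime of their theorem. Absorbing all absolute constants and the hidden constants in the $\Theta(\cdot)$ into a single $c_0$ that depends only on the decay exponent $p$ (and not on $n$ or $\tau$ once $\beta_n$ is fixed to the prescribed order) yields the claimed bound.

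The main obstacle is bookkeeping rather than ideas: one must match the precise form of \cite[Theorem~3.1]{fischer2020sobolev} — which is stated with several free parameters (the interpolation exponent $\gamma$, the source exponent $\beta$, and the embedding exponent) and a somewhat intricate dependence on $\beta_n$ — to the specific instantiation needed here ($\gamma=0$, basic source condition, trivial embedding), and in particular check that $c_0$ genuinely does not grow with $n$ or $\tau$ and that "$n$ sufficiently large" is precisely the hypothesis under which their bound is in force. A secondary point worth flagging in the write-up is that, unlike the truncated construction behind \Cref{thm:fischer-1}, here $\hks$ is generically infinite-dimensional, so the eigenvalue-decay hypothesis $s(\lambda_i)=O(i^{-1/p})$ is essential and cannot be replaced by a crude rank argument.
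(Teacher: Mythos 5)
Your proposal is correct and takes essentially the same route as the paper's proof: verify the eigenvalue-decay, embedding, and source conditions for the RKHS $\hks$ and then invoke \cite[Theorem~3.1]{fischer2020sobolev}, exactly as was done for \Cref{thm:fischer-1}. Your derivation of the embedding constant $c_2=\sqrt{M}\,\bt$ via $\ks(x,x)\le M\,k(x,x)\le M\bt^2$ and the reproducing property is a minor cosmetic variant of the paper's Cauchy--Schwarz argument through $\hk$, but the constant and the rest of the instantiation agree.
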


\begin{remark}
    \cite{fischer2020sobolev} showed that the learning rate $O(n^{-\frac{1}{1+p}})$ is minimax optimal, which means that one can construct an example where the learning rate is at most $O(n^{-\frac{1}{1+p}})$.
    When $\bt^2 = \norm{k}_{\infty} < \infty$, one can always choose $p = 1$ because $i \cdot s(\lambda_i) \le \sum_{j=1}^i s(\lambda_j) \le M \sum \lambda_j \le M \bt^2$.
\end{remark}

Regarding the estimation error, we prove the following result.
\begin{theorem}[Proof in \Cref{app:thm-estimat-error-stkr}]
\label{thm:estimat-error-stkr}
Let $\hat{\lambda}_1$ be the largest eigenvalue of $\frac{\gk}{m+n}$, and denote $\lambda_{\max} = \max \oset{ \lambda_1, \hat{\lambda}_1 }$.
Then, for any $\delta \in (0,1)$, with probability at least $1 - \delta$ we have
\begin{equation*}
    \norm{\hat{f} - \tilde{f}}_{\px}^2 \le 8s(\lambda_{\max}) \left . \nabla_{\lambda} \paren{\frac{s(\lambda)}{\lambda}} \right |_{\lambda = \lambda_{\max}} \frac{\beta_n^{-2} \bt^4}{\sqrt{m+n}} \paren{2 + \sqrt{2 \log \frac{1}{\delta}}} \frac{\norm{\vy}_2^2}{n}  .
\end{equation*}
\end{theorem}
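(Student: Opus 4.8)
I would treat $\hat f - \tilde f$ as the perturbation of a kernel-ridge solution caused by replacing the exact spectrally transformed kernel $\ks$ with its Monte-Carlo surrogate $\hatks = \sum_{p\ge 1}\pi_p\hatkp$. Starting from the closed forms in \Cref{eqn:stkr-formula}, write
\begin{equation*}
\hat f(x) - \tilde f(x) = \bigl(\vhatksn(x) - \vksn(x)\bigr)^{\top}\hat\valpha + \vksn(x)^{\top}(\hat\valpha - \tilde\valpha),
\end{equation*}
and apply the resolvent identity $\hat\valpha - \tilde\valpha = -(\ghatksn + n\beta_n\mI_n)^{-1}(\ghatksn - \gksn)\,\tilde\valpha$. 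Using the elementary bounds $\|(\mK + n\beta_n\mI_n)^{-1}\|_{\op}\le(n\beta_n)^{-1}$, $\|\mK(\mK + n\beta_n\mI_n)^{-1}\|_{\op}\le 1$, and $\|\tilde\valpha\|_2, \|\hat\valpha\|_2\le\|\vy\|_2/(n\beta_n)$, each term is controlled by a product of (i) $\beta_n^{-2}\|\vy\|_2^2/n$, (ii) a factor measuring the \emph{size} of $\ks$ — which becomes $s(\lambda_{\max})$ once the $\px$-norm of $x\mapsto\vksn(x)^{\top}\vu$ is turned into a quadratic form in the composed kernel $\int\ks(x,\tilde x_i)\ks(x,\tilde x_j)\,d\px(x)$ and one uses that $\lambda\mapsto s(\lambda)/\lambda$ is non-decreasing (so all relevant eigenvalues are at most $s(\lambda_{\max})$, with $\lambda_{\max}=\max\{\lambda_1,\hat\lambda_1\}$ covering both the population and empirical operators) — and (iii) the \emph{deviation} $\sup_{x,x'}|\hatks(x,x') - \ks(x,x')|$, which also governs $\sup_x\|\vhatksn(x)-\vksn(x)\|_2\le\sqrt n\,\sup_{x,x'}|\hatks-\ks|$. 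A sharper version of this step, using a self-bounding / ``basic-inequality'' argument for kernel ridge regression rather than crude triangle inequalities, is what keeps the final dependence on the deviation \emph{linear} (hence the $(m+n)^{-1/2}$, not $(m+n)^{-1}$), matching the statement.

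\textbf{Controlling the kernel deviation.} Because $\pi_p\ge 0$, it suffices to bound $\sup_{x,x'}|\hatkp(x,x') - k^p(x,x')|$ for each $p$ and sum with the weights $\pi_p$. Writing $\hatkp(x,x') = (m+n)^{-1}\vv_k(x)^{\top}\bigl(\gk/(m+n)\bigr)^{p-2}\vv_k(x')$ and letting $\hat T$ denote the empirical integral operator on the $m+n$ points and $T=T_k$, telescope $\hatkp - k^p$ into $p-1$ terms, each of the form (an $a$-step empirical iterate) $\circ\,(\hat T - T)\,\circ$ (a $(p-2-a)$-step exact iterate). The two outer iterates have $\hk$-norms bounded by powers of $\lambda_{\max}$ and $\bt$ (using $\hat\lambda_1,\lambda_1\le\lambda_{\max}$ and $k(x,x)\le\bt^2$), while the single $(\hat T - T)$ factor, evaluated between two unit-norm functions in $\hk$, is controlled uniformly by \Cref{lem:unit-rad} applied with $m+n$ samples: with probability $1-\delta$, $\bigl|\frac{1}{m+n}\sum_i f_1(x_i)f_2(x_i) - \E[f_1 f_2]\bigr|$ is at most $\bt^2(m+n)^{-1/2}\bigl(2+\sqrt{2\log\tfrac1\delta}\bigr)$, uniformly over $\|f_1\|_{\hk},\|f_2\|_{\hk}\le 1$. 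The index $a$ running over the $p-1$ terms forces a combined power $\lambda_{\max}^{p-2}$, so $\sup_{x,x'}|\hatks - \ks|$ is at most $\bigl(\sum_{p\ge 2}\pi_p(p-1)\lambda_{\max}^{p-2}\bigr)$ times a fixed power of $\bt$ times $(m+n)^{-1/2}\bigl(2+\sqrt{2\log\tfrac1\delta}\bigr)$, and the first bracket is exactly $\nabla_\lambda\bigl(s(\lambda)/\lambda\bigr)\big|_{\lambda=\lambda_{\max}}$. A single union over this one event simultaneously controls $\|\ghatksn - \gksn\|_{\op}\le n\sup_{x,x'}|\hatks - \ks|$ and the test-point vectors; convergence of $\sum_p\pi_p(\hatkp - k^p)$ on this event follows since $\lambda_{\max}$ lies strictly inside the radius of convergence of $s$. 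Substituting this deviation estimate into the three-factor bound of the first paragraph and collecting the powers of $\bt$ into $\bt^4$ gives the claimed inequality.

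\textbf{Main obstacle.} The delicate part is the bookkeeping in the telescoping expansion of $\hatkp - k^p$: one must check that the accumulated error is of order exactly $(p-1)\lambda_{\max}^{p-2}$ — so that the weighted sum collapses to $\nabla_\lambda(s(\lambda)/\lambda)\big|_{\lambda_{\max}}$ rather than to $s'(\lambda_{\max})$ or $\sum_p\pi_p\,p\,\lambda_{\max}^{p-1}$ — that every partial iterate $k^a(x,\cdot)$ has a small enough $\hk$-norm for \Cref{lem:unit-rad} to apply inside one uniform high-probability event, and that the size-of-$\ks$ factor contributes exactly $s(\lambda_{\max})$. Getting the sharp, \emph{linear} dependences in the statement (in $s(\lambda_{\max})$, in $\nabla_\lambda(s(\lambda)/\lambda)$, in $\beta_n^{-2}$, and in $(m+n)^{-1/2}$) is precisely where a naive perturbation argument would lose a factor and a more careful kernel-ridge stability argument is needed; the remaining steps are routine resolvent algebra and operator-norm estimates.
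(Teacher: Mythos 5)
Your plan takes essentially the same route as the paper: decompose through the intermediate function $f^\dag = \hat\valpha^\top\vksn$ (your first paragraph's two terms are exactly $\hat f - f^\dag$ and $f^\dag - \tilde f$), control the coefficient error via the resolvent identity, control the kernel error via a telescoping expansion of $\hatkp - k^p$ combined with a uniform empirical-process bound (the paper's analogue of \Cref{lem:unit-rad} applied to the $m+n$ pretrain$+$downstream samples), and collect the $\pi_p(p-1)\lambda_{\max}^{p-2}$ weights into $\nabla_\lambda(s(\lambda)/\lambda)\big|_{\lambda_{\max}}$.

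The one step you gesture at but do not actually supply is the mechanism that keeps the coefficient-error term at $\beta_n^{-2}(m+n)^{-1/2}$ rather than $\beta_n^{-4}(m+n)^{-1}$. Your plan says ``apply the resolvent identity'' and then bound norms, which, if done naively — $\|\tilde f - f^\dag\|_{\hks}^2 \le \|\mG_{\ks,n}\|_2\,\|\hat\valpha - \tilde\valpha\|_2^2$ with $\|\hat\valpha-\tilde\valpha\|_2 \lesssim (n\beta_n)^{-1}\|\ghatksn - \gksn\|_2\|\hat\valpha\|_2$ — gives the weaker $\beta_n^{-4}(m+n)^{-1}$. The paper avoids this with a specific algebraic identity coming from the normal equations: starting from $\|\tilde f - f^\dag\|_{\hks}^2 = (\hat\valpha-\tilde\valpha)^\top\gksn(\hat\valpha-\tilde\valpha)$, rewrite $\gksn(\hat\valpha-\tilde\valpha) = (\gksn - \ghatksn)\hat\valpha - n\beta_n(\hat\valpha-\tilde\valpha)$ and discard the resulting $-n\beta_n\|\hat\valpha-\tilde\valpha\|_2^2 \le 0$ term, leaving a \emph{single} factor of the Gram-matrix deviation multiplied by $\|\hat\valpha-\tilde\valpha\|_2\|\hat\valpha\|_2 \lesssim \beta_n^{-2}$. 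That cancellation is the content behind the ``basic-inequality'' you allude to; without spelling it out, your plan does not yet establish the stated rate. (The $\hat f - f^\dag$ term, by contrast, is harmless: your pointwise argument would actually give a \emph{tighter} $(m+n)^{-1}$ dependence there, with a different $\bt$/$s$ power; the paper instead bounds the entries of $\langle\hatksn-\ksn,\hatksn-\ksn\rangle_\px$ cross-term by cross-term to get uniform $\bt^4$ dependence.) One further small note: the paper's uniform deviation bound (\Cref{lem:kp-gap}) is stated only for pairs $(x, x_j)$ with $x_j$ a training sample and $x$ arbitrary; your $\sup_{x,x'}$ over both free arguments is more than is needed or supplied, though for the estimators at hand the training-point version suffices since $\hat f$ and $\tilde f$ live in the span of $\{k(\tilde x_i,\cdot)\}$.
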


This result requires $\hat{\lambda}_1$, for which there is the following result.
\begin{lemma}[\cite{shawe2005eigenspectrum}, Theorem~2]
\label{lem:shawe-taylor}
    For any $\delta \in (0,1)$, with probability at least $1-\delta$ we have
\begin{equation*}
    \hat{\lambda}_1 \le \lambda_1 + \frac{\bt^2}{\sqrt{m+n}} \brac{ 2 \sqrt{2} + \sqrt{19 \log \frac{2(m+n+1)}{\delta}} }  .
\end{equation*}
\end{lemma}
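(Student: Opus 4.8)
The plan is to recognize that this lemma is quoted verbatim as Theorem~2 of \cite{shawe2005eigenspectrum}, so what I would present is the structure of that argument, indicating where our own earlier machinery (in particular \Cref{lem:unit-rad}) already supplies most of the work and where the sharp constants come from the cited paper. The natural starting point is the standard duality between the Gram matrix and the empirical kernel covariance operator: writing $\phi$ for a feature map with $k(x,x') = \dotp{\phi(x), \phi(x')}_{\hk}$, the top eigenvalue $\hat\lambda_1$ of $\frac{1}{m+n}\gk$ equals the operator norm of $\hat C = \frac{1}{m+n}\sum_{i=1}^{m+n}\phi(x_i)\otimes\phi(x_i)$, which admits the variational form
\[
\hat\lambda_1 = \sup_{f \in \hk, \; \norm{f}_{\hk} \le 1} \; \frac{1}{m+n} \sum_{i=1}^{m+n} f(x_i)^2 , \qquad \text{whereas} \qquad \lambda_1 = \sup_{f \in \hk, \; \norm{f}_{\hk} \le 1} \norm{f}_{\px}^2 ,
\]
the latter because $\lambda_1$ is the operator norm of $T_k$ (here $\hk$ is the power space $\gH_{k^1}$ of \Cref{def:hk}). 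Subtracting gives $\hat\lambda_1 - \lambda_1 \le \sup_{\norm{f}_{\hk}\le 1}\paren{\frac{1}{m+n}\sum_i f(x_i)^2 - \norm{f}_{\px}^2}$, i.e. a one-sided uniform deviation of empirical from population means over the class $\gF = \sset{f^2}{\norm{f}_{\hk}\le 1}$.

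Next I would bound this uniform deviation by the Rademacher route already used in this chapter. The reproducing property gives $|f(x)^2| = \dotp{f, k(\cdot,x)}_{\hk}^2 \le \norm{f}_{\hk}^2\,k(x,x) \le \bt^2$ for $\px$-almost all $x$, so $\gF$ is bounded by $\bt^2$; and $\gF$ is exactly the class of \Cref{lem:unit-rad} specialized to $f_1 = f_2$, so $\rad_{m+n}(\gF) \le \frac{\bt^2}{\sqrt{m+n}}$. Feeding these into a standard uniform concentration inequality of the form of \cite[Theorem~4.10]{Wainwright_2019} immediately yields a bound of exactly the advertised shape, $\hat\lambda_1 \le \lambda_1 + \frac{\bt^2}{\sqrt{m+n}}\paren{c_1 + c_2\sqrt{\log(c_3/\delta)}}$, for some absolute constants $c_1,c_2,c_3$. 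This already proves the qualitative statement; the only remaining task is to pin the constants to the stated $2\sqrt 2$, $19$, and $2(m+n+1)$.

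For those precise constants I would defer to \cite{shawe2005eigenspectrum}: their analysis replaces the crude symmetrization-plus-Hoeffding step by a McDiarmid bounded-differences argument — replacing one sample perturbs every entry of $\frac{1}{m+n}\gk$ by $O(\bt^2/(m+n))$, hence perturbs $\hat\lambda_1$ by at most $2\bt^2/(m+n)$ since $\|\cdot\|_{\op}$ is $1$-Lipschitz — combined with a careful bound on $\E[\hat\lambda_1]$, and the factor $2(m+n+1)$ inside the logarithm reflects the union bound inherent in their argument, which actually controls the whole partial-eigenvalue spectrum $\sum_{i \le k}\hat\lambda_i$ for $k = 1,\ldots,m+n$ simultaneously and then specializes to $k = 1$. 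The main obstacle is therefore not the concentration mechanism itself — that is routine and subsumed by our earlier lemmas — but squeezing out the exact numerical constants, which genuinely requires the more delicate bookkeeping of the cited reference; a secondary technical point, also handled there, is measurability of the supremum over the infinite-dimensional ball of $\hk$, which is harmless because that supremum is attained in the span of $\oset{\phi(x_1),\ldots,\phi(x_{m+n})}$, reducing it to the eigenvalue problem for the finite matrix $\gk$.
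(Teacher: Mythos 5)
The paper offers no proof of this lemma; it is a verbatim citation of Theorem~2 of \cite{shawe2005eigenspectrum}, so there is no in-paper argument to compare against. Your sketch is the right one: the duality $\hat\lambda_1 = \sup_{\norm{f}_{\hk}\le 1}\frac{1}{m+n}\sum_i f(x_i)^2$ and $\lambda_1 = \sup_{\norm{f}_{\hk}\le 1}\norm{f}_\px^2$ is correct, the reduction to a one-sided uniform deviation over $\sset{f^2}{\norm{f}_{\hk}\le 1}$ is sound, the pointwise bound $f(x)^2\le \bt^2$ via reproducing-property/Cauchy--Schwarz is right, and that class is indeed contained in the $\gF$ of \Cref{lem:unit-rad} (take $f_1=f_2$), so the Rademacher bound $\bt^2/\sqrt{m+n}$ applies by monotonicity. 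Pairing that with \cite[Theorem~4.10]{Wainwright_2019} gives the qualitative $O(\bt^2/\sqrt{m+n})$ rate, and you are right that the specific constants $2\sqrt{2}$, $19$, and the factor $2(m+n+1)$ inside the logarithm require the finer McDiarmid-plus-union-bound bookkeeping of the cited reference (which controls the partial sums $\sum_{i\le k}\hat\lambda_i$ for all $k$ simultaneously); reproducing those exact numbers from the coarser symmetrization route is not expected here. Deferring to the source for the constants while verifying the mechanism is precisely the right posture for a cited lemma.
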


The key to prove \Cref{thm:estimat-error-stkr} is using the complexity of $\hks$ to prove a uniform deviation bound for $\abs{ \hatks(x,x_i) - \ks(x,x_i) }$ for all $x$ and $i$.
Thus, we can see why we only used the $m$ unlabeled samples to extract the top-$d$ eigenspace, but can use all $(m+n)$ samples in STKR.
The reason is that for uniform deviation bounds, the function class must be independent of the training samples.
For the top-$d$ eigenfunctions, the downstream function class is the span of $\phi_1,\cdots,\phi_d$, and the proof of \Cref{thm:fischer-1} was based on this function class.
This function class depends on the $m$ unlabeled samples.
If the $n$ labeled samples were also used, then the uniform deviation bound would not hold.
On the other hand, in \Cref{thm:estimat-error-stkr}, the function class if $\hks$, which is independent of all $(m+n)$ samples.
Hence, we can use all the $(m+n)$ samples in STKR.

\section{Empirical Study of Contexture Learning and STKR}
This section conducts some experiments on STKR with the following goals:
\begin{enumerate}[label=(\roman*)]
    \item Verify that STKR-Prop (\Cref{alg:richardson,alg:richardson-inverse}) works with general polynomial $s$, such as the inverse Laplacian, and compare them to label propagation (Label-Prop).
    \item Explore possible reasons why the inverse Laplacian works so well in practice, by examining the effect of $p$ on the performance when using STKR with $s(\lambda) = \lambda^p$.
    \item Compare extracting the top-$d$ eigenspace with STKR and Label-Prop.
\end{enumerate}

The experiments here focus on graph node classification tasks.
The datasets used are listed in \Cref{tab:stkr-datasets}, and they all come from the \textit{PyTorch Geometric} library \citep{Fey/Lenssen/2019}.
Each dataset is split into four sets: train, validation (val), test and other.
Among them, train and val contain labeled samples,
while test and other contain unlabeled samples.
Both the transductive and the inductive settings are tested.

\begin{table}[t]
\centering
\vskip -.1in
\begin{tabular}{@{}lccccc@{}}
\toprule
 & \multicolumn{1}{l}{Classes} & \multicolumn{1}{l}{Nodes} & \multicolumn{1}{l}{Edges} & \multicolumn{1}{l}{Train} & \multicolumn{1}{l}{Validation} \\ \midrule
Cora & 7 & 2,708 & 10,556 & 5.17 & 18.46 \\
CiteSeer & 6 & 3,327 & 9,104 & 3.61 & 15.03 \\
PubMed & 3 & 19,717 & 88,648 & 0.3 & 2.54 \\
Amazon - Computers & 10 & 13,752 & 491,722 & 1.45 & 1.45 \\
Amazon - Photos & 8 & 7,650 & 238,162 & 2.09 & 2.09 \\
Coauthor - CS & 15 & 18,333 & 163,788 & 1.64 & 1.64 \\
Coauthor - Physics & 5 & 34,493 & 495,924 & 0.29 & 0.29 \\
DBLP & 4 & 17,716 & 105,734 & 0.45 & 0.45 \\
CoraFull & 70 & 19,793 & 126,842 & 7.07 & 7.07 \\ \bottomrule
\end{tabular}
\caption{Number of classes, nodes, edges, and fractions (\%) of train/validation sets.}
\label{tab:stkr-datasets}
\end{table}

\begin{itemize}
    \item In the \textbf{transductive setting}, the samples in all four sets are available at train time. The learner hides the labels of val samples for validation. Thus, $n$ is the size of the train set, while $m$ is the size of all other three sets combined. 
    \item In the \textbf{inductive setting}, samples in the test set are invisible at train time. The learner hides the entire val set (samples and labels) for validation. Thus, $n$ is the size of the train set, while $m$ is the size of the other set.
\end{itemize}

Label propagation (Label-Prop) only works for the transductive setting, and is implemented as follows: 
Let $\mW$ be the adjacency matrix of the graph, such that $\mW[i,j] = 1$ if $x_i$ and $x_j$ are connected, and $0$ otherwise.
Let $\mD$ be a diagonal matrix such that $\mD[i,i] = \sum_j \mW[i,j]$.
Let $\mS = \mD^{-\frac{1}{2}} \mW \mD^{-\frac{1}{2}}$.
Following \cite{zhou2003learning}, Label-Prop solves
\begin{equation*}
    \paren{\mI_{m+n} - \eta \mS} \hat{\vy} = \tilde{\vy}, \quad \text{where } \vy = [\vy, \vzero_m].
\end{equation*}
Then, $\hat{\vy}$ contains the predicted labels for all $(m+n)$ samples.
On the other hand, STKR works for both transductive and inductive settings. Its base kernel $k$ is defined as
\begin{equation*}
    k(x,x') = (m+n) \frac{W(x,x')}{\sqrt{D(x) D(x')}},
\end{equation*}
where $W(x_i,x_j) = \mW[i,j]$.
For the transductive setting, $D(x_i) = \mD[i,i]$;
for the inductive setting, $D(x_i) = \sum_{j \notin \text{ test nodes}}W(x_i,x_j)$, that is the sum is taken over only visible nodes.

\begin{table}[t]
\centering
\resizebox{\columnwidth}{!}{%
\begin{tabular}{@{}llllllllll@{}}
\toprule
 & \textbf{CS} & \textbf{CiteSeer} & \textbf{Computers} & \textbf{Cora} & \textbf{CoraFull} & \textbf{DBLP} & \textbf{Photo} & \textbf{Physics} & \textbf{PubMed} \\ \midrule
LP (t) & $79.07_{2.19}$ & {\boldmath $52.73_{7.72}$} & $77.30_{3.05}$ & \underline{$73.33_{6.00}$} & {\boldmath $54.47_{3.24}$} & {\boldmath $66.44_{3.78}$} & $83.95_{5.78}$ & $84.33_{4.86}$ & {\boldmath $72.28_{5.55}$} \\
Lap (t) & $78.96_{2.53}$ & \underline{$52.12_{7.67}$} & $77.81_{3.94}$ & {\boldmath $77.04_{5.74}$} & \underline{$53.81_{2.34}$} & \underline{$65.42_{5.02}$} & $84.08_{6.52}$ & $84.22_{4.86}$ & $71.93_{4.86}$ \\
Poly (t) & {\boldmath $79.13_{2.29}$} & $48.79_{8.51}$ & $76.72_{4.12}$ & $71.48_{5.80}$ & $53.25_{3.54}$ & $64.52_{4.20}$ & $79.21_{7.20}$ & \underline{$84.45_{4.89}$} & \underline{$72.18_{4.66}$} \\
Topd (t)  & $78.80_{3.22}$ & $46.06_{1.08}$ & {\boldmath $80.80_{3.06}$} & $69.26_{7.82}$ & $50.36_{2.85}$ & $64.86_{4.60}$ & {$84.61_{6.30}$} & $83.20_{2.25}$ & $65.38_{5.66}$ \\
Lap (i) &$78.42_{2.81}$ & $46.06_{6.97}$ & $77.15_{2.64}$ & $67.78_{7.62}$ & $53.30_{3.24}$ & $65.20_{4.92}$ & \underline{$84.87_{5.66}$} & $83.11_{5.09}$ & $70.36_{4.80}$ \\
Poly (i) & ${79.02}_{2.42}$ & ${44.55}_{9.15}$ & ${71.97}_{4.13}$ & ${65.19}_{9.11}$ & ${51.98}_{3.88}$ & ${64.52}_{4.05}$ & ${78.42}_{7.80}$ & {\boldmath ${84.68}_{4.83}$} & ${70.76}_{4.28}$ \\
Topd (i) & \underline{$79.13_{3.35}$} & $41.52_{6.71}$ & \underline{$80.80_{3.28}$} & $63.70_{6.00}$ & $47.41_{3.39}$ & $63.16_{3.41}$ & {\boldmath $85.53_{5.68}$} & $82.44_{3.88}$ & $64.31_{4.95}$\\
KRR (i) & $13.11_{2.29}$ & $13.64_{5.93}$ & $26.35_{4.34}$ & $28.52_{8.56}$ & $19.80_{2.22}$ & $44.80_{3.86}$ & $33.95_{7.07}$ & $19.74_{1.46}$ & $20.76_{2.06}$ \\
\bottomrule
\end{tabular}
}
\caption{The test accuracy (\%) of Label-Prop (LP), STKR-Prop with inverse Laplacian (Lap), with polynomial $s(\lambda) = \lambda^8$ (Poly), with kernel PCA (Topd), and with $s(\lambda) = \lambda$ (KRR). (t) and (i) indicate transductive and inductive. Standard deviations are given across ten random seeds. Best/second-best results are in bold/underlined.
}
\label{tab:stkr-exp}
\end{table}

We use 1\% of the samples as test samples.
Each experiment is run with 10 random seeds for splitting the dataset.
The results are reported in \Cref{tab:stkr-exp}, from which we make the following observations:
\begin{itemize}
    \item STKR works pretty well with general polynomial $s(\lambda)$ in the inductive setting.
In the transductive setting, the performance of STKR-Prop with the inverse Laplacian is similar to Label-Prop, and STKR-Prop with polynomial $s$ is slightly worse. 
The performance under the inductive setting is slightly worse than that under the transductive setting,
which is reasonable since there is less information at train time for the inductive setting. 
It is also worth noting that the running time of STKR-Prop is similar to that of Label-Prop with the same number of iterations.
\item STKR with $s(\lambda) = \lambda^p$ for $p > 1$ is much better than KRR (where $p = 1$).
Moreover, we test STKR with $s(\lambda) = \lambda^p$ for more choices of $p \in \oset{1,2,4,6,8}$, and report the results on three data sets in \Cref{fig:stkr-poly-p}. It is clear that a larger $p$ leads to higher performance.
This suggests one possible reason why inverse Laplacian works so well empirically: 
It contains $k^p$ for $p=1,2,\cdots$, so it can use multi-step similarity information up to infinitely many steps. 
\item Extracting the top-$d$ eigenspace with kernel PCA can achieve pretty high performance.
Specifically, on 3 of the 9 datasets we use, such as \texttt{Computers}, STKR with top-$d$ truncation is better than Label-Prop and STKR with inverse Laplacian.
This shows that STKR with inverse Laplacian and STKR with top-$d$ truncation plus kernel PCA are two parallel methods---neither is superior to the other one.
\end{itemize}

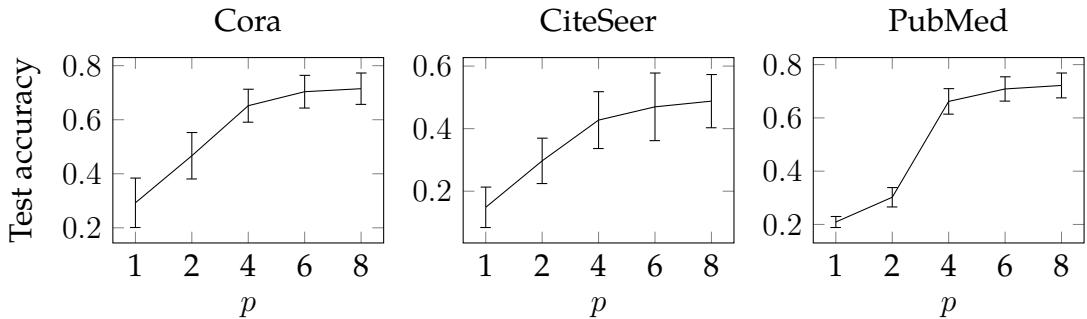
\begin{figure}[t]
    \centering
    \begin{filecontents*}{stkr-poly-p-cora.csv}
p,Mean,STDev
1,0.292593,0.091475
2,0.466667,0.085889
4,0.651852,0.060983
6,0.703704,0.060481
8,0.714815,0.058038
\end{filecontents*}
\begin{filecontents*}{stkr-poly-p-citeseer.csv}
p,Mean,STDev
1,0.148485,0.064599
2,0.296970,0.072559
4,0.427273,0.090853
6,0.469697,0.108086
8,0.487879,0.085053
\end{filecontents*}
\begin{filecontents*}{stkr-poly-p-pubmed.csv}
p,Mean,STDev
1,0.209137,0.020696
2,0.302030,0.036389
4,0.661929,0.047870
6,0.708629,0.045666
8,0.721827,0.046634
\end{filecontents*}

\begin{tikzpicture}
    \begin{axis}[name=plot0, width=0.32\textwidth,height=0.25\textwidth, symbolic x coords={1,2,4,6,8},xtick=data,
    title={Cora}, xlabel={$p$}, ylabel={Test accuracy}]
        \addplot [error bars/.cd, y dir=both, y explicit]
            table [col sep=comma, x=p, y=Mean, y error=STDev]{stkr-poly-p-cora.csv};
    \end{axis}
    \begin{axis}[name=plot1, width=0.32\textwidth,height=0.25\textwidth, symbolic x coords={1,2,4,6,8},xtick=data,
    title={CiteSeer}, xlabel={$p$}, at={($(plot0.south east)+(30,0)$)},]
        \addplot [error bars/.cd, y dir=both, y explicit]
            table [col sep=comma, x=p, y=Mean, y error=STDev]{stkr-poly-p-citeseer.csv};
    \end{axis}
    \begin{axis}[name=plot2, width=0.32\textwidth,height=0.25\textwidth, symbolic x coords={1,2,4,6,8},xtick=data,
    title={PubMed}, xlabel={$p$}, at={($(plot1.south east)+(30,0)$)},]
        \addplot [error bars/.cd, y dir=both, y explicit]
            table [col sep=comma, x=p, y=Mean, y error=STDev]{stkr-poly-p-pubmed.csv};
    \end{axis}
\end{tikzpicture}
    \caption{Performance of STKR-Prop with $s(\lambda) = \lambda^p$ and 8 iterations. The best $\beta_n$ with the highest test accuracy is selected. Each experiment is run with 10 random seeds.}
    \label{fig:stkr-poly-p}
\end{figure}

In conclusion, this chapter first derived generalization bounds for contexture learning, that is extracting the top-$d$ eigenspace with finite samples.
The key quantity is the context complexity, which reflects the smoothness of the eigenfunctions.
Next, this chapter introduced a more general formulation called STKR, and presented its implementation as well as generalization bounds.
The key takeaway from this chapter is how the context affects the sample complexity of representation learning.

\chapter{Generalization Under Distribution Shift}
\label{chap:dist-shift}

The contexture theory characterizes the mechanism of representation learning, thereby advancing the science of foundation models.
However, one main assumption of the contexture theory is that the data distribution $\px$ is fixed.
This is a very strong assumption that is hardly true in practice.
In reality, the distributions of the pretrain data and the downstream data are always different.
Whether a model trained on one distribution can still achieve good performance on another distribution is called the problem of \textbf{out-of-distribution (OOD) generalization}.
The main takeaway from this chapter is that OOD generalization is extremely hard, both in theory and in practice.

There is a rich body of work on OOD generalization in machine learning, statistics, applied probability and optimization \cite{shimodaira2000improving,huang2006correcting,bickel2007discriminative,quionero2009dataset}.
In deep learning, there are two types of research on OOD generalization.
The first type studies how to transfer a model trained on one domain to another domain, also known as transfer learning \cite{pan2010transfer} or domain adaptation \cite{wang2018deep}.
The second type studies how to train robust models against distribution shift, that is preserving the model's performance on the new distribution.
Such research is valuable in safety-critical applications or domains where the data is constantly changing, such as finance.
Both types of research are relevant to foundation models.
For example, how to apply an LLM trained on Wikipedia to a dataset of Python codes is the first type of research.
How to make sure that an LLM always generates proper responses to prompts it has never seen is the second type of research.
The first type is more related to the fine-tuning stage rather than the representation learning stage of foundation model training.
Therefore, this chapter focuses on the second type.

In representation learning, a foundation model is pretrained on one distribution $P$, and then applied to another distribution $Q$.
This chapter studies an easier problem, which is the standard problem in the literature:
A predictor is trained on $P$ and then evaluated on $Q$.
We assume that the distribution shift only contains \textbf{covariate shift}, where the ground truth target function is always fixed, and only the distribution of $X$ changes.
In practice, there are two types of covariate shift \cite{pmlr-v139-koh21a}.
\begin{itemize}
    \item In \textbf{domain generalization}, the support of $Q$ might contain samples that are not in the support of $P$. This is the typical scenario in domain adaptation.
    \item In \textbf{subpopulation shift}, the support of $Q$ is a subset of the support of $P$. Alternatively we can write $Q \ll P$, that is $Q$ is absolutely continuous to $P$. Mathematically this means that for any set $A$, $P(A) = 0$ implies $Q(A) = 0$.
\end{itemize}

In the context of foundation models, it suffices to study subpopulation shift.
There are two main reasons.
First, the pretraining set is very large, so it is very unlikely to have test samples that do not appear in the pretraining set.
Second, even in domain adaptation, the common practice is to fine-tune the foundation model on the new distribution before using it.
It is rarely the case that a model is used on a completely new distribution without any fine-tuning.
For these reasons, this chapter focuses on subpopulation shift exclusively.

\section{Reweighting and DRO}

In subpopulation shift, $P$ and $Q$ are different distributions on the same set of samples.
Thus, one can view $Q$ as assigning different weights to the samples than $P$.
For example, given a training set $\oset{(x_i,y_i)}_{i=1}^n$, $P$ is usually defined as the uniform distribution over the $n$ samples, while $Q$ gives different weights to these samples.
Since the model is evaluated on $Q$, we care more about those samples where $Q$ place larger weights than $P$, because these samples have greater impact on the evaluation. We call them \textbf{upweighted samples}.

Empirical risk minimization (ERM) is the standard training algorithm in deep learning. It assumes that the training samples are \iid{} sampled from the data distribution, and minimizes the average model risk over the training samples.
Let $\ell(\hat{y}, y)$ be the loss function. Then, ERM minimizes the following empirical risk:
\begin{equation*}
    \hat{\gR}_{\erm}(f) = \frac{1}{n} \sum_{i=1}^n \ell(f(x_i), y_i)  .
\end{equation*}

However, if $P$ and $Q$ give different weights to the samples, then ERM could achieve low performance on $Q$ if its risk on the upweighted samples on higher than average \cite{hovy2015tagging,blodgett2016demographic,tatman2017gender}.
One common example of this is \textbf{class imbalance}, which was briefly discussed in \Cref{sec:illustrative-examples}.
In a classification task, if some classes are significantly smaller than the other classes, then a model trained via ERM will typically have high risk on these small classes.
However, the model is required to have good performance on every class, meaning that $Q$ places larger weights on these small classes than $P$.
As a result, the ERM model is poor in the class imbalance situation.

Given the above discussion,
the most straightforward way to tackle the subpopulation shift is reweighting, also known as \textbf{importance weighting} \cite{shimodaira2000improving}.
The idea is to assign a different weight $P'$ to the samples in the training loss, such that $P' = Q$.
The importance weighting (IW) empirical risk is
\begin{equation*}
    \hat{\gR}_{\iw}(f) = \frac{1}{n} \sum_{i=1}^n \frac{Q(x_i)}{P(x_i)}  \ell(f(x_i), y_i)  . 
\end{equation*}
Note that $P(x)$ and $Q(x)$ are the weights of $x$, and they are not necessarily the density functions.
For example, in class imbalance, $P(x)$ is proportional to the size of the class of $x$.
If the goal is to have good performance on every class, then $Q(x)$ is the same for all $x$.
Consequently, $\gR_{\iw}$ divides the loss on each sample by its class size, thereby assigning larger weights to samples in smaller classes.
Note that this is different from \Cref{eqn:obj-classification-balanced}, which divides the sample loss by the square root of its class size.

Importance weighting assumes that we know $Q$ at train time.
What if $Q$ is unknown?
This is the more common case in training foundation models.
The pretraining dataset is a very large and comprehensive dataset, while the downstream task usually focuses on only one domain, that is a small part of the pretraining data.
However, since we may not know the downstream task at pretrain time, we do not know which data it will focus on.

\textbf{Distributionally robust optimization (DRO)} is the most popular approach when $Q$ is unknown.
It aims to minimize the model risk on the worst $Q$ whose distance to $P$ is bounded by some $\rho > 0$.
The distance from one distribution to another is also known as a divergence function, and is denoted by $D(Q \parallel P)$.
Note that $D(Q \parallel P)$ might not be symmetric, that is we could have $D(Q\parallel P) \neq D(P \parallel Q)$.
The DRO risk is defined as
\begin{equation}
\label{eqn:dro-risk}
\gR_{D,\rho}(f;P) = \sup_Q \sset{\E_{(X,Y) \sim Q} [\ell \paren{f(X),Y}]}{D(Q \parallel P) \leq \rho}  .
\end{equation}
The constraint $D(Q \parallel P) \leq \rho$ comes from our prior knowledge about $Q$.
The following are two examples of DRO.

\begin{example}
    \textbf{Conditional value at risk (CVaR)} \cite[Example~3]{duchi2018learning} aims to maximize the model's performance on the worst $\alpha$ fraction of the data, for some fixed $\alpha \in (0,1)$.
    For example, CVaR is widely used in finance, where the model needs to perform well when the market is at its lowest point.
    In this scenario, we have $Q(x) \le \alpha^{-1}P(x)$ for all $x$, that is the sample weight under $P$ is at least $\alpha$ times the weight under $Q$. Thus, $D(Q \parallel P) = \sup \frac{Q(x)}{P(x)}$, and $\rho = \alpha^{-1}$.
\end{example}

\begin{example}
    In \textbf{group DRO (GDRO)} \cite{sagawa2019distributionally}, the data is divided into a number of groups, and the model is required to perform well on the worst group. 
    This is a common scenario in the field of machine learning fairness.
    For example, a credit approval model is required to be fair across all races.
    Then, the groups are defined by the races.
    In this case, $P(x)$ and $Q(x)$ are the weights on the group of $x$, $D(Q \parallel P) = \sup \frac{Q(x)}{P(x)}$, and $\rho$ is the number of groups.
\end{example}

Both reweighting and DRO are quite heuristic, but do they actually work in practice?
The results are mixed.
For example, balancing the classes can usually give the model higher performance on small classes, but at the cost of the average performance.
Meanwhile, \cite{sagawa2019distributionally} showed that reweighting and DRO methods can overfit very easily, and they usually require much stronger regularization than ERM, or early stopping.
Furthermore, \cite{gulrajani2021in} conducted a large-scale empirical study, and showed the surprising negative result that reweighting are not better than ERM on most real datasets.
The authors argued that reweighting were reported better in prior work only because the ERM baseline had not been sufficiently tuned.

The rest of this chapter will show two reasons why reweighting and DRO might fail.
The first reason is related to the training dynamics of these methods.
The second reason is related to their sensitivity to the outliers in the dataset.
Possible solutions to each failure mode will be discussed.
These analyses are based on my work \cite{pmlr-v139-zhai21a,zhai2021,zhai2023grw}.

\section{Generalized Reweighting (GRW) Versus ERM}
Let the input space be $\gX \subseteq \R^{\dx}$, and we assume that all $x \in \gX$ satisfies $\norm{x}_2 \le 1$.
Consider learning a target function $f^*: \gX \rightarrow \R$ using a training set $\oset{(x_i, y_i)}_{i=1}^n$.
Denote $\mX = (x_1,\cdots,x_n) \in \R^{\dx \times n}$, and $\mY = (y_1,\cdots,y_n)\in \R^n$. For any function $g: \gX \mapsto \R^m$, we overload notation and denote $g(\mX) = (g(x_1),\cdots,g(x_n)) \in \R^{m \times n}$.

The difference between reweighting and DRO is that $Q$ is fixed in reweighting are fixed, while it is not in DRO.
Here we present a general formulation called \textbf{generalized reweighting (GRW)}.
At training step $t$, GRW minimizes the following weighted empirical risk:
\begin{equation}
\label{eqn:w-emp}
\hat{\gR}_{\vq^{(t)}}(f) = \sum_{i=1}^n q_i^{(t)} \ell(f(x_i), y_i)  ,
\end{equation}
where $\vq^{(t)} = (q_1^{(t)},\cdots,q_n^{(t)})$ is the sample weight vector, such that $q_1^{(t)}+\cdots+q_n^{(t)} = 1$.
If $\vq^{(t)}$ does not change with $t$, we call it static GRW;
if $\vq^{(t)}$ can change with $t$, we call it dynamic GRW.
Note that ERM is a special case of static GRW.

Importance weighting is obviously an example of static GRW.
Group DRO is an example of dynamic GRW, and here is how it is usually implemented.
Let there be $K$ groups.
Denote the empirical risk over group $k$ by $\hat{\gR}_k(f)$, and the model at time $t$ by $f^{(t)}$. 
For all $k \in [K]$, group DRO iteratively sets $q_i^{(t)} = g_k^{(t)} / n_k$ for all $(x_i,y_i)$ in group $k$, where $g_k^{(t)}$ is the group weight that is updated by
\begin{equation*}
    g_k^{(t)} \propto g_k^{(t-1)} \exp \brac{ \nu \hat{\gR}_k \paren{f^{(t-1)}} }
\end{equation*}
for some $\nu > 0$. The group weights are normalized so that $q_1^{(t)}+\cdots+q_n^{(t)}=1$. 
\cite[Proposition~2]{sagawa2019distributionally} showed that for convex settings, the Group DRO risk of iterates converges to the global minimum with the rate $O(t^{-1/2})$ if $\nu$ is sufficiently small.

The key result of this section is that GRW cannot improve over ERM, because the models they produce are too similar.
It relies on two key assumptions.
First, the model is over-parameterized, meaning that the number of parameters is (much) greater than the number of samples. This is the usual case in deep learning. 
Second, the optimizer is gradient descent with a sufficiently small learning rate. Our results also hold for other gradient methods such as momentum SGD and Adam, as long as the training loss converges to zero.

Let us first gain some insights from linear models, using an analysis similar to \cite{pmlr-v80-gunasekar18a}.
Then, we will study neural networks.

\paragraph{Insights from linear models.}
Consider a regression task, where the loss $\ell(\hat{y}, y) = \frac{1}{2}(\hat{y} - y)^2$ is the squared loss.
Consider using a linear model denoted by $f(x) = \dotp{\theta, x}$ for $\theta \in \R^{\dx}$.
We assume that the model is over-parameterized, that is $\dx > n$.
The weight update rule of GRW under gradient descent (GD) is the following:
\begin{equation}
\label{eqn:update-linear}
    \theta^{(t+1)} = \theta^{(t)} - \eta \sum_{i=1}^n q_i^{(t)} \nabla_\theta \ell(f^{(t)}(x_i), y_i)  = \theta^{(t)} - \eta \sum_{i=1}^n q_i^{(t)}  (f^{(t)}(x_i) - y_i) x_i  ,
\end{equation}
where $\eta > 0$ is the learning rate. 
We now show that, under some assumptions, as $t \rightarrow \infty$, $\theta^{(t)}$ must converge to a common $\theta^*$ for all GRW and ERM methods.
The proof consists of two steps.
First, we prove that the training loss will converge to zero.
In this case, as long as $x_1,\cdots,x_n$ are linearly independent, $\theta^*$ must be an interpolator, which means that $\langle \theta^*, x_i \rangle = y_i$ for all $i \in [n]$.
Second, we prove that there is a unique interpolator $\theta^*$. The interpolator only depends on the starting point $\theta^{(0)}$ and the training samples, but it does not depend on the sample weights $\vq^{(t)}$.
These results require the following assumption about the sample weights.
\begin{assumption}
\label{ass:qstar}
There are constants $q_1,\cdots,q_n$ such that for all $i \in [n]$, there is $q_i^{(t)} \rightarrow q_i$ as $t \rightarrow \infty$.
Moreover, $\min_{i} q_i = q^* > 0$.
\end{assumption}

This assumption says two things.
First, the sample weights become stable after a sufficient amount of training. It guarantees that the training loss will not change too much between two consecutive training steps.
Second, all samples have positive weights, which means that no sample is deleted from the training set.
This is important because the unique $\theta^*$ depends on the training samples.
If samples were deleted, $\theta^*$ would not be the same.
The following theorem shows that the training loss will converge to zero.
\begin{theorem}[Proof in \Cref{app:proof-thm-grw-reg-linear}]
\label{thm:grw-reg-linear}
If $x_1,\cdots,x_n$ are linearly independent, then there exists a constant $\eta_0 > 0$ such that: For any GRW algorithm satisfying \Cref{ass:qstar}, under the update rule \Cref{eqn:update-linear} with $\eta \le \eta_0$, the empirical ERM risk $\hat{\gR}_{\erm}(f^{(t)}) \rightarrow 0$ as $t \rightarrow \infty$.
\end{theorem}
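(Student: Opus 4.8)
## Proof Proposal for Theorem (GRW regression, linear case)

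The plan is to show that GD with GRW weights converges to a point where the training loss vanishes. The essential feature to exploit is that all gradient updates in \Cref{eqn:update-linear} lie in the column span of $\mX$, i.e. in $V := \sspan\{x_1,\cdots,x_n\} \subseteq \R^{\dx}$, which is an $n$-dimensional subspace since the $x_i$ are linearly independent. Writing $\theta^{(t)} = \theta^{(0)}_\perp + \vartheta^{(t)}$ where $\theta^{(0)}_\perp$ is the component of $\theta^{(0)}$ orthogonal to $V$ and $\vartheta^{(t)} \in V$, only $\vartheta^{(t)}$ moves. On $V$ the restriction of the map $\theta \mapsto (\dotp{\theta,x_1},\cdots,\dotp{\theta,x_n})$ is a linear isomorphism onto $\R^n$ (its matrix is the Gram-type matrix $\mX^\top \mX$ when coordinatized in an ONB of $V$), so it suffices to track the residual vector $\vr^{(t)} = (f^{(t)}(x_i) - y_i)_{i=1}^n \in \R^n$ and show $\vr^{(t)} \to \vzero$.

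First I would derive the residual recursion. From \Cref{eqn:update-linear}, $f^{(t+1)}(x_j) - f^{(t)}(x_j) = -\eta \sum_i q_i^{(t)} r_i^{(t)} \dotp{x_i,x_j}$, so with $\mK = \mX^\top \mX \in \R^{n\times n}$ (positive definite, since the $x_i$ are linearly independent) and $\mD^{(t)} = \diag(q_1^{(t)},\cdots,q_n^{(t)})$ we get
\begin{equation*}
\vr^{(t+1)} = \paren{\mI_n - \eta \mK \mD^{(t)}} \vr^{(t)}.
\end{equation*}
Next I would control the spectrum of $\mI_n - \eta \mK \mD^{(t)}$. The matrix $\mK \mD^{(t)}$ is not symmetric, but it is similar to the symmetric positive definite matrix $(\mD^{(t)})^{1/2} \mK (\mD^{(t)})^{1/2}$ (using \Cref{ass:qstar}, all $q_i^{(t)} > 0$ once $t$ is large, and bounded below by roughly $q^*/2$). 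Its eigenvalues are therefore real and positive, lying in an interval $[\underline{\lambda}^{(t)}, \overline{\lambda}^{(t)}]$ with $\underline{\lambda}^{(t)} \ge (q^*/2)\,\lambda_{\min}(\mK) > 0$ and $\overline{\lambda}^{(t)} \le \lambda_{\max}(\mK)$ uniformly in $t$ (since $q_i^{(t)} \le 1$). Choosing $\eta_0 = \lambda_{\max}(\mK)^{-1}$, for any $\eta \le \eta_0$ we get a uniform contraction factor: $\norm{\vr^{(t+1)}}_2 \le (1 - \eta\, c)\,\norm{\vr^{(t)}}_2$ for a constant $c = (q^*/2)\lambda_{\min}(\mK) > 0$ and all sufficiently large $t$. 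For the finitely many early steps, boundedness follows from $\norm{\mI_n - \eta\mK\mD^{(t)}}$ being finite. Hence $\vr^{(t)} \to \vzero$ geometrically, which is exactly $\hat{\gR}_{\erm}(f^{(t)}) = \tfrac{1}{2n}\norm{\vr^{(t)}}_2^2 \to 0$.

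The main obstacle — and the reason \Cref{ass:qstar} is needed — is the non-symmetry and time-dependence of $\mK\mD^{(t)}$: without the uniform lower bound $q_i^{(t)} \ge q^* - o(1)$ the contraction factor could drift to $1$, and a naive bound on a product of non-symmetric matrices need not decay. The fix is the similarity transform to $(\mD^{(t)})^{1/2}\mK(\mD^{(t)})^{1/2}$, which makes the eigenvalues real and uniformly bounded away from $0$, combined with the observation that $\eta \le \lambda_{\max}(\mK)^{-1}$ keeps $\mI_n - \eta\mK\mD^{(t)}$'s eigenvalues in $[0,1)$. One subtlety to handle carefully: the similarity transform changes the norm in which contraction holds ($\norm{\cdot}$ weighted by $\mD^{(t)}$), and since $\mD^{(t)}$ itself varies one should either pass to the limiting weights $q_i$ and absorb the $o(1)$ discrepancy, or work directly with the eigenvalue bounds on $\norm{\vr^{(t)}}_2$ via the fact that symmetric-positive-definite similarity preserves real spectrum — I would take the latter route to avoid tracking a moving norm. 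Once $\hat{\gR}_{\erm}(f^{(t)}) \to 0$ is established, the follow-up claim (that $\theta^{(t)}$ converges to a unique interpolator independent of $\vq^{(t)}$) would be handled separately, presumably in the next result, by noting $\theta^{(t)}$ stays in the affine subspace $\theta^{(0)} + V$ and the interpolation constraints pin down a unique point there.
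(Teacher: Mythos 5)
Your residual-recursion route is a genuinely different presentation from the paper's: the paper works directly with the weighted objective $F(\theta) = \sum_i q_i (\langle\theta,x_i\rangle - y_i)^2$ (using the \emph{limiting} weights $q_i$ from \Cref{ass:qstar}), proves $F$ is smooth, and runs a descent-lemma argument to get a geometric contraction in $F$. Your recursion $\vr^{(t+1)} = (\mI_n - \eta \mK \mD^{(t)})\vr^{(t)}$ is the same dynamics viewed on the residual, so the two are closely related in spirit.

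However, there is a genuine gap in the step where you claim a one-step contraction $\norm{\vr^{(t+1)}}_2 \le (1-\eta c)\norm{\vr^{(t)}}_2$ ``from eigenvalue bounds.'' The matrix $\mI_n - \eta\mK\mD^{(t)}$ is \emph{not} normal, so its spectral radius does not control its $\ell_2$ operator norm: all its eigenvalues can lie in $[0,1)$ while $\norm{\mI_n-\eta\mK\mD^{(t)}}_2$ exceeds $1$ (a $2\times 2$ upper-triangular Jordan-type example with a large off-diagonal entry shows this concretely). So a single GRW step can \emph{increase} $\norm{\vr^{(t)}}_2$, and in fact the crude similarity bound only gives $\norm{\mI_n-\eta\mK\mD^{(t)}}_2 \le \sqrt{\kappa(\mD^{(t)})}\,\rho(\mI_n-\eta\mK\mD^{(t)})$, where the prefactor $\sqrt{\kappa(\mD^{(t)})}$ can be as large as roughly $(q^*)^{-1/2}$. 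You flag this as a ``subtlety'' but the route you say you would take --- ``work directly with the eigenvalue bounds on $\norm{\vr^{(t)}}_2$'' --- is precisely the step that fails; the alternative you try to avoid (tracking a weighted norm) is the one that actually works.

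The clean fix, which is exactly what the paper's proof does implicitly, is to use the $\sqrt{\mQ}$-weighted norm with the \emph{fixed} limiting weights $\mQ = \diag(q_1,\dots,q_n)$ as Lyapunov function: $F(\theta) = \norm{\sqrt{\mQ}\,\vr(\theta)}_2^2$. In the $\mQ$-inner product the operator $\mK\mQ$ is self-adjoint, so $\mI_n - \eta\mK\mQ$ genuinely contracts in that norm; for $\mD^{(t)}$ close to $\mQ$ the asymmetry is an $O(\epsilon)$ perturbation that can be absorbed by shrinking $\eta$ and waiting past $t_\epsilon$ (the paper handles this with the algebraic factorization $\mQ^{(t)} = \sqrt{\mQ_{3\epsilon}^{(t)}}\sqrt{\mQ}$). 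If you adopt the fixed $\mQ$-weighted norm rather than ``avoid a moving norm,'' the moving-norm concern disappears, since $\mQ$ does not move, and your residual-recursion argument closes.
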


When $x_1,\cdots,x_n$ are linearly independent,
\Cref{eqn:update-linear} implies that $\theta^{(t+1)} - \theta^{(t)}$ is always a linear combination of $x_1,\cdots,x_n$.
As a result, for all $t$, $\theta^{(t)} - \theta^{(0)} \in \sspan \oset{x_1,\cdots,x_n}$.
This is an $n$-dimensional subspace of $\R^d$.
By Cramer's rule, there is exactly one $\tilde{\theta}$ in this subspace such that $\langle \tilde{\theta} + \theta^{(0)}, x_i \rangle = y_i$ for all $i \in [n]$.
Therefore, $\theta^* = \tilde{\theta} + \theta^{(0)}$ is unique and independent of the sample weights.
It only relies on $\theta^{(0)}$ and $x_1,\cdots,x_n$.

Finally, note that ERM is a special case of GRW.
Thus, GRW produces the exact same model as ERM, so it cannot improve over ERM.
One way to solve this problem is adding regularization.
The regularization has two effects: (i) moving $\theta^{(t+1)} - \theta^{(t)}$ out of the span of $x_1,\cdots,x_n$; (ii) preventing the model from achieving zero loss, that is interpolating the training samples.
Another solution is to add some new samples to (via data augmentation for example), or delete som samples from the dataset.

Let us demonstrate this result with a simple experiment.
The experiment is conducted on a training set of six MNIST images, five of which are digit 0 and one is digit 1. 
The two different digits define two groups.
We use a 784-dimensional linear model and run ERM, importance weighting and group DRO (with $\nu = 1$).
The results are reported in \Cref{fig:grw-reg}, in which (a) and (b) are the results when no regularization is applied.
From (a) we can see that the three models will converge to the same $\theta$;
from (b) we can see that the training loss of all three models will converge to zero.
Then, we apply $L^2$ regularization and run the experiment again. From (c) and (d) we can see that when the regularization is small, the training loss will still converge to zero, and the three models will still converge to the same $\theta$.
From (e) and (f) we can see that the three models converge to different points only when the regularization is so large that the training loss does not converge to zero.

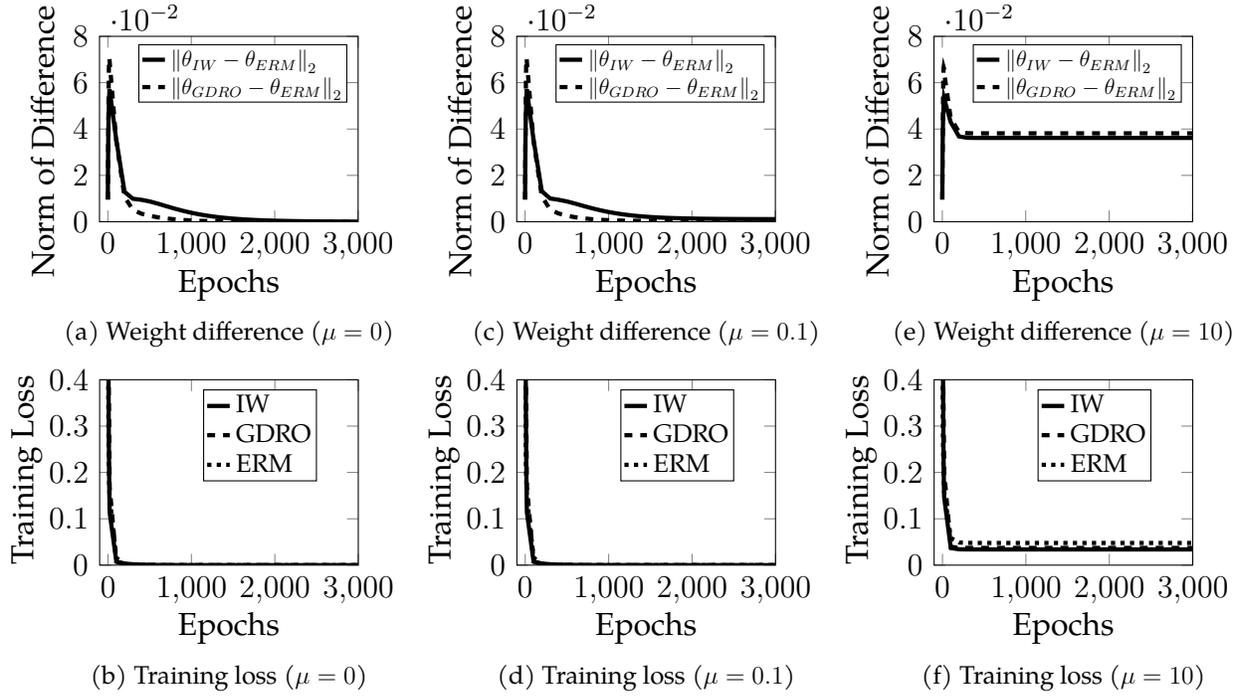
\begin{figure}[t]
    \centering
    \begin{tikzpicture}

\begin{axis}[
name=plot0,table/col sep=comma,no markers,
    xmin = -100, xmax = 3000,
    ymin = 0, ymax = 0.08,
    xlabel={Epochs},
    x label style={at={(axis description cs:0.5,-0.25)},anchor=north,inner sep=0},
    ylabel={Norm of Difference},
    y label style={at={(axis description cs:-0.22,-0.3)},anchor=west,inner sep=0},
    width = .31\linewidth,
    height = .25\linewidth,
    legend style = {nodes={scale=0.7, transform shape},at={(0.15,0.8)},anchor=west,inner sep=.5pt},
    legend cell align=left,
    ]
    \addplot [opacity=0.4, black, ultra thick, solid] table [x={epochs}, y={iw}] {data/grw_reg_00_diff.csv};
    \addplot [black, ultra thick, dashed] table [x={epochs}, y={gdro}] {data/grw_reg_00_diff.csv};
    \addlegendentry{$\norm{\theta_{IW} - \theta_{ERM}}_2$}
    \addlegendentry{$\norm{\theta_{GDRO} - \theta_{ERM}}_2$}
\end{axis}

\begin{axis}[
name=plot1,table/col sep=comma,no markers,
at={($(plot0.south west)+(0,-130)$)},
    xmin = -100, xmax = 3000,
    ymin = 0, ymax = 0.4,
    xlabel={Epochs},
    x label style={at={(axis description cs:0.5,-0.25)},anchor=north,inner sep=0},
    ylabel={Training Loss},
    y label style={at={(axis description cs:-0.28,0)},anchor=west,inner sep=0},
    width = .31\linewidth,
    height = .25\linewidth,
    legend style = {nodes={scale=0.8, transform shape},at={(0.4,0.7)},anchor=west,inner sep=.5pt},
    legend cell align=left,
    ]
    \addplot [opacity=0.4, black, ultra thick, solid] table [x={epochs}, y={iw}] {data/grw_reg_00_train.csv};
    \addplot [opacity=0.6, black, ultra thick, dashed] table [x={epochs}, y={gdro}] {data/grw_reg_00_train.csv};
    \addplot [black, ultra thick, dotted] table [x={epochs}, y={erm}] {data/grw_reg_00_train.csv};
    \addlegendentry{IW}
    \addlegendentry{GDRO}
    \addlegendentry{ERM}
\end{axis}

\begin{axis}[
name=plot2,table/col sep=comma,no markers,
at={($(plot0.south east)+(60,0)$)},
    xmin = -100, xmax = 3000,
    ymin = 0, ymax = 0.08,
    xlabel={Epochs},
    x label style={at={(axis description cs:0.5,-0.25)},anchor=north,inner sep=0},
    ylabel={Norm of Difference},
    y label style={at={(axis description cs:-0.22,-0.3)},anchor=west,inner sep=0},
    width = .31\linewidth,
    height = .25\linewidth,
    legend style = {nodes={scale=0.7, transform shape},at={(0.15,0.8)},anchor=west,inner sep=.5pt},
    legend cell align=left,
    ]
    \addplot [opacity=0.4, black, ultra thick, solid] table [x={epochs}, y={iw}] {data/grw_reg_0.1_diff.csv};
    \addplot [black, ultra thick, dashed] table [x={epochs}, y={gdro}] {data/grw_reg_0.1_diff.csv};
    \addlegendentry{$\norm{\theta_{IW} - \theta_{ERM}}_2$}
    \addlegendentry{$\norm{\theta_{GDRO} - \theta_{ERM}}_2$}
\end{axis}

\begin{axis}[
name=plot3,table/col sep=comma,no markers,
at={($(plot2.south west)+(0,-130)$)},
    xmin = -100, xmax = 3000,
    ymin = 0, ymax = 0.4,
    xlabel={Epochs},
    x label style={at={(axis description cs:0.5,-0.25)},anchor=north,inner sep=0},
    ylabel={Training Loss},
    y label style={at={(axis description cs:-0.28,0)},anchor=west,inner sep=0},
    width = .31\linewidth,
    height = .25\linewidth,
    legend style = {nodes={scale=0.8, transform shape},at={(0.4,0.7)},anchor=west,inner sep=.5pt},
    legend cell align=left,
    ]
    \addplot [opacity=0.4, black, ultra thick, solid] table [x={epochs}, y={iw}] {data/grw_reg_0.1_train.csv};
    \addplot [opacity=0.6, black, ultra thick, dashed] table [x={epochs}, y={gdro}] {data/grw_reg_0.1_train.csv};
    \addplot [black, ultra thick, dotted] table [x={epochs}, y={erm}] {data/grw_reg_0.1_train.csv};
    \addlegendentry{IW}
    \addlegendentry{GDRO}
    \addlegendentry{ERM}
\end{axis}

\begin{axis}[
name=plot4,table/col sep=comma,no markers,
at={($(plot2.south east)+(60,0)$)},
    xmin = -100, xmax = 3000,
    ymin = 0, ymax = 0.08,
    xlabel={Epochs},
    x label style={at={(axis description cs:0.5,-0.25)},anchor=north,inner sep=0},
    ylabel={Norm of Difference},
    y label style={at={(axis description cs:-0.22,-0.3)},anchor=west,inner sep=0},
    width = .31\linewidth,
    height = .25\linewidth,
    legend style = {nodes={scale=0.7, transform shape},at={(0.15,0.8)},anchor=west,inner sep=.5pt},
    legend cell align=left,
    ]
    \addplot [opacity=0.4, black, ultra thick, solid] table [x={epochs}, y={iw}] {data/grw_reg_10_diff.csv};
    \addplot [black, ultra thick, dashed] table [x={epochs}, y={gdro}] {data/grw_reg_10_diff.csv};
    \addlegendentry{$\norm{\theta_{IW} - \theta_{ERM}}_2$}
    \addlegendentry{$\norm{\theta_{GDRO} - \theta_{ERM}}_2$}
\end{axis}

\begin{axis}[
name=plot5,table/col sep=comma,no markers,
at={($(plot4.south west)+(0,-130)$)},
    xmin = -100, xmax = 3000,
    ymin = 0, ymax = 0.4,
    xlabel={Epochs},
    x label style={at={(axis description cs:0.5,-0.25)},anchor=north,inner sep=0},
    ylabel={Training Loss},
    y label style={at={(axis description cs:-0.28,0)},anchor=west,inner sep=0},
    width = .31\linewidth,
    height = .25\linewidth,
    legend style = {nodes={scale=0.8, transform shape},at={(0.4,0.7)},anchor=west,inner sep=.5pt},
    legend cell align=left,
    ]
    \addplot [opacity=0.4, black, ultra thick, solid] table [x={epochs}, y={iw}] {data/grw_reg_10_train.csv};
    \addplot [opacity=0.6, black, ultra thick, dashed] table [x={epochs}, y={gdro}] {data/grw_reg_10_train.csv};
    \addplot [black, ultra thick, dotted] table [x={epochs}, y={erm}] {data/grw_reg_10_train.csv};
    \addlegendentry{IW}
    \addlegendentry{GDRO}
    \addlegendentry{ERM}
\end{axis}

\node[scale=0.8] at ($(plot0.south)+(0,-1.5)$) {(a) Weight difference ($\mu = 0$)};
\node[scale=0.8] at ($(plot1.south)+(0,-1.5)$) {(b) Training loss ($\mu = 0$)};
\node[scale=0.8] at ($(plot2.south)+(0,-1.5)$) {(c) Weight difference ($\mu = 0.1$)};
\node[scale=0.8] at ($(plot3.south)+(0,-1.5)$) {(d) Training loss ($\mu = 0.1$)};
\node[scale=0.8] at ($(plot4.south)+(0,-1.5)$) {(e) Weight difference ($\mu = 10$)};
\node[scale=0.8] at ($(plot5.south)+(0,-1.5)$) {(f) Training loss ($\mu = 10$)};

\end{tikzpicture}
    \caption{Experiment results of ERM, importance weighting (IW) and group DRO (GDRO) with the squared loss and $L^2$ regularization on six MNIST images with a linear model. $\mu$ is the regularization coefficient. All norms are $L^2$ norms.}
    \label{fig:grw-reg}
\end{figure}

To conclude, we gain the following insight from the above analysis:
Without a very large regularization, early stopping or altering the training set, GRW and ERM will produce very similar models, so GRW cannot be better than ERM.

\paragraph{Wide neural networks, regression tasks.}

With this insight, we now study neural networks.
In particular, we focus on \textit{sufficiently wide fully-connected neural networks} within the neural tangent kernel (NTK) regime \cite{jacot2018ntk}.
The neural network is defined as
\begin{equation*}
    \vh^{l+1} = \frac{\mW^{l}}{\sqrt{d_l}} \vx^l + \beta\vb^{l} \; ; \qquad \vx^0 = x, \;  \vx^{l+1} = \sigma(\vh^{l+1}) . \quad (l = 0,\cdots,L)  
\end{equation*}
Here $\sigma$ is a non-linear activation function, $\mW^{l} \in \R^{d_{l+1} \times d_l}$ and $\mW^{L} \in \R^{1 \times d_L}$, and $d_0 = \dx$. The parameter vector $\theta$ consists of $\mW^0,\cdots,\mW^L$ and $\vb^0,\cdots,\vb^L$ ($\theta$ is the concatenation of all flattened weights and biases). The final output is $f(x) = \vh^{L+1}$.
A wide neural network has large $d_1,\cdots,d_L$.
Moreover, the neural network is initialized as
\begin{equation*}
       \left \{
    \begin{aligned}  
    &\mW_{i,j}^{l(0)} \sim \gN(0,1) \\ 
    &\vb_j^{l(0)} \sim \gN(0,1)
    \end{aligned}
    \right . \quad
    (l = 0,\cdots,L-1) \qquad \text{and} \qquad  \left \{
    \begin{aligned}  
    &\mW_{i,j}^{L(0)} = 0 \\ 
    &\vb_j^{L(0)} \sim \gN(0,1)
    \end{aligned}
    \right .
\end{equation*}

Finally, we assume that $\sigma$ is differentiable everywhere; and both $\sigma$ and its first-order derivative $\dot{\sigma}$ are Lipschitz, meaning that there exists a constant $L > 0$ such that $| f(x_1) - f(x_2) | \le L \norm{x_1 - x_2}_2$ for all $x_1,x_2$.
In the rest of this chapter, we will use \textit{wide NN} to refer to a neural network that satisfies all the above conditions.

The \textit{neural tangent kernel} (NTK) is defined as $\Theta^{(0)}(x,x') = \nabla_\theta f^{(0)}(x)^{\top}  \nabla_\theta f^{(0)}(x')$.
Our result is based on the following NTK theorem proved in \cite{jacot2018ntk}.
\begin{lemma}
\label{lem:ntk}
If $\sigma$ is Lipschitz and $d_l \rightarrow \infty$ for $l = 1,\cdots,L$ sequentially, then $\Theta^{(0)}(x, x')$ converges in probability to a non-degenerate deterministic limiting kernel $\Theta(x, x')$. Here ``non-degenerate'' means that $\Theta(x, x')$ depends on $x$ and $x'$ and is not a constant.
\end{lemma}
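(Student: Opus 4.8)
The plan is to follow the layer-by-layer analysis of \cite{jacot2018ntk}, exploiting the hypothesis that $d_1,\dots,d_L$ are sent to infinity \emph{sequentially}: this lets me condition on the ``already-infinite'' layers and, when the width of the next layer is sent to infinity, treat its weight matrix as a fresh Gaussian independent of everything computed so far---exactly the property that makes the induction clean (without sequentiality one needs the heavier tensor-program machinery to license this step). The first observation is that the prescribed initialization has $\mW^{L(0)} = \vzero$, so the backpropagated errors $\delta^{l,(0)}(x) := \partial f^{(0)}(x)/\partial \vh^l$ vanish for every $l \le L$: indeed $\delta^{L,(0)}(x) = \tfrac{1}{\sqrt{d_L}}\,\mW^{L(0)} \diag(\dot\sigma(\vh^{L,(0)}(x))) = \vzero$, and hence so are all the shallower ones by the backward recursion. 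Therefore $\nabla_{\mW^l} f^{(0)} = \vzero$ and $\nabla_{\vb^l} f^{(0)} = \vzero$ for $l < L$, and the tangent kernel at initialization collapses to the single surviving term
\begin{equation*}
\Theta^{(0)}(x,x') \;=\; \frac{1}{d_L}\dotp{\vx^{L,(0)}(x),\,\vx^{L,(0)}(x')} + \beta^2 ,
\end{equation*}
which only involves the forward pass. (Had the last layer been initialized with i.i.d.\ Gaussian weights, the full NTK recursion would reappear; here the backward contributions are simply absent at $t=0$, so the usually delicate backward-concentration step is trivial.)

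\paragraph{Forward-pass Gaussian-process limit.}
It remains to show that $\tfrac{1}{d_L}\dotp{\vx^{L,(0)}(x),\vx^{L,(0)}(x')}$ converges in probability to a deterministic limit, which I would prove by induction on $l = 1,\dots,L$. Put $\Sigma^1(x,x') = \tfrac{1}{d_0}\dotp{x,x'} + \beta^2$ and recursively $\Sigma^{l+1}(x,x') = \Lambda^l(x,x') + \beta^2$, where $\Lambda^l(x,x') := \E_{(u,v)}[\sigma(u)\sigma(v)]$ with $(u,v)$ centered bivariate Gaussian whose covariance has entries $\Sigma^l(x,x)$, $\Sigma^l(x',x')$, $\Sigma^l(x,x')$. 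The base case holds since $\vh^{1,(0)}(x) = \tfrac{1}{\sqrt{d_0}}\mW^0 x + \beta\vb^0$ has i.i.d.\ coordinates, jointly bivariate Gaussian in $(x,x')$ with covariance $\Sigma^1$; the law of large numbers---applicable because Lipschitzness of $\sigma$ forces the linear growth bound $|\sigma(t)| \le |\sigma(0)| + L_\sigma|t|$ and hence a finite first moment of $\sigma(u)\sigma(v)$---gives $\tfrac{1}{d_1}\dotp{\vx^{1,(0)}(x),\vx^{1,(0)}(x')} \xrightarrow{p} \Lambda^1(x,x')$ as $d_1\to\infty$. For the inductive step, condition on $\vx^{l,(0)}$: since $\mW^l$ is independent of $\vx^{l,(0)}$, the coordinates of $\vh^{l+1,(0)}(x) = \tfrac{1}{\sqrt{d_l}}\mW^l \vx^{l,(0)}(x) + \beta\vb^l$ are conditionally i.i.d.\ centered Gaussians with conditional covariance $\tfrac{1}{d_l}\dotp{\vx^{l,(0)}(x),\vx^{l,(0)}(x')} + \beta^2$; the induction hypothesis makes this converge in probability to $\Sigma^{l+1}(x,x')$, continuity of the covariance-to-$\Lambda^{l+1}$ map (dominated convergence via the growth bound on $\sigma$) transfers the convergence, and a further law-of-large-numbers step over the $d_{l+1}$ coordinates yields $\tfrac{1}{d_{l+1}}\dotp{\vx^{l+1,(0)}(x),\vx^{l+1,(0)}(x')} \xrightarrow{p} \Lambda^{l+1}(x,x')$. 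Composing the nested sequential limits gives $\Theta^{(0)}(x,x') \xrightarrow{p} \Lambda^L(x,x') + \beta^2 =: \Theta(x,x')$, a deterministic kernel.

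\paragraph{Non-degeneracy.}
That $\Theta$ is not constant follows from tracking the dependence on $(x,x')$ through the recursion: $\Sigma^1(x,x') = \tfrac{1}{d_0}\dotp{x,x'} + \beta^2$ manifestly varies with $(x,x')$, and each $\Lambda^l$, viewed as a function of the correlation of its two arguments, is real-analytic---by the Mehler/Hermite expansion it is a power series in that correlation with nonnegative coefficients, whose coefficients of degree $\ge 1$ do not all vanish precisely because $\sigma$ is non-constant---so it is non-constant; a non-constant real-analytic function of an argument that itself ranges over a nondegenerate interval is again non-constant, and inducting through the $L$ layers shows $\Theta = \Sigma^{L+1}$ depends genuinely on $(x,x')$.

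\paragraph{Main obstacle.}
The technical heart is the forward induction. The difficulty is that $\vh^{l+1,(0)}$ is Gaussian only \emph{conditionally} and only in the limit, $\sigma$ is merely Lipschitz (hence unbounded), and one must chain ``convergence in probability'' statements across $L$ nested limits in which the target of each stage is itself random until the previous limit has been taken. Making this rigorous calls for a law of large numbers for triangular arrays together with quantitative moment and continuity control of the covariance-to-$\Lambda^l$ map under the Lipschitz bound; the \emph{sequential} (rather than simultaneous) width limit is exactly what prevents these error terms from coupling, and is what I would rely on throughout. By comparison, the vanishing of the backward terms, the resulting collapse of $\Theta^{(0)}$, and the non-degeneracy argument are routine.
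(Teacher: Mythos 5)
The paper does not prove this lemma at all: it simply cites Theorem~1 of Jacot et al.\ \cite{jacot2018ntk}. That citation, read strictly, does not cover the setting of \Cref{chap:dist-shift}, because Jacot et al.\ assume all weight matrices are Gaussian at initialization, whereas this paper initializes $\mW^{L(0)}=\vzero$. Your proof is therefore a genuinely different route, and it is the \emph{correct} one for this setting. Your central observation --- that $\mW^{L(0)}=\vzero$ forces $\partial f^{(0)}/\partial \vh^{L}=\frac{1}{\sqrt{d_L}}\mW^{L(0)}\diag(\dot\sigma(\vh^{L,(0)}))=\vzero$ and hence kills every gradient with respect to $\mW^l,\vb^l$ for $l<L$, so that
\begin{equation*}
\Theta^{(0)}(x,x')=\frac{1}{d_L}\dotp{\vx^{L,(0)}(x),\vx^{L,(0)}(x')}+\beta^2
\end{equation*}
exactly --- is precisely why the statement holds here, and it is something a reader cannot get merely from the cite. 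It also implicitly corrects the cite: the limiting kernel $\Theta$ under this initialization is the NNGP kernel $\Sigma^{L+1}=\Lambda^L+\beta^2$, not the Jacot et al.\ NTK (which would include backward-pass products $\dot\Sigma^{l}$). The forward GP-limit induction you sketch is exactly \Cref{prop:jacot-1}, already stated in the paper, so you could shorten the argument by invoking it rather than re-deriving it. Two small points worth tightening: (i) the Mehler-expansion step applies directly only to unit-variance bivariate Gaussians, and the marginal variances $\Sigma^l(x,x)$ in the recursion are not $1$ in general, so a rescaling (or the generalized Mehler formula) is needed before the ``nonnegative-power-series'' argument goes through; (ii) the non-degeneracy conclusion requires $\sigma$ to be non-constant, which the lemma statement does not say but the paper implicitly grants by calling $\sigma$ ``a non-linear activation function''. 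Neither is a gap in the idea, but both deserve one sentence each in a final write-up.
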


The \textit{kernel Gram matrix} $\Theta = \Theta(\mX, \mX) \in \R^{n \times n}$ is a positive semi-definite symmetric matrix. Denote its largest and smallest eigenvalues by $\lambda^{\max}$ and $\lambda^{\min}$. Note that $\Theta$ is non-degenerate, so we can assume that $\lambda^{\min} > 0$ (which is almost surely true when $d_L \gg n$). 

\begin{theorem}[Proof in \Cref{app:proof-thm-grw-reg-nn}]
\label{thm:grw-reg-nn}
Let $f^{(t)}$ be a wide NN trained by any GRW method satisfying \Cref{ass:qstar} with the squared loss.
Let $f^{(t)}_{\erm}$ be the same model trained by ERM from the same initial point. 
Suppose $d_1=\cdots=d_L=\dl$, $\nabla_{\theta} f^{(0)}(x_1) ,\cdots,\nabla_{\theta} f^{(0)}(x_n) $ are linearly independent, and $\lambda^{\min} > 0$.
Then, with a sufficiently small $\eta$, for any $\delta > 0$, there exists $\tilde{D}(\lambda) > 0$ such that when $\tilde{D}(\lambda) \le \dl \rightarrow \infty$, with probability at least $(1-\delta)$ over random initialization, there is
\begin{equation*}
    \limsup_{t \rightarrow \infty } \left | f^{(t)}(x) - f^{(t)}_{\erm}(x) \right | = O(\dl^{-1/4}) \rightarrow 0 \qquad \text{for all } x \in \R^d \text{ such that } \norm{x}_2 \le 1 .
\end{equation*}
\end{theorem}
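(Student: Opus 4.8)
The plan is to mirror the linear-model argument of \Cref{thm:grw-reg-linear} in the NTK regime, leveraging \Cref{lem:ntk}. The key observation is that for a sufficiently wide NN, the training trajectory is governed by the (essentially fixed) tangent kernel $\Theta^{(0)}$, so that the dynamics become those of kernel regression in the feature space $\nabla_\theta f^{(0)}(x)$. First I would show that for \emph{any} GRW method satisfying \Cref{ass:qstar}, the empirical ERM risk converges to zero: because $\mW^{L(0)} = 0$ the network output is initially zero on the data, and the linearized dynamics $\dot{\vf}^{(t)} = -\Theta^{(0)} \mathrm{diag}(\vq^{(t)}) (\vf^{(t)} - \mY)$ contract the residual whenever all $q_i^{(t)}$ stay bounded below by $q^*/2$ and $\Theta^{(0)} \succeq (\lambda^{\min}/2) \mI$, both of which hold for large $\dl$ by \Cref{lem:ntk} and a concentration bound on $\Theta^{(0)}$. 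This part is essentially a Gronwall/contraction argument combined with the NTK linearization error estimates of \cite{jacot2018ntk}. One must track that the residual decay plus a uniform bound on the kernel norm keeps the parameter trajectory within the radius where the linearization is valid uniformly in $t$; the $O(\dl^{-1/4})$ rate will come from this linearization error.

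Next I would identify the limiting function. Since $f^{(t)}(x) - f^{(0)}(x) = \nabla_\theta f^{(0)}(x)^\top (\theta^{(t)} - \theta^{(0)}) + o(1)$ and every weight update lies in $\sspan\oset{\nabla_\theta f^{(0)}(x_1), \cdots, \nabla_\theta f^{(0)}(x_n)}$, the limit $\theta^{(\infty)} - \theta^{(0)}$ lies in this $n$-dimensional subspace (using the linear independence assumption). Combined with the interpolation condition $f^{(\infty)}(x_i) = y_i$ from Step 1, Cramer's rule forces $\theta^{(\infty)}$ to be unique up to the $o(1)$ linearization error — independent of the weights $\vq^{(t)}$. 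Applying this to both the GRW trajectory $f^{(t)}$ and the ERM trajectory $f^{(t)}_{\erm}$ (ERM being a special case of GRW, also satisfying \Cref{ass:qstar} with $q_i = 1/n$), and noting they share the same initialization and hence the same limiting interpolator, yields $\limsup_t |f^{(t)}(x) - f^{(t)}_{\erm}(x)| = O(\dl^{-1/4})$ for all $x$ with $\norm{x}_2 \le 1$.

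The main obstacle I anticipate is making the convergence-to-zero-loss argument \emph{uniform in $t$} while the GRW weights $\vq^{(t)}$ are still evolving. In the linear case \Cref{ass:qstar} was enough because the system is exactly linear; here the true network is only approximately linear, and the approximation quality degrades if the parameters drift too far. So I would need a simultaneous induction: assume the residual has decayed geometrically up to step $t$, deduce that $\norm{\theta^{(t)} - \theta^{(0)}}$ stays within the NTK validity radius (this uses that $\sum_s \eta \norm{\vf^{(s)} - \mY}$ is summable, hence bounded by a $\dl$-independent constant times $1/\lambda^{\min}$), then conclude the residual decays one more step. Closing this loop requires the width threshold $\tilde D(\lambda)$ to dominate the relevant spectral quantities $1/\lambda^{\min}$, $\lambda^{\max}$, $n$, and the Lipschitz constants of $\sigma, \dot\sigma$ — exactly the dependence hidden in the statement. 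The rest (the subspace/uniqueness argument) is then a routine transcription of the linear case.
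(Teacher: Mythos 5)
Your high-level structure matches the paper's: prove an NTK approximation theorem showing $\sup_{t\ge 0}|f^{(t)}(x) - f^{(t)}_{\lin}(x)| = O(\dl^{-1/4})$ uniformly in $t$, then apply the linear-model result (\Cref{thm:grw-reg-linear}) to the linearized dynamics to conclude that both GRW and ERM trajectories converge to the same interpolator. You correctly identify that the $O(\dl^{-1/4})$ rate comes from the Jacobian's local Lipschitz constant $M/\dl^{1/4}$ and that the key difficulty is keeping the parameter trajectory inside the linearization radius uniformly in $t$.

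There is one concrete gap. You write that the residual contracts ``whenever all $q_i^{(t)}$ stay bounded below by $q^*/2$,'' and your proposed induction hypothesis is that the residual has \emph{decayed} geometrically up to step $t$. But \Cref{ass:qstar} only guarantees $q_i^{(t)} \to q_i > 0$ as $t \to \infty$; in the transient phase (before some finite $t_\epsilon$) the weights can be arbitrarily small or spiky, so there is no contraction and the residual on some samples can actually \emph{grow}. Your proposed induction does not close in that regime. The paper handles this with a two-phase bound (Lemma B.7): for $t \le t_\epsilon$ it only controls $\|g(\theta^{(t)})\|_2 \le B^t R_0$ with $B = 1 + \eta M^2 > 1$ (geometric growth), and then for $t \ge t_\epsilon$, once weights have stabilized within $\epsilon$ of their limits, it establishes geometric decay of $\|\sqrt{\mQ} g(\theta^{(t)})\|_2$. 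Since $t_\epsilon$ is fixed (depending on the GRW method, not on $\dl$), the cumulative parameter motion over phase 1 is still $O(1)$ in $\dl$, so the ball radius $C_0$ can be chosen $\dl$-independent, and only then does the $\dl^{-1/4}$ Lipschitz factor save the day. Without this burn-in/stabilized split your induction would not go through as stated.

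A second, smaller imprecision: you argue that ``every weight update lies in $\sspan\{\nabla_\theta f^{(0)}(x_i)\}$,'' but this is only exact for the \emph{linearized} network, whose Jacobian is frozen at $\theta^{(0)}$. The true network's updates lie in $\sspan\{\nabla_\theta f^{(t)}(x_i)\}$, which drifts. The paper is careful to carry out the subspace/Cramer argument entirely on the linearized trajectory (Corollary B.8 invoking \Cref{thm:grw-reg-linear}), and then transfers back via the approximation theorem; you should make the same separation explicit rather than applying the subspace argument directly to $f^{(t)}$ ``up to $o(1)$.''
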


This theorem says that on any test point $x$ in the unit ball, the GRW model and the ERM model produce almost the same output.
Thus, the two models must have similar OOD generalization performance.
Note that for simplicity, we only prove for $d_1=\cdots=d_L=\dl \rightarrow \infty$, but the result can be very easily extended to the case where $d_l / d_1 \rightarrow \alpha_l$ for $l = 2,\cdots,L$ for some constants $\alpha_2,\cdots,\alpha_L$, and $d_1 \rightarrow \infty$.

The key of proving this theorem is to consider the following \textit{linearized neural network}:
\begin{equation}
\label{eqn:def-linearized-nn}
    f^{(t)}_{\lin}(x) = f^{(0)}(x) + \dotp{ \theta^{(t)} - \theta^{(0)}, \nabla_{\theta} f^{(0)}(x)},
\end{equation}
which is a linear model \wrt{} $\nabla_\theta f^{(0)}(x)$.
If $\nabla_{\theta} f^{(0)}(x_1) ,\cdots,\nabla_{\theta} f^{(0)}(x_n) $ are linearly independent (which is almost surely true when the model is overparameterized so that $\theta$ has a very high dimension), then our previous insight tells us that the linearized network will converge to the unique interpolator.
It then suffices to show that the wide NN can be approximated by its linearized version uniformly throughout training.

Now let us study the effect of $L^2$ regularization, with which the GRW learning objective becomes
\begin{equation}
\label{eqn:grw-l2-pen}
    \hat{\gR}_{\vq^{(t)}}^{\mu}(f) = \sum_{i=1}^n q_i^{(t)} \ell(f(x_i),y_i) + \frac{\mu}{2} \norm{ \theta - \theta^{(0)} }_2^2  .
\end{equation}

Adding regularization does make a difference to the model regardless of how big $\mu$ is.
However, to make it possible for GRW to improve over ERM,
the regularization must be large enough to \textit{significantly lower the training performance}. Otherwise, the final model would still be too close to the unregularized ERM model.

\begin{theorem}[Proof in \Cref{app:proof-thm-reg-nn-regu}]
\label{thm:grw-reg-nn-regu}
Let $f_{\reg}^{(t)}$ be a regularized wide NN trained by any GRW algorithm satisfying \Cref{ass:qstar} with the squared loss. Suppose there exists $M_0 > 0$ such that $\norm{\nabla_\theta f^{(0)}(x)}_2 \le M_0$ for all $\norm{x}_2 \le 1$. 
Suppose $\lambda^{\min} > 0$, $\mu > 0$, $d_1=\cdots=d_L = \dl$, $\nabla_{\theta} f^{(0)}(x_1) ,\cdots,\nabla_{\theta} f^{(0)}(x_n) $ are linearly independent, and the learning rate is sufficiently small.
If the empirical training risk of $f_{\reg}^{(t)}$ satisfies $\limsup_{t \rightarrow \infty} \hat{\gR}(f_{\reg}^{(t)}) < \epsilon$ for some $\epsilon > 0$, then as $\dl \rightarrow \infty$, with probability close to $1$ over random initialization there is
\begin{equation*}
    \limsup_{t \rightarrow \infty} \left | f_{\reg}^{(t)}(x) - f_{ \erm}^{(t)}(x) \right | = O(\dl^{-1/4} + \sqrt{\epsilon}) \rightarrow O(\sqrt{\epsilon}) \quad \text{for all } x \in \R^d \text{ such that } \norm{x}_2 \le 1 .
\end{equation*}
\end{theorem}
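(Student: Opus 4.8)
The plan is to mirror the proof of \Cref{thm:grw-reg-nn}: reduce the regularized wide network to its linearization around initialization, then analyze the linearized regularized GRW dynamics with finite-dimensional linear algebra. Write $\psi(x) = \nabla_{\theta} f^{(0)}(x)$ and $v^{(t)} = \theta^{(t)} - \theta^{(0)}$, so the linearized model from \Cref{eqn:def-linearized-nn} is $f_{\lin}^{(t)}(x) = f^{(0)}(x) + \dotp{v^{(t)}, \psi(x)}$. For the squared loss, gradient descent on \Cref{eqn:grw-l2-pen} applied to this linearized model gives $v^{(t+1)} = (1 - \eta\mu) v^{(t)} - \eta \sum_{i=1}^n q_i^{(t)} r_i^{(t)} \psi(x_i)$ with $r_i^{(t)} = f_{\lin}^{(t)}(x_i) - y_i$. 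Since $v^{(0)} = 0$, an induction gives $v^{(t)} \in \sspan\oset{\psi(x_1),\cdots,\psi(x_n)}$ for all $t$ — the regularization term $-\eta\mu v^{(t)}$ never pushes the iterate out of this $n$-dimensional subspace. Hence $v^{(t)} = \sum_i a_i^{(t)} \psi(x_i)$ for some $a^{(t)} \in \R^n$, and all the relevant quantities can be expressed through the Gram matrix $\Theta := \Theta^{(0)}(\mX,\mX)$, which is invertible with high probability because it is close to its deterministic limit (whose smallest eigenvalue $\lambda^{\min}$ is assumed positive).

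The first step is the time-uniform NTK approximation: show $\sup_{\norm{x}_2 \le 1}\sup_t |f_{\reg}^{(t)}(x) - f_{\lin}^{(t)}(x)| = O(\dl^{-1/4})$ with probability at least $1-\delta$, and the analogous bound for the unregularized ERM model $f_{\erm}^{(t)}$ (the latter is already contained in the proof of \Cref{thm:grw-reg-nn}). This is the standard lazy-training bootstrap: with a sufficiently small $\eta$ and large width, the parameters stay in an $O(\dl^{-1/2})$ ball around $\theta^{(0)}$, the Jacobian $\nabla_{\theta} f^{(t)}$ stays uniformly close to $\nabla_{\theta} f^{(0)}$, and \Cref{lem:ntk} pins the kernel to its limit; the extra $-\eta\mu v^{(t)}$ term only shrinks $\norm{v^{(t)}}_2$ and so helps rather than hurts. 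This step also transfers the hypothesis $\limsup_t \hat{\gR}(f_{\reg}^{(t)}) < \epsilon$ to the linearization, so the linearized residual vector $r^{(t)} = f_{\lin}^{(t)}(\mX) - \mY$ satisfies $\limsup_t \norm{r^{(t)}}_2 = O(\sqrt{\epsilon} + \dl^{-1/4})$ (using that $\min_i q_i = q^* > 0$ makes the weighted and unweighted empirical risks equivalent up to a constant).

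The second step is the limiting linear algebra. Because $\mu > 0$ and $q_i^{(t)} \to q_i$ by \Cref{ass:qstar}, the recursion on $v^{(t)}$ is an asymptotically autonomous linear contraction, so $v^{(t)}$ converges to a limit $v_{\reg}^{\infty} = \sum_i a_i^{\reg}\psi(x_i)$; for ERM, \Cref{thm:grw-reg-nn} (equivalently the linear-model analysis of \Cref{thm:grw-reg-linear}) gives the limit $v_{\erm}^{\infty} = \sum_i a_i^{\erm}\psi(x_i)$ with zero residual, i.e. $\Theta a^{\erm} = \mY - f^{(0)}(\mX)$. Denoting the limiting regularized residual by $r^{\infty}$ (with $\norm{r^{\infty}}_2 = O(\sqrt{\epsilon} + \dl^{-1/4})$ by Step 1), one has $\Theta a^{\reg} = \mY - f^{(0)}(\mX) + r^{\infty}$, hence $a^{\reg} - a^{\erm} = \Theta^{-1} r^{\infty}$ and $\norm{a^{\reg} - a^{\erm}}_2 \le \norm{r^{\infty}}_2/\lambda^{\min} = O(\sqrt{\epsilon} + \dl^{-1/4})$. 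For any $x$ with $\norm{x}_2 \le 1$, $\dotp{v_{\reg}^{\infty} - v_{\erm}^{\infty}, \psi(x)} = (a^{\reg} - a^{\erm})^{\top}\Theta^{(0)}(\mX,x)$, and $\norm{\Theta^{(0)}(\mX,x)}_2$ is uniformly bounded on the unit ball (by Lipschitzness of $\sigma,\dot{\sigma}$, high-probability control of the initialization, and \Cref{lem:ntk}). Adding the $O(\dl^{-1/4})$ linearization errors of both networks yields $\limsup_t |f_{\reg}^{(t)}(x) - f_{\erm}^{(t)}(x)| = O(\dl^{-1/4} + \sqrt{\epsilon})$.

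The main obstacle is Step 1: making the NTK / lazy-training approximation hold uniformly over \emph{all} $t \in \N$ at once (not merely on a finite horizon) while carrying the extra regularization term and allowing a genuinely dynamic weighting rule $q^{(t)} \to q$ rather than fixed weights. This needs a careful induction showing the residuals and the parameter displacement $\norm{v^{(t)}}_2$ stay bounded for all time, with the contraction constants and the $\dl^{-1/4}$ error rate surviving the time-uniform union bound; the regularization assists with the boundedness but the bookkeeping must still be done. A minor point to handle cleanly is that the convergence of $v^{(t)}$ for the \emph{dynamic} GRW recursion requires the asymptotically-autonomous-contraction argument rather than a one-line fixed-point statement, but this is routine given $\mu>0$ and \Cref{ass:qstar}.
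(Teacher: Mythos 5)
Your proposal is correct and takes essentially the same route as the paper's proof: first a time-uniform NTK approximation theorem for the regularized GRW dynamics (the paper's Lemma~\ref{thm:approx-regu}), then a finite-dimensional argument showing that for the linearized model, small training risk forces the parameter to lie within $O(\sqrt{\epsilon})$ of the span interpolator, which is exactly the linearized ERM limit (the paper's Lemma~\ref{lem:lin-reg}). The only differences are cosmetic: the paper bounds $\norm{\theta^{(t)} - \theta^*}_2$ directly via the smallest singular value of $\nabla_\theta f^{(0)}(\mX)$ at every fixed $t$, while you reparameterize in dual coordinates $a \in \R^n$ and use the invertibility of $\Theta$; and you additionally prove convergence of the regularized iterates via an asymptotically-autonomous contraction, whereas the paper sidesteps this by bounding the displacement at each $t$ and taking $\limsup$, which is slightly leaner but not materially different. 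Both yield the same $O(\dl^{-1/4} + \sqrt{\epsilon})$ rate (your constant is worse by a factor of roughly $M_0\sqrt{n/\lambda^{\min}}$, which is immaterial here since $n$, $M_0$, $\lambda^{\min}$ are fixed).
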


\paragraph{Linear models, classification tasks.}
So far we have been discussing regression tasks.
We now move on to binary classification, where the label space is $\gY = \{ +1,-1 \}$, and the loss is the logistic loss $\ell(\hat{y}, y) = \log(1 + \exp(- \hat{y} y))$. 
The big difference here is that \textit{the logistic loss does not have finite minimizers}. The logistic loss converging to zero means that the model weight ``explodes'' to infinity instead of converging to a finite point. 

Again consider the linear model $f(x) = \dotp{\theta, x}$.
Prior work has shown a couple of negative results under this setting.
For example, \cite{byrd2019effect} empirically observed that importance weighting does not improve over ERM for linear models, and \cite{xu2021understanding} proved that for importance weighting, as $t \rightarrow \infty$, $\| \theta^{(t)} \|_2 \rightarrow \infty$ and $\theta^{(t)} / \| \theta^{(t)} \|_2$ converges to a unit vector that does not depend on the sample weights, so it does not improve over ERM.

We extend these results to GRW. First, we show that when the training error goes to zero, $\theta^{(t)}$ will converge to the \textit{max-margin classifier} defined as
\begin{equation*}
    \hat{\theta}_{\mm} = \argmax_{\theta: \norm{\theta}_2 = 1} \oset{\min_{i=1,\cdots,n} y_i \cdot \dotp{\theta,x_i} } .
\end{equation*}

\begin{theorem}[Proof in \Cref{app:proof-thm-grw-cls-linear}]
\label{thm:grw-cls-linear}
    Suppose $x_1,\cdots,x_n$ are linearly independent. Suppose we use GRW such that for all $i \in [n]$, $\liminf_{t \rightarrow \infty} q_i^{(t)} > 0$.
    As $t \rightarrow \infty$, if the empirical risk $\hat{\gR}(f^{(t)})$ converges to zero and $\theta^{(t)} / \norm{\theta^{(t)}}_2 \rightarrow \vu$ for some unit vector $\vu$, then $\vu = \hat{\theta}_{\mm}$.
\end{theorem}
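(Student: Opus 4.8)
The plan is to adapt the classical implicit-bias argument of Soudry et al.\ (and the max-margin analysis for gradient descent on separable data) to the GRW setting, using the uniqueness-of-interpolator philosophy already used in the regression case. First I would set up the geometry: since $x_1,\dots,x_n$ are linearly independent, there is a unique separating direction that the iterates can align with, and the key is to show it must be the max-margin direction regardless of the weights $\vq^{(t)}$. Write the GRW gradient update for the logistic loss,
\begin{equation*}
    \theta^{(t+1)} = \theta^{(t)} + \eta \sum_{i=1}^n q_i^{(t)} \frac{y_i x_i}{1 + \exp(y_i \dotp{\theta^{(t)}, x_i})} ,
\end{equation*}
and observe that each per-sample coefficient is positive, so $\theta^{(t)} - \theta^{(0)}$ always lies in the cone of positive combinations of $\{y_i x_i\}$. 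Since $\hat{\gR}(f^{(t)}) \to 0$, every term $y_i \dotp{\theta^{(t)}, x_i} \to \infty$, hence $\norm{\theta^{(t)}}_2 \to \infty$, which justifies looking at the normalized iterate $\vu = \lim_t \theta^{(t)}/\norm{\theta^{(t)}}_2$, assumed to exist.

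Next I would establish that $\vu$ is a nonnegative-coefficient combination of the support vectors of $\vu$ and that it separates the data with positive margin. The asymptotic-direction argument goes: write $\theta^{(t)} = g(t)\, \vu + \rho^{(t)}$ with $\norm{\rho^{(t)}}_2 / g(t) \to 0$ and $g(t) \to \infty$. For samples $i$ with $y_i \dotp{\vu, x_i}$ strictly larger than the minimum margin $\gamma = \min_j y_j \dotp{\vu, x_j}$, the logistic weight $1/(1+\exp(y_i\dotp{\theta^{(t)},x_i}))$ decays exponentially faster than for the margin-attaining samples; so in the limit the update direction is dominated by the margin-attaining (``support'') samples. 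This forces $\vu$ to be expressible as $\vu = \sum_{i \in S} c_i y_i x_i$ with $c_i \ge 0$ and $S$ the set of support vectors — exactly the KKT stationarity condition for the hard-margin SVM. The weights $q_i^{(t)}$ only rescale the $c_i$ by bounded positive factors (using $\liminf_t q_i^{(t)} > 0$ to keep every support sample contributing), so they cannot change the direction of $\vu$, only the (irrelevant) magnitudes of the dual variables. Combined with complementary slackness ($c_i > 0 \Rightarrow i \in S$) and primal feasibility ($y_i \dotp{\vu, x_i} \ge \gamma$ for all $i$, with $\norm{\vu}_2 = 1$), these are the full KKT conditions, whose unique solution is $\hat{\theta}_{\mm}$.

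The main obstacle I anticipate is making the ``asymptotic direction is driven by support samples'' step rigorous in the presence of time-varying weights: one must show the residual $\rho^{(t)}$ stays lower-order and that the non-support samples' contributions are genuinely negligible, uniformly over the admissible weight sequences. In the fixed-weight case this is handled by a careful bookkeeping of $\log t$-scale growth (the norm grows like $\log t$); with GRW weights one needs a sandwich argument, bounding $q_i^{(t)}$ between $\liminf$ and $1$ so the growth rate of $g(t)$ is still controlled and the exponential separation between support and non-support logistic weights survives. Once that quantitative separation is in hand, the KKT identification and the uniqueness of the hard-margin solution (standard, since the $x_i$ are linearly independent so the margin SVM is strictly convex in the relevant sense) close the argument. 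I would also remark, as in the regression analyses, that ERM is the special case $q_i^{(t)} \equiv 1/n$, so this again shows GRW cannot improve over ERM in the separable linear-classification regime.
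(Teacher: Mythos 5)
Your proposal is correct and follows essentially the same strategy as the paper's proof: both start from $\hat{\gR}(f^{(t)}) \to 0 \Rightarrow \norm{\theta^{(t)}}_2 \to \infty$, both observe that the per-sample gradient coefficients are positive so the direction $\vu$ must be a nonnegative combination of the margin-attaining samples for $\vu$, and both use the exponential decay of the logistic gradient to argue that non-support samples become negligible while $\liminf_t q_i^{(t)} > 0$ keeps every sample's weight bounded away from zero, so reweighting cannot alter the limiting direction.

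The one place where your route genuinely diverges from the paper's is the final identification step. You invoke the KKT conditions of the hard-margin SVM (stationarity via the nonnegative cone, primal feasibility, complementary slackness) and appeal to uniqueness of the KKT point of a strictly convex QP. The paper instead finishes with a direct Cauchy-Schwarz contradiction: writing $\vu = \sum_{i=1}^m \alpha_i y_i x_i$ with $\alpha_i \ge 0$, it shows that if $\vu \neq \hat{\theta}_{\mm}$ then $1 = \langle\vu,\vu\rangle = \sum_i \alpha_i y_i\langle\vu,x_i\rangle < \sum_i \alpha_i y_i\langle\hat{\theta}_{\mm},x_i\rangle = \langle\hat{\theta}_{\mm},\vu\rangle \le 1$, a contradiction. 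The KKT route is the standard implicit-bias formulation (closer to Soudry et al.), but it requires you to separately justify uniqueness of the hard-margin KKT point (which the paper instead proves directly at the start of its argument); the paper's inner-product argument is more self-contained and closes the loop without invoking any duality machinery. One small correction: the strict convexity of the hard-margin QP that yields uniqueness has nothing to do with linear independence of the $x_i$ — the objective $\frac{1}{2}\norm{\theta}_2^2$ is strictly convex unconditionally, so the primal solution is always unique whenever the data is separable. Your caveat about needing a sandwich bound $q_i^{(t)} \in [\liminf_t q_i^{(t)}, 1]$ to make the exponential dominance argument rigorous is a fair criticism; the paper handles this informally in exactly the same way.
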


This result is an extension of \cite{soudry2018implicit}. It implies that all GRW methods including ERM, if converge, must converge to the same point $\hat{\theta}_{\mm}$ that does not depend on the sample weights $q_i^{(t)}$.
Next, we show that any GRW satisfying \Cref{ass:qstar} does converge.

\begin{definition}
\label{def:smooth}
A first-order differentiable function $f$ on $\gD$ is called \textbf{\boldmath $L$-smooth} for $L > 0$ if
\begin{equation*}
    f(y) \le f(x) + \langle \nabla f(x), y-x \rangle + \frac{L}{2} \left \| y-x \right \|_2^2 \qquad \text{for all } x,y \in \gD .
\end{equation*}
An equivalent definition is that $f$ is $L$-smooth if it satisfies
\begin{equation*}
    \norm{\nabla f(x) - \nabla f(y)}_2 \le L \norm{x-y}_2 \qquad \text{for all } x, y \in \gD .
\end{equation*}
\end{definition}

\begin{theorem}[Proof in \Cref{app:proof-thm-grw-cls-converge}]
\label{thm:grw-cls-converge}
Suppose the loss function $\ell(\hat{y}, y)$ is convex, $L$-smooth in $\hat{y}$ and strictly monotonically decreasing to zero as $y\hat{y} \rightarrow +\infty$.
Consider the linear model $f(x) = \dotp{\theta, x}$.
Suppose $x_1,\cdots,x_n$ are linearly independent.
For any GRW that satisfies \Cref{ass:qstar} with $q_i$, denote $F(\theta) = \sum_{i=1}^n q_i \ell(\langle \theta,x_i \rangle, y_i)$.
Then, for a sufficiently small learning rate $\eta$:
\begin{enumerate}[label=(\roman*)]
    \item $F(\theta^{(t)}) \rightarrow 0$ as $t \rightarrow \infty$.
    \item $\left\| \theta^{(t)} \right\|_2 \rightarrow \infty$ as $t \rightarrow \infty$.
    \item Let $\theta_R = \argmin_{\theta} \{ F(\theta): \left\| \theta \right\|_2 \le R \}$. $\theta_R$ is unique for any $R$ such that $\min_{\| \theta \|_2 \le R} F(\theta) < \min_i q_i \ell(0, y_i)$. And if $\lim_{R \rightarrow \infty} \frac{\theta_R}{R}$ exists, then $\lim_{t \rightarrow \infty} \frac{\theta^{(t)}}{\left\| \theta^{(t)} \right\|_2}$ also exists and they are equal.
\end{enumerate}
\end{theorem}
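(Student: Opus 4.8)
The plan is to prove the three claims in order, since each one builds on the previous. The overall strategy mirrors the linear regression case (Theorem~\ref{thm:grw-reg-linear}) but must account for the logistic loss not having finite minimizers.

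\textbf{Part (i): $F(\theta^{(t)}) \rightarrow 0$.} First I would note that $F$ is a nonnegative, convex, $L'$-smooth function for some $L' > 0$ (since $\ell$ is $L$-smooth in $\hat y$, the $x_i$ have bounded norm, and $q_i \le 1$). The GRW update at step $t$ is $\theta^{(t+1)} = \theta^{(t)} - \eta \sum_i q_i^{(t)} \nabla_{\hat y}\ell(\langle \theta^{(t)}, x_i\rangle, y_i) x_i$, which is \emph{not} exactly gradient descent on $F$ because the weights $q_i^{(t)}$ differ from the limiting $q_i$. The idea is to treat this as a perturbed gradient descent: write the update as $\theta^{(t+1)} = \theta^{(t)} - \eta \nabla F(\theta^{(t)}) + \eta e^{(t)}$, where $e^{(t)} = \sum_i (q_i - q_i^{(t)}) \nabla_{\hat y}\ell(\cdot)x_i$ is an error term with $\|e^{(t)}\|_2 \to 0$ by Assumption~\ref{ass:qstar} (using that $\nabla_{\hat y}\ell$ stays bounded while the loss is decreasing). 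A standard descent-lemma computation gives $F(\theta^{(t+1)}) \le F(\theta^{(t)}) - \eta(1 - \eta L'/2)\|\nabla F(\theta^{(t)})\|_2^2 + (\text{terms controlled by }\|e^{(t)}\|_2)$. For small enough $\eta$ this forces $\sum_t \|\nabla F(\theta^{(t)})\|_2^2 < \infty$ up to a summable-or-vanishing perturbation, hence $\|\nabla F(\theta^{(t)})\|_2 \to 0$; since $F$ is convex with infimum $0$, this yields $F(\theta^{(t)}) \to 0$. One must be a little careful that the perturbation terms are genuinely summable — here I would exploit that once the loss is small, $\nabla_{\hat y}\ell$ is exponentially small in the margins, so the perturbation decays fast.

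\textbf{Part (ii): $\|\theta^{(t)}\|_2 \to \infty$.} This follows from (i) together with the hypothesis that $\ell(\hat y, y)$ is strictly monotonically decreasing to zero as $y\hat y \to +\infty$ and never actually attains $0$. If $\|\theta^{(t)}\|_2$ stayed bounded along a subsequence, then (by linear independence of the $x_i$ and the fact that each $q_i > 0$) $F$ would be bounded below on that bounded set by a positive constant, contradicting $F(\theta^{(t)}) \to 0$. So $\|\theta^{(t)}\|_2 \to \infty$.

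\textbf{Part (iii): direction convergence.} Uniqueness of $\theta_R$ for $R$ with $\min_{\|\theta\|_2 \le R} F(\theta) < \min_i q_i\ell(0,y_i)$: the sublevel set where $F < \min_i q_i \ell(0,y_i)$ cannot contain $\theta = 0$, and on it $F$ is strictly convex in the directions spanning $\{x_i\}$ — but since the $x_i$ are linearly independent and the model is parameterized by all of $\mathbb{R}^{d_{\mathcal X}}$, I would restrict to the relevant subspace where strict convexity holds and argue the constrained minimizer is unique there; combined with the norm constraint being active, this pins down $\theta_R$. For the main statement, the strategy is: the GRW iterates approximately follow the regularization path. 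Concretely, I would show that for large $t$, $\theta^{(t)}$ is close in direction to $\theta_{R(t)}$ where $R(t) = \|\theta^{(t)}\|_2$ — this is the same phenomenon as in implicit-bias analyses (Soudry et al., cited), where GD on separable logistic regression converges in direction to the max-margin solution, which is exactly $\lim_{R\to\infty}\theta_R/R$. Since ERM is a special case of GRW satisfying Assumption~\ref{ass:qstar}, and by Theorem~\ref{thm:grw-cls-linear} any convergent GRW direction must equal $\hat\theta_{\mm}$, the limit is forced to be independent of the weights whenever it exists; the content of (iii) is the conditional statement tying it to the $\theta_R$ path.

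\textbf{Main obstacle.} The hard part will be Part (i): controlling the discrepancy between the GRW update and true gradient descent on the limiting objective $F$, and showing the accumulated perturbation does not prevent convergence of the loss to zero. This requires carefully quantifying that $\|e^{(t)}\|_2$ decays (not merely vanishes in the limit) by coupling it to the decay of $\nabla_{\hat y}\ell$, which in turn depends on the margins growing — a mild circularity that I would break by a bootstrap argument: first establish that the loss is non-increasing up to vanishing perturbation and hence bounded, then that it tends to zero, then feed that back to get quantitative decay of $e^{(t)}$. Parts (ii) and (iii) are then comparatively routine given (i), the linear-independence assumption, and the cited implicit-bias machinery.
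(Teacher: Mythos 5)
Your Part (i) has a genuine gap at the final step. You argue that the descent lemma gives $\|\nabla F(\theta^{(t)})\|_2 \to 0$, and then conclude ``since $F$ is convex with infimum $0$, this yields $F(\theta^{(t)}) \to 0$.'' That implication is false in general for a convex function that has no finite minimizer: the standard bound $F(\theta) - \inf F \le \|\nabla F(\theta)\|_2 \cdot \|\theta - \theta^*\|_2$ requires a minimizer $\theta^*$ at finite distance, and here $\|\theta^{(t)}\|_2 \to \infty$ while $F$ attains its infimum only at infinity, so vanishing gradient alone does not control $F(\theta^{(t)})$. The paper closes this gap by a contradiction argument that is not in your sketch: it first shows $F$ is eventually monotone non-increasing (hence converges to some $c \ge 0$) and $\|\tilde g(\theta^{(t)})\|_2 \to 0$, then supposes $c = 2C > 0$, picks a comparator $\theta^*$ (guaranteed to exist by linear independence) with per-sample loss below $C$, and uses convexity of the \emph{time-dependent} objective $F_t$ at $\theta^{(t)}$ to show $\|\theta^{(t)} - \theta^*\|_2^2$ decreases by a fixed $\eta C / 3$ each step once $\|\tilde g\|_2$ is small — a contradiction. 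That comparator step is the missing idea.

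A smaller but real mismatch: your perturbed-gradient framing requires the error $e^{(t)}$ to be summable (you flag this as a ``mild circularity'' to be broken by bootstrap), but the paper never needs summability of anything. It handles the drift of the weights by writing $\mathbf{Q}^{(t)} = \sqrt{\mathbf{Q}_{3\epsilon}^{(t)}}\sqrt{\mathbf{Q}}$ for $t \ge t_\epsilon$ and absorbing the $O(\sqrt\epsilon)$ discrepancy into the descent constant, choosing $\epsilon$ once and for all to keep the contraction factor bounded away from $1$. This is cleaner and avoids the bootstrap entirely. Part (ii) is fine. For Part (iii), your uniqueness argument via strict convexity on a subspace is not quite right and does not obviously go through (the composition of $\ell$ with the linear map need not be strictly convex along every direction of interest); the paper's argument is instead that the midpoint of two minimizers has strictly smaller norm, so scaling it up to norm $R$ strictly decreases $F$ (every margin is positive on the sublevel set), contradicting minimality. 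Your main claim for (iii) — that the iterates track the regularization path — is the right phenomenon but is asserted rather than proved; the paper's proof involves a quantitative lemma (essentially from Ji--Dudík--Schapire--Telgarsky) that for any $\alpha > 0$, $F_t\bigl((1+\alpha)\|\theta\|_2\, u\bigr) \le F_t(\theta)$ for $\|\theta\|_2$ large, combined with a telescoping inner-product argument showing $\liminf_t \langle \theta^{(t)}/\|\theta^{(t)}\|_2, u\rangle \ge 1/(1+\alpha)$. Citing Theorem~\ref{thm:grw-cls-linear} does not substitute for this, since that theorem already \emph{assumes} the direction converges and only identifies the limit.
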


This result is an extension of Theorem 1 in \cite{pmlr-v125-ji20a}.
We show that the logistic loss satisfies the conditions of the above theorem and $\lim_{R \rightarrow \infty} \frac{\theta_R}{R} = \hat{\theta}_{\mm}$ in \Cref{app:proof-logistic-loss}.
Thus, \Cref{thm:grw-cls-linear} and \Cref{thm:grw-cls-converge} imply that for a linear model, all GRW methods satisfying \Cref{ass:qstar} (including ERM) will converge to the same point.

\paragraph{Wide neural networks, classification tasks.}

We now study wide NNs with regularization.
But before that, we have to point out that it is impossible to extend \Cref{thm:grw-cls-converge} to a wide NN without regularization.
This is because for a neural network, if $\norm{\theta^{(t)}}_2$ goes to infinity, then $\norm{\nabla_\theta f}_2$ will also go to infinity, unlike a linear model where this gradient is a constant. 
Consequently, the gap between the neural networks and its linearized counterpart will ``explode'' under gradient descent, so we cannot prove that the wide NN can always be approximated by its linearized version similar to the previous section.
However, with regularization, an approximation theorem can be proved.

Consider minimizing the regularized risk \Cref{eqn:grw-l2-pen} with a wide NN, and $\ell$ is the logistic loss. 
Define the max-margin linearized NN as
\begin{equation*}
    f_{\mm}(x) = \dotp{ \hat{\theta}_{\mm}, \nabla_\theta f^{(0)}(x) } \quad \text{where} \quad \hat{\theta}_{\mm} = \argmax_{\| \theta \|_2 = 1} \oset{ \min_{i=1,\cdots,n} y_i \cdot \dotp {\theta, \nabla_\theta f^{(0)}(x_i) } }
\end{equation*}

Note that $f_{\mm}$ does not depend on $q_i^{(t)}$.
We can again show that regularization only works when it is large enough to sufficiently downgrade the training performance.
\begin{theorem}[Proof in \Cref{app:proof-thm-grw-cls-nn}]
\label{thm:grw-cls-nn}
Suppose $\norm{\nabla_\theta f^{(0)}(x)}_2$ is bounded.
Under the same conditions of \Cref{thm:grw-reg-nn-regu}, when $\ell$ is the logistic loss, for any $\delta > 0$ there exists a constant $C(\delta) > 0$ such that with probability at least $1-\delta$, the following holds as $\dl \rightarrow \infty$:
\begin{center}
    For any $\epsilon \in (0,\frac{1}{4})$, if the training error has $\limsup_{t \rightarrow \infty} \hat{\gR}_{\vq^{(t)}}^{\mu}(f_{\reg}^{(t)}) < \epsilon$, then for any $x$ such that $|f_{\mm}(x) | > C(\delta) \cdot (-\log 2 \epsilon)^{-1/2}$, $f_{\reg}^{(t)}(x)$ has the same sign as $f_{\mm}(x)$ for a sufficiently large $t$.
\end{center}
\end{theorem}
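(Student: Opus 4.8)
The plan is to follow the same three-part structure that worked for the regression case (\Cref{thm:grw-reg-nn-regu}) and the linear classification case (\Cref{thm:grw-cls-linear}, \Cref{thm:grw-cls-converge}), now combining both ingredients. First I would introduce the linearized network $f^{(t)}_{\lin}$ as in \Cref{eqn:def-linearized-nn} and its regularized GRW counterpart, and argue that on the unit ball the regularized wide NN stays uniformly close to its linearized version throughout training. This is where the hypothesis $\norm{\nabla_\theta f^{(0)}(x)}_2 \le M_0$ is essential: for the logistic loss the parameter norm does not converge, but the $L^2$ penalty $\frac{\mu}{2}\norm{\theta - \theta^{(0)}}_2^2$ keeps $\norm{\theta^{(t)} - \theta^{(0)}}_2$ bounded by a constant determined by $\mu$ and the initial risk, so the familiar NTK coupling estimates (of the kind behind \Cref{lem:ntk}) give a gap of order $O(\dl^{-1/4})$ uniformly in $t$ and in $x$ with $\norm{x}_2\le 1$. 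The boundedness of the trajectory also justifies interchanging $\limsup_t$ with the approximation.

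Second, I would analyze the linearized model, which is just a regularized GRW-trained linear predictor in the feature map $\nabla_\theta f^{(0)}(\cdot)$. Here the argument parallels the reasoning in \Cref{app:proof-thm-grw-cls-linear}: because the regularized objective is strongly convex, the GRW iterates converge to the unique minimizer $\theta^{(t)}_{\lin} \to \theta_{\mu}(\vq)$ of $\sum_i q_i \ell(\dotp{\theta,\nabla_\theta f^{(0)}(x_i)},y_i) + \frac{\mu}{2}\norm{\theta-\theta^{(0)}}_2^2$ (using \Cref{ass:qstar} so the weights stabilize and the limiting objective is well-defined). The key quantitative claim is: if the achieved training risk is below $\epsilon$, then $\theta_\mu(\vq)$ must be close, \emph{after rescaling}, to a direction that does not depend on $\vq$. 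Concretely, a small logistic training loss forces every margin $y_i\dotp{\theta,\nabla_\theta f^{(0)}(x_i)}$ to be at least $-\log(2\epsilon)$ up to constants, so $\norm{\theta}_2$ is at least of order $(-\log 2\epsilon)^{1/2}$ times the inverse max-margin; combined with the first-order optimality condition for the regularized problem, one shows $\theta_\mu(\vq)/\norm{\theta_\mu(\vq)}_2$ is within $O((-\log 2\epsilon)^{-1/2})$ of $\hat{\theta}_{\mm}$ in the feature space, with a constant depending only on the Gram matrix $\Theta$ and the data (hence the $C(\delta)$ coming from the random initialization of $\Theta^{(0)}$). I expect this margin-to-direction estimate to be the main obstacle: unlike the regression case where the interpolator is exactly weight-independent, here weight-independence only emerges asymptotically, and one must carefully track how the regularization and the logistic tail interact to produce the $(-\log 2\epsilon)^{-1/2}$ rate uniformly over admissible $\vq$.

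Third, I would combine the two pieces: for any test $x$ with $\abs{f_{\mm}(x)} > C(\delta)(-\log 2\epsilon)^{-1/2}$, the linearized prediction $\dotp{\theta_\mu(\vq),\nabla_\theta f^{(0)}(x)}$ has the same sign as $f_{\mm}(x) = \dotp{\hat\theta_{\mm},\nabla_\theta f^{(0)}(x)}$, because the directional error is smaller than the margin $\abs{f_{\mm}(x)}/\norm{\nabla_\theta f^{(0)}(x)}_2$ (using the $M_0$ bound); then the $O(\dl^{-1/4})$ NTK coupling transfers this sign agreement to the actual regularized wide NN $f^{(t)}_{\reg}(x)$ for $\dl$ large and $t$ large. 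Since neither $f_{\mm}$ nor the limiting direction depends on the sample weights $\vq^{(t)}$, this shows GRW and ERM classify the same points the same way on the high-confidence region, which is the desired conclusion. I would assemble the probabilistic statements by conditioning on the high-probability event (over random initialization) on which $\lambda^{\min}>0$, the NTK concentration holds, and $\norm{\nabla_\theta f^{(0)}(x)}_2 \le M_0$ is realized, intersecting the events from \Cref{lem:ntk} and the coupling lemma as in the proof of \Cref{thm:grw-reg-nn-regu}.
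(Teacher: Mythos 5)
Your high-level plan matches the paper's: couple the regularized wide NN to its linearized version via the NTK approximation (\Cref{thm:approx-regu}), analyze the regularized GRW-trained linear model to show its direction converges to $\hat\theta_{\mm}$ as the achieved risk shrinks, and then transfer the sign agreement through the $O(\dl^{-1/4})$ coupling. You also correctly flag the margin-to-direction estimate as the crux, and that is indeed where the paper's main work lies.

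One quantitative step in your sketch is miscalibrated, though. You assert that a training risk below $\epsilon$ forces $\norm{\theta}_2 \gtrsim (-\log 2\epsilon)^{1/2}$ and then obtain the $(-\log 2\epsilon)^{-1/2}$ directional error; this implicitly assumes the directional error decays like $\norm{\theta}_2^{-1}$. The paper's actual argument gives a \emph{linear} lower bound $R := \norm{\theta - \theta^{(0)}}_2 = \Omega(-\log 2\epsilon)$ (a one-line consequence of $\log(1 + e^{-z}) < 2\epsilon \Rightarrow z \gtrsim -\log 2\epsilon$ applied to the average margin), and then shows the directional gap $\delta := \norm{\hat\theta_{\mm} - \tilde\theta_R}_2$ satisfies $\delta = O(R^{-1/2})$, not $O(R^{-1})$. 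That $R^{-1/2}$ rate comes from a midpoint-renormalization trick: take $\theta' = (\hat\theta_{\mm}+\tilde\theta_R)/2$, note $\norm{\theta'}_2 = \sqrt{1 - \delta^2/4}$, and compare the loss at $R\tilde\theta_R$ against the loss at $R\hat\theta_{\mm}$; squaring out $\sqrt{1-\delta^2/4}$ produces the factor $\delta^2$ against a single factor of $R$, yielding $\delta^2 R = O(1)$. Your ``first-order optimality condition'' gesture does not by itself produce the $R^{-1/2}$ scaling; without the explicit geometric comparison on the sphere you would either get a weaker rate or need a strong-convexity-of-the-margin argument that isn't available for the logistic loss. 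You happened to land on the correct final exponent by compounding two compensating errors, so the conclusion is right but the intermediate bookkeeping should be fixed.
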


Apart from adding a large regularization, is there any other way to prevent GRW from obtaining almost the same model as ERM in classification?
The main reason why GRW always converges to the max-margin classifier regardless of the weights $q_i^{(t)}$ is that the logistic loss is exponentially tailed.
Thus, one way to ensure that the sample weights have an impact is to use a polynomially tailed loss.
For example, \cite{wang2022is} defined the following polynomially tailed loss for linear classifiers:
\begin{equation*}
    \ell_{\alpha,\beta}(\hat{y}, y) = 
    \left \{ 
\begin{aligned}
    \ell_{\textrm{left}}(\hat{y}, y), \quad & \text{if } \hat{y} y < \beta ; \\
    \frac{1}{\brac{\hat{y} y - (\beta - 1)}^{\alpha}} , \quad & \text{if } \hat{y} y \ge \beta ,
\end{aligned}
    \right .
\end{equation*}
where $\ell_{\textrm{left}}$ is any function such that the overall loss function $\ell_{\alpha,\beta}$ is convex, differentiable and strictly decreasing.
Here we empirically compare between the logistic loss and this polynomially tailed loss on the six MNIST images we used earlier.
The results are plotted in \Cref{fig:grw-poly}, and we can observe the following:
\begin{itemize}
    \item For either loss function, the training loss of each method converges to 0.
    \item In theory the norm of the ERM model will explode to infinity, but in reality it will not because once the training loss becomes extremely small, it will turn into zero in the floating number representation, and thus the training halts.
    \item We can see a fundamental difference between the logistic loss and the polynomially tailed loss. For the logistic loss, the norm of the gap between importance weighting (or Group DRO) and ERM will converge to around 0.06 when the training stops, while for the polynomially tailed loss, the norm will be larger than 0.22 and will keep growing, which shows that for the polynomially tailed loss the normalized model weights do not converge to the same point.
\end{itemize}

\begin{figure}[t]
    \centering
    \begin{tikzpicture}

\begin{axis}[
name=plot0,table/col sep=comma,no markers,
    xmin = -30, xmax = 1000,
    ymin = 0.04, ymax = 0.12,
    xtick distance={400},
    xlabel={$\times 10000$ Epochs},
    x label style={at={(axis description cs:0.5,-0.25)},anchor=north,inner sep=0},
    ylabel={Norm of Difference},
    y label style={at={(axis description cs:-0.35,-0.3)},anchor=west,inner sep=0},
    yticklabel style={
        /pgf/number format/fixed,
        /pgf/number format/precision=5
    },
    scaled y ticks=false,
    width = .31\linewidth,
    height = .25\linewidth,
    legend style = {nodes={scale=0.7, transform shape},at={(0.2,0.72)},anchor=west,inner sep=.5pt, inner ysep=5pt},
    legend cell align=left,
    ]
    \addplot [opacity=0.4, black, ultra thick, solid] table [x expr=\coordindex, y={diw}] {data/linear_logistic.csv};
    \addplot [black, ultra thick, dashed] table [x expr=\coordindex, y={dgdro}] {data/linear_logistic.csv};
    \addlegendentry{$\norm{\tilde{\theta}_{\iw} - \tilde{\theta}_{\erm}}_2$}
    \addlegendentry{$\norm{\tilde{\theta}_{\gdro} - \tilde{\theta}_{\erm}}_2$}
\end{axis}

\begin{axis}[
name=plot1,table/col sep=comma,no markers,
at={($(plot0.south west)+(0,-130)$)},
    xmin = -30, xmax = 1000,
    xtick distance={400},
    ymin = 0.205, ymax = 0.29,
    xlabel={$\times 10000$ Epochs},
    x label style={at={(axis description cs:0.5,-0.25)},anchor=north,inner sep=0},
    ylabel={Norm of Difference},
    y label style={at={(axis description cs:-0.35,-0.3)},anchor=west,inner sep=0},
    yticklabel style={
        /pgf/number format/fixed,
        /pgf/number format/precision=5
    },
    scaled y ticks=false,
    width = .31\linewidth,
    height = .25\linewidth,
    legend style = {nodes={scale=0.6, transform shape},at={(0.1,0.43)},anchor=west,inner sep=.5pt,inner ysep=4pt},
    legend cell align=left,
    ]
    \addplot [opacity=0.4, black, ultra thick, solid] table [x expr=\coordindex, y={diw}] {data/linear_poly.csv};
    \addplot [black, ultra thick, dashed] table [x expr=\coordindex, y={dgdro}] {data/linear_poly.csv};
    \addlegendentry{$\norm{\tilde{\theta}_{\iw} - \tilde{\theta}_{\erm}}_2$}
    \addlegendentry{$\norm{\tilde{\theta}_{\gdro} - \tilde{\theta}_{\erm}}_2$}
\end{axis}

\begin{axis}[
name=plot2,table/col sep=comma,no markers,
at={($(plot0.south east)+(45,0)$)},
    xmin = -30, xmax = 1000,
    xtick distance={400},
    ymin = 1.1, ymax = 2.5,
    xlabel={$\times 10000$ Epochs},
    x label style={at={(axis description cs:0.5,-0.25)},anchor=north,inner sep=0},
    ylabel={Norm},
    y label style={at={(axis description cs:-0.28,0.24)},anchor=west,inner sep=0},
    yticklabel style={
        /pgf/number format/fixed,
        /pgf/number format/precision=5
    },
    scaled y ticks=false,
    width = .31\linewidth,
    height = .25\linewidth,
    legend style = {nodes={scale=0.7, transform shape},at={(0.15,0.8)},anchor=west,inner sep=.5pt},
    legend cell align=left,
    ]
    \addplot [black, ultra thick, solid] table [x expr=\coordindex, y={nerm}] {data/linear_logistic.csv};
\end{axis}

\begin{axis}[
name=plot3,table/col sep=comma,no markers,
at={($(plot2.south west)+(0,-130)$)},
    xmin = -30, xmax = 1000,
    xtick distance={400},
    ymin = 2, ymax = 18,
    xlabel={$\times 10000$ Epochs},
    x label style={at={(axis description cs:0.5,-0.25)},anchor=north,inner sep=0},
    ylabel={Norm},
    y label style={at={(axis description cs:-0.28,0.24)},anchor=west,inner sep=0},
    yticklabel style={
        /pgf/number format/fixed,
        /pgf/number format/precision=5
    },
    scaled y ticks=false,
    width = .31\linewidth,
    height = .25\linewidth,
    legend style = {nodes={scale=0.8, transform shape},at={(0.4,0.7)},anchor=west,inner sep=.5pt},
    legend cell align=left,
    ]
    \addplot [black, ultra thick, solid] table [x expr=\coordindex, y={nerm}] {data/linear_poly.csv};
\end{axis}

\begin{axis}[
name=plot4,table/col sep=comma,no markers,
at={($(plot2.south east)+(60,0)$)},
    xmin = -30, xmax = 1000,
    xtick distance={400},
    ymin = -0.001, ymax = 0.006,
    xlabel={$\times 10000$ Epochs},
    x label style={at={(axis description cs:0.5,-0.25)},anchor=north,inner sep=0},
    ylabel={Training Loss},
    y label style={at={(axis description cs:-0.4,-0.1)},anchor=west,inner sep=0},
    yticklabel style={
        /pgf/number format/fixed,
        /pgf/number format/precision=5
    },
    scaled y ticks=false,
    width = .31\linewidth,
    height = .25\linewidth,
    legend style = {nodes={scale=0.9, transform shape},at={(0.4,0.7)},anchor=west,inner sep=.5pt},
    legend cell align=left,
    ]
    \addplot [black!30, black, ultra thick, solid] table [x expr=\coordindex, y={lerm}] {data/linear_logistic.csv};
    \addplot [black!60, ultra thick, dashed] table [x expr=\coordindex, y={liw}] {data/linear_logistic.csv};
    \addplot [black, ultra thick, dotted] table [x expr=\coordindex, y={lgdro}] {data/linear_logistic.csv};
    \addlegendentry{ERM}
    \addlegendentry{IW}
    \addlegendentry{GDRO}
\end{axis}

\begin{axis}[
name=plot5,table/col sep=comma,no markers,
at={($(plot4.south west)+(0,-130)$)},
    xmin = -30, xmax = 1000,
    xtick distance={400},
    ymin = 0, ymax = 0.12,
    xlabel={$\times 10000$ Epochs},
    x label style={at={(axis description cs:0.5,-0.25)},anchor=north,inner sep=0},
    ylabel={Training Loss},
    y label style={at={(axis description cs:-0.4,-0.1)},anchor=west,inner sep=0},
    yticklabel style={
        /pgf/number format/fixed,
        /pgf/number format/precision=5
    },
    scaled y ticks=false,
    width = .31\linewidth,
    height = .25\linewidth,
    legend style = {nodes={scale=0.9, transform shape},at={(0.4,0.7)},anchor=west,inner sep=.5pt},
    legend cell align=left,
    ]
    \addplot [black!30, black, ultra thick, solid] table [x expr=\coordindex, y={lerm}] {data/linear_poly.csv};
    \addplot [black!60, ultra thick, dashed] table [x expr=\coordindex, y={liw}] {data/linear_poly.csv};
    \addplot [black, ultra thick, dotted] table [x expr=\coordindex, y={lgdro}] {data/linear_poly.csv};
    \addlegendentry{ERM}
    \addlegendentry{IW}
    \addlegendentry{GDRO}
\end{axis}

\node[scale=0.9] at ($(plot0.south)+(0,-1.5)$) {(a) Weight difference};
\node[scale=0.9] at ($(plot1.south)+(0,-1.5)$) {(d) Weight difference};
\node[scale=0.9] at ($(plot2.south)+(0,-1.5)$) {(b) Norm of ERM model};
\node[scale=0.9] at ($(plot3.south)+(0,-1.5)$) {(e) Norm of ERM model};
\node[scale=0.9] at ($(plot4.south)+(0,-1.5)$) {(c) Training loss};
\node[scale=0.9] at ($(plot5.south)+(0,-1.5)$) {(f) Training loss};

\end{tikzpicture}
    \caption{Experiment results of ERM, importance weighting (IW) and group DRO (GDRO) with the logistic loss \textbf{(top row)} and the polynomially tailed loss \textbf{(bottom row)} on a linear model. All norms are $L^2$ norms. $\tilde{\theta} = \theta / \norm{\theta}_2$.}
    \label{fig:grw-poly}
\end{figure}
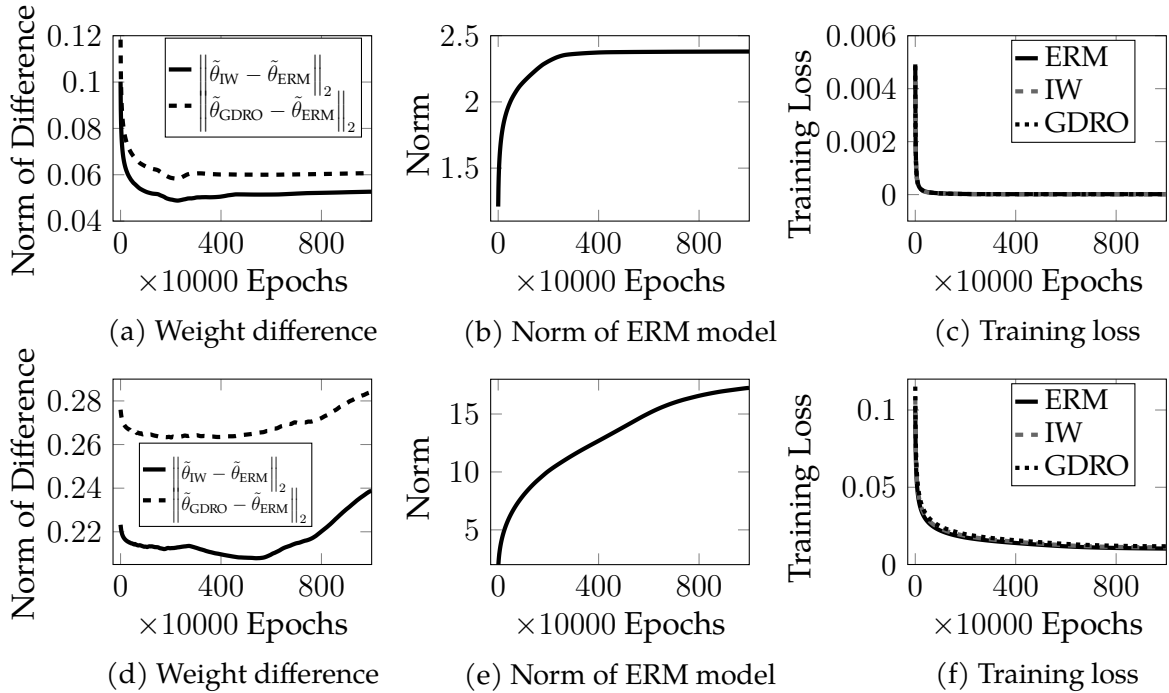

\section{Sensitivity to Outliers}

Another issue with DRO is its sensitivity to outliers, which are samples significantly different from most of the sample in the dataset.
Let us use CVaR as an example to see why DRO is particularly sensitive to outliers.
CVaR places all the weights on the worst $\alpha$ fraction of the training samples.
``Worst'' here means that the model gets the highest loss on these samples.
However, almost all real datasets contain outliers, and by their very definition, the model tends to have high loss on the outliers.
This means that CVaR is very likely to place large weights on many outliers, which will make training very unstable, and the final performance pretty bad.

\begin{figure}[t]
    \centering
    \input{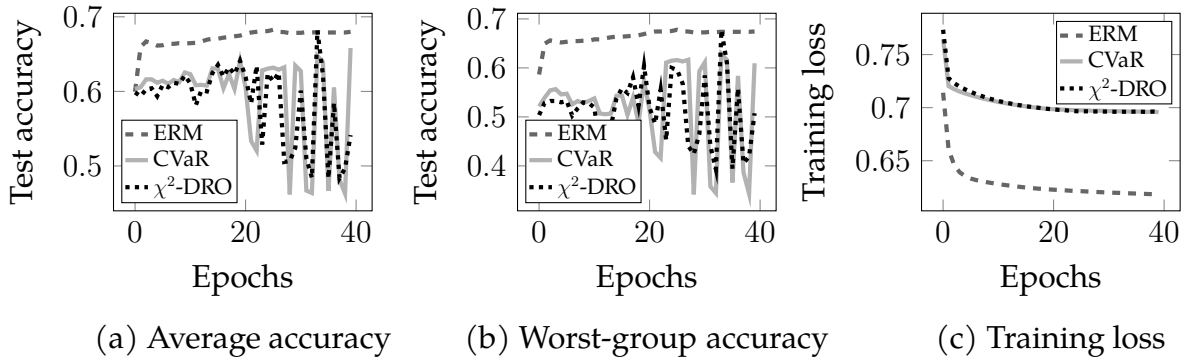}
    \caption{Results of ERM and two DRO methods on the original COMPAS data set.}
    \label{fig:compas-demo-original}
\end{figure}

Let us use an experiment on a real dataset to show that this is a real problem in practice.
We use the COMPAS dataset \cite{larson2016we}, a recidivism prediction data set with 5049 training instances (after pre-processing and train-test splitting).
We construct four groups on this data set with two sensitive features: race and sex. These two features define four overlapping groups (demographic groups): \textit{White}, \textit{Non-white}, \textit{Male} and \textit{Female}.
We use a two-layer feed-forward neural network with ReLU activation, and train it with three methods: ERM, CVaR, and $\chi^2$-DRO, which is a DRO method to be introduced later.
The results are plotted in \Cref{fig:compas-demo-original}.
We report the average test accuracy, the the minimum test accuracy on any group (the worst-group accuracy).
From figures (a) and (b), we can see that for both average and worst-group test accuracies, the two DRO methods are worse than ERM.
Moreover, the two DRO curves are jumping up and down, showcasing the huge volatility of DRO training, in stark contrast to the stable curve of ERM.
In addition, the training loss is plotted in figure (c), and we can see that the loss curve is pretty stable, meaning that this high volatility is not caused by optimization.

\begin{figure}[t]
    \centering
    \input{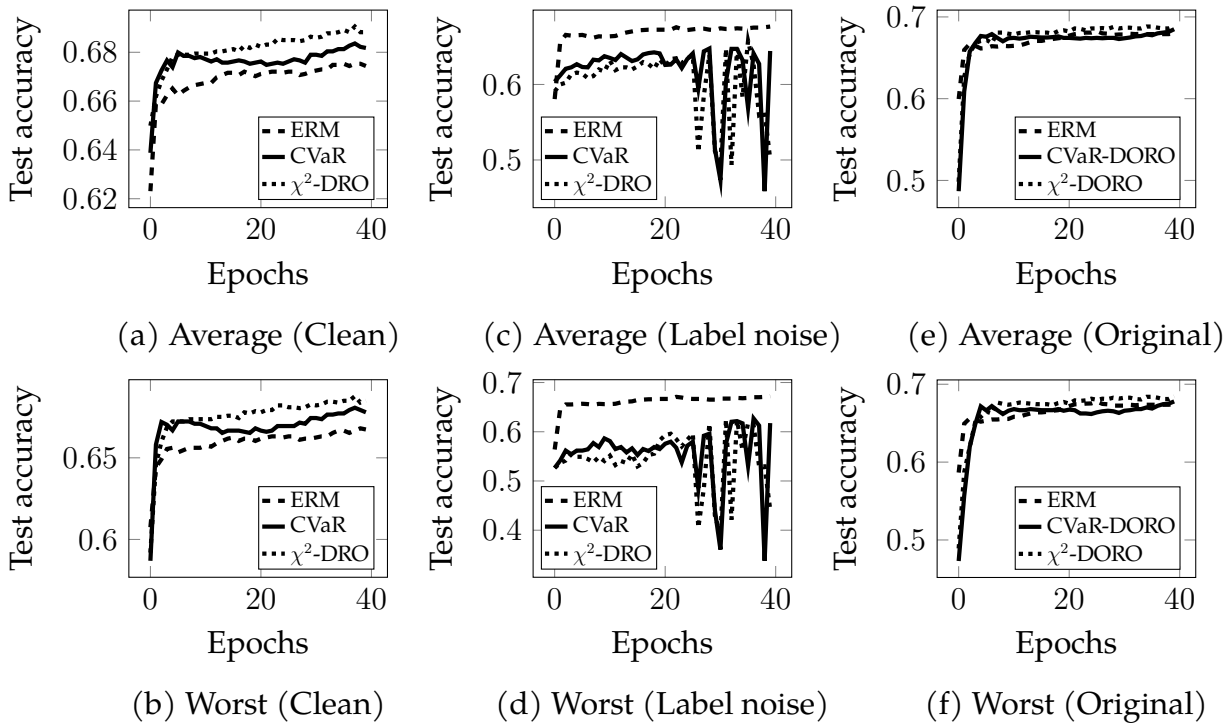}
    \caption{Average and worst-group test accuracies on: (a), (b) the ``clean'' dataset where potential outliers are removed; (c), (d) the noisy set where label noise is added to the ``clean'' set; (e), (f) the original COMPAS dataset, but with DORO algorithms.}
    \label{fig:compas-doro-more}
\end{figure}

Next, we examine if this instability and poor performance is caused by the outliers in the dataset.
For this purpose, we ``clean'' the dataset by removing from it 1000 ``potential outliers'', which are the samples on which the ERM model has a high loss.
Note that these are not 100\% outliers, but removing them has a huge impact on the performance of the two DRO methods.
We ``clean'' the dataset in five rounds. In each round, we train a model from scratch on the samples with ERM, and then remove 200 training samples that incur the highest loss on this model.
After five rounds, 1000 instances are removed in total, and we get a ``clean'' data set with 4049 samples.
\Cref{fig:compas-doro-more} (a), (b) show the results on the ``clean'' dataset, from which we can see that the two DRO methods become very stable and better than ERM.
It should be emphasized that up to this point, we have only removed samples from the dataset and added nothing into it.
The outliers naturally exist in the original data set.

One might argue that these high-loss samples are not necessarily outliers.
Thus, to further substantiate our claim, we conduct a third experiment where we add some outliers to the ``clean'' dataset.
We use a common source of noise: incorrect labels. 
Specifically, we randomly flip 20\% of the labels of the ``clean'' COMPAS dataset, and run the three methods again.
The results are plotted in \Cref{fig:compas-doro-more} (c), (d).
We can see that while the label noise only slightly influences ERM, it significantly downgrades the performance of the two DRO methods, and makes their training highly unstable again.

The experiment results so far should have convinced the reader that outliers have a much greater impact on DRO than ERM, and they can make DRO poor and highly unstable.
In the following section, we will introduce a solution called DORO.
\Cref{fig:compas-doro-more} (e), (f) plot the performance of DORO on the original COMPAS dataset (that is, before cleaning).
We can see that DORO is much better and more stable than DRO.

\section{Distributionally and Outlier Robust Optimization}

The idea of distributionally and outlier robust optimization, or DORO, can be summarized by \Cref{fig:doro}.
We know that CVaR places large weights on the worst samples, which may contain lots of outliers.
Instead, CVaR-DORO ignores the worst of the worst samples, and places weights on the second-worst fraction of the samples, in order to avoid potential outliers.
We can extend this idea to a family of DRO method called the Cressie-Read family.

\begin{figure}[t]
    \centering
    \begin{tikzpicture}

\begin{axis}[
    samples=100,
    domain=0.000001:0.999999, xmax=1, xmin=0,
    ymax=1, ymin=0,
    width = .5\linewidth,
    height = .25\linewidth,
    xlabel=Loss,
    ylabel={Cumulative density},
    ytick={0,0.5,1}
]
\addplot [ultra thick] {1 - 1/(1 - 0.2 * ln(x))};
\draw [thick] (axis cs:0.6,0) -- (axis cs:0.6,0.4);
\draw [thick, dashed] (axis cs:0.46,0) -- (axis cs:0.46,0.74);
\draw [thick, dashed] (axis cs:0.8,0) -- (axis cs:0.8,0.74);
\node[] at (0.8,0.3) {DRO};
\node[] at (0.63,0.61) {DORO};
\draw [thick, ->] (axis cs:0.7,0.3) -- (axis cs:0.6,0.3);
\draw [thick, ->] (axis cs:0.9,0.3) -- (axis cs:1.0,0.3);
\draw [thick, ->] (axis cs:0.52,0.61) -- (axis cs:0.46,0.61);
\draw [thick, ->] (axis cs:0.74,0.61) -- (axis cs:0.8,0.61);
\draw [thick, dashed] (axis cs:0,0.092694) -- (axis cs:0.6,0.092694);
\node[] at (0.05,0.17) {$0.1$};
\end{axis}

\end{tikzpicture}
    \caption{Comparison between DRO and DORO for CVaR.}
    \label{fig:doro}
\end{figure}
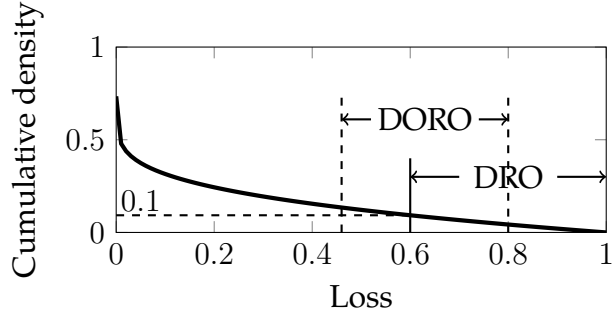

Recall that a DRO method is defined by the constraint $D(Q \parallel P) \le \rho$, where the divergence function $D(Q \parallel P)$ measures the difference between $Q$ and $P$.
There are two large families of divergence functions, namely integral probability metrics (IPMs) and $f$-divergences.
An IPM is defined as $D(Q \parallel P) = \sup_{f \in \gF} \brac{\E_{X \sim Q} f(X) - \E_{Y \sim P} f(Y)}$ for some function class $\gF$.
It is symmetric, so it is also written as $D(Q,P)$.
Examples include the total variation (TV) distance defined as $D(Q,P) = \frac{1}{2} \int |Q(x) - P(x)| dx$,
maximum mean discrepancy (MMD) defined as $D(Q,P) = \norm{\E_{X \sim Q} \pi(X) - \E_{Y \sim P} \pi(Y) }_{\gH}$ for some feature map $\pi$ and some RKHS $\gH$, and the Wasserstein distance defined as $D(Q,P) = \inf_{\gamma \in \Gamma(Q,P)} \int \rho(x,y) d \gamma(x,y)$ for some metric function $\rho(\cdot, \cdot)$.

An $f$-divergence is defined as $D_f(Q \parallel P) = \int f \paren{\frac{dQ}{dP}}dP$ for some function $f$. An $f$-divergence is not necessarily symmetric.
For example, when $f(t) = - \log t$, then the $f$-divergence becomes the popular KL-divergence $D_f(Q \parallel P) = D_{KL}(P \parallel Q) = \int \log \paren{\frac{P(x)}{Q(x)}} dP(x)$ (note that $P$ and $Q$ are reversed).
The TV distance is the only non-trivial divergence function that is both an IPM and an $f$-divergence.

The Cressie-Read family of R\'enyi divergence \cite{cressie1984multinomial} is a family of $f$-divergences.
For any $\beta > 1$, the divergence is defined as
\begin{equation*}
    D_\beta(Q \parallel P) = \int f_\beta \paren{\frac{dQ}{dP}} dP, \qquad \text{where } \; f_\beta(t) = \frac{1}{\beta (\beta-1)} \paren{t^\beta - \beta t + \beta - 1}  .
\end{equation*}

The reason why we are interested in this family is that the DRO risk \Cref{eqn:dro-risk} \wrt{} $D_\beta$ has the following dual characterization (see \cite[Lemma~1]{duchi2018learning}):
\begin{equation}
\label{eqn:dual-cressie-read}
    \gR_{D_\beta, \rho}(f; P) = \inf_{\eta \in \R} \oset{ c_\beta(\rho) \E_{(X,Y) \sim P} \brac{ \paren{ \ell \paren{f(X), Y} - \eta }_+^{\beta_*} }^{\frac{1}{\beta_*}} + \eta }  ,
\end{equation}
where $(x)_+ = \max \oset{x, 0}$, $\beta_* = \frac{\beta}{\beta - 1}$, and $c_\beta(\rho) = \paren{1 + \beta(\beta-1) \rho}^{1/\beta}$.

We first show the relationship between the DRO loss and the worst-group risk.
Suppose there are $K$ groups $\gD_1,\cdots,\gD_K$.
Let $P_k(x,y) = P((x,y) | (x,y) \in \gD_k)$.
Then, the worst-group risk is given by
\begin{equation*}
    \gR_{\max}(f;P) = \max_{k=1,\cdots,K} \gR(f;P_k) = \max_{k=1,\cdots,K} \E_{(X,Y) \sim P} [\ell(f(X), Y) | (X,Y) \in \gD_k] .
\end{equation*}

Consider the scenario where we do not know $\gD_1,\cdots,\gD_K$ at train time, but only know that the smallest group is at least $\alpha \in (0,1)$ of the population size.
Then, we can use the DRO risk given by the Cressie-Read family to obtain a surrogate of the worst-group risk, as shown by the following result.

\begin{proposition}
\label{prop:rmax-dbeta}
	Let $\alpha = \min_{k=1,\cdots,K} P(\gD_k) \leq \exp(-1) \approx 36.8\%$ be the minima group size, and define $\rho = f_{\beta}(\frac{1}{\alpha})$, then we have $\gR_{\max}(f;P) \le \gR_{D_\beta, \rho}(f; P)$.
\end{proposition}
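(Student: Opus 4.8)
The plan is to reduce the stated inequality to a single feasibility statement: that every group-conditional distribution $P_k$ satisfies $D_\beta(P_k \parallel P) \le \rho = f_\beta(1/\alpha)$. Granting this, the conclusion is immediate, since $\gR_{D_\beta,\rho}(f;P)$ is by definition the supremum of $\E_{(X,Y)\sim Q}[\ell(f(X),Y)]$ over all $Q$ with $D_\beta(Q\parallel P)\le\rho$, so it dominates $\E_{(X,Y)\sim P_k}[\ell(f(X),Y)]$ for each $k$, hence dominates the maximum over $k$, which is exactly $\gR_{\max}(f;P)$.

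To prove feasibility, I would start from $\frac{dP_k}{dP}(x,y) = \mathbf{1}[(x,y)\in\gD_k]/P(\gD_k)$. Writing $p_k := P(\gD_k)$, the change-of-measure density equals $1/p_k$ on a set of $P$-mass $p_k$ and $0$ elsewhere, so
\[
D_\beta(P_k\parallel P) = p_k\, f_\beta\!\paren{\tfrac{1}{p_k}} + (1-p_k)\, f_\beta(0) =: g(p_k),
\]
where I define $g(p) = p f_\beta(1/p) + (1-p)f_\beta(0)$ for $p\in(0,1]$ and note $f_\beta(0) = 1/\beta$ directly from the formula for $f_\beta$. The crucial computation is that $g$ is non-increasing: with $u = 1/p \ge 1$, differentiating gives $g'(p) = f_\beta(u) - u f_\beta'(u) - f_\beta(0)$, and using $f_\beta'(u) = \frac{1}{\beta-1}(u^{\beta-1}-1)$ a short cancellation yields $f_\beta(u) - u f_\beta'(u) = (1-u^\beta)/\beta$, so $g'(p) = -u^\beta/\beta \le 0$. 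Since $p_k \ge \alpha$ by the definition of $\alpha$, monotonicity gives $D_\beta(P_k\parallel P) = g(p_k) \le g(\alpha)$.

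It then remains to verify $g(\alpha) \le f_\beta(1/\alpha)$. Subtracting and dividing by the positive factor $(1-\alpha)$, this is equivalent to $f_\beta(1/\alpha) \ge f_\beta(0) = 1/\beta$, and this is exactly where the hypothesis $\alpha \le e^{-1}$ is used. The function $f_\beta$ is convex, strictly decreasing on $(0,1)$ and strictly increasing on $(1,\infty)$ with $f_\beta(1)=0$; the unique point $t^*>1$ at which $f_\beta$ returns to the value $f_\beta(0)$ solves $t^{\beta-1}=\beta$, i.e.\ $t^* = \beta^{1/(\beta-1)}$. Since $\ln\beta \le \beta-1$ for every $\beta>1$, we have $t^* = \beta^{1/(\beta-1)} \le e$, hence $1/\alpha \ge e \ge t^*$, and because $f_\beta$ is increasing beyond $t^*$ we conclude $f_\beta(1/\alpha) \ge f_\beta(t^*) = 1/\beta$, completing the feasibility bound.

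The main obstacle I anticipate is the derivative computation for $g$ — specifically the algebraic identity $f_\beta(u) - u f_\beta'(u) = (1-u^\beta)/\beta$ and the resulting identification of the crossover point $t^* = \beta^{1/(\beta-1)}$, together with recognizing that $\ln\beta\le\beta-1$ is precisely what makes the uniform threshold $\alpha\le e^{-1}$ sufficient across all $\beta>1$. Everything else (the change of measure, the monotonicity bookkeeping, and the final supremum argument) is routine. As a backup I could instead invoke the dual characterization in \Cref{eqn:dual-cressie-read} and exhibit a near-optimal $\eta$, but the direct feasibility route seems cleaner and sidesteps manipulating $c_\beta(\rho)$.
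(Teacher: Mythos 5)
Your proof is correct, and the overall strategy is the same as the paper's (show each group conditional $P_k$ lies in the ambiguity set $\{Q: D_\beta(Q\parallel P)\le\rho\}$, then take a supremum), but the feasibility-verification step takes a genuinely different route. The paper observes that $f_\beta$ is decreasing on $[0,1]$ and increasing on $[1,1/\alpha]$, checks $f_\beta(1/\alpha)-f_\beta(0) = \frac{1}{\beta(\beta-1)}(\alpha^{-\beta}-\beta\alpha^{-1})\ge 0$ when $\alpha\le\beta^{-1/(\beta-1)}$, concludes the pointwise bound $f_\beta(t)\le f_\beta(1/\alpha)$ on all of $[0,1/\alpha]$, and then simply integrates $\int f_\beta(dP_k/dP)\,dP\le f_\beta(1/\alpha)$ using only $\|dP_k/dP\|_\infty\le 1/\alpha$. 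You instead compute $D_\beta(P_k\parallel P)$ exactly as $g(p_k)=p_k f_\beta(1/p_k)+(1-p_k)f_\beta(0)$ via the two-valued density $dP_k/dP\in\{0,1/p_k\}$, prove $g'(p)=-u^\beta/\beta\le0$, and compare endpoints $g(\alpha)\le f_\beta(1/\alpha)$ by factoring out $(1-\alpha)$. Both proofs bottom out at the same calculus, the derivative $f_\beta'(t)=\frac{1}{\beta-1}(t^{\beta-1}-1)$, and the same threshold fact $\beta^{1/(\beta-1)}\le e$ from $\ln\beta\le\beta-1$. The trade-off: the paper's pointwise bound requires only the weaker hypothesis $\|dQ/dP\|_\infty\le1/\alpha$, which is the form actually reused in the proof of \Cref{cor:surrogate-dro} for distributions $Q$ that are not group conditionals; your exact evaluation is tighter and reveals that $D_\beta(P_k\parallel P)$ is decreasing in $p_k$, but is tailored to conditional distributions specifically. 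For this proposition alone either route is equally clean; the paper's choice is the one that generalizes to the surrounding results.
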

\begin{proof}
    Note that $f_\beta'(t) = \frac{1}{\beta - 1} \paren{t^{\beta - 1} -1}$. Thus, $f_\beta'(t)$ is decreasing when $t \in [0,1]$ and increasing when $t \in \brac{1, \frac{1}{\alpha}}$.
    This implies that $f_\beta(t) \le \max \oset{ f_\beta(0), f_\beta \paren{\frac{1}{\alpha}} }$.
    We can further verify that $f_\beta \paren{\frac{1}{\alpha}} - f_\beta(0) = \frac{1}{\beta (\beta-1)} \paren{\frac{1}{\alpha^\beta} - \frac{\beta}{\alpha}}$, which is non-negative when $\alpha \le \beta^{-\frac{1}{\beta - 1}}$.
    Since $\beta > 1$, we have $\beta^{-\frac{1}{\beta - 1}} \ge \exp(-1)$. Thus, we have essentially proved that when $\alpha \le \exp(-1)$, there is
    \begin{equation*}
        \forall t \in \brac{0, \frac{1}{\alpha}}, \; f_\beta(t) \le f_\beta \paren{\frac{1}{\alpha}}  .
    \end{equation*}
    For any $k$, there is $\frac{dP_k}{dP} \le \frac{1}{\alpha}$. Thus, we have
    \begin{equation*}
        D_\beta(P_k \parallel P) = \int f_\beta \paren{ \frac{d P_k}{dP} } dP \le \int f_\beta \paren{ \frac{1}{\alpha} } dP = f_\beta \paren{\frac{1}{\alpha}}  ,
    \end{equation*}
    which combined with the definition of $\gR_{D_\beta, \rho}$ completes the proof.
\end{proof}

While the Cressie-Read family only defines the $f$-divergence for finite $\beta \in (1, +\infty)$, it can be shown that the dual characterization \Cref{eqn:dual-cressie-read} is valid for $\beta = \infty$ as well, for which $D_\beta$ becomes CVaR.
Our analysis below focuses on two cases:
(i) $\beta = \infty$, which corresponds to CVaR;
(ii) $\beta = 2$, which corresponds to the $\chi^2$-DRO risk used in \cite{pmlr-v80-hashimoto18a}.
\Cref{tab:dro} summarizes the relevant information about these two DRO risks.
We will denote the CVaR risk by $\cvar_\alpha(f;P)$, and the $\chi^2$-DRO risk by $\gR_{D_{\chi^2, \rho}}(f;P)$.

\begin{table}[t]
\centering
\begin{tabular}{l|cccccc}
\toprule
\textbf{Method} & $\beta$ & $\beta_*$ & $\rho$ & $c_\beta(\rho)$ & $D_\beta(Q \parallel P)$ & Risk notation \\ 
\midrule
\textbf{CVaR} & $\infty$ & $1$ & $-\log \alpha$ & $\alpha^{-1}$ & $\sup \log \frac{dQ}{dP}$ & $\cvar_\alpha(f;P)$ \\ 
{\boldmath $\chi^2$-\textbf{DRO}} & $2$ & $2$ & $\frac{1}{2}\paren{\frac{1}{\alpha} - 1}^2$ & $\sqrt{1 + \paren{\frac{1}{\alpha} - 1}^2}$ & $\frac{1}{2} \int \paren{\frac{dQ}{dP} - 1}^2 dP$ & $\gR_{D_{\chi^2, \rho}}(f;P)$ \\ 
\bottomrule
\end{tabular}
\caption{The two DRO risks studied in our analysis.}
\label{tab:dro}
\end{table}

From \Cref{eqn:dual-cressie-read}, we derive the following dual form of CVaR:
\begin{equation}
\label{eqn:cvar-dual}
    \cvar_\alpha(f;P) = \inf_{\eta \in \R} \oset{ \alpha^{-1} \E_{(X,Y) \sim P} \brac{ \paren{ \ell \paren{f(X), Y} - \eta }_+ } + \eta } .
\end{equation}

It is easy to see that the optimal $\eta$ of \Cref{eqn:cvar-dual} is the $\alpha$-quantile of the loss defined as
\begin{equation}
    q_{f;P}(\alpha) = \inf_q \oset{ P_{(X,Y) \sim P} \brac{ \ell \paren{ f(X), Y } > q } \le \alpha } .
\end{equation}

\Cref{prop:rmax-dbeta} implies that both CVaR and $\chi^2$-risks are upper bounds of $\gR_{\max}$.

\begin{corollary}
\label{cor:surrogate-dro}
    Let $\alpha = \min_{k=1,\cdots,K} P(\gD_k) \leq \exp(-1)$ be the minimal group size, and $\rho = \frac{1}{2} \paren{\frac{1}{\alpha} - 1}^2$. Then, we have $\gR_{\max}(f;P) \le \cvar_\alpha (f;P) \le \gR_{D_{\chi^2}, \rho}(f; P)$.
\end{corollary}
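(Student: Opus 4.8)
The plan is to prove the two inequalities of \Cref{cor:surrogate-dro} separately: the first, $\gR_{\max}(f;P) \le \cvar_\alpha(f;P)$, is essentially \Cref{prop:rmax-dbeta} instantiated at $\beta = \infty$, while the second, $\cvar_\alpha(f;P) \le \gR_{D_{\chi^2}, \rho}(f;P)$, follows from showing that the CVaR uncertainty set is contained in the $\chi^2$-DRO uncertainty set. For the first inequality I would argue directly. For each group index $k$, writing $P_k(\cdot) = P(\cdot \cap \gD_k)/P(\gD_k)$, we have $\frac{dP_k}{dP} = \frac{1}{P(\gD_k)}\1[\,\cdot \in \gD_k] \le \frac{1}{\alpha}$ pointwise, since $P(\gD_k) \ge \alpha$. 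Hence $D_\infty(P_k \parallel P) = \sup \log \frac{dP_k}{dP} \le -\log\alpha$, which is exactly the radius $\rho$ associated with $\cvar_\alpha$ in \Cref{tab:dro}; so $P_k$ is feasible for the CVaR distributionally robust problem, and therefore $\gR(f;P_k) = \E_{(X,Y)\sim P_k}[\ell(f(X),Y)] \le \cvar_\alpha(f;P)$. Taking the maximum over $k \in [K]$ gives $\gR_{\max}(f;P) \le \cvar_\alpha(f;P)$. (Equivalently, this is \Cref{prop:rmax-dbeta} with $\beta = \infty$ via the dual form \Cref{eqn:dual-cressie-read}, which the text notes remains valid at $\beta = \infty$.)

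For the second inequality, with $\rho = \tfrac12\bigl(\tfrac1\alpha - 1\bigr)^2$, the key point is set inclusion of the uncertainty sets under the hypothesis $\alpha \le e^{-1}$. Take any $Q \ll P$ feasible for CVaR, i.e., $g := \frac{dQ}{dP}$ satisfies $0 \le g \le \tfrac1\alpha$ and $\E_P[g] = 1$; these expectations are all finite because $g$ is bounded. From $g^2 \le \tfrac1\alpha g$ we get $\E_P[g^2] \le \tfrac1\alpha$, so
\[
D_{\chi^2}(Q\parallel P) = \tfrac12\E_P\bigl[(g-1)^2\bigr] = \tfrac12\bigl(\E_P[g^2] - 1\bigr) \le \tfrac12\Bigl(\tfrac1\alpha - 1\Bigr).
\]
It then remains to upgrade this to $\tfrac12\bigl(\tfrac1\alpha - 1\bigr) \le \tfrac12\bigl(\tfrac1\alpha - 1\bigr)^2 = \rho$, which holds iff $\tfrac1\alpha - 1 \ge 1$; and indeed $\alpha \le e^{-1}$ forces $\tfrac1\alpha - 1 \ge e - 1 > 1$. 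Hence $D_{\chi^2}(Q\parallel P) \le \rho$, so every CVaR-feasible $Q$ is $\chi^2$-DRO-feasible. Taking the supremum of $\E_{(X,Y)\sim Q}[\ell(f(X),Y)]$ over CVaR-feasible $Q$ yields $\cvar_\alpha(f;P) \le \gR_{D_{\chi^2}, \rho}(f;P)$, and chaining the two inequalities completes the proof.

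The main (and essentially only) obstacle is the numerical step tying the two radii together: one must observe that the assumption $\alpha \le e^{-1}$ is precisely what is needed to guarantee $\tfrac1\alpha - 1 \ge 1$, so that the crude bound $D_{\chi^2}(Q\parallel P) \le \tfrac12(\tfrac1\alpha-1)$ can be promoted to $D_{\chi^2}(Q\parallel P) \le \tfrac12(\tfrac1\alpha-1)^2 = \rho$. Everything else is bookkeeping with the dual characterizations and radii listed in \Cref{tab:dro} together with the definition of $\gR_{D,\rho}$ in \Cref{eqn:dro-risk}; in particular, the proof reuses, in spirit, the pointwise bound $\frac{dP_k}{dP} \le \tfrac1\alpha$ already exploited in the proof of \Cref{prop:rmax-dbeta}.
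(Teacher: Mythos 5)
Your proof is correct and takes essentially the same route as the paper's: both characterize CVaR-feasibility by the density-ratio bound $\tfrac{dQ}{dP}\le\tfrac1\alpha$ (equivalently, the mixture representation $P=\alpha Q+(1-\alpha)Q'$), use it to show each group distribution $P_k$ is CVaR-feasible, and then show that CVaR-feasibility implies $\chi^2$-feasibility at radius $\rho$. One small remark: your $\chi^2$ computation, exploiting $\E_P[g]=1$, actually yields the sharper bound $D_{\chi^2}(Q\parallel P)\le\tfrac12\bigl(\tfrac1\alpha-1\bigr)$, whereas the paper bounds $(g-1)^2\le\bigl(\tfrac1\alpha-1\bigr)^2$ pointwise; both steps require $\alpha\le\tfrac12$ (guaranteed by $\alpha\le e^{-1}$), but your inequality would in fact support a tighter radius than $\rho$ if one wanted it.
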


\begin{proof}
    Denote $p_k = P(\gD_k)$ for all $k$.
    Then, $P(x,y) = p_k P(x,y|\gD_k) + (1-p_k) P(x,y|\overline{\gD_k})$.
    Let $Q = P_k$ and $Q'(x,y) = \frac{p_k - \alpha}{1 - \alpha} P(x,y|\gD_k) + \frac{1-p_k}{1-\alpha}P(x,y|\overline{\gD_k})$.
    Then, $P = \alpha Q + (1-\alpha) Q'$, which by the definition of CVaR implies that $\E_{P_k}[\ell (f(X), Y)] \le \cvar_\alpha (f;P)$. Thus, $\gR_{\max}(f;P) \le \cvar_\alpha (f;P)$.
    On the other hand, for any $Q$ such that there exists $Q'$ that satisfies $P = \alpha Q + (1-\alpha) Q'$, there is $\frac{dQ}{dP}(x,y) \le \frac{1}{\alpha}$ \aew{}.
    Thus, $D_{\chi^2}(Q \parallel P) \le \frac{1}{2} \paren{ \frac{1}{\alpha} - 1 }^2 = \rho$.
    This implies that $\cvar_\alpha (f;P) \le \gR_{D_{\chi^2}, \rho}(f; P)$.
\end{proof}

\paragraph{Method.}
We model the outliers in the dataset with \textbf{\boldmath Huber's $\epsilon$-contamination model} \cite{huber1992robust}.
Let $P$ be the clean distribution without outliers, and $\ptrain$ be the observed contaminated training distribution.
Then, this model assumes that
\begin{equation*}
\ptrain = (1-\epsilon) P + \epsilon \tilde{P}, \quad \text{where } \tilde{P} \text{ is an arbitrary distribution, and } 0 < \epsilon < \frac{1}{2}.
\end{equation*}

As we saw in \Cref{fig:doro}, DORO ignores the worst samples where the loss is the highest.
With this insight, we define the DORO risk as follows.

\begin{definition}
The \textbf{\boldmath expected $\epsilon$-DORO risk} is defined as
\begin{equation}
\label{eqn:doro-risk-def}
    \gR_{D,\rho,\epsilon}(f;\ptrain) = \inf_{P'} \sset{\gR_{D,\rho(f;P')}}{\exists \; \tilde{P}' \; \text{s.t.} \; \ptrain = (1-\epsilon) P' + \epsilon \tilde{P}'}  .
\end{equation}
\end{definition}

The DORO risk has the following relationship with the total variation.
\begin{lemma}
\label{lem:robust-dro}
Let $\tv(P,Q)=\frac{1}{2}\int_{\gX \times \gY} |P(z) - Q(z)|dz$ be the total variation,  and $\ptrain$ be given by the Huber's model. Then the DORO risk can be lower bounded by
\begin{equation*}
    \gR_{D,\rho,\epsilon}(\theta;\ptrain) \geq \inf_{P''} \oset{ \gR_{D,\rho}(\theta;P''):  \tv (P, P'') \leq \frac{\epsilon}{1-\epsilon} }  .
\end{equation*}
\end{lemma}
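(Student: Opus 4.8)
The plan is to unfold the definition of the expected $\epsilon$-DORO risk and show that the feasibility set over which it takes an infimum is contained in the total-variation ball of radius $\frac{\epsilon}{1-\epsilon}$ around the clean distribution $P$; the lemma then follows immediately, since the infimum of $\gR_{D,\rho}(\theta;\cdot)$ over a larger set can only be smaller.

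First I would rewrite $\gR_{D,\rho,\epsilon}(\theta;\ptrain) = \inf_{P' \in \mathcal{S}} \gR_{D,\rho}(\theta;P')$, where $\mathcal{S} = \sset{P'}{\exists\ \tilde{P}'\ \text{a distribution with}\ \ptrain = (1-\epsilon)P' + \epsilon\tilde{P}'}$. Huber's contamination model furnishes one particular decomposition of this form, namely $\ptrain = (1-\epsilon)P + \epsilon\tilde{P}$, and in particular $P \in \mathcal{S}$, so $\mathcal{S}$ is nonempty and both infima are over legitimate sets.

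The key step is the containment $\mathcal{S} \subseteq \sset{P''}{\tv(P,P'') \le \frac{\epsilon}{1-\epsilon}}$. Fix any $P' \in \mathcal{S}$ with witness $\tilde{P}'$. Subtracting the two decompositions of $\ptrain$ gives the signed-measure identity $(1-\epsilon)(P - P') = \epsilon(\tilde{P}' - \tilde{P})$. Taking $L^1$ norms of both sides over $\gX \times \gY$ and dividing by $2(1-\epsilon)$ (valid since $\epsilon < \frac12$) yields $\tv(P,P') = \frac{\epsilon}{1-\epsilon}\,\tv(\tilde{P}',\tilde{P})$. Because $\tilde{P}'$ and $\tilde{P}$ are genuine probability distributions, $\tv(\tilde{P}',\tilde{P}) \le 1$, hence $\tv(P,P') \le \frac{\epsilon}{1-\epsilon}$, which is the desired containment.

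Finally, since $\mathcal{S}$ is contained in the total-variation ball, we get $\inf_{P''}\oset{\gR_{D,\rho}(\theta;P'') : \tv(P,P'') \le \frac{\epsilon}{1-\epsilon}} \le \inf_{P' \in \mathcal{S}} \gR_{D,\rho}(\theta;P') = \gR_{D,\rho,\epsilon}(\theta;\ptrain)$, which is exactly the claimed lower bound. The only points requiring care are bookkeeping ones: that the division by $1-\epsilon$ and the bound $\tv \le 1$ are applied to bona fide probability measures, and that one is comparing infima over nested sets in the correct direction. There is no substantive obstacle here — the entire content lives in the one-line signed-measure manipulation $(1-\epsilon)(P-P')=\epsilon(\tilde P'-\tilde P)$ — so the main ``difficulty'' is merely to state these elementary steps cleanly.
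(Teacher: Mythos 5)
Your proof is correct. The logical skeleton is the same as the paper's---both arguments reduce the lemma to showing that any feasible $P'$ in the DORO infimum satisfies $\tv(P,P')\le\frac{\epsilon}{1-\epsilon}$, and then invoke monotonicity of the infimum over nested sets---but the way you obtain that containment is technically different. The paper works with $U(z)=\min\{P(z),P'(z)\}$, shows pointwise that $(1-\epsilon)U(z)+\epsilon\tilde P(z)+\epsilon\tilde P'(z)\ge\ptrain(z)$, integrates to get $\int U\ge\frac{1-2\epsilon}{1-\epsilon}$, and uses the identity $\tv(P,P')=1-\int\min\{P,P'\}$. You instead subtract the two Huber decompositions to get the signed-measure identity $(1-\epsilon)(P-P')=\epsilon(\tilde P'-\tilde P)$, take $L^1$ norms, and read off $\tv(P,P')=\frac{\epsilon}{1-\epsilon}\tv(\tilde P',\tilde P)\le\frac{\epsilon}{1-\epsilon}$. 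Your route is a bit cleaner: it avoids the $\min$ representation of total variation, produces an exact equality rather than an inequality chain, and makes it transparent where the slack comes from (namely $\tv(\tilde P',\tilde P)<1$). Both proofs are equally elementary and there is no gap in yours.
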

\begin{proof}
    For any $P'$ such that $\ptrain = (1-\epsilon) P' + \epsilon \tilde{P'}$ for some $\tilde{P'}$. Let $U(z) = \min \oset{P(z), P'(z)}$ for all $z \in \gX \times \gY$.
    Then, we have
    \begin{equation*}
        (1-\epsilon) U(z) + \epsilon \tilde{P}(z) + \epsilon \tilde{P'}(z) \geq \ptrain(z) \quad \text{for any } z \in \gX \times \gY ,
    \end{equation*}
    as both $\tilde{P}(z)$ and $\tilde{P'}(z)$ are non-negative. Integrating both sides produces $\int_{\gX \times \gY} U(z) dz \geq \frac{1-2\epsilon}{1-\epsilon}$, which implies that $\tv (P,P') \leq \frac{\epsilon}{1-\epsilon}$. Thus, we have
    \begin{equation*}
        \gR_{D,\rho}(\theta;P') \geq \inf_{P''}\oset{ \gR_{D,\rho}(\theta,P''):  \tv (P, P'') \leq \frac{\epsilon}{1-\epsilon} },
    \end{equation*}
    which combined with the definition of the DORO risk proves the result.
\end{proof}

With the Cressie-Read family, the DORO risk has the following dual formula:
\begin{proposition}[Proof in \Cref{app:proof-prop-doro-dual}]
\label{prop:doro-dual}
Let $\ell$ be a continuous non-negative loss function, and suppose $\ptrain$ is a continuous distribution. 
Then, we have
\begin{align*}
 \gR_{D,\rho,\epsilon}(f;\ptrain) =  \inf_\eta \oset{c_\beta(\rho) \underset{(X,Y) \sim \ptrain}{\E} \sbrac{ \paren{ \ell - \eta }_+^{\beta_*} }{\Pr_{(X', Y') \sim \ptrain} \oset{ \ell > \ell'} \ge \epsilon}^{\frac{1}{\beta_*}} + \eta}  ,
\end{align*}
where $\ell = \ell(f(X), Y)$, and $\ell' = \ell(f(X'), Y')$.
\end{proposition}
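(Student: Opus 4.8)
The plan is to reduce the DORO risk to an ordinary Cressie--Read DRO risk against a truncation of $\ptrain$, by solving the inner contamination optimization in closed form. First I would substitute the dual characterization \Cref{eqn:dual-cressie-read} into the definition \Cref{eqn:doro-risk-def}, turning $\gR_{D,\rho,\epsilon}(f;\ptrain)$ into a double infimum: an outer one over the candidate clean distribution $P'$ subject to $\ptrain = (1-\epsilon)P' + \epsilon\tilde P'$ for some probability measure $\tilde P'$, and an inner one over $\eta\in\R$. Since $c_\beta(\rho)>0$ and $\beta_*\ge 1$, the map $t\mapsto c_\beta(\rho)\,t^{1/\beta_*}+\eta$ is continuous and strictly increasing on $[0,\infty)$, hence commutes with infima; exchanging the two infima then reduces everything to evaluating, for each fixed $\eta$,
\[
\inf_{P'}\;\E_{(X,Y)\sim P'}\brac{\paren{\ell(f(X),Y)-\eta}_+^{\beta_*}},
\]
where $P'$ runs over probability measures of the form $\tfrac{1}{1-\epsilon}(\ptrain-\epsilon\tilde P')$ with $\tilde P'$ a probability measure and $\epsilon\tilde P'\le\ptrain$ pointwise --- this last inequality being exactly what forces $P'$ to be a nonnegative measure, so it captures all feasibility constraints of Huber's model.

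Next I would solve this inner problem by a bathtub-principle (Neyman--Pearson-type) rearrangement. Writing $g=(\ell-\eta)_+^{\beta_*}$, which is nonnegative and non-decreasing in the loss $\ell$, and using $(1-\epsilon)\E_{P'}[g]=\E_{\ptrain}[g]-\epsilon\E_{\tilde P'}[g]$, minimizing $\E_{P'}[g]$ is the same as maximizing $\E_{\tilde P'}[g]$ over all $\tilde P'$ whose density with respect to $\ptrain$ lies in $[0,1/\epsilon]$ and integrates to one. An optimal $\tilde P'$ saturates the density bound on the $\epsilon$-mass set where $g$ --- equivalently where $\ell$ --- is largest, i.e.\ on $\{\ell\ge c\}$ for the $(1-\epsilon)$-quantile $c$ of the loss under $\ptrain$; the hypothesis that $\ptrain$ and $\ell$ are continuous is precisely what makes this set have mass exactly $\epsilon$, so the optimum is feasible and attained. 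Substituting back, $(1-\epsilon)\E_{P'}[g]$ collapses to $\E_{\ptrain}[g\,\mathbf{1}\{\ell<c\}]$, whence $\E_{P'}[g]$ is the conditional expectation of $g$ over the complementary $(1-\epsilon)$-mass sublevel set of the loss --- exactly the conditional expectation in the statement. Re-inserting this into the outer infimum over $\eta$ (and noting the removal set, hence the conditioning event, does not depend on $\eta$ because $g$ is monotone in $\ell$ for every $\eta$) gives the claimed formula.

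The main obstacle is making the inner rearrangement rigorous: one must show that no feasible contamination $\tilde P'$ improves on the quantile truncation, and handle the degenerate possibilities (an atom of the loss distribution at the threshold, or a nontrivial region on which $g\equiv 0$). The continuity assumptions dispose of the first; the second is harmless because contaminations differing only on $\{g=0\}$ attain the same value, so the quantile truncation is still an optimum. A way to keep this at the level of elementary inequalities, avoiding extremality lemmas, is a direct exchange/transport argument: if some feasible $\tilde P'$ carried density $<1/\epsilon$ somewhere on $\{\ell\ge c\}$ while placing positive density below $c$, moving that mass upward onto $\{\ell\ge c\}$ would not decrease $\E_{\tilde P'}[g]$ and strictly increases it unless all the mass involved sits in $\{g=0\}$ --- so the quantile truncation is optimal. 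The remaining steps --- commuting the increasing map through the infimum and the $\tfrac{1}{1-\epsilon}$ normalization turning the truncated integral into a conditional expectation --- are routine bookkeeping.
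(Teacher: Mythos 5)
Your proof takes essentially the same route as the paper's: substitute the Cressie--Read dual, exchange the infima, and show the inner optimization over feasible clean distributions $P'$ is solved by truncating away the highest-loss $\epsilon$-mass of $\ptrain$, with the truncation threshold $\ell^*$ obtained from continuity of the loss under $\ptrain$ and independent of $\eta$ by monotonicity of $(\ell-\eta)_+^{\beta_*}$ in $\ell$. The only presentational difference is the argument for optimality of the truncation. The paper phrases it at the level of distribution functions: from $\ptrain=(1-\epsilon)P'+\epsilon\tilde P'$ it derives the uniform bound $P'(\ell\le \ell_0)\le \min\{1,\tfrac{1}{1-\epsilon}\ptrain(\ell\le\ell_0)\}$ for all $\ell_0$, shows $P^*$ attains it everywhere, and concludes via the tail-integral representation $\E_{P'}[(\ell-\eta)_+^{\beta_*}]=\int_0^\infty P'((\ell-\eta)_+^{\beta_*}>u)\,du$. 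You instead dualize through the contamination $\tilde P'$ and invoke a bathtub/Neyman--Pearson rearrangement subject to the density cap $d\tilde P'/d\ptrain\le 1/\epsilon$. These are two expressions of the same first-order stochastic-dominance fact, so the content is identical; your phrasing perhaps makes the ``why the optimizer is $\eta$-independent'' point slightly more explicitly, while the paper's CDF formulation dispenses with the transport-style exchange argument you gesture at.

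One thing to flag: both your derivation and the paper's proof condition on the event that $(X,Y)$ lies in the bottom $(1-\epsilon)$-mass of the loss distribution, i.e.\ $\Pr_{(X',Y')\sim\ptrain}\{\ell' > \ell\}\ge\epsilon$. The proposition as printed writes the inequality the other way, $\Pr\{\ell > \ell'\}\ge\epsilon$, which would condition on the complementary set. Your closing claim that this is ``exactly the conditional expectation in the statement'' therefore does not literally hold; what you prove matches the paper's proof and its Algorithm~\ref{alg:doro} (which sorts decreasing and skips the top $\lfloor\epsilon n\rfloor$ losses), so the discrepancy is evidently a sign typo in the statement rather than a flaw in either argument, but it is worth stating explicitly rather than asserting agreement.
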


With this dual formula, the DORO risk can be minimized using \Cref{alg:doro}.
For each batch of samples, this algorithm first sorts the samples by their training losses, and then finds the optimal $\eta^*$ in the above dual form.
For example, we can use Brent's method \cite{brent1971algorithm} to find $\eta^*$.
Then, this $\eta^*$ is fixed and $\theta$ is updated to minimize the dual form.
This algorithm is inspired by the ITLM algorithm \cite{shen2019learning}, in which an alternative approach to making DRO more robust to outliers was proposed---removing the potential outliers from the dataset via data preprocessing.
In comparison, DORO does not throw away any data.
In addition, preprocessing methods cannot cope with online data where new samples are received in a stream, but DORO is still feasible.

\begin{algorithm}[t]
\caption{Minimizing the DORO risk with $D_\beta$ Divergence}
\label{alg:doro}
\begin{algorithmic}
\State \textbf{Input:} Batch size $n$, outlier fraction $\epsilon$, minimal group size $\alpha$, initial model weight $\theta$
\For{each iteration}
\State Sample a batch $(x_1,y_1),\cdots,(x_n,y_n) \sim \ptrain$
\State Compute the sample losses: $\ell_i = \ell(f_\theta(x_i), y_i)$ for $i=1,\cdots,n$
\State Sort the sample losses: $\ell_{i_1} \geq \cdots \geq \ell_{i_n}$
\State Define $F(\theta,\eta) = c_\beta(\rho) \cdot [ \frac{1}{n-\lfloor \epsilon n \rfloor} \sum_{j=\lfloor \epsilon n \rfloor+1}^{n} (\ell(f_\theta(x_{i_j}), y_{i_j})-\eta)_+^{\beta_*}]^{\frac{1}{\beta_*}} +\eta$
\State Find $\eta^* = \argmin_{\eta} F(\theta,\eta)$
\State Update $\theta$ by one step to minimize $\ell(\theta)= F(\theta,\eta^*)$ with some gradient method
\EndFor
\end{algorithmic}
\end{algorithm}

\paragraph{Theoretical guarantee.}
We now show that the DORO risk is a surrogate of the worst-group risk, meaning that the DORO risk is an upper bound.
This result parallels \Cref{cor:surrogate-dro} in the uncontaminated setting, and guarantees that minimizing the DORO risk over $\ptrain$ efficiently minimizes $\gR_{\max}$ over $P$.

\begin{theorem}[Proof in \Cref{app:proof-thm-doro-guarantee}]
\label{thm:doro-guarantee}
Let $\alpha = \min_{k=1,\cdots,K}P(\gD_k)$, and $\rho = \frac{1}{2}\paren{\frac{1}{\alpha}-1}^2$. Suppose $\ptrain$ satisfies the Huber's $\epsilon$-contamination model.
Suppose $\ell$ is a non-negative loss function with a uniformly bounded second moment: $\E_{(X,Y) \sim P} \brac{\ell(f_\theta(X), Y)^2} \le \sigma^2$ for all $\theta$. Then, we have
\begin{equation}
\label{eqn:thm-doro-guarantee}
\begin{aligned}
    \gR_{\max}(f_\theta;P) & \le \max \oset{ 3 \cvar_{\alpha,\epsilon}(f_\theta;\ptrain), 3 \alpha^{-1} \sigma \sqrt{\frac{\epsilon}{1-\epsilon}} } \\ 
    & \le \max \oset{ 3 D_{\chi^2,\rho,\epsilon}(f_\theta;\ptrain), 3 \alpha^{-1} \sigma \sqrt{\frac{\epsilon}{1-\epsilon}} }  .
\end{aligned}
\end{equation}
\end{theorem}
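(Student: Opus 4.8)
The plan is to split \Cref{eqn:thm-doro-guarantee} into its two inequalities and dispatch them separately, with the first (bounding $\gR_{\max}$ by the CVaR--DORO risk) carrying essentially all of the work. For the second inequality, I would observe that $\cvar_{\alpha,\epsilon}(f_\theta;\ptrain)$ is by definition the infimum of $\gR_{D_\infty,-\log\alpha}(f_\theta;P')$ over all $P'$ admitting a Huber decomposition $\ptrain=(1-\epsilon)P'+\epsilon\tilde P'$, and likewise $D_{\chi^2,\rho,\epsilon}(f_\theta;\ptrain)$ is the infimum of $\gR_{D_{\chi^2},\rho}(f_\theta;P')$ over the \emph{same} feasible set of $P'$. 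The proof of \Cref{cor:surrogate-dro} establishes, pointwise in the distribution, that $\cvar_\alpha(f_\theta;P')\le\gR_{D_{\chi^2},\rho}(f_\theta;P')$ whenever $\rho=\tfrac12(\tfrac1\alpha-1)^2$; taking the infimum over the common feasible set preserves the inequality, so $\cvar_{\alpha,\epsilon}(f_\theta;\ptrain)\le D_{\chi^2,\rho,\epsilon}(f_\theta;\ptrain)$, which upgrades any upper bound stated in terms of the former into one in terms of the latter.

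For the first inequality, I would reduce to a statement about total-variation perturbations via \Cref{lem:robust-dro}: every $P'$ feasible for the Huber decomposition satisfies $\tv(P,P')\le\delta:=\tfrac{\epsilon}{1-\epsilon}$, so it suffices to prove that for every $P''$ with $\tv(P,P'')\le\delta$ one has $\gR_{\max}(f_\theta;P)\le\max\{3\cvar_\alpha(f_\theta;P''),3\alpha^{-1}\sigma\sqrt\delta\}$ and then take the infimum over $P''$. Fix a group $\gD_k$, write $p_k=P(\gD_k)\ge\alpha$ and $\ell=\ell(f_\theta(X),Y)$, and let $\eta^\star=q_{f_\theta;P''}(\alpha)\ge0$ be the $\alpha$-quantile of $\ell$ under $P''$, so that by \Cref{eqn:cvar-dual} the CVaR dual infimum is attained and $\cvar_\alpha(f_\theta;P'')=\eta^\star+\alpha^{-1}\E_{P''}[(\ell-\eta^\star)_+]$. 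Using the elementary bound $\ell\le\eta^\star+(\ell-\eta^\star)_+$ (valid since $\ell,\eta^\star\ge0$) together with $\mathbf 1_{\gD_k}/p_k\le\mathbf 1/\alpha$ yields $\E_P[\ell\mid\gD_k]\le\eta^\star+\alpha^{-1}\E_P[(\ell-\eta^\star)_+]$.

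The crux is transferring $\E_P[(\ell-\eta^\star)_+]$ to $\E_{P''}[(\ell-\eta^\star)_+]$ across the TV ball even though $\ell$ is unbounded; this is where the uniform second-moment assumption enters. Writing $g=(\ell-\eta^\star)_+\le\ell$ and $\nu=(P-P'')_+$, a nonnegative measure with Radon--Nikodym derivative $\tfrac{d\nu}{dP}\le 1$ and total mass $\tv(P,P'')\le\delta$, I would bound $\E_P[g]-\E_{P''}[g]\le\int g\,d\nu=\int g\,\tfrac{d\nu}{dP}\,dP\le\sqrt{\E_P[g^2]}\,\sqrt{\E_P[(\tfrac{d\nu}{dP})^2]}\le\sigma\sqrt\delta$, using Cauchy--Schwarz, $(\tfrac{d\nu}{dP})^2\le\tfrac{d\nu}{dP}$, and $\E_P[g^2]\le\E_P[\ell^2]\le\sigma^2$. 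Combining the pieces gives $\E_P[\ell\mid\gD_k]\le\eta^\star+\alpha^{-1}\E_{P''}[g]+\alpha^{-1}\sigma\sqrt\delta=\cvar_\alpha(f_\theta;P'')+\alpha^{-1}\sigma\sqrt\delta\le 2\max\{\cvar_\alpha(f_\theta;P''),\alpha^{-1}\sigma\sqrt\delta\}$, which is within the asserted factor $3$ via $a+b\le 2\max\{a,b\}$; maximizing over $k$ and then infimizing over $P''$ completes the first inequality. The main obstacle is precisely this moment-controlled transfer of the truncated loss across the TV ball — everything else is either bookkeeping with the CVaR dual or a direct invocation of the already-proven \Cref{lem:robust-dro} and \Cref{cor:surrogate-dro} — and one must additionally check that the quantile $\eta^\star$ is well defined and nonnegative and that all the infimum and feasible-set manipulations are monotone in the intended direction.
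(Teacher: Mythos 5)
Your proposal is correct and reaches the theorem (in fact with a better constant), but it replaces the paper's key technical engine with a different and more elementary one. The paper's route goes through \Cref{lem:key-technical}, a Markov-inequality-plus-optimal-truncation bound which (instantiated with $\beta_*=1$, $2k=2$) gives $\E_P[(\ell-\eta)_+]\le\E_{P'}[(\ell-\eta)_+]+2\sigma\sqrt{\tv(P,P')}$; it then invokes \Cref{cor:surrogate-dro} to get $\gR_{\max}(\theta;P)\le\cvar_\alpha(\theta;P)$ and closes the argument with a case split on whether $\gR_{\max}(\theta;P)>3\alpha^{-1}\sigma\sqrt{\epsilon/(1-\epsilon)}$. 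You instead bound $\E_P[g]-\E_{P''}[g]$ for $g=(\ell-\eta^\star)_+$ directly by Cauchy--Schwarz against $\nu=(P-P'')_+$, using $(d\nu/dP)^2\le d\nu/dP$, which gives the sharper factor $\sigma\sqrt{\tv}$ in place of $2\sigma\sqrt{\tv}$. You also choose the quantile $\eta^\star$ under $P''$ rather than under $P$, so that $\E_P[\ell\mid\gD_k]$ is bounded by $\cvar_\alpha(\theta;P'')$ in a single chain, dispensing with both the separate surrogate inequality $\gR_{\max}\le\cvar_\alpha(P)$ and the case split. The net effect is a shorter argument yielding a constant of $2$ rather than the paper's $3$. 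The trade-off is generality: \Cref{lem:key-technical} handles arbitrary $\beta_*\ge 1$ (useful for other Cressie--Read members), whereas your Cauchy--Schwarz step is tailored to $\beta_*=1$; for this theorem that generality is not needed, since the $\chi^2$ half of the bound is obtained exactly as the paper does, by infimizing $\cvar_\alpha(\theta;P')\le\gR_{D_{\chi^2},\rho}(\theta;P')$ over the common feasible set. One small polish: rather than committing to a specific $\eta^\star$ up front and asserting that the CVaR dual infimum is attained there, keep $\eta\ge 0$ general through the chain and take the infimum over $\eta$ at the very end; this sidesteps the quantile-attainment issue you flag and the rest of your checklist (nonnegativity of $\eta$, $g\le\ell$, monotonicity of the infimum over the feasible set) is exactly what needs to be checked.
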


\paragraph{Empirical evaluation.}

We test DORO on three datasets: COMPAS, CelebA \cite{liu2015deep} and CivilComments-Wilds \cite{borkan2019nuanced,pmlr-v139-koh21a}.
CelebA is a facial recognition dataset, where the target is whether the person has blond hair.
CivilComments-Wilds is a toxicity identification NLP dataset, where the target is whether an online post contains toxic contents.
All targets are binary.
For COMPAS, we randomly sample 70\% of the samples to be the training data.
The other two datasetse have official train-val-test splits.
On COMPAS we define 4 groups, and on each of the other datasets we define 16 groups.

We use a two-layer ReLU-activated feed-forward neural network on the COMPAS dataset, a ResNet18 \cite{He_2016_CVPR} on CelebA, and a BERT-base-uncased \cite{devlin2018bert} on CivilComments-Wilds.
Each algorithm is run 300 epochs on COMPAS, 30 epochs on CelebA and 5 epochs on CivilComments-Wilds.
The best model is selected based on the worst-group accuracy on the validation set.
Note that in reality, the worst-group accuracy is not available because we might now know the group membership of each sample.
Therefore, this model selection strategy is an oracle one.
We will discuss more on this point later.

\begin{table}[t]
\begin{center}
\begin{tabular}{ cccc } 
 \toprule
 \textbf{Dataset} & \textbf{Method} & \textbf{Average Accuracy}  & \textbf{Worst-group Accuracy}\\ 
 \midrule
 \multirow{5}{*}{COMPAS} & ERM & $69.31 \pm 0.19$  & $68.83 \pm 0.18$ \\ 
 & CVaR & $68.52 \pm 0.31$ & $68.22 \pm 0.30$ \\ 
 & CVaR-DORO & $69.38 \pm 0.10$ & $69.11 \pm 0.05$ \\ 
 & $\chi^2$-DRO & $67.93 \pm 0.40$ & $67.32 \pm 0.60$  \\ 
 & $\chi^2$-DORO & $69.62 \pm 0.16$  & $69.22 \pm 0.11$ \\ 
 \midrule
 \multirow{5}{*}{CelebA} & ERM & $95.01 \pm 0.38$ & $53.94 \pm 2.02$ \\ 
 & CVaR & $82.83 \pm 1.33$ & $66.44 \pm 2.34$ \\ 
 & CVaR-DORO & $92.91 \pm 0.48$ & $72.17 \pm 3.14$  \\ 
 & $\chi^2$-DRO & $83.85 \pm 1.42$ & $67.76 \pm 3.22$ \\ 
 & $\chi^2$-DORO & $82.18 \pm 1.17$ & $68.33 \pm 1.79$ \\ 
 \midrule
 \multirow{5}{*}{CivilComments-Wilds} & ERM & $92.04 \pm 0.24$ & $64.62 \pm 2.48$ \\ 
 & CVaR & $89.11 \pm 0.76$ & $63.90 \pm 4.42$ \\ 
 & CVaR-DORO & $90.45 \pm 0.70$ & $68.00 \pm 2.10$ \\ 
 & $\chi^2$-DRO & $90.08 \pm 0.92$ & $65.55 \pm 1.51$ \\ 
 & $\chi^2$-DORO & $90.11 \pm 1.09$ & $67.19 \pm 2.51$ \\ 
 \bottomrule
\end{tabular}
\end{center}
\caption{Average/worst-group test accuracies of the selected models. (\%)}
\label{tab:result-doro}
\end{table}

\Cref{tab:result-doro} reports the 95\% confidence intervals of the mean test accuracies on each dataset.
For DRO and DORO, we do a grid search to pick the best $\alpha$ and $\epsilon$ that achieve the best worst-group accuracy.
Each experiment is repeated 10 times on COMPAS and CelebA, and 5 times on CivilComments-Wilds with different random seeds.
From the table, we can conclude that DORO consistently outperforms DRO in terms of the average and worst-group test accuracies.

\begin{table}[t]
\begin{center}
\begin{tabular}{c|cc}
\toprule
\textbf{Method} & \textbf{Average} & \textbf{Worst-group}   \\
\midrule
ERM & $0.73 \pm 0.06$ & $8.59 \pm 0.90$  \\ 
CVaR & $11.53 \pm 1.72$ & $21.47 \pm 0.71$ \\ 
CVaR-DORO & $4.03 \pm 1.57$ & $16.84 \pm 0.91$ \\ 
$\chi^2$-DRO & $8.88 \pm 2.98$ & $19.06 \pm 1.18$ \\ 
$\chi^2$-DORO & $1.60 \pm 0.34$ & $13.01 \pm 1.40$ \\ 
\bottomrule
\end{tabular}
\end{center}
\caption{Standard deviations of test accuracies during training on CelebA.}
\label{tab:acc-std-celeba}
\end{table}

Next, we show that DORO enhances training stability.
We compute the standard deviations of the average and worst-group test accuracies across epochs during training on CelebA, and the results are reported in \Cref{tab:acc-std-celeba}. 
We can see that DORO lowers the standard deviations, which means that its training is more stable.

\paragraph{Difficulty of model selection.}
In our experiments, we select the best models based on their worst-group validation accuracies, which is an oracle strategy.
However, model selection without access to the worst-group accuracy is extremely hard.
We tried the following three strategies of model selection in our experiments:
\begin{itemize}
    \item Max average accuracy
    \item Min CVaR risk
    \item Min CVaR-DORO risk
\end{itemize}

All of these strategies are significantly worse than the oracle strategy.
The main reason is that though we have shown that CVaR and CVaR-DORO risks are surrogates of the worst-group risk, they do not necessarily have a monotonic relationship.
Therefore, we pose model selection in this scenario as an open problem.

To summarize this chapter, we first introduced the most popular solutions to subpopulation shift---reweighting and DRO.
Then, we demonstrated two issues they have.
First, they might not be able to improve over ERM because the models they lead to are too similar. Thus, a large regularization, early stopping or a different loss function (such as the polynomially tailed loss) is neccessary for them to work.
Second, DRO is very sensitive to outliers because it places large weights on them.
To solve this problem, we proposed the DORO risk which avoids the potential outliers.

The key takeaway from this chapter is that generalization under distribution shift, or \textbf{distributionally robust generalization (DRG)}, is extremely hard, much harder than DRO.
Over the past decade a lot of methods for DRO have been proposed, but whether these models can lead to better DRG is questionable.
In fact, as mentioned earlier, \cite{gulrajani2021in} showed that many of these methods do not have better DRG.
The same applies to representation learning.
For example, upweighting the data in a certain domain during pretraining does not necessarily lead to a higher downstream performance on that domain.
Thus, we need to be very cautious of our heuristics.

\chapter{Conclusion}
\label{chap:conclusion}

Here is a summary of the key results from the contexture theory:
\begin{enumerate}
    \item Representations are learned from the association between input $X$ and a context variable $A$, and we call this association the contexture.
    \item Learning the contexture, or extracting the top-$d$ eigenspace, can preserve the most information of $T_{P^+}$, the expectation operator.
    \item The top-$d$ eigenspace can be extracted by training a large encoder to optimize certain variational objectives. This is how deep representation learning works.
    \item Learning the contexture is optimal if the task is compatible with the context.
    \item Making models larger inevitably produces diminishing returns, and creating better contexts is necessary for further advancements of pretraining.
    \item A good context should have a moderate association between $X$ and $A$.
    \item Mixing multiple existing contexts with convolution, convex combination and concatenation can lead to better contexts.
    \item The context complexity affects the sample complexity of representation learning.
    \item Learning the contexture can be viewed as STKR with the truncation function.
    \item Analyzing the distribution shift from pretraining to downstream is very hard.
\end{enumerate}

Let us conclude this thesis by discussing its limitations, and posing some open problems for future work.

\paragraph{Effect of optimization and model architecture.}
Throughout this thesis, we did not analyze the effect of optimization and model architecture on representation learning, though in \Cref{sec:scaling-laws} we showed that scaling up the model size brings the learned representation closer to the top-$d$ singular functions of $\tp$.
In reality, the \textit{implicit bias} of optimization and the \textit{inductive bias} of model architecture are both very important to the encoder.
For optimization, \cite{cohen2021gradient} showed that if the model is trained with popular gradient methods such as gradient descent or Adam \cite{DBLP:journals/corr/KingmaB14}, then after a sufficient amount of training time, the model weights will oscillate around what they called the \textit{edge of stability}, instead of converging to any optimal weights.
Thus, the first open problem is whether the representation is always close to the top-$d$ singular functions when it is oscillating.
And even better, can we characterize this oscillating representation as a dynamical system?
The inductive bias of the model architecture contains our prior knowledge about the task, and thus should be considered as a part of the context.
For example, in \Cref{sec:learn-mixture} we proved that when the model is an encoder composed with a feature map, then the representation trained via supervised learning will be the contexture of the convolution of two contexts---the feature map and the label.
Thus, the second open problem is how to express the inductive bias of any arbitrary model architecture as a context, and how the context affects the learned representation.

\paragraph{Context scaling.}
The contexture theory makes an important prediction that the next major breakthrough in pretraining will be the result of context scaling, where a much better and more complex context is obtained from the real world rather than human heuristics.
For example, base large language models are pretrained with masked token prediction.
To further improve these models, RLHF \cite{ouyang2022training} created new contexts by collecting human preferences of different completions of the same prompt.
Such large-scale data collection serves as a major source of new contexts, and deep learning has been proved to be quite capable of finding the patterns within these contexts.
For example, AlphaFold \cite{jumper2021highly} can learn from the context between protein sequences and their structures through data collected from lab experiments.
The major downside is that these experiments are usually quite expensive, and could sometimes take years to conduct.

Another source of new contexts that has been explored quite a lot recently is multi modalities \cite{liang2024foundations}.
In \Cref{sec:svme-spin} we showed that multi-modal models such as CLIP can learn from the context between different modalities.
Apart from image and text, there are many modalities to be explored, such as videos, tables, graphs, etc. The problem is how to merge multiple modalities through the contexture theory in order to learn a ``world model''.

\paragraph{Towards system 2 thinking (reasoning).}

It was mentioned in the introduction that the contexture learning only covers system 1 thinking, that is the fast, automatic and associative thinking.
Currently it is not applicable to system 2 thinking, such as logical reasoning.
Since system 2 thinking is slow and effortful for humans, we hypothesize that pretraining alone is not sufficient for learning system 2 thinking---post-training is necessary.
Indeed, this has been a hot topic recently in LLM research \cite{wei2022chain,jaech2024openai,guo2025deepseek}.

The high-level idea is to train a model that does not produce outputs so quickly.
In pretraining, $\Phi$ is nothing more than a function, and computing $\Phi(x)$ on a computer usually requires only a fraction of a second.
For system 2 thinking, we would like the computation of $\Phi(x)$ to take a much longer time, which is called \textit{test-time scaling}.
It usually involves a chain of thinking steps, such that $a_1 = \Phi(x), a_2 = \Phi(x;a_1), a_3 = \Phi(x;a_1,a_2)$ and so on, and the final output is $a_L$.
Here, $a_1,a_2,\cdots$ are the intermediate results, also known as a scratch pad.
Recent theoretical studies \cite{feng2023towards,li2024chain} also showed that such a sequential procedure is necessary for LLMs to carry out certain logical reasoning.

The open problem is: Will test-time scaling always improve the performance, or will it achieve diminishing returns after some point?
For example, when an LLM thinks for 3 minutes, its performance is usually significantly better than if it only thinks for 1 minute.
But is an LLM that thinks for 3 weeks significantly better than an LLM thinking for 1 week? What about 3 years versus 1 year?
If an LLM thinks for 3 years, will it be able to solve problems that no human beings can solve, such as proving $P \neq NP$?
To answer these questions, we need a scientific understanding of the mechanism of system 2 thinking, similar to what we have done for system 1 thinking in this thesis.

\begin{acknowledgments}
First and foremost, I would like to express my greatest gratitude to my two amazing PhD advisors Zico Kolter and Pradeep Ravikumar.
I formed the early ideas of the contexture theory in my fourth year after writing my two ICLR 2024 papers, but it was Pradeep who helped me refine these ideas and put them into a structured and well-written thesis. 
In the past two years, this thesis has been rewritten for almost twenty times, and Pradeep put great effort into this process.
Meanwhile, this thesis could never be in place without the guidance and encouragement from Zico. 
Zico focuses on the application side more so I spent less time with him, but I still remember him telling me ``I am not convinced by your theory, but I think this is great work and you should go for it''.
From Zico, I learned how non-theory ML people would view my work, which helped me make it more approachable and readable.

I am grateful to all my collaborators, especially Bingbin Liu and her co-advisor Andrej Risteski, who offered me valuable advice during the formation of my ideas.
Congratulations to Bingbin for getting married in June!
I also would like to thank Yuandong Tian for agreeing to be a member of my thesis committee, and wish him all the best at Meta GenAI.

Moreover, I would not have come to the US to pursue my PhD without Liwei Wang, my undergraduate advisor at Peking University.
In November 2018, I was at a loss about my future, so I went to Liwei's office to seek advice.
He told me firmly, ``for someone like you, there is no better option than pursuing a PhD in the US''.
He also taught me to only aim for two types of research: Those that lead to real life-improving products, and those that dig deep into the fundamentals of mathematics and science.
This has been my research philosophy during my PhD.
I want to thank my undergraduate mentor at UCLA Cho-Jui Hsieh, my mentor at MSRA Di He who is now a professor at Peking University, and my mentors at Amazon Stefan Schroedl, Aram Galstyan and Anoop Kumar.
I want to thank all members at Locus Lab and RAIL who provided feedback on my papers.
I also appreciate the assistance from my PhD program administrators Deb Cavlovich and Matthew Stewart, my OIE advisor Nick Hernandez, and all administrative staff at CMU.

I would like to thank all the students I have mentored during PhD: Yuzhe Lu, Zhenlin Wang, Yilong Qin, Roger Jin, Yash Gupta, Kai Yang, Chenhao Zhang, Hugo Contant, Zihao Ye and Allan Pais.
As the Chinese idiom goes, teaching others makes oneself better.
Working with these students helped me see my weaknesses, and made me a better mentor and a better person.

Finally, I would like to thank my family. My PhD has been quite crazy, starting in the middle of a global pandemic, followed by the advent of ChatGPT that changed everything.
I could not have persevered through all these challenges without the love and encouragement from my parents, as well as my other relatives and close friends.
This thesis is dedicated to my parents.

\textbf{Collaborations:} Chapter 2 is a joint work with Kai Yang. Chapters 3 and 4 is a joint work with Kai Yang, Che-Ping Tsai, Burak Var\i c\i , Hugo Contant and Chenhao Zhang. Chapter 5 is a joint work with Bingbin Liu, Rattana Pukdee, Roger Jin, Andrej Risteski and Maria-Florina Balcan.
Chapter 6 is a joint work with Chen Dan.
Xiaoyu Huang helped check the proofs.
All works are supervised by my advisors Zico Kolter and Pradeep Ravikumar.
\end{acknowledgments}

\appendix
\chapter{Proofs for Chapter 2}
\label{app:proof-chap2}

\section{Proof of Theorem \ref{thm:obj-classification}}
\label{app:proof-thm-obj-classification}

Let us first prove the following lemma.

\begin{lemma}
\label{lem:lem-supervised-kern}
    $\tp \Lambda \tpstar$ is the integral kernel operator of the following kernel
    \begin{equation*}
        k(x, x') =  \iint \kp(a, a') P^+(a| x) P^+(a'| x') da da'.
    \end{equation*}
\end{lemma}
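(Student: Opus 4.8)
The plan is to compose the three operators $\tp$, $\Lambda$, $\tpstar$ directly and read off the resulting kernel. First I would write down each operator explicitly: from the definitions, $(\tpstar f)(a) = \int f(x) \frac{P^+(a|x)\px(x)}{\pa(a)}\,dx$ for $f \in \lxp$; $(\Lambda g)(a) = \int g(a')\,\kp(a,a')\,d\pa(a')$ since $\Lambda$ is the integral operator of $\kp$ with respect to $\pa$; and $(\tp g)(x) = \int g(a)\,P^+(a|x)\,da$. Then I would apply these in sequence to an arbitrary $f \in \lxp$.

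The main computational step is evaluating $(\Lambda \tpstar f)(a)$. Substituting the formula for $\tpstar f$ yields the double integral $\iint \kp(a,a')\, f(x')\, \frac{P^+(a'|x')\px(x')}{\pa(a')}\,dx'\,d\pa(a')$. The one subtle point is that $d\pa(a') = \pa(a')\,da'$, so the factor $\pa(a')^{-1}$ cancels against the measure, leaving $\iint \kp(a,a')\, f(x')\, P^+(a'|x')\,\px(x')\,dx'\,da'$. Applying $\tp$ and then using Fubini to move $f(x')\px(x')$ outside, I obtain
\[
(\tp\Lambda\tpstar f)(x) = \int f(x') \brac{\iint \kp(a,a')\,P^+(a|x)\,P^+(a'|x')\,da\,da'}\,d\px(x'),
\]
which is exactly $T_k f$ with $k(x,x') = \iint \kp(a,a')\,P^+(a|x)\,P^+(a'|x')\,da\,da'$, as claimed. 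Since $k$ is symmetric and is a nonnegative combination (integral) of the p.s.d.\ kernel $\kp$, it is itself p.s.d., so this identifies $\tp\Lambda\tpstar$ with a kernel integral operator.

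The only remaining item is a short regularity remark to license the interchanges of integration: $\tka$ and $\tkx$ are assumed Hilbert--Schmidt, $\kp$ is a genuine p.s.d.\ kernel, and the probability spaces of $\px$ and $\pa$ are compact Hausdorff, so the integrands are integrable and Fubini applies; I would state this in one sentence rather than belabor it. There is no real obstacle here---the lemma is essentially a bookkeeping computation, and the one place to be careful is not to drop or double-count the $\pa(a')$ normalization that distinguishes the measure $d\pa$ from $da$.
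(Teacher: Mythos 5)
Your proposal is correct and follows essentially the same route as the paper: compose $\tp$, $\Lambda$, $\tpstar$ directly and read off the kernel. The only cosmetic difference is that the paper writes $(\tpstar h)(a') = \int h(x')\,P^+(x'|a')\,dx'$ and then invokes Bayes' rule to convert $P^+(x'|a')\pa(a')$ into $P^+(a'|x')\px(x')$, whereas you start from the Bayes-expanded form and cancel $\pa(a')$ against $d\pa(a')$; the computations coincide.
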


\begin{proof}

By definition, we have $(\tpstar h)(a') = \int h(x') P^+(x'|a')dx'$, which implies that
\begin{equation*}
\begin{aligned}
    (\Lambda \tpstar h)(a) &= \int (\tpstar h)(a') \kp(a, a') \pa(a') da'\\ 
    &= \iint h(x') P^+(x'|a') \kp(a,a') \pa(a') dx' da' \\
    &= \iint h(x') P^+(a'|x') \kp(a,a') \px(x') dx' da'.
\end{aligned}
\end{equation*}
This further implies that
\begin{equation*}
\begin{aligned}
    ( \tp \Lambda \tpstar h)(x) &=  \int (\Lambda \tpstar h)(a) P^+(a|x) da \\
    &= \iiint h(x')  \kp(a,a') P^+(a|x) P^+(a'|x') \px(x')  da da' dx' \\
    &= \int h(x') k(x,x') \px(x') dx',
\end{aligned}
\end{equation*}
as desired.
\end{proof}

Now, we prove \Cref{thm:obj-classification}.

\begin{proof}
Denote $\gR(\Phi, \mW) = \underset{(X,A) \sim P^+}{\E} \; \brac{   \norm{A - \mW \Phi(X)}_2^2 }$.
Assuming, without loss of generality, that $\E_{X \sim \px} [ \Phi_i  \Phi_j] = \delta_{ij}$; otherwise one can perform Gram-Schmidt process on $\Phi_i$ and change $\mW$ respectively. Thus, it amounts to minimize 
\begin{equation*}
\begin{aligned}
    \gR(\Phi, \mW) &=  \underset{X \sim \px}{\E} \; \underset{A \sim P^+(\cdot | X)}{\E} \; \brac{   \norm{ A - \mW  \Phi(X) }_2^2 } \\
    &= \underset{X \sim \px}{\E} \norm{\mW \Phi(X)}_2^2 - 2  \underset{(X,A) \sim P^+}{\E} \; \dotp{ A, \mW  \Phi(X)} + \underset{A \sim \pa}{\E} \norm{A}_2^2 \\
    &= \norm{\mW}_F^2 - 2  \underset{(X,A) \sim P^+}{\E} \; \dotp{ A, \mW  \Phi(X)} + \underset{A \sim \pa}{\E} \norm{ A}_2^2.
\end{aligned}
\end{equation*}

Denote $\mW = (w_{ij})_{1\leq i\leq d_A, 1\leq j \leq d}$. We have 
\begin{equation*}
    \frac{\partial  \gR}{\partial w_{ij}} = 2 w_{ij} - 2 \underset{(X,A) \sim P^+}{\E} \; \brac{ A_i  \Phi_j(X)}  ,
\end{equation*}
which implies that for a fixed $\Phi$, the optimal $\mW$ that minimizes this loss should satisfy
\begin{equation*}
    w_{ij} = \underset{(X,A) \sim P^+}{\E} \; \brac{ A_i  \Phi_j(X)}  .
\end{equation*}
Combining the minimizer of $\mW$ with $ \gR$ and notice that $\E_{A \sim \pa} \norm{ A}_2^2$ is a constant, it suffices to \textbf{maximize}
\begin{equation*}
\begin{aligned}
    F(\Phi) &= \sum_{i,j} \brac{\underset{(X,A) \sim P^+}{\E} \;  A_i  \Phi_j(X)}^2 \\
    &= \int \sum_j  \Phi_j(x_1)  \Phi_j(x_2) \dotp{ a_1,  a_2} \px(x_1) P^+(a_1 | x_1) \px(x_2) P^+(a_2 | x_2) dx_1 da_1 dx_2 da_2 \\
    &= \iint \sum_j  \Phi_j(x_1)  \Phi_j(x_2) \hat k(x_1, x_2) \px(x_1)  \px(x_2) dx_1 dx_2,
\end{aligned}
\end{equation*}
where 
\begin{align}
    \hat k(x_1, x_2)  &= \iint \dotp{ a_1,  a_2} P^+(a_1 | x_1) P^+(a_2 | x_2) da_1 da_2 \label{eqn:supervised-unbiased-kernel} \\
    &= \iint \sI [a_1 = a_2] P^+(a_1 | x_1) P^+(a_2 | x_2) da_1 da_2. \nonumber
\end{align}
Thus $\Phi^*$ is a minimizer of $\gR(\Phi)$ if $\Phi^*$ extracts the top-$d$ eigenfunctions of $\hat k(x_1, x_2)$. Combining with \Cref{lem:lem-supervised-kern} yields that $k_\Lambda(a, a') = \sI[a = a']$.
Furthermore, we have $(\Lambda g)(a) = \int g(a') k_\Lambda(a, a') d\pa(a') = g(a)\pa(a)$, as desired.

If all classes have the same size, we have $\pa(a) \equiv c \in (0, 1)$ where $c$ is a constant. Thus $(\Lambda g)(a) = g(a)\pa(a) = cg(a)$, which implies that $ \tp \Lambda \tpstar = c \tp \tpstar$. This concludes that $\tp \Lambda \tpstar$ and $\tp \tpstar$ share the same top-$d$ eigenfunctions.
\end{proof}

\section{Proof of Theorem \ref{thm:obj-classification-balanced}}
\label{app:proof-thm-obj-classification-balanced}
\begin{proof}
Denote $\gR(\Phi, \mW) = \E_{(X,A) \sim P^+}\brac{   \frac{1}{\sqrt{\pa (A)}}\norm{A - \mW \Phi(X)}_2^2 }$.
Assuming without loss of generality that $\E_{(X,A) \sim P^+}\brac{ \frac{1}{\sqrt{\pa (A)}} \Phi_i \Phi_j } = \sI[i=j]$;
otherwise we can perform Gram-Schmidt process on $\Phi_i$ and change the value of $\mW$ respectively. Thus, it amounts to minimize 
\begin{equation*}
\begin{aligned}
    \gR(\Phi, \mW) &=  \underset{(X,A) \sim P^+}{\E} \; \brac{  \frac{1}{\sqrt{\pa (A)}} \norm{ A - \mW  \Phi(X) }_2^2 } \\
    &= \underset{(X,A) \sim P^+}{\E} \; \brac{  \frac{1}{\sqrt{\pa (A)}} \norm{ \mW  \Phi(X) }_2^2 } \\
    & \quad - 2 \underset{(X,A) \sim P^+}{\E} \;  \dotp{ \frac{A}{\sqrt{\pa (A)}}, \mW  \Phi(X)} + \underset{A \sim \pa}{\E} \brac{\frac{ \norm{ A}_2^2}{\sqrt{\pa (A)}}} \\
    &= \norm{\mW}_F^2 - 2 \underset{(X,A) \sim P^+}{\E} \;  \dotp{ \frac{A}{\sqrt{\pa (A)}}, \mW  \Phi(X)} + \underset{A \sim \pa}{\E} \brac{\frac{ \norm{ A}_2^2}{\sqrt{\pa (A)}}}  .
\end{aligned}
\end{equation*}

Denote $\mW = (w_{ij})_{1\leq i\leq d_A, 1\leq j \leq d}$. We have 
\begin{equation*}
    \frac{\partial  \gR}{\partial w_{ij}} = 2 w_{ij} - 2 \underset{(X,A) \sim P^+}{\E} \; \brac{  \frac{A_i}{\sqrt{\pa (A)}} \Phi_j(X)}  ,
\end{equation*}
which implies that for a fixed $\Phi$, the minimizer of $\mW$ satisfies 
\begin{equation*}
    w_{ij} = \underset{(X,A) \sim P^+}{\E} \; \brac{  \frac{A_i}{\sqrt{\pa (A)}} \Phi_j(X)}  .
\end{equation*}

Combining the minimizer of $\mW$ with $ \gR$, it suffices to maximize
\begin{equation*}
    \gR' = \sum_{i,j} \brac{\underset{(X,A) \sim P^+}{\E} \;  \frac{A_i}{\sqrt{\pa (A)}} \Phi_j(X)  }^2 
    = \iint \sum_j  \Phi_j(x_1)  \Phi_j(x_2) \hat k(x_1, x_2) \px(x_1)  \px(x_2) dx_1 dx_2,
\end{equation*}
where 
\begin{equation*}
\begin{aligned}
    \hat k(x_1, x_2) &= \iint \frac{\dotp{ a_1,  a_2}}{\sqrt{\pa(a_1) \pa(a_2)}} P^+(a_1 | x_1) P^+(a_2 | x_2) da_1 da_2 \\
    & = \iint \frac{\sI[a_1 = a_2]}{\sqrt{\pa(a_1) \pa(a_2)}} P^+(a_1 | x_1) P^+(a_2 | x_2) da_1 da_2 \\
    & = \int \frac{P^+(a | x_1) P^+(a | x_2)}{\pa(a)} dy.
\end{aligned}
\end{equation*}
Thus $\Phi^*$ is a minimizer of $\gR(\Phi)$ if $\Phi^*$ extracts the top-$d$ eigenfunctions of $\hat k(x_1, x_2)$. Combining with \Cref{def:two-kernels,def:learn-contexture} yields the desired results.
\end{proof}

\section{Proof of Theorem \ref{thm:obj-regression}}
\label{app:proof-thm-obj-regression}

\begin{proof}
    If the linear predictor is unbiased, then the proof is the same as \Cref{app:proof-thm-obj-classification}.
    Consider the case where the linear predictor is biased. Then the pretraining objective can be rewritten as
    \begin{equation*}
        \gR(\Phi, \mW, \vb) =  \underset{(X,A) \sim P^+}{\E} \; \brac{   \norm{\tilde A - \mW \tilde \Phi(X) }_2^2 } + \norm{\hat \vb}_2^2  ,
    \end{equation*}
    where $\hat \vb = \mW \E_{X \sim \px}[\Phi(X)] - \E_{A \sim \pa}[A] + \vb$.
    Thus, if $\Gamma$ and $\mW$ are fixed, then the optimal $\vb^* = \E_{A \sim \pa}[A] - \mW \E_{X \sim \px}[\Phi(X)]$.

Assuming, without loss of generality, that $\E_{X \sim \px} [\tilde \Phi_i \tilde \Phi_j] = \delta_{ij}$; otherwise we can perform Gram-Schmidt process on $\tphi_i$ and change $\mW$ accordingly. Then, it amounts to minimize 
\begin{equation*}
    \gR(\Phi, \mW, \vb^*) = \norm{\mW}_F^2 - 2 \underset{(X,A) \sim P^+}{\E} \; \dotp{\tilde A, \mW \tilde \Phi(X)} + \underset{A \sim \pa}{\E} \norm{\tilde A}_2^2   ,
\end{equation*}
for which we have
\begin{equation*}
    \frac{\partial \gR}{\partial w_{ij}} = 2 w_{ij} - 2 \underset{(X,A) \sim P^+}{\E} \; \brac{\tilde A_i \tilde \Phi_j(X)}  .
\end{equation*}
Thus, the optimal $\mW$ is given by $w^*_{ij} = \E_{(X,A) \sim P^+}  \brac{\tilde A_i \tilde \Phi_j(X)} $. Note that $\E_{A \sim \pa}\norm{\tilde{A}}_2^2$ is a constant.
Thus, minimizing $\gR(\Phi, \mW^*, \vb^*)$ is equivalent to maximizing
\begin{equation*}
\begin{aligned}
    \gJ &= \sum_{i,j} \brac{\underset{(X,A) \sim P^+}{\E} \; \tilde A_i \tilde \Phi_j(X)}^2 \\
    &= \int \sum_j \tilde \Phi_j(x_1) \tilde \Phi_j(x_2) \dotp{\tilde a_1, \tilde a_2} \px(x_1) P^+(a_1 | x_1) \px(x_2) P^+(a_2 | x_2) dx_1 da_1 dx_2 da_2 \\
    &= \iint \sum_j \tilde \Phi_j(x_1) \tilde \Phi_j(x_2) \hat k(x_1, x_2) \px(x_1)  \px(x_2) dx_1 dx_2,
\end{aligned}
\end{equation*}
where 
\begin{equation*}
    \hat k(x_1, x_2) = \iint \dotp{\tilde a_1, \tilde a_2} P^+(a_1 | x_1) P^+(a_2 | x_2) da_1 da_2.
\end{equation*}
Then, we can complete the proof in the same way as \Cref{app:proof-thm-obj-classification}.
\end{proof}

\section{Proof of Theorem \ref{thm:node-repre}}
\label{app:proof-thm-node-repre}

\begin{proof}
Without loss of generality, suppose $\bPhi = \vzero$.
We have
\begin{equation*}
    \paren{\tp f} (u) = \sum_v f(v) \frac{w(u,v)}{D(u)}; \quad \dotp{\tp f, g}_{\px} = \sum_{u,v} f(u) g(v) \frac{w(u,v)}{D_{\Sum}} = \dotp{f, \tp g}_{\px},
\end{equation*}
which implies that $\tp$ is self-adjoint.
Therefore, the eigenfunctions of $\tp$ are the same as those of $\tpstar \tp$, with square root eigenvalues.

For the objective of \Cref{eqn:node-repre-obj}, we have
\begin{equation*}
\begin{aligned}
    \frac{1}{2} \E_{(u,v) \sim P^+} \brac{ \norm{\Phi(u) - \Phi(v)}_2^2 } & = \underset{(u,v) \sim P_w}{\E} \brac{\norm{\Phi(u)}_2^2 - \dotp{\Phi(u), \Phi(v)}} \\
    & = \sum_{i=1}^d \paren{ \norm{\phi_i}_\px^2 - \dotp{\phi_i, \tp \phi_i}_\px } \\
    & = d - \sum_{i=1}^d \dotp{\phi_i, \tp \phi_i}_\px .
\end{aligned}
\end{equation*}
Note that $(u,v)$ and $(v,u)$ can be drawn from $P^+$ with equal probability.
We conclude that $\Phi$ extracts the top-$d$ eigenfunctions of $\tp$, which are the same as the top-$d$ eigenfunctions of $\tpstar \tp$. This implies that $\tPhi$ learns the contexture of $\tp$.
\end{proof}

\section{Proof of Theorem \ref{thm:obj-multi-view}}
\label{app:proof-thm-obj-multi-view}

\begin{proof}
    \begin{enumerate}[label=(\roman*)]
    \item The spectral contrastive loss is 
    \begin{equation*}
    \gR(\Psi) = \underset{X \sim \px}{\E} \; \underset{A,A^+ \sim P^+(\cdot |X)}{\E} \brac{ - \dotp{\tPsi(A), \tPsi(A^+)} + \frac{1}{2} \underset{A^- \sim \pa}{\E} \brac{\dotp{\tPsi(A), \tPsi(A^-)} ^2} }   .
    \end{equation*}
    Suppose $\psi_i = \sum_{j\geq 0} c_{ij}\nu_j$ where $\nu_j$ is the ONB of $L^2(\pa)$ in \Cref{lem:duality}. Since $\nu_j$ is the ONB of $L^2(\pa)$ and $\nu_0 \equiv 1$, we can get for $j\geq 1$, $\E_{\pa} [\nu_j(a)] = \delta_{0,j} = 0$. Thus we can get $\tilde\psi_i = \psi_i - \E[\psi_i] = \sum_{j\geq 1} c_{ij}\nu_j$.
    
    Denote matrix $\mC = (c_{ij})_{1\leq i\leq d,j\geq 1}$, matrix $\mB = (b_{ij}) := \mC^\top \mC$, and matrix $\mD = \diag(s_1^2,s_2^2,\cdots)$ where $s_i$ is the singular value of $\tp$. We have 
    \begin{equation*}
    \begin{aligned}
        & \quad \underset{X \sim \px}{\E} \underset{A,A^+ \sim P^+(\cdot |X)}{\E} \brac{\dotp{\tPsi(A), \tPsi(A^+)}} \\
        &= \iiint \dotp{\tPsi(a), \tPsi(a^+)} P^+(a|x) P^+(a^+|x) \px(x) dxdada^+ \\
        &= \int \dotp{\int \tPsi(a) P^+(a|x) dy, \int \tPsi(a^+) P^+(a^+|x) da^+} p(x) dx\\
        &= \int \dotp{\tp \tPsi(x), \tp \tPsi(x)} p(x) dx = \|\tp \tPsi\|^2_{\px} \\
        &= \sum_i s_i^2 b_{ii}   ;
    \end{aligned}
    \end{equation*}
    and 
    \begin{equation*}
    \begin{aligned}
        \underset{A,A^- \sim \pa}{\E}  \brac{\dotp{\tPsi(A), \tPsi(A^-)} ^2}
        &= \iint \brac{\sum_{i=1}^d \tilde \psi_i(a)\tilde \psi_i(a^-)}^2 d\pa(a) d\pa(a^-)  \\
        &= \sum_{1\leq i,j\leq d} \brac{\int \tilde\psi_i(a)\tilde\psi_j(a) d\pa(a)}^2 \\
        &= \sum_{i,j} b_{ij}^2   .
    \end{aligned}
    \end{equation*}
    Thus, we have
    \begin{equation*}
    \gR(\Psi) = - \sum_i s_i^2 b_{ii} + \frac12 \sum_{i,j} b_{ij}^2 \\= \|\mB - \mD\|_F^2 - \|\mD\|_F^2  .
    \end{equation*}
    So if suffices to minimize $\|\mB - \mD\|_F^2$ where $\text{rank}(\mB) \leq d$. By Eckart-Young-Mirsky Theorem, we know the minimizer of $\mB$ is $\mB^* = \text{diag}(s_1^2,\cdots,s_d^2)$. Thus the minimizer of $\mC$ should be $\mC^* = \mU\diag(s_1,\cdots,s_d)$ where $\mU \in \sR^{d\times d}$ is an orthonormal matrix. This indicates the minimizer $\tPsi^*$ extracts the top-$d$ singular functions of $\tp$, and hence $\tPhi^*$ learns the contexture of $P^+$.
    
    \item Non-contrastive learning is done by minimizing
    \begin{equation*}
        \gR(\Psi) = \underset{X \sim \px}{\E} \; \underset{A,A^+ \sim P^+(\cdot |X)}{\E} \brac{ \norm{\Psi(A) - \Psi(A^+)}_2^2 }  , 
    \end{equation*}
    subject to $\Cov_{\pa}\brac{\Psi} = \mI$. 
    This amounts to minimizing $F(\Psi) = -\E_{A,A^+}[\dotp{\tPsi(A), \tPsi(A^+)}]$, because $\gR(\Psi) - 2 F(\Psi) = 2$ is a constant under the constraint $\Cov_{\pa}\brac{\Psi} = \mI$. 
    Suppose $\psi_i = \sum_{j\geq 0} c_{ij}\nu_j$ where $\nu_j$ is the ONB of $L^2(\pa)$ in \Cref{lem:spectral-decomposition}. Since $\E_{\pa} [\nu_j(a)] = \delta_{0,j}$, we can get $\tilde\psi_i = \psi_i - \E[\psi_i] = \sum_{j\geq 1} c_{ij}\nu_j$. 
    
   Using the same calculation as in (i), we have
    \begin{equation*}
        F(\Psi) = -\underset{X \sim \px}{\E} \underset{A,A^+ \sim P^+(\cdot |X)}{\E} \brac{\dotp{\tPsi(A), \tPsi(A^+)}} = -\|\tp \tPsi\|^2_{\px} = - \sum_i s_i^2 b_{ii}   .
    \end{equation*}
    By $\E_{\pa}\brac{\tilde\psi_i \tilde\psi_j} = \delta_{ij}$, we have
    \begin{equation*}
        \sum_{i} b_{ii} = \sum_{i,j} c_{ij}^2 = d   .
    \end{equation*}
    Since $\nu_i$ is an ONB of $L^2(\pa)$, $\tilde\psi_1, \cdots, \tilde\psi_d$ are orthogonal, we have
    \begin{equation}
        b_{ii} = \sum_{j=1}^d c_{ji}^2 = \sum_{j=1}^d \dotp{\tilde \psi_j, \nu_i}_{\pa}^2 \leq \|\nu_i\|_{\pa}^2 = 1  .
        \label{eqn:proof-prop2-projection}
    \end{equation}
    Thus, we conclude that
    \begin{equation*}
        \gL_{\rm{N}}(\Psi) + \sum_{i=1}^d s_i^2 = \sum_{i=1}^d s_i^2(1 - b_{ii}) - \sum_{i>d} s_i^2 b_{ii} \geq \sum_{i=1}^d s_d^2 (1 - b_{ii}) - \sum_{i>d} s_d^2 b_{ii} = 0,
    \end{equation*}
    which implies that $\gL_{\rm{N}}(\Psi) \geq - \sum_{i=1}^d s_i^2$. To attain equality, we will have $b_{ii} = 1$ for $i=1,\cdots,d$, and $b_{ii} = 0$ for $i \geq d+1$. By \Cref{eqn:proof-prop2-projection}, we can know $\Psi^*$ extracts the span of $\nu_1,\cdots,\nu_d$, indicating that $\tPsi^*$ extracts the top-$d$ singular functions of $\tp$ and $\tPhi^*$ learns the contexture of $P^+$.
\end{enumerate}
\end{proof}

\section{Proof of Theorem \ref{thm:svme-spin}}
\label{app:proof-thm-svme-spin}
\begin{proof}
    We first show that when $\Phi$ is fixed in SVME, the optimal $\Psi$ that minimizes the objective is $\Psi = \tpstar \Phi$.
    The SVME objective is
    \begin{equation*}
        \gL = \underset{(X,A) \sim P^+}{\E} \brac{ \norm{\Phi(X)}_2^2 + \Psi(A)^{\top} \paren{\Psi(A) - 2 \Phi(X)} } = \sum_{i=1}^d \oset{\norm{\phi_i}_{\px}^2 + \dotp{\psi_i, \psi_i - 2 \tpstar \phi_i}_{\pa} } ,
    \end{equation*}
    which implies that 
    \begin{equation*}
        \frac{\partial \gL}{\partial \psi_i} = 2 \psi_i - 2 \tpstar \phi_i  .
    \end{equation*}
    Setting it to zero yields $\psi_i = \tpstar \phi_i$. With this, the SVME objective becomes
    \begin{equation*}
        \gL(\Phi) = \sum_{i=1}^d \oset{\norm{\phi_i}_{\px}^2 - \norm{\tpstar \phi_i}_{\pa}^2 } = \sum_{i=1}^d \dotp{\phi_i, \phi_i - \tp \tpstar \phi_i}_{\px}
    \end{equation*}
    because $\tpstar$ is the adjoint of $\tp$. This $\gL$ is exactly the objective of KISE.
    Under the orthonormality constraint, minimizing $\gL(\Phi)$ is equivalent to maximizing $\sum_{i=1}^d \dotp{\phi_i, \tkx \phi_i}_{\px}$.
    Then, we can use the proof of \Cref{thm:obj-multi-view} (ii) to show that when $\Phi^*$ is the optimal solution, $\tPhi^*$ learns the contexture of $P^+$.    
\end{proof}

\section{Proof of Theorem \ref{thm:nef}}
\label{app:proof-thm-nef}

\begin{proof}
    Without loss of generality assume that $\tkx$ has at least $d+1$ positive eigenvalues (including $\lambda_0$).
First we prove a simple result in linear algebra: For any positive definite matrix $\mA$, if vectors $\vu_1,\cdots,\vu_k$ are pairwise orthogonal and satisfy $\vu_i^{\top} \mA \vu_i = 1$ for all $i$, then $\norm{\vu_1}_2^2 + \cdots + \norm{\vu_k}_2^2 \le \lambda_1^{-1} + \cdots + \lambda_k^{-1}$, where $0 < \lambda_1 \le \lambda_2 \le \cdots$ are the eigenvalues of $\mA$.
To prove this, let $\mU$ be the matrix whose columns are $\vu_1,\cdots,\vu_k$. Then, $\mU^{\top} \mU$ is a diagonal matrix and all elements on the diagonal of $\mU^{\top} \mA \mU$ are $1$.
Denote the sorted diagonal elements of $\mU^{\top} \mU$ by $d_1 \ge d_2 \ge \cdots \ge d_k \ge 0$.
Let $\mQ = \mU \mU^{\top}$.
A freshman linear algebra exercise states that the eigenvalues of $\mQ^2$ are $d_1^2,\cdots,d_k^2$, and the rest are all zeros.
Now consider $\mU^{\top} \mU \mU^{\top} \mA \mU$. It is easy to see that the diagonal elements of this matrix are $d_1,d_2,\cdots,d_k$. This implies that $\Tr(\mU^{\top} \mU \mU^{\top} \mA \mU) = \Tr(\mU^{\top} \mU) = d_1+\cdots+d_k$.
Thus, $\Tr(\mQ^2 \mA) = \Tr(\mQ) = d_1+\cdots+d_k$, because $\Tr(\mA \mB) = \Tr(\mB \mA)$.
By von Neumann's trace inequality, there is $\Tr(\mQ^2 \mA) \ge \lambda_1 d_1^2 + \cdots + \lambda_k d_k^2$.
So by Cauchy-Schwarz inequality, $(d_1+\cdots+d_k)^2 \le (\lambda_1^{-1}+\cdots+\lambda_k^{-1})\Tr(\mQ^2 \mA) = (\lambda_1^{-1}+\cdots+\lambda_k^{-1})(d_1+\cdots+d_k)$, which implies that $\norm{\vu_1}_2^2 + \cdots + \norm{\vu_k}_2^2 = d_1 + \cdots + d_k \le \lambda_1^{-1}+\cdots+\lambda_k^{-1}$. The equality is only attained when the Cauchy-Schwarz inequality attains equality, that is $d_i = \lambda_i^{-1}$ for all $i$.

For simplicity, we assume that the eigenvalues of $\tkx$ satisfy $\lambda_1 > \lambda_2 > \cdots > \lambda_d > 0$. If an eigenvalue has more multiplicity, the proof will be the same but much more verbose.
Then, $\mu_1,\cdots,\mu_d$ are fixed. Let $\tphi^*_i = \sum_k u_{ik} \mu_k$.
Then, \Cref{eqn:nef} is equivalent to the following optimization problem:
\begin{align*}
    \underset{u_{ik}}{\text{maximize}} \qquad &  \sum_{i=1}^d \sum_{k} \lambda_k u_{ik}^2 \\ 
    \text{s.t.} \qquad & \forall i \in [d]: \; \sum_k u_{ik}^2 = 1 ;\\ 
    & \forall 1 \le i < j \le d: \; \sum_k \lambda_k u_{ik}u_{jk} = 0 .
\end{align*}

Obviously, for any $\lambda_k = 0$, the optimal $u_{ik}$ should be zero. So without loss of generality, we assume that all $\lambda_k$ in the above problem are positive. Define matrix $\mU$ whose $(i,k)$-th element is $\sqrt{\lambda_k} u_{ik}$. Denote the rows of $\mU$ by $\vu_1,\cdots,\vu_d$.
Let $\mD = \diag \oset{\lambda_1^{-1},\lambda_2^{-1},\cdots}$.
Then, the constraints of the above problem become $\vu_i^{\top} \mD \vu_i = 1$ for all $i$, and $\vu_1,\cdots,\vu_d$ are pairwise orthogonal.
Thus, by the result we have just proved, the objective of this problem satisfies $\norm{\vu_1}_2^2 + \cdots + \norm{\vu_d}_2^2 \le \lambda_1 + \cdots + \lambda_d$. The equality is only attained when $\norm{\vu_i}_2^2 = \lambda_i$ for all $i \in [d]$, assuming that $\norm{\vu_1}_2 \ge \cdots \ge \norm{\vu_d}_2$.
This is only possible when $u_{11}^2 = u_{22}^2 = \cdots = u_{dd}^2 = 1$, which proves the result.
\end{proof}

\section{Proof of Theorem \ref{thm:obj-classification-omega}}
\label{app:proof-thm-obj-classification-omega}

\begin{proof}
    It suffices to show that $\tpywstar \tp \Lambda \tpstar \tpyw$ is the integral kernel operator of
    \begin{equation*}
        k(\omega, \omega') =  \iint \kp(y, y') \pyw(y| \omega) \pyw(y'| \omega') dy dy'.
    \end{equation*}
After that, we can follow the proof in \Cref{app:proof-thm-obj-classification} to prove the result.

Since $x$ determines $\omega,y$ at the same time, we have $y \perp \omega|x$. This implies that 
\begin{equation*}
    P(\omega|x) = P(\omega|x,y), \quad P(y|x) = P(y|x,\omega).
\end{equation*}
By definition, we have
\begin{equation*}
    (\tpyw h)(x') = \int h(\omega')P(\omega'|x')d\omega',
\end{equation*}
which implies that 
\begin{equation*}
\begin{aligned}
        (\tpstar \tpyw h)(y') &= \int (\tpyw h)(x')P_{X|Y}(x'|y')dx'
    = \iint h(\omega')P(\omega'|x')P_{X|Y}(x'|y')dx'd\omega'\\
    &= \iint h(\omega')P(\omega'|x',y')P_{X|Y}(x'|y')dx'd\omega'
     = \int h(\omega') P_{\omega|Y}(\omega'|y') d\omega'.
\end{aligned}
\end{equation*}
Thus, we have
\begin{equation*}
\begin{aligned}
    (\Lambda \tpstar \tpyw h)(y) &= \int (\tpstar \tpyw h)(y') \kp(y, y') \py(y') dy' \\
    &= \iint h(\omega') P_{\omega|Y}(\omega'|y') \kp(y,y') \py(y') d\omega' dy'.
\end{aligned}
\end{equation*}
This implies that
\begin{equation*}
\begin{aligned}
    (\tpywstar \tp \Lambda \tpstar \tpyw h)(\omega) &= \int (\tp \Lambda \tpstar \tpyw h)(x) P(x|\omega)dx \\
    &= \iint (\Lambda \tpstar \tpyw h)(y)P(y|x)P_{X|\omega}(x|\omega)dydx \\
    &= \iint (\Lambda \tpstar \tpyw h)(y)P(y|x,\omega)P_{X|\omega}(x|\omega)dydx \\
    &= \int (\Lambda \tpstar \tpyw h)(y) \pyw(y|\omega) dy \\
    &= \iiint h(\omega') P_{\omega|Y}(\omega'|y') \kp(y,y') \py(y') \pyw(y|\omega) d\omega' dy' dy \\
    &= \iiint h(\omega') \kp(y, y') \pyw(y|w) \pyw(y'|w') \pw(\omega') dy dy' d\omega',
\end{aligned}
\end{equation*}
as desired.
\end{proof}

\chapter{Proofs for Chapter 3}

\section{Proof of Theorem \ref{thm:two-conditions-fep}}
\label{app:proof-two-conditions-fep}
\begin{proof}
    Let $f^* = \sum u_i \mu_i$, and $g^* = \sum s_i u_i \nu_i$.
    For this $f^*$, the maximum in \Cref{eqn:def-compatibility} is attained by $g^*$, so it suffices to show that $g^*$ satisfies \Cref{eqn:thm-two-conditions-fep}. By Bayes rule, we have
    \begin{equation*}
        P^+(A'|A=a) = \int P^+(A'|X=x) P^+(x|A'=a) dx,
    \end{equation*}
    which implies that $P^+(A'=a'|A=a) = \ka(a,a') \pa(a')$. Therefore, we have
     \begin{align*}
    & \E_{X \sim \px} \E_{A,A' \sim P^+(\cdot |X)} \brac{g^*(A) g^*(A')} = \E_{A \sim \pa} \E_{A' \sim P(\cdot|A)} \brac{ g^*(A) g^*(A') } \\ 
     = \; & \E_A \brac{ g^*(A)  \int g^*(a') P(a' | A) da' }  = \E_A \brac{ g^*(A) \int g^*(a') \ka(a,a') \pa(a') da' }  = \dotp{ g^*, \tka g^* }_\pa .
    \end{align*}
     Since $\tka g^* = \tpstar \tp g^* = \sum s_i^3 u_i \nu_i$, \Cref{eqn:thm-two-conditions-fep} is equivalent to $\sum(s_i^2 - s_i^4) u_i^2 \le 2 \epsilon \sum s_i^2 u_i^2$.
     Meanwhile, we have $\sum s_i^2 u_i^2 \ge (1-\epsilon)^2 \sum u_i^2 \ge (1-2\epsilon) \sum u_i^2$. By Cauchy-Schwarz inequality, we have $(\sum s_i^4 u_i^2) (\sum u_i^2) \ge (\sum s_i^2 u_i^2)^2 \ge (1-2\epsilon) (\sum u_i^2) (\sum s_i^2 u_i^2) $, which proves \Cref{eqn:thm-two-conditions-fep}.
\end{proof}

\section{Proof of Theorem \ref{thm:top-d-optimal}}
\label{app:proof-top-d-optimal}
\begin{proof}
Since $\sspan(\Phi)$ is at most rank-$d$, thus there exists $f_1 \in \sspan\{ \mu_1,\cdots, \mu_{d+1}\}$ with $\|f_1\|_\px = 1$ that is orthogonal to $\sspan(\Phi)$. Thus there exists $f_1, f_2 \in \sspan\{ \mu_1, \cdots, \mu_{d+1} \}$ with $\|f_1\|_\px = \|f_2\|_\px = 1$, $f_1$ is orthogonal to $\sspan(\Phi)$ and $f_2 \in \sspan(\Phi)$ (thus $f_1 \perp f_2$), and $\mu_1 \in \sspan\{f_1, f_2\}$.  
Suppose $\mu_1 = \alpha_1 f_1 + \alpha_2 f_2$ (without loss of generosity, assuming $\alpha_1, \alpha_2 \in [0, 1]$) and denote $f_0 = \alpha_2 f_1 - \alpha_1 f_2$. Then $\| f_0 \|_\px = 1$ and $\langle \mu_1, f_0\rangle_\px = 0$. Since $f_1, f_2 \in \sspan\{\mu_1, \cdots, \mu_{d+1}\}$, we have $f_0 \in \sspan\{\mu_2, \cdots, \mu_{d+1}\}$ and thus $\E[f_0] = 0$.

Consider $f = \beta_1 \mu_1 + \beta_2 f_0 \in \fep$ where $\beta_1^2 + \beta_2^2 = 1$, $\beta_1,\beta_2 \in [0,1]$. 
Denote $f = \sum_{i\geq 1} u_i \mu_i$
Then, we have $\sum_i u_i^2 =1$ and
\begin{equation*}
    \beta_2^2 \leq \frac{s_1^2 - (1-\eps)^2}{s_1^2 - s_{d+1}^2}  \implies
    \sum_{i\geq 1} s_i^2 u_i^2 \geq s_1^2\beta_1^2 + s_{d+1}^2 \beta_2^2 = s_1^2 - (s_1^2 - s_{d+1}^2) \beta_2^2 \geq (1 - \eps)^2 \sum_i u_i^2    .
\end{equation*}
Since $f = (\alpha_1 \beta_1 + \alpha_2 \beta_2) f_1 + (\alpha_2 \beta_1 - \alpha_1 \beta_2) f_2$, the approximation error of $f$ is $(\alpha_1 \beta_1 + \alpha_2 \beta_2)^2$.
Define a function $F(\alpha_1) = \alpha_1 \beta_1 + \alpha_2 \beta_2 = \alpha_1 \beta_1 + \sqrt{1 - \alpha_1^2} \beta_2$ ($\alpha_1 \in [0,1]$).
We can show that dis function first increases and then decreases with $\alpha_1$.
Thus, $F(\alpha_1)^2 \geq \min\{ F(0)^2, F(1)^2\} = \min \{\beta_1^2, \beta_2^2\}$.
If  $\beta_2^2 =  \frac{s_1^2 - (1-\eps)^2}{s_1^2 - s_{d+1}^2} \leq \frac{1}{2}$, then the approximation error is always at least $\frac{s_1^2 - (1-\eps)^2}{s_1^2 - s_{d+1}^2}$.
To attain this lower bound, we must have $\sum_{i\geq 1} s_i^2 u_i^2 = s_1^2\beta_1^2 + s_{d+1}^2 \beta_2^2$. This implies that $f_1 = \mu_{d+1}$, indicating that $\sspan(\phi_1, \cdots, \phi_d) = \sspan(\mu_1, \cdots, \mu_d)$. Thus, $\Phi$ must learn the contexture of $\tp$.

On the other hand, if $\Phi$ learns the contexture, then the approximation of $\Phi$ on $f$ will be $A := \sum_{i \ge d+1} u_i^2$.
Then, we have
\begin{equation*}
    (1 - \eps)^2 \leq  \sum_{i \geq 1} s_i^2 u_i^2 \leq s_1^2\sum_{i=1}^d u_i^2 + s_{d+1}^2\sum_{i \geq d+1} u_i^2 = s_1^2 - (s_1^2 - s_{d+1}^2) A   ,
\end{equation*}
and this implies that 
\begin{equation*}
    A = \min_{\vw \in \R^d, \; b \in \R} \; \norm{\vw^{\top} \Phi + b - f}_\px^2\leq \frac{s_1^2 - (1-\eps)^2}{s_1^2 - s_{d+1}^2}  .
\end{equation*}
When $u_1^2 = 1 - \frac{s_1^2 - (1-\eps)^2}{s_1^2 - s_{d+1}^2} ,  u_{d+1}^2 = \frac{s_1^2 - (1-\eps)^2}{s_1^2 - s_{d+1}^2} $, the equality holds.
Thus, the lower bound $\frac{s_1^2 - (1-\eps)^2}{s_1^2 - s_{d+1}^2}$ is attained if and only if $\Phi$ learns the contexture.

The converse part is obvious because for any $\Phi$ we can simply choose $f = \beta_1 \mu_1 + \beta_2 f_0$ as defined above.
Then, we have $f \in \fep$, and the approximation error of $\Phi$ on this $f$ is at least $\frac{s_1^2 - (1-\eps)^2}{s_1^2 - s_{d+1}^2}$.
\end{proof}

\chapter{Proofs for Chapter 4}
\section{Proof of Theorem \ref{thm:kernel-convex-combination}}
\label{app:proof-thm-kernel-convex-combination}

\begin{proof}
    By the proof of \Cref{thm:svme-spin}, $\gR_j$ can be written as
    \begin{equation*}
        \gR_j = d - \E_{A_j} \norm{ \paren{\tpstar \tPhi}(A_j) }_2^2 = d - \sum_{i=1}^d \dotp{\tphi_i, T_{k_j} \tphi_i}_{\px}  .
    \end{equation*}
    Therefore, the weighted sum $\gL = \sum_j w_j \gR_j$ is equivalent to
    \begin{equation*}
        \gL = \sum_{j=1}^r w_j \paren{ d - \sum_{i=1}^d \dotp{\tphi_i, T_{k_j} \tphi_i}_{\px} } = d - \sum_{i=1}^d \dotp{\tphi_i, \sum_{j=1}^r \paren{ w_j T_{k_j} } \tphi_i }_{\px} ,
    \end{equation*}
    where $\sum_j w_j T_{k_j}$ is equal to the integral operator of the linearly combined kernel $\sum_j w_j k_j$.
    Thus, minimizing this $\gL$ subject to the orthonormality constraint will make $\tPhi$ learn the contexture, as shown in \Cref{thm:svme-spin}.
\end{proof}

\section{Proof of Theorem \ref{thm:convergence-nash}}
\label{app:proof-thm-convergence-nash}

\begin{algorithm}[t]
\caption{Solving the game \Cref{eqn:convex-combination-game} (Rewritten)}
\label{alg:bi-level-restate}
\begin{algorithmic}[1]
\Require Embedding dimension $d$, priors $P_1^+,\cdots,P_r^+$, step size $\eta$
\State Initialize: $\rmL^0\gets [0,\cdots,0]$, $\Phi: \gX \rightarrow \R^d$, $\Psi_k: \gA_k \rightarrow \R^d$ for $k \in [r]$
\For{$t=1,\cdots,T$}
    \State $W^t = \sum_{k} \exp(\eta \rmL^{t-1}_i)$; $\rw^t_k = \frac{\exp(\eta \rmL^{t-1}_k)}{W^t}$
    \State $\Phi^t,\Psi^t_1, \cdots, \Psi^t_r \gets \arg\min \sum_k \rw^t_k \gR_k$, and get corresponding loss $\gR^t_k$, $l^t = \sum_{k} \rw^t_k \gR^t_k$
    \State Update the loss vector $\rmL^t \gets \rmL^{t-1} + \gR^t$
\EndFor
\end{algorithmic}
\end{algorithm}

For ease of reading and better use of notations, we restate our algorithm in \Cref{alg:bi-level-restate} and use the notation defined there in our proof. It is easy to verify that they are equivalent. We first prove the following lemma.
\begin{lemma}
    Suppose $\gR^t_k \leq C$ holds for all $t,k$, for some constant $C$. If $\eta C< 1$, then for any $\vw\in\Delta^r$,
    \begin{equation}
        \sum_{t=1}^T \left( \sum_{k=1}^r w_k\gR_k^t \right) - \sum_{t=1}^t l^t \leq C^2T\eta + \frac{1}{\eta}\log r.
        \label{eqn:nash-potential}
    \end{equation}
    Moreover, suppose $T > \log r$ and $\eta = \frac{\sqrt{\log r}}{C\sqrt{T}}$, \Cref{eqn:nash-potential} becomes
    \begin{equation*}
        \sum_{t=1}^T \left( \sum_{k=1}^r w_k\gR_k^t \right) - \sum_{t=1}^T l^t \leq 2C\sqrt{T\log r}.
    \end{equation*}
    \label{lem:bilevel-convergence}
\end{lemma}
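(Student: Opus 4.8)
This is a standard online learning regret bound for the Hedge / Multiplicative Weights algorithm, adapted to the setting where the adversary's losses come from the inner minimization. I plan to prove it via the classic potential-function argument tracking $W^t = \sum_k \exp(\eta \rmL^{t-1}_k)$.

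\textbf{Plan.} First I would set up the potential $W^t$ and show that $W^1 = r$ (since $\rmL^0 = \vzero$). The core step is to bound the one-step multiplicative increase of the potential. Writing $W^{t+1} = \sum_k \exp(\eta \rmL^{t-1}_k)\exp(\eta \gR_k^t) = W^t \sum_k \rw^t_k \exp(\eta \gR_k^t)$, I would use the inequality $e^x \le 1 + x + x^2$ valid for $x \le 1$ (which holds here because $\eta \gR_k^t \le \eta C < 1$). This gives $\sum_k \rw^t_k \exp(\eta \gR_k^t) \le 1 + \eta \sum_k \rw^t_k \gR_k^t + \eta^2 \sum_k \rw^t_k (\gR_k^t)^2 \le 1 + \eta l^t + \eta^2 C^2$, using $\gR_k^t \le C$ and $l^t = \sum_k \rw^t_k \gR_k^t$. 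Then $\log W^{t+1} \le \log W^t + \log(1 + \eta l^t + \eta^2 C^2) \le \log W^t + \eta l^t + \eta^2 C^2$ via $\log(1+x) \le x$. Telescoping over $t = 1,\dots,T$ yields $\log W^{T+1} \le \log r + \eta \sum_{t=1}^T l^t + T\eta^2 C^2$.

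For the lower bound on $\log W^{T+1}$: for any fixed $\vw \in \Delta^r$, $W^{T+1} = \sum_k \exp(\eta \rmL^T_k) \ge \exp(\eta \sum_k w_k \rmL^T_k) = \exp\paren{\eta \sum_k w_k \sum_{t=1}^T \gR_k^t}$ by Jensen's inequality applied to the convex function $\exp$ (or simply because $\sum_k w_k \exp(\eta \rmL^T_k) \le \max_k \exp(\eta \rmL^T_k) \le \sum_k \exp(\eta \rmL^T_k)$ combined with convexity). Hence $\log W^{T+1} \ge \eta \sum_{t=1}^T \sum_k w_k \gR_k^t$. Combining the two bounds and dividing by $\eta$ gives
\[
\sum_{t=1}^T \paren{\sum_{k=1}^r w_k \gR_k^t} - \sum_{t=1}^T l^t \le C^2 T \eta + \frac{1}{\eta}\log r,
\]
which is \Cref{eqn:nash-potential}. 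Finally, substituting $\eta = \frac{\sqrt{\log r}}{C\sqrt{T}}$ (which satisfies $\eta C < 1$ since $T > \log r$) makes both terms equal to $C\sqrt{T\log r}$, giving the $2C\sqrt{T\log r}$ bound.

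\textbf{Then, deducing \Cref{thm:convergence-nash} from the lemma.} Divide \Cref{eqn:nash-potential} (the refined version) by $T$: $\sup_{\vw} \gL(\vw) - \frac{1}{T}\sum_t l^t \le \frac{2C\sqrt{\log r}}{\sqrt{T}}$. Now I would observe that by definition of $\Phi^t, \Psi^t_\bullet$ as the minimizer of $\sum_k \rw^t_k \gR_k$ for the weight $\rw^t$, we have $l^t = \min_{\Phi,\Psi_1,\dots,\Psi_r}\sum_k \rw^t_k \gR_k \le \max_{\vw}\min_{\Phi,\Psi_1,\dots,\Psi_r}\sum_k w_k \gR_k = \gL^*$, where the last equality is the minimax value identified in \Cref{thm:minimax-top-d} (von Neumann minimax with $\Phi$ allowed to be randomized). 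Averaging, $\frac{1}{T}\sum_t l^t \le \gL^*$, so $\sup_{\vw}\gL(\vw) \le \gL^* + \frac{2C\sqrt{\log r}}{\sqrt{T}}$. Since $\gL(\vw) \to \gL^*$ as the right side vanishes when $T \to \infty$, and the uniform distribution over $\Phi^1,\dots,\Phi^T$ realizes average loss $\frac{1}{T}\sum_t l^t$ against any $\vw$, this establishes convergence to the game value.

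\textbf{Main obstacle.} The calculations are routine; the only subtlety I anticipate is being careful about the direction of the minimax and making sure the inner-minimization-produces-adversary-losses structure lines up correctly with the standard Hedge setup (here the "losses" $\gR_k^t$ are what the $\vw$-player wants to \emph{maximize}, so Hedge is run to track the best weight in hindsight, and the per-round value $l^t$ is what the $\Phi$-player actually achieves). I would also double-check the boundedness hypothesis $\gR_k^t \le C$ is exactly what is needed for $e^x \le 1+x+x^2$ and that $\gR_k^t \ge 0$ (it is, being a squared-norm objective) so that no sign issues arise in the Jensen step.
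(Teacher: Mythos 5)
Your proof of the lemma is correct and follows essentially the same route as the paper: the multiplicative-weights potential $W^t$, the one-step bound via $e^x \le 1+x+x^2$ and $\log(1+x)\le x$, telescoping, and a lower bound on the final potential against any $\vw\in\Delta^r$. The paper lower-bounds $\Omega(T)$ by noting $\Omega(T)\ge \rmL_k^T$ for every $k$ and then taking the convex combination, whereas you invoke Jensen on $\exp$ directly — these are the same inequality stated two ways, so the arguments coincide.
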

\begin{proof}
    Consider the following potential function:
    \begin{equation*}
        \Omega(t) = \frac{1}{\eta}\log(W^t) = \frac{1}{\eta} \log \left(\sum_{k=1}^r \exp(\eta \rmL_k^t) \right).
    \end{equation*}
    Since $\mathrm{e}^x \leq 1 + x + x^2$ when $x\leq 1$, thus we have
    \begin{equation*}
    \begin{aligned}
        \Omega(t) - \Omega(t-1) &= \frac{1}{\eta} \log \frac{W^t}{W^{t-1}}  = \frac{1}{\eta} \log \left(\sum_{k=1}^r \rw_k^t \exp(\eta \gR^t_k) \right) \\ 
        & \leq \frac{1}{\eta} \log \left( \sum_{k=1}^r \rw_k^t [1 + \eta\gR^t_k + (\eta\gR^t_k)^2 ] \right) \\ 
        & = \frac{1}{\eta} \log \left( 1 + \eta\sum_{k=1}^r \rw_k^t [\gR^t_k + \eta(\gR^t_k)^2 ] \right) \\
        & \leq \sum_{k=1}^r \rw_k^t [\gR^t_k + \eta(\gR^t_k)^2]\\
        & = \sum_{k=1}^r \rw_k^t\gR^t_k + \eta\sum_{k=1}^r \rw_k^t(\gR^t_k)^2 \\ 
        & \leq \sum_{k=1}^r \rw_k^t\gR^t_k + C^2\eta = l^t + C^2\eta.
    \end{aligned}    
    \end{equation*}
    Summing over $t=1,\cdots,T$ and we can get 
    \begin{equation}
    \label{proof:potential-sum}
        \Omega(T) - \Omega(0) \leq \sum_{t=1}^t l^t + C^2T\eta.
    \end{equation}
    On the other hand, we know that for any $k\in[r]$, there is
    \begin{equation*}
        \Omega(T) \geq \rmL_k^T = \sum_{t=1}^T \gR_k^t.
    \end{equation*}
    Thus we can get for any $\vw\in\Delta^r$, 
    \begin{equation}
    \label{proof:potential-T}
        \Omega(T) \geq \sum_{k=1}^r w_k\rmL_k^T = \sum_{t=1}^T \left( \sum_{k=1}^r w_k\gR_k^t \right).
    \end{equation}
    Since $\Omega(0) = \frac{1}{\eta}\log r$, combining (\ref{proof:potential-sum}) and (\ref{proof:potential-T}) yields 
    \begin{equation*}
        \sum_{t=1}^T \left( \sum_{k=1}^r w_k\gR_k^t \right) - \sum_{t=1}^t l^t \leq C^2T\eta + \frac{1}{\eta}\log r
    \end{equation*}
    for any $w\in\Delta^r$.

    Moreover, take $\eta = \frac{\sqrt{\log r}}{C\sqrt{T}} < \frac{1}{C}$ and \Cref{eqn:nash-potential} becomes
    \begin{equation*}
        \sum_{t=1}^T \left( \sum_{k=1}^r w_k\gR_k^t \right) - \sum_{t=1}^T l^t \leq 2C\sqrt{T\log r},
    \end{equation*}
    as desired.
\end{proof}

Now, we finish the proof of \Cref{thm:convergence-nash}.
\begin{proof}
        For any $\hat\Phi,\hat\Psi_1,\cdots,\hat\Psi_r$, by the optimality of $\Phi^t,\Psi^t_1,\cdots,\Psi^t_r$, we have 
    \begin{equation*}
    \begin{aligned}
        \frac{1}{T}\sum_{t=1}^T l_t &= \frac{1}{T}\sum_{t=1}^T \sum_{k=1}^r \rw_k^t\gR_k^t 
        \leq \frac{1}{T}\sum_{t=1}^T \sum_{k=1}^r \rw_k^t \E_{X \sim \px} \E_{Y \sim P_k^+(\cdot | X)} \|\hat\Phi(X) - \hat\Psi_k(Y)\|_2^2 \\ 
        & = \sum_{k=1}^r \frac{\sum_{t=1}^T \rw_k^t}{T} \E_{X \sim \px} \E_{Y \sim P_k^+(\cdot | X)} \|\hat\Phi(X) - \hat\Psi_k(Y)\|_2^2 \\
        & \leq \max_{w\in\Delta^r} \sum_{k=1}^r w_k\E_{X \sim \px} \E_{Y \sim P_k^+(\cdot | X)} \|\hat\Phi(X) - \hat\Psi_k(Y)\|_2^2.
    \end{aligned}    
    \end{equation*}
    This implies that
    \begin{equation*}
        \frac{1}{T}\sum_{t=1}^T l_t \leq \min_{\Phi,\Psi} \max_{w\in\Delta^r} \sum_{k=1}^r w_k \gR_k.
    \end{equation*}
    On the other hand, for any $w\in \Delta^r$, 
    \begin{equation*}
        \gL(w) = \frac{1}{T}\sum_{t=1}^T \left( \sum_{k=1}^r w_k\gR_k^t \right) .
    \end{equation*}
    Applying \Cref{lem:bilevel-convergence}, we can get 
    \begin{equation*}
        \gL(w) \leq \frac{1}{T}\sum_{t=1}^T l_t + \frac{2C\sqrt{\log r}}{\sqrt{T}} \leq \min_{\Phi,\Psi} \max_{w\in\Delta^r} \sum_{k=1}^r w_k \gR_k + \frac{2C\sqrt{\log r}}{\sqrt{T}} = \gL^* + \frac{2C\sqrt{\log r}}{\sqrt{T}}
    \end{equation*}
    for any $w\in\Delta^r$, completing our proof.
\end{proof}

\chapter{Proofs for Chapter 5}

\section{Context Complexity of Masking}
\label{app:mask-kappa}

\begin{example}
Consider a \emph{random masking} augmentation, \ie{} for any $x \in \gX$, each coordinate $x^{(i)}$ is randomly and independently masked to be $0$ (\ie{} $0$ denotes the \texttt{[MASK]} token)
with probability $\alpha \in (0,1)$ .
Then, its context complexity is given by $\bt_r^2 = (2 - \alpha)^{d_\gX}$.
\end{example}
    \begin{proof}
We know that
$\bt^2 \ge \int \frac{P^+(a|x)^2}{\pa(a)} da$, whose right-hand side is a constant for all $x$ by symmetry.
Given an $a$, suppose $a$ has $r$ coordinates masked and $(d_{\gX} - r)$ coordinates unmasked.
Then, there are $2^r$ possible $x$ that can be masked to become $a$.
For each of these $x$, $P^+(a|x) = \alpha^r (1-\alpha)^{d_{\gX} - r}$.
So $p(a) = \int P^+(a|x) \px(x) dx = 2^{r - d_{\gX}} \alpha^r (1-\alpha)^{d_{\gX} - r}$.
Thus, we have
\begin{align*}
   \bt^2 & = \int \frac{P^+(a|x)^2}{\pa(a)} da = \sum_{r=0}^{d_\gX} \binom{d_\gX}{r} \frac{\alpha^{2r} (1-\alpha)^{2d_{\gX} - 2r}}{2^{r - d_{\gX}} \alpha^r (1-\alpha)^{d_{\gX} - r}} \\ 
   & = \sum_{r=0}^{d_\gX} \binom{d_\gX}{r} \alpha^r (2-2\alpha)^{d_\gX - r} \\ 
   & = (\alpha + 2 - 2 \alpha)^{d_\gX} = (2 - \alpha)^{d_\gX} ,
\end{align*}
which completes the proof.
\end{proof}

\begin{example}
Consider \emph{random block masking},
\ie{} masking $x^{(i)},x^{(i+1)},\cdots,x^{(i+r-1)}$ for $r = \lceil \alpha d_{\gX} \rceil$ and a uniformly random $i \in [d_{\gX}-r]$, for any $x \in \gX$.
Then, $\bt_c^2 \le [2^{(1-\alpha) }]^{d_{\gX}}$.
\end{example}
\begin{proof}
For any $a$, we have $\pa(a) = \frac{1}{d_\gX - r + 1} \frac{1}{2^{d_\gX - r}}$, and $P^+(a|x) = \frac{1}{d_\gX - r + 1}$ if $a$ is a masked version of $x$.
So there always is $\frac{P^+(a|x)}{\pa(a)} = 2^{d_\gX - r} \le 2^{(1-\alpha) d_\gX}$. Thus, we have $\bt^2 \le 2^{(1-\alpha)d_\gX}$.
\end{proof}

\begin{example}
Consider \emph{random block masking with flipping}, where for any $x \in \gX$,
first mask $x^{(i)},\cdots,x^{(i+r-1)}$ to be $0$ 
for $r = \lceil \alpha d_{\gX} \rceil$ and a uniformly random $i \in [d_{\gX}-r]$,
then randomly flip the sign of each remaining coordinate independently with probability $\frac{\alpha}{2}$.
Then, its context complexity is bounded by $\bt_b^2 \le \left [(\alpha^2-2\alpha+2)^{(1-\alpha/2)} \right ]^{d_{\gX}}$.
\end{example}
\begin{proof}
For any $a$, we have $\pa(a) = \frac{1}{d_\gX - r + 1} \frac{1}{2^{d_\gX - r}}$.
Suppose $a$ is a masked version of $x$, and among the unmasked $(d_\gX - r)$ coordinates, $a$ and $x$ have $k$ disagreeing coordinates.
For a given $k$,
there are $(d_\gX - r + 1) \binom{d_\gX - r}{k}$ possible $a$,
and we have $P^+(a|x) = \frac{1}{d_\gX - r + 1}  (\frac{\alpha}{2})^{k} (1 - \frac{\alpha}{2})^{d_\gX - r - k}$.
Thus, we have
\begin{align*}
    \int \frac{P^+(a|x)^2}{\pa(a)} da & = \sum_{k=0}^{d_\gX - r} (d_\gX - r + 1) \binom{d_\gX - r}{k} \frac{\frac{1}{(d_\gX - r + 1)^2}(\frac{\alpha}{2})^{2k} (1 - \frac{\alpha}{2})^{2d_\gX - 2r - 2k}}{\frac{1}{d_\gX - r + 1} \frac{1}{2^{d_\gX - r}}} \\ 
   & = \sum_{k=0}^{d_\gX - r}\binom{d_\gX - r}{k} 2^{d_\gX - r} \left ( \frac{\alpha^2}{4} \right )^k \left ( 1 - \alpha + \frac{\alpha^2}{4}\right )^{d_\gX - r - k} \\ 
   & = 2^{d_\gX - r} \left ( \frac{\alpha^2}{4} + 1 - \alpha + \frac{\alpha^2}{4} \right )^{d_\gX - r} \\ 
   & \le \left ( \alpha^2 - 2 \alpha + 2 \right )^{d_\gX - r} \le \left ( \alpha^2 - 2 \alpha + 2 \right )^{(1-\alpha/2)d_{\gX}}  ,
\end{align*}
which proves the bound.
\end{proof}

\section{Proof of Lemma \ref{lem:rad-gfd}}
\label{app:proof-lem-rad-gfd}

\begin{proof}
    Let $f_i = \sum_j u_{ij} s_j \mu_j$, and $\mU = (u_{ij}) = [\vu_1,\cdots,\vu_d]$.
    $\mU$ is a matrix with $d$ columns and infinitely many rows.
    Then, since $\dotp{f_i, f_j}_{\hk} = \sI[i=j]$, we have $\mU^{\top} \mU = \mI_d$.
Let $\mM(x) = [s_1 \mu_1(x),s_2 \mu_2(x),\cdots]$, and for a set of samples $S = \oset{x_1,\cdots,x_m}$ denote $\mM_j = \mM(x_j)$.
Then, we have
\begin{align*}
    \hrad_S(\gF_d) & =  \E_{\sigma_1,\cdots,\sigma_m} \brac{ \sup_{F \in \gF_d} \frac{1}{m} \sum_{j=1}^m \sigma_i F(x_j) }  \\ 
    & \le \E_{\sigma_1,\cdots,\sigma_m} \brac{ \sup_{\mU: \mU^{\top} \mU = \mI_d} \abs{ \frac{1}{m} \sum_{j=1}^m \sum_{i=1}^d \sigma_j \vu_i^{\top} \mM_j \mM_j^{\top} \vu_i } } \\ 
    & = \E_{\sigma_1,\cdots,\sigma_m} \brac{ \sup_{\mU: \mU^{\top} \mU = \mI_d} \abs{ \Tr \oset{ \mU^{\top} \paren{ \frac{1}{m} \sum_{j=1}^m \sigma_j \mM_j \mM_j^{\top} } \mU } } } \\ 
    & = \E_{\sigma_1,\cdots,\sigma_m} \brac{ \sup_{\mU: \mU^{\top} \mU = \mI_d} \abs{ \Tr \oset{  \paren{ \frac{1}{m} \sum_{j=1}^m \sigma_j \mM_j \mM_j^{\top} } \mU \mU^{\top} } } }  ,
\end{align*}
where $\sigma_1,\cdots,\sigma_m$ are Rademacher variables, which are \iid{} uniform random variables taking values in $\oset{-1, +1}$.
Let $\beta_1 \ge \beta_2 \ge \cdots$ be the singular values of $\frac{1}{m} \sum_{j=1}^m \sigma_j \mM_j \mM_j^{\top}$.
For any $x$, $\mM(x)^{\top} \mM(x) = \sum s_i^2 \mu_i^2(x) \le \bt^2$, which implies that $\mM_j^{\top} \mM_j \le \bt^2$.

For any $\mU$, the singular values of $\mU \mU^{\top}$ are $d$ ones and lots of zeros.
Moreover, $\norm{\frac{1}{m}\sum_{j=1}^m \sigma_j \mM_j \mM_j^{\top}}_F^2 = \sum_{i=1}^{\infty} \beta_i^2$.
So by von Neumann's trace inequality, we have
\begin{equation*}
     \sup_{\mU: \mU^{\top} \mU = \mI_d} \abs{ \Tr \oset{  \paren{ \frac{1}{m} \sum_{j=1}^m \sigma_j \mM_j \mM_j^{\top} } \mU \mU^{\top} } } \le \sum_{i=1}^d \beta_i \le \sqrt{d \sum_{i=1}^d \beta_i^2}  \le \frac{\sqrt{d}}{m} \norm{\sum_{j=1}^m \sigma_j \mM_j \mM_j^{\top}}_F  .
\end{equation*}

Thus, for any $S$, we have
\begin{align*}
     \hrad_S(\gF_d) & \le \frac{\sqrt{d}}{m} \E_{\sigma_1,\cdots,\sigma_m} \brac{\norm{\sum_{j=1}^m \sigma_j \mM_j \mM_j^{\top}}_F } \\ 
     & = \frac{\sqrt{d}}{m}  \E_{\sigma_1,\cdots,\sigma_m} \brac{ \Tr \oset{ \paren{\sum_{j=1}^m \sigma_j \mM_j \mM_j^{\top}}^{\top} \paren{ \sum_{l=1}^m \sigma_l \mM_l \mM_l^{\top} } }^{1/2} } \\ 
     & \le \frac{\sqrt{d}}{m} \sqrt{  \E_{\sigma_1,\cdots,\sigma_m} \brac{ \Tr \paren{\sum_{j,l=1}^m \sigma_j \sigma_l \mM_j \mM_j^{\top}\mM_l \mM_l^{\top}   }  } } \qquad (\textrm{Jensen}) \\ 
     & = \frac{\sqrt{d}}{m} \sqrt{ \Tr \paren{ \sum_{j,l=1}^m \E[\sigma_j \sigma_l] \mM_j \mM_j^{\top}\mM_l \mM_l^{\top} } } \\ 
     & = \frac{\sqrt{d}}{m} \sqrt{ \Tr \paren{\sum_{j=1}^m \mM_j \mM_j^{\top}\mM_j \mM_j^{\top} } } = \frac{\sqrt{d}}{m} \sqrt{\sum_{j=1}^m \paren{\mM_j^{\top}\mM_j}^2} \le \frac{\sqrt{d}}{\sqrt{m}} \bt^2  .
\end{align*}
Since $\rad_m(\gF_d) = \E_S[\hrad_S(\gF_d)]$, we have $\rad_m(\gF_d) \le \frac{\sqrt{d}}{\sqrt{m}} \bt^2$.
\end{proof}

\section{Proof of Lemma \ref{lem:unit-rad}}
\label{app:proof-lem-unit-rad}

\begin{proof}
    Let $f_1 = \sum u_i s_i \mu_i$ and $f_2 = \sum v_i s_i \mu_i$.
    Let $\vu = [u_1,u_2,\cdots]$ and $\vv = [v_1,v_2,\cdots]$.
    Then, $\norm{\vu}_2 \le 1$ and $\norm{\vv}_2 \le 1$.
    For any $S = \oset{x_1,\cdots,x_n}$, let $\mM(x) = [s_1 \mu_1(x),s_2\mu_2(x)]$ and $\mM_j = \mM(x_j)$. Then, we have
    \begin{align*}
        \hrad_S(\gF) & \le \E_{\sigma_1,\cdots,\sigma_n} \brac{ \sup_{\norm{\vu}_2 \le 1, \norm{\vv}_2 \le 1} \abs{  \frac{1}{n} \sum_{j=1}^n \sigma_j \vu^{\top} \mM_j \mM_j^{\top} \vv  }  } \\ 
        & \le \frac{1}{n}   \E_{\sigma_1,\cdots,\sigma_n} \brac{ \norm{ \sum_{j=1}^n \sigma_j \mM_j \mM_j^{\top} }_2 } \le \frac{1}{n}   \E_{\sigma_1,\cdots,\sigma_n} \brac{ \norm{ \sum_{j=1}^n \sigma_j \mM_j \mM_j^{\top} }_F }  \\ 
        & = \frac{1}{n} \E_{\sigma_1,\cdots,\sigma_n} \brac{ \Tr \oset{ \paren{\sum_{j=1}^n \sigma_j \mM_j \mM_j^{\top}}^{\top} \paren{ \sum_{l=1}^n \sigma_l \mM_l \mM_l^{\top} } }^{1/2}  } \\ 
        & \le \frac{1}{n} \sqrt{\E_{\sigma_1,\cdots,\sigma_n} \brac{\Tr\oset{ \sum_{j,l=1}^n \sigma_j \sigma_l \mM_j \mM_j^{\top} \mM_l \mM_l^{\top} }} } \qquad (\text{Jensen}) \\ 
        & = \frac{1}{n} \sqrt{\Tr\oset{ \sum_{j,l=1}^n \E\brac{\sigma_j \sigma_l} \mM_j \mM_j^{\top} \mM_l \mM_l^{\top} }}  = \frac{1}{n} \sqrt{\Tr \oset{ \sum_{j=1}^n \mM_j \mM_j^{\top} \mM_j \mM_j^{\top} }} \le \frac{1}{n} \sqrt{n \bt^4}   .
    \end{align*}
    Here the first line is not equality because of the absolute value. Since this holds for any $S$, we have $\rad_n(\gF) = \E_s[\hrad_S(\gF)] \le \frac{\bt^2}{\sqrt{n}}$.

    By \cite[Theorem~4.10]{Wainwright_2019}, for any $\delta \in (0,1)$, with probability at least $1-\delta$, both of the following hold simultaneously for any $f \in \gF$:
    \begin{equation*}
    \left \{ 
\begin{aligned}
   &  \abs{ \frac{1}{n} \sum_{i=1}^n f(\tilde{x}_i) - \E_{X \sim \px}[f(X)] } \le \frac{\bt^2}{\sqrt{n}} \paren{2 + \sqrt{2 \log \frac{2}{\delta}}}  ; \\ 
   &  \abs{ \frac{1}{n} \sum_{i=1}^m f(x_i) - \E_{X \sim \px}[f(X)]  } \le \frac{\bt^2}{\sqrt{m}} \paren{2 + \sqrt{2 \log \frac{2}{\delta}}}  .
\end{aligned}
    \right . 
    \end{equation*}
    For any unit vector $\vu \in \R^d$, let $f_\vu(x) = \vu^{\top}\Phi(x)$. Then, $\norm{f_\vu}_{\hk} = 1$, so $f_\vu^2 \in \gF$.
    Moreover, we have
    \begin{equation*}
        \sum_{i=1}^m f_\vu(x_i)^2 = \norm{\mG [\vv_1,\cdots,\vv_d] \vu}_2^2 = \sum_{i=1}^d \lambda_i u_i^2 \ge \lambda_d .
    \end{equation*}
    Thus, using the above inequalities, assuming that $m \ge n$, we obtain
    \begin{equation*}
        \frac{1}{n} \sum_{j=1}^n f(\tilde{x}_j)^2 \ge \frac{\lambda_d}{m} - \frac{\bt^2}{\sqrt{n}} \paren{4 + 2 \sqrt{2 \log \frac{2}{\delta}}}  ,
    \end{equation*}
    which implies the result since $\norm{\mPhi \vu}_2^2 = \sum_{j=1}^n f(\tilde{x}_j)^2$.
\end{proof}

\section{Proof of Corollary \ref{cor:est-1}}
\label{app:proof-lem-est-1}
\begin{proof}
    Denote $F = f^* - f_\Phi$. Then, $\vy - \vy_\Phi = [F(\tilde{x}_1),\cdots,F(\tilde{x}_n)]$, and $\frac{F^2}{\norm{F}_{\hk}^2} \in \gF$, where $\gF$ was defined in \Cref{lem:unit-rad}.
    Therefore, by \Cref{lem:unit-rad}, we have
    \begin{equation*}
        \frac{1}{n} \sum_{j=1}^n F(\tilde{x}_j) \le \E_{X \sim \px} [F(X)^2] + \norm{F}_{\hk}^2 \frac{\bt^2}{\sqrt{n}} \paren{2 + \sqrt{2 \log \frac{2}{\delta}}},
    \end{equation*}
    as desired.
\end{proof}

\section{Proof of Theorem \ref{thm:stkr-generality}}
\label{app:proof-thm-stkr-generality}

\begin{proof}
    Let $\mu_1,\mu_2,\cdots$ be the eigenfunctions of $T_k$.
    First, let us show that $\gH_t$ must be an RKHS.
    since $\mu_1$ is the common top-$1$ eigenfunction of $T_{k^p}$ for all $p \ge 1$, we have $r_{k^p}(\mu_1) \ge r_{k^p}(f)$ for all $f \in \hk$.
    By the condition of preserving relative smoothness, this implies that for all $f \in \gH_t \subset \hk$, we have $r_t(\mu_1) \ge r_t(f)$.
    Let $C_0 =r_t(\mu_1)$. Then, for any $f \in \gH_t$, we have $\norm{f}_{\px} \le \sqrt{C_0} \norm{f}_{\gH_t}$.
    In other words, $\norm{\cdot}_{\gH_t}$ is a stronger norm than $\norm{\cdot}_{\px}$ on $\gH_t$.
    Thus, for any sequence $(h_i) \in \gH_t$ such that $\norm{h_i - h}_{\hk} \rightarrow 0$: first, we have $h \in \hk$ because $\hk$ is a Hilbert space; second, we have $\norm{h_i - h}_{\px} \rightarrow 0$.
    Similarly, if $\norm{h_i - h'}_{\hk} \rightarrow 0$, then $\norm{h_i - h'}_{\px} \rightarrow 0$.

    Consider the inclusion map $I: \gH_t \rightarrow \hk$, where $I h = h$.
    For any sequence $(h_i) \in \gH_t$ such that $\norm{h_i - h}_{\hk} \rightarrow 0$ and $\norm{h_i - h'}_{\hk} \rightarrow 0$, $h_i$ converges to both $h$ and $h'$ under $\norm{\cdot}_{\px}$, so we must have $h' = h = I h$.
    This means that the graph of $I$ is closed, so the closed graph theorem \cite[Chapter~2]{brezis2011functional} guarantees that $I$ must be a bounded operator, meaning that there exists a constant $C$ such that $\norm{f}_{\hk} \le C \norm{f}_{\gH_t}$ for all $f \in \gH_t$.

    Let $\delta_x:f \mapsto f(x)$ be the evaluation functional at point $x$.
    Since $\hk$ is an RKHS, there exists a constant $M_x > 0$ such that $|f(x)| \le M_x \norm{f}_{\hk}$ for all $f \in \hk$. 
    Thus, for any $f \in \gH_t \subset \hk$, we have $|f(x)| \le M_x \norm{f}_{\hk} \le M_x C \norm{f}_{\gH_t}$. Thus, by \Cref{prop:rkhs-operator}, $\gH_t$ is also an RKHS.
    Let $\ks$ be the reproducing kernel of $\gH_t$.
    From now on, we will use $\hks$ to denote $\gH_t$.

    Second, we prove by induction that $\mu_1,\cdots,\mu_d$ are the top-$d$ eigenfunctions of $T_{\ks}$, and $\dotp{\mu_i, \mu_j}_{\hks} = 0$ for any $i \neq j$.
    We have already shown that $\mu_1$ maximizes $r_{\hks}(f)$ over all $f \in \hks$.
    Thus, $\mu_1$ must be the top-$1$ eigenfunction of $\hks$.
    Suppose $d \ge 2$, and $\mu_1,\cdots,\mu_{d-1}$ are the top-$(d-1)$ eigenfunctions and are orthogonal to each other in $\hks$.
    Let $\gH_0 = \sset{h}{\forall i \in [d-1]: \dotp{h, \mu_i}_{\px} = 0}$.
    Obviously, $\gH_0 \cap \gH_{k^p}$ is a closed subspace of $\gH_{k^p}$ for any $p \ge 1$.
    Moreover, for any $f \in \gH_0 \cap \hks$ and any $i \in [d-1]$, we have $\dotp{f,\mu_i}_{\hks} = s_i^{-1} \dotp{f,\mu_i}_{\px} = 0$, where $s_i$ is the eigenvalue of $T_{\ks}$ corresponding to $\mu_i$.
    Thus, $\hks \cap \gH_0$ is a closed subspace of $\hks$.
    By the condition of preserving relative smoothness, $\mu_d$ maximizes $r_{\hks}(f)$ over $f \in \hks \cap \gH_0$. Thus, $\mu_d$ is the $d$-th eigenfunction of $T_{\ks}$, and is orthogonal to $\mu_1,\cdots,\mu_{d-1}$ in $\hks$.

    Third, we prove by contradiction that $s_i \le M \lambda_i$ for all $i$.
    If this is false, then obviously one can find $t_1 < t_2 < \cdots$ such that $s_{t_i} \ge 1 \cdot \lambda_{t_i}$ for all $i$.
    Consider $f = \sum_{i=1}^{\infty} \sqrt{i^{-1} \cdot \lambda_{t_i}} \mu_{t_i} $.
    Then, $\norm{f}_{\hk}^2 = \sum_i i^{-1} = +\infty$.
    Since $\hks \subset \hk$, this implies that $\norm{f}_{\hks}^2 = +\infty = \sum_i \frac{\lambda_{t_i}}{i \cdot s_{t_i}} \le \sum_i \frac{1}{i^2} < +\infty$, which is a contradiction.

    Fourth, we find a function $s(\lambda)$ that satisfies the conditions in the theorem to interpolate $(\lambda_i, s_i)$ for all $i$. 
    We first point out that we can without loss of generality assume that $\lambda_i < 2 \lambda_{i+1}$ for all $i$:
If there is an $i$ that does not satisfy this condition, we simply insert some new $\lambda$'s between $\lambda_i$ and $\lambda_{i+1}$,
whose corresponding $s$'s are the linear interpolations between $s_i$ and $s_{i+1}$, so that $s_i \le M \lambda_i$ still holds.
With this assumption, it suffices to construct a series of bump functions $\{ f_i \}_{i=1}^{\infty}$,
    where $f_i \equiv 0$ if $\lambda_i = \lambda_{i+1}$;
    otherwise, $f_i(\lambda) = s_i - s_{i+1}$ for $\lambda \ge \lambda_i$ and $f_i(\lambda) = 0$ for $\lambda \le \lambda_{i+1}$.
    Such bump functions are $C^{\infty}$ and monotonically non-decreasing.
    Then, define $s(\lambda) = \sum_i f_i(\lambda)$ for $\lambda > 0$, and $s(0) = 0$.
    This sum of bump functions converges everywhere on $(0, +\infty)$, since it is a finite sum locally everywhere.
    Clearly this $s$ is monotonic, interpolates all the points, continuous on $[0, +\infty)$ and $C^{\infty}$ on $(0, +\infty)$.
    And for all $\lambda$ that is not $\lambda_i$, for instance $\lambda \in (\lambda_{i+1}, \lambda_i)$,
    there is $s(\lambda) \le s(\lambda_i) \le M \lambda_i \le 2M \lambda_{i+1} \le 2M \lambda$.
    Thus, $s(\lambda) = O(\lambda)$ for $\lambda \in [0, +\infty)$. 
\end{proof}

\begin{remark}
In general, we cannot guarantee that $s(\lambda)$ is differentiable at $\lambda = 0$.
Here is a counterexample: $\lambda_i = 3^{-i}$, and $s_i = 3^{-i}$ if $i$ is odd and $2 \cdot 3^{-i}$ if $i$ is even.
Were $s(\lambda)$ to be differentiable at $\lambda = 0$, its derivative would be $1$ and also would be $2$, a contradiction.
\end{remark}

\section{Proof of Theorem \ref{thm:rich-inverse-time}}
\label{app:proof-thm-rich-inverse-time}

\begin{proof}
Let $\hat{\lambda}_1 \ge \cdots \ge \hat{\lambda}_{m+n}$ be the eigenvalues of $\frac{\mG_k}{m+n}$.
It is easy to show that $\mQ$ has the same eigenvectors as $\frac{\mG_k}{m+n}$, with eigenvalues $g(\hat{\lambda}_1),\cdots,g(\hat{\lambda}_2)$.
By \Cref{lem:shawe-taylor} and Borel-Cantelli lemma, as $n \rightarrow \infty$, $\hat{\lambda}_1 \xrightarrow{\textit{a.s.}} \lambda_1$.
For simplicity, let us assume that $\lambda$ is slightly larger than $\lambda_1$, so almost surely there is $\hat{\lambda}_1 \le \lambda$.
Then, all eigenvalues of $\mQ$ are in $[\rho_{\min}, \rho_{\max}]$.

The first part of this proof is to bound $\norm{\vu_t}_2$, where $\vu_t := (m+n) \tilde{\mI}_n \paren{\frac{\mG_k}{m+n}}^{r} (\vtheta_* - \vtheta_t)$.
Let $\vtheta_t$ be the $\vtheta$ at iteration $t$, and $\vtheta_*$ be the optimal solution. Since $\vtheta_0 = \vzero$, we have
\begin{equation}
\label{eqn:proof-richardson-1}
\begin{aligned}
    \vtheta_* - \vtheta_t = & \brac{\paren{\mI_{m+n} - \gamma \brac{(m+n) \tilde{\mI}_n \paren{\frac{\mG_k}{m+n}}^{r} + n \beta_n \mQ }} \vtheta_* + \gamma \tilde{\vy} } \\
&  - \brac{\paren{\mI_{m+n} - \gamma \brac{(m+n) \tilde{\mI}_n \paren{\frac{\mG_k}{m+n}}^{r} + n \beta_n \mQ }} \vtheta_{t-1} + \gamma \tilde{\vy} } \\ 
= & \paren{\mI_{m+n} - \gamma \brac{(m+n) \tilde{\mI}_n \paren{\frac{\mG_k}{m+n}}^{r} + n \beta_n \mQ }} \paren{ \vtheta_* - \vtheta_{t-1} } \\ 
= & \paren{\mI_{m+n} - \gamma \brac{(m+n) \tilde{\mI}_n \paren{\frac{\mG_k}{m+n}}^{r} + n \beta_n \mQ }}^t \vtheta_*  .
\end{aligned}
\end{equation}

Note that
\begin{align*}
& \paren{\frac{\mG_k}{m+n}}^{r/2} \paren{\mI_{m+n} - \gamma \brac{(m+n) \tilde{\mI}_n \paren{\frac{\mG_k}{m+n}}^{r} + n \beta_n \mQ }} \\
= \; & \paren{\mI_{m+n} - \gamma \brac{(m+n) \paren{\frac{\mG_k}{m+n}}^{r/2} \tilde{\mI}_n \paren{\frac{\mG_k}{m+n}}^{r/2} + n \beta_n \mQ }} \paren{\frac{\mG_k}{m+n}}^{r/2}  .
\end{align*}

Thus, by propagating $\paren{\frac{\mG_k}{m+n}}^{r/2}$ from left to right, we get
\[
\paren{\frac{\mG_k}{m+n}}^{r/2} (\vtheta_* - \vtheta_t) = \paren{\mI_{m+n} - \gamma \mR }^t \paren{\frac{\mG_k}{m+n}}^{r/2}  \vtheta_*  ,
\]
where $\mR := (m+n) \paren{\frac{\mG_k}{m+n}}^{r/2} \tilde{\mI}_n \paren{\frac{\mG_k}{m+n}}^{r/2} + n \beta_n \mQ$ is a \psd{} matrix.
Denote the smallest and largest eigenvalues of $\mR$ by $\tilde{\lambda}_{\min}$ and $\tilde{\lambda}_{\max}$.
Then, $\tilde{\lambda}_{\min} \ge n \beta_n \rho_{\min}$.
In terms of $\tilde{\lambda}_{\max}$, we have
\begin{equation*}
    (m+n) \paren{\frac{\mG_k}{m+n}}^{r/2} \tilde{\mI}_n \paren{\frac{\mG_k}{m+n}}^{r/2}  = \paren{\frac{\mG_k}{m+n}}^{\frac{r - 1}{2} } \paren{\mG_k^{\frac{1}{2}} \tilde{\mI}_n \mG_k^{\frac{1}{2}}} \paren{\frac{\mG_k}{m+n}}^{\frac{r - 1}{2}}  .
\end{equation*}

By Sylvester's theorem, all non-zero eigenvalues of $\mG_k^{\frac{1}{2}} \tilde{\mI}_n \mG_k^{\frac{1}{2}}$ are the eigenvalues of $\tilde{\mI}_n \mG_k \tilde{\mI}_n$,
\ie{} the non-zero eigenvalues of $\mG_{K,n}$.
By \Cref{lem:shawe-taylor}, $\frac{1}{n} \norm{\mG_{K,n}}_2 \xrightarrow{\textit{a.s.}} \lambda_1$,
so suppose $\norm{\mG_{K,n}}_2 \le n \lambda$.
Then, $\tilde{\lambda}_{\max} \le n \lambda^r + n \beta_n \rho_{\max}$.

Since $\mM \vtheta_* = \tilde{\vy}$, and $\paren{\frac{\mG_k}{m+n}}^{r/2} \mM = \mR \paren{\frac{\mG_k}{m+n}}^{r/2}$, we have $\mR \paren{\frac{\mG_k}{m+n}}^{r/2} \vtheta_* = \paren{\frac{\mG_k}{m+n}}^{r/2} \tilde{\vy}$.
Note that $\mR(\mI_{m+n} - \gamma \mR) = (\mI_{m+n} - \gamma \mR) \mR$.
Thus, we have
\begin{align*}
\paren{\frac{\mG_k}{m+n}}^{r/2} (\vtheta_* - \vtheta_t) & = \paren{\mI_{m+n} - \gamma \mR }^t \paren{\frac{\mG_k}{m+n}}^{r/2}  \vtheta_* \\ 
& = \mR^{-1} (\mI_{m+n} - \gamma \mR)^t \mR \paren{\frac{\mG_k}{m+n}}^{r/2}  \vtheta_* \\ 
& = \mR^{-1} (\mI_{m+n} - \gamma \mR)^t \paren{\frac{\mG_k}{m+n}}^{r/2} \tilde{\vy}  .
\end{align*}

Now we bound $\| \vu_t \|_2$.
First, note that for any matrices $\mA, \mB \in \R^{d \times d}$ where $\mB$ is \psd{},
there is $\vu^{\top} \mA^{\top} \mB \mA \vu \le \| \mB \|_2 \| \mA \vu \|_2^2 \le \| \mB \|_2 \| \mA^{\top} \mA \|_2 \| \vu \|_2^2$ for any $\vu \in \R^d$,
so $\| \mA^{\top} \mB \mA \|_2 \le \| \mB \|_2 \| \mA^{\top} \mA \|_2$.
Second, note that the last $m$ elements of $\tilde{\vy}$ are zeros,
which means that $\tilde{\vy} = \tilde{\mI}_n \tilde{\vy}$. Thus, we have
\begin{align*}
\| \vu_t \|_2 & = \norm{ (m+n) \tilde{\mI}_n \paren{\frac{\mG_k}{m+n}}^r (\vtheta_* - \vtheta_t) }_2 \\
& = \norm{(m+n) \tilde{\mI}_n \paren{\frac{\mG_k}{m+n}}^{r/2}  \mR^{-1}  \paren{\mI_{m+n} - \gamma \mR }^t \paren{\frac{\mG_k}{m+n}}^{r/2}  \tilde{\vy} }_2 \\ 
& = \norm{(m+n) \tilde{\mI}_n \paren{\frac{\mG_k}{m+n}}^{r/2}  \paren{\mI_{m+n} - \gamma \mR }^{t/2} \mR^{-1} \paren{\mI_{m+n} - \gamma \mR }^{t/2} \paren{\frac{\mG_k}{m+n}}^{r/2} \tilde{\mI}_n \tilde{\vy} }_2 \\ 
& \le \norm{\paren{\mI_{m+n} - \gamma \mR }^{t/2} \mR^{-1} \paren{\mI_{m+n} - \gamma \mR }^{t/2}}_2 \norm{(m+n) \tilde{\mI}_n \paren{\frac{\mG_k}{m+n}}^r \tilde{\mI}_n }_2 \norm{\tilde{\vy}}_2 \\
& \le \frac{1}{\tilde{\lambda}_{\min}} \norm{\mI_{m+n} - \gamma \mR }_2^t (n \lambda_1^r) \| \vy \|_2 ,
\end{align*}
where the last step is because we have already proved $\norm{(m+n) \tilde{\mI}_n \paren{\frac{\mG_k}{m+n}}^r \tilde{\mI}_n }_2 \le n \lambda_1^r$.

Now, for $\gamma = \frac{1}{n \lambda^r }$, when $n$ is sufficiently large it is less than $\frac{2}{ \tilde{\lambda}_{\max} + \tilde{\lambda}_{\min} }$, because $\beta_n = o(1)$.
Thus, $\norm{\mI_{m+n} - \gamma \mR }_2 \le 1 - \frac{\tilde{\lambda}_{\min}}{n \lambda^r} \le 1 - \frac{\beta_n \rho_{\min} }{\lambda^r} $.
Thus, we have

\[
\| \vu_t \|_2 \le \paren{1 - \frac{\beta_n \rho_{\min} }{\lambda^r} }^t   \frac{\lambda^r}{\beta_n \rho_{\min} } \| \vy \|_2  .
\]

The second part of this proof is to bound $\| \mQ(\vtheta_* - \vtheta_t)  \|_2$.
Let us return to \cref{eqn:proof-richardson-1}, which says that
\begin{align*}
    \norm{ \mQ( \vtheta_* - \vtheta_{t+1} ) }_2 & = \norm{ \paren{ \mI_{m+n} - \gamma n \beta_n \mQ } \mQ (\vtheta_* - \vtheta_t ) - \gamma \mQ \vu_t }_2 \\
    & \le \paren{ 1 - \frac{\beta_n \rho_{\min}}{\lambda^r} } \norm{ \mQ (\vtheta_* - \vtheta_{t}) }_2 + \frac{\rho_{\max}}{n \lambda^r} \| \vu_t \|_2  .
\end{align*}
Here again, we assume that $n$ is large enough so that $\lambda^r > \beta_n \rho_{\min}$. This implies that
\begin{align*}
& \norm{\mQ (\vtheta_* - \vtheta_{t+1}) }_2 - t \paren{1 - \frac{\beta_n \rho_{\min} }{\lambda^r} }^t \frac{ \rho_{\max} \| \vy \|_2}{n \beta_n \rho_{\min}} \\
\le \; & \paren{1 - \frac{\beta_n \rho_{\min} }{\lambda^r} } \brac{ \norm{\mQ(\vtheta_* - \vtheta_{t})}_2 - (t-1) \paren{1 - \frac{\beta_n \rho_{\min} }{\lambda^r} }^{t-1} \frac{ \rho_{\max} \| \vy \|_2}{n \beta_n \rho_{\min}} } \\ 
\le \; & \cdots \le \paren{1 - \frac{\beta_n \rho_{\min} }{\lambda^r} }^t \brac{ \paren{1 - \frac{\beta_n \rho_{\min} }{\lambda^r} } \| \mQ \vtheta_* \|_2 + \frac{ \rho_{\max} \| \vy \|_2}{n \beta_n \rho_{\min}} }  .
\end{align*}
Thus, there is $\norm{ \mQ ( \vtheta_* - \vtheta_{t} ) }_2 \le \paren{1 - \frac{\beta_n \rho_{\min} }{\lambda^r} }^t \| \mQ \vtheta_* \|_2 + t \paren{1 - \frac{\beta_n \rho_{\min} }{\lambda^r} }^{t-1}  \frac{ \rho_{\max} \| \vy \|_2}{n \beta_n \rho_{\min}} $.
Using $1-x \le e^{-x}$, we have
\[
\norm{ \mQ ( \vtheta_* - \vtheta_{t} ) }_2 \le \exp \paren{ - \frac{\beta_n \rho_{\min} t }{\lambda^r} } \| \mQ \vtheta_* \|_2 + t \exp \paren{ - \frac{\beta_n \rho_{\min} (t-1) }{\lambda^r} } \frac{ \rho_{\max} \| \vy \|_2}{n \beta_n \rho_{\min}}  .
\]
When $t = t_0 :=  \frac{4 \lambda^r}{\beta_n \rho_{\min}} \log \frac{2 \lambda^r \rho_{\max } \| \vy \|_2 }{ n \beta_n^2 \rho_{\min}^2 \| \mQ \vtheta_* \|_2 }  $, by $\log (2x) \le x$ for $x > 0$, we have
\[
\exp \paren{  \frac{\beta_n \rho_{\min}  }{\lambda^r}  \frac{t}{2} } \ge \paren{\frac{2 \lambda^r \rho_{\max } \| \vy \|_2 }{ n \beta_n^2 \rho_{\min}^2 \| \mQ \vtheta_* \|_2 }}^2 \ge \frac{4 \lambda^r \rho_{\max} \| \vy \|_2 }{ n \beta_n^2 \rho_{\min}^2 \| \mQ \vtheta_* \|_2 } \log \paren{ \frac{2 \lambda^r \rho_{\max} \| \vy \|_2 }{ n \beta_n^2 \rho_{\min}^2 \| \mQ \vtheta_* \|_2 } } .
\]
Let $F(t) := \exp \paren{ \frac{\beta_n \rho_{\min}}{2 \lambda^r} t } - \frac{\rho_{\max } \| \vy \|_2 }{ n \beta_n \rho_{\min} \| \mQ \vtheta_* \|_2  } t $. Then we have $F(t_0) \ge 0$.
And it is easy to show that for all $t \ge \frac{t_0}{2}$, there is $F'(t) \ge 0$.
This means that when $t \ge t_0$, there is $F(t) \ge 0$, so we have
\[
\norm{ \mQ ( \vtheta_* - \vtheta_{t} ) }_2 \le \exp \paren{ - \frac{\beta_n \rho_{\min} t }{\lambda^r} } \| \mQ \vtheta_* \|_2 + \exp \paren{ - \frac{\beta_n \rho_{\min}  }{\lambda^r} \paren{\frac{t}{2} - 1} } \| \mQ \vtheta_* \|_2  .
\]
Hence, when $t \ge \max \oset{ \frac{2 \lambda^r}{\beta_n \rho_{\min}} \log \frac{2}{\epsilon} + 2 , t_0 }$, we have $\norm{ \mQ ( \vtheta_* - \vtheta_{t} ) }_2 \le \epsilon \| \mQ \vtheta_* \|_2$, which implies that the relative estimation error of $\hat{\valpha}$ is less than $\epsilon$. 
\end{proof}

\section{Proof of Theorem \ref{thm:approx-error-stkr}}
\label{app:proof-thm-approx-error-stkr}
\begin{proof}
    Let us look at the three conditions used in \Cref{thm:fischer-1}.
    \begin{itemize}
        \item \textbf{Eigenvalue decay (EVD):} This is a condition of the theorem.
        \item \textbf{Embedding condition (EMB):} For any $f = \sum u_i \mu_i \in \hk$, for $\px$-almost all $x$ we have $f(x)^2 = \paren{\sum u_i \mu_i(x)}^2 \le \paren{\sum \frac{u_i^2}{\lambda_i}} \paren{\sum \lambda_i \mu_i(x)^2} \le \norm{f}_{\hk}^2 \bt^2 \le \bt^2 M \norm{f}_{\hks}^2$. Thus, EMB holds with $c_2 = \bt \sqrt{M}$.
        \item \textbf{Source condition (SRC):} This holds with $c_3 = \sqrt{\epsilon} \norm{f^*}_{\px}$.
    \end{itemize}
    Thus, the theorem can be proved using the result in \cite{fischer2020sobolev}.
\end{proof}

\section{Proof of Theorem \ref{thm:estimat-error-stkr}}
\label{app:thm-estimat-error-stkr}

\begin{proposition}
\label{prop:kp-gap-1}
For any \psd{} matrices $\mA, \mB \in \R^{d \times d}$, we have $\Tr(\mA \mB) \le \norm{\mA}_2 \Tr(\mB)$.
\end{proposition}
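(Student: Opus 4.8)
The plan is to reduce the inequality to the elementary bound $\vv^{\top}\mA\vv \le \norm{\mA}_2 \norm{\vv}_2^2$ together with the nonnegativity of the eigenvalues of $\mB$. First I would take the spectral decomposition $\mB = \sum_{i=1}^d \mu_i \vv_i \vv_i^{\top}$, where $\vv_1,\dots,\vv_d$ form an orthonormal basis of $\R^d$ and $\mu_1,\dots,\mu_d \ge 0$ are the eigenvalues of $\mB$; the nonnegativity is exactly where the positive semi-definiteness of $\mB$ enters.

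Next, using linearity of the trace and the cyclic identity $\Tr(\mA\vv_i\vv_i^{\top}) = \vv_i^{\top}\mA\vv_i$, I would write
\[
    \Tr(\mA\mB) = \sum_{i=1}^d \mu_i\, \vv_i^{\top}\mA\vv_i .
\]
Since $\norm{\vv_i}_2 = 1$ and $\mA$ is symmetric (indeed positive semi-definite), each term satisfies $\vv_i^{\top}\mA\vv_i \le \lambda_{\max}(\mA) \le \norm{\mA}_2$, and since $\mu_i \ge 0$ we may multiply this bound through without flipping the inequality. Hence $\Tr(\mA\mB) \le \norm{\mA}_2 \sum_{i=1}^d \mu_i = \norm{\mA}_2 \Tr(\mB)$, which is the claim. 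An equivalent one-line route is via the Loewner order: $\norm{\mA}_2\mI - \mA$ is positive semi-definite, so $\mB^{1/2}(\norm{\mA}_2\mI - \mA)\mB^{1/2}$ is positive semi-definite, and taking traces together with $\Tr(\mB^{1/2}\mA\mB^{1/2}) = \Tr(\mA\mB)$ gives the bound directly.

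There is no genuine obstacle here — this is a standard trace inequality. The only subtlety worth flagging in the write-up is that the hypothesis $\mB \succeq 0$ is used to ensure $\mu_i \ge 0$ (so the per-term inequality survives summation), while the hypothesis on $\mA$ is only used through $\vv^{\top}\mA\vv \le \norm{\mA}_2\norm{\vv}_2^2$, which holds for any symmetric $\mA$; positive semi-definiteness of $\mA$ is not strictly needed for the inequality as stated.
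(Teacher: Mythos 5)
Your proof is correct. For comparison, the paper does not actually prove this proposition; it simply cites an external reference (a Math StackExchange thread), so your self-contained argument fills a real gap. The spectral-decomposition route is the standard one and matches what appears in most references: diagonalize $\mB = \sum_i \mu_i \vv_i \vv_i^{\top}$ with $\mu_i \ge 0$, use cyclicity of trace to get $\Tr(\mA\mB) = \sum_i \mu_i \vv_i^{\top}\mA\vv_i$, bound each quadratic form by $\norm{\mA}_2$, and sum. Your alternative one-liner via the Loewner order ($\mB^{1/2}(\norm{\mA}_2\mI - \mA)\mB^{1/2} \succeq 0$, take traces) is equally clean and arguably the slickest formulation; either would serve as an actual in-paper proof in place of the URL citation.

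Your remark at the end is also accurate and worth keeping: the hypothesis that $\mB \succeq 0$ is essential (so the eigenvalues $\mu_i$ are nonnegative and the per-term bound survives summation), while the hypothesis on $\mA$ can be relaxed. In fact, since $|\vv^{\top}\mA\vv| \le \norm{\mA\vv}_2\norm{\vv}_2 \le \norm{\mA}_2$ for unit $\vv$ even when $\mA$ is not symmetric (with $\norm{\mA}_2$ the operator norm), the inequality $\Tr(\mA\mB) \le \norm{\mA}_2\Tr(\mB)$ holds for arbitrary square $\mA$ and \psd{} $\mB$, which is slightly more general than what the proposition states.
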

\begin{proof}
    An elementary proof can be found at \url{https://math.stackexchange.com/questions/2241879/reference-for-trace-norm-inequality}.
\end{proof}

\begin{lemma}
\label{lem:kp-gap}
For any $\delta \in (0,1)$, with probability at least $1-\delta$ the following holds for all $p \ge 1$:
\begin{equation*}
    \abs{ \hatkp(x,x_j) - k^p(x,x_j) } \le \paren{p-1} \lambda_{\max}^{p-2} \frac{\bt^4}{\sqrt{m+n}} \paren{2 + \sqrt{2 \log \frac{1}{\delta}}} \quad \text{for all } j \in [m+n], x \in \gX  ,
\end{equation*}
which implies that
\begin{equation*}
    \abs{ \hatks(x,x_j) - \ks(x,x_j) } \le \left . \nabla_\lambda \paren{\frac{s(\lambda)}{\lambda}} \right |_{\lambda = \lambda_{\max}} \frac{\bt^4}{\sqrt{m+n}}\paren{2 + \sqrt{2 \log \frac{1}{\delta}}}  .
\end{equation*}
\end{lemma}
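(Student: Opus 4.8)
The plan is to prove the displayed bound first for the power kernels $k^p$ by a telescoping argument, and then obtain the STK inequality for $\hat{k}_s$ by summing over $p$ with the coefficients $\pi_p$ of $s$. First I would record the operator identities hidden in the iterative definition of $\hat{k}^p$. Writing $\hat{T}_k$ for the empirical integral operator on $\hk$, $(\hat{T}_k g)(x) = \frac{1}{m+n}\sum_i k(x,x_i)g(x_i)$, a one-line induction on the recursion $\hat{k}^p(x,x') = \frac{1}{m+n}\sum_i \hat{k}^{p-1}(x,x_i)k(x_i,x')$ (which is what the formula $\hat{k}^p(x,x')=\frac{\vv_k(x)^\top \gk^{p-2}\vv_k(x')}{(m+n)^{p-1}}$ encodes) gives $\hat{k}^p(x,x') = \bigl(\hat{T}_k^{\,p-1}[k(\cdot,x')]\bigr)(x)$, and likewise $k^p(x,x') = \bigl(T_k^{\,p-1}[k(\cdot,x')]\bigr)(x)$ from the proposition $k^{p+1}(x,x')=\int k^p(x,z)k(z,x')\,d\px(z)$. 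Thus the error is $\bigl((\hat{T}_k^{\,p-1}-T_k^{\,p-1})[k(\cdot,x_j)]\bigr)(x)$, which I would expand with the standard telescope $\hat{T}_k^{\,p-1}-T_k^{\,p-1}=\sum_{l=0}^{p-2}\hat{T}_k^{\,l}\,(\hat{T}_k-T_k)\,T_k^{\,p-2-l}$.

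The step that makes each summand tractable is that $\hat{T}_k$ is self-adjoint on $\hk$ (since $\dotp{\hat{T}_k f,h}_{\hk}=\frac{1}{m+n}\sum_i f(x_i)h(x_i)$) and that $\dotp{T_k g,\psi}_{\hk}=\dotp{g,\psi}_{\px}$, so that for $g:=T_k^{\,p-2-l}[k(\cdot,x_j)]=k^{p-1-l}(\cdot,x_j)$ and $\psi:=\hat{T}_k^{\,l}[k(\cdot,x)]=\hat{k}^{l+1}(\cdot,x)$,
\[
\bigl(\hat{T}_k^{\,l}(\hat{T}_k-T_k)g\bigr)(x)=\dotp{(\hat{T}_k-T_k)g,\psi}_{\hk}=\frac{1}{m+n}\sum_{i=1}^{m+n}g(x_i)\psi(x_i)-\int g(z)\psi(z)\,d\px(z),
\]
a pure scalar Monte-Carlo deviation of the product $g\psi$. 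I would then bound the RKHS norms $\norm{g}_{\hk}=\norm{k^{p-1-l}(\cdot,x_j)}_{\hk}\le \lambda_{\max}^{\,p-2-l}\bt$ (using $\norm{k^q(\cdot,x')}_{\hk}^2=\sum_l\lambda_l^{2q-1}\mu_l(x')^2\le\lambda_{\max}^{2q-2}\bt^2$ for $q\ge1$) and $\norm{\psi}_{\hk}\le\norm{\hat{T}_k}_{\op}^{\,l}\norm{k(\cdot,x)}_{\hk}\le\hat{\lambda}_1^{\,l}\bt\le\lambda_{\max}^{\,l}\bt$, so that $g\psi/(\lambda_{\max}^{\,p-2}\bt^2)$ lies in the class $\gF=\{f_1f_2:\norm{f_1}_{\hk},\norm{f_2}_{\hk}\le1\}$ of \Cref{lem:unit-rad}. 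The Rademacher bound $\rad_{m+n}(\gF)\le\bt^2/\sqrt{m+n}$ from that lemma, combined with \cite[Theorem~4.10]{Wainwright_2019} and $|F|\le\bt^2$ on $\gF$, gives on a single event of probability at least $1-\delta$ a uniform bound $\bigl|\frac{1}{m+n}\sum_i F(x_i)-\E[F(X)]\bigr|\le\frac{\bt^2}{\sqrt{m+n}}(2+\sqrt{2\log\frac1\delta})$ for all $F\in\gF$; hence each of the $p-1$ summands is at most $\lambda_{\max}^{\,p-2}\frac{\bt^4}{\sqrt{m+n}}(2+\sqrt{2\log\frac1\delta})$, and summing yields the claim on $|\hat{k}^p(x,x_j)-k^p(x,x_j)|$, uniformly over $x\in\gX$, $j\in[m+n]$ and all $p\ge1$ (the $p=1$ case being trivial since $\hat{k}^1=k=k^1$). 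For the STK inequality I would use $\hat{k}_s-k_s=\sum_p\pi_p(\hat{k}^p-k^p)$ with $\pi_p\ge0$, the triangle inequality, and the identity $\sum_{p\ge2}\pi_p(p-1)\lambda^{p-2}=\frac{d}{d\lambda}\bigl(\sum_{p\ge1}\pi_p\lambda^{p-1}\bigr)=\nabla_\lambda\bigl(s(\lambda)/\lambda\bigr)$, evaluated at $\lambda=\lambda_{\max}$.

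The main obstacle is the bookkeeping of $\lambda_{\max}$ versus $\bt$ through the telescope: bounding the propagated error crudely by $\norm{k}_\infty=\bt^2$ at each application of $\hat{T}_k$ produces a useless quantity that grows like $\bt^{2p}$, so it is essential to route everything through the RKHS $\hk$ and exploit $\norm{\hat{T}_k}_{\op}\le\hat{\lambda}_1\le\lambda_{\max}$ and $\norm{T_k}_{\op}\le\lambda_1\le\lambda_{\max}$, together with the self-adjointness identity above, which turns the operator-valued deviation $\hat{T}_k-T_k$ into the scalar deviations already controlled by \Cref{lem:unit-rad}, avoiding any vector-valued concentration inequality. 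A minor point to handle carefully is that $\psi=\hat{k}^{l+1}(\cdot,x)$ is data-dependent, but this is harmless: $\gF$ is a fixed class and the uniform deviation bound holds on a deterministic high-probability event (the same pattern used in the proof of \Cref{cor:est-1}), while the norm bound $\norm{\psi}_{\hk}\le\lambda_{\max}^{\,l}\bt$ holds deterministically once $\lambda_{\max}$ is taken to be $\max\{\lambda_1,\hat{\lambda}_1\}$.
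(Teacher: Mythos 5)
Your proposal is correct and takes essentially the same route as the paper: both proofs telescope the power of the integral operator, identify each increment of the telescope as a scalar Monte--Carlo deviation of a product $g\psi$ with $\norm{g}_{\hk}\norm{\psi}_{\hk}\le\lambda_{\max}^{p-2}\bt^2$, control all increments uniformly via the Rademacher bound of \Cref{lem:unit-rad}, and then sum with the coefficients $\pi_p$ to get the $\nabla_\lambda(s(\lambda)/\lambda)$ factor. The only cosmetic difference is that you phrase the telescope in operator language as $\hat{T}_k^{p-1}-T_k^{p-1}=\sum_l\hat{T}_k^{\,l}(\hat{T}_k-T_k)T_k^{\,p-2-l}$ and use self-adjointness explicitly, whereas the paper writes the equivalent chain of intermediate quantities $\frac{1}{(m+n)^q}\vv_{k^{p-q}}(x)^\top\gk^{q-1}\vv_k(x_j)$ and telescopes from the other side (placing $T_k$ on the left and $\hat{T}_k$ on the right); with $\lambda_{\max}=\max\{\lambda_1,\hat{\lambda}_1\}$ the two orderings yield the same final constant.
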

\begin{proof}
    For any $x' \in \gX$ and any $p \ge 1$, we have
    \begin{equation*}
        \norm{k^p(\cdot,x')}_{\hk}^2 = \norm{\sum_i \lambda_i^p \mu_i(x') \mu_i(\cdot)}_{\hk}^2 = \sum_i \frac{\lambda_i^{2p} \mu_i(x')^2 }{\lambda_i} \le \lambda_1^{2p-2} \bt^2  .
    \end{equation*}
    Let $F_p(x) = \vu^{\top} \paren{\frac{\mG_k}{m+n}}^p \vv_k(x)$, where $\vu \in \R^{m+n}$ satisfies $\norm{\vu}_1 \le 1$.
    Since $\dotp{k(x_i,\cdot), k(x_j,\cdot)}_{\hk} = k(x_i,x_j)$, we have $\dotp{\vv_k, \vv_k}_{\hk} = \mG_k$. Thus,
    \begin{equation*}
        \norm{F_p}_{\hk}^2 = \dotp{ \vu^{\top} \paren{\frac{\mG_k}{m+n}}^p \vv_k, \vu^{\top} \paren{\frac{\mG_k}{m+n}}^p \vv_k }_{\hk} = \vu^{\top} \frac{\mG_k^{2p+1}}{(m+n)^{2p}} \vu  . 
    \end{equation*}
    Since $\mG_k$ is \psd{}, we can define $\mG_k^{1/2}$. 
    Using \Cref{prop:kp-gap-1}, we have
\begin{align*}
    \norm{F_p}_{\hk}^2 & = \vu^{\top} \frac{\mG_k^{2p+1}}{(m+n)^{2p}} \vu = \Tr \oset{ \vu^{\top} \mG_k^{1/2} \paren{\frac{\mG_k}{m+n}}^{2p} \mG_k^{1/2} \vu } \\ 
    & = \Tr \oset{ \paren{\frac{\mG_k}{m+n}}^{2p} \mG_k^{1/2} \vu \vu^{\top} \mG_k^{1/2} } \le \hat{\lambda}_1^{2p} \Tr \paren{\mG_k^{1/2} \vu \vu^{\top} \mG_k}  .
\end{align*}
Moreover, $\Tr \paren{\mG_k^{1/2} \vu \vu^{\top} \mG_k} = \vu^{\top} \mG_k \vu = \sum_{i,j=1}^{m+n} u_i u_j k(x_i,x_j) \le \sum_{i,j=1}^{m+n}\abs{u_i u_j k(x_i,x_j)} \le \bt^2 \norm{\vu}_1^2 \le \bt^2$. Thus, we have $\norm{F_p}_{\hk} \le \hat{\lambda}_1^p \bt$ for all $p \ge 1$.

For any $p \ge 1$, define $\vv_{k^p}(x) \in \R^{m+n}$ as $\vv_{k^p}(x)[i] = k^p(x,x_i)$ for $i \in [m+n]$. Then, 
\begin{align*}
    & \abs{k^p(x,x_j) - \hatkp(x,x_j)} \\ 
    = \; & \abs{ k^p(x,x_j) - \frac{1}{(m+n)^{p-1}} \vv_k(x)^{\top} \mG_k^{p-2} \vv_k(x_j) } \\ 
    \le \; & \abs{ k^p(x,x_j) - \frac{1}{m+n} \vv_{k^{p-1}}(x)^{\top} \vv_k(x_j) } \\ 
    & + \sum_{q=1}^{p-2} \frac{1}{(m+n)^q} \abs{ \vv_{k^{p-q}}(x)^{\top} \mG_k^{q-1} \vv_k(x_j) - \vv_{k^{p-q-1}}(x)^{\top} \frac{\mG_k^q}{m+n} \vv_k(x_j) }   .
\end{align*}
Since $f(z) = k^{p-1}(x,z) k(x_j,z) \in \gF$, where $\gF$ was defined in \Cref{lem:unit-rad},
the first term can be bounded as
\begin{align*}
    & \abs{ k^p(x,x_j) - \frac{1}{m+n} \vv_{k^{p-1}}(x)^{\top} \vv_k(x_j) } \\ 
    = \; & \abs{ \int k^{p-1}(x,z) k(x_j,z) d \px(z) - \frac{1}{m+n} \sum_{i=1}^{m+n} k^{p-1}(x,x_i) k(x_j,x_i) } \\ 
    \le \; & \lambda_1^{p-2} \frac{\bt^4}{\sqrt{m+n}} \paren{2 + \sqrt{2 \log \frac{1}{\delta}}} ,
\end{align*}
where the last step uses $\norm{k^{p-1}(x,\cdot)}_{\hk} \le \lambda_1^{p-2} \bt$, and $\norm{k(x_j,\cdot)}_{\hk} \le \bt$.

For the second term, since $\vv_k(x_j) = \mG_k \ve_j$ where $\ve_j = [0,\cdots,0,1,0,\cdots,0]$, we have
\begin{align*}
  &   \sum_{q=1}^{p-2} \frac{1}{(m+n)^q} \abs{ \vv_{k^{p-q}}(x)^{\top} \mG_k^{q-1} \vv_k(x_j) - \vv_{k^{p-q-1}}(x)^{\top} \frac{\mG_k^q}{m+n} \vv_k(x_j) } \\ 
  = \; & \abs{ \int k^{p-q-1}(x,z) \brac{\ve_j^{\top} \paren{\frac{\mG_k}{m+n}}^q \vv_k(z)} d \px (z) - \frac{1}{m+n} \sum_{j=1}^{m+n} k^{p-q-1}(x,x_j)\brac{\ve_j^{\top} \paren{\frac{\mG_k}{m+n}}^q \vv_k(x_j)}  } \\ 
  \le \; & \lambda_1^{p-q-2} \hat{\lambda}_1^q \frac{\bt^4}{\sqrt{m+n}} \paren{2 + \sqrt{2 \log \frac{1}{\delta}}}  , 
\end{align*}
where the last step uses $\norm{k^{p-q-1}(x,\cdot)}_{\hk} \le \lambda_1^{p-q-2} \bt$, and $\norm{\ve_j^{\top} \paren{\frac{\mG_k}{m+n}}^q \vv_k}_{\hk} \le \hat{\lambda}_1^q \bt$ since $\norm{\ve_j}_1 = 1$.
Finally, note that $\nabla_\lambda \paren{\frac{s(\lambda)}{\lambda}} = \sum_{p=1}^{\infty} \pi_p (p-1) \lambda^{p-2}$. Combining all of the above yields the result.
\end{proof}

\begin{corollary}
\label{cor:kp-gap-2}
Under the settings of \Cref{lem:kp-gap}, we have
\begin{align*}
    & \left |  \kssquare(x_i,x_j) - \langle \hatks(x_i, \cdot), \ks(x_j, \cdot) \rangle_\px \right | + \left | \langle \hatks(x_i, \cdot), \hatks(x_j, \cdot) \rangle_\px - \langle \hatks(x_i, \cdot), \ks(x_j, \cdot) \rangle_\px \right | \\ 
    \le \; & 2 s(\lambda_{\max}) \left . \nabla_\lambda \paren{\frac{s(\lambda)}{\lambda}} \right |_{\lambda = \lambda_{\max}}  \frac{  \bt^4  }{\sqrt{m+n}}  \paren{ 2 + \sqrt{2 \log \frac{1}{\delta}} }    
\end{align*}
holds for all $i,j \in [m+n]$, where $\lambda_{\max} = \max \oset{\lambda_1, \hat{\lambda}_1}$.
\end{corollary}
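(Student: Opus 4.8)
The plan is to reduce both summands on the left-hand side to the uniform pointwise kernel-row estimate of \Cref{lem:kp-gap}, combined with the exact $L^2(\px)$ ``composition'' identity for spectrally transformed kernels. First I would record the elementary fact that for any two STKs $k_{t_1},k_{t_2}$ of $k$, expanding in the ONB $\oset{\mu_i}$ and integrating out the middle variable gives $\langle k_{t_1}(x,\cdot),k_{t_2}(x',\cdot)\rangle_\px = k_{t_1 t_2}(x,x')$; in particular $\kssquare(x_i,x_j)=\langle \ks(x_i,\cdot),\ks(x_j,\cdot)\rangle_\px$. Hence the first summand equals $\bigl|\langle \ks(x_i,\cdot)-\hatks(x_i,\cdot),\ \ks(x_j,\cdot)\rangle_\px\bigr|$ and the second already has the form $\bigl|\langle \hatks(x_i,\cdot),\ \hatks(x_j,\cdot)-\ks(x_j,\cdot)\rangle_\px\bigr|$.

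Next I would bound each inner product by $\|(\text{error row})\|_\infty$ times the $L^1(\px)$ norm of the other row. The sup-norm of the error row is exactly \Cref{lem:kp-gap}: since $\hatkp(x,x')=\frac{\vv_k(x)^\top \mG_k^{p-2}\vv_k(x')}{(m+n)^{p-1}}$ is symmetric in $x,x'$ (because $\mG_k$ is), $\hatks$ is symmetric, so the estimate $|\hatks(x,x_\ell)-\ks(x,x_\ell)|\le \nabla_\lambda(s(\lambda)/\lambda)\big|_{\lambda_{\max}}\frac{\bt^4}{\sqrt{m+n}}(2+\sqrt{2\log(1/\delta)})=:B$, valid for sample points $x_\ell$ and all $x\in\gX$, applies verbatim to the full functions $z\mapsto \hatks(x_i,z)-\ks(x_i,z)$ and $z\mapsto \hatks(x_j,z)-\ks(x_j,z)$ (using that $x_i,x_j\in\oset{x_1,\dots,x_{m+n}}$ and moving them to the free slot by symmetry). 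It then remains to bound the $L^1(\px)$ norms (equivalently, via Cauchy--Schwarz and $\px$ being a probability measure, the $L^2(\px)$ norms) of $\ks(x_j,\cdot)$ and $\hatks(x_i,\cdot)$: for the true row, $\|\ks(x_j,\cdot)\|_{L^2(\px)}^2=\kssquare(x_j,x_j)=\sum_i s(\lambda_i)^2\mu_i(x_j)^2\le s(\lambda_1)\,\ks(x_j,x_j)\le M s(\lambda_1)\,k(x_j,x_j)\le M s(\lambda_1)\bt^2$, using monotonicity of $s$, $s(\lambda)\le M\lambda$ from \Cref{thm:stkr-generality}, and $k(x,x)\le\bt^2$ from \Cref{def:context-complexity}; for the empirical row, writing $\hatks(x_i,\cdot)=\sum_\ell c_\ell k(\cdot,x_\ell)$ one controls $\|\hatks(x_i,\cdot)\|$ via $\|\mG_k\|_2=(m+n)\hat\lambda_1$, $\|\vv_k(x_i)\|_2^2\le(m+n)\bt^4$, and \Cref{lem:shawe-taylor}, which forces $\hat\lambda_1$ (hence $\lambda_{\max}=\max\oset{\lambda_1,\hat\lambda_1}$) to stay near $\lambda_1$, so that the resulting $\hat\lambda_1$-powers are absorbed into $s(\lambda_{\max})$ and the $\bt$-powers into the $\bt^4$ already present in $B$. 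Adding the two contributions and simplifying should give $2 s(\lambda_{\max})\,\nabla_\lambda(s(\lambda)/\lambda)\big|_{\lambda_{\max}}\frac{\bt^4}{\sqrt{m+n}}(2+\sqrt{2\log(1/\delta)})$.

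The main obstacle is the last-step bookkeeping: a crude Cauchy--Schwarz on $\langle \ks(x_i,\cdot)-\hatks(x_i,\cdot),\ks(x_j,\cdot)\rangle_\px$ spills out a factor $\approx\bt\sqrt{M s(\lambda_1)}$ rather than the clean $s(\lambda_{\max})$ appearing in the statement. To land exactly on the stated constant I would instead exploit that the error row $\ks(x_i,\cdot)-\hatks(x_i,\cdot)$ lies in $\hk$ with a norm one can bound (again via $\|\mG_k\|_2$ and $\|\vv_k(x_i)\|_2$), pair it against $\ks(x_j,\cdot)$ through the operator $T_{\ks}$, and split the factor $s(\lambda_{\max})$ symmetrically between the two rows; alternatively, one simply absorbs the residual $\bt$- and $M$-dependent constants into the $\bt^4$ term and uses $\lambda_{\max}\ge\lambda_1$ so $s(\lambda_{\max})\ge s(\lambda_1)$. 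Everything else is routine given \Cref{lem:kp-gap}, the composition identity, and \Cref{lem:shawe-taylor}.
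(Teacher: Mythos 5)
Your approach takes a genuinely different route from the paper and leaves a quantitative gap that you note but do not close. The paper never forms whole rows $\ks(x_i,\cdot)$ and applies a sup-norm plus Cauchy--Schwarz estimate; instead it expands the STK in the polynomial coefficients $\pi_p$ and works power-by-power. Each term $k^{p+q}(x_i,x_j)-\langle\hat{k}^p(x_i,\cdot),k^q(x_j,\cdot)\rangle_\px$ is telescoped over $t=1,\dots,p-1$ into a sum of empirical-vs.-expectation discrepancies for products $f_1 f_2$ with $f_1,f_2\in\hk$, $\|f_1\|_\hk\le\hat\lambda_1^{p-t-1}\bt$, $\|f_2\|_\hk\le\lambda_1^{q+t-1}\bt$. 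Each discrepancy is bounded by $\lambda_1^{q+t-1}\hat\lambda_1^{p-t-1}\bt^4/\sqrt{m+n}\cdot(2+\sqrt{2\log(1/\delta)})\le\lambda_{\max}^{p+q-2}\bt^4/\sqrt{m+n}\cdot(\cdots)$ via the Rademacher estimate for products of bounded-$\hk$-norm functions, so the first summand accrues a factor $(p-1)$, the second a factor $(q-1)$, and summing over $p,q$ gives $\sum\pi_p\pi_q(p+q-2)\lambda_{\max}^{p+q-2}=\lambda_{\max}\nabla_\lambda\bigl((s(\lambda)/\lambda)^2\bigr)\big|_{\lambda_{\max}}=2s(\lambda_{\max})\nabla_\lambda(s(\lambda)/\lambda)\big|_{\lambda_{\max}}$, which is precisely the stated constant.

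Your Cauchy--Schwarz route cannot reach this constant, as you partly anticipate. After bounding $\|\hatks(x_i,\cdot)-\ks(x_i,\cdot)\|_\infty$ by the quantity $B$ from \Cref{lem:kp-gap} (a valid step; the symmetry of $\hatks$ you invoke is correct), you pay $\|\ks(x_j,\cdot)\|_{L^2(\px)}=\sqrt{\kssquare(x_j,x_j)}\le\bt\sqrt{Ms(\lambda_1)}$ for the first summand and an analogous $\|\hatks(x_i,\cdot)\|_{L^2(\px)}$ factor for the second. The product $B\cdot\bt\sqrt{Ms(\lambda_1)}$ carries a $\bt^5\sqrt{M}$ dependence, not the $\bt^4$ of the target, and $\bt\sqrt{Ms(\lambda_1)}\not\le s(\lambda_{\max})$ in general; these are extra powers of $\bt$ and $M$, not stray constants that can be ``absorbed into $\bt^4$.'' Your fallback of pairing through $T_{\ks}$ and ``splitting $s(\lambda_{\max})$ symmetrically between the two rows'' is the right instinct, but that split is exactly what the paper's telescoping accomplishes, and it must happen inside each $(p,q,t)$ triple before any row is formed: once you have taken a sup-norm over the entire error row and then a Cauchy--Schwarz over the other row, the power-by-power bookkeeping that produces $s(\lambda_{\max})$ rather than $\bt\sqrt{Ms(\lambda_1)}$ is already lost. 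As written, your proof establishes a weaker inequality with a worse constant, not the one in the statement.
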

\begin{proof}
    Let $F_{p,q}(x) = \vu^{\top} \paren{\frac{\mG_k}{m+n}}^p \vv_{k^q}(x)$
    for any $\| \vu \|_1 \le 1$ and any $p \ge 0 , q \ge 1$.
    By \Cref{prop:kp-gap-1}, we have
        \begin{align*}
        \| F_{p,q} \|_\hk^2 & = \vu^{\top} \paren{\frac{\mG_k}{m+n}}^p \mG_{k^{2q-1}} \paren{\frac{\mG_k}{m+n}}^p \vu \\ 
        & = \Tr \paren{ \paren{\frac{\mG_k}{m+n}}^{p - 1/2} \frac{\mG_{k^{2q-1}}}{m+n} \paren{\frac{\mG_k}{m+n}}^{p - 1/2} \mG_k^{1/2} \vu  \vu^{\top} \mG_k^{1/2} } \\ 
        & \le \hat{\lambda}_1^{2p - 1} \norm{\frac{\mG_{k^{2q-1}}}{m+n}}_2 \Tr \paren{\mG_k^{1/2} \vu  \vu^{\top} \mG_k^{1/2}} \\ 
        & = \hat{\lambda}_1^{2p - 1} \norm{\frac{\mG_{k^{2q-1}}}{m+n}}_2 \vu^{\top} \mG_k \vu \le \hat{\lambda}_1^{2p - 1} \norm{\frac{\mG_{k^{2q-1}}}{m+n}}_2 \bt^2  .
    \end{align*}

    For any unit vector $\vw \in \R^{m+n}$, we have
    \begin{equation*}
        \hat{\lambda}_1 \ge \vw^{\top} \frac{\mG_k}{m+n} \vw = \frac{1}{m+n} \sum_{i,j=1}^{m+n} w_i w_j K(x_i,x_j) = \frac{1}{m+n} \sum_t \lambda_t \vw^{\top} \mM_t \vw   ,
    \end{equation*}
    where $\mM_t \in \R^{(m+n) \times (m+n)}$ is defined as $\mM_t[i,j] = \mu_t(x_i) \mu_t(x_j)$.
    Thus, we have
    \begin{equation*}
    \vw^{\top} \frac{\mG_{k^{2q-1}}}{m+n} \vw = \frac{1}{m+n} \sum_t \lambda_t^{2q-1} \vw^{\top} \mM_t \vw \le \lambda_1^{2q-2}\frac{1}{m+n} \sum_t \lambda_t \vw^{\top} \mM_t \vw  \le \lambda_1^{2q-2} \hat{\lambda}_1 ,
    \end{equation*}
    which implies that $\norm{\frac{\mG_{k^{2q-1}}}{m+n}}_2 \le \lambda_1^{2q-2} \hat{\lambda}_1$. Thus, $\| F_{p,q} \|_\hk^2 \le \lambda_1^{2q-2} \hat{\lambda}_1^{2p} \bt^2$.

Note that $\langle \vv_k, \vv_k \rangle_\px = \mG_{k^2}$. So for any $p, q \ge 1$ and any $i, j \in [m+n]$, there is:
\begin{align*}
    & \left | k^{p+q}(x_i, x_j) - \dotp{\hat{k}^p(x_i, \cdot), k^q (x_j,\cdot)}_\px \right | = \left | \ve_i^{\top} \mG_{k^{p+q}} \ve_j - \ve_i^{\top} \frac{\mG_k^{p-1}}{(m+n)^{p-1}} \mG_{k^{q+1}} \ve_j  \right |  \\ 
    \le \; & \sum_{t=1}^{p-1} \left | \ve_i^{\top} \frac{\mG_k^{p-t}}{(m+n)^{p-t}} \mG_{k^{q+t}} \ve_j - \ve_i^{\top} \frac{\mG_k^{p-t-1}}{(m+n)^{p-t-1}} \mG_{k^{q+t+1}} \ve_j  \right | \\ 
    = \; &    \sum_{t=1}^{p-1}  \left | \frac{1}{m+n} \sum_{l=1}^{m+n} \brac{\ve_i^{\top} \paren{\frac{\mG_k}{m+n}}^{p-t-1} \vv_k}(x_l) \brac{\ve_j^{\top} \vv_{k^{q+t}}}(x_l) \right . \\
    & \left . -  \dotp{ \ve_i^{\top} \paren{\frac{\mG_k}{m+n}}^{p-t-1} \vv_k , \ve_j^{\top} \vv_{k^{q+t}} }_\px      \right |  \\
    \le \; & \sum_{t=1}^{p-1} \lambda_1^{q+t-1} \hat{\lambda}_1^{p-t-1}  \frac{\bt^4  }{\sqrt{m+n}}  \paren{ 2 + \sqrt{2 \log \frac{1}{\delta}} } \le (p-1) \lambda_{\max}^{p+q-2} \frac{\bt^4  }{\sqrt{m+n}}  \paren{ 2 + \sqrt{2 \log \frac{1}{\delta}} } .
\end{align*}

Thus, we have
\begin{align*}
    \left |  \kssquare(x_i,x_j) - \langle \hatks(x_i, \cdot), \ks(x_j, \cdot) \rangle_\px \right | & = \sum_{p,q=1}^{\infty} \left | \pi_p \pi_q \paren{k^{p+q}(x_i, x_j) - \dotp{\hat{k}^p(x_i, \cdot), k^q (x_j,\cdot)}_\px } \right | \\ 
    & \le \sum_{p,q=1}^{\infty} \pi_p \pi_q (p-1) \lambda_{\max}^{p+q-2}  \frac{\bt^4  }{\sqrt{m+n}}  \paren{ 2 + \sqrt{2 \log \frac{1}{\delta}} }  .
\end{align*}

Similarly, we can show that:
\begin{align*}
    & \left | \dotp{\hat{k}^p(x_i, \cdot), \hat{k}^q (x_j,\cdot)}_\px - \dotp{\hat{k}^p(x_i, \cdot), k^q (x_j,\cdot)}_\px \right | \\
    = \; & \left | \ve_i^{\top} \frac{\mG_k^{p-1}}{(m+n)^{p-1}} \mG_{k^2} \frac{\mG_k^{q-1}}{(m+n)^{q-1}} \ve_j -  \ve_i^{\top} \frac{\mG_k^{p-1}}{(m+n)^{p-1}} \mG_{k^{q+1}} \ve_j   \right | \\ 
    \le \; & \sum_{t=1}^{q-1} \left | \ve_i^{\top} \frac{\mG_k^{p-1}}{(m+n)^{p-1}} \mG_{k^{t+1}} \frac{\mG_k^{q-t}}{(m+n)^{q-t}} \ve_j -  \ve_i^{\top}  \frac{\mG_k^{p-1}}{(m+n)^{p-1}} \mG_{k^{t+2}} \frac{\mG_k^{q-t-1}}{(m+n)^{q-t-1}} \ve_j \right |  \\ 
    = \; & \sum_{t=1}^{q-1} \left | \frac{1}{m+n} \sum_{l=1}^{m+n} \brac{\ve_i^{\top} \paren{\frac{\mG_k}{m+n}}^{p-1} \vv_{k^{t+1}}}(x_l) \brac{\ve_j^{\top} \paren{\frac{\mG_k}{m+n}}^{q-t-1} \vv_k}(x_l) \right . \\
    & \left . - \dotp{ \ve_i^{\top} \paren{\frac{\mG_k}{m+n}}^{p-1} \vv_{k^{t+1}} , \ve_j^{\top} \paren{\frac{\mG_k}{m+n}}^{q-t-1} \vv_k  }_\px    \right |    \\
    \le \; & \sum_{t=1}^{q-1} \lambda_1^{t} \hat{\lambda}_1^{p+q-t-2}  \frac{\bt^4  }{\sqrt{m+n}}  \paren{ 2 + \sqrt{2 \log \frac{1}{\delta}} } \le  (q-1) \lambda_{\max}^{p+q-2}  \frac{\bt^4  }{\sqrt{m+n}}  \paren{ 2 + \sqrt{2 \log \frac{1}{\delta}} } ,
\end{align*}
which implies that
\begin{align*}
& \left | \langle \hatks(x_i, \cdot), \hatks(x_j, \cdot) \rangle_\px - \langle \hatks(x_i, \cdot), \ks(x_j, \cdot) \rangle_\px \right | \\
= \; & \sum_{p,q=1}^{\infty} \left | \pi_p \pi_q \paren{\dotp{\hat{k}^p(x_i, \cdot), \hat{k}^q (x_j,\cdot)}_\px - \dotp{\hat{k}^p(x_i, \cdot), k^q (x_j,\cdot)}_\px } \right |  \\ 
\le \; & \sum_{p,q=1}^{\infty} \pi_p \pi_q (q-1) \lambda_{\max}^{p+q-2}  \frac{\bt^4  }{\sqrt{m+n}}  \paren{ 2 + \sqrt{2 \log \frac{1}{\delta}} }  .
\end{align*}
Combining the above inequalities, we obtain
\begin{align*}
& \left |  \kssquare(x_i,x_j) - \langle \hatks(x_i, \cdot), \ks(x_j, \cdot) \rangle_\px \right | + \left | \langle \hatks(x_i, \cdot), \hatks(x_j, \cdot) \rangle_\px - \langle \hatks(x_i, \cdot), \ks(x_j, \cdot) \rangle_\px \right | \\ 
\le \; & \sum_{p,q=1}^{\infty} \pi_p \pi_q (p+q-2) \lambda_{\max}^{p+q-2}  \frac{\bt^4  }{\sqrt{m+n}}  \paren{ 2 + \sqrt{2 \log \frac{1}{\delta}} } \\ 
= \; & \lambda_{\max} \left . \nabla_\lambda \paren{\frac{s(\lambda)^2}{\lambda^2}} \right |_{\lambda = \lambda_{\max}}  \frac{ \bt^4  }{\sqrt{m+n}}  \paren{ 2 + \sqrt{2 \log \frac{1}{\delta}} }  ,
\end{align*}
so we get the result by expanding the derivative.
\end{proof}

We now prove \Cref{thm:estimat-error-stkr}.
\begin{proof}
    Define $\vv_{\ks, n} (x) \in \R^n$ such that $\vv_{\ks, n}(x)[i] = \ks(x, x_i)$.
Define $\vv_{\hatks, n}(x)$ similarly.
Recall the formulas $\tilde{f} = \tilde{\valpha}^{\top} \vv_{\ks, n}$ and $\hat{f} = \hat{\valpha}^{\top} \vv_{\hatks, n}$.
Define $f^\dag := \hat{\valpha}^{\top} \vv_{\ks, n}$.
Since $\mG_{\hatks, n}$ is \psd{}, we can see that $\| \hat{\valpha} \|_2 \le \frac{\| \vy \|_2}{n \beta_n}$,
and $\| \hat{\valpha} \|_1 \le \sqrt{n} \| \hat{\valpha} \|_2$.
So by \Cref{cor:kp-gap-2}, we have
\begin{align*}
    \norm{\hat{f} - f^\dag}_\px^2 & = \hat{\valpha}^{\top} \dotp{\vv_{\hatks, n} - \vv_{\ks, n}, \vv_{\hatks, n} - \vv_{\ks, n} }_\px \hat{\valpha} \\ 
    & = \hat{\valpha}^{\top} \paren{   \langle \hatks(x_i, \cdot), \hatks(x_j, \cdot) \rangle_\px +  \kssquare(x_i,x_j) - 2 \langle \hatks(x_i, \cdot), \ks(x_j, \cdot) \rangle_\px } \hat{\valpha} \\ 
    & \le   2 s(\lambda_{\max})  \left . \nabla_\lambda \paren{\frac{s(\lambda)}{\lambda}} \right |_{\lambda = \lambda_{\max}}  \frac{ \beta_n^{-2} \bt^4  }{\sqrt{m+n}}  \paren{ 2 + \sqrt{2 \log \frac{1}{\delta}} }  \frac{\| \vy \|_2^2}{n}  .
\end{align*}
By the definitions of $\tilde{\valpha}$ and $\hat{\valpha}$, we can also see that:
\begin{equation}
\label{eqn:proof-thm-approx-1}
\paren{ \mG_{\ks, n} + n \beta_n \mI_n } \paren{\hat{\valpha} - \tilde{\valpha}} = \paren{ \mG_{\ks, n} - \mG_{\hatks, n} } \hat{\valpha}  .    
\end{equation}
Note that $\norm{\mG_{\ks, n} - \mG_{\hatks, n}}_2 \le n \norm{\mG_{\ks, n} - \mG_{\hatks, n}}_{\max}$.
Here $\norm{\mM}_{\max} = \max \abs{\mM[i,j]}$.
Thus, we have
\begin{align*}
\norm{\tilde{f} - f^\dag}_\hks^2 & = \paren{\hat{\valpha} - \tilde{\valpha}}^{\top} \mG_{\ks, n} \paren{\hat{\valpha} - \tilde{\valpha}}  \\
& = \paren{\hat{\valpha} - \tilde{\valpha}}^{\top} \paren{ \mG_{\ks, n} - \mG_{\hatks, n} } \hat{\valpha} - n \beta_n \paren{\hat{\valpha} - \tilde{\valpha}}^{\top} \paren{\hat{\valpha} - \tilde{\valpha}} \\ 
& \le \norm{\hat{\valpha} }_2 \norm{\mG_{\ks, n} - \mG_{\hatks, n}}_2 \norm{\hat{\valpha}}_2  + \norm{\tilde{\valpha} }_2 \norm{\mG_{\ks, n} - \mG_{\hatks, n}}_2 \norm{\hat{\valpha}}_2  - 0 \\
& \le  2   \left . \nabla_\lambda \paren{\frac{s(\lambda)}{\lambda}} \right |_{\lambda = \lambda_{\max}} \frac{\beta_n^{-2} 
 \bt^4  }{\sqrt{m+n}}   \paren{ 2 + \sqrt{2 \log \frac{1}{\delta}} } \frac{\| \vy \|_2^2}{n}       .
\end{align*}
And note that we have $\norm{\tilde{f} - f^\dag}_\px^2 \le s(\lambda_1) \norm{\tilde{f} - f^\dag}_\hks^2 \le s(\lambda_{\max}) \norm{\tilde{f} - f^\dag}_\hks^2$. Thus,
\begin{align*}
\norm{\hat{f} - \tilde{f}}_\px^2 &  \le 2 \paren{ \norm{\hat{f} - f^\dag}_\px^2 + \norm{\tilde{f} - f^\dag}_\px^2 } \\
& \le 8 s(\lambda_{\max})  \left . \nabla_\lambda \paren{\frac{s(\lambda)}{\lambda}} \right |_{\lambda = \lambda_{\max}}  \frac{ \beta_n^{-2} \bt^4  }{\sqrt{m+n}}  \paren{ 2 + \sqrt{2 \log \frac{1}{\delta}} }  \frac{\| \vy \|_2^2}{n}  , 
\end{align*}
as desired.
\end{proof}

\chapter{Proofs for Chapter 6}
\section{Proof of Theorem \ref{thm:grw-reg-linear}}
\label{app:proof-thm-grw-reg-linear}

We need the following classical result in convex optimization.
\begin{theorem}[\cite{tibshirani_gd}, p.~16]
\label{thm:smooth}
Let $f$ be a convex and $L$-smooth function (\Cref{def:smooth}) on $\gD \subseteq \R^{\dx}$.
Suppose it has a unique finite minimizer $x^*$.
If one minimizes $f$ with gradient descent $x_{t+1} = x_t - \eta \nabla f(x_t)$, staring from $x_0$ with a fixed learning rate $\eta \le \frac{1}{L}$, then we have
\begin{equation*}
    f(x_T) \le f(x^*) + \frac{1}{\eta T} \norm{x_0 - x^*}_2^2 \qquad \text{for all } T > 0 .
\end{equation*}
\end{theorem}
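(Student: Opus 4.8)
The plan is to reproduce the standard textbook proof of the $O(1/T)$ rate for gradient descent on a convex $L$-smooth objective, which rests on two ingredients: a one-step descent inequality obtained from $L$-smoothness with the stepsize restriction $\eta \le 1/L$, and a telescoping argument that couples convexity with the expansion of $\norm{x_{t+1} - x^*}_2^2$. Throughout I would assume, as is implicit in the statement, that the iterates remain in $\gD$ so that the smoothness inequality of \Cref{def:smooth} applies along the whole trajectory; in the relevant application (the linearized model of the main text) $\gD = \R^{\dx}$, so this is automatic.

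First I would establish the descent lemma. Plugging $y = x_{t+1} = x_t - \eta \nabla f(x_t)$ and $x = x_t$ into the $L$-smoothness inequality gives
\begin{equation*}
f(x_{t+1}) \le f(x_t) - \eta \norm{\nabla f(x_t)}_2^2 + \frac{L \eta^2}{2} \norm{\nabla f(x_t)}_2^2 = f(x_t) - \eta \paren{1 - \frac{L\eta}{2}} \norm{\nabla f(x_t)}_2^2 .
\end{equation*}
Since $\eta \le 1/L$ we have $1 - L\eta/2 \ge 1/2$, hence $f(x_{t+1}) \le f(x_t) - \frac{\eta}{2} \norm{\nabla f(x_t)}_2^2 \le f(x_t)$; in particular $\paren{f(x_t)}_{t \ge 0}$ is non-increasing, a fact I will use at the very end.

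Next I would bring in convexity. By convexity of $f$, $f(x_t) - f(x^*) \le \dotp{\nabla f(x_t), x_t - x^*}$, while expanding the squared distance yields
\begin{equation*}
\norm{x_{t+1} - x^*}_2^2 = \norm{x_t - x^*}_2^2 - 2\eta \dotp{\nabla f(x_t), x_t - x^*} + \eta^2 \norm{\nabla f(x_t)}_2^2 ,
\end{equation*}
so that $\dotp{\nabla f(x_t), x_t - x^*} - \frac{\eta}{2}\norm{\nabla f(x_t)}_2^2 = \frac{1}{2\eta}\paren{\norm{x_t - x^*}_2^2 - \norm{x_{t+1} - x^*}_2^2}$. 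Chaining the descent lemma with the convexity bound gives $f(x_{t+1}) - f(x^*) \le \frac{1}{2\eta}\paren{\norm{x_t - x^*}_2^2 - \norm{x_{t+1} - x^*}_2^2}$. Summing over $t = 0, \dots, T-1$ telescopes the right side to at most $\frac{1}{2\eta}\norm{x_0 - x^*}_2^2$, and lower-bounding the left side by $T\paren{f(x_T) - f(x^*)}$ via monotonicity of the objective values gives $f(x_T) - f(x^*) \le \frac{1}{2\eta T}\norm{x_0 - x^*}_2^2$, which is even slightly stronger than the claimed bound $\frac{1}{\eta T}\norm{x_0 - x^*}_2^2$.

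There is no real obstacle here—the result is classical. The only points worth a line of care are: the stepsize hypothesis $\eta \le 1/L$ is used exactly to guarantee both the per-step decrease and the monotonicity invoked in the final step; and one must note that the argument never needs $\nabla f(x^*) = 0$ (it only uses $f(x^*) \le f(x_{t+1})$), so it goes through verbatim whether or not the minimizer lies in the interior of $\gD$.
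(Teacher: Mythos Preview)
Your proof is correct and is precisely the standard argument for the $O(1/T)$ rate of gradient descent on convex $L$-smooth functions; you even obtain the sharper constant $\frac{1}{2\eta T}$. The paper does not supply its own proof of this statement---it is stated as a classical result and attributed to \cite{tibshirani_gd}---so there is nothing to compare against beyond noting that your derivation is the textbook one the citation points to.
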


Now let us prove \Cref{thm:grw-reg-linear}.
\begin{proof}
    \textbf{Static GRW.} \quad We first prove the result for static GRW where $q_i^{(t)} = q_i > 0$ for all $t$. Let $q^* = \min_i q_i$. The minimization objective is $F(\theta) = \sum_{i=1}^n q_i (x_i^{\top} \theta - y_i)^2$, whose Hessian is $\nabla_\theta^2 F(\theta) = 2 \sum_{i=1}^n q_i x_i x_i^{\top}$. Let $A = \sum_{i=1}^n \norm{x_i}_2^2$.
    Since $q_i \in [0,1]$, for any unit vector $\vv \in \R^{\dx}$, we have
    \begin{equation*}
        \vv^{\top} \nabla_\theta^2 F(\theta) \vv = 2 \sum_{i=1}^n q_i (x_i^{\top} \vv)^2 \le 2 \sum_{i=1}^n q_i \norm{x_i}_2^2 \le 2A ,
    \end{equation*}
    which by \Cref{def:smooth} implies that $F$ is $2A$-smooth. Thus, we have
    \begin{equation}
    \label{eqn:grw-reg-linear-1}
        F(\theta_2) \le F(\theta_1) + \dotp{\nabla_\theta F(\theta_1), \theta_2 - \theta_1} + A \norm{\theta_2 - \theta_1}_2^2 \qquad \text{for all } \theta_1, \theta_2 \in \R^{\dx} .
    \end{equation}
    Denote $g(\theta^{(t)}) = \mX^{\top} \theta^{(t)} - \mY \in \R^n$.
    Let $\sqrt{\mQ} = \diag \paren{\sqrt{q_1},\cdots,\sqrt{q_n}}$.
    Then, we have $ F(\theta^{(t)}) = \norm{\sqrt{\mQ} g(\theta^{(t)})}_2^2$,
    which implies that $\nabla F(\theta^{(t)}) = 2 \mX \mQ g(\theta^{(t)})$.
    The update rule of static GRW with gradient descent is thus given by
    \begin{equation*}
        \theta^{(t+1)} = \theta^{(t)} - \eta \sum_{i=1}^n q_i x_i \paren{f^{(t)}(x_i) - y_i} = \theta^{(t)} - \eta \mX \mQ g(\theta^{(t)})  .
    \end{equation*}
    By \Cref{eqn:grw-reg-linear-1}, we have
    \begin{equation*}
        F(\theta^{(t+1)}) \le F(\theta^{(t)}) - 2 \eta g(\theta^{(t)})^{\top} \mQ^{\top} \mX^{\top} \mX \mQ g(\theta^{(t)}) + A \norm{\eta \mX \mQ g(\theta^{(t)})}_2^2  .
    \end{equation*}
    Since $x_1,\cdots,x_n$ are linearly independent, $\mX^{\top} \mX$ is a positive definite matrix. Let its smallest eigenvalue be $\lambda_{\min} > 0$.
    Note that $\norm{\mQ g(\theta^{(t)})}_2 \ge \sqrt{q^*} \norm{\sqrt{\mQ}g(\theta^{(t)})}_2 = \sqrt{q^* F(\theta^{(t)})}$. Thus, we have $g(\theta^{(t)})^{\top} \mQ^{\top} \mX^{\top} \mX \mQ g(\theta^{(t)}) \ge q^* \lambda_{\min} F(\theta^{(t)})$. So we have
    \begin{align*}
        F(\theta^{(t+1)}) & \le F(\theta^{(t)}) - 2 \eta q^* \lambda_{\min} F(\theta^{(t)}) + A \eta^2 \norm{\mX \sqrt{\mQ}}_2^2 \norm{\sqrt{\mQ} g(\theta^{(t)})}_2^2 \\ 
        & \le F(\theta^{(t)}) - 2 \eta q^* \lambda_{\min} F(\theta^{(t)}) + A \eta^2 \norm{\mX \sqrt{\mQ}}_F^2 F(\theta^{(t)}) \\ 
        & \le F(\theta^{(t)}) - 2 \eta q^* \lambda_{\min} F(\theta^{(t)}) + A \eta^2 \norm{\mX}_F^2 F(\theta^{(t)}) \\ 
        & = \paren{ 1 - 2 \eta q^* \lambda_{\min} + A^2 \eta^2 } F(\theta^{(t)})  .
    \end{align*}
    Let $\eta_0 = \frac{q^* \lambda_{\min}}{A^2}$.
    For any $\eta \le \eta_0$, we have $F(\theta^{(t+1)}) \le (1 - \eta q^* \lambda_{\min}) F(\theta^{(t)})$ for all $t$, which implies that $F(\theta^{(t)})$ must converge to zero.
    Since every $q_i > 0$, this implies that the ERM risk must converge to zero.

    \textbf{Dynamic GRW.} \quad By \Cref{ass:qstar}, for any $\epsilon > 0$, there exists $t_\epsilon$ such that for all $t \ge t_\epsilon$, we have $q_i^{(t)} \in (q_i - \epsilon, q_i + \epsilon)$ for all $i$. Let $\lambda_{\max}$ and $\lambda_{\min}$ be the largest and smallest eigenvalues of $\mX^{\top} \mX$, where $\lambda_{\min} > 0$.
    Fix $\epsilon = \min \oset{ \frac{q^*}{3}, \frac{(q^* \lambda_{\min})^2}{12 \lambda_{\max}^2} }$. Then, $t_\epsilon$ is also fixed.

    Denote $\mQ = \diag \paren{q_1,\cdots,q_n}$. When $t \ge t_{\epsilon}$, the update rule of dynamic GRW is $\theta^{(t+1)} = \theta^{(t)} - \eta \mX \mQ_\epsilon^{(t)} \paren{\mX^{\top} \theta^{(t)} - \mY}$. We use the subscript $\epsilon$ to indicate that $\norm{\mQ_{\epsilon}^{(t)} - \mQ}_2 < \epsilon$.
    Because $q_i + \epsilon \sqrt{(q_i + 3 \epsilon) q_i}$ and $q_i - \epsilon \ge \sqrt{(q_i - \epsilon) q_i}$ for all $\epsilon \le \frac{q_i}{3}$, we can rewrite $\mQ_\epsilon^{(t)}$ as $\mQ_\epsilon^{(t)} = \sqrt{\mQ_{3 \epsilon}^{(t)}} \sqrt{\mQ}$.
    So by \Cref{eqn:grw-reg-linear-1}, we have
    \begin{equation*}
        F(\theta^{(t+1)}) \le F(\theta^{(t)}) - 2 \eta g(\theta^{(t)})^{\top} \mQ^{\top} \mX^{\top} \mX \sqrt{\mQ_{3 \epsilon}^{(t)}} \sqrt{\mQ} g(\theta^{(t)}) + A \norm{\eta \mX \sqrt{\mQ_{3 \epsilon}^{(t)}} \sqrt{\mQ} g(\theta^{(t)})}_2^2 .
    \end{equation*}
    For all $\epsilon < \frac{q_i}{3}$, we have $\sqrt{q_i + 3 \epsilon} - \sqrt{q_i} \le \sqrt{3 \epsilon}$ and $\sqrt{q_i} - \sqrt{q_i - 3 \epsilon} \le \sqrt{3 \epsilon}$.
    Thus, we have
    \begin{align*}
    & \left | g(\theta^{(t)})^{\top} \mQ^{\top} \mX^{\top} \mX \left (\sqrt{\mQ_{3\epsilon}^{(t)}} - \sqrt{\mQ} \right )\sqrt{\mQ} g(\theta^{(t)}) \right | \\
    \le \; & \left\| \sqrt{\mQ}^{\top} \mX^{\top} \mX \left (\sqrt{\mQ_{3\epsilon}^{(t)}} - \sqrt{\mQ} \right ) \right\|_2 \left\|\sqrt{\mQ} g(\theta^{(t)}) \right\|_2^2 \\ 
    \le \; & \left\| \sqrt{\mQ} \right\|_2 \left\|  \mX^{\top} \mX \right\|_2 \left\| \sqrt{\mQ_{3\epsilon}^{(t)}} - \sqrt{\mQ} \right\|_2 \left\| \sqrt{\mQ}g(\theta^{(t)}) \right\|_2^2 \\ 
    \le \; & \lambda_{\max} \sqrt{3 \epsilon} F(\theta^{(t)})  .
    \end{align*}
    Since $g(\theta^{(t)})^{\top} \mQ^{\top} \mX^{\top} \mX \mQ g(\theta^{(t)}) \ge q^* \lambda_{\min} F(\theta^{(t)})$, and $\epsilon \le \frac{(q^* \lambda_{\min})^2}{12 \lambda_{\max}^2}$, we have
    \begin{equation*}
    g(\theta^{(t)})^{\top} \mQ^{\top} \mX^{\top} \mX \sqrt{\mQ_{3\epsilon}^{(t)}}\sqrt{\mQ} g(\theta^{(t)}) \ge \left ( q^* \lambda_{\min} - \lambda_{\max} \sqrt{3 \epsilon} \right ) F(\theta^{(t)}) \ge \frac{1}{2}q^* \lambda_{\min} F(\theta^{(t)})  .
\end{equation*}
Thus, for all $\epsilon \le \frac{1}{3}$, we have
\begin{align*}
    F(\theta^{(t+1)}) & \le F(\theta^{(t)}) - \eta q^* \lambda_{\min} F(\theta^{(t)}) + A \eta^2 \left\| \mX \sqrt{\mQ_{3\epsilon}^{(t)}} \right\|_2^2 \left\|\sqrt{\mQ} g(\theta^{(t)}) \right\|_2^2 \\
    & \le (1 - \eta q^* \lambda_{\min} + A^2 \eta^2 (1 + 3\epsilon)) F(\theta^{(t)}) \\ 
    & \le (1 - \eta q^* \lambda_{\min} + 2 A^2 \eta^2) F(\theta^{(t)})
\end{align*}
Let $\eta_0 = \frac{q^* \lambda_{\min}}{4 A^2}$. For any $\eta \le \eta_0$, we have $F(\theta^{(t+1)}) \le (1 - \eta q^* \lambda_{\min} /2) F(\theta^{(t)}) $ for all $t \ge t_\epsilon$, which implies that $\lim_{t \rightarrow \infty} F(\theta^{(t)}) = 0$. Thus, the ERM risk converges to 0.
\end{proof}

\section{Proof of Theorem \ref{thm:grw-reg-nn}}
\label{app:proof-thm-grw-reg-nn}

The proof of this theorem is largely based on the following result.

\begin{lemma}[Approximation Theorem]
\label{thm:approx}
For a wide NN $f^{(t)}$ trained by any GRW satisfying \Cref{ass:qstar} with the squared loss, let $f^{(t)}_{\lin}(x) = f^{(0)}(x) + \dotp{\theta^{(t)} - \theta^{(0)}, \nabla_{\theta} f^{(0)}(x)}$ be its linearized neural network trained by the same GRW (i.e. $q_i^{(t)}$ are the same for both networks for any $i$ and $t$).
Under the conditions of \Cref{thm:grw-reg-nn}, with a sufficiently small learning rate, for any $\delta > 0$, there exist constants $\tilde{D} > 0$ and $C > 0$ such that as long as $\dl \ge \tilde{D}$, with probability at least $(1 - \delta)$ over random initialization we have: for any test point $x \in \R^d$ such that $\norm{x}_2 \le 1$,
\begin{equation*}
    \sup_{t \ge 0} \abs{ f^{(t)}_{\lin}(x) - f^{(t)}(x)} \le C \dl^{-1/4}
\end{equation*}
\end{lemma}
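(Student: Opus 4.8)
The plan is to run the neural-tangent-kernel coupling argument in the style of \cite{jacot2018ntk}, adapted to the GRW update and with every constant tracked as a function of the common hidden width $\dl$. Write $g^{(t)} = f^{(t)}(\mX) - \mY \in \R^n$ and $g_{\lin}^{(t)} = f_{\lin}^{(t)}(\mX) - \mY$ for the training residuals, let $\mJ^{(t)} = \nabla_{\theta} f^{(t)}(\mX)$ be the Jacobian on the training inputs, $\mJ_0 = \mJ^{(0)}$, $\Theta^{(0)} = \mJ_0^{\top}\mJ_0$ the empirical NTK Gram matrix, and $\mQ^{(t)} = \diag\paren{q_1^{(t)},\cdots,q_n^{(t)}}$. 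With the squared loss and gradient descent the GRW update is $\theta^{(t+1)} = \theta^{(t)} - \eta\, \mJ^{(t)} \mQ^{(t)} g^{(t)}$, while the linearized network freezes the Jacobian at $\mJ_0$, so its residual obeys the exact recursion
\begin{equation*}
g_{\lin}^{(t+1)} = \paren{\mI - \eta\,\Theta^{(0)} \mQ^{(t)}}\, g_{\lin}^{(t)} .
\end{equation*}
The proof has three parts: control of the random initialization, geometric decay of both residuals, and a coupled discrete Grönwall estimate for the function gap.

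First I would fix the good event at initialization. By \Cref{lem:ntk}, $\Theta^{(0)} \to \Theta$ in probability, and since $\lambda^{\min} = \lambda^{\min}(\Theta) > 0$ and the vectors $\nabla_{\theta} f^{(0)}(x_1),\cdots,\nabla_{\theta} f^{(0)}(x_n)$ are linearly independent by hypothesis, for any $\delta > 0$ there is $\tilde{D}$ such that for $\dl \ge \tilde{D}$, with probability at least $1-\delta$ over random initialization: $\Theta^{(0)}$ is invertible with $\lambda^{\min}(\Theta^{(0)}) \ge \lambda^{\min}/2$; the Jacobian is bounded, $\norm{\nabla_{\theta} f^{(0)}(x)}_2 \le M_0$ for every $\norm{x}_2 \le 1$; and inside a fixed radius-$R$ ball around $\theta^{(0)}$ the Jacobian is Lipschitz with a width-decaying constant, $\norm{\nabla_{\theta} f(\mX;\theta) - \nabla_{\theta} f(\mX;\theta')}_F \le K \dl^{-1/2}\norm{\theta - \theta'}_2$, equivalently the Hessian of $f$ in $\theta$ has operator norm $O(\dl^{-1/2})$ there. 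These last facts are the standard regularity estimates for the NTK parametrization and follow from concentration of the Gaussian initialization across layers, using that $\sigma$ and $\dot\sigma$ are Lipschitz.

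On this event, \Cref{ass:qstar} ensures that for all sufficiently large $t$ the symmetrized matrix $\paren{\mQ^{(t)}}^{1/2}\Theta^{(0)}\paren{\mQ^{(t)}}^{1/2}$ has spectrum in $\brac{q^*\lambda^{\min}/4,\; 2\norm{\Theta}_{\op}}$, so for a small enough fixed $\eta$ the recursion above is a contraction (in the fixed weighted norm $\norm{\mQ^{-1/2}(\cdot)}_2$ with $\mQ = \diag\paren{q_1,\cdots,q_n}$, $q_i > 0$, once $t$ is large), and hence $\norm{g_{\lin}^{(t)}}_2 \le C_1 \rho^t$ for width-independent $C_1$, $\rho \in (0,1)$ — the finitely many early steps before the weights stabilize only add bounded increments. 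Thus $\sum_t \norm{g_{\lin}^{(t)}}_2$, and therefore $\sup_t \norm{\theta_{\lin}^{(t)} - \theta^{(0)}}_2$, are bounded by a width-independent $R_0 < R$. The nonlinear trajectory is then handled by a simultaneous induction: assuming $\norm{\theta^{(s)} - \theta^{(0)}}_2 \le R$ and $\norm{g^{(s)} - g_{\lin}^{(s)}}_2 \le \varepsilon$ for all $s \le t$, subtracting the two residual recursions and Taylor-expanding $f^{(t+1)}(\mX)$ around $\theta^{(t)}$ gives
\begin{equation*}
g^{(t+1)} - g_{\lin}^{(t+1)} = \paren{\mI - \eta\,\Theta^{(0)}\mQ^{(t)}}\paren{g^{(t)} - g_{\lin}^{(t)}} + \eta\, \mathcal{E}^{(t)} ,
\end{equation*}
where $\mathcal{E}^{(t)}$ collects the discrepancy $\mJ^{(t)} - \mJ_0$ (acting on bounded quantities and on $g^{(t)}$) and the second-order Taylor remainder of the step; by the initialization event each piece is $O(\dl^{-1/2})$ times a factor that is linear in $\norm{g^{(t)}}_2 \le \norm{g_{\lin}^{(t)}}_2 + \varepsilon$ and in $R$, hence summable in $t$. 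Using the contraction and summing the geometric-type series yields $\sup_t \norm{g^{(t)} - g_{\lin}^{(t)}}_2 \le C_2 \dl^{-1/2}$, which keeps $\theta^{(t)}$ inside the radius-$R$ ball (closing the induction, since its displacement relative to $\theta_{\lin}^{(t)}$ is itself $O(\dl^{-1/2})$). Finally, for any test point $x$ with $\norm{x}_2 \le 1$, the gap $f^{(t)}(x) - f_{\lin}^{(t)}(x)$ equals the second-order Taylor remainder of $f(x;\cdot)$ at $\theta^{(0)}$ — of size $O(\dl^{-1/2})\norm{\theta^{(t)} - \theta^{(0)}}_2^2$ — plus $\dotp{\theta^{(t)} - \theta_{\lin}^{(t)},\, \nabla_{\theta} f^{(0)}(x)}$, of size $M_0 \norm{\theta^{(t)} - \theta_{\lin}^{(t)}}_2$; both are $O(\dl^{-1/2})$, and a conservative accounting of the constants in the concentration and Lipschitz estimates gives the stated $O(\dl^{-1/4})$, uniformly in $t$ and in $x$ (a more careful accounting would give $\dl^{-1/2}$, but $\dl^{-1/4}$ already suffices for the corollary that the gap vanishes).

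The hard part is making the simultaneous induction in the third step close: the Jacobian and Hessian bounds are only available inside a ball whose radius must be fixed \emph{before} the trajectory is known to remain there, so the parameter-displacement bound and the function-gap bound must be propagated together. A second, subtler point is that the GRW weights $\mQ^{(t)}$ are only assumed to converge, not to be constant, so one must verify that the non-symmetric iteration matrix $\mI - \eta\,\Theta^{(0)}\mQ^{(t)}$ is still a contraction once the weights are near their limit — this is exactly where \Cref{ass:qstar}, in particular the positivity $\min_i q_i = q^* > 0$ of the limiting weights, is used. Everything else is bookkeeping of width-dependent constants.
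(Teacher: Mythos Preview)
Your plan is essentially the paper's proof: fix a high-probability event on which the Jacobian is bounded and Lipschitz in a ball around $\theta^{(0)}$, show the GRW residuals contract geometrically once the weights $q_i^{(t)}$ have stabilized near their limits (handling the finitely many pre-stabilization steps with a crude $B^t$ growth bound), and then run a discrete Grönwall argument on $\Delta_t = g^{(t)} - g_{\lin}^{(t)}$ and on the test-point gap $\delta_t$.

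Two points where the paper differs from your sketch are worth flagging. First, the Jacobian Lipschitz constant the paper actually proves (\Cref{lem:local-lip}) is $M\dl^{-1/4}$, not $K\dl^{-1/2}$: the layerwise computation tracks $\norm{\mW^L}_2 \le C_0 \le 3\dl^{1/4}$ in the ball (recall $\mW^{L(0)} = 0$), giving $\norm{\valpha^L}_2 = O(\dl^{-1/4})$, and this exponent propagates through the whole estimate. So the $\dl^{-1/4}$ in the statement is the rate the argument delivers, not a slackening of a $\dl^{-1/2}$ bound. Second, the paper avoids your three-way simultaneous induction by first proving, for the nonlinear network \emph{alone} (\Cref{lem:tbound}), that $\theta^{(t)}$ stays in $B(\theta^{(0)}, C_0)$ and that $\norm{\sqrt{\mQ}\, g(\theta^{(t)})}_2$ contracts geometrically for $t \ge t_\epsilon$; this uses only $\norm{\Theta^{(0)} - \Theta}_F \le q^*\lambda^{\min}/3$ and the local Lipschitz bound, with no reference to the linearized trajectory. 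Once that confinement is established, the coupling of $\Delta_t$ and $\delta_t$ is a straightforward perturbation argument, so the circularity you identify as ``the hard part'' never actually arises.
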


\begin{proof}
We will use the following short-hand in the proof:
\begin{equation*}
\left \{
\begin{aligned}
        g(\theta^{(t)}) &= f^{(t)}(\mX) - \mY \\ 
    J(\theta^{(t)}) &= \nabla_\theta f(\mX; \theta^{(t)}) \in \R^{p \times n} \\ 
    \Theta^{(t)} &= J(\theta^{(t)})^{\top} J(\theta^{(t)})
    \end{aligned}
    \right .
\end{equation*}

For any $\epsilon > 0$, there exists $t_\epsilon$ such that for all $t \ge t_\epsilon$ and all $i$, $q_i^{(t)} \in (q_i - \epsilon, q_i + \epsilon)$. 
Let $\mQ = \diag(q_1,\cdots,q_n)$.
Similar to \Cref{app:proof-thm-grw-reg-linear}, we can rewrite $\mQ^{(t)} = \mQ^{(t)}_{\epsilon} = \sqrt{\mQ_{3\epsilon}^{(t)}} \sqrt{\mQ}$.
The update rule of GRW of wide NN is
\begin{equation*}
    \theta^{(t+1)} = \theta^{(t)} - \eta J(\theta^{(t)}) \mQ^{(t)} g(\theta^{(t)}) ,
\end{equation*}
and when $t \ge t_\epsilon$, this can be rewritten as
\begin{equation}
\label{eqn:update-rule-teps}
    \theta^{(t+1)} = \theta^{(t)} - \eta J(\theta^{(t)}) \sqrt{\mQ_{3\epsilon}^{(t)}} \left [ \sqrt{\mQ} g(\theta^{(t)}) \right ]  .
\end{equation}

Next, we need three lemmas.

\begin{lemma}[\cite{vershynin2010introduction}, Corollary~5.35]
\label{lem:vershynin}
    If $\mA \in \R^{p \times q}$ is a random matrix whose entries are independent standard normal random variables, then for every $t \ge 0$, with probability at least $1 - 2 \exp (-t^2 / 2)$,
    \begin{equation*}
        \sqrt{p} - \sqrt{q} - t \le \lambda_{\min}(\mA) \le \lambda_{\max}(\mA) \le \sqrt{p} + \sqrt{q} + t  .
    \end{equation*}
\end{lemma}

\begin{lemma}[Local Lipschitzness of the Jacobian]
\label{lem:local-lip}
    There exists a constant $M > 0$ such that for any $C_0 > 0$ and any $\delta \in (0,1)$, there exists a $\tilde{D}$ such that: If $\dl \ge \tilde{D}$, then with probability at least $(1 - \delta)$ over random initialization, for any $x$ such that $\norm{x}_2 \le 1$, we have
\begin{equation}
\label{eqn:lem-local-lip}
        \left \{
    \begin{aligned}  
     \left \| \nabla_\theta f(x; \theta) - \nabla_\theta f(x; \tilde{\theta}) \right\|_2 & \le \frac{M}{\sqrt[4]{\dl}} \left\| \theta - \tilde{\theta} \right \|_2\\
     \left\| \nabla_\theta f(x; \theta) \right\|_2 & \le M \\ 
     \left\| J(\theta) - J(\tilde{\theta}) \right \|_F & \le \frac{M}{\sqrt[4]{\dl}} \left\| \theta - \tilde{\theta} \right \|_2 \\
     \left\| J(\theta) \right \|_F & \le M
    \end{aligned}
    \right.  ,
    \qquad
    \forall \theta, \tilde{\theta} \in B(\theta^{(0)}, C_0)  ,
\end{equation}
where $B(\theta^{(0)}, R) = \oset{ \theta: \norm{ \theta - \theta^{(0)} }_2 < R }$.
\end{lemma}
\begin{proof}
    By \Cref{lem:vershynin}, for any $\delta \in (0,1)$, there exists $\tilde{D} > 0$ and $M_1 > 0$ such that if $\dl \ge \tilde{D}$, then with probability at least $1 - \delta$, the following holds for all $\theta$ such that $\norm{\theta - \theta^{(0)}}_{2} < C_0$:
    \begin{equation*}
        \forall 0 \le l \le L-1:  \norm{\mW^{l}}_2 \le 3 \sqrt{\dl}; \quad \norm{\mW^L}_2 \le C_0 \le 3 \dl^{1/4} ; \quad \forall 0 \le l \le L:  \norm{\beta \vb^l} \le M_1 \sqrt{\dl}  .
    \end{equation*}
With these inequalities, for any $x$ such that $\norm{x}_2 \le 1$, we have
\begin{align*}
    & \left\| \vh^1 \right\|_2 = \left\| \frac{1}{\sqrt{d_0}} \mW^0 x + \beta\vb^0 \right\|_2 \le \frac{1}{\sqrt{d_0}} \left\| \mW^0 \right\|_2 \left\| x \right \|_2 + \left\| \beta\vb^0 \right\|_2 \le (\frac{3}{\sqrt{d_0}} + M_1) \sqrt{\dl} ; \\
& \left\| \vh^{l+1} \right\|_2 = \left\| \frac{1}{\sqrt{\dl}} \mW^l \vx^l + \beta\vb^l \right\|_2 \le \frac{1}{\sqrt{\dl}} \left\| \mW^l \right\|_2 \left\| \vx^l \right \|_2 + \left\| \beta\vb^l \right\|_2  \qquad (\forall l \ge 1)   ; \\
& \left\| \vx^l \right\|_2 = \left\| \sigma(\vh^l) - \sigma(\vzero^l) + \sigma(\vzero^l) \right\|_2 \le L_0 \left\| \vh^l \right\|_2 + \sigma(0) \sqrt{\dl} \qquad
(\forall l \ge 1)  .
\end{align*}
Here, $L_0$ is the Lipschitz constant of $\sigma$ and $\sigma(\vzero^l) = (\sigma(0),\cdots,\sigma(0)) \in \R^{d_l}$.
Thus, we can prove by induction that there exists an $M_2 > 0$ such that $\left\| \vx^l \right\|_2 \le M_2 \sqrt{\dl}$ and $\left\| \vh^l \right\|_2 \le M_2 \sqrt{\dl}$ for all $l=1,\cdots,L$.

Denote $\valpha^l = \nabla_{\vh^l} f(x) = \nabla_{\vh^l} \vh^{L+1}$. For all $1 \le l \le L$, we have $\valpha^{l} = \diag(\dot{\sigma}(\vh^l)) \frac{\mW^{l \top}}{\sqrt{\dl}} \valpha^{l+1}$, $\valpha^{L+1} =1$ and $\left\| \valpha^{L} \right\|_2 = \left\| \diag(\dot{\sigma}(\vh^L)) \frac{\mW^{L \top}}{\sqrt{\dl}} \right\|_2 \le \frac{3}{\sqrt[4]{\dl}} L_0$.
Since $\sigma$ is $L_0$-Lipschitz, we have $\dot{\sigma}(x) \le L_0$ for all $x \in \R$.
Thus, we can prove by induction that there exists an $M_3 > 1$ such that $\left\| \valpha^l \right \|_2 \le M_3 / \sqrt[4]{\dl}$ for all $l=1,\cdots,L$ (note that this is not true for $L+1$ because $\valpha^{L+1} = 1$).

For $l = 0$, $\nabla_{\mW^0} f(x) = \frac{1}{\sqrt{d_0}} \vx^0 \valpha^{1 \top}$, so $\left\| \nabla_{\mW^l} f(x) \right\|_2 \le \frac{1}{\sqrt{d_0}} \left\| \vx^0 \right\|_2 \left\| \valpha^{1} \right\|_2 \le \frac{1}{\sqrt{d_0}} M_3 / \sqrt[4]{\dl}$.
And for any $l=1,\cdots,L$, $\nabla_{\mW^l} f(x) = \frac{1}{\sqrt{\dl}} \vx^l \valpha^{l+1}$, so $\left\| \nabla_{\mW^l} f(x) \right\|_2 \le \frac{1}{\sqrt{\dl}} \left\| \vx^l \right\|_2 \left\| \valpha^{l+1} \right\|_2 \le M_2 M_3$. (Note that if $M_3>1$, then $\left\| \valpha^{L+1} \right\|_2 \le M_3$; and since $\dl \ge 1$, there is $\left\| \valpha^{l} \right\|_2 \le M_3$ for $l \le L$.) Moreover, for $l = 0,\cdots,L$, $\nabla_{\vb^l} f(\vx) = \beta \valpha^{l+1} $, so $\left\| \nabla_{\vb^l} f(x) \right\|_2 \le \beta M_3$. Thus,  there exists an $M_4 > 0$, such that $\left\| \nabla_\theta f(x) \right\|_2 \le M_4 / \sqrt{n}$. And since $\left\| \vx_i \right\|_2 \le 1$ for all $i$, so $\left\| J(\theta) \right\|_F \le M_4$.

Next, we consider the difference in $\nabla_\theta f(x)$ between $\theta$ and $\tilde{\theta}$. Let $\tilde{f}, \tilde{\mW},\tilde{\vb},\tilde{\vx},\tilde{\vh},\tilde{\valpha}$ be the function and the values corresponding to $\tilde{\theta}$. We have
\begin{align*}
& \begin{aligned} \left\| \vh^1 - \tilde{\vh}^1 \right\|_2 &= \left\| \frac{1}{\sqrt{d_0}} (\mW^0-\tilde{\mW}^0) x + \beta(\vb^0-\tilde{\vb}^0) \right\|_2 \\ &\le \frac{1}{\sqrt{d_0}} \left\| \mW^0-\tilde{\mW}^0 \right\|_2 \left\| x \right\|_2 + \beta \left\| \vb^0-\tilde{\vb}^0 \right\|_2 \le \left (\frac{1}{\sqrt{d_0}} + \beta \right ) \left\| \theta - \tilde{\theta} \right\|_2 \end{aligned} \\ 
& \begin{aligned} \left\| \vh^{l+1} - \tilde{\vh}^{l+1} \right\|_2 &= \left\| \frac{1}{\sqrt{\dl}} \mW^l (\vx^l- \tilde{\vx}^l) + \frac{1}{\sqrt{\dl}} (\mW^l - \tilde{\mW}^l) \tilde{\vx}^l + \beta(\vb^l-\tilde{\vb}^l) \right\|_2 \\ &\le \frac{1}{\sqrt{\dl}} \left\| \mW^l \right\|_2 \left\| \vx^l - \tilde{\vx}^l \right\|_2 + \frac{1}{\sqrt{\dl}} \left\| \mW^l-\tilde{\mW}^l \right\|_2 \left\| \tilde{\vx}^l \right\|_2 + \beta \left\| \vb^l-\tilde{\vb}^l \right\|_2 \\ & \le 3\left\| \vx^l - \tilde{\vx}^l \right\|_2 + (M_2 + \beta) \left\| \theta - \tilde{\theta} \right\|_2 \qquad (\forall l \ge 1)  \end{aligned} \\ 
& \left\| \vx^l - \tilde{\vx}^l \right\|_2 = \left\| \sigma(\vh^l) - \sigma(\tilde{\vh}^l) \right\|_2 \le L_0 \left\| \vh^l - \tilde{\vh}^l \right\|_2 \qquad (\forall l \ge 1) .
\end{align*}
Thus, we can prove by induction that there exists an $M_5 >0$ such that $\left\| \vx^l - \tilde{\vx}^l \right\|_2 \le M_5 \left\| \theta - \tilde{\theta} \right\|_2$ for all $l$.

For $\valpha^l$, we have $\valpha^{L+1} = \tilde{\valpha}^{L+1} = 1$, and for all $l \ge 1$,
\begin{equation}
\label{eqn:s86}
\begin{aligned}
            \left\|\valpha^l - \tilde{\valpha}^l\right\|_2 &= \left\| 
    \diag(\dot{\sigma}(\vh^l)) \frac{\mW^{l \top}}{\sqrt{\dl}} \valpha^{l+1}   - \diag(\dot{\sigma}(\tilde{\vh}^l)) \frac{\tilde{\mW}^{l \top}}{\sqrt{\dl}} \tilde{\valpha}^{l+1} \right \|_2 \\ 
    & \begin{aligned} \le  \left\| \diag(\dot{\sigma}(\vh^l)) \frac{\mW^{l \top}}{\sqrt{\dl}}  (\valpha^{l+1} - \tilde{\valpha}^{l+1}) \right\|_2 &+ \left\| \diag(\dot{\sigma}(\vh^l))  \frac{(\mW^l - \tilde{\mW}^l)^{\top}}{\sqrt{\dl}}  \tilde{\valpha}^{l+1} \right\|_2 \\ &+ \left\| \diag((\dot{\sigma}(\vh^l) - \dot{\sigma}(\tilde{\vh}^l)))  \frac{\tilde{\mW}^{l \top}}{\sqrt{\dl}} \tilde{\valpha}^{l+1} \right\|_2 \end{aligned} \\ 
    & \le 3 L_0 \left\| \valpha^{l+1} - \tilde{\valpha}^{l+1} \right\|_2 + \left ( M_3 L_0 \dl^{-1/2} + 3M_3M_5 L_1\dl^{-1/4} \right ) \left\| \theta - \tilde{\theta} \right\|_2  ,
\end{aligned}
\end{equation}
where $L_1$ is the Lipschitz constant of $\dot{\sigma}$. In particular, for $l = L$, though $\tilde{\valpha}^{L+1} = 1$, since $\left \| \tilde{\mW}^L \right\|_2 \le 3 \dl^{1/4}$, \Cref{eqn:s86} is still true. Thus, we can prove by induction that there exists an $M_6 > 0$ such that $\left\|\valpha^l - \tilde{\valpha}^l\right\|_2 \le \frac{M_6}{\sqrt[4]{\dl}} \left\| \theta - \tilde{\theta} \right\|_2$ for all $l \ge 1$ (note that this is also true for $l=L+1$).

Thus, for all $\theta, \tilde{\theta} \in B(\theta^{(0)}, C_0 )$, and any $x$ such that $\norm{x}_2 \le 1$, we have
\begin{align*}
        \left\| \nabla_{\mW^0} f(x) - \nabla_{\tilde{\mW}^0} \tilde{f}(x) \right\|_2  &= \frac{1}{\sqrt{d_0}} \left\| x \valpha^{1 \top} - x \tilde{\valpha}^{1 \top} \right\|_2 \\ 
    & \le \frac{1}{\sqrt{d_0}} \left\| \valpha^1 - \tilde{\valpha}^1 \right\|_2 \\
    & \le \frac{1}{\sqrt{d_0}} \frac{M_6}{\sqrt[4]{\dl}} \left\| \theta - \tilde{\theta} \right\|_2   ;
\end{align*}
and for $l = 1,\cdots,L$, we have
\begin{align*}
    \left\| \nabla_{\mW^l} f(x) - \nabla_{\tilde{\mW}^l} \tilde{f}(x) \right\|_2  &= \frac{1}{\sqrt{\dl}} \left\| x^l \valpha^{l+1 \top} - \tilde{x}^l \tilde{\valpha}^{l+1 \top} \right\|_2 \\ 
    & \le \frac{1}{\sqrt{\dl}} \left ( \left\| \vx^l \right\|_2 \left\| \valpha^{l+1} - \tilde{\valpha}^{l+1}  \right\|_2 + \left\| \vx^l - \tilde{\vx}^l \right\|_2 \left\| \tilde{\valpha}^{l+1} \right\|_2 \right ) \\
    & \le \left ( \frac{M_2 M_6}{\sqrt[4]{\dl}} + \frac{M_5 M_3}{\sqrt{\dl}}  \right ) \left\| \theta - \tilde{\theta} \right\|_2  .
\end{align*}
Moreover, for any $l=0,\cdots,L$, we have
\begin{equation*}
    \left\| \nabla_{b^l} f(x) - \nabla_{\tilde{b}^l} \tilde{f}(x) \right\|_2  = \beta \left\| \valpha^{l+1} - \tilde{\valpha}^{l+1} \right\|_2 \le  \frac{\beta M_6}{\sqrt[4]{\dl}} \left\| \theta - \tilde{\theta} \right\|_2  .
\end{equation*}
Combining all the above, we can see that there exists a constant $M_7 > 0$ such that $\left\| \nabla_\theta f(x) - \nabla_{\tilde{\theta}}\tilde{f}(x) \right\|_2 \le \frac{M_7}{\sqrt{n} \cdot \sqrt[4]{\dl}} \left\| \theta - \tilde{\theta} \right\|_2$, so that $\left\| J(\theta) - J(\tilde{\theta}) \right\|_F \le \frac{M_7}{\sqrt[4]{\dl}}  \left\| \theta - \tilde{\theta} \right\|_2$.
\end{proof}

\begin{lemma}
\label{lem:tbound}
There exist constants $M>0$ and $\epsilon_0 > 0$ such that for all $\epsilon \in (0, \epsilon_0]$, $\eta \le \eta^*$ and any $\delta > 0$, there exist $R_0 > 0$, $\tilde{D} > 0$ and $B > 1$ such that for any $\dl \ge \tilde{D}$, the following (i) and (ii) hold with probability at least $(1-\delta)$ over random initialization when applying gradient descent with learning rate $\eta$:
\begin{enumerate}[label=(\roman*)]
    \item For all $t \le t_\epsilon$, there is
    \begin{align}
    \label{eqn:7-i1}
        & \left\| g(\theta^{(t)}) \right\|_2 \le B^t R_0 \\ 
        \label{eqn:7-i2}
          & \sum_{j=1}^t \left\| \theta^{(j)} - \theta^{(j-1)} \right\|_2 \le \eta M R_0 \sum_{j=1}^t B^{j-1} < \frac{M B^{t_\epsilon} R_0}{B - 1}
         \end{align}
    \item For all $t \ge t_\epsilon$, we have
    \begin{align}   
\label{eqn:7-1}
    & \left\| \sqrt{\mQ} g(\theta^{(t)}) \right\|_2 \le \left( 1 - \frac{\eta q^* \lambda_{\min} }{3} \right)^{t - t_\epsilon} B^{t_\epsilon} R_0 \\ 
    \label{eqn:7-2}
    & \begin{aligned} \sum_{j=t_\epsilon+1}^t \left\| \theta^{(j)} - \theta^{(j-1)} \right\|_2 &\le \eta \sqrt{1 + 3\epsilon}  M  B^{t_\epsilon} R_0 \sum_{j=t_\epsilon+1}^t \left( 1 - \frac{\eta q^* \lambda_{\min}}{3} \right)^{j-t_\epsilon} \\ & < \frac{3 \sqrt{1 + 3\epsilon} M B^{t_\epsilon}R_0}{ q^* \lambda_{\min}} \end{aligned}
\end{align}
\end{enumerate}
\end{lemma}
\begin{proof}
    Note that for any $x$, $f^{(0)}(x) = \beta \vb^L$ where $\vb^L$ is sampled from the standard Gaussian distribution. Thus, for any $\delta > 0$, there exists a constant $R_0$ such that with probability at least $(1 - \delta/3)$ over random initialization, we have $\norm{g(\theta^{(0)})}_2 < R_0$.
    And by \Cref{lem:ntk}, there exists $D_2 \ge 0$ such that for any $\dl \ge D_2$, with probability at least $(1 - \delta / 3)$, we have $\norm{\Theta - \Theta^{(0)}} \le \frac{q^* \lambda_{\min}}{3}$.

    Let $M$ be the constant in \Cref{lem:local-lip}. Let $\epsilon_0 = \frac{(q^* \lambda_{\min})^2}{108M^4}$, $B = 1 + \eta^* M^2$, and $C_0 = \frac{M B^{t_\epsilon} R_0}{B - 1} +  \frac{3 \sqrt{1 + 3\epsilon} M B^{t_\epsilon}R_0}{ q^* \lambda_{\min}}$.
    By \Cref{lem:local-lip}, there exists $D_1 > 0$ such that with probability at least $(1-\delta / 3)$, for any $\dl \ge D_1$, \Cref{eqn:lem-local-lip} is true for all $\theta, \tilde{\theta} \in B(\theta^{(0)}, C_0)$. 
    By union bound, with probability at least $1 - \delta$, all the three above inequalities holds.

    Let us prove \Cref{eqn:7-i1,eqn:7-i2} by induction. They are obviously true for $t = 0$.
    Suppose they are true for $t$.
    Then, for $t+1$, we have
    \begin{align*}
    \left\| \theta^{(t+1)} - \theta^{(t)} \right\|_2 & \le \eta \left\| J(\theta^{(t)}) \mQ^{(t)} \right\|_2 \left\| g(\theta^{(t)}) \right\|_2 \le \eta \left\| J(\theta^{(t)}) \mQ^{(t)} \right\|_F \left\| g(\theta^{(t)}) \right\|_2 \\
    & \le \eta \left\| J(\theta^{(t)}) \right\|_F \left\|  g(\theta^{(t)}) \right\|_2  \le M \eta B^t R_0  ,
    \end{align*}
    which means that \Cref{eqn:7-i2} is also true for $t + 1$. In terms of \Cref{eqn:7-i1}, we have
    \begin{align*}
\left\| g(\theta^{(t+1)}) \right\|_2 &= \left\| g(\theta^{(t+1)}) - g(\theta^{(t)}) + g(\theta^{(t)}) \right\|_2 \\ 
& = \left\| J(\tilde{\theta}^{(t)})^{\top}(\theta^{(t+1)} - \theta^{(t)}) +  g(\theta^{(t)}) \right\|_2 \\ 
&= \left\| -\eta  J(\tilde{\theta}^{(t)})^{\top} J(\theta^{(t)}) \mQ^{(t)} g(\theta^{(t)}) + g(\theta^{(t)}) \right\|_2 \\ 
& \le \left\| \mI -\eta  J(\tilde{\theta}^{(t)})^{\top} J(\theta^{(t)}) \mQ^{(t)} \right\|_2 \left\| g(\theta^{(t)}) \right\|_2 \\ 
& \le \left (1 + \left\| \eta  J(\tilde{\theta}^{(t)})^{\top} J(\theta^{(t)}) \mQ^{(t)} \right\|_2 \right ) \left\| g(\theta^{(t)}) \right\|_2 \\ 
& \le \left ( 1 + \eta \left\| J(\tilde{\theta}^{(t)}) \right\|_F \left\| J(\theta^{(t)}) \right\|_F \right ) \left\| g(\theta^{(t)}) \right\|_2 \\ 
& \le (1 + \eta^* M^2) \left\| g(\theta^{(t)}) \right\|_2 \le B^{t+1} R_0  .
    \end{align*}
Hence, \Cref{eqn:7-i1,eqn:7-i2} are true for all $t \le t_\epsilon$, which implies that $ \left\| \sqrt{\mQ} g(\theta^{(t_\epsilon)}) \right\|_2 \le \left\| g(\theta^{(t_\epsilon)}) \right\|_2 \le B^{t_\epsilon} R_0$.
Thus, \Cref{eqn:7-1} is true for $t = t_\epsilon$.
And \Cref{eqn:7-2} is obviously true for $t = t_\epsilon$.

Next, let us prove \Cref{eqn:7-1,eqn:7-2} by induction. Suppose they are true for $t$. By \Cref{eqn:update-rule-teps}, for $t+1$ we have
\begin{align*}
    \left\| \theta^{(t+1)} - \theta^{(t)} \right\|_2 & \le \eta \left\| J(\theta^{(t)}) \sqrt{\mQ_{3\epsilon}^{(t)}} \right\|_2 \left\| \sqrt{\mQ} g(\theta^{(t)}) \right\|_2 \\
    & \le \eta \left\| J(\theta^{(t)}) \sqrt{\mQ_{3\epsilon}^{(t)}} \right\|_F \left\| \sqrt{\mQ} g(\theta^{(t)}) \right\|_2 \\
    & \le \eta \sqrt{1 + 3\epsilon} \left\| J(\theta^{(t)}) \right\|_F \left\| \sqrt{\mQ} g(\theta^{(t)}) \right\|_2  \\
    & \le M \eta \sqrt{1 + 3\epsilon} \left ( 1-\frac{\eta q^* \lambda_{\min}}{3} \right )^{t-t_\epsilon} B^{t_\epsilon} R_0  ,
\end{align*}
which implies that \Cref{eqn:7-2} holds for $t+1$. In terms of \Cref{eqn:7-1}, we have
\begin{align*}
\left\| \sqrt{\mQ} g(\theta^{(t+1)}) \right\|_2 &= \left\| \sqrt{\mQ}g(\theta^{(t+1)}) - \sqrt{\mQ}g(\theta^{(t)}) +\sqrt{\mQ} g(\theta^{(t)}) \right\|_2 \\ 
& = \left\| \sqrt{\mQ} J(\tilde{\theta}^{(t)})^{\top}(\theta^{(t+1)} - \theta^{(t)}) + \sqrt{\mQ} g(\theta^{(t)}) \right\|_2 \\ 
&= \left\| -\eta  \sqrt{\mQ} J(\tilde{\theta}^{(t)})^{\top} J(\theta^{(t)}) \mQ^{(t)} g(\theta^{(t)}) + \sqrt{\mQ} g(\theta^{(t)}) \right\|_2 \\ 
& \le \left\| \mI -\eta \sqrt{\mQ} J(\tilde{\theta}^{(t)})^{\top} J(\theta^{(t)}) \sqrt{\mQ_{3\epsilon}^{(t)}} \right\|_2 \left\| \sqrt{\mQ} g(\theta^{(t)}) \right\|_2 \\ 
& \le \left\| \mI -\eta \sqrt{\mQ} J(\tilde{\theta}^{(t)})^{\top} J(\theta^{(t)}) \sqrt{\mQ_{3\epsilon}^{(t)}} \right\|_2 \left ( 1 - \frac{\eta q^* \lambda_{\min} }{3} \right)^t R_0  ,
\end{align*}
where $\tilde{\theta}^{(t)}$ is some linear interpolation between $\theta^{(t)}$ and $\theta^{(t+1)}$. Now we prove that
\begin{equation}
\label{eqn:thm12-1}
    \left\| \mI -\eta \sqrt{\mQ} J(\tilde{\theta}^{(t)})^{\top} J(\theta^{(t)}) \sqrt{\mQ_{3\epsilon}^{(t)}} \right\|_2 \le  1 - \frac{\eta q^* \lambda_{\min}}{3}  .
\end{equation}
For any unit vector $\vv \in \R^n$, we have
\begin{equation*}
     \vv^{\top} (\mI - \eta \sqrt{\mQ} \Theta \sqrt{\mQ}) \vv = 1 - \eta \vv^{\top} \sqrt{\mQ} \Theta \sqrt{\mQ}  \vv  ,
\end{equation*}
where $\left\| \sqrt{\mQ}  \vv \right\|_2 \in [\sqrt{q^*}, 1]$, so for any $\eta \le \eta^*$, $\vv^{\top} (\mI - \eta \sqrt{\mQ} \Theta \sqrt{\mQ}) \vv \in [0, 1 - \eta \lambda_{\min} q^*]$, which implies that $\left\| \mI - \eta \sqrt{\mQ} \Theta \sqrt{\mQ} \right\|_2 \le 1 - \eta \lambda_{\min} q^*$. Thus, we have
\begin{align*}
& \left\| \mI -\eta  \sqrt{\mQ} J(\tilde{\theta}^{(t)})^{\top} J(\theta^{(t)}) \sqrt{\mQ} \right\|_2 \\ 
\le & \left\| \mI - \eta \sqrt{\mQ} \Theta \sqrt{\mQ} \right\|_2 + \eta \left\|\sqrt{\mQ} (\Theta - \Theta^{(0)}) \sqrt{\mQ} \right\|_2 + \eta \left\| \sqrt{\mQ} (J(\theta^{(0)})^{\top}J(\theta^{(0)}) - J(\tilde{\theta}^{(t)})^{\top} J(\theta^{(t)}) ) \sqrt{\mQ} \right\|_2 \\ 
\le & 1 - \eta \lambda_{\min} q^* +  \eta \left\| \sqrt{\mQ} (\Theta - \Theta^{(0)}) \sqrt{\mQ} \right\|_F + \eta \left\| \sqrt{\mQ} (J(\theta^{(0)})^{\top}J(\theta^{(0)}) - J(\tilde{\theta}^{(t)})^{\top} J(\theta^{(t)}) ) \sqrt{\mQ} \right\|_F \\
\le & 1 - \eta \lambda_{\min} q^* +  \eta \left\| \Theta - \Theta^{(0)}  \right\|_F + \eta \left\| J(\theta^{(0)})^{\top}J(\theta^{(0)}) - J(\tilde{\theta}^{(t)})^{\top} J(\theta^{(t)})  \right\|_F \\ 
\le & 1 - \eta \lambda_{\min} q^* + \frac{\eta q^* \lambda_{\min}}{3} + \frac{\eta M^2}{\sqrt[4]{\dl}} \left (\left\| \theta^{(t)} - \theta^{(0)} \right\|_2 + \left\| \tilde{\theta}^{(t)} - \theta^{(0)} \right\|_2 \right ) \le 1 - \frac{\eta q^* \lambda_{\min}}{2}
\end{align*}
for all $\dl \ge \max \left\{D_1, D_2, \left ( \frac{12 M^2 C_0}{q^* \lambda_{\min}} \right) ^4 \right \}$. This implies that
\begin{align*}
    & \left\| \mI -\eta \sqrt{\mQ} J(\tilde{\theta}^{(t)})^{\top} J(\theta^{(t)}) \sqrt{\mQ_{3\epsilon}^{(t)}} \right\|_2 \\ 
    \le \; &  1 - \frac{\eta q^* \lambda_{\min}}{2} + \left\| \eta \sqrt{\mQ} J(\tilde{\theta}^{(t)})^{\top} J(\theta^{(t)}) \left ( \sqrt{\mQ_{3\epsilon}^{(t)}} - \sqrt{\mQ} \right ) \right\|_2 \\ 
    \le \; & 1 - \frac{\eta q^* \lambda_{\min}}{2} + \eta M^2 \sqrt{3 \epsilon} \le 1 - \frac{\eta q^* \lambda_{\min}}{3} 
\end{align*}
holds for all $\epsilon \le \epsilon_0$. Thus, \Cref{eqn:7-1} holds for $t+1$.
\end{proof}

Now let us return to the proof of \Cref{thm:approx}.
Choose and fix an $\epsilon$ such that $\epsilon < \min \{ \epsilon_0, \frac{1}{3} \left ( \frac{q^* \lambda_{\min}}{3 \lambda_{\max} + q^* \lambda_{\min}} \right ) ^2 \}$, where $\epsilon_0$ is given by Theorem \ref{lem:tbound}.
Then, $t_\epsilon$ is also fixed. There exists $\tilde{D} \ge 0$ such that for any $\dl \ge \tilde{D}$, with probability at least $(1 - \delta)$,
the inequalities in \Cref{lem:local-lip,lem:tbound} hold, and $\norm{\Theta - \Theta^{(0)}}_F \le \frac{q^* \lambda_{\min}}{3}$.
This implies that
\begin{equation*}
    \norm{\Theta^{(0)}}_2 \le \norm{\Theta}_2 + \norm{\Theta - \Theta^{(0)}}_F \le \lambda_{\max} + \frac{q^* \lambda_{\min}}{3}   .
\end{equation*}
We still denote $B = 1 + \eta^* M^2$ and $C_0 = \frac{M B^{t_\epsilon} R_0}{B - 1} +  \frac{3 \sqrt{1 + 3\epsilon} M B^{t_\epsilon}R_0}{ q^* \lambda_{\min}}$.
\Cref{lem:tbound} guarantees that for all $t$, we have $\theta^{(t)} \in B(\theta^{(0)},C_0)$.
Thus, we have
\begin{align*}
    \left\| \mI - \eta \sqrt{\mQ} \Theta^{(0)}\sqrt{\mQ} \right\|_2 & \le \left\| \mI - \eta \sqrt{\mQ} \Theta \sqrt{\mQ} \right\|_2 + \eta \left\| \sqrt{\mQ} (\Theta - \Theta^{(0)}) \sqrt{\mQ} \right\|_2 \\ 
    & \le 1 - \eta \lambda_{\min} q^* + \frac{\eta q^* \lambda_{\min}}{3} = 1 - \frac{2 \eta q^* \lambda_{\min}}{3}  .
\end{align*}
It follows that
\begin{align*}
    \left\| \mI - \eta \sqrt{\mQ} \Theta^{(0)} \sqrt{\mQ_{3\epsilon}^{(t)}} \right\|_2 & \le \left\| \mI - \eta \sqrt{\mQ} \Theta^{(0)}\sqrt{\mQ} \right\|_2 + \left\| \eta \sqrt{\mQ} \Theta^{(0)} \left ( \sqrt{\mQ_{3\epsilon}^{(t)}} - \sqrt{\mQ} \right ) \right\|_2 \\ 
    & \le 1 - \frac{2 \eta q^* \lambda_{\min}}{3} + \eta (\lambda_{\max} + \frac{ q^* \lambda_{\min}}{3}) \sqrt{3 \epsilon}  .
\end{align*}
Thus, for all $\epsilon < \frac{1}{3} \left ( \frac{q^* \lambda_{\min}}{3 \lambda_{\max} + q^* \lambda_{\min}} \right ) ^2$, we have
\begin{equation}
\label{eqn:lem-approx-thm}
    \left\| \mI - \eta \sqrt{\mQ} \Theta^{(0)} \sqrt{\mQ_{3\epsilon}^{(t)}} \right \|_2 \le 1 - \frac{\eta q^* \lambda_{\min}}{3}  .
\end{equation}

The update rule of the GRW for the linearized neural network is:
\begin{equation}
\label{eqn:update-lin}
    \theta^{(t+1)}_{\lin} = \theta^{(t)}_{\lin} - \eta J(\theta^{(0)}) \mQ^{(t)} g_{\lin}(\theta^{(t)})
\end{equation}
where we use the subscript ``lin'' to denote the linearized neural network, and with a slight abuse of notion denote $g_{\lin}(\theta^{(t)}) = g(\theta^{(t)}_{\lin})$.

First, let us consider the training data $\mX$. Denote $\Delta_t = g_{\lin}(\theta^{(t)}) -  g(\theta^{(t)})$. We have
\begin{equation*}
\left \{
\begin{aligned}
 g_{\lin}(\theta^{(t+1)}) -  g_{\lin}(\theta^{(t)}) &= - \eta  J(\theta^{(0)})^{\top} J(\theta^{(0)}) \mQ^{(t)} g_{\lin}(\theta^{(t)}) \\ 
 g(\theta^{(t+1)}) - g(\theta^{(t)}) &= - \eta  J(\tilde{\theta}^{(t)})^{\top} J(\theta^{(t)}) \mQ^{(t)} g(\theta^{(t)})
\end{aligned}
\right .
\end{equation*}
where $\tilde{\theta}^{(t)}$ is some linear interpolation between $\theta^{(t)}$ and $\theta^{(t+1)}$. Thus, we have
\begin{align*}
    \Delta_{t+1} - \Delta_t = &\eta \left [J(\tilde{\theta}^{(t)})^{\top} J(\theta^{(t)}) - J(\theta^{(0)})^{\top} J(\theta^{(0)}) \right ] \mQ^{(t)} g(\theta^{(t)}) \\ 
    & -  \eta J( \theta^{(0)})^{\top} J(\theta^{(0)}) \mQ^{(t)} \Delta_t  .
\end{align*}
By \Cref{lem:local-lip}, we have
\begin{align*}
    & \left\| J(\tilde{\theta}^{(t)})^{\top} J(\theta^{(t)}) - J(\theta^{(0)})^{\top} J(\theta^{(0)}) \right\|_F \\ 
    \le  \; & \left\| \left ( J(\tilde{\theta}^{(t)}) - J(\theta^{(0)}) \right )^{\top} J(\theta^{(t)}) \right\|_F + \left\| J(\theta^{(0)})^{\top} \left ( J(\theta^{(t)}) - J(\theta^{(0)}) \right ) \right\|_F \\
    \le \;  & 2M^2 C_0 \dl^{-1/4}  ,
\end{align*}
which implies that for all $t < t_\epsilon$, we have
\begin{align*}
    & \left\| \Delta_{t+1} \right\|_2 \\
    \le & \left\| \left [ \mI -  \eta J( \theta^{(0)})^{\top} J(\theta^{(0)}) \mQ^{(t)} \right ] \Delta_t \right\|_2 + \left\| \eta \left [J(\tilde{\theta}^{(t)})^{\top} J(\theta^{(t)}) - J(\theta^{(0)})^{\top} J(\theta^{(0)}) \right ] \mQ^{(t)} g(\theta^{(t)}) \right\|_2 \\ 
    \le & \left\| \mI -  \eta J( \theta^{(0)})^{\top} J(\theta^{(0)}) \mQ^{(t)} \right\|_F \left\| \Delta_t \right\|_2 + \eta \left\| J(\tilde{\theta}^{(t)})^{\top} J(\theta^{(t)}) - J(\theta^{(0)})^{\top} J(\theta^{(0)}) \right\|_F \left\| g(\theta^{(t)}) \right\|_2 \\ 
    \le & (1 + \eta M^2) \left\| \Delta_t \right\|_2 + 2 \eta M^2 C_0 B^t R_0  \dl^{-1/4} \\ 
    \le & B \left\| \Delta_t \right\|_2 + 2 \eta M^2 C_0  B^t R_0\dl^{-1/4}  .
\end{align*}
This implies that
\begin{equation*}
    B^{-(t+1)} \left\| \Delta_{t+1} \right\|_2 \le B^{-t} \left\| \Delta_t \right\|_2 + 2 \eta M^2 C_0 B^{-1} R_0 \dl^{-1/4} .
\end{equation*}
Since $\Delta_0 = 0$, it follows that for all $t \le t_\epsilon$, we have
\begin{equation*}
\left\| \Delta_{t} \right\|_2 \le 2 t \eta M^2 C_0 B^{t-1} R_0  \dl^{-1/4}  ,
\end{equation*}
and in particular, we have
\begin{equation*}
    \left\| \sqrt{\mQ} \Delta_{t_\epsilon} \right\|_2  \le  \left\| \Delta_{t_\epsilon} \right\|_2 \le 2 t_\epsilon \eta M^2 C_0 B^{t_\epsilon-1} R_0  \dl^{-1/4}  .
\end{equation*}
For $t \ge t_\epsilon$, by \Cref{eqn:update-rule-teps}, we have
\begin{align*}
    \sqrt{\mQ} \Delta_{t+1} - \sqrt{\mQ} \Delta_t = &\eta \sqrt{\mQ} \left [J(\tilde{\theta}^{(t)})^{\top} J(\theta^{(t)}) - J(\theta^{(0)})^{\top} J(\theta^{(0)}) \right ]  \sqrt{\mQ_{3\epsilon}^{(t)}} \left [ \sqrt{\mQ} g(\theta^{(t)}) \right ] \\ 
    & -  \eta \sqrt{\mQ} J( \theta^{(0)})^{\top} J(\theta^{(0)}) \sqrt{\mQ_{3\epsilon}^{(t)}} \left [ \sqrt{\mQ} \Delta_t \right ]  .
\end{align*}
Let $\mA = \mI - \eta \sqrt{\mQ} J( \theta^{(0)})^{\top} J(\theta^{(0)}) \sqrt{\mQ_{3\epsilon}^{(t)}} = \mI - \eta \sqrt{\mQ} \Theta^{(0)} \sqrt{\mQ_{3\epsilon}^{(t)}}$. Then, we have
\begin{equation*}
    \sqrt{\mQ}  \Delta_{t+1} = \mA  \sqrt{\mQ} \Delta_t + \eta \sqrt{\mQ} \left [J(\tilde{\theta}^{(t)})^{\top} J(\theta^{(t)}) - J(\theta^{(0)})^{\top} J(\theta^{(0)}) \right ] \sqrt{\mQ_{3\epsilon}^{(t)}} \left ( \sqrt{\mQ} g(\theta^{(t)}) \right )  .
\end{equation*}
Let $\gamma = 1 - \frac{\eta q^* \lambda_{\min} }{3} < 1$.
Combining the above with \Cref{lem:tbound} and \Cref{eqn:lem-approx-thm}, we have
\begin{align*}
& \left\| \sqrt{\mQ}  \Delta_{t+1} \right\|_2 \\
\le \; & \left\| \mA \right\|_2 \left\| \sqrt{\mQ}  \Delta_t \right\|_2 + \eta \left\|\sqrt{\mQ}  \left [J(\tilde{\theta}^{(t)})^{\top} J(\theta^{(t)}) - J(\theta^{(0)})^{\top} J(\theta^{(0)}) \right ] \sqrt{\mQ_{3\epsilon}^{(t)}} \right\|_2 \left\| \sqrt{\mQ} g(\theta^{(t)}) \right\|_2 \\ 
\le \; & \gamma \left\| \sqrt{\mQ}  \Delta_t \right\|_2 + \eta \left\| J(\tilde{\theta}^{(t)})^{\top} J(\theta^{(t)}) - J(\theta^{(0)})^{\top} J(\theta^{(0)}) \right\|_F \sqrt{1 + 3 \epsilon} \gamma^{t - t_\epsilon} B^{t_\epsilon} R_0 \\ 
\le \; & \gamma \left\| \sqrt{\mQ} \Delta_t \right\|_2 + 2 \eta M^2 C_0 \sqrt{1 + 3 \epsilon} \gamma^{t - t_\epsilon} B^{t_\epsilon} R_0 \dl^{-1/4}  ,
\end{align*}
which implies that
\begin{equation*}
    \gamma^{-(t+1)} \left\| \sqrt{\mQ} \Delta_{t+1} \right\|_2 \le \gamma^{-t} \left\| \sqrt{\mQ} \Delta_t \right\|_2 + 2 \eta M^2 C_0 \sqrt{1 + 3 \epsilon} \gamma^{-1 - t_\epsilon} B^{t_\epsilon} R_0 \dl^{-1/4} .
\end{equation*}

Next, we consider an arbitrary test point $x$ such that $\norm{x}_2 \le 1$.
Denote $\delta_t = f_{\lin}^{(t)}(x) - f^{(t)}(x)$.
Then, we have
\begin{equation*}
\left \{
\begin{aligned}
f_{\lin}^{(t+1)}(x) - f_{\lin}^{(t)}(x) &= - \eta \nabla_\theta f(x; \theta^{(0)})^{\top} J(\theta^{(0)}) \mQ^{(t)} g_{\lin}(\theta^{(t)})   ;\\ 
f^{(t+1)}(x) - f^{(t)}(x) &= - \eta \nabla_\theta f(x; \tilde{\theta}^{(t)})^{\top} J(\theta^{(t)}) \mQ^{(t)} g(\theta^{(t)})   .
\end{aligned}
\right .
\end{equation*}
Therefore, we have
\begin{align*}
    \delta_{t+1} - \delta_t = &\eta \left [\nabla_\theta f(x; \tilde{\theta}^{(t)})^{\top} J(\theta^{(t)}) - \nabla_\theta f(x; \theta^{(0)})^{\top} J(\theta^{(0)}) \right ] \mQ^{(t)} g(\theta^{(t)}) \\ 
    & -  \eta \nabla_\theta f(x; \theta^{(0)})^{\top} J(\theta^{(0)}) \mQ^{(t)} \Delta_t  .
\end{align*}
For $t \le t_\epsilon$, we have
\begin{align*}
\left\| \delta_{t} \right\|_2 \le  & \eta \sum_{s=0}^{t-1} \left\|  \left [\nabla_\theta f(x; \tilde{\theta}^{(s)})^{\top} J(\theta^{(s)}) - \nabla_\theta f(x; \theta^{(0)})^{\top} J(\theta^{(0)}) \right ] \mQ^{(s)} \right\|_2 \left\|  g(\theta^{(s)}) \right\|_2 
\\ & + \eta \sum_{s=0}^{t-1} \left\| \nabla_\theta f(x; \theta^{(0)})^{\top} J(\theta^{(0)}) \mQ^{(s)} \right\|_2 \left\| \Delta_s \right\|_2 \\ 
\le & \eta \sum_{s=0}^{t-1} \left\| \nabla_\theta f(x; \tilde{\theta}^{(s)})^{\top} J(\theta^{(s)}) - \nabla_\theta f(x; \theta^{(0)})^{\top} J(\theta^{(0)}) \right\|_F \left\|  g(\theta^{(s)}) \right\|_2 \\
& + \eta \sum_{s=0}^{t-1}  \left\| \nabla_\theta f(x; \theta^{(0)}) \right\|_2 \left\| J(\theta^{(0)}) \right\|_F \left\| \Delta_s \right\|_2 \\ 
\le & 2 \eta M^2 C_0 \dl^{-1/4}  \sum_{s=0}^{t-1} B^s R_0  + \eta M^2 \sum_{s=0}^{t-1}(2 s \eta M^2 C_0 B^{s-1} R_0  \dl^{-1/4})  .
\end{align*}
Hence, there exists a constant $C_1$ such that $\left \| \delta_{t_\epsilon} \right\|_2 \le C_1 \dl^{-1/4}$. Then, for $t > t_\epsilon$, we have
\begin{align*}
\left\| \delta_{t} \right\|_2 - \left\| \delta_{t_\epsilon} \right\|_2 \le  & \eta \sum_{s=t_\epsilon}^{t-1} \left\|  \left [\nabla_\theta f(x; \tilde{\theta}^{(s)})^{\top} J(\theta^{(s)}) - \nabla_\theta f(x; \theta^{(0)})^{\top} J(\theta^{(0)}) \right ] \sqrt{\mQ_{3\epsilon}^{(s)}} \right\|_2 \left\| \sqrt{\mQ}  g(\theta^{(s)}) \right\|_2 
\\ & + \eta \sum_{s=t_\epsilon}^{t-1} \left\| \nabla_\theta f(x; \theta^{(0)})^{\top} J(\theta^{(0)}) \sqrt{\mQ_{3\epsilon}^{(s)}} \right\|_2 \left\| \sqrt{\mQ} \Delta_s \right\|_2 \\ 
\le & 2 \eta M^2 C_0 \dl^{-1/4} \sqrt{1 + 3 \epsilon} \sum_{s=t_\epsilon}^{t-1} \gamma^{s - t_\epsilon} B^{t_\epsilon} R_0 \\ 
& + \eta M^2 \sqrt{1 + 3 \epsilon} \sum_{s=t_\epsilon}^{t-1} \left ( 2 \gamma^{s - t_\epsilon} \eta M^2 C_0 B^{t_\epsilon} R_0 \left [ t_\epsilon B^{-1} + \sqrt{1 + 3 \epsilon} \gamma^{-1} (s - t_\epsilon)  \right ] \dl^{-1/4} \right )   .
\end{align*}
Note that $\sum_{t=0}^{\infty} t \gamma^{t}$ is finite as long as $\gamma \in (0,1)$. Therefore, there is a constant $C$ such that for any $t$, $\left\| \delta_t \right\|_2 \le C \dl^{-1/4}$ with probability at least $(1 - \delta)$ for any $\dl \ge \tilde{D}$.
\end{proof}

Now, let us finish the proof of \Cref{thm:grw-reg-nn}.

\begin{proof}
    Consider the linearized neural network $f_{\lin}^{(t)}(x)$ defined in \Cref{eqn:def-linearized-nn}.
    If we view $\oset{\nabla_\theta f^{(0)}(x_i)}_{i=1}^n$ as inputs and $\oset{y_i - f^{(0)}(x_i) + \dotp{\theta^{(0)}, \nabla_\theta f^{(0)} (x_i) }}_{i=1}^n$ as the targets, then the linearized neural network is a linear model. Thus by \Cref{thm:grw-reg-linear}, we have the following corollary.
    \begin{corollary}
    \label{cor:linear-nn-reg}
    If $\nabla_\theta f^{(0)}(x_1),\cdots,\nabla_\theta f^{(0)}(x_n)$ are linearly independent, then there exists $\eta_0 > 0$ such that when $f_{\lin}^{(t)}(x)$ is trained with any GRW method that satisfies \Cref{ass:qstar}, $\theta^{(t)}$ converges to a unique $\theta^*$ that does not depend on the sample weights $q_i^{(t)}$.
    \end{corollary}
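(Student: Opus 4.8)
The plan is to sandwich both the GRW-trained network $f^{(t)}$ and the ERM-trained network $f^{(t)}_{\erm}$ between their respective linearized surrogates, and then show that the two surrogates converge to one and the same function. First I would note that ERM is itself a GRW scheme: taking $q_i^{(t)} \equiv 1/n$ gives a static weight sequence that trivially satisfies \Cref{ass:qstar} with $q_i = 1/n > 0$. Hence both $f^{(t)}$ and $f^{(t)}_{\erm}$ are GRW-trained wide NNs from the same $\theta^{(0)}$, and the Approximation Theorem (\Cref{thm:approx}) applies to each. So for a sufficiently small learning rate and any $\delta > 0$ there exist $\tilde{D} > 0$ and constants $C, C' > 0$ such that whenever $\dl \ge \tilde{D}$, with probability at least $1 - \delta$ over the shared random initialization,
\[
\sup_{t \ge 0} \abs{ f^{(t)}_{\lin}(x) - f^{(t)}(x)} \le C \dl^{-1/4}, \qquad \sup_{t \ge 0} \abs{ f^{(t)}_{\lin,\erm}(x) - f^{(t)}_{\erm}(x)} \le C' \dl^{-1/4}
\]
for every $x$ with $\norm{x}_2 \le 1$, where $f^{(t)}_{\lin}$ and $f^{(t)}_{\lin,\erm}$ are the linearized networks (\Cref{eqn:def-linearized-nn}) trained by the GRW weights $q_i^{(t)}$ and by the uniform weights, respectively. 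A union bound over the two applications of \Cref{thm:approx}, together with the event from \Cref{lem:local-lip} that $\norm{\nabla_\theta f^{(0)}(x)}_2 \le M$ uniformly over $\norm{x}_2 \le 1$ (needed below), keeps the total failure probability at most $\delta$ after adjusting constants.

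Next I would show that the two linearized trajectories converge to the same limit. Each linearized network is a linear predictor in the fixed feature map $x \mapsto \nabla_\theta f^{(0)}(x)$ with targets $y_i - f^{(0)}(x_i) + \dotp{\theta^{(0)}, \nabla_\theta f^{(0)}(x_i)}$; since $\nabla_\theta f^{(0)}(x_1),\dots,\nabla_\theta f^{(0)}(x_n)$ are assumed linearly independent, \Cref{cor:linear-nn-reg} (which packages \Cref{thm:grw-reg-linear}) gives, for a sufficiently small $\eta$, a \emph{unique} limit $\theta^*$ of the linearized parameter trajectory that does not depend on the sample weights. Applying this once with the GRW weights and once with the uniform weights yields the \emph{same} $\theta^*$, so $\theta^{(t)}_{\lin} \to \theta^*$ and $\theta^{(t)}_{\lin,\erm} \to \theta^*$. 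Since $f^{(t)}_{\lin}(x) = f^{(0)}(x) + \dotp{\theta^{(t)}_{\lin} - \theta^{(0)}, \nabla_\theta f^{(0)}(x)}$ and likewise for the ERM surrogate, and the feature has norm at most $M$,
\[
\abs{ f^{(t)}_{\lin}(x) - f^{(t)}_{\lin,\erm}(x) } = \abs{ \dotp{\theta^{(t)}_{\lin} - \theta^{(t)}_{\lin,\erm}, \nabla_\theta f^{(0)}(x)} } \le M \norm{\theta^{(t)}_{\lin} - \theta^{(t)}_{\lin,\erm}}_2 \xrightarrow{\ t \to \infty\ } 0 .
\]

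Finally I would combine the three bounds with the triangle inequality,
\[
\abs{ f^{(t)}(x) - f^{(t)}_{\erm}(x) } \le \abs{ f^{(t)}(x) - f^{(t)}_{\lin}(x) } + \abs{ f^{(t)}_{\lin}(x) - f^{(t)}_{\lin,\erm}(x) } + \abs{ f^{(t)}_{\lin,\erm}(x) - f^{(t)}_{\erm}(x) } ,
\]
which is at most $C\dl^{-1/4} + M\norm{\theta^{(t)}_{\lin} - \theta^{(t)}_{\lin,\erm}}_2 + C'\dl^{-1/4}$. Taking $\limsup_{t\to\infty}$ annihilates the middle term, leaving $\limsup_{t\to\infty}\abs{ f^{(t)}(x) - f^{(t)}_{\erm}(x) } \le (C+C')\dl^{-1/4} = O(\dl^{-1/4})$, which tends to $0$ as $\dl \to \infty$ with $\dl \ge \tilde{D}(\lambda)$ — exactly the claim. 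The main obstacle is entirely front-loaded into \Cref{thm:approx}: the argument hinges on the linearization error being controlled \emph{uniformly in $t$}, so that it survives the $\limsup$; the remaining ingredients — ERM as a special case of GRW, uniqueness of the linear interpolator independent of the weights, and continuity of a bounded linear functional — are routine once that uniform bound is available. One minor point to check is that a single $\eta$ can be chosen small enough to meet the hypotheses of both \Cref{thm:approx} and \Cref{cor:linear-nn-reg}, which is immediate since each only asks for $\eta$ below some positive threshold.
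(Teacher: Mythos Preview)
You have proved the wrong statement. Corollary~\ref{cor:linear-nn-reg} is a claim about the \emph{linearized} network $f_{\lin}^{(t)}$: it asserts that the parameter trajectory $\theta^{(t)}$ of the linearized model converges to a unique $\theta^*$ independent of the GRW weights. Your write-up instead establishes $\limsup_{t\to\infty}\abs{f^{(t)}(x)-f^{(t)}_{\erm}(x)}=O(\dl^{-1/4})$, which is precisely the conclusion of Theorem~\ref{thm:grw-reg-nn}, not of the corollary. Worse, you invoke Corollary~\ref{cor:linear-nn-reg} itself as a step (``\Cref{cor:linear-nn-reg} (which packages \Cref{thm:grw-reg-linear}) gives\dots''), so the argument is circular for the stated goal.

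The paper's proof of the corollary is a one-line observation: the linearized network $f_{\lin}^{(t)}(x)=f^{(0)}(x)+\dotp{\theta^{(t)}-\theta^{(0)},\nabla_\theta f^{(0)}(x)}$ is literally a linear model in the fixed features $\nabla_\theta f^{(0)}(x_i)$, with targets $y_i-f^{(0)}(x_i)+\dotp{\theta^{(0)},\nabla_\theta f^{(0)}(x_i)}$. Since those features are assumed linearly independent, Theorem~\ref{thm:grw-reg-linear} (together with the interpolator-uniqueness argument immediately following it) applies verbatim to give the unique limit $\theta^*$. There is no approximation theorem, no sandwiching, and no width asymptotics involved; the corollary is purely about the linear surrogate and holds for any fixed $\dl$. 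What you wrote is essentially the paper's proof of Theorem~\ref{thm:grw-reg-nn}, which \emph{uses} the corollary rather than establishes it.
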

    
Let $\eta_1 = \min \{ \eta_0, \eta^* \}$, where $\eta_0$ is given by \Cref{cor:linear-nn-reg}.
Let $f^{(t)}_{\lin}(x)$ and $f^{(t)}_{\lin \erm}(x)$ be the linearized neural networks of $f^{(t)}(x)$ and $f^{(t)}_{\erm}(x)$ (which are two networks trained with GRW and ERM), respectively.
By \Cref{thm:approx}, for any $\delta \in (0,1)$, there exists $\tilde{D} > 0$ and a constant $C$ such that with probability at least $1 - \delta$, the following holds:
\begin{equation*}
\left \{ 
\begin{aligned}
&\sup_{t \ge 0} \left | f^{(t)}_{\lin}(x) - f^{(t)}(x) \right | \le C \dl^{-1/4}  ; \\ 
    &\sup_{t \ge 0} \left | f^{(t)}_{\lin \erm}(x) - f^{(t)}_{\erm}(x) \right | \le C \dl^{-1/4}  .
\end{aligned}
\right .
\end{equation*}
By \Cref{cor:linear-nn-reg}, we have
\begin{equation*}
    \lim_{t \rightarrow \infty} \left| f^{(t)}_{\lin}(x) - f^{(t)}_{\lin \erm}(x) \right| = 0 .
\end{equation*}
Summing the above yields
\begin{equation*}
    \limsup_{t \rightarrow \infty} \left| f^{(t)}(x) - f^{(t)}_{ \erm}(x) \right| \le 2C \dl^{-1/4}  ,
\end{equation*}
as desired.
\end{proof}

\section{Proof of Theorem \ref{thm:grw-reg-nn-regu}}
\label{app:proof-thm-reg-nn-regu}

Similar to \Cref{thm:grw-reg-nn}, proving this result needs a slightly different approximation theorem. We start with two necessary propositions.

\begin{proposition}[\cite{jacot2018ntk}, Proposition~1]
\label{prop:jacot-1}
If $\sigma$ is Lipschitz and $d_l \rightarrow \infty$ for $l = 1,\cdots,L$ sequentially, then for all $l = 1,\cdots,L$, the distribution of a single element of $\vh^l$ converges in probability to a zero-mean Gaussian process of covariance $\Sigma^{l}$ that is defined recursively by:
\begin{align*}
\Sigma^{1}(x, x') &= \frac{1}{d_0} x^{\top}x' + \beta^2  ; \\
\Sigma^{l}(x, x') &= \E_f[\sigma(f(x)) \sigma(f(x'))] + \beta^2  ,
\end{align*}
where $f$ is sampled from a zero-mean Gaussian process of covariance $\Sigma^{(l-1)}$.
\end{proposition}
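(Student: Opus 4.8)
The plan is to prove Proposition~\ref{prop:jacot-1} by induction on the layer index $l$, exploiting the fact that the hypothesis sends the widths to infinity \emph{sequentially} ($d_1\to\infty$, then $d_2\to\infty$, and so on). Sequentiality is what makes the argument clean: when we turn to layer $l+1$, the coordinates of the previous pre-activation $\vh^l$ are already \emph{exactly} i.i.d.\ copies (over the coordinate index) of a zero-mean Gaussian process with covariance $\Sigma^l$, so there is no need for any coupling or closeness-to-Gaussian estimate. Since a Gaussian process is determined by its finite-dimensional marginals, it suffices to fix a finite collection of inputs $x_1,\dots,x_k$ and show that the law of $(\vh^l_i(x_1),\dots,\vh^l_i(x_k))$ converges weakly to the centered Gaussian with covariance $\bigl(\Sigma^l(x_p,x_q)\bigr)_{p,q}$. (The final-layer weights being zero at initialization is irrelevant here, as the statement only concerns $\vh^1,\dots,\vh^L$, which depend on $\mW^0,\dots,\mW^{L-1}$, all standard-Gaussian at initialization.)

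For the base case $l=1$, each coordinate $\vh^1_i(x)=\tfrac{1}{\sqrt{d_0}}(\mW^{0}x)_i+\beta\vb^{0}_i$ is an exact linear combination of the i.i.d.\ $\gN(0,1)$ entries of $\mW^{0}$ and $\vb^{0}$, hence is exactly jointly Gaussian over $x_1,\dots,x_k$, with mean zero and $\E[\vh^1_i(x)\vh^1_i(x')]=\tfrac{1}{d_0}\dotp{x,x'}+\beta^2=\Sigma^1(x,x')$; distinct coordinates are independent because $\mW^{0}$ has independent rows and the $\vb^{0}_i$ are independent. For the inductive step, condition on $\vh^l$ (equivalently on $\vx^l=\sigma(\vh^l)$). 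Then, the rows of $\mW^{l}$ and the entries of $\vb^{l}$ being independent, the coordinates of $\vh^{l+1}$ are conditionally independent, and each is a normalized sum $\vh^{l+1}_i(x)=\tfrac{1}{\sqrt{d_l}}\sum_{j=1}^{d_l}\mW^{l}_{ij}\vx^l_j(x)+\beta\vb^{l}_i$ of conditionally i.i.d.\ mean-zero terms. A Lindeberg/Lyapunov multivariate CLT over $j$ yields conditional convergence to a centered Gaussian with covariance $\tfrac{1}{d_l}\sum_j \vx^l_j(x)\vx^l_j(x')+\beta^2$; here the Lipschitzness of $\sigma$ (with $\sigma(0)$ finite) is exactly what controls the third absolute moments, so the Lyapunov condition holds and the limiting variance is finite. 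Finally, by the induction hypothesis the coordinates $\vh^l_j$ are i.i.d.\ $\mathrm{GP}(0,\Sigma^l)$, so the law of large numbers gives $\tfrac{1}{d_l}\sum_j\sigma(\vh^l_j(x))\sigma(\vh^l_j(x'))\to\E_{f\sim\mathrm{GP}(0,\Sigma^l)}[\sigma(f(x))\sigma(f(x'))]$, making the limiting covariance equal to $\Sigma^{l+1}(x,x')$. Since this limiting law no longer depends on $\vh^l$, removing the conditioning preserves the convergence and the independence of coordinates.

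The part requiring the most care is the passage ``conditionally Gaussian, then average out the conditioning'': the conditional variance $\tfrac{1}{d_l}\norm{\vx^l(x)}_2^2$ must converge to a \emph{deterministic} limit, for otherwise $\vh^{l+1}_i$ would be a Gaussian mixture rather than a Gaussian. This is precisely what the LLN over coordinates buys us, and it is only legitimate because the sequential-limit convention guarantees genuine i.i.d.\ structure at layer $l$. If one instead wanted the simultaneous-width version of the statement, this is the step that breaks, and one would need the Tensor Programs formalism (Yang), or a quantitative Stein / Gaussian-conditioning argument (in the style of Matthews et al.\ and Lee et al.) showing the pre-activations are $O(d^{-1/2})$-close to the Gaussian fixed point uniformly over the finite input set. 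A minor additional bookkeeping point is to carry the whole argument at the level of the vector $(\vh^l_j(x_1),\dots,\vh^l_j(x_k))$ rather than one input at a time, so that the multivariate CLT delivers the full finite-dimensional Gaussian law and the LLN delivers the full covariance matrix $\bigl(\Sigma^{l+1}(x_p,x_q)\bigr)_{p,q}$.
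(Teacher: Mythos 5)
The paper states this proposition as a cited result from Jacot et al.\ (2018), Proposition~1, and does not reproduce a proof, so there is no internal argument to compare against; what follows assesses your sketch on its own merits. Your induction-on-layers argument---exact Gaussianity at the first layer, conditioning on the layer-$l$ pre-activations, concentration of the conditional covariance via the LLN over hidden units, and then removal of the conditioning---is the standard one for the sequential width limit and reaches the right conclusion. You also correctly interpret ``converges in probability'' as weak convergence of finite-dimensional marginals, correctly observe that the zero initialization of $\mW^L$ is irrelevant since the statement concerns only $\vh^1,\dots,\vh^L$, and correctly flag that the LLN step is precisely what fails in the simultaneous-width regime.

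One overkill you should remove, both because it is unnecessary and because it misattributes the role of Lipschitzness: once you condition on $\vx^l = \sigma(\vh^l)$, the next pre-activation $\vh^{l+1}_i(x) = \tfrac{1}{\sqrt{d_l}}\sum_{j} \mW^l_{ij}\vx^l_j(x) + \beta\vb^l_i$ is a finite linear combination of independent standard Gaussians, hence \emph{exactly} conditionally Gaussian for every finite $d_l$---no Lindeberg or Lyapunov CLT is needed. A multivariate CLT becomes relevant only when the weight initialization is non-Gaussian (the setting considered by Matthews et al.\ and Lee et al.). Consequently the Lipschitzness of $\sigma$ is not controlling third absolute moments in a Lyapunov condition; its actual job is to guarantee that $\sigma(f(x))$ is square-integrable under the layer-$l$ Gaussian process, so that $\Sigma^{l+1}$ is well-defined and finite and the strong LLN over hidden units converges to a deterministic limit. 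With that correction, the sketch is complete and matches the argument in the cited source.
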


\begin{proposition}
\label{prop:eigen}
For any positive definite symmetric matrix $\mH \in \R^{n \times n}$, denote its largest and smallest eigenvalues by $\lambda_{\max}$ and $\lambda_{\min}$. Then, for any positive semi-definite diagonal matrix $\mQ = \diag (q_1,\cdots,q_n)$, $\mH \mQ$ has $n$ eigenvalues that all lie in $[\min_i q_i \cdot \lambda_{\min}, \max_i q_i \cdot \lambda_{\max}]$.
\end{proposition}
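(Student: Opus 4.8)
The plan is to reduce the problem to a statement about a \emph{symmetric} positive semi-definite matrix, where eigenvalues are governed by the Rayleigh quotient, and then dispose of the degenerate case by a perturbation argument. First I would treat the case where $\mQ$ is invertible. Then $\mQ^{1/2}$ exists and is invertible, and $\mH\mQ = \mQ^{-1/2}\big(\mQ^{1/2}\mH\mQ^{1/2}\big)\mQ^{1/2}$, so $\mH\mQ$ is similar to $\mS := \mQ^{1/2}\mH\mQ^{1/2}$. Similar matrices have the same eigenvalues, so it suffices to bound the eigenvalues of $\mS$; and $\mS$ is symmetric and positive definite, hence has $n$ real positive eigenvalues.

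Next I would bound the eigenvalues of $\mS$ using the Courant--Fischer characterization. For any nonzero $\vv \in \R^n$, writing $\vw = \mQ^{1/2}\vv$, one has $\vv^\top \mS \vv = \vw^\top \mH \vw$, and the two elementary sandwiches $\lambda_{\min}\|\vw\|_2^2 \le \vw^\top\mH\vw \le \lambda_{\max}\|\vw\|_2^2$ together with $\min_i q_i \cdot \|\vv\|_2^2 \le \|\vw\|_2^2 = \vv^\top\mQ\vv \le \max_i q_i \cdot \|\vv\|_2^2$ give $\min_i q_i \cdot \lambda_{\min} \le \vv^\top\mS\vv / \|\vv\|_2^2 \le \max_i q_i \cdot \lambda_{\max}$. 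Since $\lambda_{\min}(\mS) = \min_{\vv\ne 0}\vv^\top\mS\vv/\|\vv\|_2^2$ and $\lambda_{\max}(\mS) = \max_{\vv\ne 0}\vv^\top\mS\vv/\|\vv\|_2^2$, every eigenvalue of $\mS$, and therefore every eigenvalue of $\mH\mQ$, lies in $[\min_i q_i \cdot \lambda_{\min}, \max_i q_i \cdot \lambda_{\max}]$.

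Finally I would remove the invertibility assumption. If some $q_i = 0$ then $\min_i q_i = 0$ and the target interval is $[0, \max_i q_i \cdot \lambda_{\max}]$. Apply the invertible case to $\mQ_\epsilon := \mQ + \epsilon\mI$ for $\epsilon > 0$: its diagonal entries are $q_i + \epsilon \ge \epsilon > 0$, so every eigenvalue of $\mH\mQ_\epsilon$ lies in $[\epsilon\lambda_{\min}, (\max_i q_i + \epsilon)\lambda_{\max}]$. The characteristic polynomial of $\mH\mQ_\epsilon$ has coefficients that are polynomials in $\epsilon$, so its roots depend continuously on $\epsilon$; letting $\epsilon \to 0^+$, each eigenvalue of $\mH\mQ$ is a limit of eigenvalues of $\mH\mQ_\epsilon$ and hence lies in $[0, \max_i q_i \cdot \lambda_{\max}]$, which is contained in the claimed interval. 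The only points needing any care are this limiting step — invoking continuity of the roots of a polynomial in its coefficients — and the observation that Courant--Fischer cannot be applied to $\mH\mQ$ directly because it need not be symmetric; the similarity reduction to $\mS$ is exactly what repairs this, and everything else is routine.
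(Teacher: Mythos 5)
Your proof is correct, and it takes what one might call the dual reduction compared to the paper. Both arguments work by showing $\mH\mQ$ is similar to a symmetric matrix and then running a Rayleigh-quotient sandwich; the difference is which factor gets pulled apart. The paper factors $\mH = \mA^{\top}\mA$ (with $\mA$ full-rank, e.g.\ $\mA = \mH^{1/2}$) and conjugates by $\mA^{\top}$ to reach the symmetric matrix $\mA\mQ\mA^{\top}$; since $\mH$ is positive definite, $\mA$ is always invertible and there is no degenerate case to worry about, so the whole argument is a single stroke. You instead conjugate by $\mQ^{1/2}$ to reach $\mQ^{1/2}\mH\mQ^{1/2}$, which is cleaner to bound (the two sandwiches decouple into one for $\mH$ and one for $\mQ$) but costs you an extra step when some $q_i = 0$, which you correctly repair by perturbing to $\mQ + \epsilon\mI$ and invoking continuity of eigenvalues in the entries of the characteristic polynomial. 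Either route is fine; the paper's choice of which matrix to factor is the one that makes the singular case evaporate, while yours is arguably the more transparent Rayleigh-quotient computation at the price of the limiting argument.
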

\begin{proof}
$\mH$ is a positive definite symmetric matrix, so there exists $\mA \in \R^{n \times n}$ such that $\mH = \mA^{\top} \mA$, and $\mA$ is full-rank. First, any eigenvalue of $\mA \mQ \mA^{\top}$ is also an eigenvalue of $\mA^{\top} \mA \mQ$, because for any eigenvalue $\lambda$ of $\mA \mQ \mA^{\top}$ we have some $\vv \neq 0$ such that $\mA \mQ \mA^{\top} \vv = \lambda \vv$. Multiplying both sides by $\mA^{\top}$ on the left yields $\mA^{\top} \mA \mQ (\mA^{\top} \vv) = \lambda (\mA^{\top} \vv)$ which implies that $\lambda$ is also an eigenvalue of $\mA^{\top} \mA \mQ$ because $\mA^{\top} \vv \neq 0$ as $\lambda \vv \neq 0$.

Second, by condition we know that the eigenvalues of $\mA^{\top} \mA$ are all in $[\lambda_{\min}, \lambda_{\max}]$ where $\lambda_{\min} > 0$, which implies for any unit vector $\vv$, $\vv^{\top} \mA^{\top} \mA \vv \in [\lambda_{\min}, \lambda_{\max}]$, which is equivalent to $\left\| \mA \vv \right\|_2 \in [\sqrt{\lambda_{\min}}, \sqrt{\lambda_{\max}}]$. 
Thus, $\vv^{\top} \mA^{\top} \mQ \mA \vv \in [\lambda_{\min} \min_i q_i, \lambda_{\max} \max_i q_i]$, which implies that the eigenvalues of $\mA^{\top} \mQ \mA$ are all in $[\lambda_{\min} \min_i q_i, \lambda_{\max} \max_i q_i]$.

Thus, the eigenvalues of $\mH \mQ = \mA^{\top} \mA \mQ$ are all in $[\lambda_{\min} \min_i q_i, \lambda_{\max} \max_i q_i]$.
\end{proof}

\begin{lemma}[Approximation Theorem for Regularized GRW]
\label{thm:approx-regu}
For a wide fully-connected neural network $f$, denote $J(\theta) = \nabla_\theta f(\mX;\theta) \in \R^{p \times n}$ and $g(\theta) = \nabla_{\hat{y}} \ell(f(\mX; \theta), \mY) \in \R^n$. Given that the loss function $\ell$ satisfies: $\nabla_\theta g(\theta) = J(\theta) \mU(\theta)$ for any $\theta$, and $\mU(\theta)$ is a positive semi-definite diagonal matrix whose elements are uniformly bounded, we have: for any GRW that minimizes the regularized weighted empirical risk \Cref{eqn:grw-l2-pen} with a sufficiently small learning rate $\eta$, there is: for a sufficiently large $\dl$, with high probability over random initialization, on any test point $x$ such that $\| x \|_2 \le 1$, we have
\begin{equation}
    \sup_{t \ge 0} \left | f^{(t)}_{\lin \regu}(x) - f^{(t)}_{\regu}(x) \right | \le C \dl^{-1/4}  ,
\end{equation}
where both $f^{(t)}_{\lin \regu}$ and $f^{(t)}_{\regu}$ are trained by the same regularized GRW and start from the same initial point.
\end{lemma}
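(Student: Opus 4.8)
\textbf{Proof plan for Lemma \ref{thm:approx-regu}.}
The plan is to mirror the structure of the unregularized Approximation Theorem (\Cref{thm:approx}), inserting the extra $-\mu(\theta^{(t)}-\theta^{(0)})$ term into every update-rule computation and showing it does not break the argument; in fact the regularizer \emph{helps}, because it keeps $\theta^{(t)}$ in a bounded ball around $\theta^{(0)}$ uniformly in $t$, so the local-Lipschitz estimates apply throughout training. First I would write down the regularized GRW update for the wide network and for its linearization:
\begin{equation*}
\theta^{(t+1)} = \theta^{(t)} - \eta J(\theta^{(t)}) \mQ^{(t)} g(\theta^{(t)}) - \eta\mu(\theta^{(t)} - \theta^{(0)}),
\end{equation*}
and the analogous equation with $J(\theta^{(0)})$ and $g_{\lin}(\theta^{(t)})$ for $f^{(t)}_{\lin\regu}$. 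The map $g$ being a reweighted squared-loss gradient means $\nabla_\theta g(\theta) = J(\theta)\mU(\theta)$ with $\mU$ a bounded psd diagonal matrix, exactly the hypothesis assumed. Using \Cref{prop:jacot-1} (and \Cref{lem:ntk}) the limiting NTK $\Theta^{(0)}$ is positive definite with smallest eigenvalue $\lambda^{\min}>0$, and by \Cref{prop:eigen} the operator $\sqrt{\mQ}\,\Theta^{(0)}\sqrt{\mQ_{3\epsilon}^{(t)}} + \mu\mI$ has eigenvalues bounded below by $\mu$ and above by a constant, so $\|\mI - \eta(\sqrt{\mQ}\,\Theta^{(0)}\sqrt{\mQ_{3\epsilon}^{(t)}} + \mu\mI)\|_2 \le 1 - \eta\mu$ for $\eta$ small. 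This contraction rate $1-\eta\mu$ replaces the rate $1 - \tfrac13\eta q^*\lambda^{\min}$ used in \Cref{lem:tbound}, and it holds \emph{regardless of} the conditioning of $\Theta^{(0)}$ on the test-point gradient.

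The key steps, in order, would be: (i) re-prove the analogue of \Cref{lem:tbound}, giving a uniform-in-$t$ bound $\|\theta^{(t)}-\theta^{(0)}\|_2 \le C_0$ and a geometric decay $\|\sqrt{\mQ}g(\theta^{(t)})\|_2 + \mu\|\theta^{(t)}-\theta^{(0)}\|_2 \le \gamma^{t-t_\epsilon}(\cdots)$ with $\gamma = 1-\tfrac12\eta\mu$, which in turn gives $\sum_t\|\theta^{(t+1)}-\theta^{(t)}\|_2 < \infty$; here the regularizer is what forces the ball to be bounded even before invoking any NTK positivity, so this step is actually cleaner than the unregularized case (we do not need $\limsup_t \hat{\gR}<\epsilon$ as a hypothesis of the lemma, only boundedness of $\mU$). (ii) Invoke the Local Lipschitzness Lemma (\Cref{lem:local-lip}) on the ball $B(\theta^{(0)},C_0)$ to control $\|J(\theta)-J(\theta^{(0)})\|_F \le M\dl^{-1/4}\|\theta-\theta^{(0)}\|_2$ and similarly for $\nabla_\theta f(x;\cdot)$. (iii) Track $\Delta_t := g_{\lin}(\theta^{(t)}) - g(\theta^{(t)})$ on the training set: the regularizer contributes the \emph{same} linear term $-\eta\mu(\theta^{(t)}_{\lin}-\theta^{(t)})$ to both recursions and thus appears in $\Delta_{t+1}-\Delta_t$ only through the difference $\theta^{(t)}_{\lin}-\theta^{(t)}$, which is itself controlled; the genuinely new error is still $O(\dl^{-1/4})$ coming from $\|J(\tilde\theta^{(t)})^\top J(\theta^{(t)}) - J(\theta^{(0)})^\top J(\theta^{(0)})\|_F$, and summing the geometric series (ratio $\gamma<1$) against the per-step error $O(\dl^{-1/4}\gamma^{t-t_\epsilon})$ keeps $\sup_t\|\Delta_t\|_2 = O(\dl^{-1/4})$. (iv) Propagate to a test point $x$ with $\|x\|_2\le 1$ exactly as in the last part of the proof of \Cref{thm:approx}: write $\delta_t := f^{(t)}_{\lin\regu}(x) - f^{(t)}_{\regu}(x)$, use $\delta_{t+1}-\delta_t = \eta[\nabla_\theta f(x;\tilde\theta^{(t)})^\top J(\theta^{(t)}) - \nabla_\theta f(x;\theta^{(0)})^\top J(\theta^{(0)})]\mQ^{(t)}g(\theta^{(t)}) - \eta\nabla_\theta f(x;\theta^{(0)})^\top J(\theta^{(0)})\mQ^{(t)}\Delta_t - \eta\mu\delta_t^{\lin\text{-}part}$ (the regularizer acts diagonally on the parameter difference, hence contributes a bounded term), bound each piece by $O(\dl^{-1/4}\gamma^{t-t_\epsilon})$, and sum using $\sum_t t\gamma^t < \infty$.

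The main obstacle I anticipate is step (iii)–(iv) bookkeeping of how the regularization term interacts with the approximation error: in the unregularized proof the key algebraic fact is that the $\mI - \eta\sqrt{\mQ}\Theta^{(0)}\sqrt{\mQ_{3\epsilon}^{(t)}}$ factor is a contraction with a rate tied to $\lambda^{\min}q^*$, whereas now the driving matrix is $\mI - \eta\mu\mI - \eta\sqrt{\mQ}\Theta^{(0)}\sqrt{\mQ_{3\epsilon}^{(t)}}$ and one must be careful that, unlike $\Theta^{(0)}$ which could have tiny eigenvalues along some directions, the $\mu\mI$ shift gives a \emph{uniform} spectral gap — this is exactly why no positive-definiteness assumption on $\Theta^{(0)}$ restricted to the test gradient is needed, and why the lemma can be stated for an arbitrary bounded psd $\mU(\theta)$ rather than only the squared loss with a nondegenerate NTK. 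Once that contraction is established cleanly, the rest is a routine repetition of the telescoping / geometric-series estimates already carried out in \Cref{app:proof-thm-grw-reg-nn}, and the constant $C$ and threshold $\tilde D$ can be read off from \Cref{lem:local-lip} and \Cref{lem:ntk} (equivalently \Cref{prop:jacot-1}).
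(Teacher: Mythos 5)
Your plan is essentially the same as the paper's, and you have correctly identified the central conceptual point: the $\mu\mI$ shift from the regularizer provides a \emph{uniform} spectral gap, which is why the lemma needs only boundedness of $\mU(\theta)$ and no nondegeneracy of the NTK. Two details are worth flagging, one organizational and one substantive.

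First, the paper tracks the \emph{parameter-level} gap $\Delta_t := \theta^{(t)}_{\lin} - \theta^{(t)}$ rather than the output-level $g_{\lin}(\theta^{(t)}) - g(\theta^{(t)})$ you propose. With $\Delta_t$ defined at parameter level, the regularizer enters the recursion directly as $(1-\eta\mu)\Delta_t$, so the contraction is immediate:
\begin{equation*}
\Delta_{t+1} = \bigl[(1-\eta\mu)\mI - \eta\, J(\theta^{(0)})\,\mQ^{(t)} \mU(\tilde\theta^{(t)}) J(\tilde\theta^{(t)})^\top\bigr]\Delta_t + \eta\bigl(J(\theta^{(t)})-J(\theta^{(0)})\bigr)\mQ^{(t)} g(\theta^{(t)}).
\end{equation*}
Tracking outputs instead works but forces you to pass through $\theta^{(t)}_{\lin}-\theta^{(t)}$ anyway via the chain rule on $g$, adding a layer of bookkeeping the paper avoids. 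Second, the step ``$\sqrt{\mQ}\Theta^{(0)}\sqrt{\mQ_{3\epsilon}^{(t)}} + \mu\mI$ has eigenvalues $\ge\mu$, hence $\|\mI - \eta(\cdot)\|_2 \le 1-\eta\mu$'' is not correct as written: the matrix $\sqrt{\mQ}\Theta^{(0)}\sqrt{\mQ_{3\epsilon}^{(t)}}$ is not symmetric when $\mQ_{3\epsilon}^{(t)}\neq\mQ$, and for a non-normal matrix having real eigenvalues $\ge\mu$ does not bound the operator norm of $\mI - \eta(\cdot)$. The paper repairs this exactly as in the unregularized case: peel off the factor $(1-\eta\mu)$, symmetrize the remaining piece as $J(\theta^{(0)})\mQ^{(t)}\mU J(\theta^{(0)})^\top$ (which \emph{is} symmetric psd, so eigenvalue bounds transfer to the spectral norm), and absorb the discrepancy $J(\tilde\theta^{(t)})-J(\theta^{(0)})$ into an $O(\dl^{-1/4})$ error via \Cref{lem:local-lip}. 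Since you explicitly said you would mirror that unregularized computation, this reads as an expositional slip rather than a missing idea, but the step as literally stated would not hold.
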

\begin{proof}
Without loss of generality, assume that all elements of $\mU(\theta)$ are in $[0,1]$ for all $\theta$, and set $\eta \le (\mu + \lambda_{\min} + \lambda_{\max})^{-1}$.
If the elements of $\mU(\theta)$ are bounded by $[0,C]$, then we can set $\eta \le (\mu + C \lambda_{\min} + C \lambda_{\max})^{-1}$ and prove the result in the same way.

With $L_2$ penalty, the update rule of the GRW for the neural network is:
\begin{equation}
\label{eqn:update-wide-reg}
    \theta^{(t+1)} = \theta^{(t)} - \eta J(\theta^{(t)}) \mQ^{(t)} g(\theta^{(t)}) - \eta \mu (\theta^{(t)} - \theta^{(0)})  .
\end{equation}
And the update rule for the linearized neural network is:
\begin{equation}
\label{eqn:update-lin-reg}
    \theta^{(t+1)}_{\lin} = \theta^{(t)}_{\lin} - \eta J(\theta^{(0)}) \mQ^{(t)} g(\theta_{\lin}^{(t)}) - \eta \mu (\theta_{\lin}^{(t)} - \theta^{(0)})  .
\end{equation}
By \Cref{prop:jacot-1}, $f(x;\theta)$ converges in probability to a zero-mean Gaussian process. Thus, for any $\delta > 0$, there exists a constant $R_0 > 0$ such that with probability at least $(1-\delta/3)$, $\left\| g(\theta^{(0)} ) \right\|_2 < R_0$. 
Let $M$ be given by \Cref{lem:local-lip}. Denote $A = \eta M R_0$, and let $C_0 = \frac{4A}{\eta \mu}$ be given by \Cref{lem:local-lip}. Note that \Cref{lem:local-lip} only depends on the network structure, but does not depend on the update rule, so we can still use the lemma here.
By \Cref{lem:local-lip}, there exists $D_1$ such that for all $\dl \ge D_1$, with probability at least $(1 - \delta/3)$, \Cref{eqn:lem-local-lip} holds.

Similar to the proof of \Cref{prop:eigen}, we can show that for any $\tilde{\theta}$, all the non-zero eigenvalues of $J(\theta^{(0)}) \mQ^{(t)} U(\tilde{\theta}) J(\theta^{(0)})^{\top}$ are also eigenvalues of $J(\theta^{(0)})^{\top}J(\theta^{(0)}) \mQ^{(t)} U(\tilde{\theta}) $.
This is because for any eigenvalue $\lambda \neq 0$ such that $J(\theta^{(0)}) \mQ^{(t)} U(\tilde{\theta}) J(\theta^{(0)})^{\top} \vv = \lambda \vv$ for some $\vv$, we must have $J(\theta^{(0)})^{\top}J(\theta^{(0)}) \mQ^{(t)} U(\tilde{\theta}) (J(\theta^{(0)})^{\top} \vv) = \lambda (J(\theta^{(0)})^{\top} \vv) $, and $J(\theta^{(0)})^{\top} \vv \neq 0$ since $\lambda \vv \neq 0$, so $\lambda$ is also an eigenvalue of $J(\theta^{(0)})^{\top}J(\theta^{(0)}) \mQ^{(t)} U(\tilde{\theta})$.
On the other hand, by \Cref{lem:ntk}, $J(\theta^{(0)})^{\top}J(\theta^{(0)}) \mQ^{(t)}U(\tilde{\theta}) $ converges in probability to $\Theta \mQ^{(t)}U(\tilde{\theta})$ whose eigenvalues are all in $[0, \lambda_{\max}]$ by \Cref{prop:eigen}.
Hence, there exists $D_2$ such that for all $\dl \ge D_2$, with probability at least $(1 - \delta/3)$, the eigenvalues of $J(\theta^{(0)}) \mQ^{(t)} U(\tilde{\theta}) J(\theta^{(0)})^{\top}$ are all in $[0, \lambda_{\max} + \lambda_{\min}]$ for all $t$. 

By union bound, with probability at least $1 - \delta$, all the above ``$(1-\delta/3)$'' statements are true. 
Now we prove that there exists $D_0$ such that for all $\dl \ge D_0$, $\sup_{t \ge 0} \left\| \theta^{(t)} - \theta^{(0)} \right\|_2$ is bounded with high probability. Denote $a_t = \theta^{(t)} - \theta^{(0)}$.
By \Cref{eqn:update-wide-reg}, we have
\begin{align*}
    a_{t+1} = &(1 - \eta \mu) a_t - \eta [J(\theta^{(t)}) - J(\theta^{(0)})] \mQ^{(t)} g(\theta^{(t)}) \\ 
    & - \eta J(\theta^{(0)}) \mQ^{(t)} [g(\theta^{(t)}) - g(\theta^{(0)})] - \eta J(\theta^{(0)})\mQ^{(t)} g(\theta^{(0)})  ,
\end{align*}
which implies that
\begin{equation}
\label{eqn:thm10-1}
\begin{aligned}
    \left\| a_{t+1} \right\|_2 \le & \left\| (1-\eta \mu) \mI - \eta  J(\theta^{(0)}) \mQ^{(t)} U(\tilde{\theta}^{(t)}) J(\tilde{\theta}^{(t)})^{\top} \right\|_2 \left\| a_t \right\|_2 \\ 
    & + \eta \left\| J(\theta^{(t)}) - J(\theta^{(0)}) \right\|_F \left\| g(\theta^{(t)}) \right\|_2 + \eta \left\| J(\theta^{(0)}) \right\|_F \left\| g(\theta^{(0)}) \right\|_2 ,
\end{aligned}
\end{equation}
where $\tilde{\theta}^{(t)}$ is some linear interpolation between $\theta^{(t)}$ and $\theta^{(0)}$. Our choice of $\eta$ ensures that $\eta \mu < 1$.  

Next, we prove by induction that $\norm{a_t}_2 < C_0$.
It is true for $t=0$.
Suppose $\norm{a_t}_2 < C_0$, and consider $a_{t+1}$.
Let us look at the three terms on the right-hand side of \Cref{eqn:thm10-1}.
For the first term, we have
\begin{align*}
    & \left\| (1-\eta \mu) \mI - \eta  J(\theta^{(0)}) \mQ^{(t)} U(\tilde{\theta}^{(t)})J(\tilde{\theta}^{(t)})^{\top} \right\|_2 \\ 
    \le \; &  (1 - \eta \mu) \left\| \mI - \frac{\eta}{1 - \eta \mu} J(\theta^{(0)}) \mQ^{(t)} U(\tilde{\theta}^{(t)}) J(\theta^{(0)})^{\top} \right\|_2\\ 
    & + \eta \left\| J(\theta^{(0)}) \right\|_F \left\| J(\tilde{\theta}^{(t)}) - J(\theta^{(0)}) \right\|_F  .
\end{align*}
Since $\eta / (1 - \eta \mu) \le (\lambda_{\min} + \lambda_{\max})^{-1}$ by our choice of $\eta$, we have  
\begin{equation*}
    \left\| \mI - \frac{\eta}{1 - \eta \mu} J(\theta^{(0)}) \mQ^{(t)}  U(\tilde{\theta}^{(t)})J(\theta^{(0)})^{\top} \right\|_2 \le 1 .
\end{equation*}
On the other hand, since $\left\| a_t \right\|_2 < C_0$, we have $\left\| J(\theta^{(0)}) \right\|_F \left\| J(\tilde{\theta}^{(t)}) - J(\theta^{(0)}) \right\|_F \le \frac{M^2}{\sqrt[4]{\dl}}C_0$ by \Cref{eqn:lem-local-lip}. Therefore, there exists $D_3$ such that for all $\dl \ge D_3$,
\begin{equation}
\label{eqn:thm10-3}
    \left\| (1-\eta \mu) \mI - \eta  J(\theta^{(0)}) \mQ^{(t)} U(\tilde{\theta}^{(t)}) J(\tilde{\theta}^{(t)})^{\top} \right\|_2 \le 1 - \frac{\eta \mu}{2}  .
\end{equation}

For the second term, we have
\begin{equation}
\label{eqn:thm10-2}
\begin{aligned}
    \left\| g(\theta^{(t)}) \right\|_2 & \le \left\| g(\theta^{(t)}) - g(\theta^{(0)}) \right\|_2 + \left\| g(\theta^{(0)}) \right\|_2 \\ 
    & \le \left\| J(\tilde{\theta}^{(t)}) \right\|_2 \left \| U(\tilde{\theta}^{(t)}) \right \|_2 \left\| \theta^{(t)} - \theta^{(0)} \right\|_2 + R_0 \le M C_0 + R_0  .
\end{aligned}
\end{equation}

For the third term, we have
\begin{equation*}
    \eta \left\| J(\theta^{(0)}) \right\|_F \left\| g(\theta^{(0)}) \right\|_2 \le \eta M R_0 = A .
\end{equation*}
Thus, we have
\begin{equation*}
    \left\| a_{t+1} \right\|_2 \le \left ( 1 - \frac{\eta \mu}{2} \right )\left\| a_t \right\|_2 +  \frac{\eta M(M C_0 + R_0)}{\sqrt[4]{\dl}} + A .
\end{equation*}
Thus, there exists $D_4$ such that for all $\dl \ge D_4$, $ \left\| a_{t+1} \right\|_2 \le \left ( 1 - \frac{\eta \mu}{2} \right )\left\| a_t \right\|_2 + 2A$. This shows that if $\left\| a_t \right\|_2 < C_0$ is true, then $\left\| a_{t+1} \right\|_2 < C_0$ will also be true.

In conclusion, for all $\dl \ge D_0 = \max \{ D_1,D_2,D_3,D_4 \}$, $\left\| \theta^{(t)} - \theta^{(0)} \right\|_2 < C_0$ is true for all $t$. This also implies that for $C_1 = M C_0 + R_0$, we have $\left\| g(\theta^{(t)}) \right\|_2 \le C_1$ for all $t$ by \Cref{eqn:thm10-2}. Similarly, we can prove that $\| \theta_{\lin}^{(t)} - \theta^{(0)} \|_2 < C_0$ for all $t$.

Second, let $\Delta_t = \theta_{\lin}^{(t)} - \theta^{(t)}$. Then, we have
\begin{equation*}
    \Delta_{t+1} - \Delta_t = \eta (J(\theta^{(t)}) \mQ^{(t)} g(\theta^{(t)}) - J(\theta^{(0)}) \mQ^{(t)} g(\theta_{\lin}^{(t)}) - \mu \Delta_t)  ,
\end{equation*}
which implies that
\begin{equation*}
    \Delta_{t+1} = \left [ (1 - \eta \mu) \mI - \eta J(\theta^{(0)}) \mQ^{(t)} U(\tilde{\theta}^{(t)})J(\tilde{\theta}^{(t)})^{\top}  \right ] \Delta_t + \eta (J(\theta^{(t)}) - J(\theta^{(0)})) \mQ^{(t)} g(\theta^{(t)})  ,
\end{equation*}
where $\tilde{\theta}^{(t)}$ is some linear interpolation between $\theta^{(t)}$ and $\theta_{\lin}^{(t)}$. By \Cref{eqn:thm10-3}, with probability at least $(1-\delta)$ for all $\dl \ge D_0$, we have
\begin{align*}
    \left\| \Delta_{t+1} \right\|_2 &\le \left\| (1 - \eta \mu) \mI - \eta J(\theta^{(0)}) \mQ^{(t)}U(\tilde{\theta}^{(t)}) J(\tilde{\theta}^{(t)})^{\top} \right\|_2 \left\| \Delta_t \right\|_2 + \eta \left\| J(\theta^{(t)}) - J(\theta^{(0)}) \right\|_F \left\| g(\theta^{(t)}) \right\|_2 \\ 
    & \le \left ( 1 - \frac{\eta \mu}{2} \right ) \left\| \Delta_t \right\|_2 + \eta \frac{M}{\sqrt[4]{\dl}} C_0C_1  .
\end{align*}

Again, as $\Delta_0 = 0$, we can prove by induction that for all $t$,
\begin{equation*}
    \left\| \Delta_t \right\|_2 < \frac{2 M C_0 C_1}{\mu} \dl^{-1/4}  .
\end{equation*}

For any test point $x$ such that $\norm{x}_2 \le 1$, we have
\begin{align*}
    \left | f_{\regu}^{(t)}(x) - f_{\lin \regu}^{(t)}(x) \right | &= \left| f(x; \theta^{(t)}) - f_{\lin}(x; \theta^{(t)}_{\lin}) \right|  \\
    & \le \left| f(x; \theta^{(t)}) - f_{\lin}(x; \theta^{(t)}) \right | + \left | f_{\lin}(x; \theta^{(t)}) - f_{\lin}(x; \theta^{(t)}_{\lin}) \right | \\ 
    & \le \left| f(x; \theta^{(t)}) - f_{\lin}(x; \theta^{(t)}) \right | + \left\| \nabla_\theta f(x; \theta^{(0)}) \right\|_2 \left\| \theta^{(t)} - \theta_{\lin}^{(t)} \right\|_2  \\ 
    & \le \left| f(x; \theta^{(t)}) - f_{\lin}(x; \theta^{(t)}) \right | + M \left\| \Delta_t \right\|_2  .
\end{align*}

For the first term, note that
\begin{equation*}
\left \{ 
\begin{aligned}
f(x; \theta^{(t)}) - f(x; \theta^{(0)}) &= \nabla_{\theta} f(x; \tilde{\theta}^{(t)})(\theta^{(t)} - \theta^{(0)})  ; \\ 
f_{\lin}(x; \theta^{(t)}) - f_{\lin}(x; \theta^{(0)}) &= \nabla_{\theta} f(x; \theta^{(0)})(\theta^{(t)} - \theta^{(0)})  ,
\end{aligned}
\right . 
\end{equation*}
where $\tilde{\theta}^{(t)}$ is some linear interpolation between $\theta^{(t)}$ and $\theta^{(0)}$. Since $f(x; \theta^{(0)}) = f_{\lin}(x; \theta^{(0)})$,
\begin{equation*}
    \left| f(x; \theta^{(t)}) - f_{\lin}(x; \theta^{(t)}) \right | \le \left\| \nabla_{\theta} f(x; \tilde{\theta}^{(t)}) -  \nabla_{\theta} f(x; \theta^{(0)}) \right\|_2 \left\| \theta^{(t)} - \theta^{(0)} \right\|_2 \le \frac{M}{\sqrt[4]{\dl}} C_0^2  .
\end{equation*}
Thus, for all $\dl \ge D_0$, with probability at least $(1 - \delta)$ for all $t$ and all $x$,
\begin{equation*}
    \left | f_{\regu}^{(t)}(x) - f_{\lin \regu}^{(t)}(x) \right | \le \left( MC_0^2 + \frac{2 M^2 C_0 C_1}{\mu} \right) \dl^{-1/4} = O(\dl^{-1/4})  ,
\end{equation*}
which proves the lemma.
\end{proof}

\begin{lemma}
\label{lem:lin-reg}
    Suppose there exists $M_0 > 0$ such that $\left\| \nabla_\theta f^{(0)}(x) \right\|_2 \le M_0$ for all $x$ within the unit ball.
    If the gradients $\nabla_{\theta} f^{(0)}(x_1) ,\cdots,\nabla_{\theta} f^{(0)}(x_n) $ are linearly independent, and the empirical training risk of $f_{\lin \regu}^{(t)}$ satisfies $\limsup_{t \rightarrow \infty} \hat{\gR}(f_{\lin \regu}^{(t)}) < \epsilon$ for some $\epsilon > 0$, then for any $x$ within the unit ball, we have
    \begin{equation*}
        \limsup_{t \rightarrow \infty} \left | f_{\lin \regu}^{(t)}(x) - f_{\lin \erm}^{(t)}(x) \right | = O(\sqrt{\epsilon})  .
    \end{equation*}
\end{lemma}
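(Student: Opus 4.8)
The plan is to push everything through the $n$-dimensional subspace spanned by the initialization gradients, where the linearized model is just a regularized linear least-squares problem. Write $J_0 := [\nabla_\theta f^{(0)}(x_1),\cdots,\nabla_\theta f^{(0)}(x_n)] \in \R^{p \times n}$, $\vr := [y_1 - f^{(0)}(x_1),\cdots,y_n - f^{(0)}(x_n)]$, and $\mG := J_0^\top J_0$; the linear-independence hypothesis makes $\mG$ positive definite, and I denote its smallest eigenvalue by $\lambda_{\min} > 0$. The first step is the same observation as in the unregularized case: the regularized GRW update $\theta^{(t+1)} = \theta^{(t)} - \eta J_0 \mQ^{(t)}(J_0^\top(\theta^{(t)} - \theta^{(0)}) - \vr) - \eta \mu (\theta^{(t)} - \theta^{(0)})$ keeps $\theta^{(t)} - \theta^{(0)}$ inside $\sspan\{$columns of $J_0\}$ (the gradient term is explicitly $J_0$ times a vector, and the shrinkage term $-\eta\mu(\theta^{(t)} - \theta^{(0)})$ is a scalar multiple of a vector already in the subspace, with $\theta^{(0)} - \theta^{(0)} = \vzero$ in it). So I may write $\theta^{(t)} - \theta^{(0)} = J_0 \valpha^{(t)}$ for a unique $\valpha^{(t)} \in \R^n$, in which case $f_{\lin \regu}^{(t)}(x) - f^{(0)}(x) = \nabla_\theta f^{(0)}(x)^\top J_0 \valpha^{(t)}$ and the training residual vector equals $\mG\valpha^{(t)} - \vr$. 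The same decomposition applied to $f_{\lin \erm}^{(t)}$ gives a coefficient vector $\vgamma^{(t)}$ with residual $\mG\vgamma^{(t)} - \vr$.

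Next I would unpack the two hypotheses. The assumed training-error bound says $\hat\gR(f_{\lin \regu}^{(t)})$ — which is a fixed multiple of $\norm{\mG\valpha^{(t)} - \vr}_2^2 / n$ — stays below $\epsilon$ for all $t$ past some $T_0$, hence $\norm{\mG\valpha^{(t)} - \vr}_2^2 = O(n\epsilon)$ and therefore $\norm{\valpha^{(t)} - \mG^{-1}\vr}_2 = \norm{\mG^{-1}(\mG\valpha^{(t)} - \vr)}_2 \le \lambda_{\min}^{-1}\,O(\sqrt{n\epsilon})$ for $t \ge T_0$. On the other side, $f_{\lin \erm}^{(t)}$ is the linearized model trained by unregularized ERM, which is a special case of GRW with constant weights $1/n$; since the features $\nabla_\theta f^{(0)}(x_i)$ are linearly independent and the learning rate is small, Theorem~\ref{thm:grw-reg-linear} gives $\hat\gR(f_{\lin \erm}^{(t)}) \to 0$, i.e. $\norm{\mG\vgamma^{(t)} - \vr}_2 \to 0$, so $\vgamma^{(t)} = \mG^{-1}(\mG\vgamma^{(t)}) \to \mG^{-1}\vr =: \vgamma^*$.

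Combining these, for any test point $x$ with $\norm{x}_2 \le 1$ and any $t \ge T_0$,
\begin{equation*}
\left| f_{\lin \regu}^{(t)}(x) - f_{\lin \erm}^{(t)}(x) \right| = \left| \nabla_\theta f^{(0)}(x)^\top J_0 (\valpha^{(t)} - \vgamma^{(t)}) \right| \le \norm{\nabla_\theta f^{(0)}(x)^\top J_0}_2 \left( \norm{\valpha^{(t)} - \vgamma^*}_2 + \norm{\vgamma^* - \vgamma^{(t)}}_2 \right) .
\end{equation*}
Each entry of $\nabla_\theta f^{(0)}(x)^\top J_0$ is an inner product $\nabla_\theta f^{(0)}(x)^\top \nabla_\theta f^{(0)}(x_i)$, bounded by $M_0^2$ using $\norm{\nabla_\theta f^{(0)}(x)}_2 \le M_0$ (the training inputs $x_i$ also lie in the unit ball, so $\norm{\nabla_\theta f^{(0)}(x_i)}_2 \le M_0$), hence $\norm{\nabla_\theta f^{(0)}(x)^\top J_0}_2 \le \sqrt{n}\,M_0^2$. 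Taking $\limsup_{t\to\infty}$ kills the $\norm{\vgamma^* - \vgamma^{(t)}}_2$ term and leaves $\norm{\valpha^{(t)} - \vgamma^*}_2$ bounded by $\lambda_{\min}^{-1}\,O(\sqrt{n\epsilon})$, so $\limsup_t |f_{\lin \regu}^{(t)}(x) - f_{\lin \erm}^{(t)}(x)| = O(\sqrt{\epsilon})$, with the implicit constant depending only on $n$, $M_0$ and $\lambda_{\min}$.

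The argument is mostly bookkeeping, and I do not expect a serious obstacle; the one point that deserves care is recognizing that the (possibly time-varying) GRW weights $\mQ^{(t)}$ and the regularization coefficient $\mu$ drop out of the geometry entirely — they influence only which residual vector $\mG\valpha^{(t)} - \vr$ one lands on, and that is precisely what the hypothesis $\limsup_t \hat\gR(f_{\lin \regu}^{(t)}) < \epsilon$ controls, so the regularization is never used directly here (it is used only upstream, in Theorem~\ref{thm:grw-reg-nn-regu}, to guarantee the training error can be kept small). A secondary check is the invocation of Theorem~\ref{thm:grw-reg-linear} for $f_{\lin \erm}^{(t)}$: it needs linear independence of the features (given) together with a sufficiently small learning rate (inherited from the ambient hypotheses), and it is what upgrades ``training loss $\to 0$'' into genuine convergence of $\vgamma^{(t)}$ to $\mG^{-1}\vr$ inside the $n$-dimensional span.
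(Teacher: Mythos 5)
Your proof is correct and follows essentially the same strategy as the paper: confine $\theta^{(t)} - \theta^{(0)}$ to the $n$-dimensional span of $\{\nabla_\theta f^{(0)}(x_i)\}$, convert the training-risk bound into a distance-to-the-interpolator bound, invoke the unregularized convergence result to send the ERM iterates to the interpolator, and finish with Cauchy--Schwarz against $\nabla_\theta f^{(0)}(x)$. The only (harmless) difference is that you parametrize the span with $J_0$-coordinates and pay via $\norm{\mG^{-1}}_2$ and $\norm{\nabla_\theta f^{(0)}(x)^\top J_0}_2 \le \sqrt{n}M_0^2$, whereas the paper writes $\theta^{(t)}-\theta^*$ in an orthonormal basis of the span and bounds its Euclidean norm directly by $\sqrt{2\epsilon}/s_{\min}$, which applies Cauchy--Schwarz with $\nabla_\theta f^{(0)}(x)$ just once and yields the cleaner constant $M_0\sqrt{2\epsilon}/s_{\min}$; both bounds are $O(\sqrt{\epsilon})$ as required, and in fact your version handles the finite-$t$ discrepancy between $f_{\lin\erm}^{(t)}$ and the interpolator a bit more carefully (explicit triangle inequality plus $\limsup$) than the paper's phrasing.
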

\begin{proof}
    First, for all $t$ we have $\theta^{(t)} - \theta^{(0)} \in \sspan \{ \nabla_\theta f^{(0)}(x_1),\cdots,\nabla_\theta f^{(0)}(x_n) \}$.
    Let $\theta^*$ be the interpolator in $\sspan(\nabla_\theta f^{(0)}(x_1),\cdots,\nabla_\theta f^{(0)}(x_n))$, then the empirical risk of $\theta$ is $\frac{1}{2n} \sum_{i=1}^n \langle \theta - \theta^*, \nabla_\theta f^{(0)}(x_i) \rangle^2 = \frac{1}{2n} \left\| \nabla_\theta f^{(0)}(\mX)^{\top} (\theta - \theta^*) \right\|_2^2$.
    Thus, there exists $T> 0$ such that
    \begin{equation*}
         \left\| \nabla_\theta f^{(0)}(\mX)^{\top}(\theta^{(t)} - \theta^*) \right\|_2^2 \le 2n \epsilon \qquad \text{for all } t \ge T  .
    \end{equation*}

    Let the smallest singular value of $\frac{1}{\sqrt{n}} \nabla_\theta f^{(0)}(\mX)$ be $s_{\min}$.
    Then, we have $s_{\min} > 0$.
    Note that the column space of $\nabla_\theta f^{(0)}(\mX)$ is exactly $\sspan(\nabla_\theta f^{(0)}(x_1),\cdots,\nabla_\theta f^{(0)}(x_n))$. Define $\mH \in \R^{p \times n}$ such that its columns form an orthonormal basis of this subspace, then there exists $\mG \in R^{n \times n}$ such that $\nabla_\theta f^{(0)}(\mX) = \mH \mG$, and the smallest singular value of $\frac{1}{\sqrt{n}} \mG$ is also $s_{\min}$. Since $\theta^{(t)} - \theta^{(0)}$ is also in this subspace, there exists $\vv \in \R^n$ such that $\theta^{(t)} - \theta^* = \mH \vv$. Then we have $ \sqrt{2n \epsilon} \ge \left\| \mG^{\top} \mH^{\top} \mH \vv \right\|_2 = \left\| \mG^{\top} \vv \right\|_2$. Thus, $\left\| \vv \right\|_2 \le \frac{\sqrt{2 \epsilon}}{s_{\min}}$, which implies that 
    \begin{equation*}
        \left\| \theta^{(t)} - \theta^* \right\|_2 \le \frac{\sqrt{2 \epsilon}}{s_{\min}}  .
    \end{equation*}
    
    We have already proved in the previous results that if we minimize the unregularized risk with ERM, then $\theta$ always converges to the interpolator $\theta^*$. So for any $t \ge T$ and any test point $x$ such that $\left\| x \right\|_2 \le 1$,  we have
    \begin{equation*}
        |f_{\lin \regu}^{(t)}(x) - f_{\lin \erm}^{(t)}(x)| = |\langle \theta^{(t)} - \theta^*, \nabla_\theta f^{(0)}(x) \rangle | \le \frac{M_0 \sqrt{2 \epsilon}}{s_{\min}}  ,
    \end{equation*}
    as desired.
\end{proof}

Now we prove \Cref{thm:grw-reg-nn-regu}.
\begin{proof}
    Given that $\hat{\gR}(f_{\lin \regu}^{(t)}) < \epsilon$ for sufficiently large $t$, \Cref{thm:approx-regu} implies that
    \begin{equation*}
        \left | \hat{\gR}(f_{\lin \regu}^{(t)}) - \hat{\gR}(f_{\regu}^{(t)}) \right | = O(\dl^{-1/4} \sqrt{\epsilon} + \dl^{-1/2}) .
    \end{equation*}
    Thus, for a fixed $\epsilon$, there exists $D>0$ such that for all $d \ge D$, for sufficiently large $t$,
    \begin{equation*}
        \hat{\gR}(f_{\regu}^{(t)}) < \epsilon \Rightarrow \hat{\gR}(f_{\lin \regu}^{(t)}) < 2 \epsilon  .
    \end{equation*}
    By \Cref{thm:approx} and \Cref{thm:approx-regu}, we have
    \begin{equation*}
    \left \{
    \begin{aligned}
        & \sup_{t \ge 0} \left| f_{\lin \erm}^{(t)}(x) - f_{\erm}^{(t)}(x) \right| = O(\dl^{-1/4})  ; \\
& \sup_{t \ge 0} \left| f_{\lin \regu}^{(t)}(x) - f_{\regu}^{(t)}(x) \right| = O(\dl^{-1/4})  .
    \end{aligned}
    \right .
    \end{equation*}
    Combining these with \Cref{lem:lin-reg} yields
    \begin{equation*}
         \limsup_{t \rightarrow \infty} \left | f_{\regu}^{(t)}(x) - f_{\erm}^{(t)}(x) \right | = O(\dl^{-1/4} + \sqrt{\epsilon})  .
    \end{equation*}
Letting $\dl \rightarrow \infty$ leads to the result we need.
\end{proof}

\begin{remark}
    One might wonder whether $\| \nabla_\theta f^{(0)}(x) \|_2$ will diverge as $\dl \rightarrow \infty$. 
    In fact, in \Cref{lem:local-lip}, we have proved that there exists a constant $M$ such that with high probability, for any $\dl$ there is $\| \nabla_\theta f^{(0)}(x) \|_2 \le M$ for any $x$ such that $\| x \|_2 \le 1$. Therefore, it is fine to suppose that there exists such an $M_0$.
\end{remark}

\section{Proof of Theorem \ref{thm:grw-cls-linear}}
\label{app:proof-thm-grw-cls-linear}
\begin{proof}
    First, we show that $\hat{\theta}_{\mm}$ is unique.
    Suppose both $\theta_1$ and $\theta_2$ maximize $\min_i y_i \cdot \dotp{\theta, \vx_i}$ and $\norm{\theta_1}_2 = \norm{\theta_2}_2 = 1$.
    If $\theta_1 \neq \theta_2$, then we define $\theta = (\theta_1 + \theta_2)/2$ and $\theta_0 = \theta / \norm{\theta}_2$. Obviously, $\norm{\theta}_2 < 1$, and for all $i \in [n]$ there is $y_i \dotp{\theta, \vx_i} = ( y_i \dotp{\theta_1, \vx_i} + y_i \dotp{\theta_2, \vx_i} )/2$.
    Thus, we have $y_i \dotp{\theta_0, \vx_i} > \min \oset{ y_i \dotp{\theta_1, \vx_i}, y_i \dotp{\theta_2, \vx_i} }$,
    which implies that $\min_i y_i \dotp{\theta_0, \vx_i} > \min \oset{ \min_i y_i \dotp{\theta_1, \vx_i}, \min_i y_i \dotp{\theta_2, \vx_i} }$, which contradicts the fact that $\theta_1,\theta_2$ are max-margin classifiers.

    Without loss of generality, let $(\vx_1,y_1),\cdots,(\vx_m,y_m)$ be the samples with the smallest margin for $\vu$, that is $\argmin_i y_i \dotp{\vu, \vx_i} = \oset{1,\cdots,m}$.
    Denote $\gamma = \min_i y_i \dotp{\vu, \vx_i}$; then, $\gamma > 0$ since the training error converges to zero. Note that for the logistic loss, if $y_i \dotp{\theta, \vx_i} < y_j \dotp{\theta, \vx_j}$, then for any $M > 0$, there exists an $R_M > 0$ such that for all $R \ge R_M$, there is $\frac{\nabla_\theta \ell(\langle R \theta, \vx_i \rangle, y_i)}{\nabla_\theta \ell( \langle R \theta, \vx_j \rangle, y_j)} > M$.
    Since the training error converges to zero, we have $\norm{\theta^{(t)}}_2 \rightarrow \infty$. 
    So when $t$ is sufficiently large, the impact of $(\vx_j, y_j)$ on $\theta^{(t)}$ for $j > m$ is an infinitesimal compared to $j \le m$ since $\liminf_{t \rightarrow \infty} q_j^{(t)} > 0$.
    Thus, we must have $\vu \in \sspan \oset{\vx_1,\cdots,\vx_m}$.

    Let $\vu = \alpha_1 y_1 \vx_1 + \cdots + \alpha_m y_m \vx_m$. Now we show that $\alpha_i \ge 0$ for all $i=1,\cdots,m$.
    For a sufficiently large $t$, there is $\theta^{(t+1)} - \theta^{(t)} \approx \eta \sum_{i=1}^m \frac{q_i^{(t)}\exp(y_i \cdot \langle \theta^{(t)}, \vx_i \rangle)}{1 + \exp(y_i \cdot \langle \theta^{(t)}, \vx_i \rangle)} y_i \vx_i$.
    Since $\norm{\theta^{(t)}} \rightarrow \infty$, for all $i \in [m]$ we have $\alpha_i \propto \lim_{T \rightarrow \infty} \sum_{t=T_0}^{T} \frac{q_i^{(t)}\exp(y_i \cdot \langle \theta^{(t)}, \vx_i \rangle)}{1 + \exp(y_i \cdot \langle \theta^{(t)}, \vx_i \rangle)} := \lim_{T \rightarrow \infty} \alpha_i(T)$, where $T_0$ is sufficiently large.
    Here the notion $\alpha_i \propto \lim_{T \rightarrow \infty} \alpha_i(T)$ means that $\lim_{T \rightarrow \infty} \frac{\alpha_i(T)}{\alpha_j(T)} = \frac{\alpha_i}{\alpha_j}$ for any pair of $i,j$ and $\alpha_j \neq 0$. Note that each term in the sum is non-negative. This implies that $\alpha_1,\cdots,\alpha_m$ have the same sign.
    Meanwhile, $\sum_{i=1}^m \alpha_i \gamma = \sum_{i=1}^m \alpha_i y_i \cdot \langle \vu, \vx_i \rangle = \langle \vu, \vu \rangle > 0$. Thus $\alpha_i \ge 0$ for all $i \in [m]$ and at least one is positive.
    Now suppose $\vu \neq \hat{\theta}_{\mm}$, which means that $\gamma$ is smaller than the margin of $\hat{\theta}_{\mm}$.
    Then, for all $i=1,\cdots,m$, there is $y_i \cdot \langle \vu, \vx_i \rangle < y_i \cdot \langle \hat{\theta}_{\mm}, \vx_i \rangle$. This implies that $\langle \vu, \vu \rangle = \sum_{i=1}^m \alpha_i y_i \cdot \langle \vu, \vx_i \rangle < \sum_{i=1}^m \alpha_i y_i \cdot \langle \hat{\theta}_{\mm}, \vx_i \rangle = \langle \hat{\theta}_{\mm}, \vu \rangle$,
    which is a contradiction. Thus, $\vu = \hat{\theta}_{\mm}$.
\end{proof}

\section{Proof of Theorem \ref{thm:grw-cls-converge}}
\label{app:proof-thm-grw-cls-converge}

\begin{proof}
Denote the largest and smallest eigenvalues of $\mX^{\top} \mX$ by $\lambda^{\max}$ and $\lambda^{\min}$, and by condition we have $\lambda^{\min} > 0$. Let $\epsilon =\min \{ \frac{q^*}{3}, \frac{(q^* \lambda^{\min})^2}{192 \lambda^{\max 2}} \}$.
Then, similar to the proof in \Cref{app:proof-thm-grw-reg-linear}, there exists $t_\epsilon$ such that for all $t \ge t_\epsilon$ and all $i$, $q_i^{(t)} \in (q_i - \epsilon, q_i + \epsilon)$. Denote $\mQ = \diag (q_1,\cdots,q_n)$, then for all $t \ge t_\epsilon$, $\mQ^{(t)} := \mQ_\epsilon^{(t)} = \sqrt{\mQ} \sqrt{\mQ_{3 \epsilon}^{(t)}}$, where we use the subscript $\epsilon$ to indicate that $\left \| \mQ_\epsilon^{(t)} - \mQ \right \|_2 < \epsilon$.

First, we prove that $F(\theta)$ is $L$-smooth if $\| x_i \|_2 \le 1$ for all $i$. The gradient of $F$ is
\begin{equation*}
    \nabla F(\theta) = \sum_{i=1}^n q_i \nabla_{\hat{y}}\ell(\langle \theta,\vx_i \rangle, y_i) \vx_i  .
\end{equation*}

Since $\ell(\hat{y},y)$ is $L$-smooth in $\hat{y}$, for any $\theta_1,\theta_2$ and any $i$, we have
\begin{align*}
    & \ell(\langle \theta_2, \vx_i \rangle, y_i ) - \ell(\langle \theta_1, \vx_i \rangle, y_i ) \\ 
    \le \;  & \nabla_{\hat{y}}\ell(\langle \theta_1, \vx_i \rangle, y_i) \cdot (\langle \theta_2, \vx_i \rangle - \langle \theta_1, \vx_i \rangle) + \frac{L}{2} (\langle \theta_2, \vx_i \rangle - \langle \theta_1, \vx_i \rangle)^2 \\ 
    = \; & \langle \nabla_{\hat{y}}\ell(\langle \theta_1, \vx_i \rangle, y_i) \cdot \vx_i , \theta_2 - \theta_1 \rangle + \frac{L}{2} (\langle \theta_2 - \theta_1, \vx_i \rangle )^2 \\
    \le  \; & \langle \nabla_{\hat{y}}\ell(\langle \theta_1, \vx_i \rangle, y_i) \cdot \vx_i , \theta_2 - \theta_1 \rangle + \frac{L}{2} \left\| \theta_2 - \theta_1 \right\|_2^2  .
\end{align*}
Thus, we have
\begin{align*}
    F(\theta_2) - F(\theta_1) = & \sum_{i=1}^n q_i \left[ \ell(\langle \theta_2, \vx_i \rangle, y_i ) - \ell(\langle \theta_1, \vx_i \rangle, y_i ) \right] \\ 
    \le & \sum_{i=1}^n q_i \langle \nabla_{\hat{y}}\ell(\langle \theta_1, \vx_i \rangle, y_i) \cdot \vx_i , \theta_2 - \theta_1 \rangle + \frac{L}{2} \sum_{i=1}^n q_i \left\| \theta_2 - \theta_1 \right\|_2^2 \\ 
    = & \langle \nabla F(\theta_1), \theta_2 - \theta_1 \rangle + \frac{L}{2} \left\| \theta_2 - \theta_1 \right\|_2^2  ,
\end{align*}
which implies that $F(\theta)$ is $L$-smooth.

Denote $\tilde{g}(\theta) = \nabla_{\hat{y}} \ell(f(\mX;\theta), \mY) \in \R^n$, then $\nabla F(\theta^{(t)}) = \mX \mQ \tilde{g}(\theta^{(t)})$, and the update rule is
\begin{equation}
\label{eqn:update-134}
    \theta^{(t+1)} = \theta^{(t)} - \eta \mX \mQ^{(t)} \tilde{g}(\theta^{(t)})  .
\end{equation}
So by \Cref{def:smooth}, we have
\begin{equation}
\label{eqn:proof-135}
    F(\theta^{(t+1)}) \le F(\theta^{(t)}) - \eta \langle \mX \mQ \tilde{g}(\theta^{(t)}), \mX \mQ^{(t)} \tilde{g}(\theta^{(t)}) \rangle + \frac{\eta^2 L}{2} \left\|  \mX \mQ^{(t)} \tilde{g}(\theta^{(t)}) \right\|_2^2  .
\end{equation}

Let $\eta_1 = \frac{q^* \lambda^{\min}}{2 L (1+3\epsilon) \lambda^{\max}}$. Similar to the proof in \Cref{app:proof-thm-grw-reg-linear}, we can prove that for all $\eta \le \eta_1$, and for all $t \ge t_\epsilon$, we have
\begin{align*}
    F(\theta^{(t+1)}) & \le F(\theta^{(t)}) - \frac{\eta q^* \lambda^{\min}}{2} \left\| \sqrt{\mQ} \tilde{g}(\theta^{(t)}) \right\|_2^2 + \frac{\eta^2 L}{2} \left\| \mX \sqrt{\mQ_{3\epsilon}^{(t)}} \right\|_2^2 \left\| \sqrt{\mQ} \tilde{g}(\theta^{(t)}) \right\|_2^2 \\
    & \le F(\theta^{(t)}) - \frac{\eta q^* \lambda^{\min}}{2} \left\| \sqrt{\mQ} \tilde{g}(\theta^{(t)}) \right\|_2^2 + \frac{\eta^2 L}{2} \left \| \mX \right \|_2^2 (1+3\epsilon) \left\| \sqrt{\mQ} \tilde{g}(\theta^{(t)}) \right\|_2^2 \\ 
    & \le F(\theta^{(t)})- \frac{\eta q^* \lambda^{\min}}{4} \left\| \sqrt{\mQ} \tilde{g}(\theta^{(t)}) \right\|_2^2 \\ 
    & \le F(\theta^{(t)}) - \frac{\eta q^{*2} \lambda^{\min}}{4} \left\| \tilde{g}(\theta^{(t)}) \right\|_2^2  .
\end{align*}
This implies that $F(\theta^{(t)})$ is monotonically non-increasing. Since $F(\theta) \ge 0$, $F(\theta^{(t)})$ must converge as $t \rightarrow \infty$, and we need to prove that it converges to 0. Suppose that $F(\theta^{(t)})$ does not converge to 0, then there exists a constant $C > 0$ such that $F(\theta^{(t)}) \ge 2C$ for all $t$. On the other hand, it is easy to see that there exists $\theta^*$ such that $\ell(\langle \theta^*, \vx_i \rangle, y_i) < C$ for all $i$. 
The above inequality also implies that $\left\| \tilde{g}(\theta^{(t)}) \right\|_2 \rightarrow 0$ as $t \rightarrow \infty$ because we must have $F(\theta^{(t)}) - F(\theta^{(t+1)}) \rightarrow 0$.

Note that by \Cref{eqn:update-134}, we have
\begin{equation*}
    \left \| \theta^{(t+1)} - \theta^* \right \|_2^2 = \left \| \theta^{(t)} - \theta^* \right \|_2^2 + 2 \eta \langle \mX \mQ^{(t)} \tilde{g}(\theta^{(t)}), \theta^* - \theta^{(t)} \rangle + \eta^2 \left \| \mX \mQ^{(t)} \tilde{g}(\theta^{(t)}) \right\|_2^2  .
\end{equation*}
Define $F_t(\theta) = \sum_{i=1}^n q_i^{(t)} \ell(\langle \theta, \vx_i \rangle, y_i)$.
$F_t$ is convex because $\ell$ is convex and $q_i^{(t)}$ are non-negative, and $\nabla F_t(\theta^{(t)}) = \mX \mQ^{(t)} \tilde{g}(\theta^{(t)})$. Convexity guarantees that $F_t(\vy) \ge F_t(\vx) + \langle \nabla F_t(\vx), \vy - \vx \rangle$, so for all $t$ we have
\begin{equation*}
    \langle \mX \mQ^{(t)} \tilde{g}(\theta^{(t)}), \theta^* - \theta^{(t)} \rangle \le F_t(\theta^*) - F_t(\theta^{(t)}) \le F_t(\theta^*) - \frac{2}{3} F(\theta^{(t)}) \le C - \frac{4C}{3} = -\frac{C}{3} ,
\end{equation*}
because $q_i^{(t)} \ge q_i - \epsilon \ge \frac{2}{3} q_i$ and $\sum_{i=1}^n q_i^{(t)} = 1$. Since $\left\| \tilde{g}(\theta^{(t)}) \right\|_2 \rightarrow 0$, there exists $T > 0$ such that for all $t \ge T$ and all $\eta \le \eta_0$,
\begin{equation*}
    \left \| \theta^{(t+1)} - \theta^* \right \|_2^2 \le \left \| \theta^{(t)} - \theta^* \right \|_2^2  - \frac{\eta C}{3}  ,
\end{equation*}
which means that $ \left \| \theta^{(t)} - \theta^* \right \|_2^2 \rightarrow - \infty$ because $\frac{\eta C}{3}$ is a positive constant. This is a contradiction! Thus, $F(\theta^{(t)})$ must converge to 0, which is result (i).

(i) immediately implies (ii) because $\ell$ is strictly decreasing to 0 by the condition.

Now let us prove (iii).
First of all, the uniqueness of $\theta_R$ can be easily proved from the convexity of $F(\theta)$. The condition implies that $y_i \langle \theta_R, \vx_i \rangle > 0$, i.e. $\theta_R$ must classify all training samples correctly. If there are two different minimizers $\theta_R$ and $\theta_R'$ in whose norm is at most $R$, then consider $\theta_R'' = \frac{1}{2}(\theta_R + \theta_R')$. By the convexity of $F$, we know that $\theta_R''$ must also be a minimizer, and $\| \theta_R'' \|_2 < R$. Thus, $F(\frac{R}{\| \theta_R'' \|_2} \theta_R'') < F(\theta_R'')$ and $\| \frac{R}{\| \theta_R'' \|_2} \theta_R'' \|_2=R$, which contradicts with the fact that $\theta_R''$ is a minimizer.

To prove the rest of (iii), look at \Cref{eqn:proof-135}. On one hand, for all $t \ge t_\epsilon$, we have
\begin{equation*}
    \left | \langle \mX \mQ^{(t)} \tilde{g}(\theta^{(t)}), \mX (\mQ^{(t)} - \mQ) \tilde{g}(\theta^{(t)}) \rangle \right | \le \lambda^{\max} \sqrt{3 \epsilon} \left \| \sqrt{\mQ^{(t)}} \tilde{g}(\theta^{(t)}) \right \|_2^2  .
\end{equation*}
Since we chose $\epsilon = \min \{\frac{q^*}{3}, \frac{(q^* \lambda^{\min})^2}{192 \lambda^{\max 2}} \}$, this inequality implies that
\begin{align*}
    \left \| \nabla F_t(\theta^{(t)}) \right \|_2^2 &= \left \| \mX \mQ^{(t)} \tilde{g}(\theta^{(t)}) \right \|_2^2 \ge \lambda^{\min}  \left \| \mQ^{(t)} \tilde{g}(\theta^{(t)}) \right \|_2^2 \ge \lambda^{\min} (q^*-\epsilon) \left \| \sqrt{\mQ^{(t)}} \tilde{g}(\theta^{(t)}) \right\|_2^2 \\ 
    & \ge \frac{\lambda^{\min} q^*}{2} \left \| \sqrt{\mQ^{(t)}} \tilde{g}(\theta^{(t)}) \right\|_2^2 \ge 4\left | \langle \mX \mQ^{(t)} \tilde{g}(\theta^{(t)}), \mX (\mQ^{(t)} - \mQ) \tilde{g}(\theta^{(t)}) \rangle \right | .
\end{align*}
On the other hand, if $\eta \le \eta_2 = \frac{1}{2L}$, then we have
\begin{equation*}
     \frac{\eta^2 L}{2} \left\|  \mX \mQ^{(t)} \tilde{g}(\theta^{(t)}) \right\|_2^2 \le \frac{\eta}{4} \left \| \nabla F_t(\theta^{(t)}) \right \|_2^2  .
\end{equation*}
Combining the above with \Cref{eqn:proof-135}, we get
\begin{equation*}
F(\theta^{(t+1)}) - F(\theta^{(t)}) \le -\frac{\eta}{2} \left \| \nabla F_t(\theta^{(t)}) \right \|_2^2  .
\end{equation*}
Denote $\vu = \lim_{R \rightarrow \infty} \frac{\theta_R}{\| \theta_R \|_2}$. Similar to Lemma 9 in \cite{pmlr-v125-ji20a}, we can prove that: for any $\alpha > 0$, there exists a constant $\rho(\alpha) > 0$ such that for any $\theta$ subject to $\| \theta \|_2 \ge \rho(\alpha)$, the following holds for all $t$:
\begin{equation*}
    F_t((1 + \alpha) \| \theta \|_2 \vu) \le F_t(\theta)  .
\end{equation*}
Let $t_\alpha \ge t_\epsilon$ satisfy that for all $t \ge t_\alpha$, $\| \theta^{(t)} \|_2 \ge \max \{\rho(\alpha), 1 \}$. By the convexity of $F_t$, for all $t \ge  t_\alpha$, we have
\begin{equation}
\label{eqn:proof-146}
    \langle \nabla F_t(\theta^{(t)}), \theta^{(t)} - (1 + \alpha) \| \theta^{(t)} \|_2 \vu \rangle \ge F_t(\theta^{(t)}) - F_t((1 + \alpha) \| \theta^{(t)} \|_2 \vu) \ge 0  .
\end{equation}
Thus, we have
\begin{equation}
\label{eqn:proof-147}
\begin{aligned}
    \langle \theta^{(t+1)} - \theta^{(t)}, \vu \rangle &= \langle -\eta \nabla F_t(\theta^{(t)}), \vu \rangle \\ 
    & \ge \langle -\eta \nabla F_t(\theta^{(t)}), \theta^{(t)} \rangle \frac{1}{(1+\alpha) \| \theta^{(t)} \|_2} \\ 
    &= \langle \theta^{(t+1)} - \theta^{(t)}, \theta^{(t)} \rangle \frac{1}{(1+\alpha) \| \theta^{(t)} \|_2} \\ 
    &= \left ( \frac{1}{2} \left \| \theta^{(t+1)} \right\|_2^2 - \frac{1}{2} \left \| \theta^{(t)} \right\|_2^2 - \frac{1}{2} \left \| \theta^{(t+1)} - \theta^{(t)} \right\|_2^2 \right ) \frac{1}{(1+\alpha) \| \theta^{(t)} \|_2}  .    
\end{aligned}
\end{equation}
By $\frac{1}{2} (\| \theta^{(t+1)} \|_2 - \| \theta^{(t)} \|_2)^2 \ge 0$, we have $(\frac{1}{2} \left \| \theta^{(t+1)} \right\|_2^2 - \frac{1}{2} \left \| \theta^{(t)} \right\|_2^2) / \| \theta^{(t)} \|_2 \ge \left \| \theta^{(t+1)} \right\|_2 - \left \| \theta^{(t)} \right\|_2 $. Moreover, by \Cref{eqn:proof-146}, we have
\begin{equation*}
    \frac{\left \| \theta^{(t+1)} - \theta^{(t)} \right\|_2^2}{2 (1+\alpha) \| \theta^{(t)} \|_2} \le \frac{\left \| \theta^{(t+1)} - \theta^{(t)} \right\|_2^2}{2} = \frac{\eta^2 \left \| \nabla F_t(\theta^{(t)}) \right \|_2^2}{2} \le \eta \left ( F(\theta^{(t)}) - F(\theta^{(t+1)}) \right )  .
\end{equation*}
Summing up \Cref{eqn:proof-147} from $t = t_\alpha$ to $t-1$ yields
\begin{equation*}
    \langle \theta^{(t)} - \theta^{(t_\alpha)}, \vu \rangle \ge \frac{\left \| \theta^{(t)} \right \|_2 - \left \| \theta^{(t_\alpha)} \right \|_2}{1+\alpha} + \eta \left ( F(\theta^{(t)}) - F(\theta^{(t_\alpha)}) \right ) \ge \frac{\left \| \theta^{(t)} \right \|_2 - \left \| \theta^{(t_\alpha)} \right \|_2}{1+\alpha} - \eta F(\theta^{(t_\alpha)})  ,
\end{equation*}
which implies that
\begin{equation*}
    \left \langle \frac{\theta^{(t)}}{\left \| \theta^{(t)} \right \|_2}, \vu \right \rangle \ge \frac{1}{1 + \alpha} + \frac{1}{\left \| \theta^{(t)} \right \|_2} \left ( \langle \theta^{(t_\alpha)}, \vu \rangle - \frac{\| \theta^{(t_\alpha)} \|_2}{1+\alpha} - \eta F(\theta^{(t_\alpha)}) \right )  .
\end{equation*}
Since $\lim_{t \rightarrow \infty} \| \theta^{(t)} \|_2 = \infty$, we have
\begin{equation*}
    \liminf_{t \rightarrow \infty} \left \langle \frac{\theta^{(t)}}{\left \| \theta^{(t)} \right \|_2}, \vu \right \rangle \ge \frac{1}{1 + \alpha}  .
\end{equation*}
Since $\alpha$ is arbitrary, we must have $\lim_{t \rightarrow \infty} \frac{\theta^{(t)}}{\left \| \theta^{(t)} \right \|_2} = \vu$ as long as $\eta \le \min \{ \eta_1, \eta_2 \}$. 
\end{proof}

\section{Analysis of the Logistic Loss}
\label{app:proof-logistic-loss}

Here, we show that the logistic loss satisfies all the conditions in \Cref{thm:grw-cls-converge}, and $\lim_{R \rightarrow \infty} \frac{\theta_R}{R} = \hat{\theta}_{\mm}$.

First, for the logistic loss we have $\nabla^2_{\hat{y}} \ell(\hat{y},y) = \frac{y^2}{e^{y \hat{y}} + e^{-y \hat{y}} + 2} \le \max_i \frac{y_i^2}{4}$, so $\ell$ is smooth.

Second, let us analyze $\lim_{R \rightarrow \infty} \frac{\theta_R}{R}$.
For the logistic loss, it is easy to show that for any $\hat{\theta}' \neq \hat{\theta}_{\mm}$, there exists an $R(\hat{\theta}') > 0$ and an $\delta(\hat{\theta}') > 0$ such that $F(R \cdot \theta) > F(R \cdot \hat{\theta}_{\mm})$ for all $R \ge R(\hat{\theta}')$ and $\theta \in  B(\hat{\theta}', \delta(\hat{\theta}'))$. 
Let $S = \{ \theta: \| \theta \|_2 = 1 \}$. For any $\epsilon > 0$, $S - B(\hat{\theta}_{\mm}, \epsilon)$ is a compact set. And for any $\theta \in S - B(\hat{\theta}_{\mm}, \epsilon)$, there exist $R(\theta)$ and $\delta(\theta)$ as defined above. Thus, there must exist $\theta_1,\cdots,\theta_m \in S - B(\hat{\theta}_{\mm}, \epsilon)$ such that $S - B(\hat{\theta}_{\mm}, \epsilon) \subseteq \cup_{i=1}^m B(\theta_i, \delta(\theta_i))$. Let $R(\epsilon) = \max \{ R(\theta_1),\cdots,R(\theta_m) \}$, then for all $R \ge R(\epsilon)$ and all $\theta \in S - B(\hat{\theta}_{\mm}, \epsilon)$, $F(R \cdot \theta) > F(R \cdot \hat{\theta}_{\mm})$, which means that $\frac{\theta_R}{R} \in B(\hat{\theta}_{\mm}, \epsilon)$ for all $R \ge R(\epsilon)$. Therefore, $\lim_{R \rightarrow \infty} \frac{\theta_R}{R}$ exists and is equal to $\hat{\theta}_{\mm}$.

\section{Proof of Theorem \ref{thm:grw-cls-nn}}
\label{app:proof-thm-grw-cls-nn}
\begin{proof}
    Let $M_0$ be the bound of $\norm{\nabla_\theta f^{(0)}(x)}_2$.
    We first consider the regularized linearized neural network $f_{\lin \regu}^{(t)}$.
    By \Cref{prop:jacot-1}, $f^{(0)}(\vx)$ is sampled from a zero-mean Gaussian process, so there exists a constant $M > 0$ such that $|f^{(0)}(\vx_i)| < M$ for all $i$ with high probability. 
    Define
    \begin{equation*}
        F(\theta) = \sum_{i=1}^n q_i \ell(\langle \theta, \nabla_\theta f^{(0)}(\vx_i) \rangle + f^{(0)}(\vx_i), y_i)  .
    \end{equation*}
    Denote $\tilde{\theta}_R = \argmin_\theta \{ F(R \cdot \theta): \| \theta \|_2 \le 1\}$. 
when the linearized neural network is trained by GRW satisfying \Cref{ass:qstar} with regularization, since this is convex optimization and the objective function is smooth, we can prove that with a sufficiently small learning rate, as $t \rightarrow \infty$, $\theta^{(t)} \rightarrow R \cdot \tilde{\theta_R} + \theta^{(0)}$ where $R = \lim_{t \rightarrow \infty}  \| \theta^{(t)} - \theta^{(0)} \|_2$ (which is the minimizer). And define
\begin{equation*}
    \gamma = \min_{i=1,\cdots,n} y_i \cdot \langle \hat{\theta}_{\mm}, \nabla_\theta f^{(0)}(x_i) \rangle   .
\end{equation*}

First, we derive the lower bound of $R$. By \Cref{thm:approx-regu}, with a sufficiently large $\tilde{d}$, with high probability $\hat{\gR}(f_{\regu}^{(t)}) < \epsilon$ implies $\hat{\gR}(f_{\lin \regu}^{(t)}) < 2 \epsilon$. By the convexity of $\ell$, we have
\begin{align*}
    2 \epsilon & > \frac{1}{n} \sum_{i=1}^n \ell(\langle R \tilde{\theta}_R, x_i \rangle + f^{(0)}(x_i), y_i) \ge \log \left(1 + \exp \left ( -\frac{1}{n} \sum_{i=1}^n (\langle R \tilde{\theta}_R, \vx_i \rangle + f^{(0)}(x_i)) y_i \right ) \right ) \\ 
    & \ge \log \left(1 + \exp \left ( -\frac{1}{n} \sum_{i=1}^n R \langle \tilde{\theta}_R, x_i \rangle y_i - M \right ) \right )  ,
\end{align*}
which implies that $R = \Omega(- \log 2\epsilon)$ for all $\epsilon \in (0, \frac{1}{4})$.

Denote $\delta = \| \hat{\theta}_{\mm} - \tilde{\theta}_R \|_2$. Let $\theta' = \frac{\hat{\theta}_{\mm} + \tilde{\theta}_R}{2}$, then we can see that $\| \theta' \|_2 = \sqrt{1 - \frac{\delta^2}{4}}$. Let $\tilde{\theta}' = \frac{\theta'}{\| \theta' \|_2}$. By the definition of $\hat{\theta}_{\mm}$, there exists $j$ such that $y_j \cdot \langle \tilde{\theta}', \nabla_\theta f^{(0)}(\vx_j) \rangle \le \gamma$, which implies that
\begin{equation*}
    y_j \cdot \left \langle \frac{\hat{\theta}_{\mm} + \tilde{\theta}_R}{2} \frac{1}{\sqrt{1 - \frac{\delta^2}{4}}}, \nabla_\theta f^{(0)}(\vx_j) \right \rangle \le \gamma  .
\end{equation*}
Thus, we have
\begin{align*}
    y_j \cdot \langle \tilde{\theta}_R, \nabla_\theta f^{(0)}(x_j) \rangle & \le 2 \sqrt{1 - \frac{\delta^2}{4}} \gamma - y_j \cdot \langle \hat{\theta}_{\mm}, \nabla_\theta f^{(0)}(\vx_j) \rangle \\ 
    & \le \left ( 2 \sqrt{1 - \frac{\delta^2}{4}} - 1\right ) \gamma \\ 
    & \le \left ( 2 (1 - \frac{\delta^2}{8}) - 1\right ) \gamma \qquad (\text{since } \sqrt{1 - x} \le 1 - \frac{x}{2}) \\ 
    & = (1 - \frac{\delta^2}{4}) \gamma .
\end{align*}
On the other hand, we have
\begin{align*}
& q_j \log (1 + \exp(- y_j \cdot \langle R \cdot \tilde{\theta}_R, \nabla_\theta f^{(0)}(\vx_j) \rangle - M )) \le F(R \cdot \tilde{\theta}_R) \\ 
\le \; & F(R \cdot \hat{\theta}_{\mm}) \le \log (1 + \exp (-R \gamma + M)) ,
\end{align*}
which implies that
\begin{equation*}
     q^* \log \left ( 1 + \exp \left (  -(1 - \frac{\delta^2}{4}) R \gamma - M \right ) \right ) \le  \log (1 + \exp (-R \gamma + M))  .
\end{equation*}
Thus, we have
\begin{equation*}
    1 + \exp(-R \gamma+M) \ge \left ( 1 + \exp \left (  -(1 - \frac{\delta^2}{4}) R \gamma -M \right ) \right )^{q^*} \ge 1 + q^* \exp \left ( - (1 - \frac{\delta^2}{4}) R \gamma - M \right )  ,
\end{equation*}
which is equivalent to
\begin{equation*}
    -R \gamma + M \ge  -  (1 - \frac{\delta^2}{4}) R \gamma - M + \log(q^*)  .
\end{equation*}
From this, we conclude that
\begin{equation*}
    \delta = O(R^{-1/2}) = O((-\log 2\epsilon)^{-1/2}).
\end{equation*}
So for any test point $x$ such that $\norm{\nabla_\theta f^{(0)}(x)} \le M_0$, we have
\begin{equation*}
    \abs{ \dotp{ \hat{\theta}_{\mm} - \tilde{\theta}_R, \nabla_\theta f^{(0)}(x) }} \le \delta M_0 = O((-\log 2\epsilon)^{-1/2}) .
\end{equation*}
Combining this with \Cref{thm:approx-regu}, with high probability, we have
\begin{equation*}
    \limsup_{t \rightarrow \infty} | R \cdot f_{\mm}(x) - f_{\regu}^{(t)}(x) | = O(R \cdot (-\log 2\epsilon)^{-1/2} + \tilde{d}^{-1/4}).
\end{equation*}

Hence, there exists a constant $C > 0$ such that: As $\dl \rightarrow \infty$, with high probability, for all $\epsilon \in (0, \frac{1}{4})$, if $|f_{\mm}(x)| > C \cdot (-\log 2\epsilon)^{-1/2}$, then $f_{\regu}^{(t)}(x)$ will have the same sign as $f_{\mm}(x)$ for a sufficiently large $t$. Note that this $C$ only depends on $n$, $q^*$, $\gamma$, $M$ and $M_0$, so it is a constant independent of $\epsilon$.
\end{proof}

\section{Proof of Proposition \ref{prop:doro-dual}}
\label{app:proof-prop-doro-dual}
\begin{proof}
    We have
\begin{equation}
\label{eqn:proof-26}
\begin{aligned}
\gR_{D_\beta,\rho,\epsilon}(\theta;\ptrain) &= \inf_{P'}\left \lbrace \gR_{D_\beta,\rho}(\theta;P') : \exists \tilde{P'} \text{ s.t. } \ptrain = (1-\epsilon) P' + \epsilon \tilde{P'} \right \rbrace \\ 
&= \inf_{P',\eta} \left \lbrace c_\beta(\rho) \E_{P'} [(\ell(\theta;Z) - \eta)_+^{\beta_*}]^{\frac{1}{\beta_*}} + \eta \right \rbrace \\ 
&= \inf_{\eta} \left \lbrace c_\beta(\rho) \inf_{P'} \{ [ \int_{\R_+} P'( (\ell(\theta;Z)-\eta)_+^{\beta_*} >u)du]^{\frac{1}{\beta_*}} \} + \eta \right \rbrace  .
\end{aligned}
\end{equation}
Since $\ptrain = (1-\epsilon) P' + \epsilon \tilde{P'}$, for all $\ell_0 \in \R$, we have
\begin{equation*}
    P'(\ell(\theta;Z) \leq \ell_0) \leq \min \left \lbrace 1, \frac{1}{1-\epsilon}\ptrain (\ell(\theta;Z) \leq \ell_0) \right \rbrace  ,
\end{equation*}
and we can show that there exists a $P^* = P'$ that attains the equality for all $\ell_0$.
This is because $\ptrain(\ell(\theta;z))$ is a continuous function of $z$ for any fixed $\theta$ since both $\ell$ and $\ptrain$ are continuous, so there exists an $\ell^*$ such that $\ptrain(\ell(\theta;Z) > \ell^*) = \epsilon$.
Hence, we can define
\begin{equation*}
P^*(z) = \left \{
\begin{array}{cc}
\frac{1}{1-\epsilon} \ptrain(z) & ,\ell(\theta;z) \leq \ell^*  ;\\ 
0 &, \ell(\theta;z) > \ell^*  .
\end{array}
\right .
\end{equation*}
For this $P^*$, we have $\int_{\gX \times \gY}P^*(z)dz = \frac{1}{1-\epsilon}\int_{\ell(\theta;z) < \ell^*} \ptrain(z)dz = \frac{1}{1-\epsilon}\ptrain(\ell(\theta;Z) < \ell^*) = 1$ because $\ptrain(\ell(\theta;Z)=\ell^*)=0$, which means that $P^*$ is a proper probability density function.

Let $v=u^{\frac{1}{\beta_*}}$. Plugging $P^*(\ell(\theta;Z) \leq \ell_0) = \min \left \lbrace 1, \frac{1}{1-\epsilon}\ptrain (\ell(\theta;Z) \leq \ell_0) \right \rbrace$ into \Cref{eqn:proof-26} produces
\begin{align*}
\gR_{D_\beta,\rho,\epsilon}(\theta;\ptrain) &= \inf_{\eta} \left \lbrace c_\beta(\rho) \left [ \int_{\R_+} [1-P^*( (\ell(\theta;Z)-\eta)_+^{\beta_*} \leq v^{\beta_*})]dv^{\beta_*} \right ]^{\frac{1}{\beta_*}}  + \eta \right \rbrace \\
&= \inf_\eta \left \lbrace c_\beta(\rho) \left [ \int_{\R_+} [1-\frac{1}{1-\epsilon}\ptrain (\ell(\theta;Z) \leq \eta+v)]_+ dv^{\beta_*} \right]^{\frac{1}{\beta_*}} + \eta \right \rbrace \\ 
&= \inf_\eta \left \lbrace  c_\beta(\rho)  \left [\int_{0}^{(\ell^*-\eta)_+} \frac{1}{1-\epsilon}[(1-\epsilon) - \ptrain (\ell(\theta;Z) \leq \eta+v)]_+ dv^{\beta_*} \right ]^{\frac{1}{\beta_*}} +\eta \right \rbrace  .
\end{align*}
On the other hand, we have
\begin{align*}
&\E_{\ptrain}[(\ell - \eta)_+^{\beta_*} \mid P_{Z' \sim \ptrain}(\ell(\theta;Z') > \ell(\theta;Z)) \geq \epsilon] \\
=& \frac{1}{1-\epsilon}\int_0^{\ell^*} (u - \eta)_+^{\beta_*} d(\ptrain(\ell \leq u)) \\ 
=& \frac{1}{1-\epsilon} \left \lbrace  \left [ (u-\eta)_+^{\beta_*} \ptrain(\ell \leq u) \right ]_{0}^{\ell^*} - \int_0^{\ell^*} \ptrain(\ell \leq u) d((u-\eta)_+^{\beta_*}) \right \rbrace \\
=& \frac{1}{1-\epsilon} \left \lbrace  (\ell^*-\eta)_+^{\beta_*}(1-\epsilon) - \int_0^{\ell^*} \ptrain(\ell \leq u) d((u-\eta)_+^{\beta_*}) \right \rbrace \\ 
=& \frac{1}{1-\epsilon} \left \lbrace \int_{0}^{(\ell^*-\eta)_+} (1-\epsilon) dv^{\beta_*} - \int_{0}^{(\ell^*-\eta)_+} \ptrain (\ell \leq \eta + w) dw^{\beta_*} \right \rbrace  ,
\end{align*}
where $w = (u-\eta)_+$. This completes the proof.
\end{proof}

\begin{remark}
We can prove a similar dual formula even if $\ptrain$ is not continuous.
For any $\ptrain$, there exists an $\ell^*$ such that $\ptrain(\ell(\theta;Z) > \ell^*) \le \epsilon$ and $\ptrain(\ell(\theta;Z) < \ell^*) \le 1 - \epsilon$.
If $\ptrain(\ell(\theta;Z) = \ell^*) = 0$, then we still define $P^*$ the same as in the above proof.
If $\ptrain(\ell(\theta;Z) = \ell^*) > 0$, then we define
\begin{equation*}
P^*(z) = \left \{
\begin{array}{cc}
\frac{1}{1-\epsilon} \ptrain(z) & ,\ell(\theta;z) < \ell^*  ; \\ 
\left[1 - \frac{1}{1-\epsilon} \ptrain(\ell(\theta;Z) < \ell^*) \right] / \ptrain(\ell(\theta;Z) = \ell^*) &, \ell(\theta;z) = \ell^*  ; \\ 
0 &, \ell(\theta;z) > \ell^*  ,
\end{array}
\right .
\end{equation*}
with which the dual formula becomes
\begin{align*}
\gR_{D_\beta, \rho, \epsilon}(\theta;\ptrain)  \\
= \; & \inf_\eta \{ c_\beta(\rho) ( \frac{\ptrain(\ell < \ell^*)}{1-\epsilon} \E_{Z}[(\ell(\theta;Z) -\eta)_+^{\beta_*} \mid  P_{Z'} (\ell(\theta;Z')>\ell(\theta;Z)) > \epsilon] \\ 
& +\frac{1-\ptrain(\ell < \ell^*)}{1-\epsilon} (\ell^* - \eta)_+^{\beta_*} )^{\frac{1}{\beta_*} } + \eta \}    .
\end{align*}
\end{remark}

\section{Proof of Theorem \ref{thm:doro-guarantee}}
\label{app:proof-thm-doro-guarantee}

The proof relies on the following key technical lemma.
\begin{lemma}
\label{lem:key-technical}
For any distributions $P, P'$, non-negative loss function $l(\cdot, Z)$ and $ 1\le \beta_* <2k$, such that $\E_P[l(\theta, Z)^{2k}] <\infty$, we have
\begin{align*}
	& \E_{P}[(\ell - \eta)^{\beta_*}_+]^{\frac{1}{\beta_*}} \leq \\
    & \E_{P'}[(\ell - \eta)^{\beta_*}_+]^{\frac{1}{\beta_*}} + \E_P[(l(\theta, Z)-\eta)_+^{2k}]^{\frac{1}{2k}} \tv(P, P')^{\left(\frac{1}{\beta_*} - \frac{1}{2k}\right)} \beta_*^{-\frac{1}{2k}} \cdot \left(\frac{2k}{2k-\beta_*}\right)^{\frac{1}{\beta_*}}
\end{align*}
\end{lemma}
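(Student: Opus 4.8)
\textbf{Proof plan for Lemma \ref{lem:key-technical}.}

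The plan is to bound the difference $\E_{P}[(\ell - \eta)^{\beta_*}_+]^{1/\beta_*} - \E_{P'}[(\ell - \eta)^{\beta_*}_+]^{1/\beta_*}$ by controlling how much the $\beta_*$-th moment of the clipped loss $(\ell - \eta)_+$ can change when we move from $P'$ to $P$, where the change of measure is controlled in total variation. Write $h(z) = (\ell(\theta, z) - \eta)_+ \ge 0$. First I would invoke the elementary subadditivity of $t \mapsto t^{1/\beta_*}$ on $[0,\infty)$ when $\beta_* \ge 1$, namely $(a+b)^{1/\beta_*} \le a^{1/\beta_*} + b^{1/\beta_*}$ for $a,b \ge 0$. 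This reduces the task to showing
\begin{equation*}
    \E_P[h^{\beta_*}] - \E_{P'}[h^{\beta_*}] \le \left( \E_P[h^{2k}]^{1/2k} \right)^{\beta_*} \tv(P,P')^{1 - \beta_*/(2k)} \cdot \frac{1}{\beta_*} \cdot \frac{2k}{2k - \beta_*},
\end{equation*}
because then taking the $1/\beta_*$ power of the right-hand side gives exactly the stated extra term (the constant $\beta_*^{-1/(2k)} (2k/(2k-\beta_*))^{1/\beta_*}$ comes out by distributing the exponent $1/\beta_*$ over the three factors).

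The key step is to estimate $\E_P[h^{\beta_*}] - \E_{P'}[h^{\beta_*}] \le \int h^{\beta_*} \, (dP - dP')_+ = \int h^{\beta_*} (p - p')_+ \, d\mu$ for a common dominating measure $\mu$. Here the natural move is a layer-cake / distribution-function argument: write $h^{\beta_*} = \int_0^\infty \beta_* t^{\beta_* - 1} \mathbf{1}[h > t] \, dt$ (or equivalently decompose the mass carried by $(p-p')_+$). Since $(p-p')_+$ has total mass $\tv(P,P')$, the worst case is that this excess mass sits where $h$ is largest. To make this rigorous I would use Hölder's inequality on the ``excess measure'': $\int h^{\beta_*} (p-p')_+ \, d\mu \le \left( \int h^{2k} (p - p')_+ \, d\mu \right)^{\beta_*/(2k)} \left( \int (p-p')_+ \, d\mu \right)^{1 - \beta_*/(2k)}$, using the conjugate exponents $2k/\beta_*$ and $2k/(2k - \beta_*)$, which is valid since $1 \le \beta_* < 2k$. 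The first factor is bounded by $\E_P[h^{2k}]^{\beta_*/(2k)}$ since $(p-p')_+ \le p$ pointwise, and the second is $\tv(P,P')^{1-\beta_*/(2k)}$. This already gives a clean bound, though I would need to check whether it produces precisely the claimed constant; if the Hölder route gives a slightly different (possibly better) constant than $\beta_*^{-1/(2k)} (2k/(2k-\beta_*))^{1/\beta_*}$, I would instead run the layer-cake computation directly: bound $\int_0^\infty \beta_* t^{\beta_*-1} \min\{\mathbb{P}_P(h>t)^{\text{something}}, \tv\} dt$ and optimize the split point, which is the standard way these DRO-robustness constants (as in the ITLM and CVaR-DORO literature) are derived.

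The main obstacle will be getting the constant exactly right — the inequality itself is routine (subadditivity plus Hölder or layer-cake), but matching the precise prefactor $\beta_*^{-1/(2k)}(2k/(2k-\beta_*))^{1/\beta_*}$ requires care about which splitting exponents are used and whether one optimizes over a truncation level. A secondary technical point is the measurability/integrability bookkeeping: one must ensure $\E_P[l^{2k}] < \infty$ (given) propagates to $\E_P[(l - \eta)_+^{2k}] < \infty$ (immediate since $(l-\eta)_+ \le l$ when $\eta \ge 0$, and if $\eta < 0$ then $(l-\eta)_+ = l - \eta$ and one uses $\|l - \eta\|_{2k} \le \|l\|_{2k} + |\eta|$), and that all the layer-cake interchanges are justified by Tonelli since everything is non-negative. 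I do not expect any genuine difficulty there, only some routine casework on the sign of $\eta$.
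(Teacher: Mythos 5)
Your overall strategy is sound, and your fallback plan is in fact the paper's proof — but your primary (H\"older) route does not give the claimed constant, and you should be aware that the mismatch is not in the direction you expect. The H\"older-on-excess-measure argument is valid: writing $\nu=(P-P')_+$ (so $\nu\le P$ and $\nu(\Omega)=\tv(P,P')$) and applying H\"older with exponents $2k/\beta_*$ and $2k/(2k-\beta_*)$ gives $\E_P[h^{\beta_*}]-\E_{P'}[h^{\beta_*}]\le \E_P[h^{2k}]^{\beta_*/(2k)}\,\tv(P,P')^{1-\beta_*/(2k)}$, which after the subadditivity step yields the lemma's inequality with the constant replaced by $1$. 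But the claimed constant $\beta_*^{-1/(2k)}(2k/(2k-\beta_*))^{1/\beta_*}$ is \emph{not} always $\ge 1$: for example with $\beta_*=4$ and $k=50$ (so $2k=100$), its $\beta_*$-th power is $4^{-0.04}\cdot 100/96\approx 0.986<1$, so the lemma actually claims something \emph{tighter} than H\"older delivers in part of the parameter range. Thus the primary route does not prove the lemma in general; you must go to the fallback.

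The fallback you sketch is exactly the paper's proof: integrate by parts to get $\E_P[h^{\beta_*}]-\E_{P'}[h^{\beta_*}]=\int_0^\infty \beta_* u^{\beta_*-1}\bigl(P(h\ge u)-P'(h\ge u)\bigr)\,du$, split at a level $M>0$, bound the integrand by $\tv(P,P')$ on $[0,M]$ and by Markov applied to $P$ on $[M,\infty)$ (giving $\E_P[h^{2k}]/u^{2k}$), and optimize $M$; the optimal $M=\E_P[h^{2k}]^{1/(2k)}(\beta_*\,\tv)^{-1/(2k)}$ produces the stated constant. One small slip in your bookkeeping: you write the pre-power constant as $\tfrac{1}{\beta_*}\cdot\tfrac{2k}{2k-\beta_*}$, but the optimized layer-cake actually gives $\beta_*^{-\beta_*/(2k)}\cdot\tfrac{2k}{2k-\beta_*}$, which is what produces $\beta_*^{-1/(2k)}$ (not $\beta_*^{-1/\beta_*}$) after taking the $1/\beta_*$ power. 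Finally, your casework on the sign of $\eta$ is unnecessary: Markov is applied to the non-negative random variable $(\ell-\eta)_+$, so the tail bound $P(h>u)\le \E_P[h^{2k}]/u^{2k}$ holds for every real $\eta$, and integrability of $(\ell-\eta)_+^{2k}$ follows immediately from $\E_P[\ell^{2k}]<\infty$ via $(\ell-\eta)_+\le \ell+|\eta|$.
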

\begin{proof}
    By the definition of the total variation distance, we have $P(\ell(\theta;Z) > u) - P'(\ell(\theta;Z') > u) \leq  \tv(P,P')$ for all $u \ge 0$.
    Let $s_{2k} := \E[(\ell - \eta)_+^{2k}]^{\frac{1}{2k}}$.
    By Markov's inequality and the non-negativity of $\ell$, we have
    \begin{equation}
    \label{eqn:markov}
        P(\ell - \eta > u) \le  \frac{\E[(\ell - \eta)_+^{2k}]}{u^{2k}} := (\frac{s_{2k}}{u})^{2k}  \qquad \text{for all } \eta \ge 0 .
    \end{equation}
    Using integration by parts, we have
    \begin{equation*}
        \E_{P}[(\ell - \eta)^{\beta_*}_+] = \int_\eta^\infty \beta_* (t - \eta)^{(\beta_*-1)} P(\ell \ge t) dt = \int_0^\infty \beta_* u^{(\beta_*-1)} P(\ell -\eta \ge u) du  .
    \end{equation*}
    This implies that
    \begin{align*}
        \E_{P}[(\ell - \eta)^{\beta_*}_+] - \E_{P'}[(\ell - \eta)^{\beta_*}_+] &=  \int_0^\infty \beta_* u^{(\beta_*-1)} \left( P(\ell -\eta \ge u) - P'(\ell -\eta \ge u)\right) du \\
	&=  \left(\int_0^M +\int_M^\infty\right)  \left(\beta_* u^{(\beta_*-1)} \left( P(\ell -\eta \ge u) - P'(\ell -\eta \ge u)\right) du \right)  .
    \end{align*}
    Here, $M$ is a positive parameter whose value will be determined later. For the first integral, we have
    \begin{align*}
        \int_0^M \beta_* u^{(\beta_*-1)} \left( P(\ell -\eta \ge u) - P'(\ell -\eta \ge u)\right) du  &\leq \int_0^M \beta_* u^{(\beta_*-1)} \tv(P, P')du \\
	&= M^{\beta_*} \tv(P, P')  .
    \end{align*}
    For the second integral, by \Cref{eqn:markov}, we have
    \begin{align*}
        \int_M^{\infty} \beta_* u^{(\beta_*-1)} \left( P(\ell -\eta \ge u) - P'(\ell -\eta \ge u)\right) du  &\leq \int_M^{\infty} \beta_* u^{(\beta_*-1)} P(\ell -\eta \ge u)du \\
	&\leq  \int_M^{\infty} \beta_* u^{(\beta_*-1)} \left( \frac{s_{2k}}{u}\right)^{2k} \\
	&= \frac{s_{2k}^{2k}}{2k - \beta_*} \cdot \frac{1}{M^{2k-\beta_*}}  .
    \end{align*}
    Therefore, by setting $M =s_{2k} (\tv(P, P') \beta_*)^{-1/2k}$ which minimizes the sum of two terms, we have
    \begin{align*}
        & \E_{P}[(\ell - \eta)^{\beta_*}_+] - \E_{P'}[(\ell - \eta)^{\beta_*}_+] \leq \\
        & \inf_{M>0} \left(M^{\beta_*} \tv(P, P') + \frac{s_{2k}^{2k}}{2k - \beta_*} \cdot \frac{1}{M^{2k-\beta_*}}\right) = s_{2k}^{\beta_*} \tv(P, P')^{1 - \frac{\beta_*}{2k}} \beta_*^{-\frac{\beta_*}{2k}} \cdot \frac{2k}{2k-\beta_*}  .
    \end{align*}
    Using the inequality $(A+B)^{\frac{1}{\beta_*}} \leq A^{\frac{1}{\beta_*}} + B^{\frac{1}{\beta_*}}$ when $\beta_* \geq 1$, we have
    \begin{equation*}
        \E_{P}[(\ell - \eta)^{\beta_*}_+]^{\frac{1}{\beta_*}} \leq 	\E_{P'}[(\ell - \eta)^{\beta_*}_+]^{\frac{1}{\beta_*}} + s_{2k} \tv(P, P')^{\left(\frac{1}{\beta_*} - \frac{1}{2k}\right)} \beta_*^{-\frac{1}{2k}} \cdot \left(\frac{2k}{2k-\beta_*}\right)^{\frac{1}{\beta_*}} ,
    \end{equation*}
    as desired.
\end{proof}

Now we prove \Cref{thm:doro-guarantee}.
\begin{proof}
    By Lemma \ref{lem:key-technical}, for any $P'$ such that $\tv (P,P') \leq \frac{\epsilon}{1-\epsilon}$,  we have
    \begin{equation*}
        \cvar_{\alpha}(\theta;P) - \cvar_{\alpha}(\theta;P')	\leq 2\alpha^{-1}\sigma \sqrt{\frac{\epsilon}{1-\epsilon}}  .
    \end{equation*}
    By \Cref{cor:surrogate-dro}, if $\gR_{\max}(\theta;P) > 3\alpha^{-1}\sigma\sqrt{\frac{\epsilon}{1-\epsilon}}$, then $\cvar_{\alpha}(\theta;P) > 3\alpha^{-1}\sigma\sqrt{\frac{\epsilon}{1-\epsilon}}$, which implies that
    \begin{equation*}
        \frac{\cvar_\alpha(\theta;P')}{\gR_{\max}(\theta;P)} \geq \frac{\cvar_\alpha(\theta;P')}{\cvar_\alpha(\theta;P)} = 1 - \frac{\delta}{\cvar_\alpha(\theta;P)} \geq 1-\frac{2\alpha^{-1}\sigma\sqrt{\frac{\epsilon}{1-\epsilon}}}{3\alpha^{-1}\sigma\sqrt{\frac{\epsilon}{1-\epsilon}}} = \frac{1}{3}
    \end{equation*}
    holds for any $P'$ such that $\tv (P,P') \leq \frac{\epsilon}{1-\epsilon}$. By \Cref{lem:robust-dro}, taking the infimum over $P'$ yields the first inequality of \Cref{eqn:thm-doro-guarantee}. 
    And by \Cref{cor:surrogate-dro}, we have $D_{\chi^2,\rho}(\theta;P') \geq \cvar_\alpha(\theta;P')$ for all $\theta$ and $P'$.
    This combined with the above inequality yields the second inequality of \Cref{eqn:thm-doro-guarantee}.
\end{proof}

\backmatter


\renewcommand{\bibsection}{\chapter{\bibname}}
\bibliographystyle{plain}

}

\end{document}